\documentclass{article}
\usepackage[table,x11names]{xcolor}
\usepackage[final]{corl_2026} 

\usepackage{mymacros}

\usepackage{upgreek}
\usepackage{enumitem}
\usepackage{cleveref}
\usepackage{comment}
\usepackage{amsmath,amsthm}
\usepackage{amsmath,amssymb,amsthm}
\usepackage{bm}
\usepackage{cleveref}
\usepackage{color}
\usepackage{cleveref}
\usepackage{algorithm}
\usepackage{algpseudocode}
\usepackage{wrapfig}
\usepackage{booktabs}
\usepackage{graphicx}
\usepackage{subcaption}
\usepackage{tabularx}
\usepackage{enumitem}

\usepackage[most]{tcolorbox}

\definecolor{exampleborder}{RGB}{110,40,75}
\definecolor{examplebackground}{RGB}{255,248,252}

\newtcolorbox{examplebox}[2][]{%
    enhanced,
    float,
    floatplacement=!t, 
    title={#2},
    colback=examplebackground,
    colframe=exampleborder,
    fonttitle=\bfseries,
    before upper={%
        \setlength{\parskip}{0.6\baselineskip}%
        \setlength{\parindent}{0pt}%
    },
    #1
}

\definecolor{conclusionborder}{RGB}{40,140,150}
\definecolor{conclusionbackground}{RGB}{235,246,248}

\newtcolorbox{conclusionbox}[1]{
    colback=conclusionbackground,
    colframe=conclusionborder,
    title={#1},
    fonttitle=\bfseries,
    rounded corners,
}

\newcommand{\sectionend}{%
    \unskip\nobreak\hfill
    \penalty50\hskip1em\hbox{$\lozenge$}%
    \par
}

\newtheorem{theorem}{Theorem}

\newcommand{\loose}{\looseness=-1}

\makeatletter
\renewcommand{\@makefnmark}{%
  \hbox{\textcolor{red}{\@textsuperscript{\normalfont\@thefnmark}}}%
}
\makeatother

\title{Why Does Action Chunking Improve Behavioral Cloning Performance in Robotic Control?}

\author{
  Filippo Lazzati\\
  Politecnico di Milano\\
  \And
  Kyle Stachowicz \\
  UC Berkeley\\
  \And
  William Chen \\
  UC Berkeley\\
  \AND
  Alberto Maria Metelli \\
  Politecnico di Milano\\
  \And
  Andrew Wagenmaker \\
  UC Berkeley\\
  \And
  Sergey Levine\\
  UC Berkeley\\
}

\begin{document}
\maketitle

\begingroup
\renewcommand\thefootnote{}
\footnotetext{Correspondence to: Filippo Lazzati \{\texttt{filippo.lazzati@polimi.it}\} and Andrew Wagenmaker \{\texttt{ajwagen@berkeley.edu}\}.}
\endgroup
\setcounter{footnote}{0}


\begin{abstract}
Action chunking---predicting and executing multiple actions instead of a single action---has proven to be a critical component for learning effective robotic control policies. However, our precise understanding of \emph{why} action chunking improves performance has remained limited. In this work we seek to close this gap. Through rigorous experimental evaluations in both simulated and real-world settings, we show that existing hypotheses for the success of action chunking---temporal consistency, horizon reduction, and representation learning---fail to explain the success of action chunking. Instead, we find that action chunking benefits from greater non-Markovian expressivity and reduced compounding error compared to Markovian policies, but, in many settings of interest, these effects can be fully captured by \emph{delayed} policies, which at each step predict a single action based on the observation $k$ steps in the past. We then show that there exists an additional benefit of action chunking that we refer to as \emph{implicit ensembling}. In particular, by learning a diversity of temporal relationships (that is, $a_t | o_t$, $a_t | o_{t-1}$, $\ldots$), action-chunked policies exhibit behavior matching that of a model ensemble, increasing their robustness and generalization ability over policies that only learn a \emph{single} temporal relationship. Building on these insights, we show that in simulated and real-world robotic control settings, we can match the performance of action chunking without action chunking---by deploying an action chunking policy as an ensemble of policies with randomized delays. Furthermore, we propose a policy class that amplifies the benefits of action chunking by explicitly instantiating an ensemble, and which we show significantly improves over the performance of action chunking in many domains.\loose

\textbf{Website:} \url{https://action-chunking.github.io}.
\end{abstract}

\newcommand{\frakD}{\mathfrak{D}}
\newcommand{\traj}{\uptau}
\newcommand{\ba}{\bm{a}}

\newcommand{\pidemo}{\pi_{\mathrm{demo}}}
\newcommand{\pihat}{\widehat{\pi}}
\newcommand{\Exp}{\mathbb{E}}
\newcommand{\hist}{\bm{h}}
\newcommand{\wass}{W_1}
\newcommand{\sdemo}{s^{\pidemo}}
\newcommand{\histdemo}{\hist^{\pidemo}}
\newcommand{\shat}{s^{\pihat}}
\newcommand{\ademo}{a^{\pidemo}}
\newcommand{\ahat}{a^{\pihat}}
\newcommand{\ds}{d_{\mathcal{S}}}
\newcommand{\da}{d_{\mathcal{A}}}
\newcommand{\deltabar}{\bar{\delta}}
\newcommand{\Pimarkov}{\Pi_{\mathrm{markov}}}
\newcommand{\Pinonmarkov}{\Pi_{\mathrm{hist}}}

\section{Introduction}

Action chunking---predicting and executing a \emph{sequence} of actions rather than a \emph{single} action---is an essential ingredient in modern approaches to behavioral cloning for robotic control \cite{zhao2023actionchunking, chi2023diffusion}. Virtually all state-of-the-art policies for robotic control---from generalist policies capable of solving a wide range of tasks~\cite{black2024pi0, pertsch2025fast}, to single-task policies able to perform dexterous, high-precision maneuvers~\cite{zhao2024aloha}---use some form of action chunking to achieve high performance.

In behavioral cloning, action chunking simply requires training a policy to predict $k$ demonstrator actions from a single observation, modeling $(a_t, a_{t+1}, \dots, a_{t+k-1}) \mid o_t$. Here the prediction target $\ba_{t:t+k} = (a_t, a_{t+1}, \dots, a_{t+k-1})$ is referred to as the \textit{action chunk}. At inference time, action chunking policies execute all or part of this action chunk in an \textit{open-loop} fashion, effectively relying on an action prediction from a ``stale'' observation $o_t$ for multiple timesteps, rather than computing a fresh action at each timestep. Across a range of settings, executing action chunks in this manner achieves higher performance than recomputing and executing a single action at each timestep.

While empirically the necessity of action chunking is largely undisputed, our understanding of \emph{why} action chunking is necessary remains rudimentary. Common hypothesis in the literature include:
\begin{enumerate}[nosep,leftmargin=*]
    \item \textbf{Temporal consistency}: Action-chunked policies can better represent the temporally correlated behaviors exhibited by human demonstrators \cite{zhao2023actionchunking,chi2023diffusion, li2025reinforcement}.
    \item \textbf{Horizon reduction}: Action chunks reduce the effective horizon of the environment, mitigating compounding error \cite{zhao2023actionchunking}.
    \item \textbf{Representation learning}: Action chunking serves as an auxiliary loss in policy training, improving representation learning and generalization \cite{chi2023diffusion,torne2025learning}.
\end{enumerate}

\begin{figure}[t]
\centering
\includegraphics[width=\linewidth]{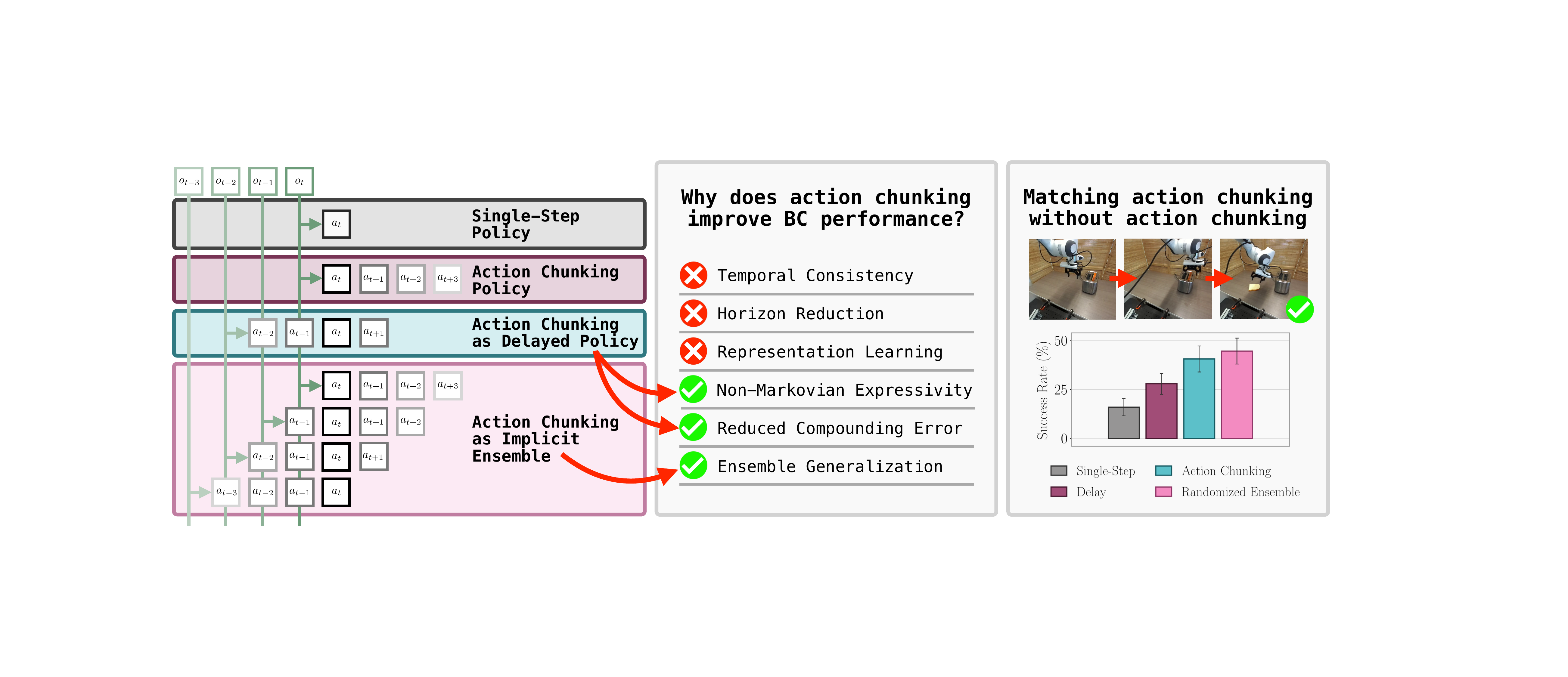}
\caption{In this work we investigate why action chunking improves the performance of behavioral cloning in robotic control. We show that common hypothesis---temporal consistency, horizon reduction, and representation learning---are not able to explain the performance of action chunking. Instead, we demonstrate that a combination of three factors---expressing non-Markovian delays in human behavior, reducing compounding error by predicting actions based on past observations, and ensemble-like effects induced by learning a variety of temporal relationships---almost fully explains the success of action chunking. Motivated by our insights, we show that in both simulated and real-world robotic control, we can match the performance of action chunking without explicitly utilizing action chunking.}
\label{fig:paper_fig}
\vspace{-1em}
\end{figure}

In this work we seek to develop a more complete understanding of the mechanisms enabling the success of action chunking in robotic control.
Our first key observation is that, in many settings, the performance of action chunking can in fact be replicated by deploying a \textit{delayed} policy $\pi(a_t \mid o_{t-k})$. Through theoretical analysis and controlled experiments on both simulated and real-world robotic manipulation settings, we show that, while humans do exhibit non-Markovian behaviors and action chunking does meaningfully reduce compounding error, predicting actions based on delayed observations is often sufficient to capture the necessary non-Markovian behaviors and achieve sufficient reductions in compounding error---the temporal consistency and horizon reduction provided by action chunking is not necessary for effective performance.

While non-Markovian expressivity and the reduction in compounding error provided by predicting actions based on past observations can partially explain the success of action chunking, we find settings where these effects are not fully explanatory. 
We identify an additional mechanism driving action chunking's improved performance, which we refer to as \textit{implicit ensembling}: an action-chunked policy $\pi(\ba_{t:t+k}|o_t)$ can be interpreted as an \textit{ensemble} of $k$ delayed policies $\{\pi(a_t | o_{t-i})\}_{0 \le i < k}$ and, by learning a diversity of temporal relationships, inherits many of the benefits of ensemble-based methods \cite{krogh1994neural,ho1998random,breiman2001random}. Across all simulated and real-world settings we consider, we show that we can match the performance of action chunking \emph{without action chunking} by deploying the action-chunked policy as an ensemble of policies with random delay.

 While the focus of this work is primarily analysis of existing approaches, our results also provide insight into how we can improve on the performance of action chunking.
In particular, motivated by the implicit ensembling effect of action chunking, we show that by training an \textit{explicit} ensemble of delayed policies, we can exceed the performance of action chunking in many settings.
Altogether, our results provide a much more complete picture for why action chunking enables improved performance in behavioral cloning for robotic control and show that, in many settings, the typical instantiation of action chunking is not necessary for effective policy performance.

\textbf{Organization.} The rest of this paper is organized as follows. In \Cref{sec:related} we discuss related work, and in \Cref{sec:prelim} outline our problem setting.  
Next, in \Cref{sec:analysis} we investigate existing hypothesis for why action chunking improves performance---temporal consistency (\Cref{sec:hypoth_expressivity}), horizon reduction (\Cref{sec:hypoth_horizon}), and representation learning (\Cref{sec:hypoth_inductive})---ultimately arguing that these are inadequate to explain the success of action chunking. In \Cref{sec:ac_ensembles}, we show that action chunking policies act as implicit ensembles, and that this effect is critical to their success.
\Cref{sec:results_real} then extends our simulated results to the real-world, demonstrating that our conclusions hold there as well. 
Motivated by our findings, in \Cref{sec:ensembles} we show that amplifying the implicit ensembling effects of action chunking with explicit ensembles can lead to even further performance improvements. Finally, in \Cref{sec:conclusion}, we close by highlighting several interesting directions for future work.

\section{Related Work}\label{sec:related}

\textbf{Behavioral cloning in robotics.}
Imitation learning via behavioral cloning (BC, \cite{pmerleau1988alvinn})---where a policy is trained via supervised learning to mimic the demonstrator's actions---is widely used in robotics~\cite{argall2009survey,ross2011reduction,bojarski2016end,zhang2018deep,rahmatizadeh2018vision,mandlekar2021matters}. Recent work scales BC by collecting large-scale human demonstration datasets \cite{walke2023bridgedata,o2024open,khazatsky2024droid} and training expressive generative models on these datasets \cite{shafiullah2022behavior,cui2022play,zhao2023actionchunking,chi2023diffusion,dasari2024ingredients,zhao2024aloha,ankile2024juicer,ze20243d,sridhar2024nomad}, resulting in generalist vision-language-action (VLA) policies capable of performing a wide variety of tasks 
~\cite{brohan2022rt,gu2023rt,team2024octo,kim2024openvla,black2024pi_0,bjorck2025gr00t,intelligence2025pi_,zha2026lap}. To the best of our knowledge, \textit{nearly every large-scale robot policy training effort since ACT \cite{zhao2023actionchunking} predicts and executes chunked actions}~\cite{team2024octo,black2024pi0,pertsch2025fast,molmoact2,geminirobotics,lbm,mimicvideo,alohaunleashed}.
In contrast to these works, rather than simply demonstrating another example of applying BC to robotic control, the focus of this paper is in understanding why action chunking helps, and showing how we can move beyond action chunking. We remark as well that the ensembling approach we propose in \Cref{sec:ensembles} is somewhat related to the work of \cite{wagenmaker2025posterior}, although the RL finetuning objective considered there is tangential to our objective.

\textbf{Action chunking.}
To the best of our knowledge, the first works to explicitly propose the use of action chunking in robotics in its current form are the concurrent works on ACT \citep{zhao2023actionchunking} and Diffusion Policy \citep{chi2023diffusion}. While many previous works also explored the idea of a policy \textit{producing} multiple actions, typically these works learned receding-horizon policies~\cite{janner2022planning}, where only the first action is executed.
Several works have sought to understand or improve on action chunking.
In particular, several approaches have been proposed to adjust the policy's sampling process in order to maintain temporal consistency \emph{across} action chunks \cite{liu2025bidirectional,malhotra2025self,park2025acg,black2026real, black2025training}. Other works have sought to obtain the benefits of action chunking while ensuring policy reactivity by mixing action chunks of different length \cite{jing2025mixture}, training a policy via reinforcement learning to adaptively select the chunk length \cite{weng2025temporal}, or using a world model to adaptively switch between action chunks at test-time \cite{chen2026dream}.

Perhaps most related to our work is the work of \citet{simchowitz2025pitfalls} and \citet{zhang2025imitation}, which seek to obtain a theoretical understanding for why action chunking helps, largely from a control-theoretic perspective. This line of work primarily aims to understand the \emph{minimal} conditions under which action chunking enables improved performance, showing that action chunking has benefits even if the demonstrator is deterministic and Markovian and the environment is fully observed. In particular, \citet{zhang2025imitation} shows that, under certain control-theoretic notions of stability, action chunking can mitigate compounding error in continuous control domains. While this is related to our investigation of the compounding error hypothesis (see \Cref{sec:hypoth_horizon}), we find that in practice a variety of other effects also contribute to the success of action chunking.

\newcommand{\cOp}{\cO^+}
\newcommand{\Po}{P_{\cO}}
\newcommand{\D}{D}
\newcommand{\Did}{D_{\mathrm{id}}}
\newcommand{\frakDtrain}{\frakD_{\mathrm{train}}}
\newcommand{\frakDval}{\frakD_{\mathrm{val}}}
\newcommand{\Lval}{L_{\mathrm{val}}}
\newcommand{\pidelay}{\pi_{\mathrm{delay}}}
\newcommand{\Ttrain}{T_{\mathrm{train}}}
\newcommand{\Ltrain}{L_{\mathrm{train}}}
\newcommand{\pibctwenty}{\pibc_{20}}
\newcommand{\dima}{d_{\mathcal{A}}}
\newcommand{\tti}{\texttt{i}}
\newcommand{\abar}{\bar{a}}

\section{Preliminaries}\label{sec:prelim}

In this work we study action chunking in the context of behavioral cloning. We consider interaction with a potentially non-Markovian, partially observed environment $\cM = (\cS,\cA,\cO,P,P_0,\Po,r)$, where $\cS$, $\cA := \mathbb{R}^{\dima}$, and $\cO$ are the state, action, and observation spaces, $P : \cH \times \cA \rightarrow \triangle_{\cS}$ is the transition kernel for histories $\cH$, $P_0 \in \triangle_{\cS}$ is the initial state distribution, $\Po : \cS \rightarrow \triangle_{\cO}$ is the observation distribution, and $r : \cO \rightarrow [0,1]$ the reward. An episode consists of a sequence of states, observations, and actions $(s_1, o_1, a_1, s_2, o_2, a_2, s_3, \dots)$ where $s_1 \sim P_0$, $s_{t+1} \sim P(\cdot \mid h_t, a_t)$, for $h_t$ the history of states and actions up to $t$, and $o_t \sim \Po(s_t)$. This continues for $H$ steps or until a successful state is reached, i.e., $r(o_t) = 1$.
We denote by $\cJ(\pi)$ the expected episode reward of policy $\pi$,
where the expectation is over trajectories induced by $\pi$ on $\cM$ (for a 0-1 success reward, this corresponds to the probability that some successful state will be reached before step $H$). While we consider non-Markovian, partially observed environments for the sake of generality, the majority of our conclusions also hold in Markovian (i.e. environments where $P(\cdot \mid a_t, s_t, s_{t-1}, s_{t-2}, \ldots) = P(\cdot \mid a_t, s_t)$) and fully observed settings.\loose

We assume access to a dataset of demonstration trajectories $\frakD = \{ \traj_{\tti} \}_{\tti=1}^N$, with each trajectory $\traj_{\tti} = (o_1^{\tti}, a_1^{\tti}, \ldots , o_{H }^{\tti}, a_{H}^{\tti}, o_{H+1}^{\tti})$ generated by some demonstrator $\pidemo$ on $\cM$. The demonstrator may be non-Markovian, meaning that it may condition on the full history of observations. In general, we assume that $\frakD$ consists of successful (although not necessarily optimal) behaviors, and are interested in learning policies that maximize success rate $\cJ(\pi)$. In the following, it will be convenient to split $\frakD$ into train and validation sets, $\frakDtrain$ and $\frakDval$. Throughout this work, we denote $[k] := \{ 1, \ldots , k \}$.\loose

\textbf{Behavioral cloning.} 
Behavioral cloning is a standard approach to learning from demonstrations that trains a policy via supervised learning to mimic the actions present in $\frakDtrain$. In particular, in the simplest, Markovian case, behavioral cloning flattens the training dataset into $(o_t^{\tti}, a_t^{\tti})$ pairs and trains a policy $\pibc(a_t \mid o_t)$ to predict the actions from the dataset:
\begin{align}\label{eq:bc_objective}
\textstyle   \pibc \leftarrow \argmax_\pi \sum_{(o_t, a_t) \in \frakDtrain} \log \pi(a_t \mid o_t).
\end{align}
At deployment, given observation $o_t$ an action $a_t \sim \pibc(\cdot \mid o_t)$ is sampled and executed in the environment.
In practice, modern approaches to BC in robotics typically parameterize $\pi$ with a generative model---usually either an autoregressive transformer~\cite{kim2024openvla,pertsch2025fast} or diffusion/flow model~\cite{chi2023diffusion,black2024pi0}---and aim to model the full demonstrator distribution. 
While our results are independent of the exact instantiation of the BC policy, for all experiments we parameterize $\pibc$ as a diffusion model \cite{chi2023diffusion}.

\textbf{Action chunking.} 
Action-chunked policies are trained to predict the next $k$ actions produced by the demonstrator instead of only the single next action. That is, action chunking models the conditional distribution of sequences of actions:
\begin{align}\label{eq:ac_objective}
 \textstyle   \pibc \leftarrow \argmin_\pi \sum_{(o_t,\ba_{t:t+k}) \in \frakDtrain} \log \pi(\ba_{t:t+k} \mid o_t) 
\end{align}
for $\ba_{t:t+k} = (a_t, a_{t+1}, \ldots , a_{t+k-1})$ 
the sequence of $k$ actions produced from the demonstrator following each $o_t$ in $\frakDtrain$ (the ``action chunk''). At deployment, action-chunked policies typically sample $\ba_{t:t+k} \sim \pihat(\cdot \mid o_t)$ and execute either part or all of $\ba_{t:t+k}$ open-loop before querying the policy for a new sequence of actions.

\newcommand{\ellval}{\ell_{\mathrm{val}}}
\newcommand{\Lvaldiff}{\Lval^{\mathrm{diff}}}
\newcommand{\Lvalact}{\Lval^{\mathrm{act}}}

\textbf{Policy notation.}
We let $\pihat_k$ denote a policy that produces action chunks of length $k$. In practice, it is common to only execute a portion of an action chunk, e.g. the first $n < k$ steps of the action chunk, and then recompute a fresh action chunk. We denote a policy that is deployed in this fashion as $\pihat_k^n$. 
We will also consider \emph{single-step delayed policies}, where we compute a new action at each step, but condition on some observation $d$ steps in the past. We denote such a policy as $\pidelay^d$ (so, for example, $a_t \sim \pidelay^d(\cdot \mid o_{t-d})$).
We can also execute an action-chunking policy $\pihat_k$ as a delayed policy for delay $d < k$ by sampling {\thinmuskip=2mu
\medmuskip=2mu\thickmuskip=3mu$\ba_{t-d:t-d+k} \sim \pihat_k(\cdot \mid o_{t-d})$} and taking the $(d+1)$th action in $\ba_{t-d:t-d+k}$. We denote this induced delayed policy as $\pidelay^d[\pihat_k]$. For brevity, we refer to single-step Markovian BC policies as ``Markov'' or ``single-step'', action chunking policies with chunk size $k$ as AC$(k)$, and delayed policies with delay $d$ as Delay$(d+1)$.

\textbf{Validation error.}
Throughout this work, we consider validation error---computed as mean-squared error (MSE) of \emph{mean} actions sampled from the policy on $\frakDval$---to measure how well a policy fits the data. We do not use this as a proxy for task success, only for analysis purposes.
Formally, for an action-chunk $k$ policy $\pibc_k$, we define the validation error:\loose
\begin{align*}
\textstyle    \Lval(\pibc_k) := \frac{1}{\dima \cdot k} \cdot \frac{1}{|\frakDval|} \sum_{(o_t,\ba_{t:t+k}) \in \frakDval}  \| \Exp_{\ba \sim \pibc_k(\cdot \mid o_t)}[\ba]- \ba_{t:t+k} \|_2^2,
\end{align*}
where, for action chunks, we take $\| \ba \|_2^2 := \sum_i \| [\ba]_i \|_2^2$ for $[\ba]_i$ the $i$th element in the chunk.
If $\pihat$ is a delayed policy (i.e. $\pidelay^d$ or $\pidelay^d[\pihat_k]$) or an action-chunked policy where we only execute the first $n$ actions in the chunk (i.e. $\pihat_k^n$), we define $\Lval(\pihat)$ analogously, but only compute the loss over the actions that the policy would execute ($a_{t+d} \mid o_t$ for a delayed policy, and $a_t,\ldots,a_{t+n-1} \mid o_t$ for $\pihat_k^n$), in either case normalizing $\Lval$ by the number of actions the loss is computed over rather than $1/k$.
We avoid the standard denoising validation loss due to ambiguity with ensembled policies (see \Cref{sec:val_error_explanation} for further discussion of this point).
While this notion of error does not capture multi-modality, in practice diffusion policies rarely exhibit significant multi-modality \cite{pan2025much} so this is not a major shortcoming (in particular, we found that the policies we trained in this work did not exhibit significant multi-modality, see e.g. \Cref{apx: prod marg}).

\section{Do Existing Hypotheses Explain the Performance of Action Chunking?}\label{sec:analysis}

In this section we seek to understand why BC policies trained to predict and execute action chunks outperform those trained to predict only single actions. We proceed by investigating each of the commonly cited hypotheses for the success of action chunking---\textbf{temporal consistency} (\Cref{sec:hypoth_expressivity}), \textbf{horizon reduction} (\Cref{sec:hypoth_horizon}), and \textbf{representation learning} (\Cref{sec:hypoth_inductive}).\loose

To investigate these hypothesis, in this section we consider the \texttt{Libero} behavioral cloning benchmark \cite{liu2023libero} (in the following sections we also consider the \texttt{Robomimic} benchmark \cite{robomimic2021}, as well as real-world robotic settings). \texttt{Libero} contains 130 total tasks, each provided with 50 successful human demonstrations---we primarily consider the \texttt{Libero-90} suite of 90 tasks. We train diffusion policies \cite{chi2023diffusion} on these demonstrations, using image-based observations. For each point in the following results, we average over at least three random seeds; error bars denote standard errors. Please see \Cref{sec:app_exp_details} for detailed experimental setup and \Cref{apx:addplots} for results on individual tasks.

\subsection{Hypothesis 1: Temporal Consistency}\label{sec:hypoth_expressivity}

The temporal consistency hypothesis starts from the observation that human demonstrators may not be truly Markovian, and, in particular, often produce smoothly-varying, correlated action sequences that are better modeled by policies explicitly encoding temporal correlations than Markovian policies \cite{zhao2023actionchunking,chi2023diffusion, li2025reinforcement,pertsch2025fast}. 
Here we investigate this claim experimentally.

\refstepcounter{experiment}
\textbf{Experiment \theexperiment: Human demonstrations are better modeled by delayed policies.}
To test this hypothesis, we first consider whether non-Markovian policies are able to better model the demonstration data in \texttt{Libero-90}
compared to single-step Markovian policies. 
In particular, we select $k=20$ and train an action chunk 20 policy $\pibctwenty$ on $\frakDtrain$. We consider the action prediction MSE on $\frakDval$ for policy $\pibctwenty^n$ and $n \in [20]$, $\Lval(\pibctwenty^n)$---the average action prediction error over the first $n$ steps in the action chunk. We also plot $\Lval(\pidelay^n[\pibctwenty])$---the action prediction error of the induced delayed policy. Conceptually, $\Lval(\pibctwenty^n)$ measures how effectively we can predict $(a_{t}, \ldots , a_{t+n-1}) \mid o_t$ while $\Lval(\pidelay^n[\pibctwenty])$ measures $a_{t+n} \mid o_t$ (equivalently, $a_t \mid o_{t-n}$). We compare both of these for different values of $n$ to $\Lval(\pihat_k^1)$, the validation MSE of a Markovian policy.\loose

Our results, aggregated across \texttt{Libero-90}, are given in \Cref{fig:val_loss_libero}. We confirm that we can more easily predict $a_t$ given $o_{t-n}$ than given $o_t$, for all $n < 18$. Furthermore, we find that we can more easily predict an action based on a delayed observation than we can predict the full action chunk.
Note that, if the demonstrator were purely Markovian, we would expect the mutual information between $a_t$ and $o_t$ to be at least as large as the mutual information between $a_t$ and $o_{t-n}$ (by the data processing inequality). However, we see the opposite: $a_t$ is best predicted by conditioning on $o_{t-10}$. This suggests that human demonstrators exhibit significant non-Markovian behavior\footnote{Note that, in the case of a purely Markovian, fully-observed environment, if the demonstrator is non-Markovian, we can perfectly replicate their state distribution with a Markovian policy \emph{so long as this policy is indexed on timestep $t$}. However, if the policy is not indexed on timestep, as is usually the case in practice, then a Markovian policy does not suffice to model a non-Markovian demonstrator. In other words, even in the simplest settings, we require non-Markovian policies to model the behavior of a non-Markovian demonstrator. See \Cref{apx:randomwalk} for further discussion of this point.}. 
\sectionend

\begin{figure*}[t]
    \centering

    \begin{minipage}[t]{0.32\textwidth}
        \centering
        \vspace{-3cm}
        \includegraphics[width=\linewidth]{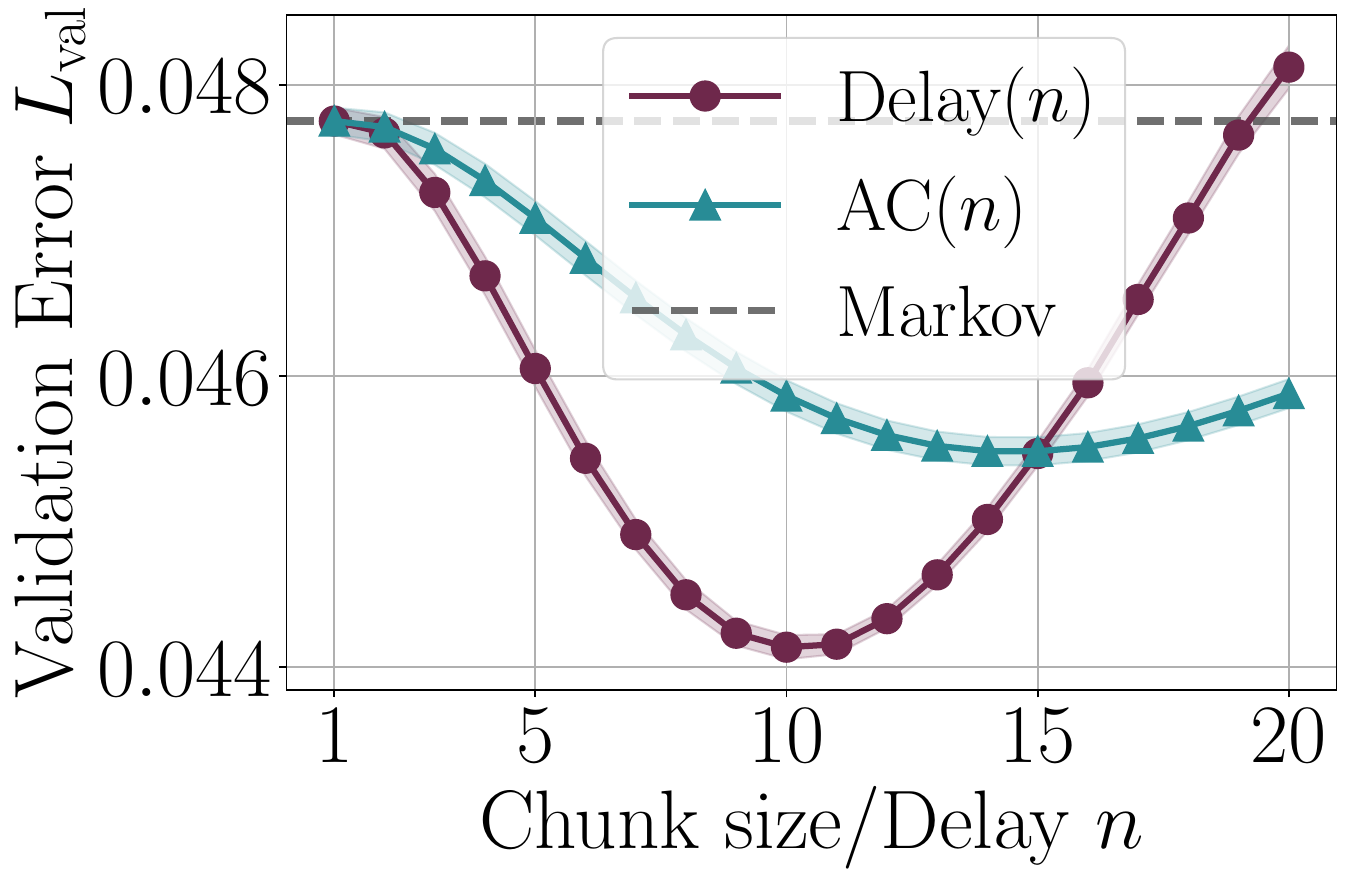}
        \caption{Action prediction validation error of delayed policies and action chunking policies, aggregated over \texttt{Libero-90}.}
        \label{fig:val_loss_libero}
    \end{minipage}
    \hfill
    \begin{minipage}[t]{0.31\textwidth}
        \centering
        \includegraphics[width=\linewidth]{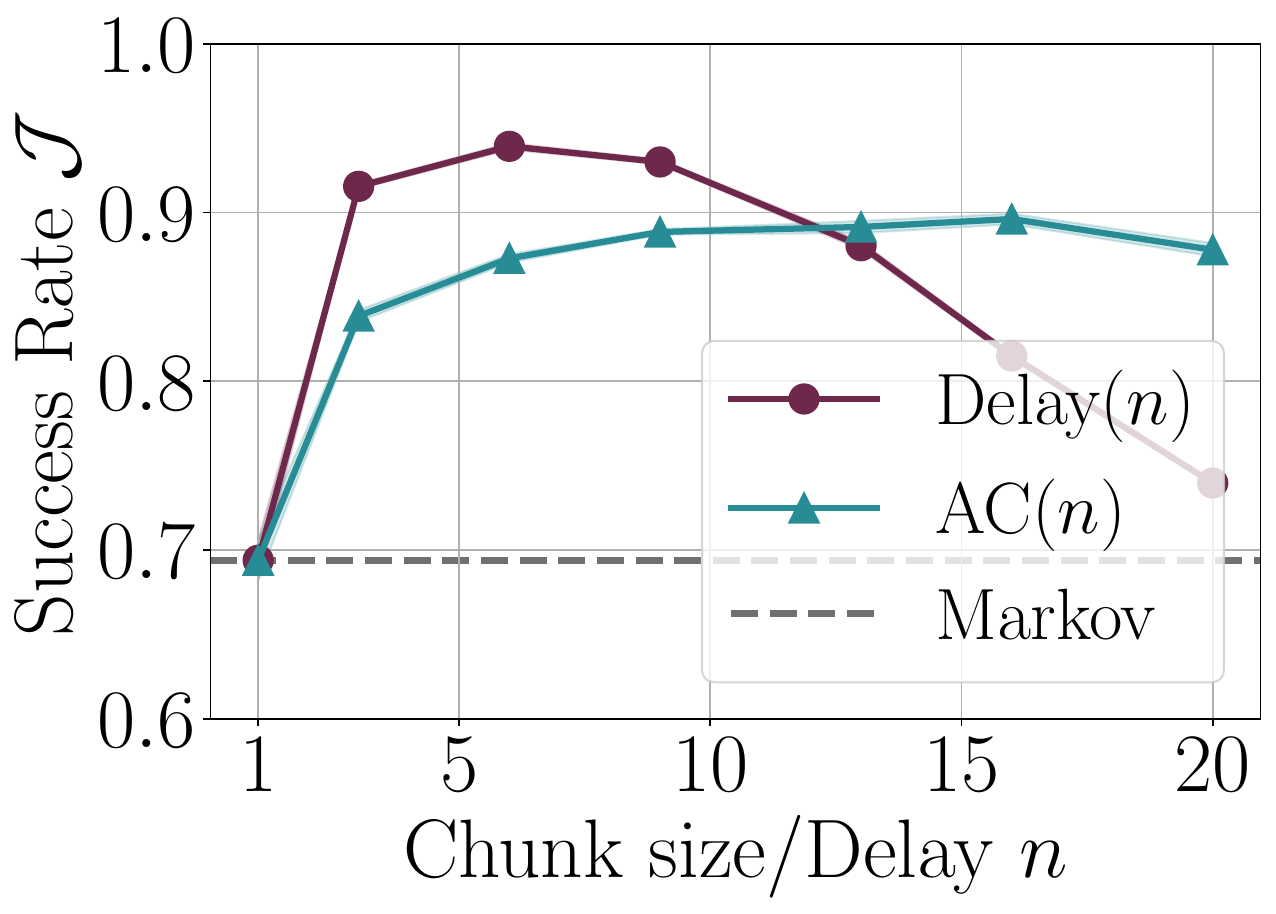}
        \caption{Average task success of delayed policies and action chunking policies, aggregated over \texttt{Libero-90}.}
        \label{fig:success_libero}
    \end{minipage}
    \hfill
    \begin{minipage}[t]{0.315\textwidth}
        \centering
        \includegraphics[width=\linewidth]{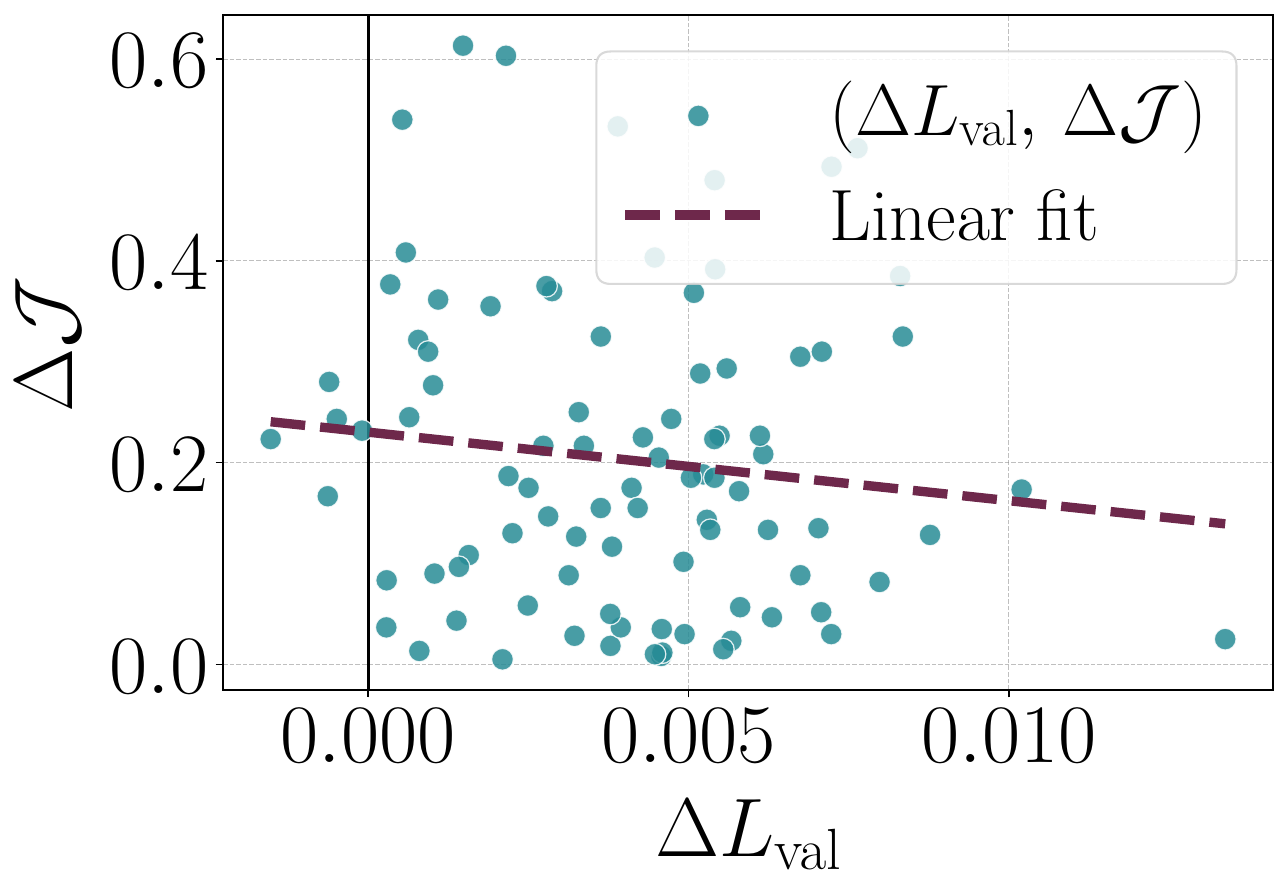}
        \caption{Success difference of AC$(10)$ and Markovian policy, vs non-Markovian-ness, for each task in \texttt{Libero-90}.}
        \label{fig:val_vs_success_libero}
    \end{minipage}
    
\end{figure*}

\refstepcounter{experiment}
\textbf{Experiment \theexperiment: Delayed policies match the performance of action chunking.}
Notably, while \Cref{fig:val_loss_libero} shows that $a_t \mid o_{t-n}$ achieves lower validation error than $a_t \mid o_t$, it also shows that the validation error averaged over the entire prediction $\ba_{t:t+n} \mid o_t$ is greater than $a_t \mid o_{t-n}$, for proper choice of $n$. To test how this impacts policy performance, we compare the success rate of $\pibctwenty^n$ to $\pidelay^n[\pibctwenty]$ for various choices of $n$. 
We illustrate the performance of these approaches in \Cref{fig:success_libero}, averaging the success rate achieved by each policy over all \texttt{Libero-90} tasks. We see that simply executing a delayed policy matches or exceeds the performance achieved by an action-chunking policy. 
This suggests that, on \texttt{Libero-90}, delayed policies capture the non-Markovian demonstrator behaviors relevant for policy performance. 
Thus, while action chunking may model human non-Markovian behavior more effectively than Markovian policies, leading to more effective performance, the full non-Markovian expressivity of action-chunked policies is not required---it suffices to simply predict an action based on a delayed observation. Concretely, this suggests that in standard robotic manipulation settings (in particular in this case \texttt{Libero}), temporal consistency is not necessary to perform effectively, and the benefits of action chunking are not due to improving temporal consistency. In \Cref{apx: openpi}, we show that this is true for VLAs as well---we find that the \texttt{Libero} finetune of $\pi_{0.5}$ \cite{intelligence2025pi_} performs just as well when executed as a delayed policy as a standard action-chunked policy.\loose
\sectionend

\begin{conclusionbox}{Takeaway 1: Action chunking improves performance by capturing non-Markovianity in human demonstrators, but temporal consistency is not, in general, required}
We conclude that humans do exhibit non-Markovian behavior, which can be more effectively modeled by an action-chunking policy than a Markovian policy. However, delayed policies model human demonstrators just as effectively, and perform equivalently to or better than action-chunked policies in terms of success rate, suggesting that the relevant non-Markovian behavior is captured by delayed policies, and temporal consistency is not required.
\end{conclusionbox}

\subsection{Hypothesis 2: Horizon Reduction}\label{sec:hypoth_horizon}

While humans exhibit non-Markovian behavior that action chunking is able to capture, our results suggest that delayed policies are able to capture the relevant non-Markovian behavior as well. Here we investigate if other effects also contribute to the performance of action chunking, and begin by investigating the explanatory power of non-Markovian expressivity.

\refstepcounter{experiment}
\textbf{Experiment \theexperiment: Non-Markovian expressivity only partially explains the success of action chunking.}
If non-Markovian expressivity fully explained the superior performance of action chunking, we would expect that in settings where action chunking does \textit{not} provide additional expressivity benefits (for example, when the demonstrator is Markovian), Markovian policies would perform as well as action chunking policies.
To investigate this, we evaluate the behavior of $\pibctwenty^{10}$ (executing action chunks of length 10) compared to $\pibctwenty^{1}$ (executing a Markovian policy) on each \texttt{Libero} task individually. We consider the metric $ \Lval(\pibctwenty^1)- \min_{n\ge1} \Lval(\pidelay^n[\pibctwenty])$, which we denote $\Delta \Lval$, as a proxy for measuring how non-Markovian the demonstrator is, evaluated for each \texttt{Libero} task. $\Lval(\pibctwenty^1)$ corresponds to how well a Markovian policy can predict the demonstrator's behavior, while $\min_{n\ge1} \Lval(\pidelay^n[\pibctwenty])$ corresponds to how well the \emph{best} delayed policy can predict their behavior---we would therefore expect this value to be large only if non-Markovian policies are useful for predicting demonstrator behavior on a particular task. In \Cref{fig:val_vs_success_libero}, we plot the success rate difference between $\pibctwenty^{10}$ and $\pibctwenty^{1}$ for each task in \texttt{Libero-90} (denoted as $\Delta \cJ$) against this measure of non-Markovian behavior.\loose

\begin{examplebox}{Example: When do non-Markovian behaviors arise in practice?}

{
\centering

\newlength{\examplecontentwidth}
\setlength{\examplecontentwidth}{\linewidth}

\begin{minipage}[t]{0.63\examplecontentwidth}
\captionsetup{type=figure}
\centering

\begin{subfigure}[t]{0.32\linewidth}
    \centering
    \includegraphics[width=\linewidth]
        {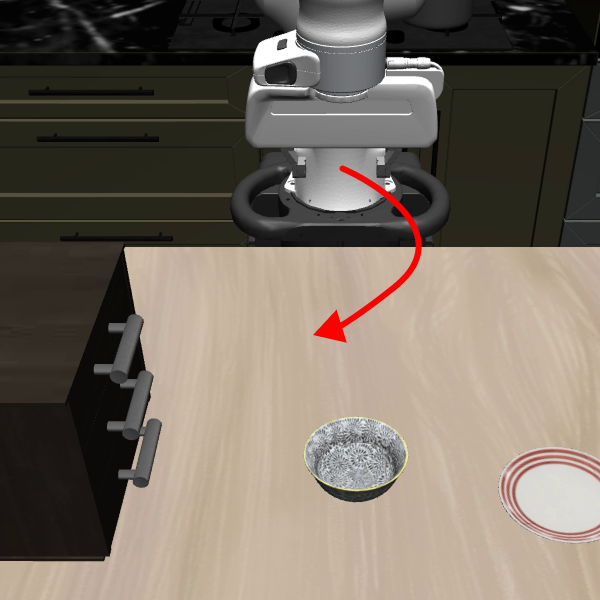}
    \caption{$t=0$.}
    \label{fig:closed_drawer}
\end{subfigure}
\hfill
\begin{subfigure}[t]{0.32\linewidth}
    \centering
    \includegraphics[width=\linewidth]
        {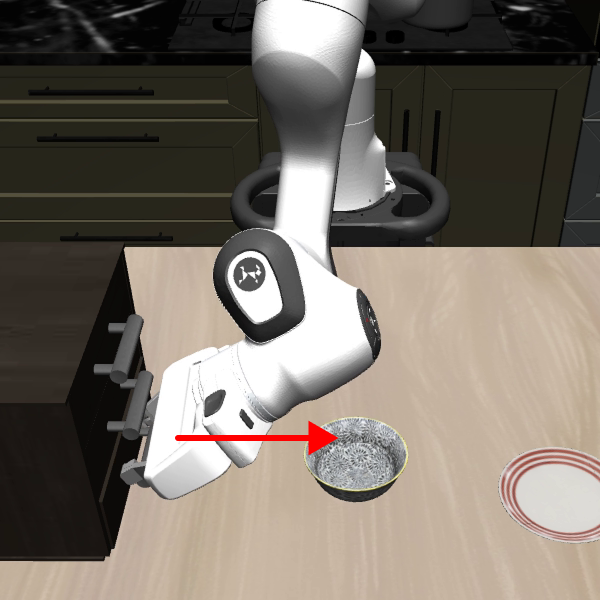}
    \caption{$t=80$.}
    \label{fig:open_drawer}
\end{subfigure}
\hfill
\begin{subfigure}[t]{0.32\linewidth}
    \centering
    \includegraphics[width=\linewidth]
        {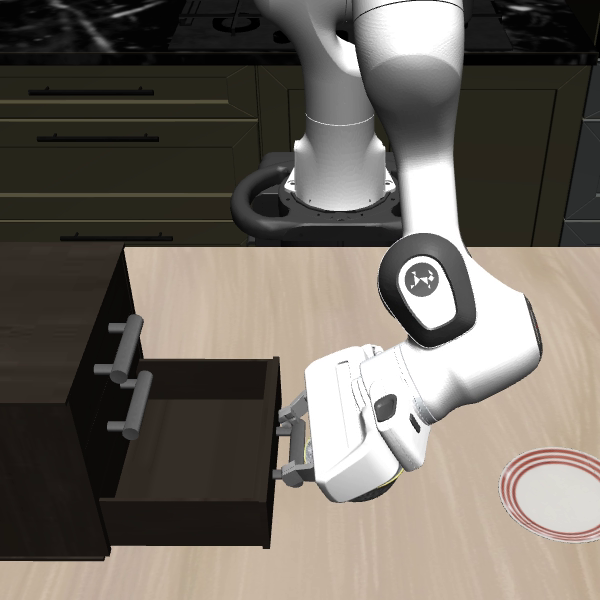}
    \caption{$t=110$.}
    \label{fig:pause_drawer}
\end{subfigure}

\caption{%
Illustration of the task ``open the bottom drawer of the cabinet.''
The panels show
(a) the initial state,
(b) the ``decision boundary'' at which the demonstrator pauses and changes direction,
and
(c) the nearly completed task.
}
\label{fig:drawer_examples}

\end{minipage}
\hfill
\begin{minipage}[t]{0.33\examplecontentwidth}
\captionsetup{type=figure}
\centering

\includegraphics[
    width=\linewidth,
    keepaspectratio
]{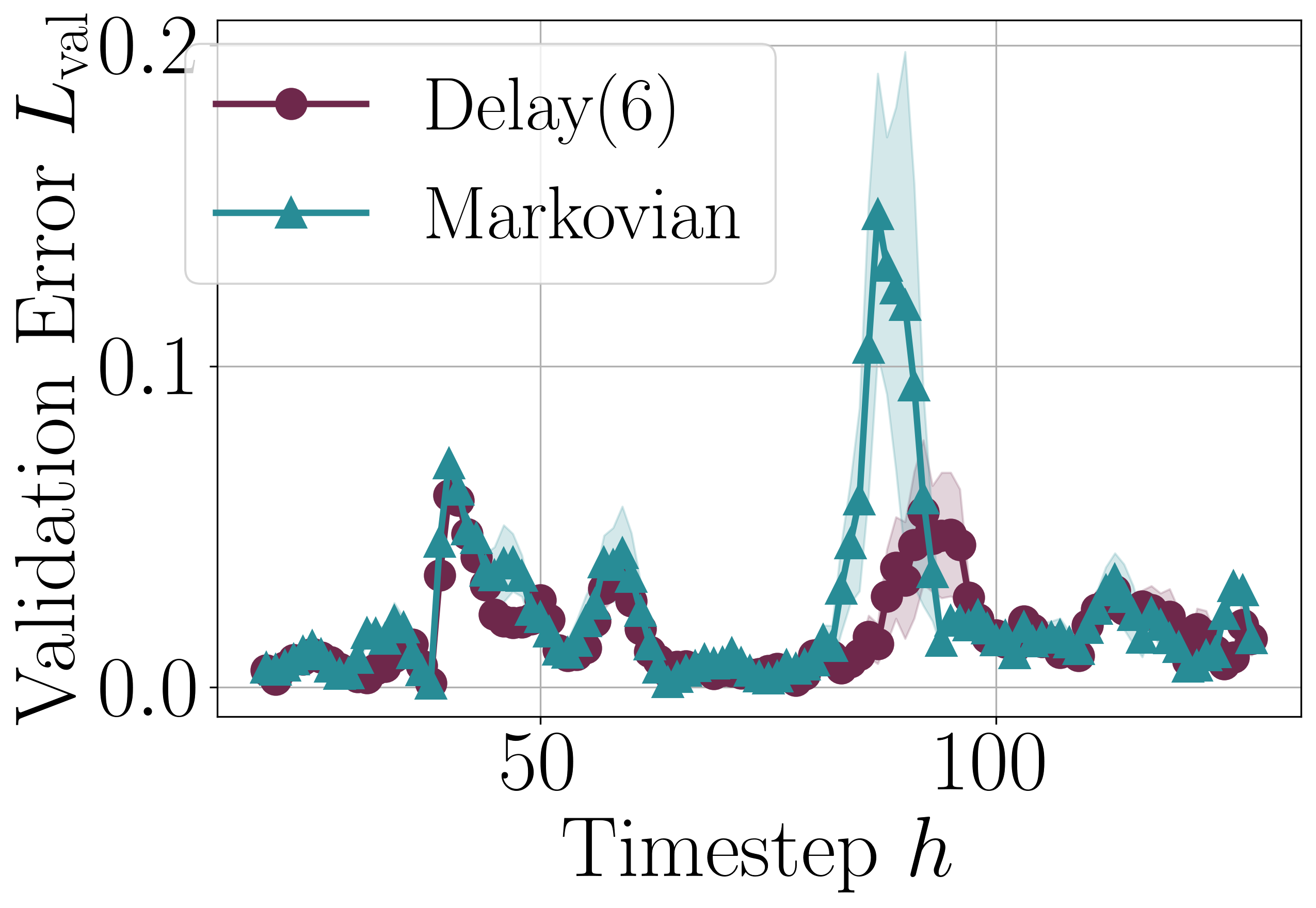}

\caption{%
Action prediction validation error vs. episode timestep.
The demonstrator pauses near timestep 80 while opening the drawer.
}
\label{fig:drawer_val_error}

\end{minipage}
}

To illustrate where non-Markovian behavior can arise in practice, we consider Task 6 of \texttt{Libero-90}, ``open the bottom drawer of the cabinet'', which we illustrate in \Cref{fig:drawer_examples}. This task requires the robot to move down, grasp the bottom handle, and pull the drawer open. While this primarily involves straightforward free-space motion, at the point where the robot reaches the drawer it must both fasten onto the handle and change direction. We refer to this point as the \emph{decision boundary}---in order to complete the task, the robot must ``decide'' at this point to transition from motion in one direction to motion in another direction.

We observe that the human demonstrations exhibit significant pauses at this decision boundary. Instead of immediately changing direction, the human demonstrator stops for several steps, perhaps to ensure the handle is effectively grasped, before proceeding. Such pauses are non-Markovian---while the state has not changed meaningfully, the demonstrator consistently waits for $n$ steps. As such, a Markovian policy will not effectively fit this behavior---instead of predicting ``wait'' for $n$ steps deterministically, it will predict ``wait'' with some probability and ``move'' with some probability (i.e. the marginal action distribution at the decision-boundary). In contrast, either an action-chunked or delayed policy \emph{can} model this non-Markovian behavior, as long as the chunk size and delay are at least $n$. To illustrate this quantitatively, in \Cref{fig:drawer_val_error} we plot the action prediction error for a Markovian policy and a delayed policy along the held-out demonstration trajectory shown in \Cref{fig:drawer_examples}. We see that the Markovian and delayed policy have similar prediction error except around step 80---the precise point this ``decision boundary'' is encountered---when the Markovian policy has significantly higher prediction error.\loose

In policy rollouts, we find that this behavior causes the Markovian policy to deviate from the demonstrator behavior at this decision boundary, leading to out-of-distribution states and poor performance, while delayed or action-chunked policies effectively model the human demonstrator at this state, enabling successful task completion. Such ``decision boundaries'' are very common across robotic manipulation tasks, and, as such, the non-Markovian expressivity of action chunk and delayed policies can enable significant improvements over Markovian policies by fully capturing these behaviors.
\end{examplebox}

If non-Markovian expressivity were predictive of the success of action chunking, we would expect there to be a strong correlation between $\Delta \Lval$ and $\Delta \cJ$.
Somewhat surprisingly, however, we find little correlation between how non-Markovian the demonstrator is for a given task and how much action chunking improves on a single-step policy.
Indeed, in several tasks for which action chunking still improves significantly on the Markovian policy, there is little difference between $\Lval$ of a Markovian and non-Markovian policy---even though the demonstrator's behavior can be accurately modeled by a Markovian policy, non-Markovian policies can still lead to substantial performance improvements in terms of success rate\footnote{Note that, for fully expressing non-Markovian behaviors, a \emph{history-conditioned} policy, e.g. modeling $a_t \mid o_t, o_{t-1}, o_{t-2}, \ldots$, may be necessary. In practice, however, the performance of history-conditioned policies is typically very poor, and we find that delayed policies modeling $a_t \mid o_{t-d}$ perform much better. We believe this is due to the increased sample complexity of history-conditioned policies: by conditioning on a full history, we increase the effective observation space \emph{exponentially} with the length of the history, increasing the challenge of generalization in the low-data regime.}. \sectionend

These results suggest that non-Markovian expressivity only partially explains the success of action chunking, leading us to conclude that other effects must be present as well.
A second hypothesis commonly proposed in the literature is that action chunking reduces the \emph{effective horizon} of the environment, and as such reduces compounding error.
Formally, the suboptimality of a BC policy $\pihat$
typically scales with the horizon of the problem $H$ as well as the supervised learning loss $\epsilon$ (i.e., the population validation error under the demonstrator's trajectory distribution) \cite{foster2024behavior}: $\cJ(\pidemo) - \cJ(\pihat) \le C(H) \cdot \epsilon$,
where $C(H)$ denotes some measure of how quickly the error compounds in our environment (that is, how quickly supervised learning error leads to performance reduction).
Though the precise scaling of $C(H)$ depends on environment properties, in general $C(H)$ increases with $H$. Thus, if we can decrease $H$---for example, to $H/k$---while, critically, keeping $\epsilon$ fixed, this should reduce the compounding error and total suboptimality. By only computing an action every $k$ steps, action chunking \emph{does} decrease the effective horizon in this fashion. 
Here we investigate whether this effect can explain the performance of action chunking.

As a starting point, we note that, as shown in \Cref{fig:success_libero}, delayed policies can match the performance of action chunking in aggregate across \texttt{Libero-90}. This immediately suggests that the performance of action chunking is not due, strictly speaking, to horizon reduction: delayed policies recompute actions at each step, so do not reduce the effective horizon as action chunking does, yet still perform comparably to action-chunked policies.   
However, for the tasks in \Cref{fig:val_vs_success_libero} where $\pibctwenty^{10}$ and $\pibctwenty^{1}$ achieve approximately the same validation loss ($\epsilon$), several of these tasks exhibit significant gaps in performance between action-chunked and single-step policies. This suggests that, while horizon reduction may not be necessary, action chunking nonetheless mitigates compounding error: $C(H)$ is still lower for an action-chunked policy. To explain this, we investigate the effect of action chunked and delayed predictions theoretically.

\textbf{Theory: Predicting demonstrator actions from past states provably reduces compounding error.}
Consider the setting with deterministic dynamics, so that $s_{t+1} = P(s_t,a_t)$, and $P$ is 1-Lipschitz in both state and action. Assume that all policies considered and the reward $r$ are 1-Lipschitz in the state (please see \Cref{sec:proofs} for a precise statement of these assumptions). 
In this setting, we have the following lower bound on the performance of a Markovian policy.
\begin{theorem}\label{thm:lb_markov}
    There exists an environment satisfying the above assumptions, a Markov demonstrator $\pidemo$, and a Markov policy $\pihat$ satisfying $\max_t \Exp^{\pidemo}[\wass(\pidemo(s_t), \pihat(s_t))] \le \epsilon$, for $W_1(\cdot, \cdot)$ the Wasserstein-1 metric,
    such that $\cJ(\pidemo) \ge \cJ(\pihat) + \Omega(2^H \cdot \epsilon)$.
\end{theorem}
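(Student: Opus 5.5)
The plan is to build an explicit one-dimensional counterexample in which a constant per-step discrepancy of size $\epsilon$ is amplified by a factor $2$ at every step.

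\textbf{Environment and policies.} Take $\cS = \cA = \mathbb{R}$, $s_1 = 0$ deterministically, and dynamics $s_{t+1} = P(s_t,a_t) = s_t + a_t$. These dynamics are 1-Lipschitz in each argument separately, which is the form of the assumption I would check against \Cref{sec:proofs}. Let the demonstrator be $\pidemo(s) = \delta_{s}$, i.e.\ $a_t = s_t$ deterministically; this is 1-Lipschitz in the state. Under $\pidemo$ we then have $s_{t+1} = 2 s_t$. Define the learned policy by $\pihat(s) = \delta_{s + \epsilon}$, which is also 1-Lipschitz. Because both policies are Dirac, $\wass(\pidemo(s), \pihat(s)) = \epsilon$ for every $s$, so the condition $\max_t \Exp^{\pidemo}[\wass(\pidemo(s_t), \pihat(s_t))] \le \epsilon$ holds trivially. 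Note that no expectation over $\pidemo$'s own distribution is really needed here.

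\textbf{Divergence of the rollout.} Under $\pidemo$, starting from $s_1 = 0$, we have $s_t = 0$ for all $t$. Under $\pihat$ the recursion is $s_{t+1} = 2s_t + \epsilon$, so $s_t = (2^{t-1}-1)\epsilon$. The state therefore deviates by $\Theta(2^H \epsilon)$ at step $H$.

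\textbf{Reward.} Use the reward $r(s) = \max(0, 1 - |s|)$ restricted to the final step, or equivalently sum rewards over steps; if needed, rescale to fit $[0,1]$ and 1-Lipschitzness. Then $\cJ(\pidemo) = 1$ while $\cJ(\pihat) = \max(0, 1 - (2^{H-1}-1)\epsilon)$. This gives a gap of $\min(1, (2^{H-1}-1)\epsilon) = \Omega(2^H \epsilon)$ whenever $\epsilon \lesssim 2^{-H}$. That is the regime in which the bound is meaningful, since $\cJ \le 1$ forces this caveat, and I would state it explicitly.

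\textbf{Main obstacle.} The difficulty is reconciling this construction with the paper's episodic success-reward formalism, where $r:\cO \to [0,1]$ is checked at every step and the episode terminates on success. I would take observations to equal states and place the success region around the demonstrator's terminal behaviour. One option is a reward that is nonzero only at $t = H+1$, realised by augmenting the state with a time counter; this keeps the reward 1-Lipschitz in the state. Another option is to sum the per-step rewards $1 - \min(1,|s_t|)$, which still produces a gap dominated by the last terms, $\sum_t 2^{t}\epsilon = \Omega(2^H\epsilon)$. The remaining checks are to confirm that the Lipschitz assumptions in \Cref{sec:proofs} are stated per-argument, so that $s + a$ qualifies, and that policies are allowed to be deterministic.
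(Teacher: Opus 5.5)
Your construction is correct, but it puts the exponential amplification in a different place than the paper does. The paper uses the same additive dynamics $P(s,a)=s+a$ (on $\mathbb{R}^2$) with a \emph{constant} demonstrator $\pidemo(s)=[0,1]$. That demonstrator's closed loop is a pure translation and amplifies nothing. The doubling instead comes from the learner $\pihat(s)=[\epsilon+\mathrm{dist}(s,\mathrm{supp}(\pidemo)),1]$: it is 1-Lipschitz and exactly $\epsilon$-off on the demonstrator's support, but pushes the first coordinate further out the more it has already drifted, giving $x_{t+1}=2x_t+\epsilon$. You instead build the factor $2$ into the demonstrator's own closed loop ($a=s$ gives $s_{t+1}=2s_t$) and take the learner to be a uniform offset.

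Your version is simpler (one-dimensional and linear) and proves something stronger on the learner side. Your $\pihat$ is $\epsilon$-close to $\pidemo$ at \emph{every} state, so even a sup-norm accuracy guarantee does not prevent $2^H$ compounding. The paper's version buys a stable demonstrator, so the blow-up is attributable purely to the learner leaving the demonstrator's support and extrapolating badly there. That is the distribution-shift story the paper tells around \Cref{thm:delayed_upper_bound}. In your instance the demonstrator itself doubles any perturbation (e.g.\ of the initial state), so the exponential factor is partly a property of the expert--environment pair.

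Your ``main obstacle'' is settled in the paper by convention. The appendix drops termination on success, runs exactly $H$ steps, and sums the unclipped reward $r(s)=1-|[s]_1|$, which is unbounded below. This gives the gap $\epsilon(2^H-H-1)$ for every $\epsilon>0$, with no time-augmented state. Your clipped $[0,1]$-valued reward respects the preliminaries but, as you note, forces the restriction $\epsilon\lesssim 2^{-H}$. Using $r(s)=1-|s|$ summed over steps in your construction would remove that restriction.
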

\Cref{thm:lb_markov} shows that, if $\pihat$ satisfies a standard bound on supervised learning error, then this error can compound exponentially in horizon, as $2^H$. Please see \Cref{sec:proofs} for a proof of this result. We next show that this exponential dependence can be reduced by playing an action-chunked policy, but that it can \emph{also} be reduced by simply playing a delayed policy. 
\begin{theorem}\label{thm:delayed_upper_bound}
    Assume that for all $t \in [H]$ and some $n \ge 0$: 
    \begin{align}\label{eq:delayed_upper_bound_condition}
        \Exp^{\pidemo}[ \max_{i \in [n]} \wass(\pidemo(s_{t+i-1}), [\pihat_n(s_{t})]_i)] \le \epsilon.
    \end{align}
   for $[\pihat_n(s_{t})]_i$ the $i$th step in the action chunk. Then, for $k<n$:
    \begin{align*}
   \resizebox{\textwidth}{!}{
        $\cJ(\pidemo) - \cJ(\pihat_n^k) \le \cO((k+1)^{H/k} \cdot \epsilon) \quad \text{and} \quad  \cJ(\pidemo) - \cJ(\pidelay^k[\pihat_n]) \le \cO((k+1)^{H/k} \cdot \epsilon).$
    }
    \end{align*}
\end{theorem}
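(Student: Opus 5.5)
We prove both bounds with a coupling argument. The demonstrator trajectory $(s_t,a_t)$ and the learner trajectory $(\hat s_t,\hat a_t)$ start from the same initial state $s_1=\hat s_1\sim P_0$. We track the expected state deviation $\Delta_t:=\Exp[\ds(s_t,\hat s_t)]$. Since the reward is 1-Lipschitz in the state and the demonstrator succeeds, $\cJ(\pidemo)-\cJ(\pihat)$ is at most a constant times $\sum_t \Delta_t$ (or $H\max_t\Delta_t$; the polynomial factor is absorbed since the constant is inside $\cO(\cdot)$). So it suffices to show $\Delta_t \le \cO((k+1)^{t/k}\epsilon)$. At each step we couple the learner's action with the demonstrator's action through an optimal $W_1$ coupling. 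We use deterministic, 1-Lipschitz dynamics: $\ds(s_{t+1},\hat s_{t+1}) \le \ds(s_t,\hat s_t)+\da(a_t,\hat a_t)$.

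\textbf{Action-chunked policy $\pihat_n^k$.} We analyze one chunk, queried at a step $t$ that is a multiple of $k$, and executed for $i\in[k]$ (valid since $k<n$). We split the action error into two terms using the triangle inequality for $W_1$ and the Lipschitzness of $\pihat_n$ in its conditioning state:
\[
\da(a_{t+i-1},\hat a_{t+i-1}) \lesssim \wass\bigl(\pidemo(s_{t+i-1}),[\pihat_n(s_t)]_i\bigr)+\ds(s_t,\hat s_t).
\]
The first term is the error of predicting the demonstrator from its own (on-distribution) state $s_t$. Its expectation is bounded by \eqref{eq:delayed_upper_bound_condition}; the $\max_{i}$ inside the expectation lets us control all $i$ simultaneously under a single coupling. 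The second term is the mismatch between the states on which the two policies condition. Crucially, it is \emph{frozen} at $\ds(s_t,\hat s_t)$ for the entire chunk, instead of growing with the current deviation. Summing over the $k$ steps of the chunk gives
\[
\Delta_{t+k}\le \Delta_t + k(\Delta_t+\epsilon) = (k+1)\Delta_t + k\epsilon.
\]
Unrolling this recursion over $H/k$ chunks yields $\Delta_t \le \cO((k+1)^{H/k}\epsilon)$. Intermediate steps within a chunk obey the same bound up to a factor $(k+1)$, which is a constant absorbed by $\cO$.

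\textbf{Delayed policy $\pidelay^k[\pihat_n]$.} At step $t$ the learner plays $[\pihat_n(\hat s_{t-k})]_{k+1}$; this index is at most $n$ because $k<n$. The same decomposition gives
\[
\da(a_t,\hat a_t)\le \epsilon_t + \ds(s_{t-k},\hat s_{t-k}),
\]
where $\epsilon_t$ has expectation at most $\epsilon$ by \eqref{eq:delayed_upper_bound_condition} applied at time $t-k$. Hence $\Delta_{t+1}\le \Delta_t+\Delta_{t-k}+\epsilon$, a delay-difference recursion. We handle it in blocks of length $k$. Let $M_j:=\max_{t\le jk}\Delta_t$. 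Each step in block $j+1$ adds at most $M_j+\epsilon$, because the lagged index $t-k$ always lies in an earlier block. Therefore $M_{j+1}\le M_j+k(M_j+\epsilon)=(k+1)M_j+k\epsilon$, the same recursion as in the chunked case, and the same $(k+1)^{H/k}\epsilon$ bound follows. For the first $k$ steps, where $t-k<1$, we interpret the conditioning observation as the initial one. The two trajectories agree there, so these steps contribute only $\epsilon$ each.

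\textbf{Main obstacle.} The delicate step is justifying the action coupling. Condition \eqref{eq:delayed_upper_bound_condition} is an expectation under the \emph{demonstrator's} state distribution, whereas the learner conditions on its own state. We handle this by always evaluating the learner's policy at the demonstrator's state as an intermediate point. This uses Lipschitzness of the policy in the state, which gives $W_1(\pihat(x),\pihat(y))\le \ds(x,y)$, together with gluing couplings sequentially so that randomness is shared across time.

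For the chunked policy, the $i$-th action in the chunk is correlated with earlier actions in the same chunk. We take the coupling at time $t$ to be an optimal $W_1$ coupling of the joint chunk distribution with the demonstrator's action sequence. Its per-coordinate cost is bounded via the $\max_i$ in \eqref{eq:delayed_upper_bound_condition}. If the demonstrator is stochastic, we instead bound each coordinate's marginal $W_1$ separately and sum them; this only changes constants.
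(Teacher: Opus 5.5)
Your route is the same as the paper's. You couple the two trajectories and pass the action error through the learner's prediction at the demonstrator's state (triangle inequality for $\wass$ plus Lipschitzness of $\pihat_n$). You then unroll $\Delta_{t+1}\le\Delta_t+\Delta_{t-k}+\epsilon$, resp.\ $\Delta_{t+k}\le(k+1)\Delta_t+k\epsilon$, in blocks of length $k$; your block maximum $M_j$ plays the role of the paper's monotone majorant $\deltabar_t$. The delayed half is sound, even for a stochastic demonstrator. Since $\pidelay^k[\pihat_n]$ draws a fresh chunk at every step, its action law given the past is $[\pihat_n(\hat s_{t-k})]_{k+1}$, so gluing stepwise conditional optimal couplings is legitimate. (Minor: $H\max_t\Delta_t$ is not absorbed by $\cO(\cdot)$, because $H$ is not a constant. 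Sum the $\Delta_t$ geometrically, as the paper does; that costs only a factor polynomial in $k$.)

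The gap is in your last paragraph: the chunked policy with a stochastic demonstrator. Neither of your proposed fixes works. Condition \eqref{eq:delayed_upper_bound_condition} and the Lipschitz assumption control only the marginals $[\pihat_n(s)]_i$. But $\hat s_{t+k}$ is a function of the whole open-loop chunk, so its law depends on the chunk's joint law. An optimal joint coupling can have per-coordinate costs well above the marginal $\wass$ distances. Bounding the marginals separately and summing does not define a coupling of trajectories at all.

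In fact, the chunked bound fails in that regime. Work on $\mathbb{R}$ with $s_{t+1}=s_t+a_t$ and $s_1=0$. Let the demonstrator play an independent uniform sign $\pm1$ at every step. Let the state-independent learner output $(b,b,b)$, with $b$ a uniform sign, and take $n=3$, $k=2$. Every Lipschitz assumption holds and all marginals coincide, so $\epsilon=0$. Now take $H=3$ and $r(s)=\max\{0,1-|s|/H\}$. Then $|s_2|=|\hat s_2|=1$, but $\Exp|s_3|=1$ while $|\hat s_3|=2$, so $\cJ(\pidemo)-\cJ(\pihat_n^k)=1/3>0$. (The delayed policy resamples at every step and matches the demonstrator exactly here.)

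What your argument does establish is the chunked bound for a \emph{deterministic} demonstrator. In that case $\wass(\delta_a,\mu)=\Exp_{x\sim\mu}[\da(a,x)]$, so the per-step expected cost depends only on the marginal $[\pihat_n(\hat s_t)]_i$. The coordinatewise triangle inequality then gives $\Delta_{t+i}\le\Delta_{t+i-1}+\Delta_t+\epsilon$ within a chunk. The paper's own chunking argument is only a sketch and tacitly uses the same per-step marginal reasoning. It therefore needs the same restriction, or a joint-chunk version of both the condition and the Lipschitz assumption. You should state that restriction rather than assert that stochasticity only changes constants.
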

\Cref{thm:delayed_upper_bound} shows that, if the action-chunked policy is able to fit the demonstrator effectively over the entire chunk, then we can bound the compounding error by $\cO((k+1)^{H/k})$\footnote{We remark that, in the case of absolute position control where the dynamics can be approximately modeled by $s_{t+1}=P(s_t,a_t)\approx a_t$, rates on the order of $\Omega(H^2 \cdot \epsilon)$ and $\cO(H^2/k \cdot \epsilon)$ can be obtained for, respectively, the Markovian learner and the action chunking/delayed learners, using similar proof techniques.}. In particular, if $k = c \cdot H$ for some constant fraction $c$, this scales polynomially in $H$ rather than exponentially, offering an exponential improvement over a Markovian policy.
Notably, however, the delayed policy achieves the exact same reduction in compounding error. Furthermore, in \Cref{sec:proofs} we show that this is tight---action-chunked policies must incur compounding error that scales at least as $\Omega((k+1)^{H/k})$, showing that, in the setting of deterministic, smooth dynamics, action chunking does not exhibit any benefits over simply playing a delayed policy in terms of compounding error. \sectionend

These results show that, while action chunking can mitigate compounding error, the primary mechanism is simply because it conditions on previous observations, rather than because it reduces horizon. Intuitively, this arises because earlier states are likely to have compounded \emph{less} error---they are more in-distribution---than later states, if the initial state distributions for train and test are identical.
Thus, if we can effectively model the demonstrator's action at a future state given only a delayed observation, we would expect this prediction to be \emph{more accurate} than if we attempt this inference conditioned on the current state.
Notably, our experimental results in \Cref{fig:val_loss_libero} show that we \emph{can} model the demonstrator's action at a future state effectively, suggesting that the key criteria of \Cref{thm:delayed_upper_bound} hold in practice. While these results rely on the assumption that dynamics are smooth, we show in \Cref{apx: smooth dynamics} that this assumption holds in many robotic control settings of interest.

\begin{conclusionbox}{Takeaway 2: Action chunking reduces compounding error by predicting actions based on past observations}
Action chunking reduces compounding error but, in many settings, this results not from horizon reduction, but from action-chunked policies predicting actions based on delayed observations. We find that, as a result, we can often replicate the success of action chunking simply by training a policy to predict actions based on a delayed observations.
\end{conclusionbox}

\subsection{Hypothesis 3: Representation Learning}\label{sec:hypoth_inductive}

The final hypothesis we evaluate is whether action chunking has representational benefits. 
A commonly-observed phenomenon is that training an action chunk $k$ policy and only executing the first $n < k$ actions from the chunk still performs better than a Markovian policy, even for small $n$. This is commonly attributed to representational benefits of training with action chunking. Here we evaluate whether this effect can also contribute to the superior performance of action chunking. \loose

\refstepcounter{experiment}
\textbf{Experiment \theexperiment: Delayed policies match the representational benefits of action chunking.}
To test this hypothesis, we compare the performance of $\pibctwenty^1$ to $\pibc_1$---executing a single action at each step, but training on action chunks vs. single actions---and also compare this to $\pidelay^5[\pibctwenty]$ and $\widehat{\pi}_{\mathrm{delay}}^{5}$, where $\widehat{\pi}_{\mathrm{delay}}^{5}$ denotes the policy trained directly to predict $a_t \mid o_{t-5}$ (while  $\pidelay^5[\pibctwenty]$ is the delay policy induced by  $\pibctwenty$). We plot aggregate performance on \texttt{Libero-90} in \Cref{fig:libero_repr_learn}. We find that, while action chunking \emph{does} provide representational benefits when compared to a single-step policy, it \emph{does not} provide meaningful benefits when compared to training a delayed policy. \sectionend

\begin{conclusionbox}{Takeaway 3: Action chunking only gives representational benefits for short horizons}
Action chunking provides representation-learning benefits only when predicting the first few steps in the action chunk, but not for predicting actions with more substantial delays. Thus, the benefits of action chunking can again be fully replicated with a delayed policy.
\end{conclusionbox}

\section{Action-Chunked Policies as Implicit Ensembles}\label{sec:ac_ensembles}

\begin{figure*}[t]
    \centering
\begin{minipage}[t]{0.29\textwidth}
    \centering

    \makebox[\linewidth][c]{%
        \rlap{%
            \hspace{-0.2cm}%
            \raisebox{+0.2cm}{%
                \includegraphics[width=\linewidth]{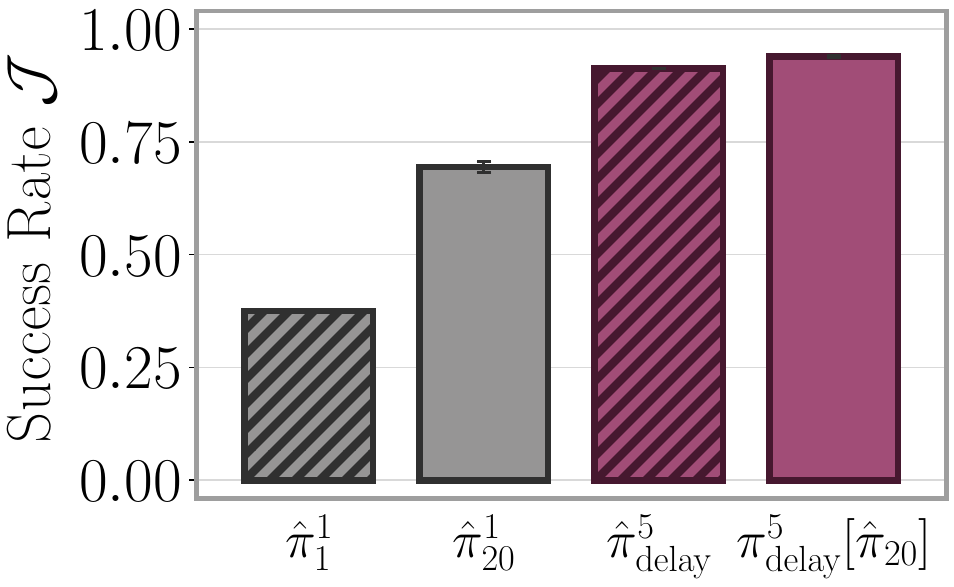}%
            }%
        }%
        \phantom{%
             \includegraphics[width=\linewidth]{images/libero_repr_learn.pdf}%
        }%
    }

    \vspace{-0cm}
    \caption{Success of Markovian and delayed policies trained with and without action chunks, aggregated over \texttt{Libero-90}.}
    \label{fig:libero_repr_learn}
\end{minipage}
    \hfill
    \begin{minipage}[t]{0.32\textwidth}
        \centering
        \includegraphics[width=0.89\linewidth]{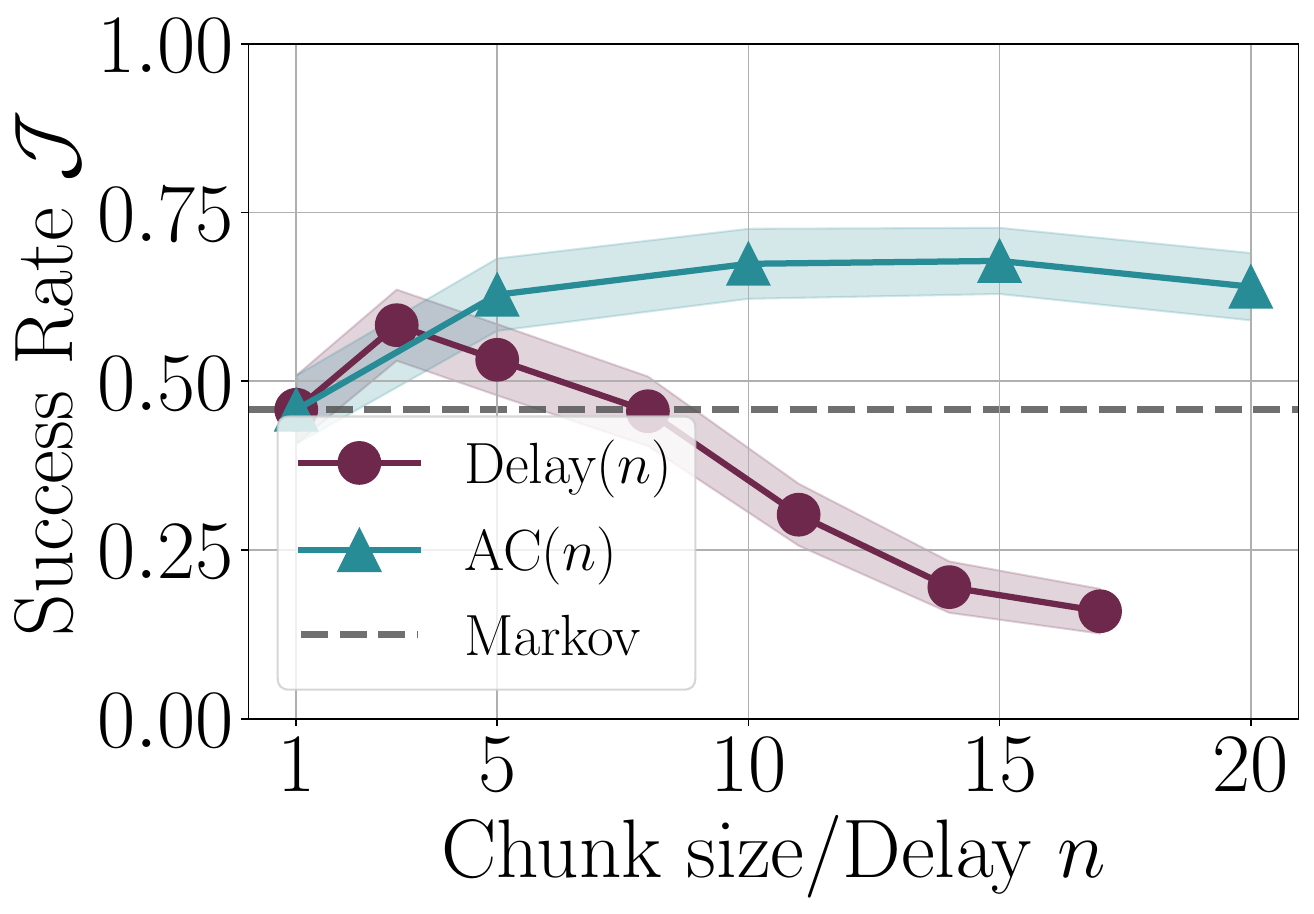}
        \caption{Average task success of delayed policies and action chunking policies, aggregated over all \texttt{Robomimic} tasks.}
        \label{fig:success_robomimic}
    \end{minipage}
    \hfill
    \begin{minipage}[t]{0.32\textwidth}
      \centering
           \includegraphics[width=0.89\linewidth]{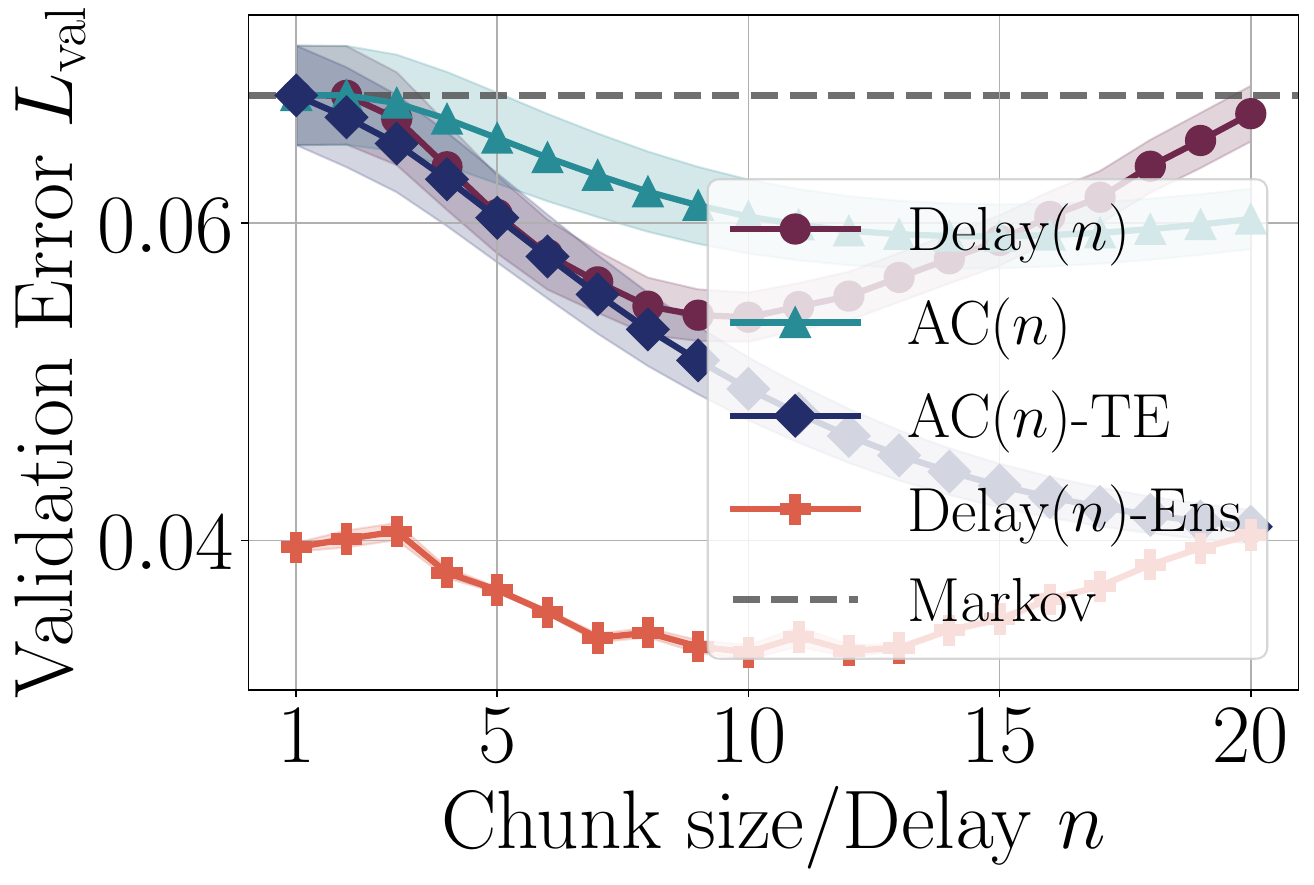}
    \caption{Validation error of Delay$(n)$, AC$(n)$, AC$(n)$ temporal ensemble, Delay$(n)$ ensemble, over \texttt{Robomimic} tasks.}
    \label{fig:robomimic_val}
    \end{minipage}
\vspace{-1em}
\end{figure*}

The preceding results suggest that non-Markovian expressivity and reduction in compounding error contribute to the success of action chunking but that, on \texttt{Libero}, these effects can be fully captured with only a delayed policy---action chunking is not required.
We next seek to understand whether these results generalize beyond \texttt{Libero}, and consider an additional benchmark, the \texttt{Robomimic} benchmark \cite{robomimic2021}. In particular, we consider the four most challenging from \texttt{Robomimic}: \texttt{Can}, \texttt{Square}, \texttt{Transport}, and \texttt{Tool Hang}, and we report results training on the Proficient-Human demonstrations (please see \Cref{apx: robomimic mh} for results on Multi-Human; our results hold identically there).
In \Cref{fig:success_robomimic}, we compare the performance of action-chunked policies to that of delayed policies aggregated across these \texttt{Robomimic} tasks, and in columns 2 and 3 of \Cref{table:final_success} provide numerical success rates for the best-case action-chunked policy compared to the best-case delayed policy. We see that, while delayed policies do improve on single-step policies significantly in \texttt{Robomimic}, they fail to match the performance of action chunking in general, contrasting our results on \texttt{Libero}.

In this section we seek to offer an explanation for why this is the case. Our key insight is that, in addition to the aforementioned benefits of action chunking, action-chunked policies act as \emph{implicit ensembles}. When training an action-chunked policy, for a given action $a_t$ in our training data, we learn the relationships $a_t \mid o_t$, $a_t \mid o_{t-1}$, $\ldots$, $a_t \mid o_{t-k+1}$. In other words, rather than just fitting a \emph{single} $a_t \mid o_{t-d}$, we fit what, effectively, is an \emph{ensemble} of predictors, each learning a different temporal relationship. By learning the relationships $a_t \mid o_t$, $a_t \mid o_{t-1}$, $a_t \mid o_{t-2}$, $\ldots$, $a_t \mid o_{t-k+1}$, an action-chunked policy learns to predict demonstrator actions based on subsets of the true ``feature'' $(o_t,o_{t-1},\dotsc,o_1)$. Conceptually, this is often how ensemble-based methods such as random forests operate---aggregating predictors trained on different features---and it is known that such approaches can achieve lower generalization error than the average error across ensemble members, leading to improved performance \cite{krogh1994neural,ho1998random,breiman2001random}. We conjecture that, by aggregating action predictions over time through interaction with the environment, action chunking implicitly acts as an ensemble, inheriting such benefits. Here we investigate experimentally to what extent this holds in practice.\loose

\refstepcounter{experiment}
\textbf{Experiment \theexperiment: Temporal ensembles enable more accurate action prediction than delayed policies.}
We first evaluate the validation error on \texttt{Robomimic}. We consider an identical setup to that considered in \Cref{fig:val_loss_libero}, but add in two additional methods. First, we consider the temporal ensemble induced by $\pihat_k^n$. That is, for some sequence of observations $(o_1, \ldots, o_k)$, we compute $a^i \leftarrow [\pihat_k(o_{k-i+1})]_i$ for $i\in [n]$; thus, $\abar^n \leftarrow \frac{1}{n} \sum_{i=1}^n a^i$, so that $\abar^n$ is the prediction induced by combining the actions produced by the previous $n$ predictions induced by $\pihat_k$, the \emph{induced temporal ensemble} (which we refer to as AC$(n)$-TE)\footnote{We note that existing works such as ACT \cite{zhao2023actionchunking} propose similar temporal ensembles. The primary difference between our instantiation of a temporal ensemble and existing approaches is that we combine predictions linearly, while previous works apply an exponential weighting, significantly downweighting the contributions of more recent timesteps, and mitigating the ensembling effects.}. We also train an \emph{actual ensemble}, training policies $\{ \pibc_{20,i} \}_{i=1}^m$, where each $\pibc_{20,i}$ is trained on the same dataset $\frakDtrain$, but from different random initializations. To aggregate the ensemble predictions, we average the actions predicted by each ensemble member. In particular, here we only consider the delayed ensemble, that ensembles $\pidelay^n[\pibc_{20,i}]$ (which we denote as Delay$(n+1)$-Ens).

We plot our results in \Cref{fig:robomimic_val}. As can be seen, the temporal ensemble induced by the action-chunked policy achieves lower validation loss than that induced by the standard action-chunked policy, and nearly as low as that of the true ensemble. This suggests that the predictive properties of the ensemble induced by an action-chunked policy are indeed similar to that of a true ensemble, and significantly outperform a single delayed policy. We also note that, for the correct choice of delay, the delayed policy achieves lower validation loss than that of the action-chunked policy, suggesting that non-Markovian expressivity is not the key limitation of the delayed policies in \texttt{Robomimic}. 
Together, this suggests that the implicit ensemble induced by $\pibctwenty$ significantly improves on the predictive ability of any single delayed policy, and comes close to that of a true ensemble. \sectionend

\refstepcounter{experiment}
\textbf{Experiment \theexperiment: Randomized delay deployment of action chunk policies.}
If the cumulative benefits of action chunking are its ability to (a) express non-Markovian demonstrators (\Cref{sec:hypoth_expressivity}), (b) reduce compounding error by predicting on past observations (\Cref{sec:hypoth_horizon}), and (c) instantiate an implicit ensemble, then the key shortcoming of $\pidelay^d[\pibctwenty]$ is simply that it only leverages one of the temporal relationships learned by $\pibctwenty$ (in particular $a_t \mid o_{t-d}$), while typical deployment of action chunking leverages all learned relationships. Thus, if we could enable prediction based on past observations while leveraging all learned temporal relationships in a different manner than standard action chunking deployment, we would expect to achieve similar performance as action chunking.

To test this, we consider the following procedure. At each step $t$ first sample $i \sim \mathrm{unif}(\{0,1,...,n-1\})$, then sample $a_t \sim \pidelay^i[\pihat_k](o_{t-i})$. That is, we randomize over the delay of the possible delayed policies induced by $\pihat_k^n$ at each step. We state the results for this approach in \Cref{table:final_success} (we denote this approach as the ``randomized delay ensemble'', or AC$(n)$-RDE). We see that this approach (nearly) matches or outperforms the performance of action chunking across all settings, \emph{even when the delayed policies perform worse than the action-chunked policy}.
Furthermore, the induced random delay ensemble also exceeds the performance of the temporal ensemble achieved by averaging the predictions (AC$(n)$-TE), suggesting that the ensemble randomization is critical to achieving effective performances. \sectionend

\vspace{0.3cm}
\begin{conclusionbox}{Takeaway 4: Action chunking implicitly instantiates an ensemble}
Action chunking instantiates an implicit ensemble of delayed policies, learning all temporal relationships $a_t \mid o_{t-i+1}$ for $i \in [k]$. This provides significant ``ensemble-like'' benefits, improving action prediction ability and success rate. Combined with the other two hypotheses, this almost fully explains the benefits of action chunking across \texttt{Libero-90} and \texttt{Robomimic}.
\end{conclusionbox}

\begin{table*}[t!]
\centering
\scalebox{0.75}
{
\begin{tabular}{l!{\vrule width 1pt}ccc|cc}
\toprule
\texttt{Task}  & Markovian & AC$(n)$ & Delay$(n)$ & AC$(n)$-RDE & AC$(n)$-TE \\
\midrule

\texttt{Libero-90}  & $68.9$ {\tiny $\pm {0.6}$} & $89.2$ {\tiny $\pm {0.2}$} & $\mathbf{94.0}$ {\tiny $\pm {0.1}$} & $93.6$ {\tiny $\pm {0.1}$} & $92.7$ {\tiny $\pm {0.1}$}   \\

\texttt{Libero-10}  & $19.8$ {\tiny $\pm {1.5}$} & $\mathbf{88.7}$ {\tiny $\pm {0.4}$} & ${86.8}$ {\tiny $\pm {0.4}$} & $\mathbf{88.5}$ {\tiny $\pm {0.4}$} &  $86.0$ {\tiny $\pm {0.5}$}  \\

\texttt{Libero-Spatial}  & $58.1$ {\tiny $\pm {0.9}$} & $91.6$ {\tiny $\pm {0.4}$} & $\mathbf{92.0}$ {\tiny $\pm {0.5}$} & $\mathbf{92.7}$ {\tiny $\pm {0.3}$} & $91.1$ {\tiny $\pm {0.3}$} \\

\texttt{Libero-Goal}  & $62.7$ {\tiny $\pm {1.0}$} & $\mathbf{96.5}$ {\tiny $\pm {0.2}$} & $\mathbf{96.3}$ {\tiny $\pm {0.3}$} & $\mathbf{96.3}$ {\tiny $\pm {0.3}$} & $\mathbf{96.7}$ {\tiny $\pm {0.2}$}\\

\texttt{Libero-Object}  & $59.8$ {\tiny $\pm {2.3}$} & $\mathbf{98.0}$ {\tiny $\pm {0.4}$} & $\mathbf{97.4}$ {\tiny $\pm {0.3}$} & $\mathbf{97.3}$ {\tiny $\pm {0.5}$} & $\mathbf{97.6}$ {\tiny $\pm {0.4}$}\\

\midrule
\texttt{Robomimic Can PH}  & $83.7$ {\tiny $\pm {0.4}$} & $\mathbf{97.2}$ {\tiny $\pm {0.3}$} & $93.5$ {\tiny $\pm {0.4}$} & $96.7$ {\tiny $\pm {0.2}$} & $96.2$ {\tiny $\pm {0.3}$}  \\

\texttt{Robomimic Square PH}  & $69.0$ {\tiny $\pm {0.8}$} & $\mathbf{85.4}$ {\tiny $\pm {0.5}$} & $80.8$ {\tiny $\pm {0.6}$} & $82.4$ {\tiny $\pm {0.6}$} & $80.6$ {\tiny $\pm {0.5}$}   \\

\texttt{Robomimic Transport PH}  & $3.3$ {\tiny $\pm {0.3}$} & $\mathbf{12.6}$ {\tiny $\pm {0.5}$} & $7.9$ {\tiny $\pm {0.5}$} & $\mathbf{12.1}$ {\tiny $\pm {0.5}$} & $\mathbf{12.2}$ {\tiny $\pm {0.6}$}   \\

\texttt{Robomimic Tool Hang PH}  & $28.0$ {\tiny $\pm {0.8}$} & $\mathbf{75.2}$ {\tiny $\pm {0.5}$} & $51.6$ {\tiny $\pm {0.9}$} & $71.8$ {\tiny $\pm {0.8}$} & $42.2$ {\tiny $\pm {1.0}$}   \\

\bottomrule
\end{tabular}
}
\caption{
Comparison of success rates across \texttt{Libero-90} and \texttt{Robomimic}. We see that action chunking significantly outperforms delayed policies in several settings, but that randomized delay ensembles perform comparably to action chunking in all cases.
}
\vspace{-1.0em}
\label{table:final_success}
\end{table*}

\section{Randomized Delay Ensembles Match the Performance of Action Chunking in Real-World Robotic Control}\label{sec:results_real}

We next test whether our conclusions hold in real-world robotic manipulation settings.
We utilize the Franka Emika robot arm and consider three tasks, illustrated in Figures \ref{fig:carrot}-\ref{fig:sushi}. Specifically, in the first task the goal is to pick up a carrot and put it in a bowl (``carrot in bowl''), in the second it is to remove the bread from the toaster and put it on the table (``bread toaster''), and in the final task we must pick up the sushi and put it in a cup (``sushi in cup''). 
We adopt delta joint position control and use control frequency 15Hz. See Appendix \ref{apx:details real world} for more details on real-world experiments.

For each task, we collect 50 human demonstrations and train a diffusion policy with action chunk length $k= 20$ on the demonstrations, $\pibctwenty$. We consider several different sampling strategies. First, we deploy the policy as an action chunk 10, $\pibctwenty^{10}$, and action chunk 1 (Markovian) policy, $\pibctwenty^{1}$. Next, motivated by our previous results, we test the induced delay 5 policy, $\pidelay^{5}[\pibctwenty]$ as well as the randomized delay ensemble policy introduced in the previous section, AC(10)-RDE. Unlike the simulated setting where pausing to compute the next action does not lead to changes in the environment, in the real world, pausing to compute the next action can lead to non-negligible changes in the state---in practice, the robot is not completely static even if no action is being commanded, so the state at the start and end of policy inference can be different, leading to poor policy performance. To mitigate the effect of this, we interleave policy inference and operation and start computing the action for the next step before the current action has finished executing. While this eliminates the need for pauses, it does introduce a small delay---in practice, then, all approaches here are run with an additional delay of 1 timestep.

\begin{figure*}[t]
    \centering
    \includegraphics[scale=0.4]{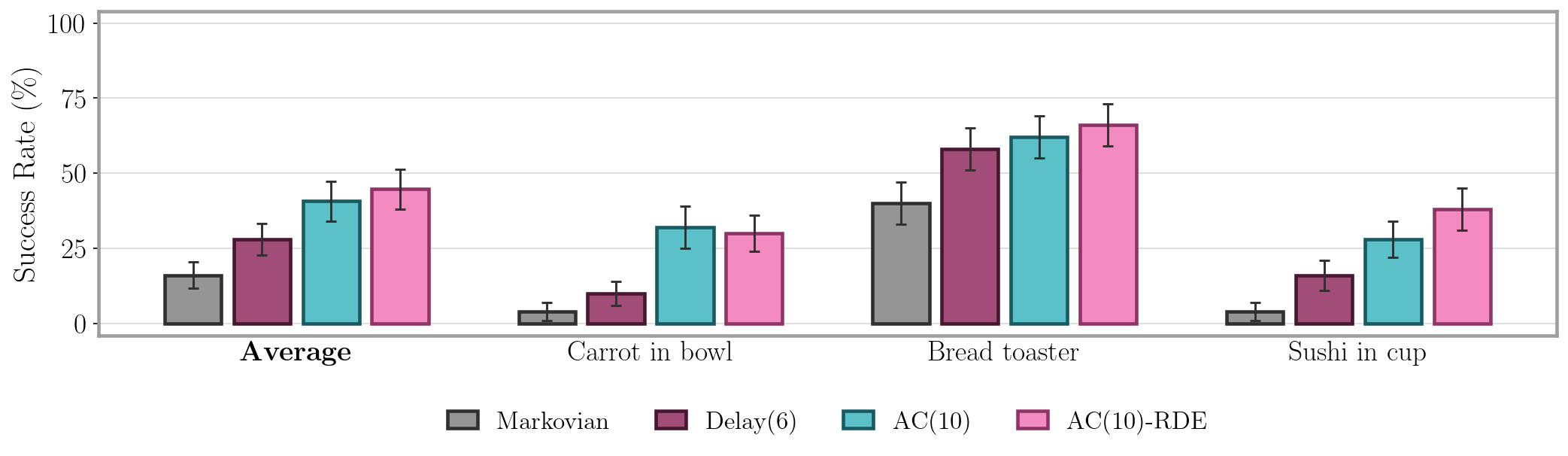}
    \caption{
\footnotesize
Comparison of success rates across the three real-world tasks considered. While action chunking significantly outperforms Markovian policies, delayed policies capture much of the performance of action chunking, and randomized delays (RDE) fully captures the performance of action chunking.
}
\label{fig:real}
\end{figure*}

\begin{wrapfigure}{r}{0.52\textwidth}
    \centering
    \vspace{-0.5em}
    \begin{subfigure}[t]{0.32\linewidth}
        \centering
        \includegraphics[width=\linewidth]{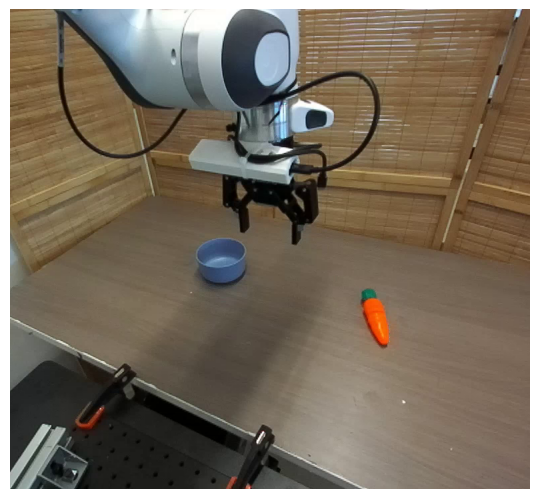}
        \caption{Carrot in bowl.}
        \label{fig:carrot}
    \end{subfigure}
    \hfill
    \begin{subfigure}[t]{0.32\linewidth}
        \centering
        \includegraphics[width=\linewidth]{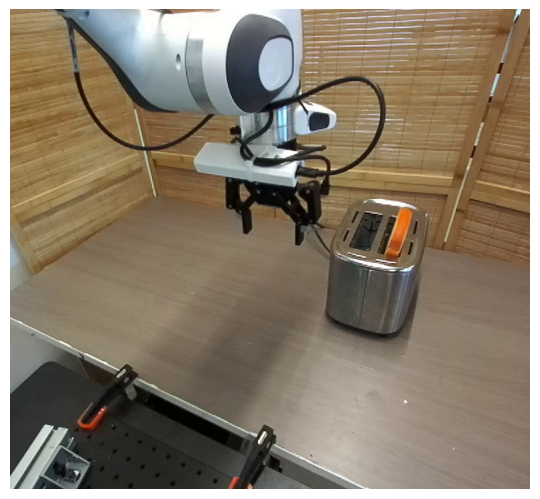}
        \caption{Bread toaster.}
        \label{fig:bread}
    \end{subfigure}
    \hfill
    \begin{subfigure}[t]{0.32\linewidth}
        \centering
        \includegraphics[width=\linewidth]{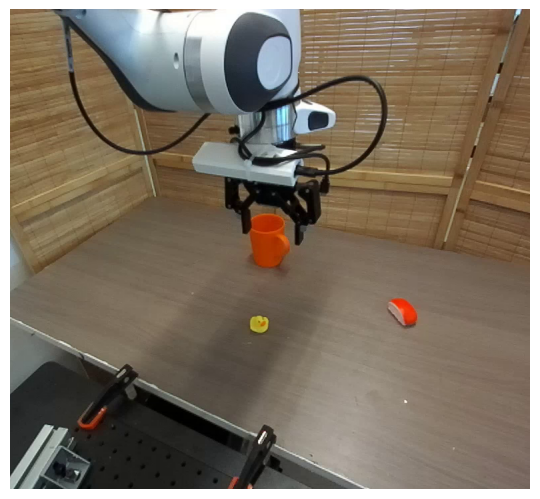}
        \caption{Sushi in cup.}
        \label{fig:sushi}
    \end{subfigure}

    \caption{
    Real-world evaluation tasks.
    }
    \label{fig:real_world}
\end{wrapfigure}
Our results are given in \Cref{fig:real}, where each bar corresponds to the average success rate rolling out a policy 50 times. We see that the Markovian policies perform poorly and are significantly outperformed by action chunking policies. The delayed policy, while outperforming the Markovian policy, cannot, in general, match the performance of the action chunked policy. However, the randomized delay ensemble matches and even exceeds the performance of the action-chunking policy. These results perfectly replicate our simulated results. While simple delayed policies are able to capture non-Markovian behavior and mitigate compounding error in a way that enables significant improvements over Markovian policies, they do not capture the implicit ensembling effect of action chunking. However, by using the randomized delay ensemble induced by the action-chunked policy, we can match or exceed the performance of action chunking.\loose

\begin{conclusionbox}{Takeaway 5: Action chunking is not, in general, necessary for effective real-world robotic control}
For certain real-world robotic control settings, even when Markovian policies are insufficient, we can match the performance of action chunking with randomized delay ensembles---playing action chunks is not necessary as long as we predict based on past observations, and replicate the implicit ensembling effect of action chunking.
\end{conclusionbox}

\section{Going Beyond Action Chunking with Ensembled BC Policies}\label{sec:ensembles}

Our previous results suggest that the key benefits of action chunking are (a) its ability to condition on delayed observations and (b) its implicit ensembling behavior. Here, we consider whether these insights motivate more  effective approaches to BC for robotic control. While a full investigation of this is beyond the scope of this work (see the following section for discussion of other promising directions), here we make a first attempt at applying our insights to improve BC performance.

Our starting point is \Cref{fig:robomimic_val}, where we see that a true ensemble of delayed policies achieves even lower action prediction loss than the temporal ensemble induced by the action-chunked policy.
Motivated by this, we hope to amplify these benefits of action chunking by deploying an actual ensemble---that is, a set of policies $\{ \pibc_{20,i} \}_{i=1}^m$ trained independently on $\frakDtrain$, as described above. If the benefits of action chunking are indeed what our analysis suggests, we would hope that by training an explicit ensemble (rather than the implicit ensemble induced by an action-chunked policy) we might achieve even better performance.

\begin{table*}[t!]
\centering
\scalebox{0.75}
{
\begin{tabular}{l!{\vrule width 1pt}ccc|ccc}
\toprule
\texttt{Task}  & Markovian & AC$(n)$ & Delay$(n)$ & AC$(n)$-Ens & Delay$(n)$-Ens & AC$(n)$-RDE-Ens\\
\midrule

\texttt{Libero-90}  & $68.9$ {\tiny $\pm {0.6}$} & $89.2$ {\tiny $\pm {0.2}$} & $94.0$ {\tiny $\pm {0.1}$} & ${94.1}$ {\tiny $\pm {0.1}$} & $\mathbf{95.0}$ {\tiny $\pm {0.1}$} & ${94.6}$ {\tiny $\pm {0.0}$}  \\

\texttt{Libero-10}  & $19.8$ {\tiny $\pm {1.5}$} & ${88.7}$ {\tiny $\pm {0.4}$} & ${86.8}$ {\tiny $\pm {0.4}$} & $\mathbf{90.6}$ {\tiny $\pm {0.4}$} &  $\mathbf{90.2}$ {\tiny $\pm {0.5}$}  & ${89.5}$ {\tiny $\pm {0.5}$} \\

\texttt{Libero-Spatial}  & $58.1$ {\tiny $\pm {0.9}$} & $91.6$ {\tiny $\pm {0.4}$} & ${92.0}$ {\tiny $\pm {0.5}$} & $\mathbf{95.4}$ {\tiny $\pm {0.4}$} & $\mathbf{94.8}$ {\tiny $\pm {0.4}$}  & $\mathbf{94.7}$ {\tiny $\pm {0.6}$}\\

\texttt{Libero-Goal}  & $62.7$ {\tiny $\pm {1.0}$} & ${96.5}$ {\tiny $\pm {0.2}$} & ${96.3}$ {\tiny $\pm {0.3}$} & $\mathbf{98.8}$ {\tiny $\pm {0.3}$} & ${97.5}$ {\tiny $\pm {0.5}$} & ${97.4}$ {\tiny $\pm {0.3}$}\\

\texttt{Libero-Object}  & $59.8$ {\tiny $\pm {2.3}$} & $\mathbf{98.0}$ {\tiny $\pm {0.4}$} & $\mathbf{97.4}$ {\tiny $\pm {0.3}$} & $\mathbf{98.8}$ {\tiny $\pm {0.0}$} & $\mathbf{98.2}$ {\tiny $\pm {0.4}$} & $\mathbf{98.3}$ {\tiny $\pm {0.4}$}\\

\midrule

\texttt{Robomimic Can PH}  & $83.7$ {\tiny $\pm {0.4}$} & $97.2$ {\tiny $\pm {0.3}$} & $93.5$ {\tiny $\pm {0.4}$} & $\mathbf{98.5}$ {\tiny $\pm {0.1}$} & ${97.7}$ {\tiny $\pm {0.4}$}   & $\mathbf{98.7}$ {\tiny $\pm {0.3}$}  \\

\texttt{Robomimic Square PH}  & $69.0$ {\tiny $\pm {0.8}$} & $85.4$ {\tiny $\pm {0.5}$} & $80.8$ {\tiny $\pm {0.6}$} & $\mathbf{88.3}$ {\tiny $\pm {0.3}$} & $\mathbf{87.7}$ {\tiny $\pm {1.1}$}   & ${85.7}$ {\tiny $\pm {0.3}$}  \\

\texttt{Robomimic Transport PH}  & $3.3$ {\tiny $\pm {0.3}$} & $12.6$ {\tiny $\pm {0.5}$} & $7.9$ {\tiny $\pm {0.5}$}  & $\mathbf{41.5}$ {\tiny $\pm {0.4}$} & ${38.9}$ {\tiny $\pm {2.1}$}   & ${38.4}$ {\tiny $\pm {0.2}$}  \\

\texttt{Robomimic Tool Hang PH}  & $28.0$ {\tiny $\pm {0.8}$} & $75.2$ {\tiny $\pm {0.5}$} & $51.6$ {\tiny $\pm {0.9}$}  & $\mathbf{87.6}$ {\tiny $\pm {1.6}$} & $\mathbf{84.9}$ {\tiny $\pm {1.2}$}   & $\mathbf{86.4}$ {\tiny $\pm {0.7}$}  \\

\bottomrule
\end{tabular}
}
\caption{
Comparison of success rates across \texttt{Libero-90} and \texttt{Robomimic} for an explicit ensemble. We see that our explicit ensemble outperforms the implicitly-ensembled AC$(n)$ policies.
}
\vspace{-1.0em}
\label{table:final_success_explicit_ensemble}
\end{table*}

Here we consider the ensemble $\{ \pibc_{20,i} \}_{i=1}^m$ (AC$(n)$-Ens) as well as the induced ensemble $\{ \pidelay^n[\pibc_{20,i}] \}_{i=1}^m$ (Delay$(n+1)$-Ens) and the randomized delay ensemble considered in Experiment 6, but where we now also sample an ensemble member $i \in \mathrm{unif}([m])$ at each step in addition to a delay (AC$(n)$-RDE-Ens). We consider two aggregation approaches for AC$(n)$-Ens and Delay$(n+1)$-Ens: either predicting an action from each ensemble and averaging their predictions, or randomly sampling an ensemble index and simply computing the action from that ensemble member. We provide results for both types of ensembles in \Cref{table:final_success_explicit_ensemble} (in all cases providing results for the best-case delay or action chunk length, and ensemble aggregation approach). We see that, \emph{across all settings, explicit ensembles improve on the performance of action chunking}, and this is true whether we use the full action-chunked ensemble or the ensemble of induced delay policy. Notably, on \texttt{Robomimic Transport}, we see an approximately +30\% boost in performance from using the explicit ensemble over the action-chunked policy.
Not only do these results corroborate our findings on action chunking---we can match the performance of action chunking by using delayed ensembles---they also suggest a promising approach for improving the performance of BC policies.

\section{Conclusion}
\label{sec:conclusion}

In this work, we present an analysis of the mechanisms enabling the effective performance of action chunking. We examine in detail three mechanisms already postulated in existing literature~\cite{chi2023diffusion,simchowitz2025pitfalls,zhang2025actionchunkingexploratorydata}: the ability to express temporally consistent behaviors, a horizon-reduction effect, and more favorable representation learning. We show that existing hypothesis fail to fully capture the benefits of action chunking, and instead demonstrate that non-Markovian expressivity and reduction in compounding error---both captured by simple delayed policies---as well as the implicit ensembling effect of action chunking policies, is able to almost fully explain the success of action chunking. Our results hold on standard simulated benchmarks, as well as real-world robotic manipulation tasks.  We further show that by taking an \textit{explicit} ensemble of delayed policies, it is possible to further increase performance beyond that of action chunking. We believe this work opens up a variety of interesting directions for future work.\loose

\textbf{Going Beyond Action Chunking.}
How can we improve over the performance of action chunking? While our results in \Cref{sec:ensembles} are directly motivated by our observations that action chunking acts as an implicit ensemble, we believe our insights motivate other approaches that could lead to further improvements. As a concrete direction, we note that in \Cref{fig:val_loss_libero}, delays of around 5-15 achieve the lowest validation error. While our results on implicit ensembling suggest that utilizing multiple delays is important, could we adaptively select the delays based on which achieve lowest validation error, and could this improve performance further?

\textbf{Non-Markovian Behavior and History Conditioning.}
Our results on non-Markovian expressivity suggest that this expressivity is a primary benefit of action chunking. While our results show that a simple delayed policy captures much of the necessary non-Markovian behavior, to capture general non-Markovian behavior we may need a fully history conditioned policy, modeling $a_t \mid o_t, o_{t-1}, o_{t-2}, \ldots$. Progress has been made developing history-conditioned policies for robotic control \cite{torne2026mem,mark2026bpp}, yet fully incorporating history-conditioning has proved challenging. Interestingly, our results suggest that history conditioning may not be the full solution. While history conditioning would capture non-Markovian behavior, it is not clear that it captures the reduction in compounding error or implicit ensembling effects induced by action chunking. How can we ensure that we obtain the other benefits of action chunking while incorporating history-conditioning to fully express non-Markovian behavior?

\textbf{The Role of Control Frequency.}
In all our experiments we utilized a control frequency in the range of 15-20 Hz (for simulated experiments, demonstrations were collected at 20 Hz). Experimentally, we found that when utilizing higher control frequencies, for example 50-60 Hz, delayed policies were not able to replicate the success of action chunking, but treating sub-chunks of length 5 as ``actions'' (so that the effective control frequency is in the range of 10-20Hz), performance was significantly improved.
In other words, the temporal consistency of action chunking \emph{does} matter, but only at high frequencies. We believe that this may be due to the frequency at which humans operate. There is evidence that human's visually guided behavior operates at a frequency in the 2-10Hz range \cite{day2000voluntary,susilaradeya2019extrinsic}, 
and, as a result, at a control frequency of 50-60Hz, actions are recorded at a much higher frequency than the human demonstrator is updating their behavior, so there may exist significant temporal correlation between actions that is effectively modeled with action chunking. While this is consistent with our analysis---we can see this as simply another form of non-Markovian behavior---it suggests that there exists a complex relationship between the behaviors expressed by the demonstrator, the control frequency of the policy, and the role of action chunking. We believe that deeper investigation into this interaction could lead to further insights into the role of action chunking.\loose

\textbf{Practical Benefits of Action Chunking.}
As noted in \Cref{sec:results_real}, unlike in simulation where there is no ``pause'' in the environment necessary for policy inference, in the real world pauses for inference can affect policy performance. Fundamentally, such pauses introduce distribution shift between the deployed policy and the demonstrator---the demonstrator has no ``inference pauses'', and if the robot state changes at inference pauses in policy deployment, this could lead to greater compounding error. As such, mitigating pauses due to inference will likely lead to improved performance. Action chunking naturally reduces inference pauses, and recent work has sought to eliminate the need for any inference calls \cite{black2025realtimeexecutionactionchunking}. To what extent does the reduction in inference pauses contribute to the benefits of action chunking in practice?

\clearpage
\acknowledgments{We acknowledge ISCRA for awarding this project access to the LEONARDO supercomputer, owned by the EuroHPC Joint Undertaking, hosted by CINECA (Italy). In addition, we acknowledge support from ONR N00014-25-1-2060.}


\bibliography{refs}  

@misc{zhao2023actionchunking,
      title={Learning Fine-Grained Bimanual Manipulation with Low-Cost Hardware}, 
      author={Tony Z. Zhao and Vikash Kumar and Sergey Levine and Chelsea Finn},
      year={2023},
      eprint={2304.13705},
      archivePrefix={arXiv},
      primaryClass={cs.RO},
      url={https://arxiv.org/abs/2304.13705}, 
}

@inproceedings{ho2020ddpm,
 author = {Ho, Jonathan and Jain, Ajay and Abbeel, Pieter},
 booktitle = {Advances in Neural Information Processing Systems},
 editor = {H. Larochelle and M. Ranzato and R. Hadsell and M.F. Balcan and H. Lin},
 pages = {6840--6851},
 publisher = {Curran Associates, Inc.},
 title = {Denoising Diffusion Probabilistic Models},
 url = {https://proceedings.neurips.cc/paper_files/paper/2020/file/4c5bcfec8584af0d967f1ab10179ca4b-Paper.pdf},
 volume = {33},
 year = {2020}
}

@article{chi2023diffusionpolicy,
author = {Cheng Chi and Zhenjia Xu and Siyuan Feng and Eric Cousineau and Yilun Du and Benjamin Burchfiel and Russ Tedrake and Shuran Song},
title ={Diffusion policy: Visuomotor policy learning via action diffusion},
journal = {The International Journal of Robotics Research},
year = {2023},
}

@misc{black2024pi0,
      title={$\pi_0$: A Vision-Language-Action Flow Model for General Robot Control}, 
      author={Kevin Black and Noah Brown and Danny Driess and Adnan Esmail and Michael Equi and Chelsea Finn and Niccolo Fusai and Lachy Groom and Karol Hausman and Brian Ichter and Szymon Jakubczak and Tim Jones and Liyiming Ke and Sergey Levine and Adrian Li-Bell and Mohith Mothukuri and Suraj Nair and Karl Pertsch and Lucy Xiaoyang Shi and James Tanner and Quan Vuong and Anna Walling and Haohuan Wang and Ury Zhilinsky},
      year={2024},
      eprint={2410.24164},
      archivePrefix={arXiv},
      primaryClass={cs.LG},
      url={https://arxiv.org/abs/2410.24164}, 
}

@article{li2025reinforcement,
  title={Reinforcement Learning with Action Chunking},
  author={Li, Qiyang and Zhou, Zhiyuan and Levine, Sergey},
  journal={arXiv preprint arXiv:2507.07969},
  year={2025}
}

@InProceedings{pmlr-v202-laroche23a,
  title = {On the Occupancy Measure of Non-{M}arkovian Policies in Continuous {MDP}s},
  author = {Laroche, Romain and Tachet Des Combes, Remi},
  booktitle = {Proceedings of the 40th International Conference on Machine Learning},
  pages = {18548--18562},
  year = {2023},
  editor = {Krause, Andreas and Brunskill, Emma and Cho, Kyunghyun and Engelhardt, Barbara and Sabato, Sivan and Scarlett, Jonathan},
  volume = {202},
  series = {Proceedings of Machine Learning Research},
  month = {23--29 Jul},
  publisher = {PMLR},
  url = {https://proceedings.mlr.press/v202/laroche23a.html}
}

@misc{pertsch2025fast,
      title={FAST: Efficient Action Tokenization for Vision-Language-Action Models}, 
      author={Karl Pertsch and Kyle Stachowicz and Brian Ichter and Danny Driess and Suraj Nair and Quan Vuong and Oier Mees and Chelsea Finn and Sergey Levine},
      year={2025},
      eprint={2501.09747},
      archivePrefix={arXiv},
      primaryClass={cs.RO},
      url={https://arxiv.org/abs/2501.09747}, 
}

@InProceedings{ross2011reduction,
  title = 	 {A Reduction of Imitation Learning and Structured Prediction to No-Regret Online Learning},
  author = 	 {Ross, Stephane and Gordon, Geoffrey and Bagnell, Drew},
  booktitle = 	 {Proceedings of the Fourteenth International Conference on Artificial Intelligence and Statistics},
  pages = 	 {627--635},
  year = 	 {2011},
  editor = 	 {Gordon, Geoffrey and Dunson, David and Dudík, Miroslav},
  volume = 	 {15},
  series = 	 {Proceedings of Machine Learning Research},
  address = 	 {Fort Lauderdale, FL, USA},
  month = 	 {11--13 Apr},
  publisher =    {PMLR},
  url = 	 {https://proceedings.mlr.press/v15/ross11a.html}
}

@inproceedings{pmerleau1988alvinn,
 author = {Pomerleau, Dean A.},
 booktitle = {Advances in Neural Information Processing Systems},
 editor = {D. Touretzky},
 pages = {},
 publisher = {Morgan-Kaufmann},
 title = {ALVINN: An Autonomous Land Vehicle in a Neural Network},
 url = {https://proceedings.neurips.cc/paper_files/paper/1988/file/812b4ba287f5ee0bc9d43bbf5bbe87fb-Paper.pdf},
 volume = {1},
 year = {1988}
}

@misc{loshchilov2019adamw,
      title={Decoupled Weight Decay Regularization}, 
      author={Ilya Loshchilov and Frank Hutter},
      year={2019},
      eprint={1711.05101},
      archivePrefix={arXiv},
      primaryClass={cs.LG},
      url={https://arxiv.org/abs/1711.05101}, 
}

@inproceedings{loshchilov2017sgdr,
title={{SGDR}: Stochastic Gradient Descent with Warm Restarts},
author={Ilya Loshchilov and Frank Hutter},
booktitle={International Conference on Learning Representations},
year={2017},
url={https://openreview.net/forum?id=Skq89Scxx}
}

@misc{black2025realtimeexecutionactionchunking,
      title={Real-Time Execution of Action Chunking Flow Policies}, 
      author={Kevin Black and Manuel Y. Galliker and Sergey Levine},
      year={2025},
      eprint={2506.07339},
      archivePrefix={arXiv},
      primaryClass={cs.RO},
      url={https://arxiv.org/abs/2506.07339}, 
}

@misc{zhang2025actionchunkingexploratorydata,
      title={Action Chunking and Exploratory Data Collection Yield Exponential Improvements in Behavior Cloning for Continuous Control}, 
      author={Thomas T. Zhang and Daniel Pfrommer and Chaoyi Pan and Nikolai Matni and Max Simchowitz},
      year={2025},
      eprint={2507.09061},
      archivePrefix={arXiv},
      primaryClass={cs.LG},
      url={https://arxiv.org/abs/2507.09061}, 
}

@article{chi2023diffusion,
  title={Diffusion policy: Visuomotor policy learning via action diffusion},
  author={Chi, Cheng and Xu, Zhenjia and Feng, Siyuan and Cousineau, Eric and Du, Yilun and Burchfiel, Benjamin and Tedrake, Russ and Song, Shuran},
  journal={The International Journal of Robotics Research},
  pages={02783649241273668},
  year={2023},
  publisher={SAGE Publications Sage UK: London, England}
}

@article{team2024octo,
  title={Octo: An open-source generalist robot policy},
  author={Team, Octo Model and Ghosh, Dibya and Walke, Homer and Pertsch, Karl and Black, Kevin and Mees, Oier and Dasari, Sudeep and Hejna, Joey and Kreiman, Tobias and Xu, Charles and others},
  journal={arXiv preprint arXiv:2405.12213},
  year={2024}
}

@article{black2024pi_0,
  title={$\pi_0$: A Vision-Language-Action Flow Model for General Robot Control},
  author={Black, Kevin and Brown, Noah and Driess, Danny and Esmail, Adnan and Equi, Michael and Finn, Chelsea and Fusai, Niccolo and Groom, Lachy and Hausman, Karol and Ichter, Brian and others},
  journal={arXiv preprint arXiv:2410.24164},
  year={2024}
}

@article{dasari2024ingredients,
  title={The ingredients for robotic diffusion transformers},
  author={Dasari, Sudeep and Mees, Oier and Zhao, Sebastian and Srirama, Mohan Kumar and Levine, Sergey},
  journal={arXiv preprint arXiv:2410.10088},
  year={2024}
}

@inproceedings{o2024open,
  title={Open x-embodiment: Robotic learning datasets and rt-x models: Open x-embodiment collaboration 0},
  author={O’Neill, Abby and Rehman, Abdul and Maddukuri, Abhiram and Gupta, Abhishek and Padalkar, Abhishek and Lee, Abraham and Pooley, Acorn and Gupta, Agrim and Mandlekar, Ajay and Jain, Ajinkya and others},
  booktitle={2024 IEEE International Conference on Robotics and Automation (ICRA)},
  pages={6892--6903},
  year={2024},
  organization={IEEE}
}

@article{zhang2025imitation,
  title={Imitation learning in continuous action spaces: mitigating compounding error without interaction},
  author={Zhang, Thomas T and Pfrommer, Daniel and Matni, Nikolai and Simchowitz, Max},
  journal={arXiv preprint arXiv},
  volume={2507},
  year={2025}
}

@article{simchowitz2025pitfalls,
  title={The pitfalls of imitation learning when actions are continuous},
  author={Simchowitz, Max and Pfrommer, Daniel and Jadbabaie, Ali},
  journal={arXiv preprint arXiv:2503.09722},
  year={2025}
}

@article{jing2025mixture,
  title={Mixture of Horizons in Action Chunking},
  author={Jing, Dong and Wang, Gang and Liu, Jiaqi and Tang, Weiliang and Sun, Zelong and Yao, Yunchao and Wei, Zhenyu and Liu, Yunhui and Lu, Zhiwu and Ding, Mingyu},
  journal={arXiv preprint arXiv:2511.19433},
  year={2025}
}

@inproceedings{liu2025bidirectional,
  title={Bidirectional decoding: Improving action chunking via guided test-time sampling},
  author={Liu, Yuejiang and Hamid, Jubayer and Xie, Annie and Lee, Yoonho and Du, Max and Finn, Chelsea},
  booktitle={International Conference on Learning Representations},
  volume={2025},
  pages={4594--4627},
  year={2025}
}

@article{park2025acg,
  title={ACG: Action Coherence Guidance for Flow-based VLA models},
  author={Park, Minho and Kim, Kinam and Hyung, Junha and Jang, Hyojin and Jin, Hoiyeong and Yun, Jooyeol and Lee, Hojoon and Choo, Jaegul},
  journal={arXiv preprint arXiv:2510.22201},
  year={2025}
}

@article{foster2024behavior,
  title={Is behavior cloning all you need? understanding horizon in imitation learning},
  author={Foster, Dylan J and Block, Adam and Misra, Dipendra},
  journal={Advances in Neural Information Processing Systems},
  volume={37},
  pages={120602--120666},
  year={2024}
}

@article{pan2025much,
  title={Much Ado About Noising: Dispelling the Myths of Generative Robotic Control},
  author={Pan, Chaoyi and Anantharaman, Giri and Huang, Nai-Chieh and Jin, Claire and Pfrommer, Daniel and Yuan, Chenyang and Permenter, Frank and Qu, Guannan and Boffi, Nicholas and Shi, Guanya and others},
  journal={arXiv preprint arXiv:2512.01809},
  year={2025}
}

@article{black2025training,
  title={Training-time action conditioning for efficient real-time chunking},
  author={Black, Kevin and Ren, Allen Z and Equi, Michael and Levine, Sergey},
  journal={arXiv preprint arXiv:2512.05964},
  year={2025}
}

@article{malhotra2025self,
  title={Self-guided action diffusion},
  author={Malhotra, Rhea and Liu, Yuejiang and Finn, Chelsea},
  journal={arXiv preprint arXiv:2508.12189},
  year={2025}
}

@article{black2026real,
  title={Real-time execution of action chunking flow policies},
  author={Black, Kevin and Galliker, Manuel and Levine, Sergey},
  journal={Advances in Neural Information Processing Systems},
  volume={38},
  pages={33383--33407},
  year={2026}
}

@article{zha2026lap,
  title={Lap: Language-action pre-training enables zero-shot cross-embodiment transfer},
  author={Zha, Lihan and Hancock, Asher J and Zhang, Mingtong and Yin, Tenny and Huang, Yixuan and Shah, Dhruv and Ren, Allen Z and Majumdar, Anirudha},
  journal={arXiv preprint arXiv:2602.10556},
  year={2026}
}

@article{susilaradeya2019extrinsic,
  title={Extrinsic and intrinsic dynamics in movement intermittency},
  author={Susilaradeya, Damar and Xu, Wei and Hall, Thomas M and Galan, Ferran and Alter, Kai and Jackson, Andrew},
  journal={Elife},
  volume={8},
  pages={e40145},
  year={2019},
  publisher={eLife Sciences Publications, Ltd}
}

@article{torne2025learning,
  title={Learning long-context diffusion policies via past-token prediction},
  author={Torne, Marcel and Tang, Andy and Liu, Yuejiang and Finn, Chelsea},
  journal={arXiv preprint arXiv:2505.09561},
  year={2025}
}

@article{wagenmaker2025posterior,
  title={Posterior Behavioral Cloning: Pretraining BC Policies for Efficient RL Finetuning},
  author={Wagenmaker, Andrew and Dong, Perry and Tsao, Raymond and Finn, Chelsea and Levine, Sergey},
  journal={arXiv preprint arXiv:2512.16911},
  year={2025}
}

@article{chen2026dream,
  title={DREAM-Chunk: Reactive Action Chunking with Latent World Model},
  author={Chen, Wenxi and Zhang, Kaidi and Lin, Chi and Zhang, Zhiyuan and She, Yu and Liu, Yuejiang and Yeh, Raymond A and Mou, Shaoshuai and Gu, Yan},
  journal={arXiv preprint arXiv:2606.18589},
  year={2026}
}

@article{weng2025temporal,
  title={Temporal action selection for action chunking},
  author={Weng, Yueyang and Zhang, Xiaopeng and Mu, Yongjin and Zhu, Yingcong and Li, Yanjie},
  journal={arXiv preprint arXiv:2511.04421},
  year={2025}
}

@article{torne2026mem,
  title={Mem: Multi-scale embodied memory for vision language action models},
  author={Torne, Marcel and Pertsch, Karl and Walke, Homer and Vedder, Kyle and Nair, Suraj and Ichter, Brian and Ren, Allen Z and Wang, Haohuan and Tang, Jiaming and Stachowicz, Kyle and others},
  journal={arXiv preprint arXiv:2603.03596},
  year={2026}
}

@article{mark2026bpp,
  title={Bpp: Long-context robot imitation learning by focusing on key history frames},
  author={Mark, Max Sobol and Liang, Jacky and Attarian, Maria and Fu, Chuyuan and Dwibedi, Debidatta and Shah, Dhruv and Kumar, Aviral},
  journal={arXiv preprint arXiv:2602.15010},
  year={2026}
}

@article{day2000voluntary,
  title={Voluntary modification of automatic arm movements evoked by motion of a visual target},
  author={Day, BL and Lyon, IN},
  journal={Experimental Brain Research},
  volume={130},
  number={2},
  pages={159--168},
  year={2000},
  publisher={Springer}
}

@article{krogh1994neural,
  title={Neural network ensembles, cross validation, and active learning},
  author={Krogh, Anders and Vedelsby, Jesper},
  journal={Advances in neural information processing systems},
  volume={7},
  year={1994}
}

@article{breiman2001random,
  title={Random forests},
  author={Breiman, Leo},
  journal={Machine learning},
  volume={45},
  number={1},
  pages={5--32},
  year={2001},
  publisher={Springer}
}

@article{ho1998random,
  title={The random subspace method for constructing decision forests},
  author={Ho, Tin Kam},
  journal={IEEE transactions on pattern analysis and machine intelligence},
  volume={20},
  number={8},
  pages={832--844},
  year={1998},
  publisher={Ieee}
}

@article{intelligence2025pi_,
  title={$\pi_{0.5}$: a Vision-Language-Action Model with Open-World Generalization},
  author={Intelligence, Physical and Black, Kevin and Brown, Noah and Darpinian, James and Dhabalia, Karan and Driess, Danny and Esmail, Adnan and Equi, Michael and Finn, Chelsea and Fusai, Niccolo and others},
  journal={arXiv preprint arXiv:2504.16054},
  year={2025}
}

@inproceedings{walke2023bridgedata,
  title={Bridgedata v2: A dataset for robot learning at scale},
  author={Walke, Homer Rich and Black, Kevin and Zhao, Tony Z and Vuong, Quan and Zheng, Chongyi and Hansen-Estruch, Philippe and He, Andre Wang and Myers, Vivek and Kim, Moo Jin and Du, Max and others},
  booktitle={Conference on Robot Learning},
  pages={1723--1736},
  year={2023},
  organization={PMLR}
}

@article{argall2009survey,
  title={A survey of robot learning from demonstration},
  author={Argall, Brenna D and Chernova, Sonia and Veloso, Manuela and Browning, Brett},
  journal={Robotics and autonomous systems},
  volume={57},
  number={5},
  pages={469--483},
  year={2009},
  publisher={Elsevier}
}

@article{bojarski2016end,
  title={End to end learning for self-driving cars},
  author={Bojarski, Mariusz},
  journal={arXiv preprint arXiv:1604.07316},
  year={2016}
}

@inproceedings{zhang2018deep,
  title={Deep imitation learning for complex manipulation tasks from virtual reality teleoperation},
  author={Zhang, Tianhao and McCarthy, Zoe and Jow, Owen and Lee, Dennis and Chen, Xi and Goldberg, Ken and Abbeel, Pieter},
  booktitle={2018 IEEE international conference on robotics and automation (ICRA)},
  pages={5628--5635},
  year={2018},
  organization={IEEE}
}

@inproceedings{rahmatizadeh2018vision,
  title={Vision-based multi-task manipulation for inexpensive robots using end-to-end learning from demonstration},
  author={Rahmatizadeh, Rouhollah and Abolghasemi, Pooya and B{\"o}l{\"o}ni, Ladislau and Levine, Sergey},
  booktitle={2018 IEEE international conference on robotics and automation (ICRA)},
  pages={3758--3765},
  year={2018},
  organization={IEEE}
}

@article{mandlekar2021matters,
  title={What matters in learning from offline human demonstrations for robot manipulation},
  author={Mandlekar, Ajay and Xu, Danfei and Wong, Josiah and Nasiriany, Soroush and Wang, Chen and Kulkarni, Rohun and Fei-Fei, Li and Savarese, Silvio and Zhu, Yuke and Mart{\'\i}n-Mart{\'\i}n, Roberto},
  journal={arXiv preprint arXiv:2108.03298},
  year={2021}
}

@article{shafiullah2022behavior,
  title={Behavior transformers: Cloning $ k $ modes with one stone},
  author={Shafiullah, Nur Muhammad and Cui, Zichen and Altanzaya, Ariuntuya Arty and Pinto, Lerrel},
  journal={Advances in neural information processing systems},
  volume={35},
  pages={22955--22968},
  year={2022}
}

@article{cui2022play,
  title={From play to policy: Conditional behavior generation from uncurated robot data},
  author={Cui, Zichen Jeff and Wang, Yibin and Shafiullah, Nur Muhammad Mahi and Pinto, Lerrel},
  journal={arXiv preprint arXiv:2210.10047},
  year={2022}
}

@article{gu2023rt,
  title={Rt-trajectory: Robotic task generalization via hindsight trajectory sketches},
  author={Gu, Jiayuan and Kirmani, Sean and Wohlhart, Paul and Lu, Yao and Arenas, Montserrat Gonzalez and Rao, Kanishka and Yu, Wenhao and Fu, Chuyuan and Gopalakrishnan, Keerthana and Xu, Zhuo and others},
  journal={arXiv preprint arXiv:2311.01977},
  year={2023}
}

@article{brohan2022rt,
  title={Rt-1: Robotics transformer for real-world control at scale},
  author={Brohan, Anthony and Brown, Noah and Carbajal, Justice and Chebotar, Yevgen and Dabis, Joseph and Finn, Chelsea and Gopalakrishnan, Keerthana and Hausman, Karol and Herzog, Alex and Hsu, Jasmine and others},
  journal={arXiv preprint arXiv:2212.06817},
  year={2022}
}

@article{khazatsky2024droid,
    title   = {DROID: A Large-Scale In-The-Wild Robot Manipulation Dataset},
    author  = {Alexander Khazatsky and Karl Pertsch and Suraj Nair and Ashwin Balakrishna and Sudeep Dasari and Siddharth Karamcheti and Soroush Nasiriany and Mohan Kumar Srirama and Lawrence Yunliang Chen and Kirsty Ellis and Peter David Fagan and Joey Hejna and Masha Itkina and Marion Lepert and Yecheng Jason Ma and Patrick Tree Miller and Jimmy Wu and Suneel Belkhale and Shivin Dass and Huy Ha and Arhan Jain and Abraham Lee and Youngwoon Lee and Marius Memmel and Sungjae Park and Ilija Radosavovic and Kaiyuan Wang and Albert Zhan and Kevin Black and Cheng Chi and Kyle Beltran Hatch and Shan Lin and Jingpei Lu and Jean Mercat and Abdul Rehman and Pannag R Sanketi and Archit Sharma and Cody Simpson and Quan Vuong and Homer Rich Walke and Blake Wulfe and Ted Xiao and Jonathan Heewon Yang and Arefeh Yavary and Tony Z. Zhao and Christopher Agia and Rohan Baijal and Mateo Guaman Castro and Daphne Chen and Qiuyu Chen and Trinity Chung and Jaimyn Drake and Ethan Paul Foster and Jensen Gao and Vitor Guizilini and David Antonio Herrera and Minho Heo and Kyle Hsu and Jiaheng Hu and Muhammad Zubair Irshad and Donovon Jackson and Charlotte Le and Yunshuang Li and Kevin Lin and Roy Lin and Zehan Ma and Abhiram Maddukuri and Suvir Mirchandani and Daniel Morton and Tony Nguyen and Abigail O'Neill and Rosario Scalise and Derick Seale and Victor Son and Stephen Tian and Emi Tran and Andrew E. Wang and Yilin Wu and Annie Xie and Jingyun Yang and Patrick Yin and Yunchu Zhang and Osbert Bastani and Glen Berseth and Jeannette Bohg and Ken Goldberg and Abhinav Gupta and Abhishek Gupta and Dinesh Jayaraman and Joseph J Lim and Jitendra Malik and Roberto Martín-Martín and Subramanian Ramamoorthy and Dorsa Sadigh and Shuran Song and Jiajun Wu and Michael C. Yip and Yuke Zhu and Thomas Kollar and Sergey Levine and Chelsea Finn},
    year    = {2024},
}

@inproceedings{robomimic2021,
  title={What Matters in Learning from Offline Human Demonstrations for Robot Manipulation},
  author={Ajay Mandlekar and Danfei Xu and Josiah Wong and Soroush Nasiriany and Chen Wang and Rohun Kulkarni and Li Fei-Fei and Silvio Savarese and Yuke Zhu and Roberto Mart\'{i}n-Mart\'{i}n},
  booktitle={arXiv preprint arXiv:2108.03298},
  year={2021}
}

@article{zhao2024aloha,
  title={Aloha unleashed: A simple recipe for robot dexterity},
  author={Zhao, Tony Z and Tompson, Jonathan and Driess, Danny and Florence, Pete and Ghasemipour, Kamyar and Finn, Chelsea and Wahid, Ayzaan},
  journal={arXiv preprint arXiv:2410.13126},
  year={2024}
}

@article{kim2024openvla,
  title={Openvla: An open-source vision-language-action model},
  author={Kim, Moo Jin and Pertsch, Karl and Karamcheti, Siddharth and Xiao, Ted and Balakrishna, Ashwin and Nair, Suraj and Rafailov, Rafael and Foster, Ethan and Lam, Grace and Sanketi, Pannag and others},
  journal={arXiv preprint arXiv:2406.09246},
  year={2024}
}

@inproceedings{ankile2024juicer,
  title={Juicer: Data-efficient imitation learning for robotic assembly},
  author={Ankile, Lars and Simeonov, Anthony and Shenfeld, Idan and Agrawal, Pulkit},
  booktitle={2024 IEEE/RSJ International Conference on Intelligent Robots and Systems (IROS)},
  pages={5096--5103},
  year={2024},
  organization={IEEE}
}

@inproceedings{sridhar2024nomad,
  title={Nomad: Goal masked diffusion policies for navigation and exploration},
  author={Sridhar, Ajay and Shah, Dhruv and Glossop, Catherine and Levine, Sergey},
  booktitle={2024 IEEE International Conference on Robotics and Automation (ICRA)},
  pages={63--70},
  year={2024},
  organization={IEEE}
}

@article{ze20243d,
  title={3d diffusion policy: Generalizable visuomotor policy learning via simple 3d representations},
  author={Ze, Yanjie and Zhang, Gu and Zhang, Kangning and Hu, Chenyuan and Wang, Muhan and Xu, Huazhe},
  journal={arXiv preprint arXiv:2403.03954},
  year={2024}
}

@article{bjorck2025gr00t,
  title={GR00T N1: An Open Foundation Model for Generalist Humanoid Robots},
  author={Bjorck, Johan and Casta{\~n}eda, Fernando and Cherniadev, Nikita and Da, Xingye and Ding, Runyu and Fan, Linxi and Fang, Yu and Fox, Dieter and Hu, Fengyuan and Huang, Spencer and others},
  journal={arXiv preprint arXiv:2503.14734},
  year={2025}
}

@article{janner2022planning,
  title={Planning with diffusion for flexible behavior synthesis},
  author={Janner, Michael and Du, Yilun and Tenenbaum, Joshua B and Levine, Sergey},
  journal={arXiv preprint arXiv:2205.09991},
  year={2022}
}

@article{liu2023libero,
  title={LIBERO: Benchmarking Knowledge Transfer for Lifelong Robot Learning},
  author={Liu, Bo and Zhu, Yifeng and Gao, Chongkai and Feng, Yihao and Liu, Qiang and Zhu, Yuke and Stone, Peter},
  journal={arXiv preprint arXiv:2306.03310},
  year={2023}
}

@article{geminirobotics,
  title   = {Gemini Robotics: Bringing AI into the Physical World},
  author  = {{Gemini Robotics Team}},
  journal = {arXiv preprint arXiv:2503.20020},
  year    = {2025},
  eprint  = {2503.20020},
  archivePrefix = {arXiv},
  primaryClass  = {cs.RO}
}

@article{lbm,
  title   = {A Careful Examination of Large Behavior Models for Multitask Dexterous Manipulation},
  author  = {{TRI LBM Team}},
  journal = {arXiv preprint arXiv:2507.05331},
  year    = {2025},
  eprint  = {2507.05331},
  archivePrefix = {arXiv},
  primaryClass  = {cs.RO},
  url     = {https://toyotaresearchinstitute.github.io/lbm1/}
}

@article{molmoact2,
  title   = {MolmoAct2: Action Reasoning Models for Real-world Deployment},
  author  = {Fang, Haoquan and Duan, Jiafei and Clay, Donovan and Wang, Sam and Liu, Shuo and Huang, Weikai and Fan, Xiang and Tsai, Wei-Chuan and Chen, Shirui and Wang, Yi Ru and Xing, Shanli and Cho, Jaemin and Park, Jae Sung and Eftekhar, Ainaz and Sushko, Peter and Farley, Karen and Wadhwa, Angad and Harrison, Cole and Han, Winson and Lee, Ying-Chun and VanderBilt, Eli and Hendrix, Rose and Ellawela, Suveen and Ngoo, Lucas and Chai, Joyce and Ren, Zhongzheng and Farhadi, Ali and Fox, Dieter and Krishna, Ranjay},
  journal = {arXiv preprint arXiv:2605.02881},
  year    = {2026},
  eprint  = {2605.02881},
  archivePrefix = {arXiv},
  primaryClass  = {cs.RO},
  url     = {https://allenai.org/blog/molmoact2}
}

@article{mimicvideo,
  title   = {{mimic-video}: Video-Action Models for Generalizable Robot Control Beyond VLAs},
  author  = {Pai, Jonas and Achenbach, Liam and Montesinos, Victoriano and Forrai, Benedek and Mees, Oier and Nava, Elvis},
  journal = {arXiv preprint arXiv:2512.15692},
  year    = {2025},
  eprint  = {2512.15692},
  archivePrefix = {arXiv},
  primaryClass  = {cs.RO},
  url     = {https://mimic-video.github.io/}
}

@inproceedings{alohaunleashed,
  title     = {ALOHA Unleashed: A Simple Recipe for Robot Dexterity},
  author    = {Zhao, Tony Z. and Tompson, Jonathan and Driess, Danny and Florence, Pete and Ghasemipour, Kamyar and Finn, Chelsea and Wahid, Ayzaan},
  booktitle = {Proceedings of The 8th Conference on Robot Learning},
  year      = {2025},
  series    = {Proceedings of Machine Learning Research},
  volume    = {270},
  publisher = {PMLR},
  eprint    = {2410.13126},
  archivePrefix = {arXiv},
  primaryClass  = {cs.RO},
  url       = {https://proceedings.mlr.press/v270/zhao25b.html}
}

@misc{zhu2025robosuitemodularsimulationframework,
      title={robosuite: A Modular Simulation Framework and Benchmark for Robot Learning}, 
      author={Yuke Zhu and Josiah Wong and Ajay Mandlekar and Roberto Martín-Martín and Abhishek Joshi and Kevin Lin and Abhiram Maddukuri and Soroush Nasiriany and Yifeng Zhu},
      year={2025},
      eprint={2009.12293},
      archivePrefix={arXiv},
      primaryClass={cs.RO},
      url={https://arxiv.org/abs/2009.12293}, 
}

@inproceedings{
dosovitskiy2021an,
title={An Image is Worth 16x16 Words: Transformers for Image Recognition at Scale},
author={Alexey Dosovitskiy and Lucas Beyer and Alexander Kolesnikov and Dirk Weissenborn and Xiaohua Zhai and Thomas Unterthiner and Mostafa Dehghani and Matthias Minderer and Georg Heigold and Sylvain Gelly and Jakob Uszkoreit and Neil Houlsby},
booktitle={International Conference on Learning Representations},
year={2021},
url={https://openreview.net/forum?id=YicbFdNTTy}
}

@misc{misra2020mishselfregularizednonmonotonic,
      title={Mish: A Self Regularized Non-Monotonic Activation Function}, 
      author={Diganta Misra},
      year={2020},
      eprint={1908.08681},
      archivePrefix={arXiv},
      primaryClass={cs.LG},
      url={https://arxiv.org/abs/1908.08681}, 
}

\newpage
\appendix
\section{Additional Experimental Details}\label{sec:app_exp_details}

In the following, we provide additional experimental details for all the simulations conducted in the paper. We begin with a discussion on the environments and datasets considered (Appendix \ref{apx:envs}), then describe the network architectures considered (Appendix \ref{apx:nets}), and, finally, we provide additional details for each simulation in the main paper (Appendix \ref{apx:more details}).

We mention that all simulations were carried out using one Nvidia H100 GPU and ten Nvidia A100 GPUs.

\newcommand{\unif}{\mathrm{unif}}
\subsection{Validation Error vs Validation Loss}\label{sec:val_error_explanation}
Here we expand on our choice of the validation error metric, $\Lval(\pi)$, that evaluates the action prediction error. In particular, $\Lval(\pi)$ measures how far the expected action predictions of policy $\pi$ are from the actual actions in the validation set. In the setting where $\pi$ is a diffusion policy, as we consider in this work, another natural choice of validation error would simply be the denoising diffusion loss on the validation set, that is:
\begin{align*}
    \Exp_{(\ba,o) \sim \unif(\mathfrak{D}_{\mathrm{val}})} \Exp_{\bm{\epsilon} \sim \cN(0,I)} \Exp_{t \sim \unif([0,1])}[\| \bm{\epsilon} - \bm{\epsilon}_{\theta}(\sqrt{\alpha_t} \ba + \sqrt{1-\alpha_t} \bm{\epsilon}, t; o) \|_2^2]
\end{align*}
where $\bm{\epsilon}_{\theta}$ is the policy's denoising network. The primary challenge with using this loss in our setting is that it does not allow us to isolate the contribution of each individual timestep. In particular, we want to evaluate the validation loss at step $i$ in the action chunk, we could compute this as
\begin{align*}
    \Exp_{(\ba,o) \sim \unif(\mathfrak{D}_{\mathrm{val}})} \Exp_{\bm{\epsilon} \sim \cN(0,I)} \Exp_{t \sim \unif([0,1])}[\| [\bm{\epsilon} - \bm{\epsilon}_{\theta}(\sqrt{\alpha_t} \ba + \sqrt{1-\alpha_t} \bm{\epsilon}, t; o)]_i \|_2^2],
\end{align*}
but this loss allows information to bleed over from other timesteps in the chunk to step $i$---as the network observes all of $\sqrt{\alpha_t} \ba + \sqrt{1-\alpha_t} \bm{\epsilon}$, it sees a noisy version of the action at steps $i+1$ and $i-1$, for example, so its prediction at step $i$ may depend strongly on this. As such, it is not possible to isolate the prediction of step $i$ in the chunk as we do, for example, in \Cref{fig:val_loss_libero}.

The second challenge with this validation loss is that it does not extend to the ensembling validation loss we consider in \Cref{sec:ac_ensembles}. Since we consider ensembled predictions across timesteps in the chunk, it is not obvious we can directly ensemble in the diffusion noise space---in particular, $\bm{\epsilon}_{\theta}(\sqrt{\alpha_t} \ba + \sqrt{1-\alpha_t} \bm{\epsilon}, t; o_h)$ and $\bm{\epsilon}_{\theta}(\sqrt{\alpha_t} \ba + \sqrt{1-\alpha_t} \bm{\epsilon}, t; o_{h-d})$ cannot be combined directly, since, while they may partially overlap, they ultimately predict different action chunks, and it is not obvious how action chunks can be combined in noise space. 

Given this, we opt instead to use the validation error that simply evaluates the actual action predictions. Note that this resolves the first issue since all action predictions begin from noise $\bm{\epsilon} \sim \cN(0,I)$, so oracle information from one step in the chunk cannot affect the prediction at another step in the chunk (since this has access to no oracle information). In addition, it also resolves the second issue since we can easily combine a given step of the chunk in action space. While in principle the validation error we consider may not incorporate multimodality effectively, we found that in practice the diffusion policies we trained were not multimodal---for a given observation, $o$, the predictions tend to be unimodal---so this did not prove an issue in practice.

\subsection{Description of Environments and Datasets}\label{apx:envs}

\texttt{Robomimic} \citep{robomimic2021} and \texttt{Libero} \citep{liu2023libero} are two widely adopted benchmarks built on top of the \texttt{Robosuite} project \citep{zhu2025robosuitemodularsimulationframework}. We consider tasks \texttt{square}, \texttt{can}, \texttt{transport} and \texttt{tool hang} for \texttt{Robomimic}, and all the 90 tasks in the \texttt{Libero 90} suite for \texttt{Libero}.

\paragraph{{Robomimic}.}

For each \texttt{Robomimic} task, we considered the PH 
(Proficient-Human) dataset, consisting of $N=200$ successful trajectories collected by a ``single, experienced teleoperator'' \citep{robomimic2021}, at a control frequency of 20Hz.
Actions for all these tasks except \texttt{transport} are 7-dimensional vectors where ``the first 3 coordinates are the desired translation from
the current end effector position, the next 3 coordinates encoder the desired delta rotation from the current end effector rotation, and the final coordinate controls the opening and closing of the gripper fingers.'' For \texttt{transport}, since there are two robot arms, then the action space is 14-dimensional.
Regarding observations, we avoid using images, but consider the full low-dimensional object observations provided by the \texttt{Robomimic} suite, made of proprioception observations (9-dimensional per arm, ``consisting of the end effector position (3-dim), quaternion (4-dim), and gripper
finger positions (2-dim)''), and ground-truth object states (with varying dimensionality depending on the task, containing in general the absolute position of the object/s and their relative position wrt the robot end effector). See Appendix E of \citet{robomimic2021} for more details.

\paragraph{{Libero}.}

For each \texttt{Libero 90} task, we are given 50 successful demonstrations from a human demonstrator. All 90 tasks share the same action space, which coincides with the action space of all the considered \texttt{Robomimic} tasks (i.e., a 7-dimensional vector encoding the delta translation/rotation of the end effector pose and the opening/closing of the gripper). Regarding the observation space, things are rather different from \texttt{Robomimic}, as we do not consider ground-truth object states, but image observations. So, our observation is made of proprioception observations (9-dimensional per arm, consisting of two components for the gripper finger positions, and seven components for the joint orientations of the arm), and image observations (that is, two (128,128,3)-dimensional images, one from the front-view camera and one from the wrist-mounted camera).

\subsection{Network Architecture}\label{apx:nets}

\begin{table}[t!]
    \centering
    \resizebox{\textwidth}{!}{%
\begin{tabular}{ccc} \toprule
    \textbf{Hyperparameter} & \texttt{Libero} & \texttt{Robomimic} \\ \midrule
    denoising net architecture & MLP & MLP \\
    denoising net size & $[4096,4096,4096,4096,4096]$ & $[3000,3000,3000,3000,3000]$ \\
    conditioning net architecture & ViT & MLP \\
    conditioning net size & 4 heads, depth 1, spatial embedding 128 & $[1024,1024,64]$ \\
    multi-task policy & yes & no \\
\bottomrule
\end{tabular}%
}
    \caption{High-level net architecture}
    \label{tab:nets_lib_rob}
\end{table}

For all the experiments, we trained diffusion policies \citep{chi2023diffusionpolicy} to fit the expert demonstrations, with some differences between \texttt{Robomimic} and \texttt{Libero} in order to handle the different types of observations. From a high-level perspective, we used conditional DDPMs \cite{ho2020ddpm} to fit (sequences of) actions given observations, where we use a multi-layer perceptron (MLP) for the denoising network, and either an MLP (\texttt{Robomimic}) or a Vision Transformer (ViT \citep{dosovitskiy2021an}, \texttt{Libero}) to encode the observation features. For \texttt{Robomimic}, we always train single-task policies (i.e., different policies for each task), where the denoising net is made of 5 layers of 3000 neurons each, while the MLP to encode observations is made of three layers of 1024, 1024, and 64 neurons. Instead, for \texttt{Libero}, we train multi-task policies, i.e., a single network to predict actions for all the 90 tasks. We accomplish this by including into the observation a 90-dim one-hot encoding of the task. The denoising net for \texttt{Libero} is made of 5 layers of 4096 neurons each, while we encode observations using a ViT encoder trained along with the network using a patch size of 8, 4 heads, and a depth
of 1. The output embedding dimension will be 128. The two images are then concatenated before being given in input to the (denoising net) MLP. Before passing the
images in input to the ViT encoder, we perform some random shift (translation) data augmentation for images: we randomly move each image 4 pixels
horizontally and vertically. See Table \ref{tab:nets_lib_rob} for a summary of this description.

\paragraph{Hyperparameters.}
The hyperparameters we adopt for training/optimization and for the DDPM are the same for both \texttt{Robomimic} and \texttt{Libero}, and are reported in Table \ref{tab:hyperparameters}. More specifically, we use 20 denoising steps and adopt a 32-dimensional embedding for the time step (we
use a sinusoidal positional embedding, as standard in DDPMs). The activation function we use is Mish \citep{misra2020mishselfregularizednonmonotonic}, and we add residual connections in the denoising MLP nets. We train our DDPMs for 3000 epochs with AdamW \citep{loshchilov2019adamw} with learning
rate 1e-4 and weight decay 1e-6. We adopt a cosine annealing scheduler for the
learning rate \citep{loshchilov2017sgdr} with 100 warmup steps and minimum and maximum learning rates of
1e-5 and 1e-4, with 3000 first cycle steps. We use a batch size of 256. We also use an empirical moving average (EMA) of the model weights, which starts at epoch 20, and is updated every 10 epochs with decay 0.995. It is worthy mentioning that, while for \texttt{Robomimic} one epoch corresponds to a number of batches that spans the whole training dataset, for \texttt{Libero}, where we have much more data as we train multi-task policies, one epoch corresponds to a randomly chosen subset of all the training samples, computed as $300/N$ (where $N$ is the total number of expert trajectories). Simply put, we are merely rescaling the true number of gradient iterations to make it more comparable to \texttt{Robomimic}, where we have less expert trajectories.

\begin{table}[t!]
    \centering
\begin{tabular}{cc} \toprule
    \textbf{Hyperparameter} & Value \\ \midrule
    activation function & Mish \\
    use residual connections & True \\\midrule
    batch size & 256 \\
    use EMA & True \\
    EMA decay & 0.995 \\
    optimizer & AdamW \\
    lr & 1e-4 \\
    weight decay & 1e-6 \\
    number of epochs & 3000 \\
    \midrule
    time embedding & sinusoidal positional \\
    time embedding size & 32 \\
    denoising steps & 20 \\ \bottomrule
\end{tabular}
    \caption{Hyperparameters}
    \label{tab:hyperparameters}
\end{table}

\paragraph{Padding and normalization.} 

We perform padding at the end of expert's trajectories every time that we train with chunk sizes $k>1$ or pure delayed policies with delay $d>0$, by replicating the
last action for $k-1$ times.
Before training, we normalize the expert trajectories as it is best practice for DDPMs.
In particular, in both \texttt{Robomimic} and \texttt{Libero}, we normalized the low-dimensional components of
the state/observation, and the action with a min-max normalization to $[-1,+1]$: given the
(300 or 50) expert's trajectories, we computed for each state and action
component, the min and max values, and then we applied min-max normalization:
$s_{\text{norm}}\coloneqq 2[(s-s_{\min}) / (s_{\max}-s_{\min}+1e-6)-0.5]$,
$a_{\text{norm}}\coloneqq 2[(a-a_{\min}) / (a_{\max}-a_{\min}+1e-6)-0.5]$.
Instead, \texttt{Libero} image observations, having discrete values in
$\{0,1,\dotsc,255\}$, have been normalized through: $i_{\text{norm}}\coloneqq i/
255.0 - 0.5$.

\subsection{Additional Details for each Simulation}\label{apx:more details}

Here we provide additional simulation-specific details for all the experiments described in the main paper, in particular for Figs. \ref{fig:val_loss_libero}-\ref{fig:robomimic_val} and Tables \ref{table:final_success}-\ref{table:final_success_explicit_ensemble}. First, we describe the evaluation protocol adopted.

\subsubsection{Evaluation Protocol}

For every \texttt{Robomimic} task, the success rate of each policy seed is assessed based on 250 rollouts played for a maximum number of timesteps of 300 for \texttt{square}, 250 for \texttt{can}, 650 for both \texttt{transport} and \texttt{tool hang}, while for each \texttt{Libero} task we consider 40 rollouts played for a maximum number of timesteps of 400. Moreover, in \texttt{Robomimic} the mean action used for the computation of the validation loss is the average of 50 action samples, while in \texttt{Libero} we used 20 action samples. As a last note, we mention that throughout all the paper, for delayed policies induced by some chunked (e.g., $\pidelay^{k}[\pihat_{20}]$), since they are not well-defined in the first timesteps of a simulation, we initialize the simulation by playing a single chunk of $\pihat_{20}^{k-1}$.

\subsubsection{\texttt{Libero} Validation Loss (\Cref{fig:val_loss_libero})}\label{apx:fig1}

For this simulation, we trained three $\pihat_{20}$ policies on a subset of all the \texttt{Libero} data. Specifically, instead of using all the 50 expert trajectories for each of the 90 tasks (overall $50*90=4500$), we used for training only 25 expert trajectories for each task (overall $25*90=2250$), and kept the remaining data for validation. Note that both policies have been trained on the \emph{same} data, but using different random seeds for training, which affect the weight initialization, the random batches selection, etc.. For validation, for every delay $k\in\{0\}\cup[19]$, for every timestep $h\ge 20$ and validation trajectory $i$,\footnote{Note that we skip the first 20 timesteps of each validation trajectory to make a fair comparison with higher delays. Indeed, while we can assess the validation error for $k=0$, we are not able to do so for, e.g., $k=15$ when $h<15$.} we sampled 20 actions from $\pidelay^{k}[\pihat_{20}](\cdot|s^i_{h-k})$, averaged them obtaining $\bar{a}$, and then computed the squared error between $\bar{a}$ and the expert's action $a^i_h$, for all coordinates except the gripper component: $\|\bar{a} - a^i_h\|_2^2$ (note that both actions are \emph{normalized} during this simulation). We repeated this procedure for each of the 90 tasks, for each of the 25 validation trajectories of each task, and for each timestep of each trajectory (whose length depends on the specific demonstration and task), and finally averaged over all these values (note that this procedure makes tasks with longer horizon more representative, compliant to the training procedure). After having obtained these 20 values for the first policy seed, we repeated the same procedure for the two other policy seeds. Finally, we compute the mean and standard error of the mean (SEM, standard deviation divided by $\sqrt{3}$ in this case) for these three 20-dim arrays, obtaining the Delay($n$) plot of Fig. \ref{fig:val_loss_libero}. Instead, the AC($n$) plot has been obtained by replacing the three 20-dim arrays with their cumulative means, and then computing the mean and SEM similarly as for Delay($n$).

\subsubsection{\texttt{Libero} Success Rate (\Cref{fig:success_libero})}\label{apx:fig2}

We trained three $\pihat_{20}$ policies on all the \texttt{Libero} data with different random seeds, rolled them out either as $\pidelay^{k}[\pihat_{20}]$ or as $\pihat_{20}^{k'}$ for $k\in\{0,2,5,8,12,15,19\}$ and $k'\in\{3,6,9,13,16,20\}$ (recall that $\pidelay^{0}[\pihat_{20}]$ coincides with $\pihat_{20}^{1}$), and finally computed the mean/SEM over these three seeds (so neglecting the variability due to using a finite number of rollouts per task).

\subsubsection{\texttt{Libero} Success Rate vs. Validation Loss (\Cref{fig:val_vs_success_libero})}\label{apx:fig3}

For this simulation, we used the same validation values as for Fig. \ref{fig:val_loss_libero} (see Appendix \ref{apx:fig1}), obtained by averaging over the three policy seeds but not over the tasks. Instead, the success rates are obtained as those in Table \ref{table:final_success} (see Appendix \ref{apx:table1}), i.e., by training fifteen policies $\pihat_{20}$ on all the \texttt{Libero} data with different random seeds, rolling them out either as $\pihat_{20}^{1}$ or as $\pihat_{20}^{10}$ (we do 40 rollouts per task), and finally averaging the success rate over the fifteen policy seeds. The line is computed as the least square solution over all the plotted points.

\subsubsection{\texttt{Libero} With and Without Action Chunking (\Cref{fig:libero_repr_learn})}\label{apx:fig4}

For $\pihat_{20}^1$ and $\pidelay^{5}[\pihat_{20}]$, we trained three $\pihat_{20}$ policies on all the \texttt{Libero} data with three different random seeds. Instead, for $\pihat_{1}^1$ we actually trained three $\pihat_{1}$ policies, while for $\pidelay^{5}$ we also trained three different policies of this kind. Then evaluation has been carried out as usual, with 40 rollouts per task, then averaging the success rate among all tasks. The SEM reported in Fig. \ref{fig:libero_repr_learn} corresponds to the three seeds only. As a last note on this experiment, we would like to mention that, for $\pidelay^{5}$, differently from the induced one $\pidelay^{5}[\pihat_{20}]$, we cannot initialize it with a chunk; thus, what we did was just to play the ``wrong'' action for the first five timesteps, that is, for timestep $0\le h<5$, we simply gave the first observation of the simulation $o_0$ in input to policy $\pidelay^{5}$, and played the outputted action.

\subsubsection{\texttt{Robomimic} Success Rate (\Cref{fig:success_robomimic})}\label{apx:fig5}

For each of the four \texttt{Robomimic} tasks considered, we trained ten $\pihat_{20}$ using different random seeds, rolled them out in the corresponding environment either as $\pidelay^{k}[\pihat_{20}]$ or as $\pihat_{20}^{k'}$ for $k\in\{0,2,4,7,10,13,16\}$ and $k'\in\{5,10,15,20\}$, and then computed the success rates. The values reported in Fig. \ref{fig:success_robomimic} correspond to the mean and SEM over these forty success rates.

\subsubsection{\texttt{Robomimic} Validation Loss (\Cref{fig:robomimic_val})}\label{apx:fig6}

The calculation of the validation loss for AC($n$) and Delay($n$) is analogous to those in Appendix \ref{apx:fig1}, with the difference that we use ten policy seeds trained on a subset of 30 demonstration trajectories per task (the remaining 170 expert trajectories are used for validation) and compute means over 50 sample actions. For AC($n$)-TE, calculations are analogous. Instead, for Delay($n$)-Ens, we considered three subsets of five policy seeds each from this group of ten policy seeds; for each group of seeds, our action predictions are the average of mean actions over the considered five seeds. Then, calculation of mean and SEM follows analogously using these three values.

\subsubsection{General Results (Tables \ref{table:final_success}-\ref{table:final_success_explicit_ensemble})}\label{apx:table1}

We trained fifteen $\pihat_{20}$ policies on all the \texttt{Libero} data (one multitask policy for each libero suite) with different random seeds, and then rolled out each policy seed as AC$(n)$ for $n\in\{1,10\}$, Delay$(n)$ for $n\in\{6\}$, AC$(n)$-RDE for $n\in\{10\}$, and AC$(n)$-TE for $n\in\{10\}$ (to be precise, we run all the fifteen seeds only for \texttt{Libero-90}, while ``only'' ten seeds for other suites, as success rate was roughly already converged). For the ensembles AC$(n)$-Ens, Delay$(n)$-Ens and AC$(n)$-RDE-Ens, we grouped the fifteen policy seeds into three disjoint groups of five seeds each, and then rolled out each ensemble and averaged the results. We used $n=10$ for AC$(n)$-Ens and AC$(n)$-RDE-Ens, and $n=5$ for Delay$(n)$-Ens. We did so for both aggregation approaches described in Section \ref{sec:ac_ensembles}.

Regarding \texttt{Robomimic}, things are similar: we trained fifteen $\pihat_{20}$ policies on \emph{each} \texttt{Robomimic} task using all the data, rolled them out, then average results. The difference is that we considered multiple values for $n$, specifically: for AC$(n)$ we used $n\in\{1,5,10\}$, for D$(n)$ we used $n\in\{3,5\}$, for AC$(n)$-Ens we used $n\in\{5,10\}$ for both aggregation methods, for Delay$(n)$-Ens we used $n\in\{3,5\}$ for both aggregation methods, while for AC$(n)$-RDE-Ens we used only $n=10$. Complete results are reported in Tables \ref{tab:complete_robomimic1}-\ref{tab:complete_robomimic4}.

\begin{table}[t!]
    \centering
    \begin{tabular}{lll}
    \hline
     Algorithm          & $\mathcal{J}$         & t succ   \\
    \hline
     Markovian           & 69.0 ± 0.8 & 157 ± 1  \\
     AC$(5)$           & 83.3 ± 0.8 & 148 ± 0  \\
     AC$(10)$          & 85.4 ± 0.5 & 144 ± 0  \\
     Delay$(5)$            & 76.8 ± 0.8 & 152 ± 1  \\
     Delay$(3)$            & 80.8 ± 0.6 & 150 ± 0  \\
     AC$(10)$-RDE      & 82.4 ± 0.6 & 146 ± 0  \\
     AC$(10)$-TE & 80.6 ± 0.5 & 147 ± 0  \\
     \hline
     AC$(5)$-Ens mean       & 87.5 ± 0.5 & 145 ± 1  \\
     AC$(5)$-Ens random     & 88.0 ± 0.3 & 146 ± 1  \\
     AC$(10)$-Ens mean      & 88.3 ± 0.3 & 143 ± 1  \\
     AC$(10)$-Ens random    & 87.1 ± 0.2 & 143 ± 0  \\
     Delay$(3)$-Ens mean        & 87.5 ± 0.2 & 148 ± 1  \\
     Delay$(3)$-Ens random      & 87.7 ± 1.1 & 147 ± 1  \\
     Delay$(5)$-Ens mean        & 85.9 ± 0.7 & 147 ± 1  \\
     Delay$(5)$-Ens random      & 84.8 ± 1.3 & 148 ± 0  \\
    \hline
    \end{tabular}
    \caption{Complete success rate results for \texttt{Robomimic square PH}.}
    \label{tab:complete_robomimic1}
\end{table}

\begin{table}[t!]
    \centering
    \begin{tabular}{lll}
    \hline
     Algorithm          & $\mathcal{J}$         & t succ   \\
    \hline
     Markovian           & 83.7 ± 0.4 & 116 ± 0  \\
     AC$(5)$           & 95.2 ± 0.4 & 112 ± 0  \\
     AC$(10)$          & 97.2 ± 0.3 & 110 ± 0  \\
     Delay$(5)$            & 93.1 ± 0.4 & 112 ± 0  \\
     Delay$(3)$            & 93.5 ± 0.4 & 112 ± 0  \\
     AC$(10)$-RDE      & 96.7 ± 0.2 & 110 ± 0  \\
     AC$(10)$-TE & 96.2 ± 0.3 & 113 ± 0  \\
     \hline
     AC$(5)$-Ens mean       & 97.7 ± 0.1 & 109 ± 0  \\
     AC$(5)$-Ens random     & 96.9 ± 0.2 & 111 ± 0  \\
     AC$(10)$-Ens mean      & 98.5 ± 0.1 & 109 ± 0  \\
     AC$(10)$-Ens random    & 97.6 ± 0.2 & 110 ± 0  \\
     Delay$(5)$-Ens random      & 96.9 ± 0.3 & 110 ± 0  \\
     Delay$(5)$-Ens mean        & 97.5 ± 0.1 & 109 ± 0  \\
     Delay$(3)$-Ens random      & 97.7 ± 0.4 & 110 ± 0  \\
     Delay$(3)$-Ens mean        & 97.5 ± 0.3 & 110 ± 0  \\
    \hline
    \end{tabular}
    \caption{Complete success rate results for \texttt{Robomimic can PH}.}
    \label{tab:complete_robomimic2}
\end{table}

\begin{table}[t!]
    \centering
    \begin{tabular}{lll}
    \hline
     Algorithm          & $\mathcal{J}$         & t succ   \\
    \hline
     Markovian           & 3.3 ± 0.3  & 512 ± 4  \\
     AC$(5)$           & 7.5 ± 0.5  & 483 ± 4  \\
     AC$(10)$          & 12.6 ± 0.5 & 472 ± 3  \\
     Delay$(5)$            & 7.9 ± 0.5  & 479 ± 4  \\
     Delay$(3)$            & 6.8 ± 0.5  & 493 ± 3  \\
     AC$(10)$-RDE      & 12.1 ± 0.5 & 477 ± 3  \\
     AC$(10)$-TE & 12.2 ± 0.6 & 483 ± 2  \\
     \hline
     AC$(5)$-Ens mean       & 39.1 ± 1.3 & 466 ± 3  \\
     AC$(5)$-Ens random     & 26.3 ± 1.0 & 467 ± 2  \\
     AC$(10)$-Ens mean      & 41.5 ± 0.4 & 461 ± 1  \\
     AC$(10)$-Ens random    & 22.8 ± 0.3 & 467 ± 3  \\
     Delay$(5)$-Ens random      & 35.6 ± 0.8 & 470 ± 3  \\
     Delay$(5)$-Ens mean        & 37.3 ± 0.2 & 469 ± 4  \\
     Delay$(3)$-Ens random      & 38.9 ± 2.1 & 468 ± 3  \\
     Delay$(3)$-Ens mean        & 37.6 ± 1.2 & 471 ± 1  \\
    \hline
    \end{tabular}
    \caption{Complete success rate results for \texttt{Robomimic transport PH}.}
    \label{tab:complete_robomimic3}
\end{table}

\begin{table}[t!]
    \centering
    \begin{tabular}{lll}
    \hline
     Algorithm          & $\mathcal{J}$         & t succ   \\
    \hline
     Markovian           & 28.0 ± 0.8 & 494 ± 3  \\
     AC$(5)$           & 65.2 ± 1.1 & 455 ± 2  \\
     AC$(10)$          & 75.2 ± 0.5 & 441 ± 2  \\
     Delay$(5)$            & 36.1 ± 0.7 & 477 ± 3  \\
     Delay$(3)$            & 51.6 ± 0.9 & 468 ± 2  \\
     AC$(10)$-RDE      & 71.8 ± 0.8 & 449 ± 1  \\
     AC$(10)$-TE & 42.2 ± 1.0 & 462 ± 2  \\
     \hline
     AC$(5)$-Ens mean       & 71.9 ± 2.3 & 441 ± 5  \\
     AC$(5)$-Ens random     & 87.6 ± 1.6 & 436 ± 1  \\
     AC$(10)$-Ens mean      & 74.9 ± 1.8 & 434 ± 1  \\
     AC$(10)$-Ens random    & 86.7 ± 1.1 & 439 ± 2  \\
     Delay$(5)$-Ens random      & 72.7 ± 1.2 & 447 ± 2  \\
     Delay$(5)$-Ens mean        & 46.8 ± 1.1 & 455 ± 4  \\
     Delay$(3)$-Ens random      & 84.9 ± 1.2 & 439 ± 1  \\
     Delay$(3)$-Ens mean        & 62.7 ± 1.7 & 447 ± 3  \\
    \hline
    \end{tabular}
    \caption{Complete success rate results for \texttt{Robomimic tool hang PH}.}
    \label{tab:complete_robomimic4}
\end{table}

\subsection{Additional Details for Real-World Experiments}\label{apx:details real world}

We trained our diffusion policies using the same hyperparameters as for \texttt{Libero}, with the differences that the MLP has five layers with 2000 neurons each, images in input have dimension $120\times 120\times 3$, and we use 15 denoising steps instead of 20 (to speed up inference for real-time evaluation of the policies). Note that, as action space, we use delta control for training and playing our policies (where delta is referred to the current state, and not the initial state of the chunk). To achieve this, we convert the actions from their original version as absolute joints, to delta joints (simply by computing the difference with the current joint absolute position), train on this space, and then at inference time compute the sum between the predicted action and the current position, and give this new (absolute joint) action to the low-level controller. We mention that we use a Robotiq gripper.

For evaluation, for each considered method, we rolled out 50 trajectories all with the same initial states (modulo some accuracy differences in re-setting the objects to the initial positions). Results are reported in Figure \ref{fig:real}. Note that, as mentioned in the main text, every policy is actually reported with a missing delay of 1 (which we denote with Delay$(2)$). So, e.g., AC($10$) should be AC($10$)-Delay$(2)$, with which we mean the policy induced from $\pihat_{20}$ which plays action chunks with size 10 using a delay of 1, i.e., which plays $a_{h:h+10}|s_{h-1}$. In a similar manner, AC($10$)-Delay$(2)$-RDE represents the policy that, at each timestep $h$, samples at random an integer $i\in[10]$ (crucially, $\{0\}\notin[10]$), and playes $a_h|s_{h-i}$.
Note that we decided to apply a delay of 1 to each of these policies in order to run them in real time: indeed, by avoiding to condition on the current state, the inference of the chunk can be executed asynchronous from at least one timestep in the past, giving the policy server enough time to make inference without having to wait for a chunk to be ready.

Crucially, these real-world results \emph{replicate our simulation results}. Note that Delay($7$) is better than Delay($2$), but still not as good as AC($10$)-Delay$(2)$. Instead, the method which involves an implicit ensemble of delayed policies, i.e., AC($10$)-Delay$(2)$-RDE, matches (or even slightly outperforms) AC($10$)-Delay$(2)$, as expected.

As a last note, we mention that no tuning of the chunk sizes nor the delay have been made. We just chosen a delay of 1 based on the inference time of our diffusion policies, and then committed in advance to the aforementioned chunk sizes (1 and 10) and delays (5). We then collected 50 rollouts per method and reported the results.

\section{Additional Simulations}

In this appendix, we provide results for additional simulations we conducted. In particular, we begin with a simulation alternative to AC$(n)$-RDE to show that we do not need temporal consistency in \texttt{Robomimic PH} and \texttt{Libero-90} (Appendix \ref{apx: prod marg}), then we show that action chunking and delayed policies are more sample efficient than Markovian policies in \texttt{Libero-90} (Appendix \ref{apx: sample eff}). Next, we show that the dynamics of both \texttt{Robomimic PH} and \texttt{Libero} are smooth, supporting our assumption for Theorem \ref{thm:delayed_upper_bound} (Appendix \ref{apx: smooth dynamics}).
Finally, we report additional results on \texttt{Robomimic MH} (Multi-Human datasets \cite{robomimic2021}) and other \texttt{Libero} suites (Appendix \ref{apx: robomimic mh}), and results with using the OpenPi policy \cite{black2024pi0} on \texttt{Libero} suites other than 90 (Appendix \ref{apx: openpi}).

\subsection{Action Chunking without Temporal Consistency}\label{apx: prod marg}

In Section \ref{sec:ac_ensembles}, we showed that AC$(n)$-RDE matches the performance of AC$(n)$, implying that, i.a., the temporal consistency property of action chunking (i.e., playing a \emph{joint} distribution over a sequence of actions) is not required. Here, we provide an alternative simulation to show this. Specifically, we trained fifteen $\pihat_{20}$ policies in both \texttt{Libero} and each individual \texttt{Robomimic} task, and then we compared the average performance of $\pihat_{20}^{10}$ with that of the policy that plays in sequence $\pidelay^0[\pihat_{20}]$, $\pidelay^1[\pihat_{20}]$, $\dotsc$, $\pidelay^9[\pihat_{20}]$, and then restarts from $\pidelay^0[\pihat_{20}]$, $\pidelay^1[\pihat_{20}]$, $\dotsc$ (we call the latter policy AC$(10)$-Ordered). Intuitively, this latter policy plays delayed policies in the same order as the action chunking $\pihat_{20}^{10}$ policy does; the only difference is that $\pihat_{20}^{10}$ is temporally consistent, i.e., tries to match the joint action distribution exhibited by the expert, while AC$(10)$-Ordered plays each action in the chunk according to its marginal probability. As shown in Table \ref{tab:prodmarg}, also this strategy matches the performance of AC$(10)$ across all the considered simulation tasks.

\begin{table}[h!]
    \centering
    \begin{tabular}{lll}
    \hline
     Task          & AC$(10)$       &  AC$(10)$-Ordered  \\
    \hline
     \texttt{Libero 90}           & 89.2 ± 0.2 & 93.5 ± 0.1  \\
     \texttt{Robomimic square PH}           & 85.4 ± 0.5 & 85.2 ± 0.5  \\
     \texttt{Robomimic can PH}           & 97.2 ± 0.3 & 96.8 ± 0.3  \\
     \texttt{Robomimic transport PH}           & 12.6 ± 0.5 & 12.5 ± 0.5  \\
     \texttt{Robomimic tool hang PH}           & 75.2 ± 0.5 & 75.2 ± 0.6  \\
    \hline
    \end{tabular}
    \caption{Another simulation about temporal consistency.}
    \label{tab:prodmarg}
\end{table}

\subsection{Sample Efficiency in \texttt{Libero}}\label{apx: sample eff}

Here we aim to show explicitly that action chunking and delayed policies are more sample efficient than Markovian policies. To this aim, we focused on \texttt{Libero}, and chose three dataset sizes: 10, 25, 50. We then splitted the expert demonstration data for each task (overall 50), into two disjoint subsets of 10 trajectories, two disjoing subsets of 25 trajectories, and then all the 50 trajectories. We then trained two multitask $\pihat_{20}$ policies on each of these five datasets, rolled them out for 100 rollouts in each task, computed the success rates and averaged these values per dataset size, obtaining the plot in Fig. \ref{fig:sample_eff}. It is clear from such figure that the gap between AC$(10)$ and AC$(1)$ decreases as we increase the number of expert demonstrations, in particular moving from 0.39 for $N=10$, to 0.31 for $N=25$, up to 0.20 for $N=50$. A similar pattern is observed also for Delay$(6)$, further supporting the insights of our theoretical analysis.

\begin{figure}[h!]
    \centering
    \includegraphics[width=0.5\linewidth]{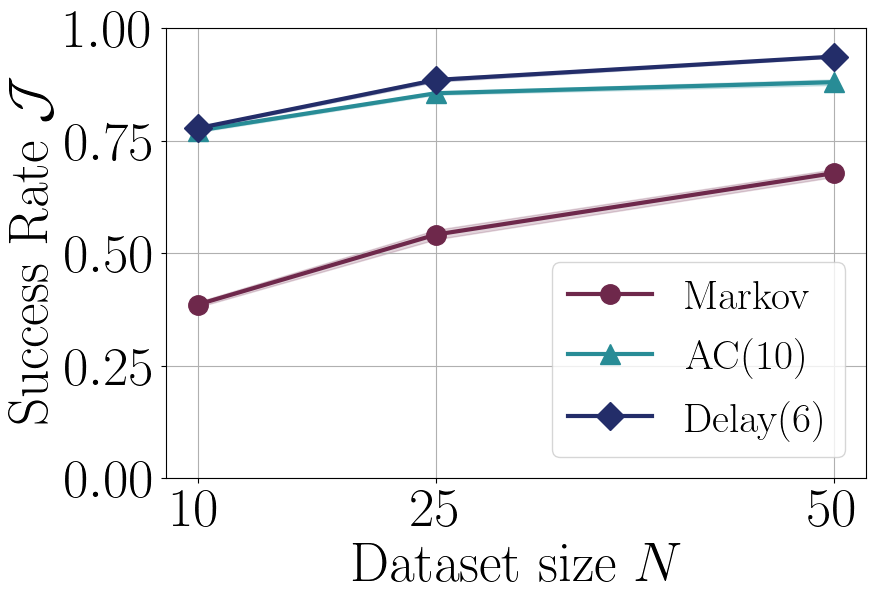}
    \caption{Comparison of Markovian, action chunking and delayed policies for increasing dataset sizes.}
    \label{fig:sample_eff}
\end{figure}

\subsection{\texttt{Robomimic} Dynamics are Smooth}\label{apx: smooth dynamics}

In Section \ref{sec:hypoth_horizon}, we made the theoretical assumption that the dynamics is smooth. This would definitely be true at least for the action translation components, if the underlying low-level controller was perfect, and could deploy the map $s_{h+1}=s_h+a_h$. In Figs. \ref{fig:compx}-\ref{fig:compz}, we took the first expert trajectory in the non-normalized \texttt{Robomimic square} dataset, and plot the sequence of next state end effector position components along with the trivial predictions $s_h+a_h$, $s_{h-3}+a_{h-3}$ and $s_{h-6}+a_{h-6}$ as a function of the timestep. It is clear that, with rare exceptions mostly due to interaction with objects, the smooth map $s_{h-3}+a_{h-3}$ is a good predictor of this environment dynamics.

Intuitively, it makes sense that the true dynamics for delta control is something like $s_{h+1}=s_{h-3}+a_{h-3}$ instead of $s_{h+1}=s_h+a_h$. Indeed, note that the low-level controller in \texttt{Libero} and \texttt{Robomimic} operates at a 500Hz frequency, while our (high-level) policies takes actions with frequency 20Hz. Thus, the low-level controller has $500/20=25$ steps of 0.002 sec to try to get the robot to the desired/commanded position $s_{h-3}+a_{h-3}$ at time step $h-3$; clearly, this amount of time might not be enough, and so the robot arm is able to get to position $s_{h-3}+a_{h-3}$ just some timesteps later, i.e., at $s_{h+1}$ (of course as long as the subsequent actions $a_{h-2},a_{h-1},a_{h}$ do not refer to completely different desired positions, which is not the case as humans cannot take completely uncorrelated actions at every $h$ (i.e., every $1/20\text{Hz}=0.05$sec), as they are not that fast).

\begin{figure*}[h!]
    \centering

    \begin{minipage}[t]{0.32\textwidth}
        \centering
        \includegraphics[width=\linewidth]{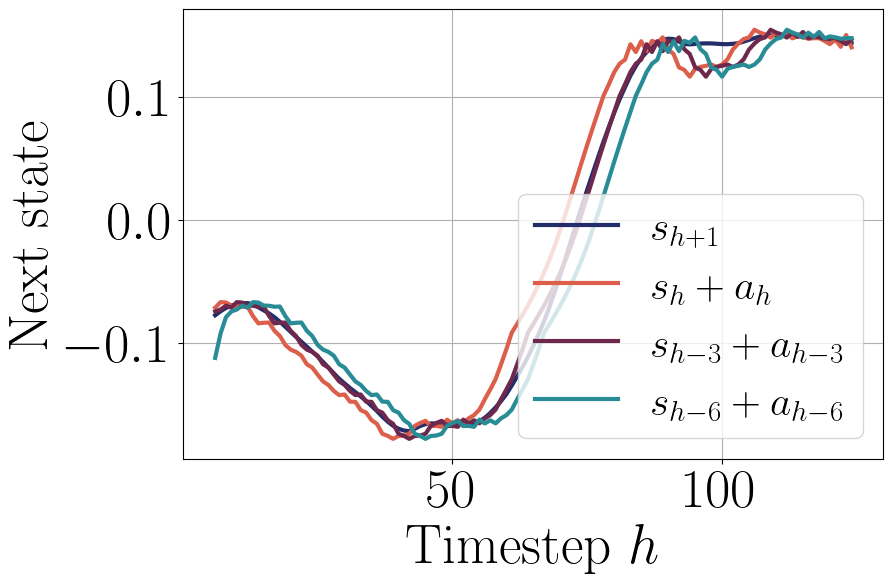}
        \caption{Action component for translation along the first axis.}
        \label{fig:compx}
    \end{minipage}
    \hfill
    \begin{minipage}[t]{0.32\textwidth}
        \centering
        \includegraphics[width=\linewidth]{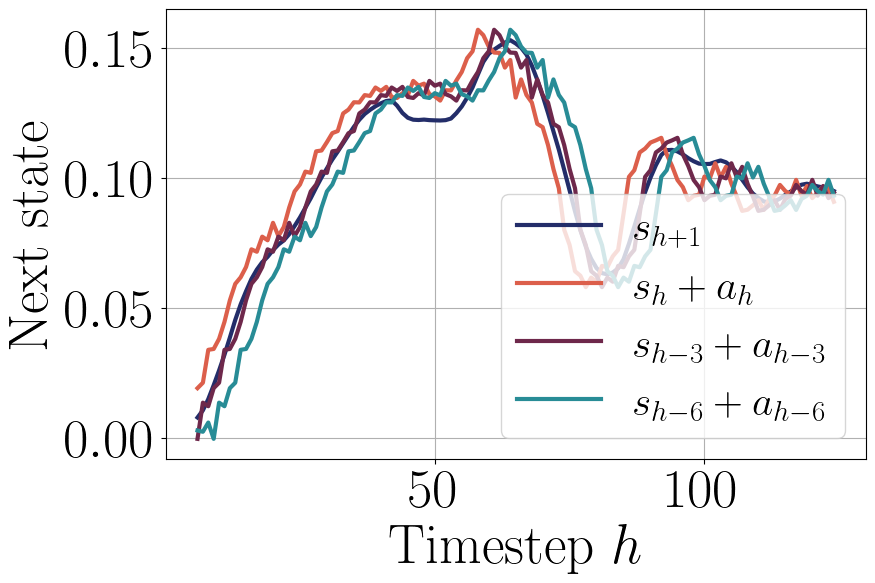}
        \caption{Action component for translation along the second axis.}
        \label{fig:compy}
    \end{minipage}
    \hfill
    \begin{minipage}[t]{0.32\textwidth}
        \centering
        \includegraphics[width=\linewidth]{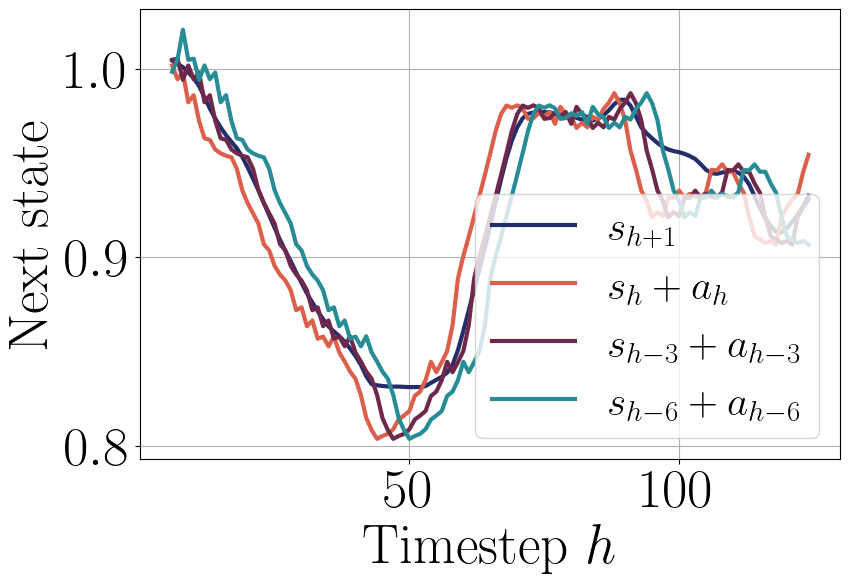}
        \caption{Action component for translation along the third axis.}
        \label{fig:compz}
    \end{minipage}
    
\end{figure*}

\subsection{Results on \texttt{Robomimic MH} and Other \texttt{Libero} Suites}\label{apx: robomimic mh}

For \texttt{Robomimic MH}, we trained 15 single task policies (and for ensemble average over three groups of 5 seeds). Results are shown in Table \ref{table:other_mh}, and are analogous to Table \ref{table:final_success}.

Additionally, we also trained 3 multitask policies per \texttt{Libero} suite, and compared the different delays with different chunk sizes, just like we did in Figure \ref{fig:success_libero} for \texttt{Libero-90}. See Tables \ref{tab:AC-D-libero_10}-\ref{tab:AC-D-libero_object}.

\begin{table*}[t!]
\centering
\scalebox{0.75}
{
\begin{tabular}{l!{\vrule width 1pt}ccc|cc|cc}
\toprule
\texttt{Task}  & Markovian & AC$(n)$ & Delay$(n)$ & AC$(n)$-RDE & AC$(n)$-TE & AC$(n)$-Ens & Delay$(n)$-Ens \\
\midrule

\texttt{Robomimic Can MH}  & $77.8$ {\tiny $\pm {0.7}$} & $93.9$ {\tiny $\pm {0.4}$} & $90.9$ {\tiny $\pm {0.5}$} & $94.3$ {\tiny $\pm {0.4}$} & $93.7$ {\tiny $\pm {0.4}$} & $\mathbf{97.7}$ {\tiny $\pm {0.8}$} & $\mathbf{96.9}$ {\tiny $\pm {0.1}$}  \\

\texttt{Robomimic Square MH}  & $43.4$ {\tiny $\pm {0.9}$} & $75.2$ {\tiny $\pm {0.7}$} & $63.4$ {\tiny $\pm {0.7}$} & $75.0$ {\tiny $\pm {0.6}$} & $67.7$ {\tiny $\pm {0.5}$} & $\mathbf{81.1}$ {\tiny $\pm {0.9}$} & ${79.2}$ {\tiny $\pm {0.6}$}  \\

\texttt{Robomimic Transport MH}  & $0.4$ {\tiny $\pm {0.1}$} & $4.7$ {\tiny $\pm {0.3}$} & $2.8$ {\tiny $\pm {0.2}$} & $4.9$ {\tiny $\pm {0.3}$} & $4.9$ {\tiny $\pm {0.3}$} & $\mathbf{30.0}$ {\tiny $\pm {0.9}$} & ${24.5}$ {\tiny $\pm {0.9}$}  \\

\bottomrule
\end{tabular}
}
\caption{
Comparison of success rates across other \texttt{Robomimic MH}.
}
\label{table:other_mh}
\end{table*}

\begin{table}[t!]
    \centering
    \begin{tabular}{lll}
\hline
 Algorithm   & SR         & t succ   \\
\hline
 Markovian    & 19.2 ± 1.5 & 350 ± 7  \\
 AC$(5)$    & 80.3 ± 1.1 & 288 ± 2  \\
 AC$(10)$   & 88.0 ± 0.4 & 271 ± 1  \\
 AC$(15)$   & 88.2 ± 0.8 & 264 ± 1  \\
 Delay$(3 )$    & 78.1 ± 0.7 & 287 ± 2  \\
 Delay$(6 )$    & 86.7 ± 1.6 & 271 ± 1  \\
 Delay$(9 )$    & 84.1 ± 0.6 & 264 ± 1  \\
 Delay$(13)$    & 74.5 ± 1.3 & 264 ± 1  \\
 Delay$(16)$    & 64.1 ± 0.4 & 267 ± 0  \\
\hline
\end{tabular}
    \caption{Comparison different chunk sizes and delays for \texttt{Libero-10}.}
    \label{tab:AC-D-libero_10}
\end{table}

\begin{table}[t!]
    \centering
    \begin{tabular}{lll}
\hline
 Algorithm   & SR         & t succ   \\
\hline
 Markovian    & 55.5 ± 1.4 & 126 ± 1  \\
 AC$(5)$    & 87.8 ± 1.5 & 112 ± 0  \\
 AC$(10)$   & 90.6 ± 0.3 & 109 ± 0  \\
 AC$(15)$   & 91.7 ± 0.7 & 107 ± 0  \\
 Delay$(3 )$    & 86.9 ± 1.2 & 112 ± 0  \\
 Delay$(6 )$    & 92.0 ± 0.9 & 109 ± 0  \\
 Delay$(9 )$    & 88.6 ± 1.5 & 108 ± 0  \\
 Delay$(13)$    & 79.7 ± 0.3 & 109 ± 0  \\
 Delay$(16)$    & 72.7 ± 0.8 & 108 ± 0  \\
\hline
\end{tabular}
    \caption{Comparison different chunk sizes and delays for \texttt{Libero-Spatial}.}
    \label{tab:AC-D-libero_spatial}
\end{table}

\begin{table}[t!]
    \centering
    \begin{tabular}{lll}
\hline
 Algorithm   & SR         & t succ   \\
\hline
 Markovian    & 63.7 ± 2.1 & 137 ± 0  \\
 AC$(5)$    & 92.2 ± 0.8 & 119 ± 0  \\
 AC$(10)$   & 96.5 ± 0.3 & 113 ± 0  \\
 AC$(15)$   & 96.4 ± 0.9 & 112 ± 0  \\
 Delay$(3 )$    & 92.5 ± 0.7 & 120 ± 1  \\
 Delay$(6 )$    & 96.5 ± 0.2 & 114 ± 0  \\
 Delay$(9 )$    & 93.9 ± 0.3 & 113 ± 0  \\
 Delay$(13)$    & 92.0 ± 0.3 & 112 ± 1  \\
 Delay$(16)$    & 86.9 ± 0.3 & 111 ± 0  \\
\hline
\end{tabular}
    \caption{Comparison different chunk sizes and delays for \texttt{Libero-Goal}.}
    \label{tab:AC-D-libero_goal}
\end{table}

\begin{table}[t!]
    \centering
    \begin{tabular}{lll}
\hline
 Algorithm   & SR         & t succ   \\
\hline
 Markovian    & 67.0 ± 0.9 & 160 ± 1  \\
 AC$(5)$    & 95.9 ± 0.5 & 142 ± 1  \\
 AC$(10)$   & 96.9 ± 0.6 & 137 ± 1  \\
 AC$(15)$   & 98.4 ± 0.3 & 138 ± 1  \\
 Delay$(3 )$    & 94.7 ± 0.3 & 143 ± 1  \\
 Delay$(6 )$    & 97.2 ± 0.3 & 138 ± 0  \\
 Delay$(9 )$    & 96.6 ± 0.2 & 138 ± 0  \\
 Delay$(13)$    & 86.4 ± 0.6 & 142 ± 1  \\
 Delay$(16)$    & 74.6 ± 1.5 & 141 ± 1  \\
\hline
\end{tabular}
    \caption{Comparison different chunk sizes and delays for \texttt{Libero-Object}.}
    \label{tab:AC-D-libero_object}
\end{table}

\subsection{Results with $\pi_{0.5}$}\label{apx: openpi}

We used the open-source weights for the $\pi_{0.5}$ policy \cite{intelligence2025pi_} fine-tuned on \texttt{Libero} available at \url{gs://openpi-assets/checkpoints/pi05_libero} (see repository \url{https://github.com/Physical-Intelligence/openpi} \cite{black2024pi_0}). We ran 40 rollouts for each \texttt{Libero} task in every suite different than 90, as such policy has not been fine-tuned for this suite. Results are reported in Table \ref{table:other_openpi}. As you can see, they are compatible with those we obtained by training our own DDPM \cite{ho2020ddpm} policies. Note that this fine-tuned policy has an overall chunk size of 10. We also conjecture that the performance of the Markovian policy is much better here because, i.a., a pre-processing of the demonstration data removing pauses has been applied.

\begin{table*}[t!]
\centering
\scalebox{0.75}
{
\begin{tabular}{l!{\vrule width 1pt}ccc|ccc|cc}
\toprule
\texttt{Task}  & Markovian & AC$(5)$ & AC$(10)$ & Delay$(3)$ & Delay$(6)$ & Delay$(9)$ & AC$(10)$-RDE & AC$(10)$-TE\\
\midrule

\texttt{Libero-10}  & $84.0$ {\tiny $\pm {6.1}$} & $\mathbf{91.2}$ {\tiny $\pm {4.1}$} & $\mathbf{93.8}$ {\tiny $\pm {1.9}$} & $\mathbf{90.5}$ {\tiny $\pm {4.7}$} & $\mathbf{91.0}$ {\tiny $\pm {2.7}$}& ${88.5}$ {\tiny $\pm {1.9}$}  & $\mathbf{93.8}$ {\tiny $\pm {2.5}$} & $\mathbf{92.2}$ {\tiny $\pm {1.4}$}\\

\texttt{Libero-Spatial}  & $\mathbf{96.7}$ {\tiny $\pm {1.0}$} & $\mathbf{96.7}$ {\tiny $\pm {1.1}$} & $\mathbf{97.8}$ {\tiny $\pm {1.0}$} & $\mathbf{96.8}$ {\tiny $\pm {1.1}$} & $\mathbf{98.0}$ {\tiny $\pm {0.8}$}& $\mathbf{95.0}$ {\tiny $\pm {1.3}$}  & $\mathbf{97.0}$ {\tiny $\pm {0.9}$} & $\mathbf{98.0}$ {\tiny $\pm {0.9}$}\\

\texttt{Libero-Goal}  & $92.5$ {\tiny $\pm {2.6}$} & $\mathbf{97.5}$ {\tiny $\pm {1.1}$} & $97.5$ {\tiny $\pm {0.6}$} & $96.7$ {\tiny $\pm {1.2}$} & $\mathbf{98.5}$ {\tiny $\pm {0.4}$}& $97.8$ {\tiny $\pm {0.6}$}  & $\mathbf{99.0}$ {\tiny $\pm {0.5}$} & $\mathbf{98.5}$ {\tiny $\pm {0.5}$}\\

\texttt{Libero-Object}  & $91.2$ {\tiny $\pm {1.7}$} & $\mathbf{98.5}$ {\tiny $\pm {0.5}$} & $\mathbf{99.2}$ {\tiny $\pm {0.5}$} & $95.5$ {\tiny $\pm {1.1}$} & $97.8$ {\tiny $\pm {0.7}$}& $97.5$ {\tiny $\pm {0.9}$}  & $\mathbf{98.8}$ {\tiny $\pm {0.6}$} & $\mathbf{98.8}$ {\tiny $\pm {0.6}$}\\

\bottomrule
\end{tabular}
}
\caption{
Comparison of success rates for $\pi_{0.5}$ \cite{intelligence2025pi_} on \texttt{Libero}.
}
\label{table:other_openpi}
\end{table*}

\section{Stationary Markov Policies are Not Expressive Enough for Non-Markov Experts}\label{apx:randomwalk}

In this appendix, we aim to clarify that \emph{stationary} Markovian learners are not expressive enough for perfectly replicating (potentially) non-Markovian human behavior, even in the limit of infinite data. 
We provide here a very simple example of environment to show that \emph{stationary} Markovian policies are not expressive enough to achieve the same success rate of a \emph{non-Markovian} expert, even if the environment is Markov. 

To see this, take an MDP with $n$ states labeled $s_1\dots s_n$, where the action simply consists of moving either left or right ($s_n \to s_{n\pm 1}$). Success is defined by reaching either extremal state $1$ or $n$, while the initial state distribution is uniform over all states different from goal states. Assume the following simple non-Markovian demonstrator policy: select an action $\Delta = \pm 1$ uniformly at random at the first timestep and apply the same action for the rest of the episode. If we fit a Markov policy $\pi(a|s)$ to this dataset, we exhibit a random walk. While the demonstration trajectories all succeed in at most $n$ steps, the Markovian behavior-cloning policy $\pi$ succeeds with $p<1$ (Fig. \ref{fig:lineworld}). The discrepancy with standard literature~\cite{pmlr-v202-laroche23a} comes from the assumption of a finite horizon for success.

Note that if we were to simply condition $\pi$ on the step in the episode, we could perfectly match the state-action distribution of the demonstrator (though the trajectory distribution would differ) \cite{pmlr-v202-laroche23a}---the key piece in this example is that we do not condition $\pi$ on the timestep. While timestep conditioning would resolve the challenge here, in practice, it is non-standard to condition on timestep so we would not expect policies in practice to necessarily express non-Markovian demonstrators.
Being more non-Markovian than a Markovian policy, action chunking policies can provide more favorable rates here (intuitively, action chunking policies induce random walks with longer steps, and so are faster to reach the extrema of the interval).

\begin{figure}
    \centering
    \includegraphics[width=0.75\linewidth]{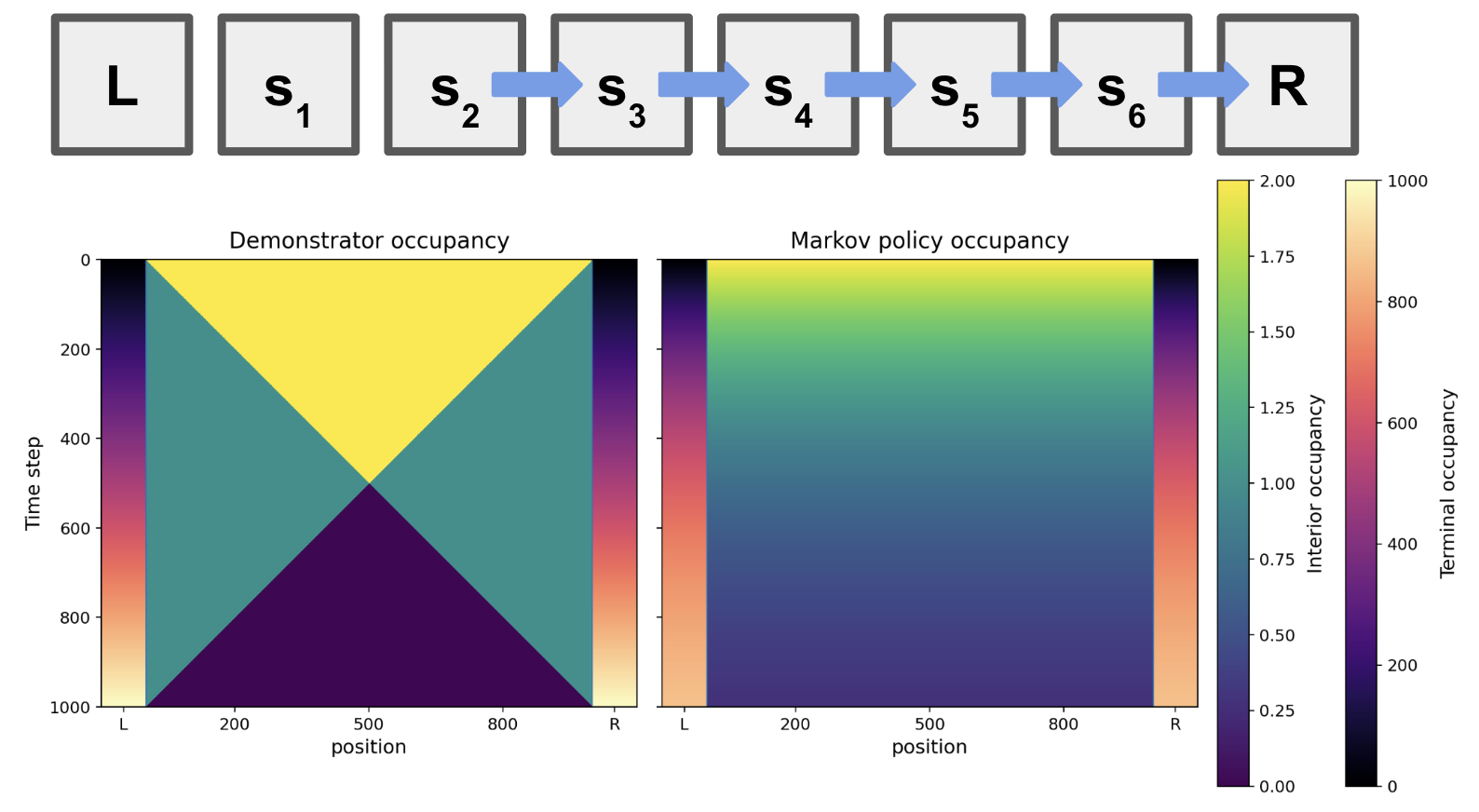}
    \caption{Illustrative example of random-walk behavior on a 1D line-world with $n=1000$ states. While the (non-Markovian) demonstrator moves constantly left or constantly right and arrives at either the left or right terminating states with 100\% probability before the chosen horizon, marginalizing this into a Markovian policy (i.e. via BC) results in significantly different state occupancy and in the limit of large $N$ only achieves $1-e^{-2}$ success rate at time $T=N$.}
    \label{fig:lineworld}
\end{figure}

\section{Predicting Actions Based on Delayed States Provably Mitigates Compounding Error}\label{sec:proofs}

Throughout, for some metrics $\ds$ and $\da$ in, respectively, the state $\cS$ and action $\cA$ spaces, we will make the following assumptions.

\begin{ass}[Dynamics is smooth in the state]\label{ass:dyn_state}
    The dynamics $P$ is 1-Lipschitz in the state. Formally:
    \begin{align*}
        \ds\bigr{P(s,a),P(s',a)}\le \ds\bigr{s,s'} \qquad \forall s,s'\in\cS, \forall a\in\cA.
    \end{align*}
\end{ass}

\begin{ass}[Dynamics is smooth in the action]\label{ass:dyn_action}
    The dynamics $P$ is 1-Lipschitz in the action. Formally:
    \begin{align*}
        \ds\bigr{P(s,a),P(s,a')}\le \da\bigr{a,a'} \qquad \forall s\in\cS, \forall a,a'\in\cA.
    \end{align*}
\end{ass}

\begin{ass}[Reward is smooth]\label{ass:reward}
    The reward $r$ is 1-Lipschitz in the state. Formally:
    \begin{align*}
        \biga{r(s)-r(s')}\le \ds\bigr{s,s'} \qquad \forall s,s'\in\cS.
    \end{align*}
\end{ass}

\begin{ass}[Learned policies are smooth]\label{ass:policy}
    The learned policy $\pihat$ is 1-Lipschitz in the state. Formally, if $\pihat$ is a Markovian or action chunked policy $\pihat_k$, then:
    \begin{align*}
       \wass\bigr{[\pihat_k(\cdot|s)]_i,[\pihat_k(\cdot|s')]_i} \le \ds(s,s') \qquad \forall s,s'\in\cS, \forall i\in[k],
    \end{align*}
    where index $i$ denotes the $i$-th action in the chunk outputted by $\pihat_k$.
    If instead $\pihat$ is a (potentially induced) delayed policy $\pidelay^d$, then:
    \begin{align*}
       \wass\bigr{\pidelay^d(\cdot|s),\pidelay^d(\cdot|s')} \le \ds(s,s') \qquad \forall s,s'\in\cS.
    \end{align*}
\end{ass}
To be precise, we would like to mention the fact that, in general, it might not be possible to construct a ``smooth'' learner as prescribed by Assumption \ref{ass:policy}, which also has small generalization error w.r.t. the expert's policy, as assumed by Theorem \ref{thm:delayed_upper_bound}. Thus, to assume this, we are implicitly assuming that Assumption \ref{ass:policy} holds also for the expert's policy.

We would like to remark that, in the following proofs, we will \emph{not} assume a binary success/failure reward, but we will consider a smooth reward as prescribed by Assumption \ref{ass:reward}. As such, our episodes will not terminate when any non-zero reward is received, but they will always terminate after $H$ timesteps.

\subsection{Upper Bound for Delayed Policy}
Throughout, we define $s_{i} := s_0$ for $i < 0$.

\begin{proof}[Proof of \Cref{thm:delayed_upper_bound}]

    We begin by proving the result for the delayed policy, that is, let our learner $\pihat$ be a delayed policy $\pidelay^k$ for some $k\ge0$ (note that, for $k=0$, we are effectively studying a Markovian policy). Formally, as the behavior of policy $\pidelay^k$ is undefined for $t<k$, we assume that we are going to play the chunked policy there (i.e., we use some $\pihat$ policy) and assume that the same upper bound to the generalization error $\epsilon$ holds for all the actions there.

    \textbf{Delayed policy.}
    
    By definition, for any policy $\pi$, we have that $\cJ(\pi) = \Exp^\pi[\sum_{t=0}^{H-1} r(s_t)]$. Therefore, by triangle inequality, we can write:
    \begin{align*}
        \cJ(\pidemo) - \cJ(\pihat) \le \sum_{t=0}^{H-1} \biga{\Exp^{\pidemo}[r(s_t)] - \Exp^{\pihat}[r(s_t)]},
    \end{align*}
    where we would like to recall that symbol $\Exp^\pi$ denotes the trajectory distribution of policy $\pi$, so that $\Exp^\pi[r(s_t)]=\sum_\tau \P^\pi(\tau)r(s_t)$, where $\tau$ denotes some trajectory of length $h$, $\P^\pi(\cdot)$ is the induced probability measure over trajectories by $\pi$, and $s_t$ denotes the state at timestep $t$ of trajectory $\tau$.
    
    Consider some $t \ge 1$. Let $\sdemo_t$ denote the state generated in a trajectory induced by playing $\pidemo$ and $\shat_t$ the state generated by playing $\pihat$. Similarly define $\ademo_t$ and $\ahat_t$ the corresponding actions. Since we have assumed that $r$ is 1-Lipschitz, by Assumption \ref{ass:reward} and Jensen's inequality, we have
    \begin{align*}
        \biga{\Exp^{\pidemo}[r(s_t)] - \Exp^{\pihat}[r(s_t)]} = \biga{\Exp[r(\sdemo_t) - r(\shat_t)]} \le \Exp[\ds(\sdemo_t,\shat_t)],
    \end{align*}
    where $\Exp$ is the expectation w.r.t. some coupling between the trajectory distributions $\P^{\pidemo}$ and $\P^{\pihat}$ up to timestep $t$.

    By the definition of the dynamics, we have $\sdemo_t = P(\sdemo_{t-1},\ademo_{t-1})$ and $\shat_t = P(\shat_{t-1},\ahat_{t-1})$. Since we have assumed that $P$ is 1-Lipschitz in both state and action, then by Assumptions \ref{ass:dyn_state}-\ref{ass:dyn_action} and by the triangle inequality we have
    \begin{align*}
        \Exp[\ds(\sdemo_t,\shat_t)] \le \Exp[\ds(\sdemo_{t-1},\shat_{t-1})] + \Exp[\da(\ademo_{t-1},\ahat_{t-1})].
    \end{align*}
    By assumption, $\ademo_{t-1} \sim \pidemo(\cdot \mid \histdemo_{t-1})$ (for $\histdemo$ the history generated under $\pidemo$) and $\ahat_{t-1} \sim \pihat(\cdot \mid \shat_{t-k-1})$. 
    By the triangle inequality, we have
    \begin{align}\label{eq:triang}
        \Exp[\da(\ademo_{t-1},\ahat_{t-1})] & \le \Exp[\Exp_{a \sim \pihat(\cdot \mid \sdemo_{t-k-1})}[\da(\ademo_{t-1},a)]] + \Exp[\Exp_{a \sim \pihat(\cdot \mid \sdemo_{t-k-1})}[\da(a,\ahat_{t-1})]].
    \end{align}
    Considering the optimal coupling between $\ademo_{t-1} \mid \cF_{t-1}$, for $\cF_{t-1}$ the filtration up to step $t-1$, and $a \sim \pihat(\cdot \mid \sdemo_{t-k-1})$, we can bound
    \begin{align*}
         \Exp[\Exp_{a \sim \pihat(\cdot \mid \sdemo_{t-k-1})}[\da(\ademo_{t-1},a)]] \le \Exp[\wass(\pidemo(\cdot \mid \histdemo_{t-1}),\pihat(\cdot \mid \sdemo_{t-k-1}))]] \le \epsilon,
    \end{align*}
    where the last inequality follows by assumption. Note that
    \begin{align*}
         \Exp[\Exp_{a \sim \pihat(\cdot \mid \sdemo_{t-k-1})}[\da(a,\ahat_{t-1})]] & =  \Exp[\Exp_{a \sim \pihat(\cdot \mid \sdemo_{t-k-1}), a' \sim \pihat(\cdot \mid \shat_{t-k-1})}[\da(a,a')]] \\
         & \le \Exp[\wass(\pihat(\cdot \mid \sdemo_{t-k-1}),\pihat(\cdot \mid \shat_{t-k-1}))] \\
         & \le \Exp[\ds(\sdemo_{t-k-1},\shat_{t-k-1})]
    \end{align*}
    where the first inequality follows taking the optimal coupling between $a$ and $a'$, and the second inequality follows from Assumption \ref{ass:policy}.
    Altogether then, we have shown that
    \begin{align*}
         \Exp[\ds(\sdemo_t,\shat_t)] \le  \Exp[\ds(\sdemo_{t-1},\shat_{t-1})] + \Exp[\ds(\sdemo_{t-k-1},\shat_{t-k-1})] + \epsilon.
    \end{align*}
    By \Cref{lem:recursion_bound}, we can bound
    \begin{align*}
    \Exp[\ds(\sdemo_t,\shat_t)] \le 2 (k+1)^{\ceil{t/k}} \cdot \epsilon.
    \end{align*}
    The result then follows by summing over $t$ and using geometric sums:
    \begin{align*}
    \sum\limits_{t=0}^{H-1}(k+1)^{\ceil{t/k}}&=1+\sum\limits_{m=1}^k (k+1)
    +\sum\limits_{m=k+1}^{2k} (k+1)^2+\dotsc\\
    &\le 1+\sum\limits_{m=1}^{\ceil{(H-1)/k}} k(k+1)^m\\
    &\le 1+k\sum\limits_{m=0}^{\ceil{(H-1)/k}} (k+1)^m\\
    &= 1+k\frac{(k+1)^{\ceil{(H-1)/k}+1}-1}{(k+1)-1}\\
    &= (k+1)^{\ceil{(H-1)/k}+1}.
    \end{align*}

    \textbf{Action chunking (sketch).}

    Note that the proof applied above can be easily modified for using any sequence of delayed policies with different delays. In particular, the key difference is just that, in Eq. \eqref{eq:triang}, instead of using $\pihat(\cdot \mid \sdemo_{t-k-1})$, we should use the desired delay $d$, i.e., $\pihat(\cdot \mid \sdemo_{t-d})$. Then all other passages follow analogously. In case of action chunking, we play a sequence of delayed policies in ascending order by delay, up to reach the maximum delay $k-1$ (corresponding to chunk size $k$), before restarting. So, if we set our learner $\pihat$ be a chunked policy $\pihat_k$, then it is easy to observe that we end up to the following recursion, for any timestep $t=kx+r$, where $x\coloneqq\floor{t/k}$ and $r\coloneqq t-k\floor{t/k}$:
    \begin{align*}
         \Exp[\ds(\sdemo_t,\shat_t)] \le  \Exp[\ds(\sdemo_{kx+r-1},\shat_{kx+r-1})] + \Exp[\ds(\sdemo_{n},\shat_{n})] + \epsilon,
    \end{align*}
    where $n=kx$ if $r>0$ and $n=k(x-1)$ otherwise.
    
    Thanks to Lemma \ref{lem:recursion_bound2}, we obtain:
    \begin{align*}
    \Exp[\ds(\sdemo_t,\shat_t)] \le (k+1)^{\floor{t/k}+1} \cdot \epsilon.
    \end{align*}
The result then follows by summing over $h$ and using geometric sums:
    \begin{align*}
    \sum\limits_{t=0}^{H-1}(k+1)^{\floor{t/k}+1}&=\sum\limits_{m=0}^{k-1} (k+1)
    +\sum\limits_{m=k}^{2k-1} (k+1)^2+\dotsc\\
    &\le \sum\limits_{m=0}^{\floor{(H-1)/k}+1} (k+1)^m\\
    &= \frac{(k+1)^{\floor{(H-1)/k}+2}-1}{(k+1)-1}\\
    &\le 2(k+1)^{\floor{(H-1)/k}+1}.
    \end{align*}

    This concludes the proof.\footnote{Note that the same rate would be obtained even if we randomized or used any other kind of order for playing delayed policies. Moreover, note that the upper bound for the delayed policies is not tight, and could probably obtain a bound where the contribution of the $k$ factor is doubled, since a delayed policy with delay $k$ roughly corresponds to an action chunking policy with chunk size $2k$.}
\end{proof}

\begin{lemma}\label{lem:recursion_bound}
Consider a sequence $\{\delta_t\}_t$ with $\delta_t = 0$ for $t \le 0$, and that satisfies
\begin{align}\label{eq:delta_seq_def}
    \delta_{t+1} \le \delta_{t} + \delta_{t-k} + \epsilon,
\end{align}
for some constant $\epsilon\ge 0$. Then we can bound $\delta_t \le 2 (k+1)^{\ceil{t/k}} \cdot \epsilon$.
\end{lemma}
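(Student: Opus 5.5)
We work with a monotone majorant: the target bound is nondecreasing in $t$, but $\{\delta_t\}$ need not be. Assume $k\ge 1$ so that $\lceil t/k\rceil$ is defined (the case $k=0$ reduces to $\delta_{t+1}\le 2\delta_t+\epsilon$ and is handled separately, or excluded). Define $D_t$ by $D_t=0$ for $t\le 0$ and the \emph{equality} recursion $D_{t+1}=D_t+D_{t-k}+\epsilon$.

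\textbf{Step 1: $D$ dominates $\delta$ and is nondecreasing.} By strong induction on $t$, $\delta_t\le D_t$. The base case $t\le 0$ holds trivially. For the inductive step, the right-hand side of \eqref{eq:delta_seq_def} is monotone in $\delta_t$ and $\delta_{t-k}$ because all coefficients are nonnegative, so $\delta_{t+1}\le \delta_t+\delta_{t-k}+\epsilon\le D_t+D_{t-k}+\epsilon=D_{t+1}$. Again by induction, $D_t\ge 0$ for all $t$. Hence $D_{t+1}-D_t=D_{t-k}+\epsilon\ge 0$, so $D$ is nondecreasing. It therefore suffices to bound $D_t$.

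\textbf{Step 2: block recursion.} Partition the positive times into blocks $((m-1)k,\,mk]$, $m\ge 1$, and set $B_m:=D_{mk}$, with $B_0=D_0=0$. Fix $t$ in block $m$ and write $t_0=(m-1)k$. Unrolling the recursion from $t_0$ gives
\begin{align*}
D_t = D_{t_0}+\sum_{j=t_0}^{t-1}\bigl(D_{j-k}+\epsilon\bigr).
\end{align*}
Every index satisfies $j-k\le t-1-k\le mk-1-k<t_0$. By monotonicity, each term $D_{j-k}$ is therefore at most $D_{t_0}=B_{m-1}$. There are at most $k$ terms, so
\begin{align*}
D_t\le (k+1)B_{m-1}+k\epsilon .
\end{align*}
In particular, taking $t=mk$ yields $B_m\le (k+1)B_{m-1}+k\epsilon$.

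\textbf{Step 3: solve the block recursion.} By induction on $m$, the linear recursion of Step 2 gives $B_m\le \bigl((k+1)^m-1\bigr)\epsilon$. Indeed, if $B_{m-1}\le \bigl((k+1)^{m-1}-1\bigr)\epsilon$, then
\begin{align*}
B_m\le (k+1)\bigl((k+1)^{m-1}-1\bigr)\epsilon+k\epsilon=\bigl((k+1)^m-1\bigr)\epsilon .
\end{align*}

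\textbf{Conclusion.} For $t\ge 1$, take $m=\lceil t/k\rceil$, so that $t$ lies in block $m$ and $t\le mk$. By Steps 1--3,
\begin{align*}
\delta_t\le D_t\le D_{mk}=B_m\le (k+1)^{\lceil t/k\rceil}\epsilon\le 2(k+1)^{\lceil t/k\rceil}\epsilon .
\end{align*}
This is the claimed bound, in fact with room to spare.

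The only delicate point is the index bookkeeping in Step 2. One must check that every delayed term $D_{j-k}$ appearing inside block $m$ refers to a time no later than the end of block $m-1$, and this is exactly where the block length $k$ matches the delay. If the lemma's indexing $\delta_{t-k}$ versus $\delta_{t-k-1}$ shifts this by one, the block length should be adjusted to $k+1$ or the factor $2$ used to absorb the discrepancy.
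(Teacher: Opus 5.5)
Your proof is correct and takes essentially the same route as the paper. Both arguments dominate $\delta$ by the equality recursion $\bar\delta$, use its monotonicity, unroll $k$ steps to get $\bar\delta_t \le (k+1)\bar\delta_{t-k} + k\epsilon$ (you do this on aligned blocks rather than sliding back from arbitrary $t$), and iterate $\lceil t/k\rceil$ times; solving the block recursion exactly gives you the sharper $\bigl((k+1)^{\lceil t/k\rceil}-1\bigr)\epsilon$ where the paper loosens a geometric sum to $2(k+1)^{\lceil t/k\rceil}\epsilon$, and your closing hedge about indexing is unnecessary because Step 2 already uses the lemma's exact recursion, with one index of slack.
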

\begin{proof}
    Define $\{\deltabar_t\}_t$ as the sequence which satisfies $\deltabar_t = 0$ for $t \le 0$ and $\deltabar_{t+1} = \deltabar_t + \deltabar_{t-k} + \epsilon$. By construction, we then have that $\{\deltabar_t\}_t$ satisfies \eqref{eq:delta_seq_def}, and $\deltabar_t \ge \delta_t$ $\forall t$ for any other sequence satisfying \eqref{eq:delta_seq_def}.\footnote{Formally, this can be shown by induction. At $t\le0$, we have $\deltabar_t = \delta_t=0$ by definition. Then, let us make the inductive hypothesis that $\deltabar_t \ge \delta_t$ $\forall t\le\overline{t}$. Then, at $t=\overline{t}$, we have that: $\deltabar_{t+1} \coloneqq \deltabar_t + \deltabar_{t-k} + \epsilon \ge \delta_{t} + \delta_{t-k} + \epsilon\ge \delta_{t+1}$, where we used first the definition of $\{\deltabar_t\}_t$, then the inductive hypothesis, and finally the definition of $\{\delta_t\}_t$.} Note also that $\deltabar_{t+1} \ge \deltabar_t$.\footnote{This follows easily from the definition and the fact that all terms are non-negative: $\deltabar_{t+1} = \deltabar_t + \deltabar_{t-k} + \epsilon\ge \deltabar_t+0+0\ge \deltabar_t$.}

    Recursing backwards, for any $k\ge1$, we have
    \begin{align*}
        \deltabar_{t} & = \deltabar_{t-1} + \deltabar_{t-k-1} + \epsilon \\
        & = \deltabar_{t-2} + \deltabar_{t-k-1} + \deltabar_{t-k-2} + 2 \epsilon \\
        & \vdots \\
        & = \deltabar_{t-k} + \sum_{i=1}^{k} \deltabar_{t-k-i} + k \epsilon\\
        & = \sum_{i=0}^{k} \deltabar_{t-k-i} + k \epsilon.
    \end{align*}
    Since $\deltabar_{t+1} \ge \deltabar_t$ for any $t$, we can bound this as
    \begin{align*}
     \deltabar_{t}   \le (k+1) \deltabar_{t-k} + k \epsilon.
    \end{align*}
    Recursing this backwards, we get
    \begin{align*}
    \deltabar_{t} &\le (k+1) \deltabar_{t-k} + (k+1) \epsilon \le (k+1)^2 \deltabar_{t-2k} + (k+1)^2 \epsilon + (k+1) \epsilon \\
    &\le \ldots \le \sum_{i=1}^{\ceil{t/k}} (k+1)^i \epsilon,
    \end{align*}
    where we note that $x=\ceil{t/k}$ is the minimum integer value guaranteeing $t-xk\le0$.

    Lastly, we use the bound for geometric sums:
    \begin{align*}
    \deltabar_{t} &\le\sum_{i=1}^{\ceil{t/k}} (k+1)^i \epsilon \le \epsilon \sum_{i=0}^{\ceil{t/k}} (k+1)^i
    = \epsilon\frac{(k+1)^{\ceil{t/k}+1}-1}{k}\\
    & \le \epsilon\frac{k+1}{k}(k+1)^{\ceil{t/k}}\le 2\epsilon (k+1)^{\ceil{t/k}}.
    \end{align*}
    
    This completes the proof.
\end{proof}

\begin{lemma}\label{lem:recursion_bound2}
Consider a sequence $\{\delta_t\}_t$ with $\delta_t = 0$ for $t = 0$, and that satisfies, for any timestep $t=kx+r$, where $x\coloneqq\floor{t/k}$ and $r\coloneqq t-k\floor{t/k}$
\begin{align}\label{eq:delta_seq_def2}
    \delta_{kx+r} \le \delta_{kx+r-1} + \delta_{n} + \epsilon,
\end{align}
where $n=kx$ if $r>0$ and $n=k(x-1)$ otherwise, for some constant $\epsilon\ge 0$. Then we can bound $\delta_t \le (k+1)^{\floor{t/k}+1} \cdot \epsilon$.
\end{lemma}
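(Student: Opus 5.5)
The plan is to unroll the recursion \eqref{eq:delta_seq_def2} block by block. The key observation is that the ``anchor'' index $n$ stays fixed on each block of $k$ consecutive timesteps. Concretely, for $x\ge 0$ and $j\in\{1,\ldots,k\}$, write $t=kx+j$. If $j<k$, then $t=kx+r$ with $r=j>0$, so $n=kx$. If $j=k$, then $t=k(x+1)$ with $r=0$, so $n=k((x+1)-1)=kx$ again. Hence on the whole block $t\in\{kx+1,\ldots,kx+k\}$ the hypothesis reads
\begin{align*}
\delta_{kx+j} \le \delta_{kx+j-1} + \delta_{kx} + \epsilon .
\end{align*}
I expect this bookkeeping to be the main (if modest) obstacle. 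The $r=0$ case must be recognised as the last step of the previous block rather than the first step of a new one; otherwise the anchor looks inconsistent and the inner recursion does not close.

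\textbf{Inner unrolling.} Fix $x$ and set $B_x:=\delta_{kx}$. A direct induction on $j$ gives
\begin{align*}
\delta_{kx+j}\le (j+1)B_x + j\epsilon \qquad \text{for } j=0,1,\ldots,k .
\end{align*}
The case $j=0$ is trivial. For the inductive step, the previous display and the hypothesis on $\delta_{kx+j-1}$ give $\delta_{kx+j}\le (j B_x+(j-1)\epsilon)+B_x+\epsilon$. This only uses the inequalities as stated, so neither nonnegativity nor monotonicity of $\delta$ is needed.

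\textbf{Outer recursion.} Taking $j=k$ gives $B_{x+1}\le (k+1)B_x + k\epsilon$, with $B_0=\delta_0=0$. Adding $\epsilon$ to both sides yields $B_{x+1}+\epsilon\le (k+1)(B_x+\epsilon)$. Iterating this gives $B_x+\epsilon\le (k+1)^x\epsilon$, that is, $B_x\le ((k+1)^x-1)\epsilon$.

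\textbf{Conclusion.} For an arbitrary $t$, write $t=kx+j$ with $x=\floor{t/k}$ and $j\in\{0,\ldots,k-1\}$. Combining the two bounds,
\begin{align*}
\delta_t \le (j+1)\bigl((k+1)^x-1\bigr)\epsilon + j\epsilon = (j+1)(k+1)^x\epsilon - \epsilon \le (k+1)^{x+1}\epsilon ,
\end{align*}
where the last step uses $j+1\le k$. This is exactly the claimed bound $(k+1)^{\floor{t/k}+1}\epsilon$.
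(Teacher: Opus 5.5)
Your proof is correct and follows essentially the same route as the paper. Both arguments unroll each block of $k$ steps to get $\delta_{k(x+1)}\le (k+1)\delta_{kx}+k\epsilon$, then sum the resulting geometric series from $\delta_0=0$. The one difference is minor: the paper passes to the equality (majorant) sequence $\bar\delta$ and uses its monotonicity to bound a mid-block index by the block endpoint $\bar\delta_{kx+k}$, whereas your $j$-indexed inner induction handles mid-block indices directly, needing neither the comparison sequence nor nonnegativity, and giving a slightly tighter intermediate bound.
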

\begin{proof}
    Define $\{\deltabar_t\}_t$ as the sequence which satisfies $\deltabar_t = 0$ for $t = 0$ and $\deltabar_{kx+r} = \deltabar_{kx+r-1} + \deltabar_{n} + \epsilon$. By construction, we then have that $\{\deltabar_t\}_t$ satisfies \eqref{eq:delta_seq_def2}, and $\deltabar_t \ge \delta_t$ $\forall t$ for any other sequence satisfying \eqref{eq:delta_seq_def2}.\footnote{This can be proved analogously to the proof of Lemma \ref{lem:recursion_bound}.}

    Recursing backwards, for any $k\ge1$, we have
    \begin{align*}
        \deltabar_{kx+r} & \le \deltabar_{kx+k} \\
        & = \deltabar_{kx+(k-1)} + \deltabar_{kx} + \epsilon \\
        & = \deltabar_{kx+(k-2)} + 2\deltabar_{kx} + 2\epsilon \\
        & \vdots \\
        & = \deltabar_{kx} + k \deltabar_{kx} + k \epsilon\\
        & = (k+1)\deltabar_{kx} + k \epsilon.
    \end{align*}
    Recursing this backwards, we get, for $t=kx+r$:
    \begin{align*}
    \deltabar_{t} &\le (k+1)\deltabar_{kx} + k \epsilon \le (k+1)^2 \deltabar_{k(x-1)} + k(k+1) \epsilon + k \epsilon \\
    &\le \ldots \le k\sum_{i=0}^{\floor{t/k}} (k+1)^i \epsilon.
    \end{align*}

    Lastly, we use the bound for geometric sums:
    \begin{align*}
    \deltabar_{t} &\le k\epsilon \sum_{i=0}^{\floor{t/k}} (k+1)^i
    = k\epsilon\frac{(k+1)^{\floor{t/k}+1}-1}{k}\\
    & \le \epsilon(k+1)^{\floor{t/k}+1}.
    \end{align*}
    
    This completes the proof.
\end{proof}

\subsection{Lower Bound for Markovian Policies}

In this appendix, we will use symbol $[x_1,x_2,\dotsc,x_d]$ to denote a $d$-dimensional vector, and symbol $[x]_i$ to denote the $i$th component of vector $x$.

\begin{proof}[Proof of \Cref{thm:lb_markov}]
  Let the environment $\cM$ be such that: $\cS=\cA=\RR^2$, $P_0([0,0])=1$, and
  $P(s,a)=s+a$ for all $s,a$. Let $r(s) = 1 - |[s]_1|$. Let $\ds$ and $\da$ be the metrics induced by any norm $\|\cdot\|$. Then, note that the Lipschitz assumption on $P$ (that is, Assumptions \ref{ass:dyn_state}-\ref{ass:dyn_action}) holds
  since:
  \begin{align*}
        &\|P(s,a)-P(s',a)\|= \|(s+a)-(s'+a)\|=\|s-s'\|,\\
        &\|P(s,a)-P(s,a')\|= \|(s+a)-(s+a')\|=\|a-a'\|.    
  \end{align*}
  Furthermore, the Lipschitz assumption on $r$ (that is, Assumption \ref{ass:reward}) holds since
  \begin{align*}
      |r(s) - r(s')| = ||[s]_1| - |[s']_1|| \le |[s]_1 - [s']_1| \le \| s - s' \|_2.
  \end{align*}
Let the expert's policy $\pidemo:\cS\to\cA$ be $\pidemo(s)=[0,1]$ $\forall s$. This
induces a support: $\text{supp}(\pidemo)=\{[0,k]\in\cS\,:\,k\in\Nat\}$.

Let $\pihat:\cS\to\cA$ be:
$\pihat([s^1,s^2])=[\epsilon+\min_{s'\in\text{supp}(\pidemo)}\|s-s'\|_2,1]$, and note that it is 1-Lipschitz (i.e., satisfies Assumption \ref{ass:policy}) since:
\begin{align*}
  \|\pihat(s)-\pihat(s'')\|_2  &=\big|\min_{s'\in\text{supp}(\pidemo)}\|s-s'\|_2-\min_{s'\in\text{supp}(\pidemo)}\|s''-s'\|_2\big|\\
  &\le \|s-s''\|_2,
\end{align*}
where we used that the map $d_\cX(u)\coloneqq \min_{x\in\cX}\|u-x\|_2$ is
1-Lipschitz for any $\cX\subseteq\RR^2$. Moreover, note that this $\pihat$
satisfies also the (last) hypothesis of this theorem, which is that of having a bounded generalization error, since:
\begin{align*}
  \Exp^{\pidemo}[\wass(\pidemo(s_t), \pihat(s_t))] = \Exp^{\pidemo}[\| \pidemo(s_t) - \pihat(s_t)\|_2] = \epsilon.
\end{align*}

As shown also in Figure \ref{fig: lower bound bc}, it is easy to see that the
sequence of expert states is
$\sdemo_0=[0,0],\sdemo_1=[0,1],\sdemo_2=[0,2],\dotsc,\sdemo_t=[0,t]$, while the sequence of
states followed by $\pihat$ starts at $\shat_0=[0,0]$ and grows based on
$s_{t+1}^{\pihat}=s_t^{\pihat} + \pihat(s_t^{\pihat})$ as:
\begin{align*}
\begin{cases}
  &\shat_0=[0,0],\\
  &\shat_1=[0,0]+[\epsilon+0,1]=[\epsilon,1],\\
  &\shat_2=[\epsilon,1]+[\epsilon+\epsilon,1]=[3\epsilon,2],\\
  &\shat_3=[3\epsilon,2]+[\epsilon+3\epsilon,1]=[7\epsilon,3],\\
  &\dotsc
\end{cases}.
\end{align*}
Thus:
\begin{align*}
  [\shat_t]_1 = \epsilon + 2 [\shat_{t-1}]_1 = \epsilon (2^t - 1).
\end{align*}
Note that $[\sdemo_t]_1 = 0$, so that $\cJ(\pidemo) = H$. However,
\begin{align*}
    \cJ(\pihat) & = \sum_{t=0}^{H-1} r(\shat_t) = \sum_{t=0}^{H-1} (1 - [\shat_t]_1) = H - \epsilon \cdot \sum_{t=0}^{H-1} (2^h - 1) = H - \epsilon \cdot (2^{H} - H - 1).
\end{align*}
This completes the proof.
\end{proof}

\begin{figure}[t!]
  \centering
  \begin{tikzpicture}[state/.append style={minimum size=0.7cm, inner sep=1pt}]
    \node[state, initial] at (0,0) (s0) {$s_0^{\pidemo}$};
    \node[state] at (0,1.5) (s1) {$s_1^{\pidemo}$};
    \node[state] at (0,3) (s2) {$s_2^{\pidemo}$};
    \node[state,draw=none,scale=0.3] at (0,4) (ss) {$\dotsc$};
    \node[state,draw=none,scale=0.3] at (2.3,4) (ssbc) {$\dotsc$};
    \node[state] at (0,5) (st) {$s_t^{\pidemo}$};
    \node[state] at (1.3,1.5) (s1bc) {$s_1^{\pihat}$};
    \node[state] at (1.9,3) (s2bc) {$s_2^{\pihat}$};
    \node[state] at (2.8,5) (stbc) {$s_t^{\pihat}$};
    \draw (s0) edge[->, solid, left] node{$\pidemo(s_0^{\pidemo})$} (s1);
    \draw (s1) edge[->, solid, left] node{$\pidemo(s_1^{\pidemo})$} (s2);
    \draw (s0) edge[->, solid, right] node{$\pihat(s_0^{\pihat})$} (s1bc);
    \draw (s1bc) edge[->, solid, right] node{$\pihat(s_1^{\pihat})$} (s2bc);
    \draw (s2) edge[->, solid, left] node{} (ss);
    \draw (s2bc) edge[->, solid, left] node{} (ssbc);
    \draw (ss) edge[->, solid, left] node{} (st);
    \draw (ssbc) edge[->, solid, left] node{} (stbc);
    \draw (s1) edge[<->, solid, above] node{$\epsilon$} (s1bc);
    \draw (s2) edge[<->, solid, above] node{$3\epsilon$} (s2bc);
    \draw (st) edge[<->, solid, above] node{$(2^t-1)\epsilon$} (stbc);
  \end{tikzpicture}
  \caption{Construction for lower bound on performance of Markovian policy (\Cref{thm:lb_markov})}
  \label{fig: lower bound bc}
\end{figure}

\subsection{Lower Bound for Action-Chunked Policies}

\begin{theorem}\label{thm:lb_ac}
    There exists an environment satisfying the above assumptions, a Markov demonstrator $\pidemo$, and an action chunking learner $\pihat_k$ satisfying $\max_h \E^{\pi_{\rm demo}}[\max_{i\in[k]} W_1(\pi_{\rm demo}(s_{t+i-1}),[\hat\pi_k(s_t)]_i)]\le\epsilon$, for $W_1(\cdot, \cdot)$ the Wasserstein-1 metric,
    such that $\cJ(\pidemo) \ge \cJ(\pihat) + \Omega((k+1)^{k/H} \cdot \epsilon)$.
\end{theorem}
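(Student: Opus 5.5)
The plan is to mirror the construction in the proof of \Cref{thm:lb_markov}, replacing the Markovian learner with a chunked learner that commits the same state-dependent error to every action of its chunk. I read the intended rate as $\Omega((k+1)^{H/k}\cdot\epsilon)$, matching the upper bound of \Cref{thm:delayed_upper_bound}; the exponent $k/H$ in the statement appears to be a typo.

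\textbf{Environment and demonstrator.} I take the same environment as in the proof of \Cref{thm:lb_markov}:
\begin{itemize}
\item $\cS=\cA=\RR^2$, $P_0([0,0])=1$, $P(s,a)=s+a$;
\item reward $r(s)=1-|[s]_1|$;
\item $\ds,\da$ induced by $\|\cdot\|_2$;
\item demonstrator $\pidemo(s)=[0,1]$, with support $\mathrm{supp}(\pidemo)=\{[0,j]:j\in\Nat\}$.
\end{itemize}
Assumptions \ref{ass:dyn_state}--\ref{ass:reward} were already verified there, so nothing new is needed on this side.

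\textbf{Chunked learner and its assumptions.} I define the learner so that every action in a chunk carries the same error:
\begin{align*}
[\pihat_k(s)]_i=\bigl[\epsilon+\min_{s'\in\mathrm{supp}(\pidemo)}\|s-s'\|_2,\;1\bigr],\qquad i\in[k].
\end{align*}
Each coordinate of this map is a distance-to-set function plus a constant, so $[\pihat_k(\cdot)]_i$ is 1-Lipschitz and Assumption \ref{ass:policy} holds. On demonstrator states $s_t=[0,t]$ the distance term vanishes. Hence every chunk entry is $[\epsilon,1]$ while $\pidemo(s_{t+i-1})=[0,1]$, giving $\max_{i\in[k]}\wass(\cdot,\cdot)=\epsilon$ exactly. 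This is the required supervised-error condition.

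\textbf{Learner's trajectory.} Under the learner, the second coordinate of the state stays an integer equal to $t$. Therefore the distance to the support is exactly $x:=|[\shat_t]_1|$.
\begin{itemize}
\item Suppose a chunk is queried at state $\shat_{mk}$ with first coordinate $x_m\ge 0$.
\item All $k$ actions of that chunk are open-loop, each with first coordinate $\epsilon+x_m$.
\item Within the chunk, the first coordinate therefore grows linearly: $[\shat_{mk+j}]_1=x_m+j(\epsilon+x_m)$ for $j=0,\dots,k$.
\item At the next query, $x_{m+1}=(k+1)x_m+k\epsilon$, i.e.\ $x_{m+1}+\epsilon=(k+1)(x_m+\epsilon)$.
\item With $x_0=0$ this solves to $x_m=\epsilon\bigl((k+1)^m-1\bigr)$.
\end{itemize}

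\textbf{Summing the reward gap.} Since $[\sdemo_t]_1=0$, we have $\cJ(\pidemo)-\cJ(\pihat_k)=\sum_{t=0}^{H-1}[\shat_t]_1$. I bound this from below by keeping only the last complete chunk. Set $M=\floor{H/k}-1$, assuming $H\ge 2k$ so that $M\ge 1$. All states in chunk $M$ satisfy $[\shat_t]_1\ge x_M$, so the sum is at least
\begin{align*}
k\,x_M=k\,\epsilon\bigl((k+1)^{\floor{H/k}-1}-1\bigr)=\Omega\bigl((k+1)^{H/k}\epsilon\bigr).
\end{align*}
For the last equality I use $k(k+1)^{-2}\gtrsim 1/k$ and absorb polynomial factors in $k$ into the constant. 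If desired, a sharper accounting of the full chunk, using the linear growth inside it, recovers the missing factor.

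\textbf{Main obstacle.} The only delicate point is matching the exponent cleanly: handling floors, the edge case $H<2k$ (where the claim reduces to a trivial $\Omega(\epsilon)$ bound), and ensuring the constants hidden in $\Omega$ do not depend on $H$. I would handle this by proving directly the explicit inequality $\cJ(\pidemo)-\cJ(\pihat_k)\ge \epsilon\bigl((k+1)^{\floor{H/k}}-1\bigr)/(k+1)$ and only then stating the asymptotic form.
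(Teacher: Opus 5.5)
Your proposal is correct and follows essentially the same route as the paper: the same environment, demonstrator, and chunked learner $[\pihat_k(s)]_i=[\epsilon+\min_{s'\in\mathrm{supp}(\pidemo)}\|s-s'\|_2,\,1]$, and the same open-loop chunk-to-chunk recursion giving geometric growth in $(k+1)$; you are also right that the exponent is meant to be $H/k$. Your exact solution $x_m=\epsilon((k+1)^m-1)$ is cleaner than the paper's bookkeeping, which writes $(k+1)^i\epsilon$ at step $ik$ and sums all chunks with a slightly inexact closed form; lower-bounding by the last complete chunk, plus the within-chunk linear growth you mention to avoid losing a factor of $k$, is enough for the $\Omega$ claim.
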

\begin{proof}
We consider the same environment, reward, and expert as in \Cref{thm:lb_markov}.

Let 
$[\pihat([s^1,s^2])]_i=[\epsilon+\min_{s'\in\text{supp}(\pidemo)}\|s-s'\|_2,1]$, and note that it is 1-Lipschitz (i.e., satisfies Assumption \ref{ass:policy}) since, for any action in the chunk $i\in[k]$:
\begin{align*}
  \|[\pihat(s)]_i-[\pihat(s'')]_i\|_2  &=\big|\min_{s'\in\text{supp}(\pidemo)}\|s-s'\|_2-\min_{s'\in\text{supp}(\pidemo)}\|s''-s'\|_2\big|\\
  &\le \|s-s''\|_2,
\end{align*}
where we used that the map $d_\cX(u)\coloneqq \min_{x\in\cX}\|u-x\|_2$ is
1-Lipschitz for any $\cX\subseteq\RR^2$. Moreover, note that this $\pihat$
satisfies the assumption of uniform bound on the generalization error along the chunk, since:
\begin{align*}
  \Exp^{\pidemo}[\wass(\pidemo(s_t), \pihat_i(s_{t-i}))] = \Exp^{\pidemo}[\| \pidemo(s_t) - \pihat_i(s_{t-i})\|_2] = \epsilon.
\end{align*}
With this choice of $\pihat$, since we start at $s_0^{\pidemo} = [0,0]$, for the first $k$ timesteps we play $\pihat_i(s_0^{\pidemo}) = [\epsilon,1]$, so that $\shat_k = [k \epsilon, k]$. Then, we have $\pihat(\shat_k) = [k \epsilon + \epsilon, 1]$, so that $\pihat(\shat_{2k}) = [k\epsilon + k(k\epsilon + \epsilon), 2k]$. More generally, we see that 
\begin{align*}
    \shat_{ik} = [(k+1)^{i} \cdot \epsilon, ik]
\end{align*}
and, more generally, $\shat_t = [(k+1)^{i_t} \cdot \epsilon + (t - i_t k) \cdot ((k+1)^{i_t} \cdot \epsilon + \epsilon), t]$, where $i_t := \lfloor t/k \rfloor$ denotes the closest action chunk boundary index.
It follows that
\begin{align*}
    \cJ(\pihat) & = H - \epsilon \cdot \sum_{i=0}^{H/k-1} \sum_{j=0}^k \left [ (k+1)^i + j \cdot ((k+1)^i+1) \right ] \\
    &  = H - \epsilon \cdot  \left [  \frac{k+1}{k} \cdot (k+1)^{H/k} - \frac{k+1}{k} + H \right ].
\end{align*}
This proves the result.
\end{proof}

\section{Individualized Result Plots}\label{apx:addplots}

In this section we report the plots of Figs. \ref{fig:val_loss_libero} (see Figs. \ref{fig:val loss each libero1}-\ref{fig:val loss each libero3}), \ref{fig:success_libero} (see Figs. \ref{fig:succ each libero1}-\ref{fig:succ each libero3}), \ref{fig:robomimic_val} (see Fig. \ref{fig:val each robomimic}), \ref{fig:success_robomimic} (see Fig. \ref{fig:succ each robomimic}), individually for each task instead of averaging over all tasks.

\begin{figure*}[t]
    \centering

    \begin{minipage}[t]{0.23\textwidth}
        \centering
        \includegraphics[width=\linewidth]{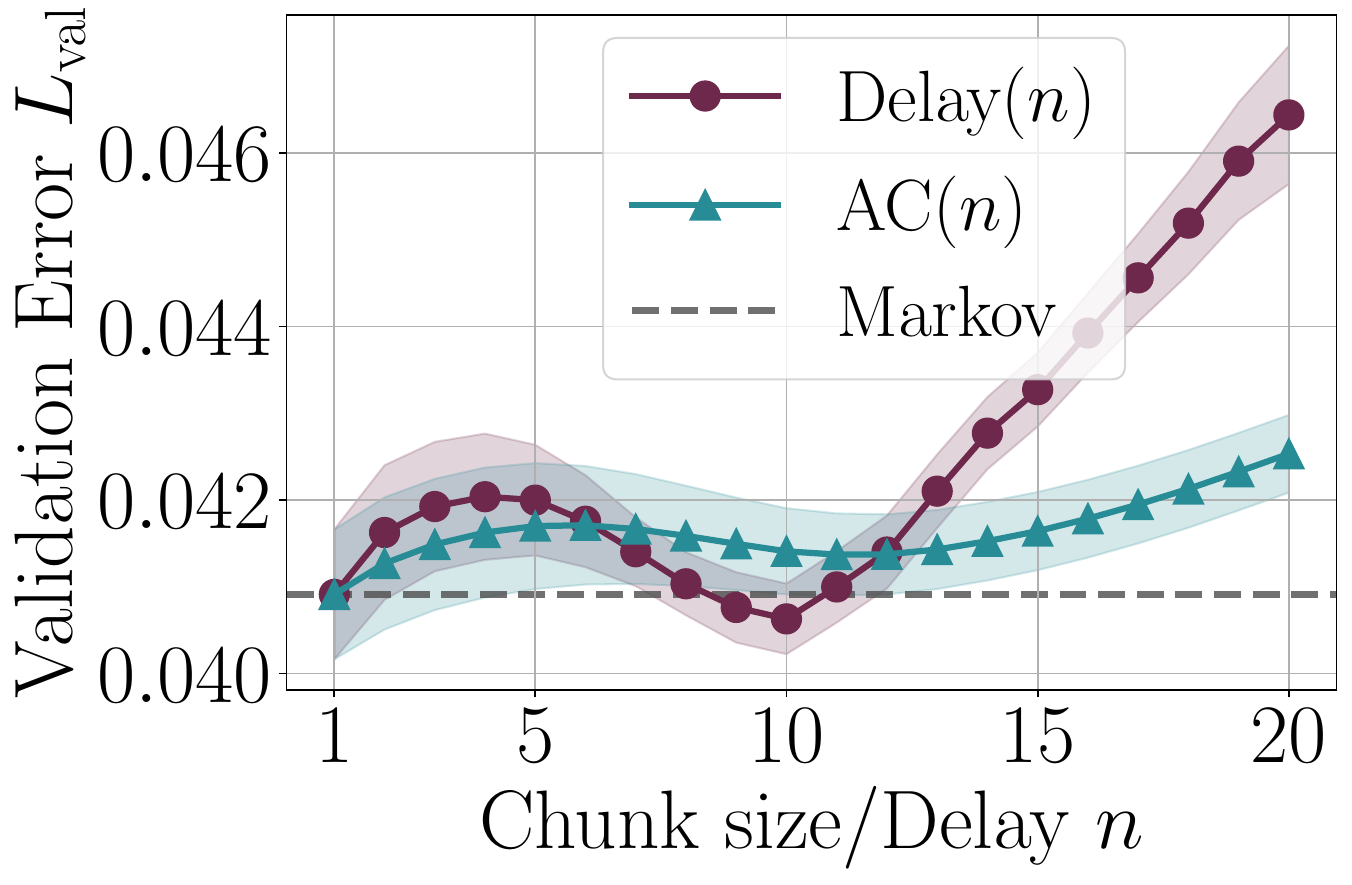}
    \end{minipage}
    \hfill
        \begin{minipage}[t]{0.23\textwidth}
        \centering
        \includegraphics[width=\linewidth]{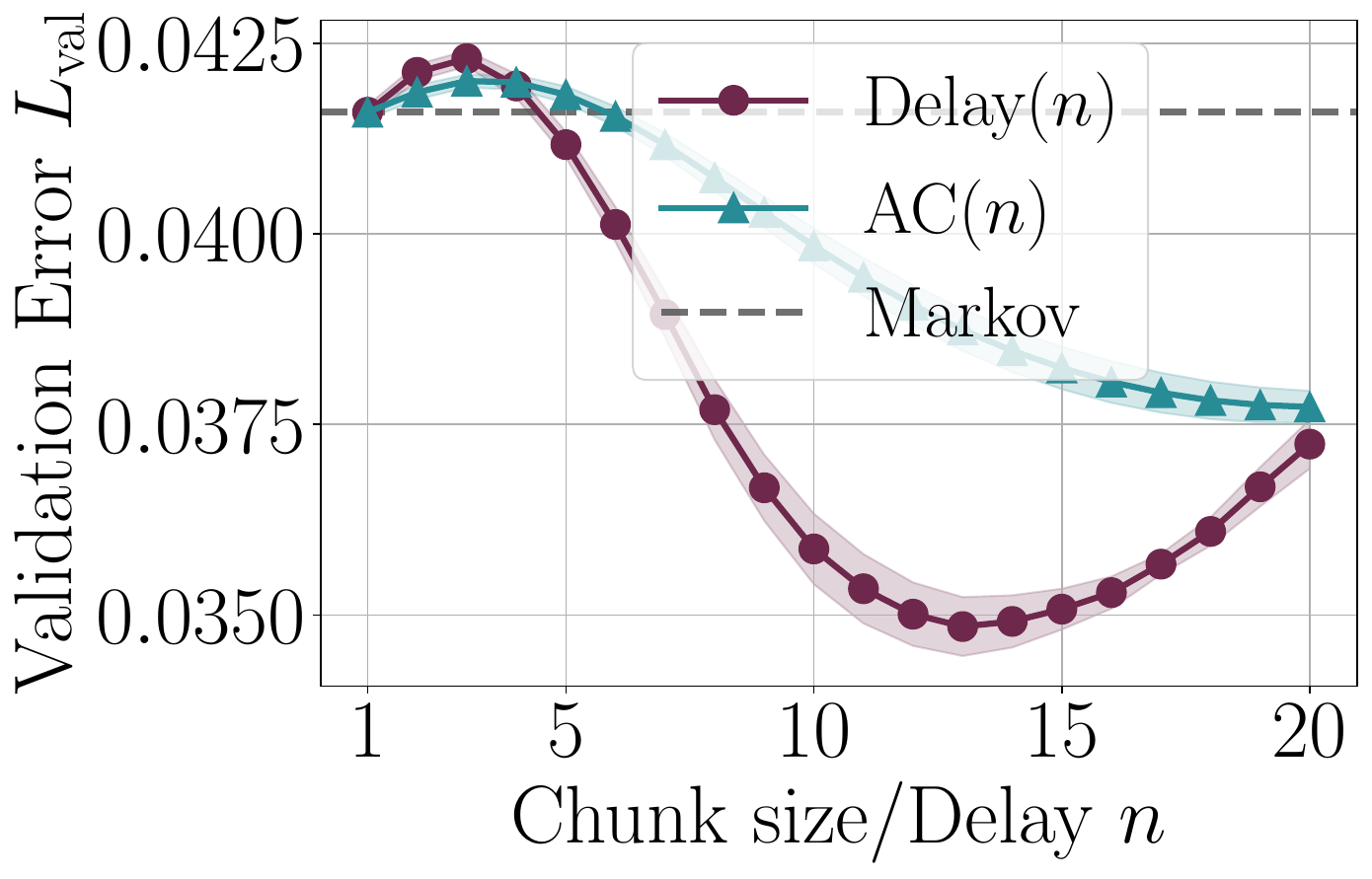}
    \end{minipage}
    \hfill
        \begin{minipage}[t]{0.23\textwidth}
        \centering
        \includegraphics[width=\linewidth]{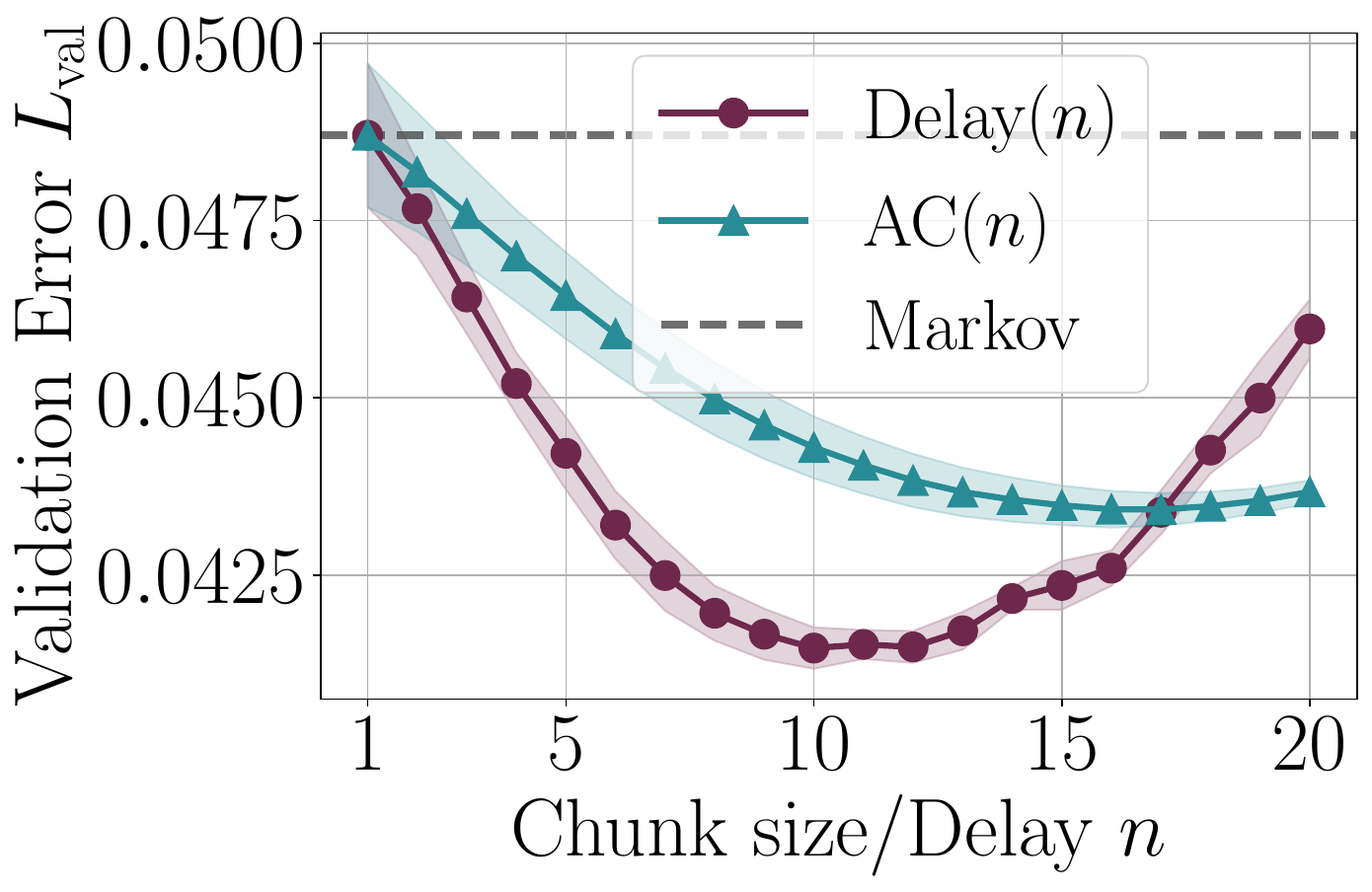}
    \end{minipage}
    \hfill
        \begin{minipage}[t]{0.23\textwidth}
        \centering
        \includegraphics[width=\linewidth]{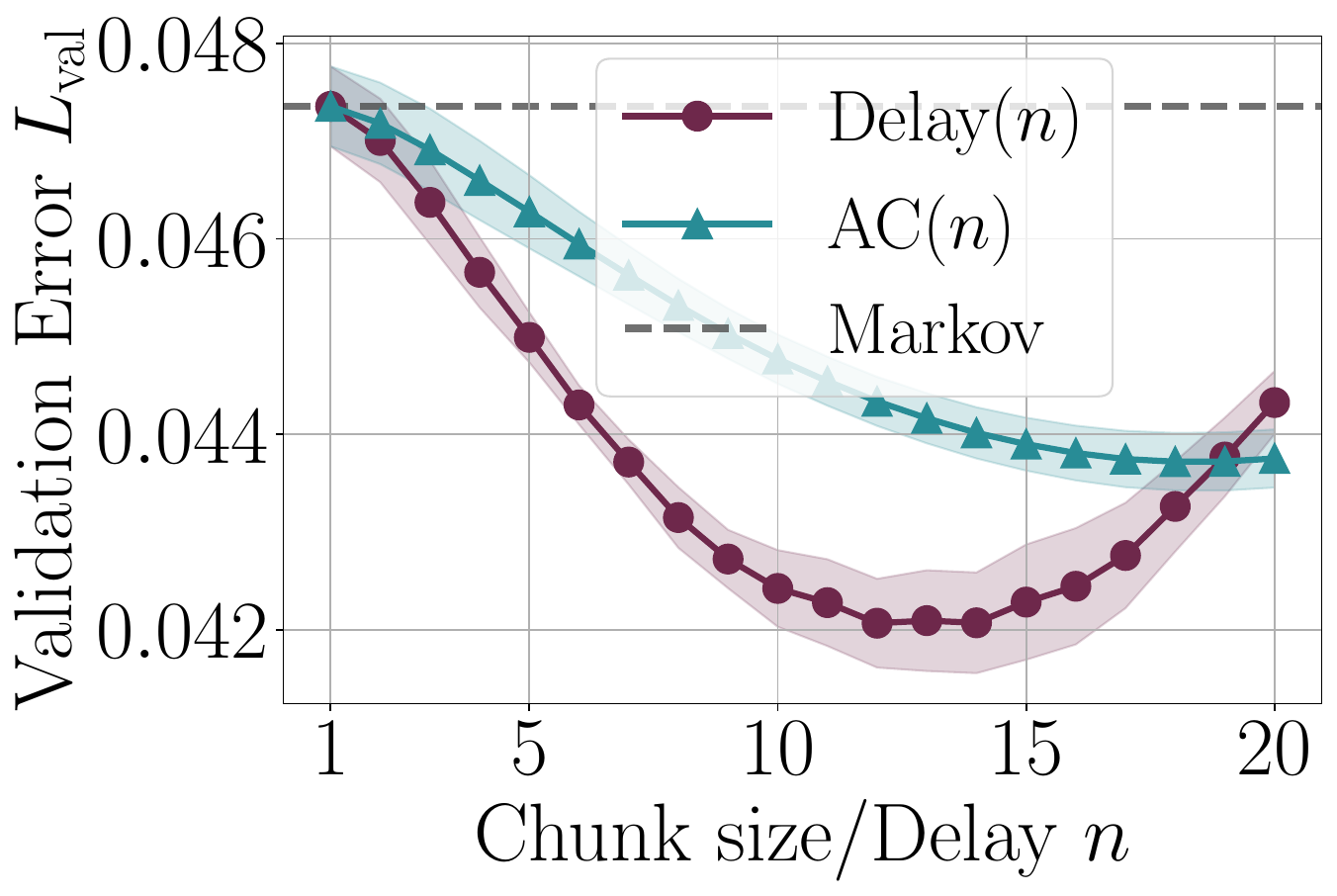}
    \end{minipage}
        \begin{minipage}[t]{0.23\textwidth}
            \includegraphics[width=\linewidth]{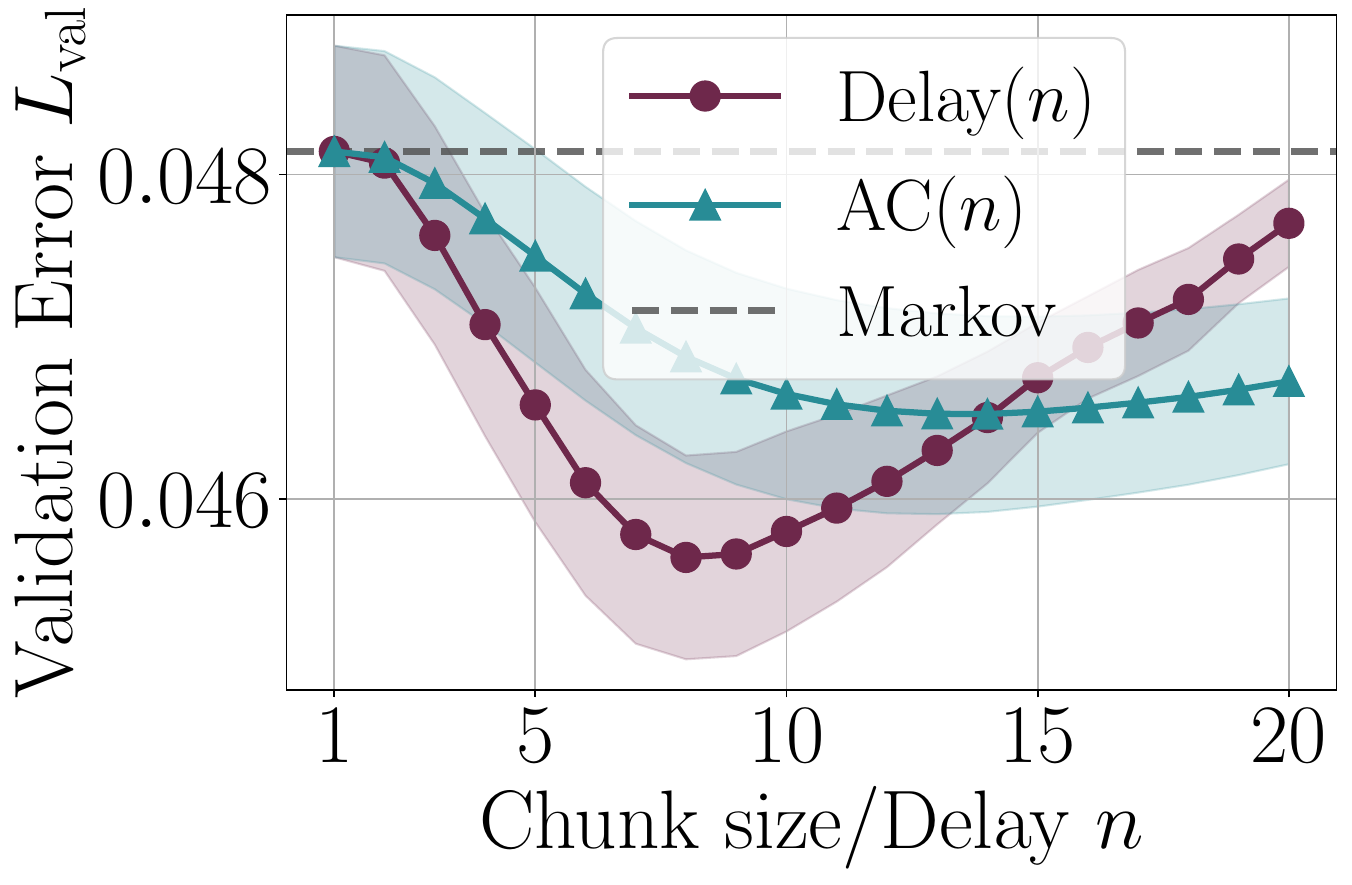}
    \end{minipage}
    \hfill
        \begin{minipage}[t]{0.23\textwidth}
        \centering
        \includegraphics[width=\linewidth]{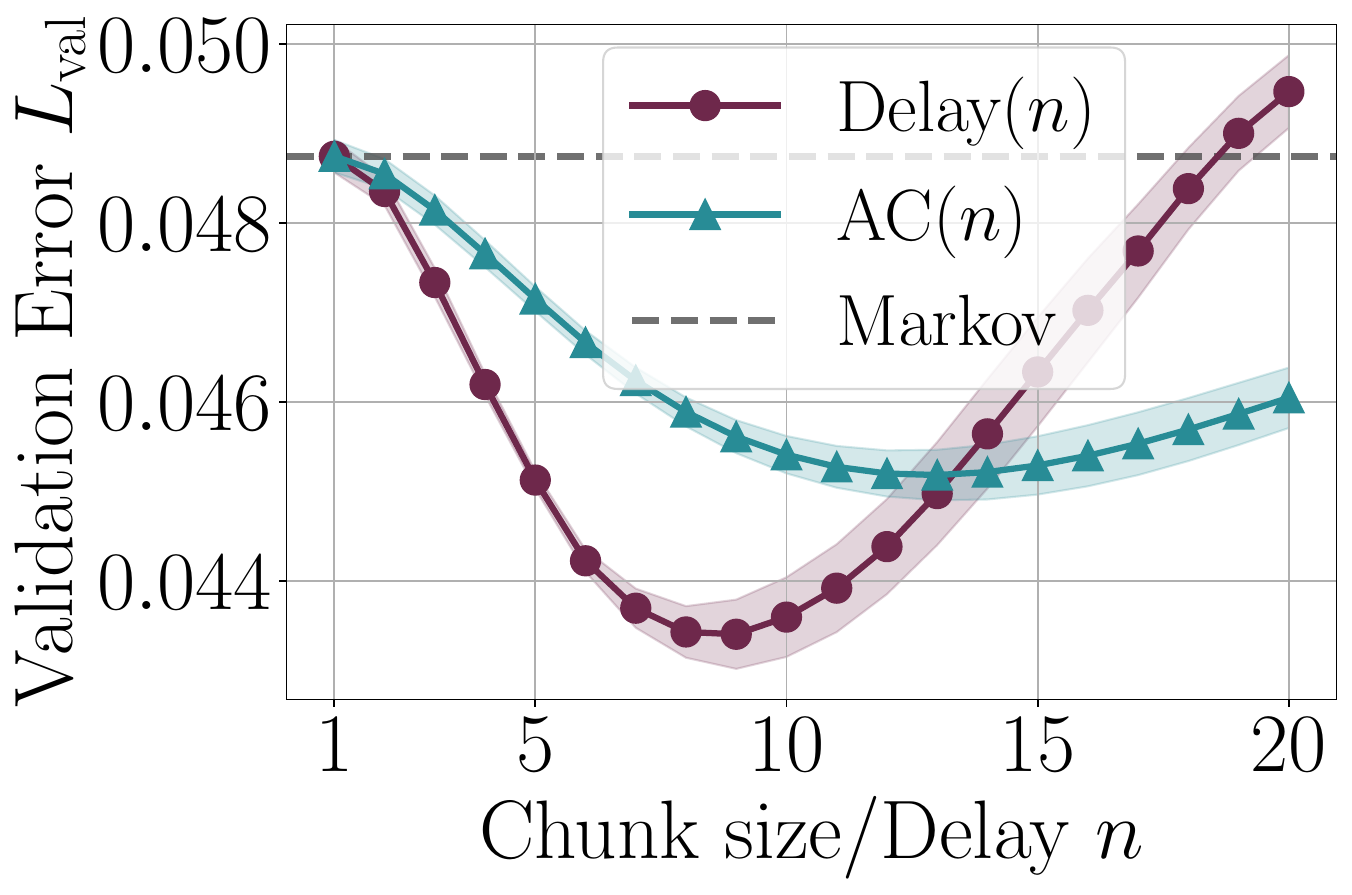}
    \end{minipage}
    \hfill
        \begin{minipage}[t]{0.23\textwidth}
        \centering
        \includegraphics[width=\linewidth]{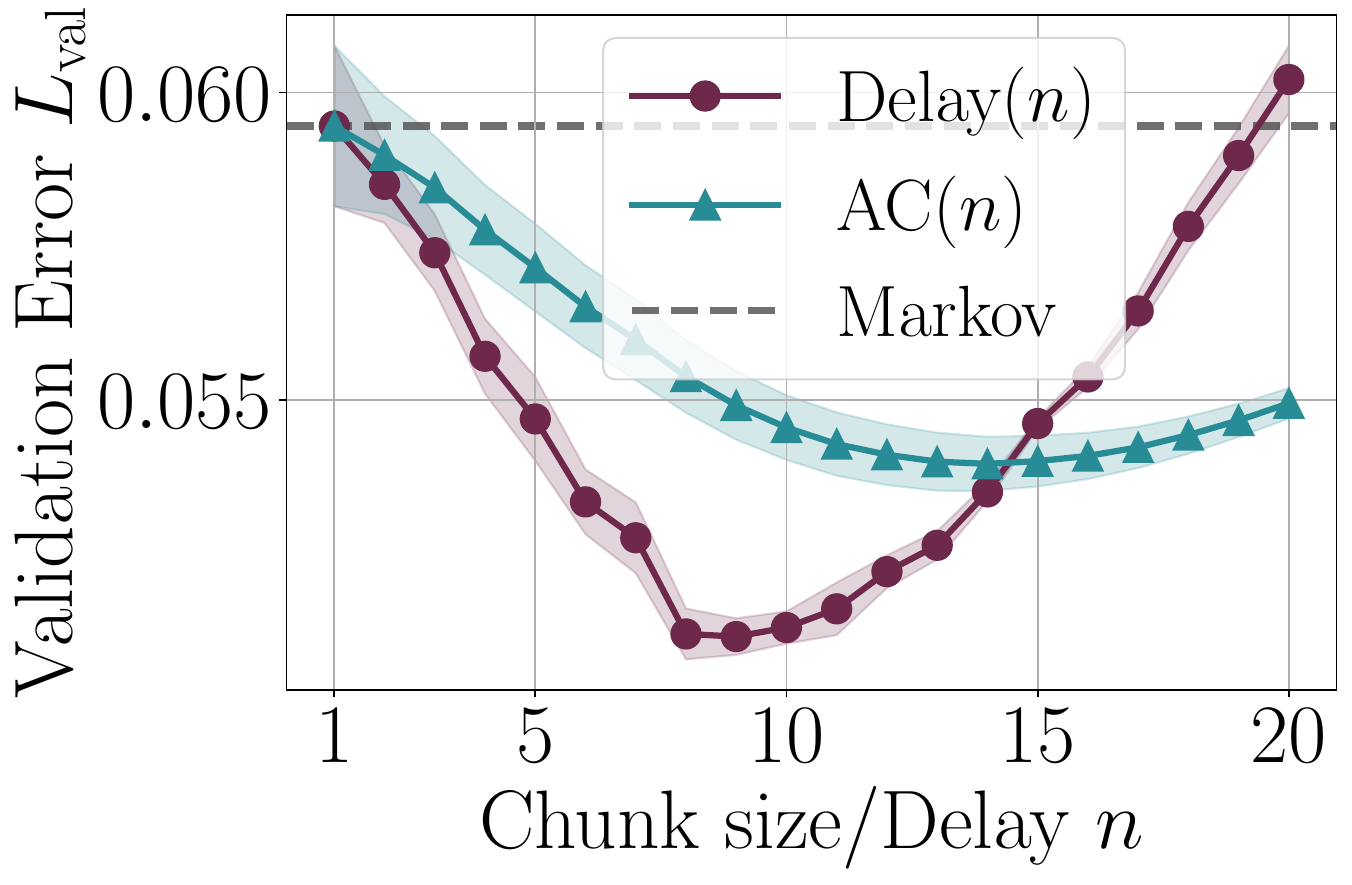}
    \end{minipage}
    \hfill
        \begin{minipage}[t]{0.23\textwidth}
        \centering
        \includegraphics[width=\linewidth]{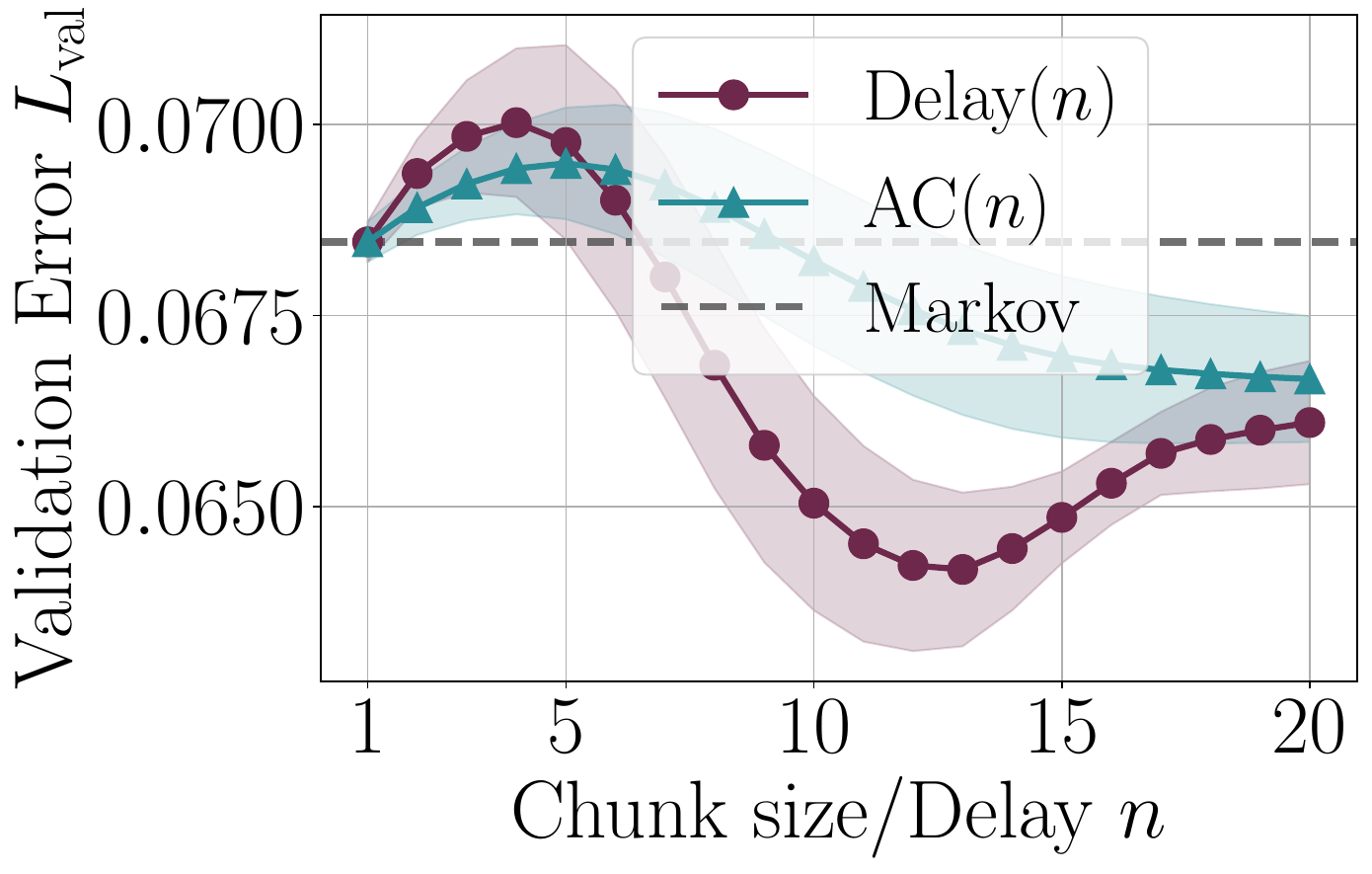}
    \end{minipage}
        \begin{minipage}[t]{0.23\textwidth}
            \includegraphics[width=\linewidth]{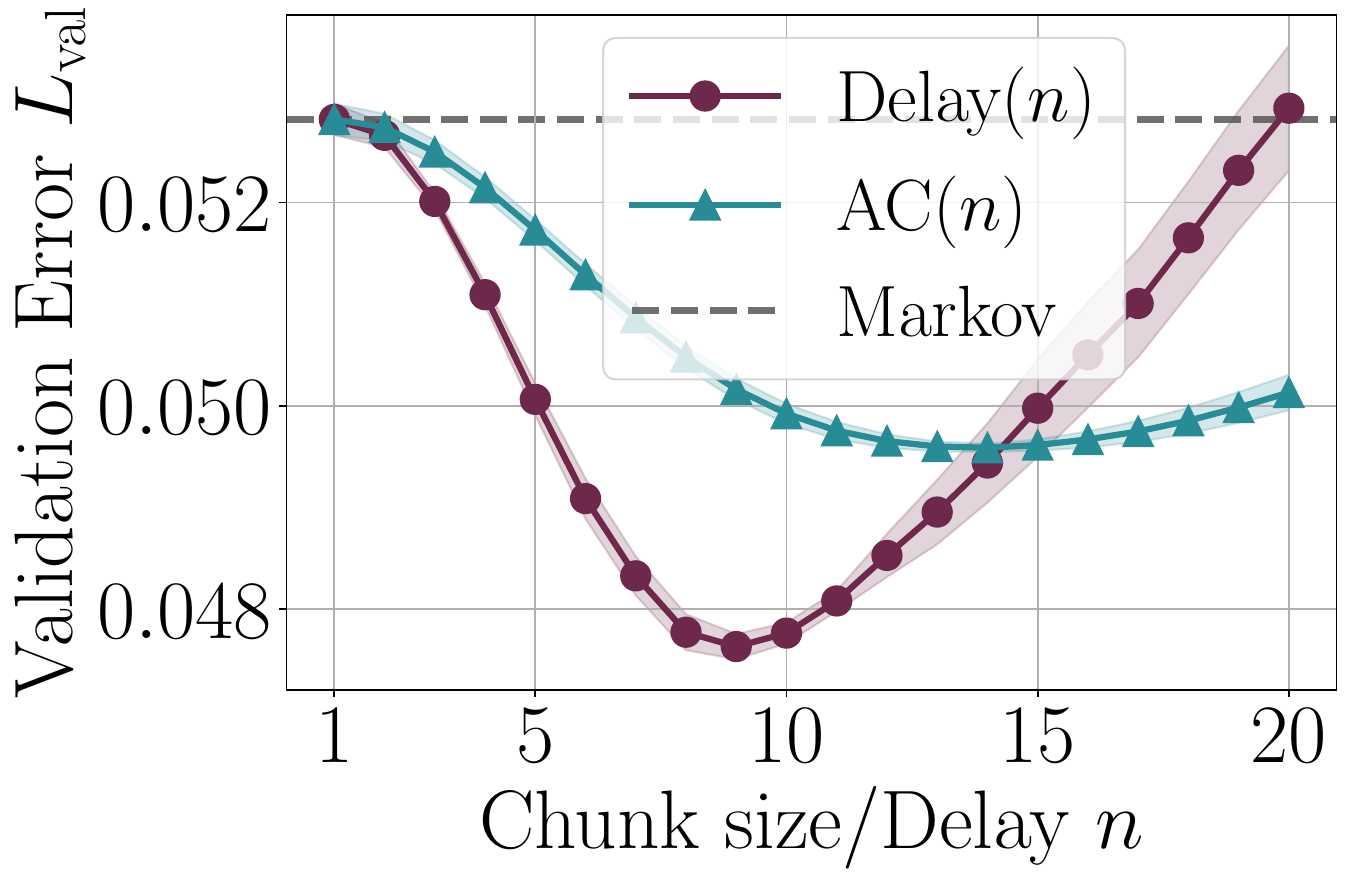}
    \end{minipage}
    \hfill
        \begin{minipage}[t]{0.23\textwidth}
        \centering
        \includegraphics[width=\linewidth]{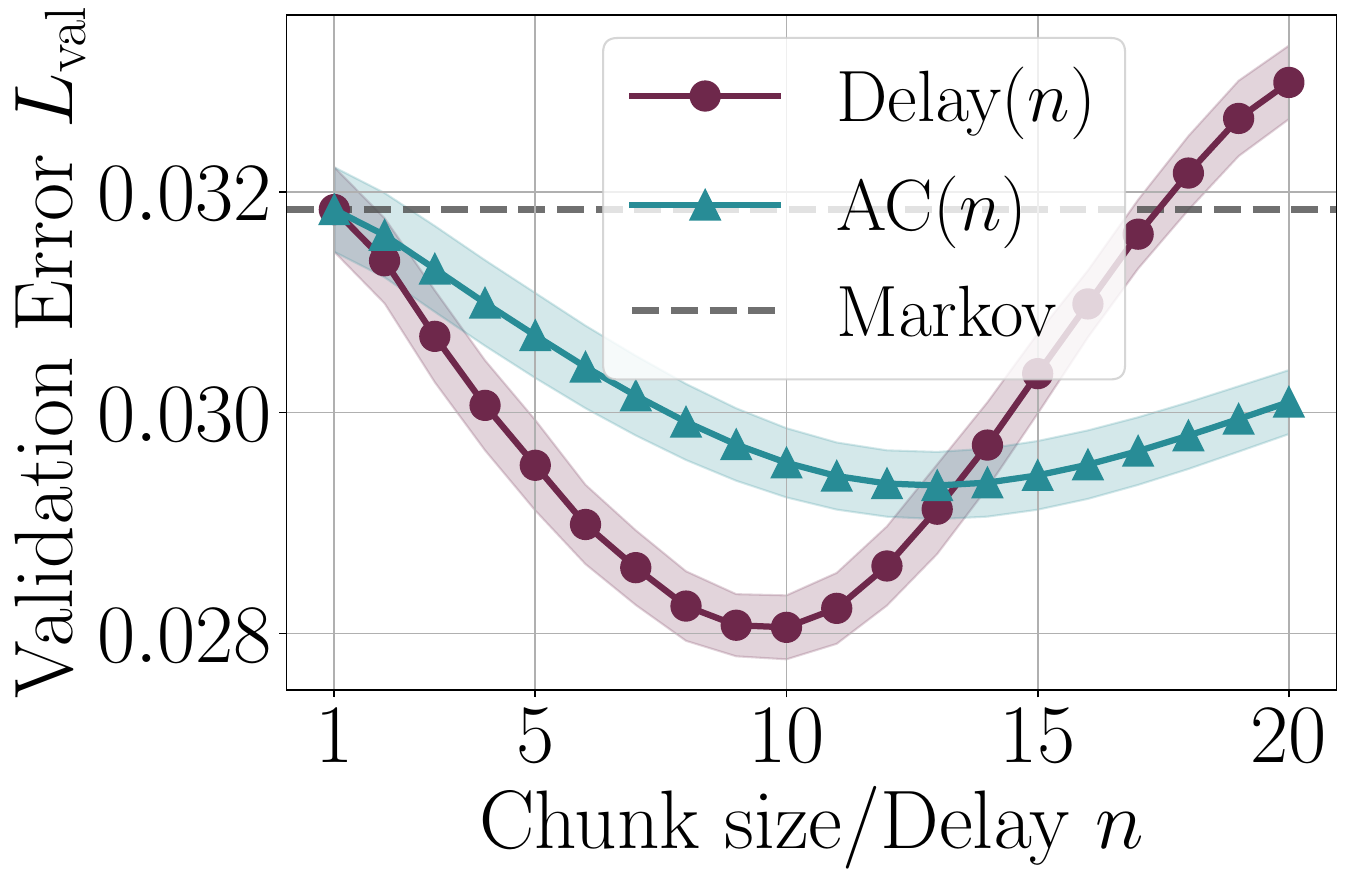}
    \end{minipage}
    \hfill
        \begin{minipage}[t]{0.23\textwidth}
        \centering
        \includegraphics[width=\linewidth]{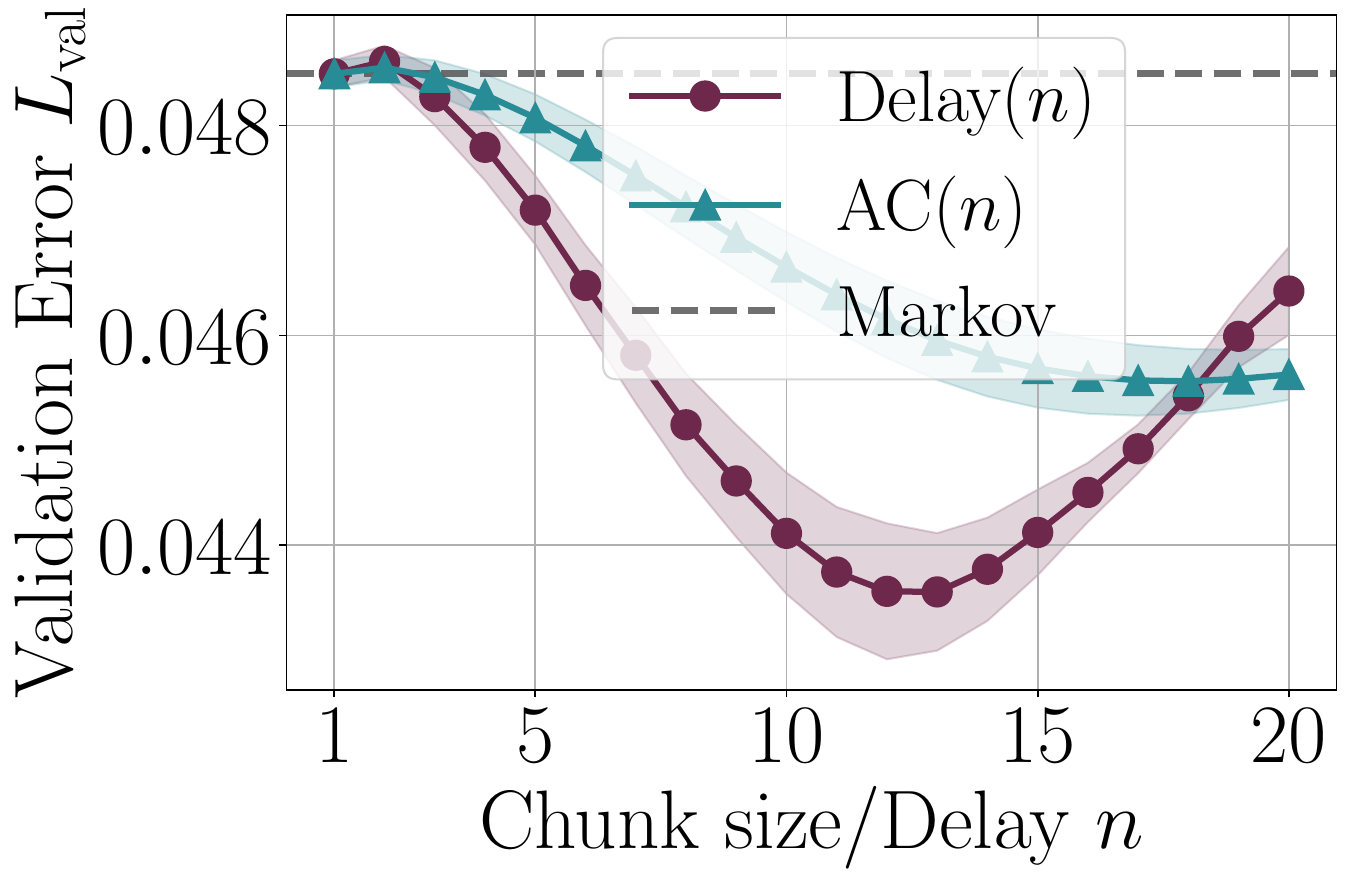}
    \end{minipage}
    \hfill
        \begin{minipage}[t]{0.23\textwidth}
        \centering
        \includegraphics[width=\linewidth]{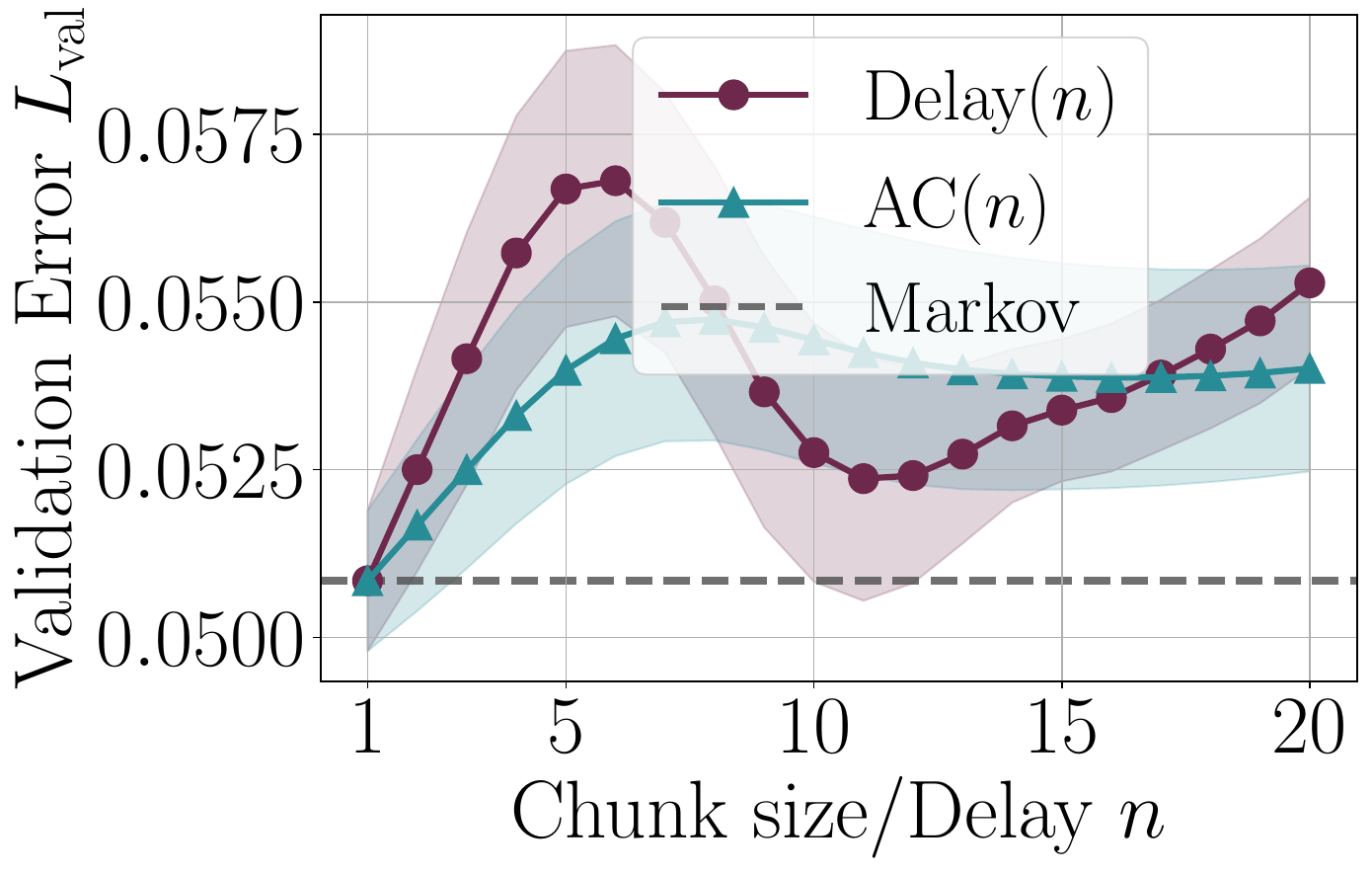}
    \end{minipage}
        \begin{minipage}[t]{0.23\textwidth}
            \includegraphics[width=\linewidth]{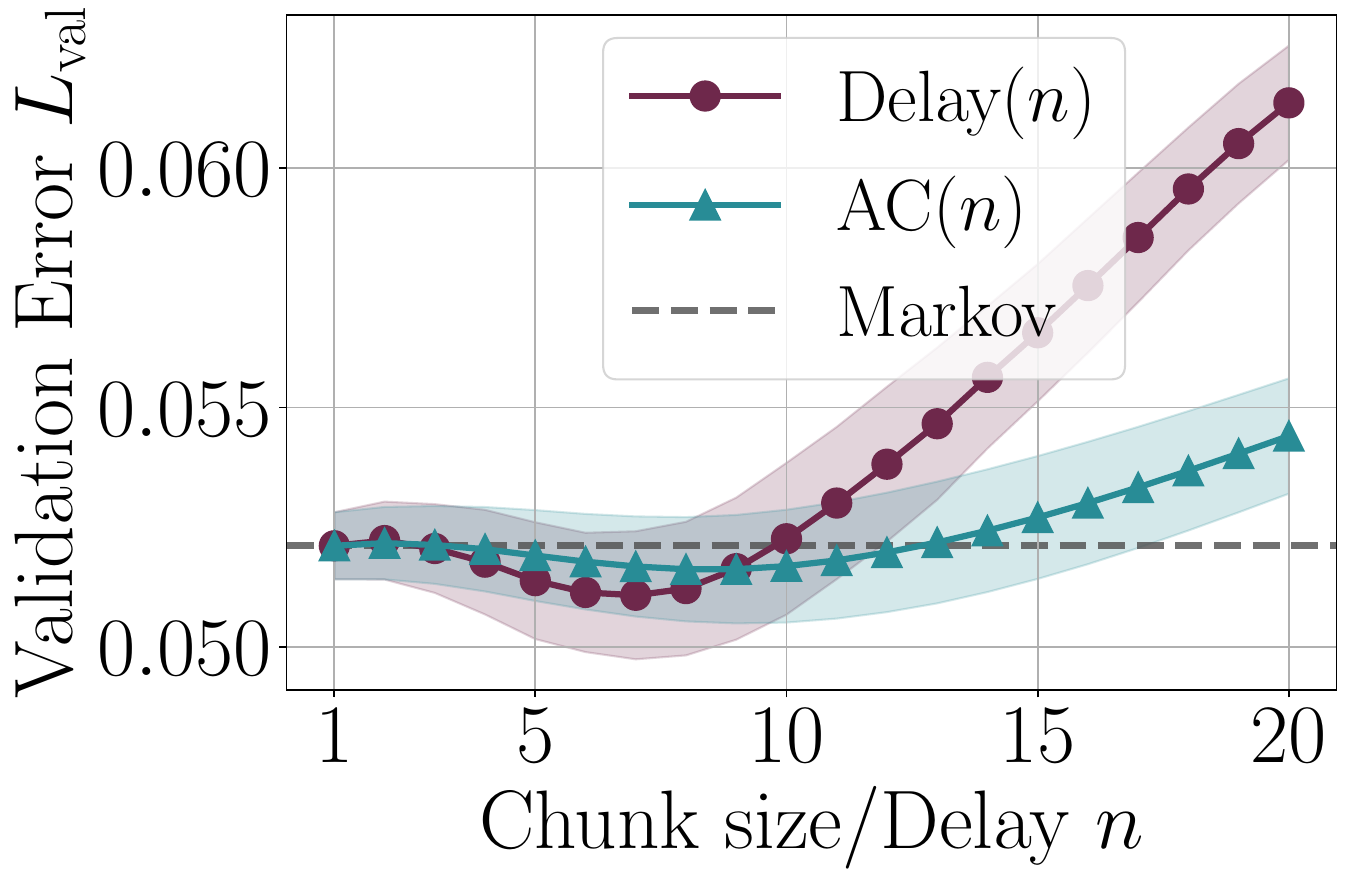}
    \end{minipage}
    \hfill
        \begin{minipage}[t]{0.23\textwidth}
        \centering
        \includegraphics[width=\linewidth]{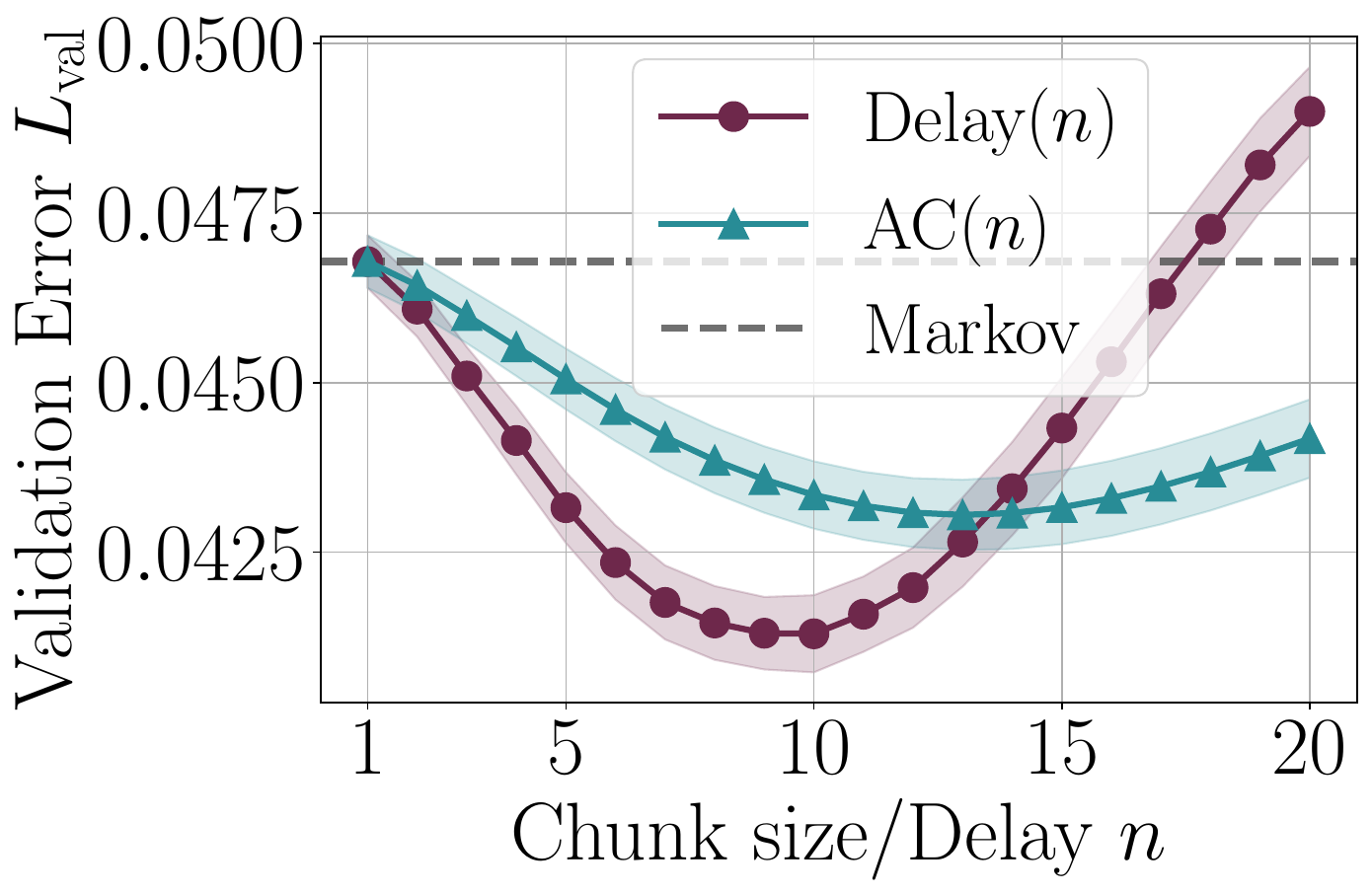}
    \end{minipage}
    \hfill
        \begin{minipage}[t]{0.23\textwidth}
        \centering
        \includegraphics[width=\linewidth]{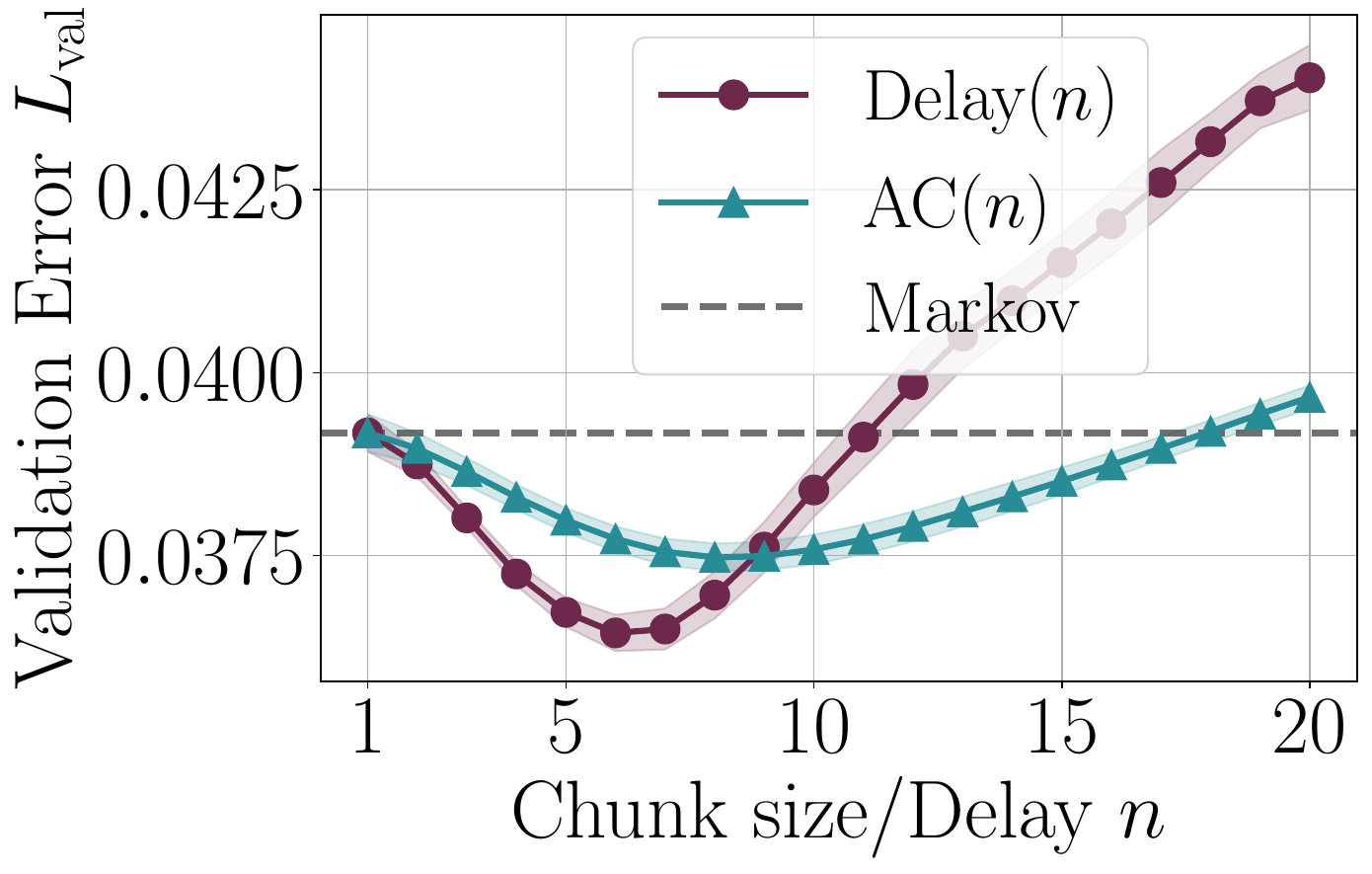}
    \end{minipage}
    \hfill
        \begin{minipage}[t]{0.23\textwidth}
        \centering
        \includegraphics[width=\linewidth]{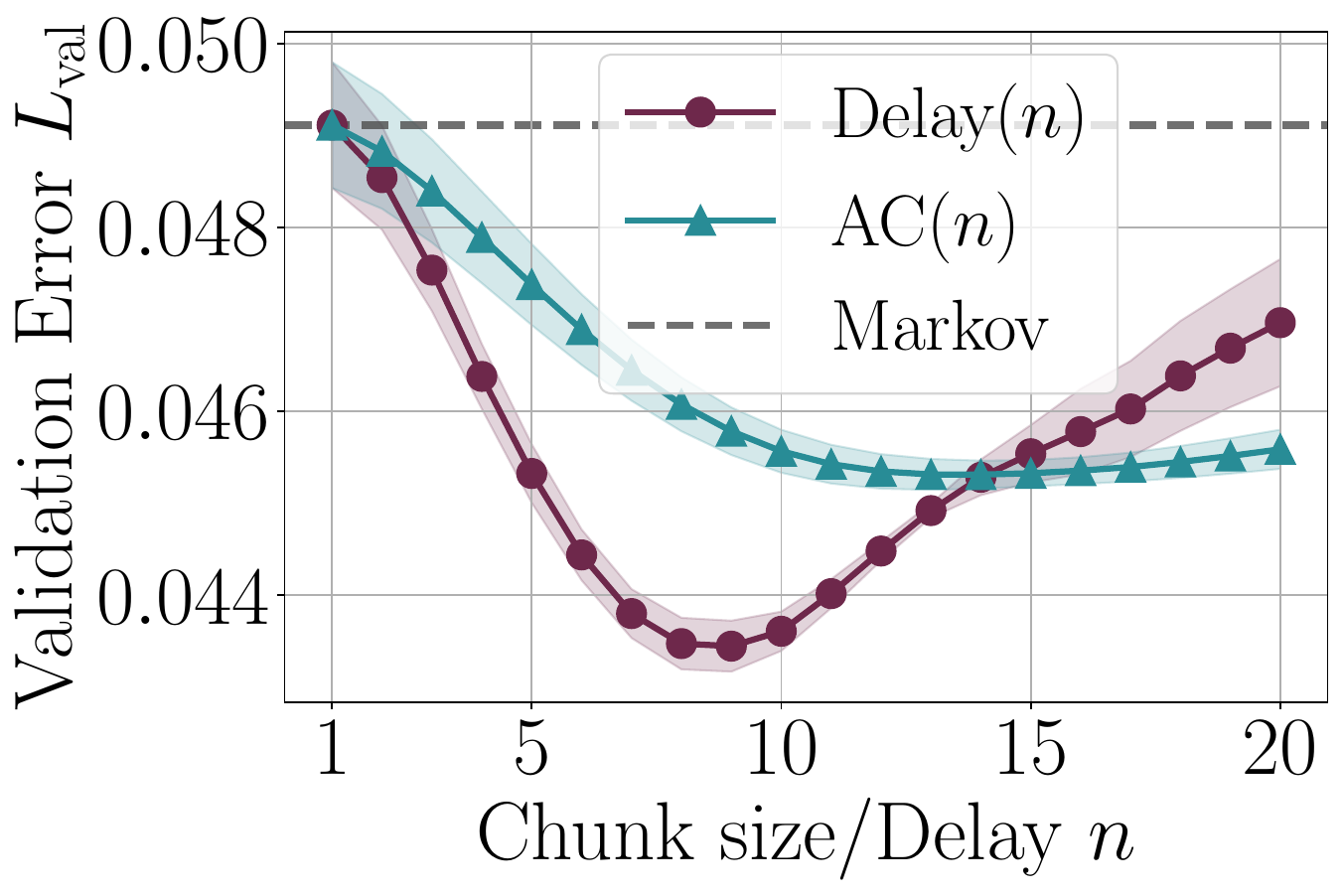}
    \end{minipage}
        \begin{minipage}[t]{0.23\textwidth}
            \includegraphics[width=\linewidth]{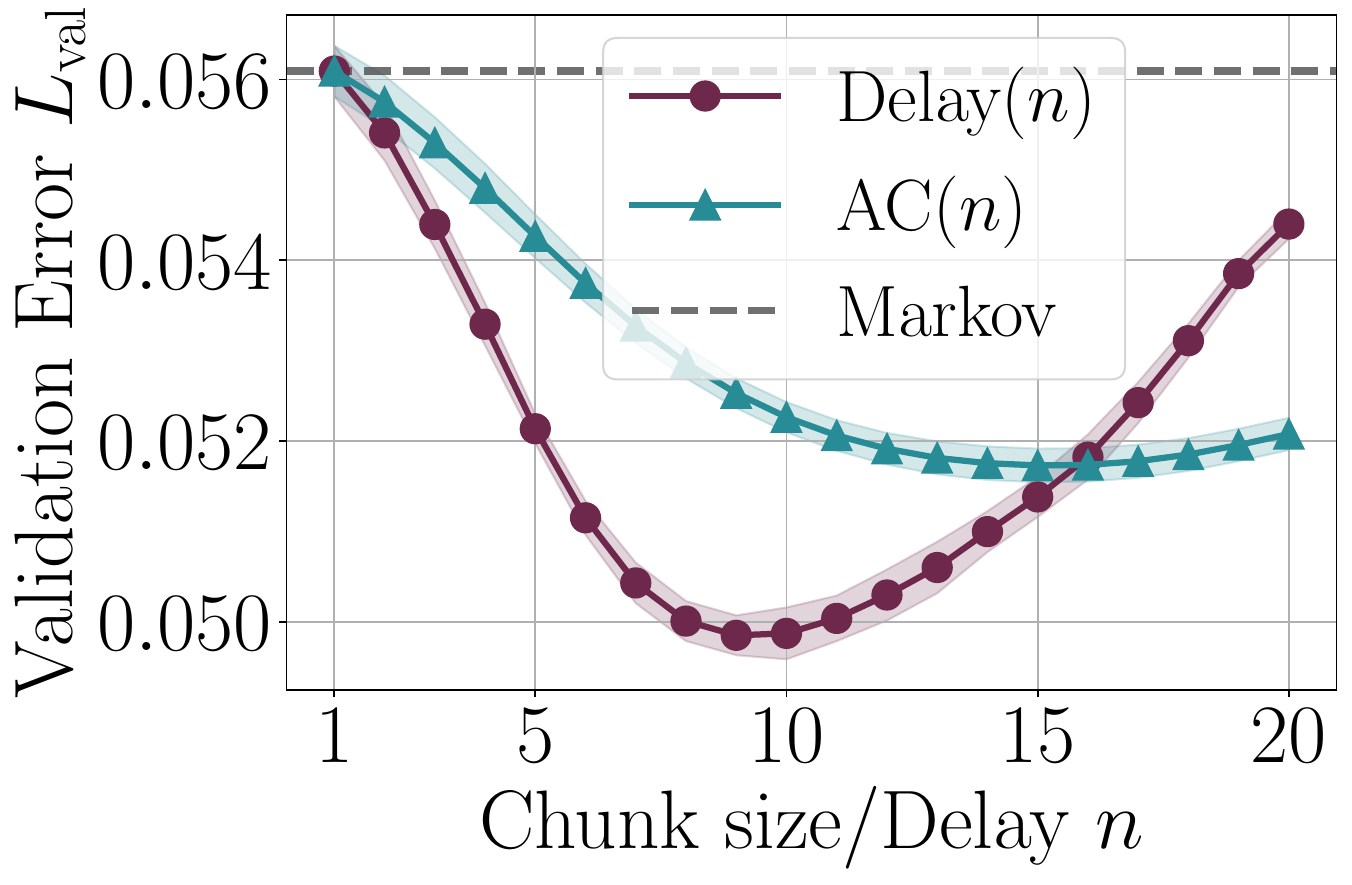}
    \end{minipage}
    \hfill
        \begin{minipage}[t]{0.23\textwidth}
        \centering
        \includegraphics[width=\linewidth]{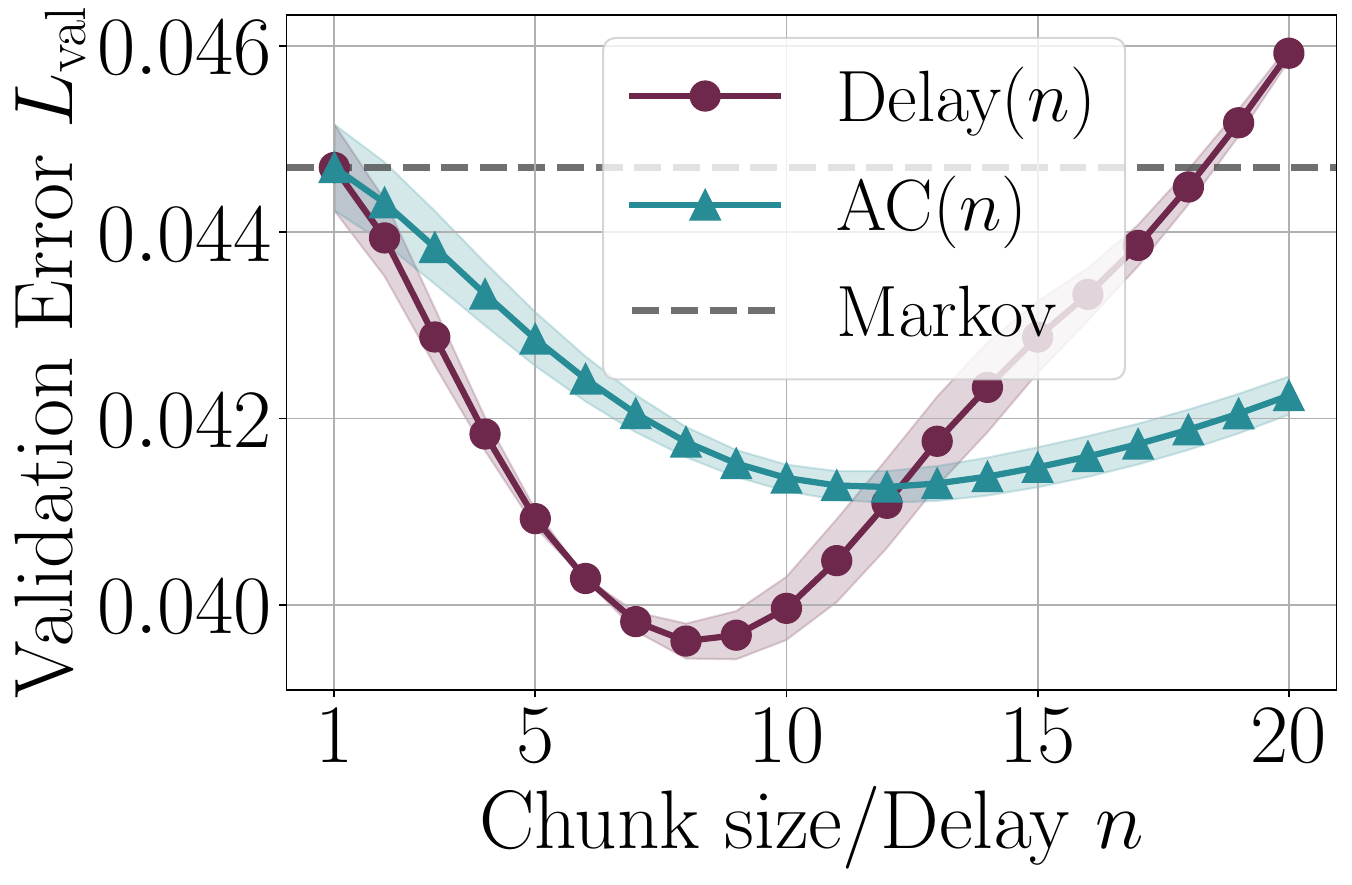}
    \end{minipage}
    \hfill
        \begin{minipage}[t]{0.23\textwidth}
        \centering
        \includegraphics[width=\linewidth]{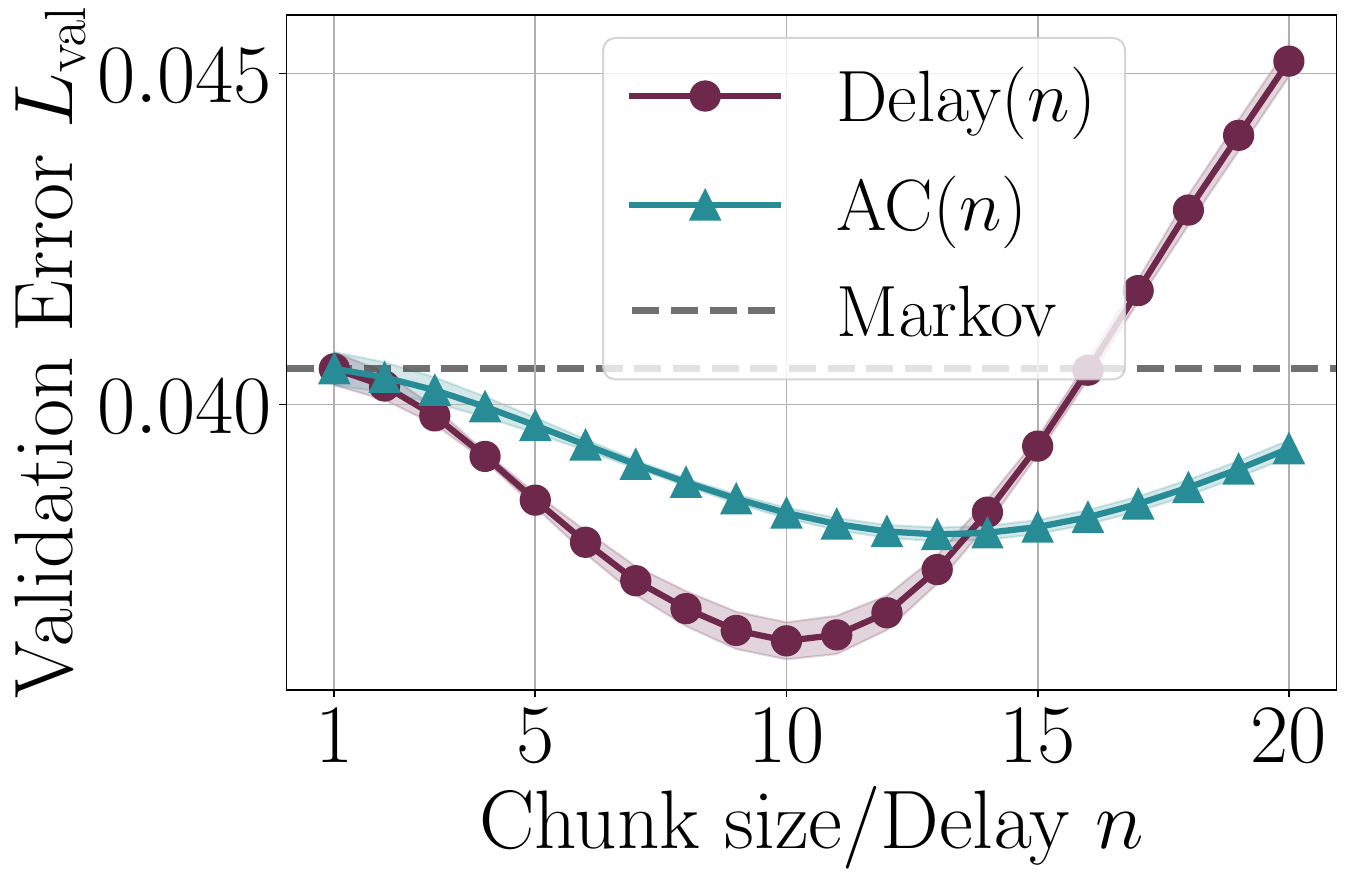}
    \end{minipage}
    \hfill
        \begin{minipage}[t]{0.23\textwidth}
        \centering
        \includegraphics[width=\linewidth]{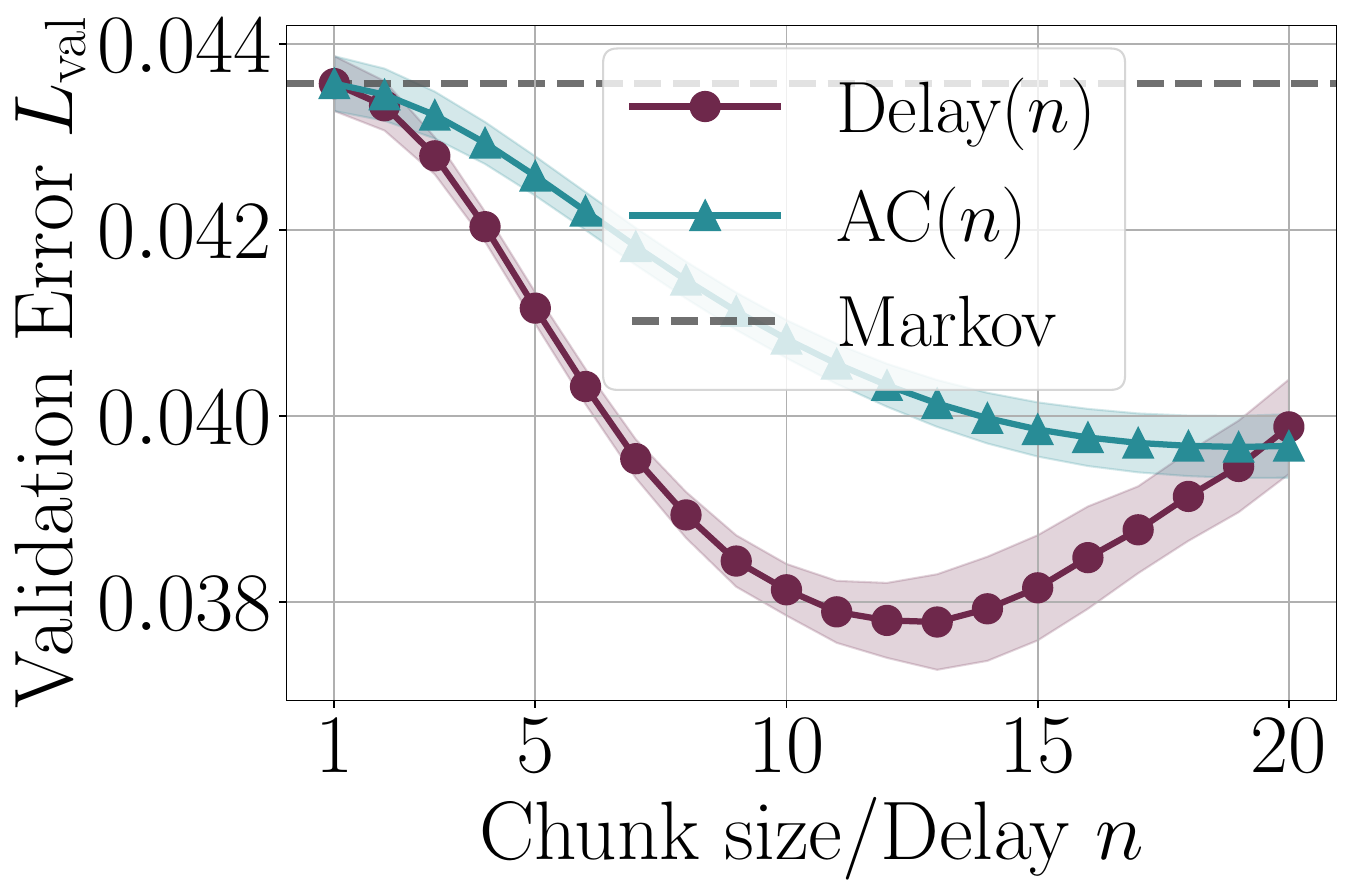}
    \end{minipage}
        \begin{minipage}[t]{0.23\textwidth}
            \includegraphics[width=\linewidth]{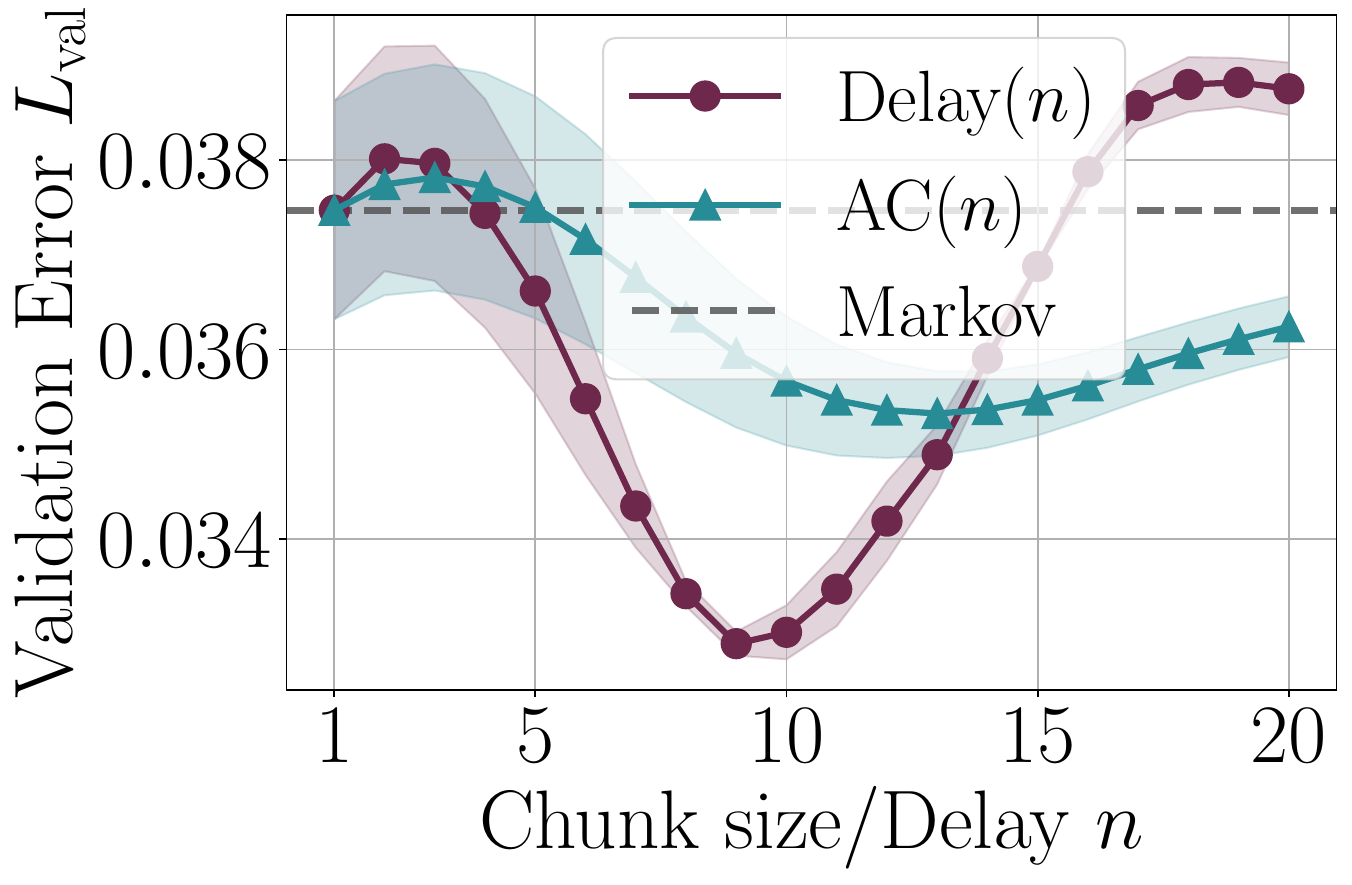}
    \end{minipage}
    \hfill
        \begin{minipage}[t]{0.23\textwidth}
        \centering
        \includegraphics[width=\linewidth]{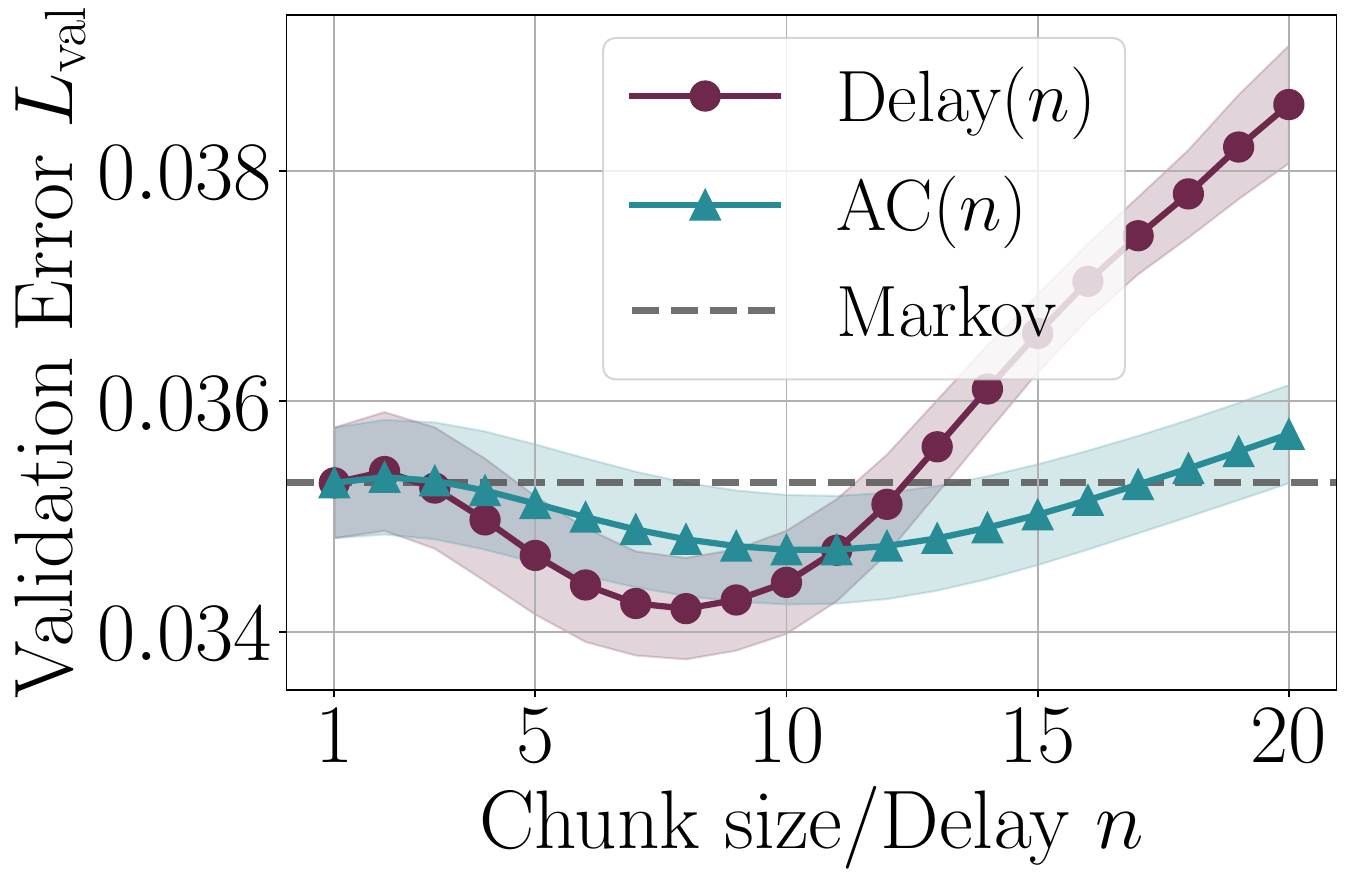}
    \end{minipage}
    \hfill
        \begin{minipage}[t]{0.23\textwidth}
        \centering
        \includegraphics[width=\linewidth]{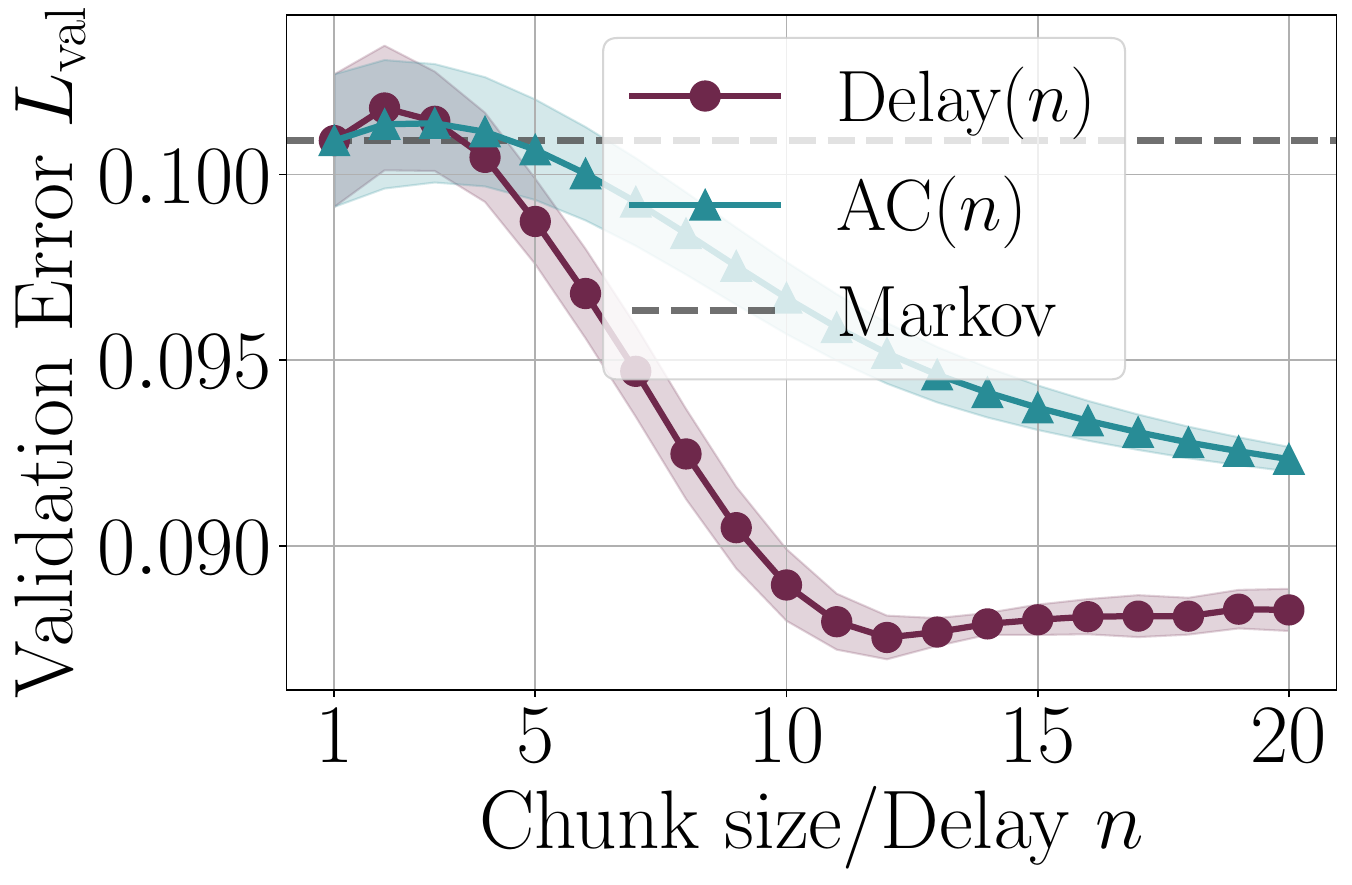}
    \end{minipage}
    \hfill
        \begin{minipage}[t]{0.23\textwidth}
        \centering
        \includegraphics[width=\linewidth]{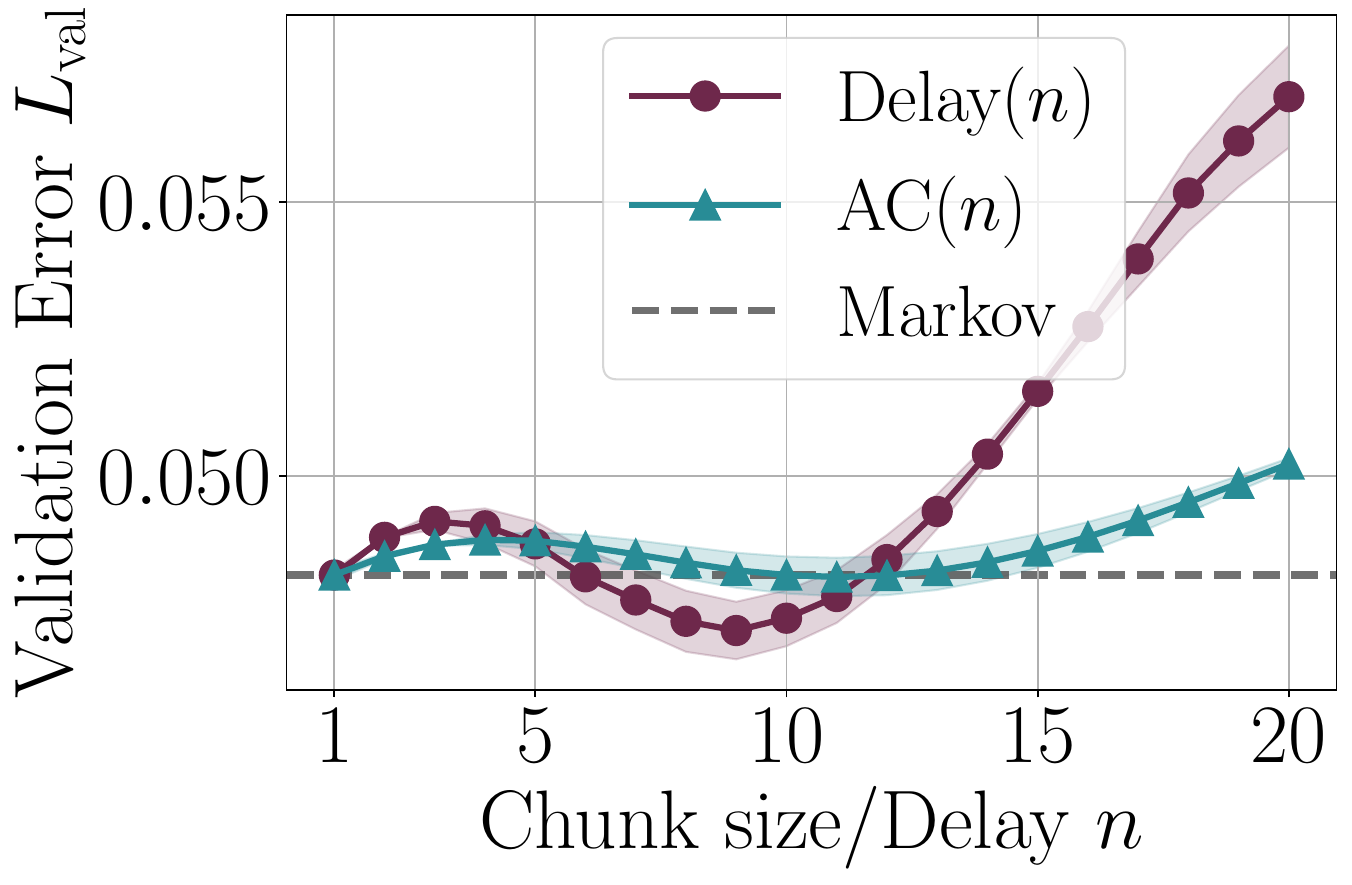}
    \end{minipage}
        \begin{minipage}[t]{0.23\textwidth}
            \includegraphics[width=\linewidth]{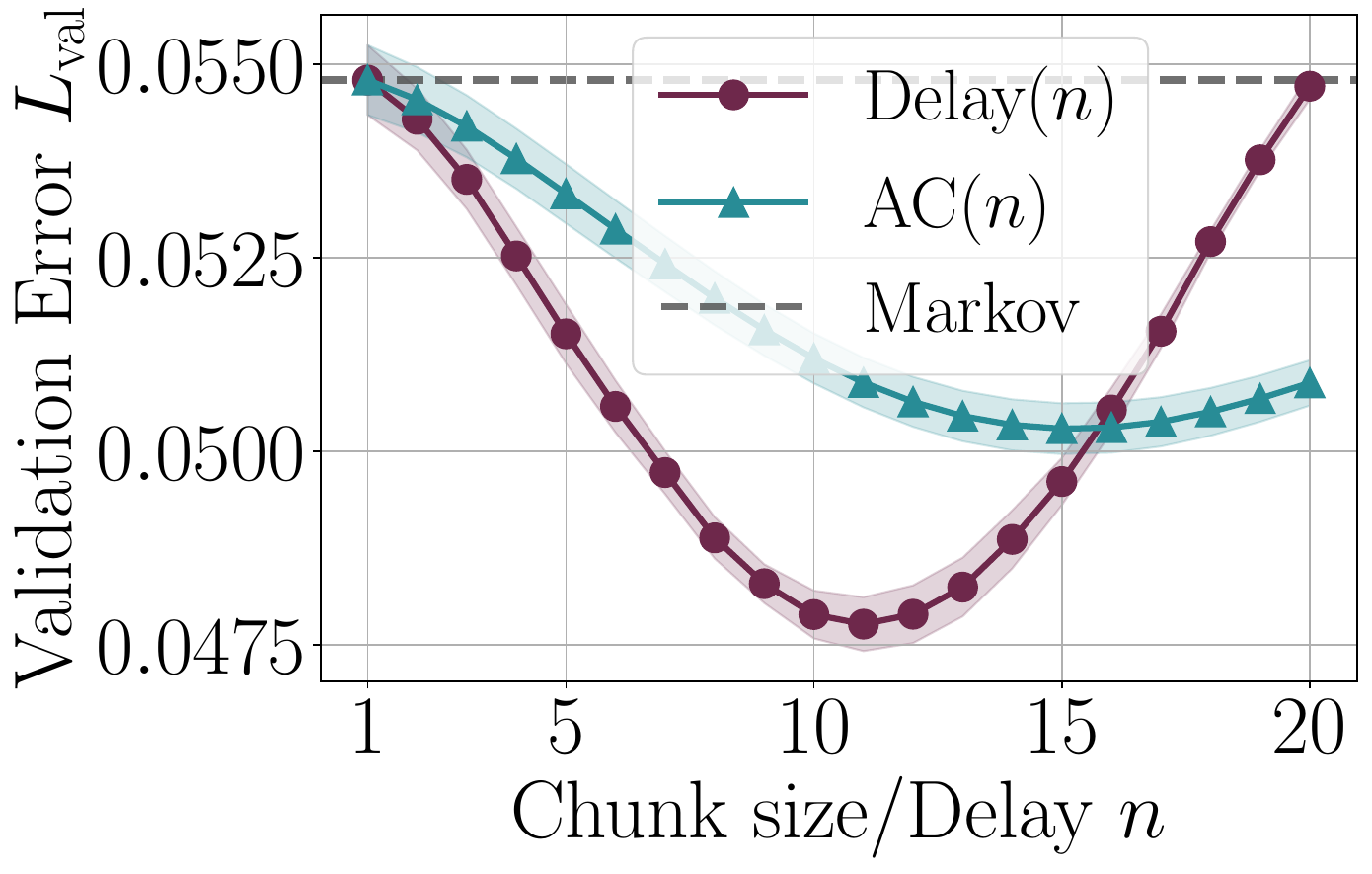}
    \end{minipage}
    \hfill
        \begin{minipage}[t]{0.23\textwidth}
        \centering
        \includegraphics[width=\linewidth]{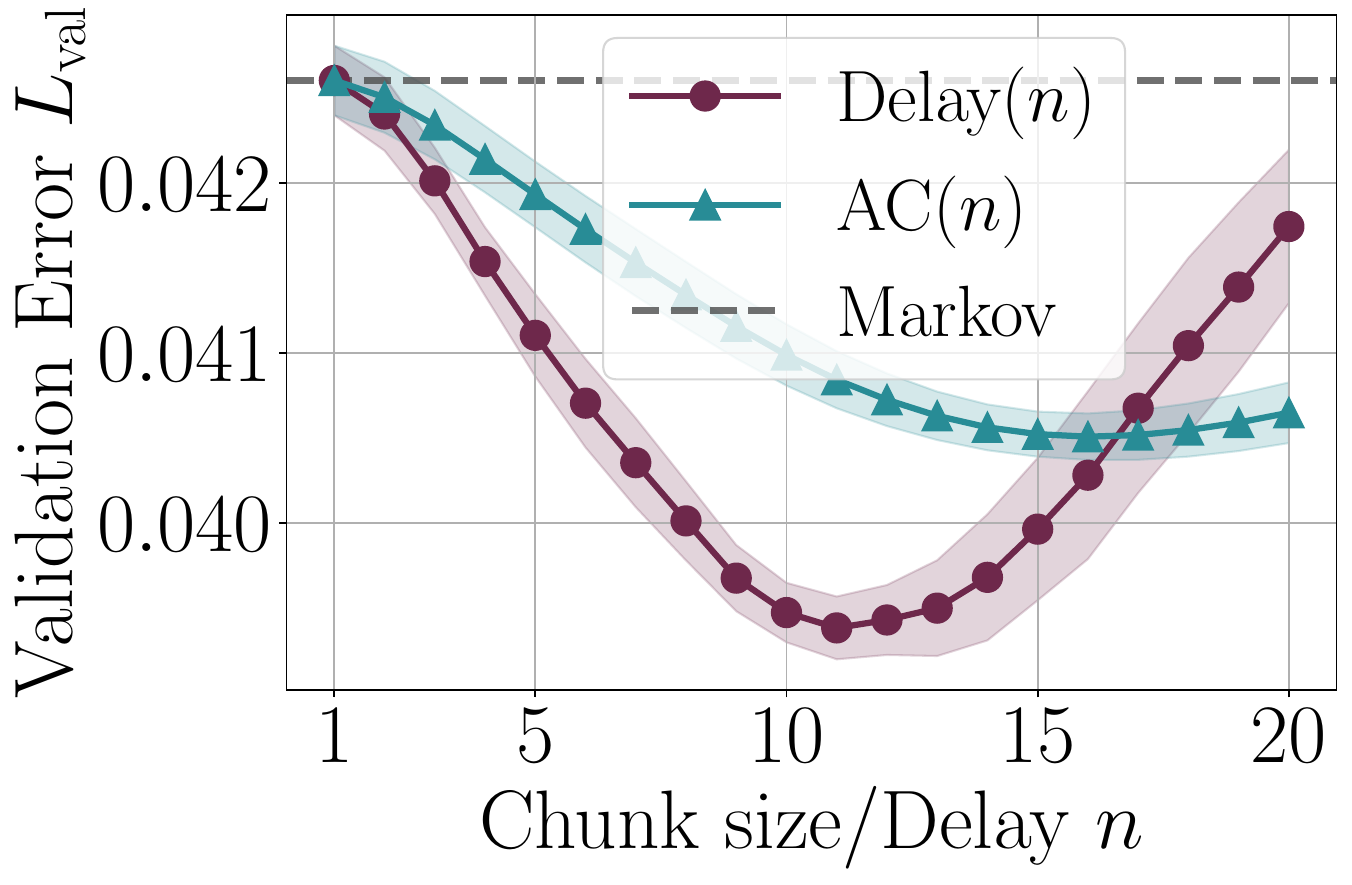}
    \end{minipage}
    \hfill
        \begin{minipage}[t]{0.23\textwidth}
        \centering
        \includegraphics[width=\linewidth]{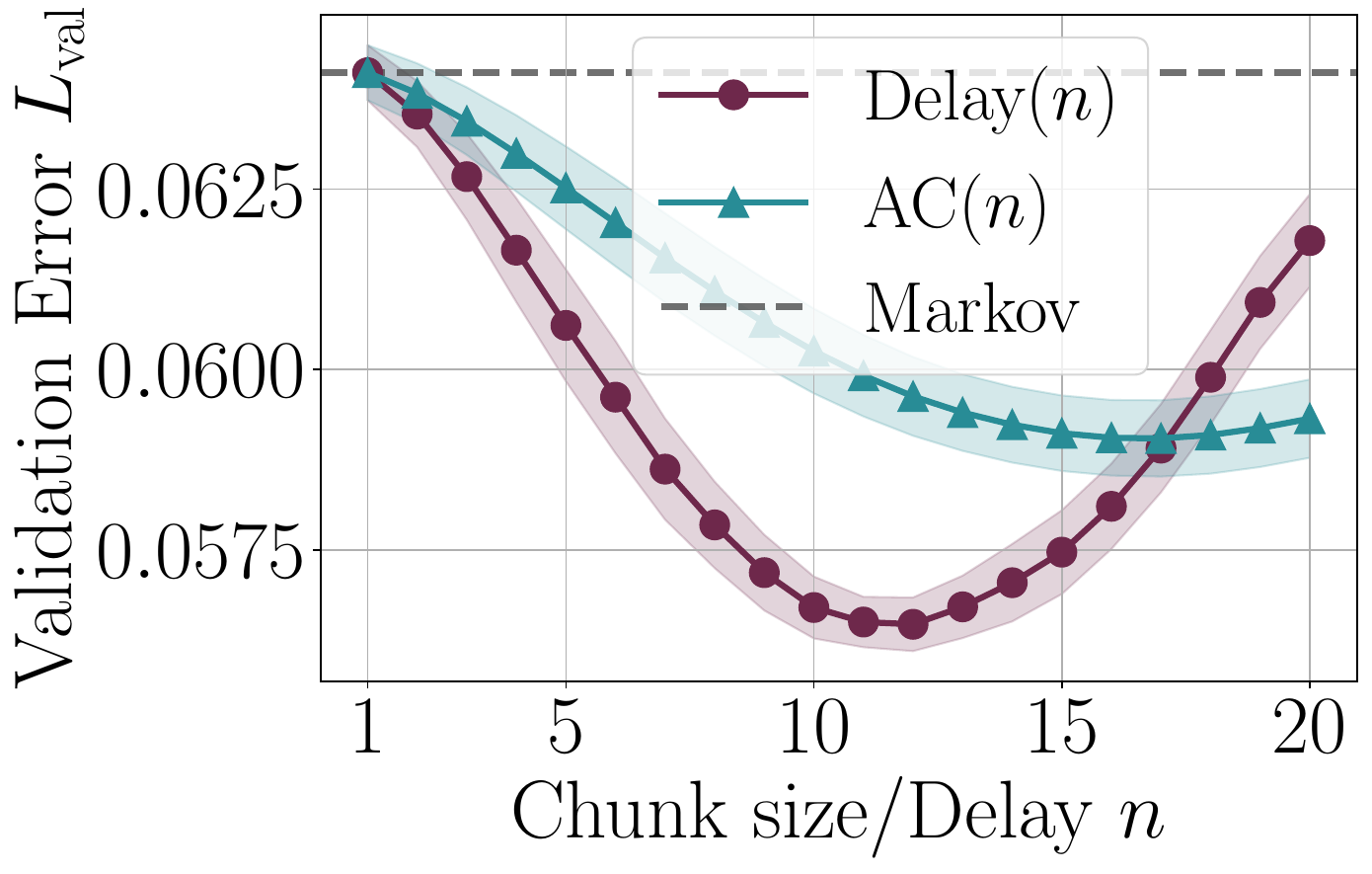}
    \end{minipage}
    \hfill
        \begin{minipage}[t]{0.23\textwidth}
        \centering
        \includegraphics[width=\linewidth]{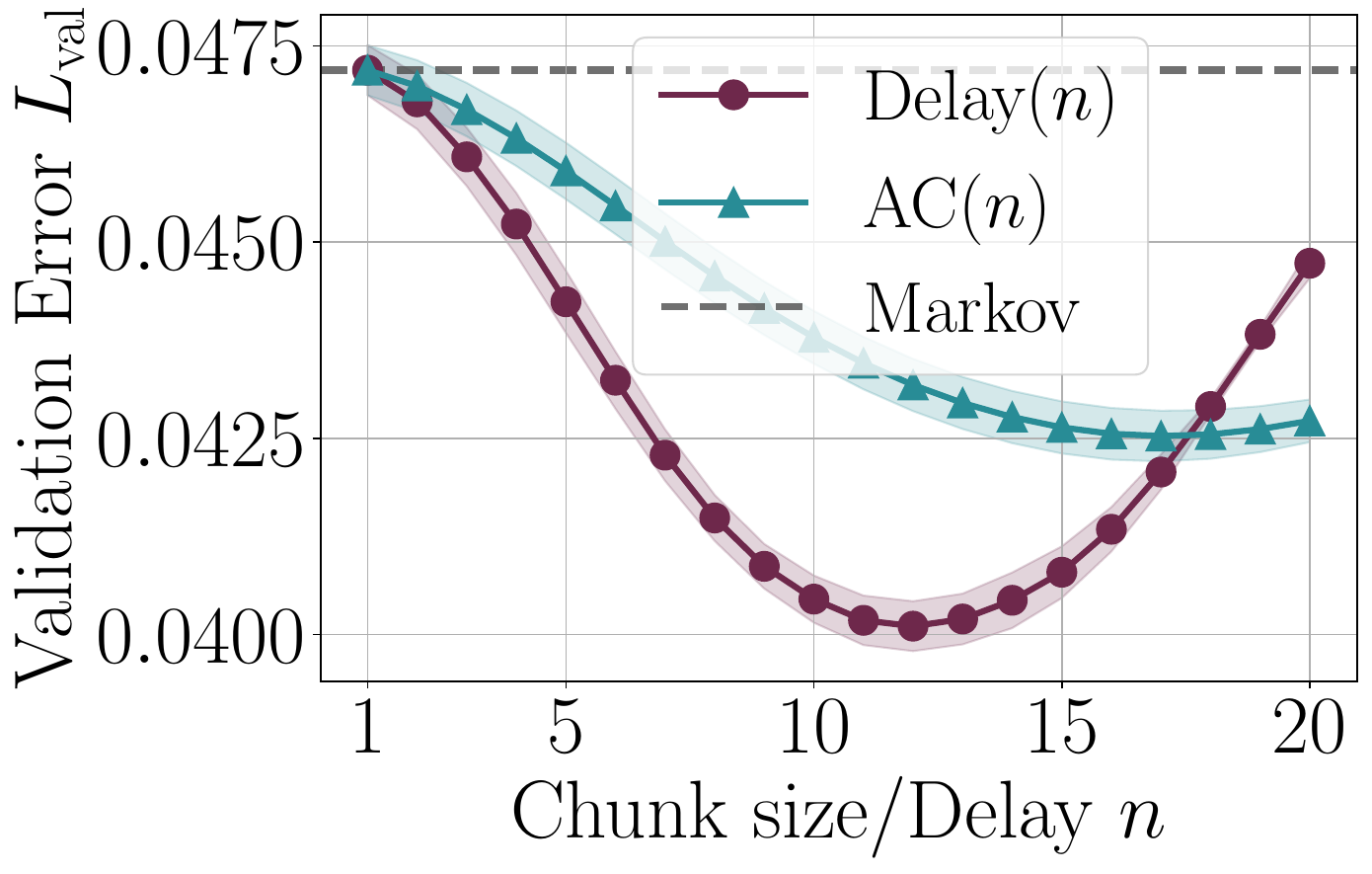}
    \end{minipage}
        \begin{minipage}[t]{0.23\textwidth}
            \includegraphics[width=\linewidth]{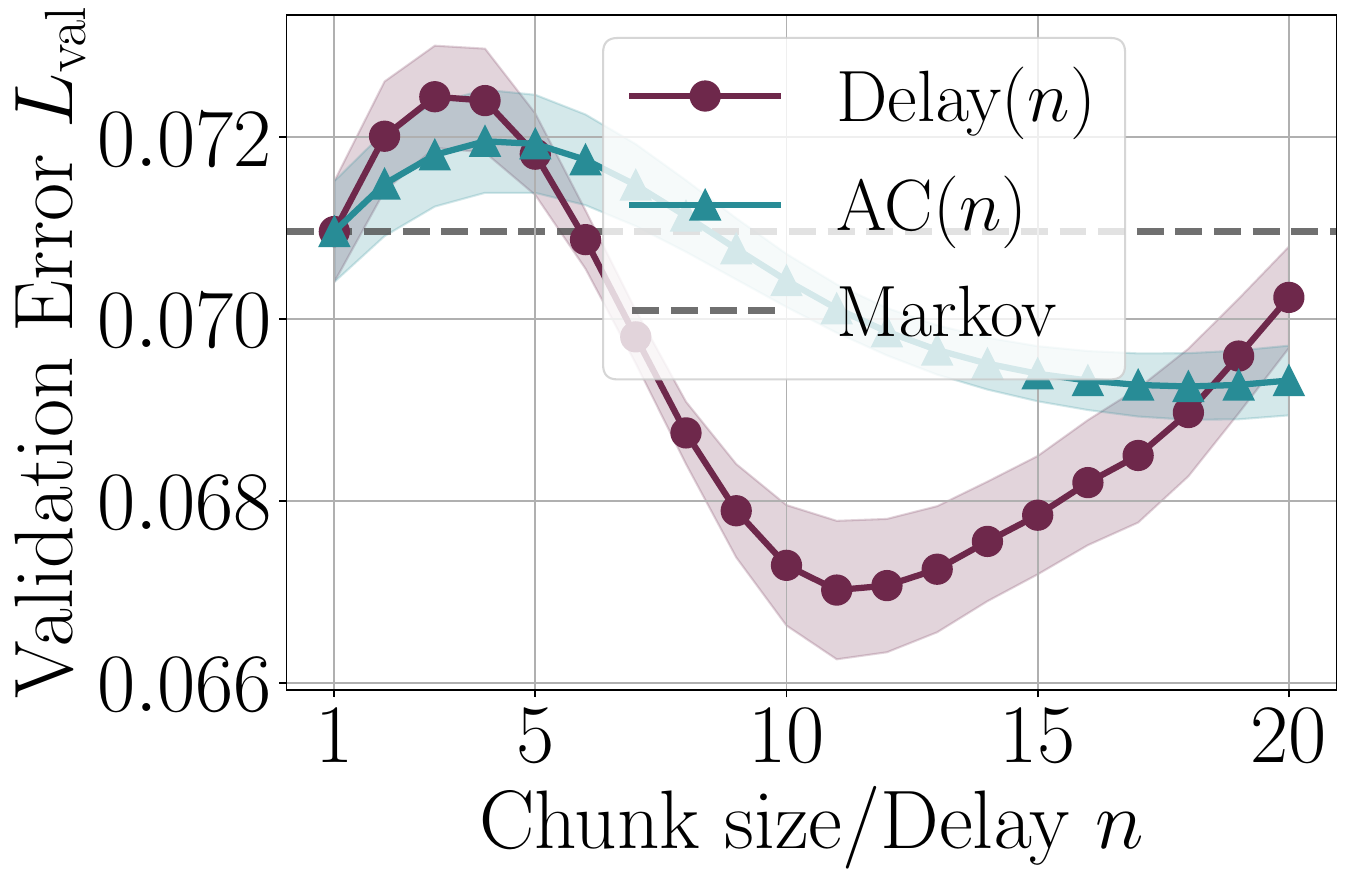}
    \end{minipage}
    \hfill
        \begin{minipage}[t]{0.23\textwidth}
        \centering
        \includegraphics[width=\linewidth]{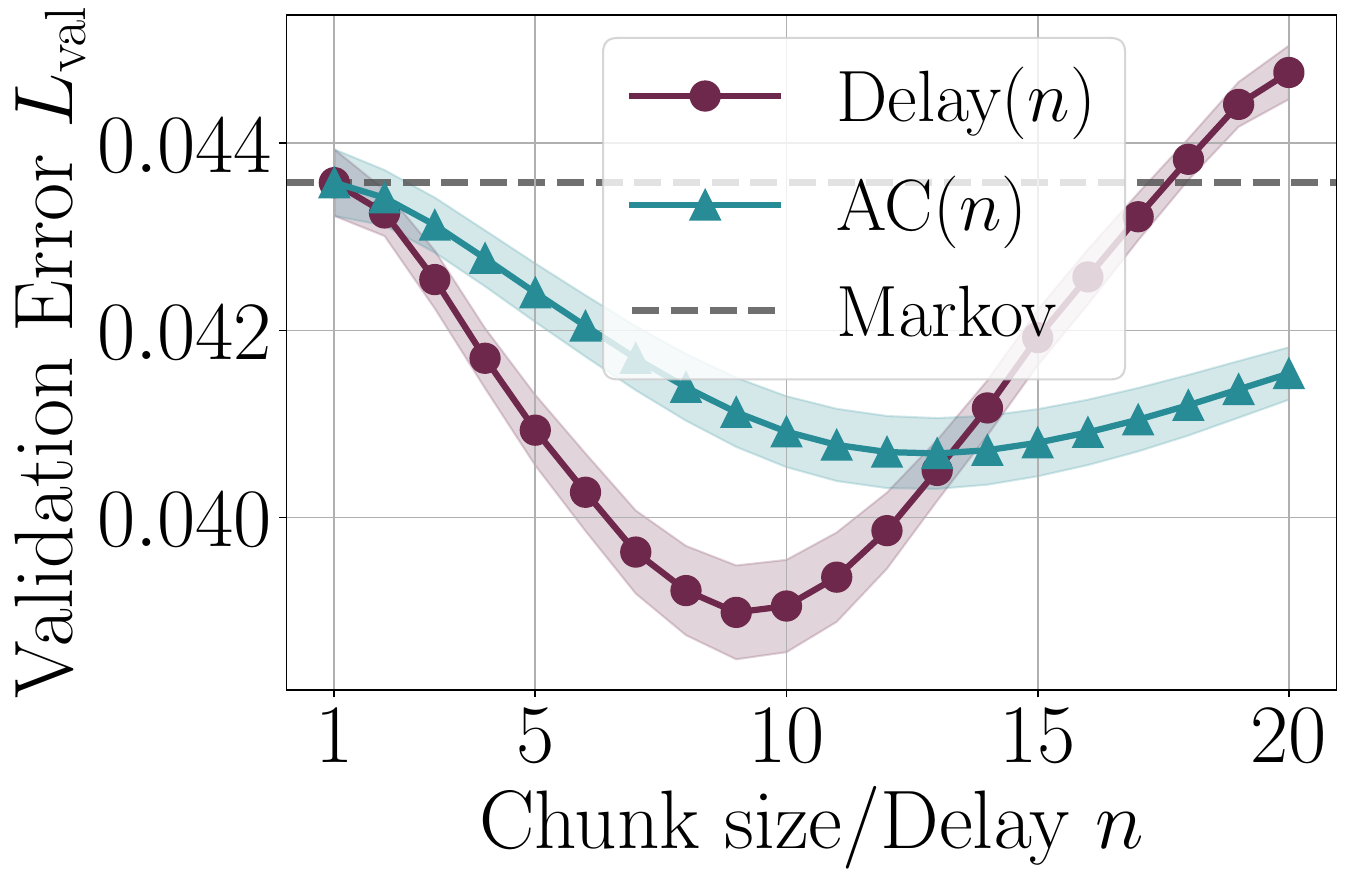}
    \end{minipage}
    \hfill
        \begin{minipage}[t]{0.23\textwidth}
        \centering
        \includegraphics[width=\linewidth]{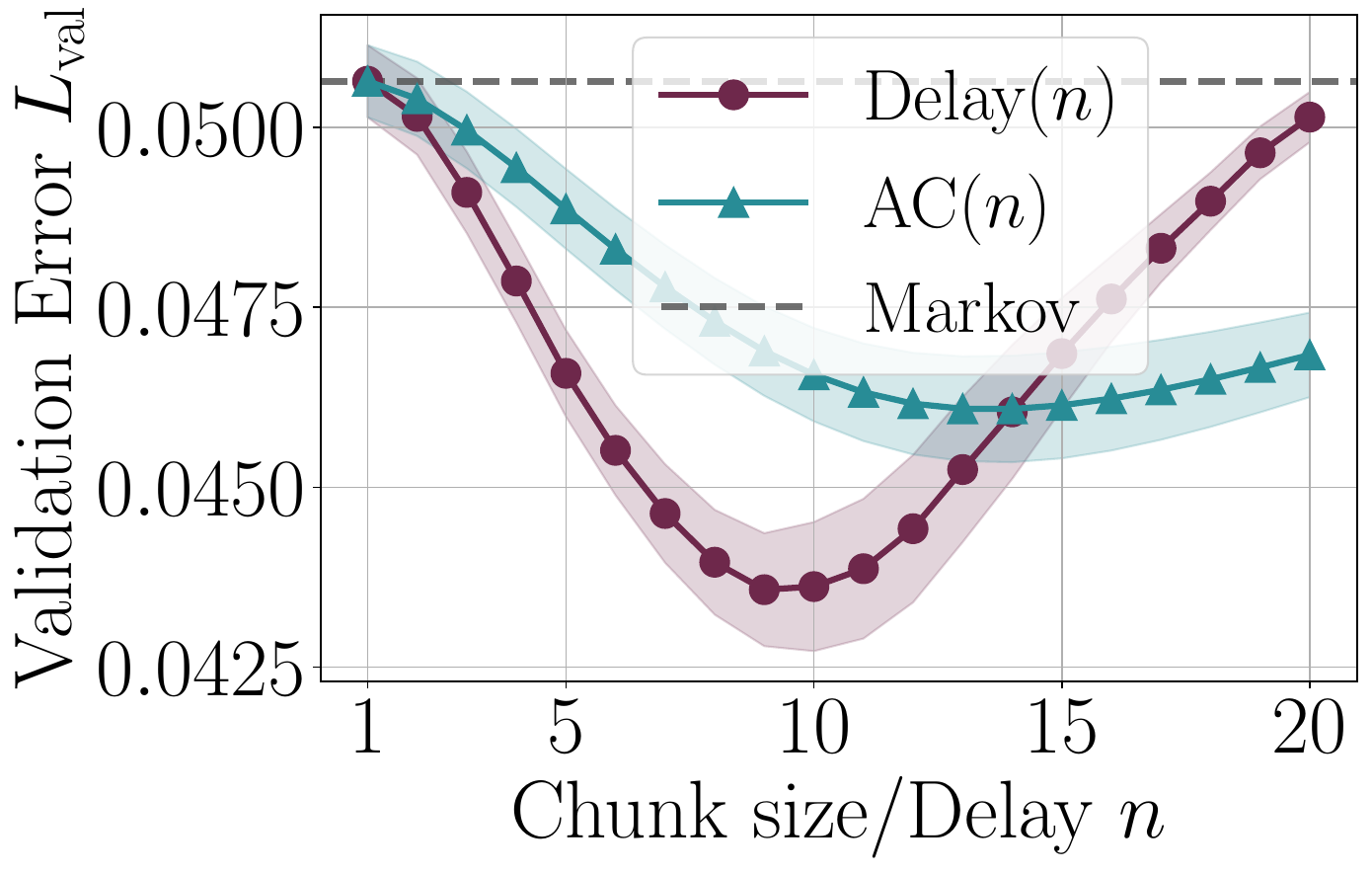}
    \end{minipage}
    \hfill
        \begin{minipage}[t]{0.23\textwidth}
        \centering
        \includegraphics[width=\linewidth]{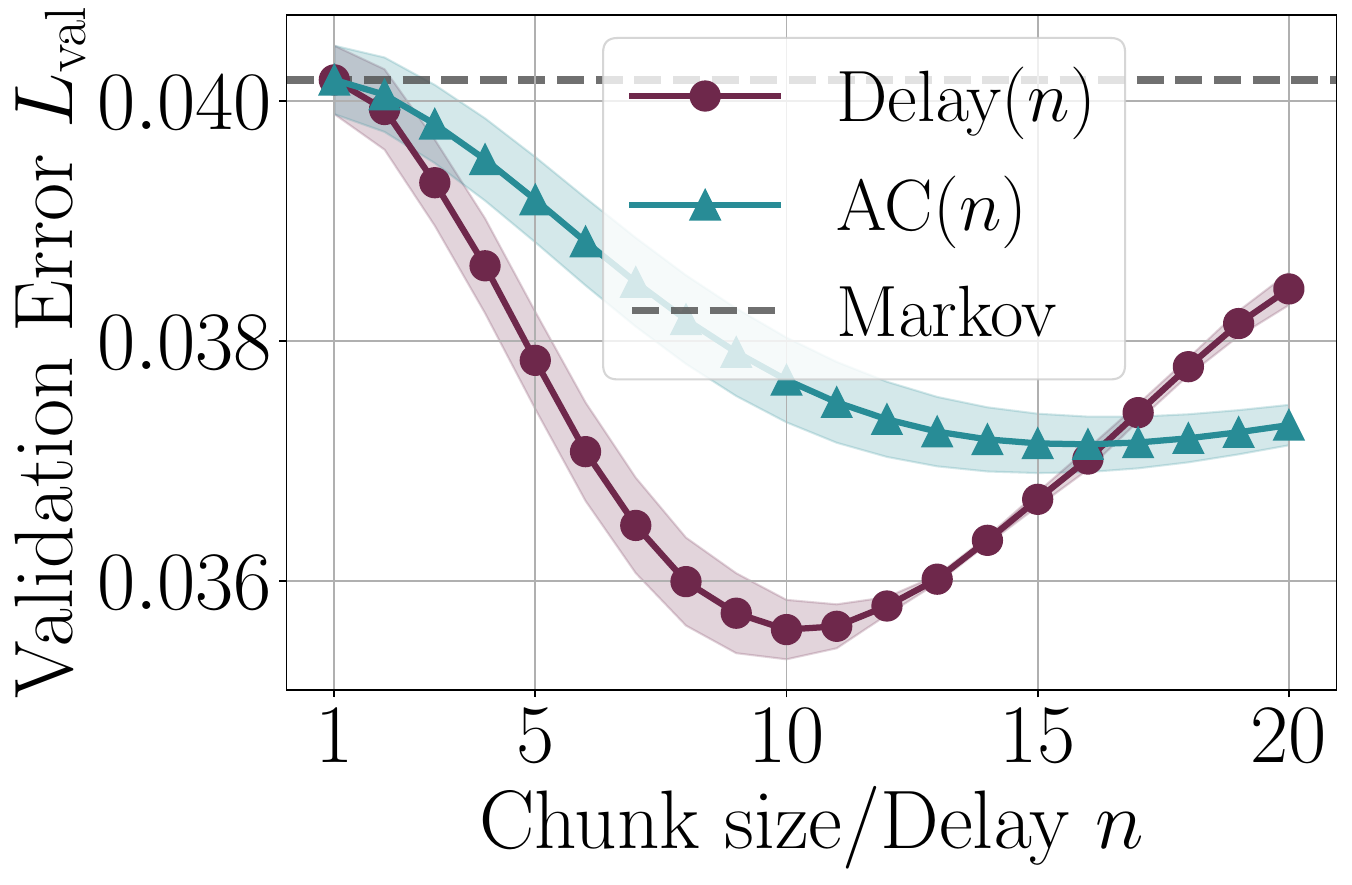}
    \end{minipage}
        \begin{minipage}[t]{0.23\textwidth}
            \includegraphics[width=\linewidth]{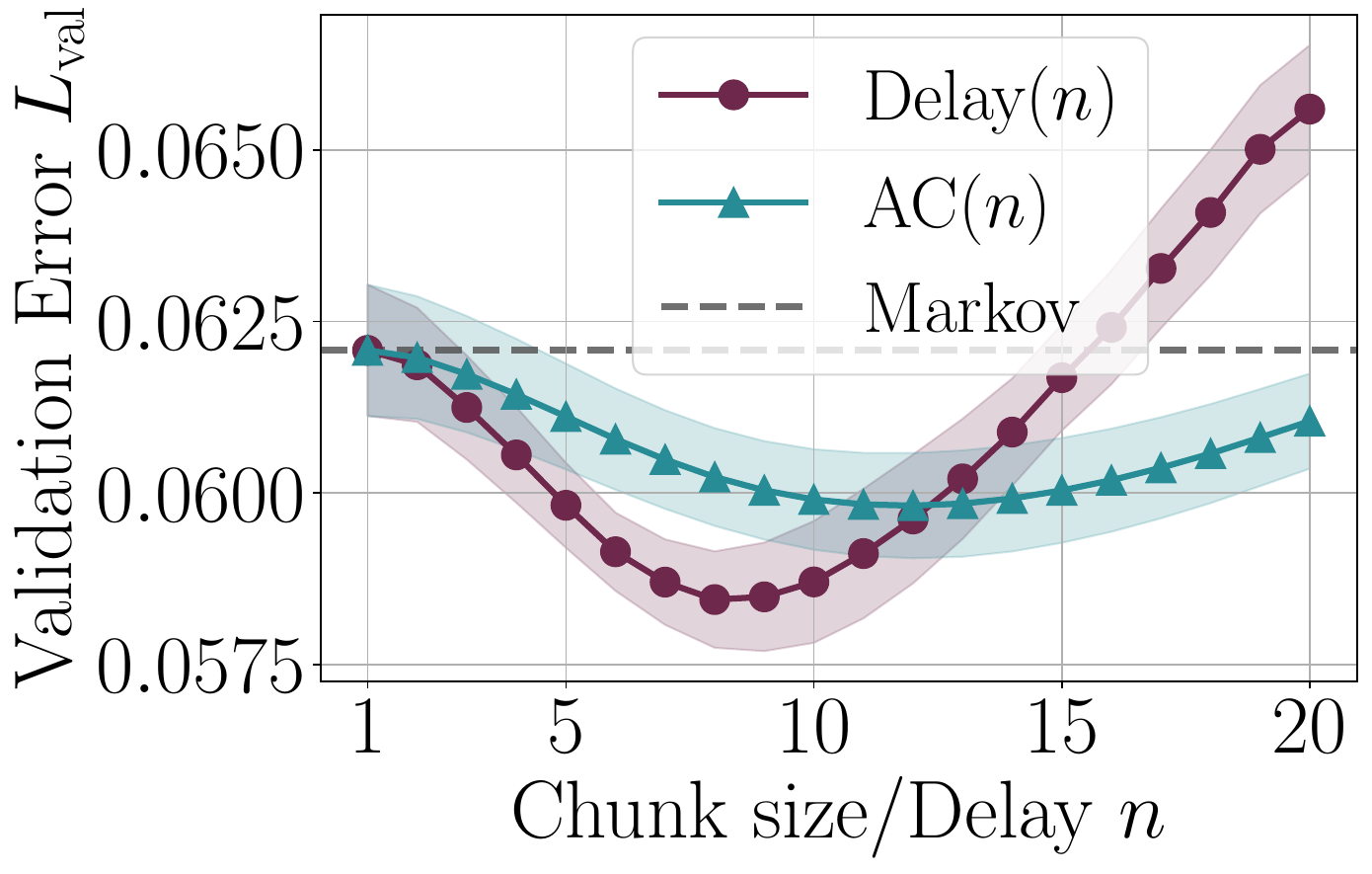}
    \end{minipage}
    \hfill
        \begin{minipage}[t]{0.23\textwidth}
        \centering
        \includegraphics[width=\linewidth]{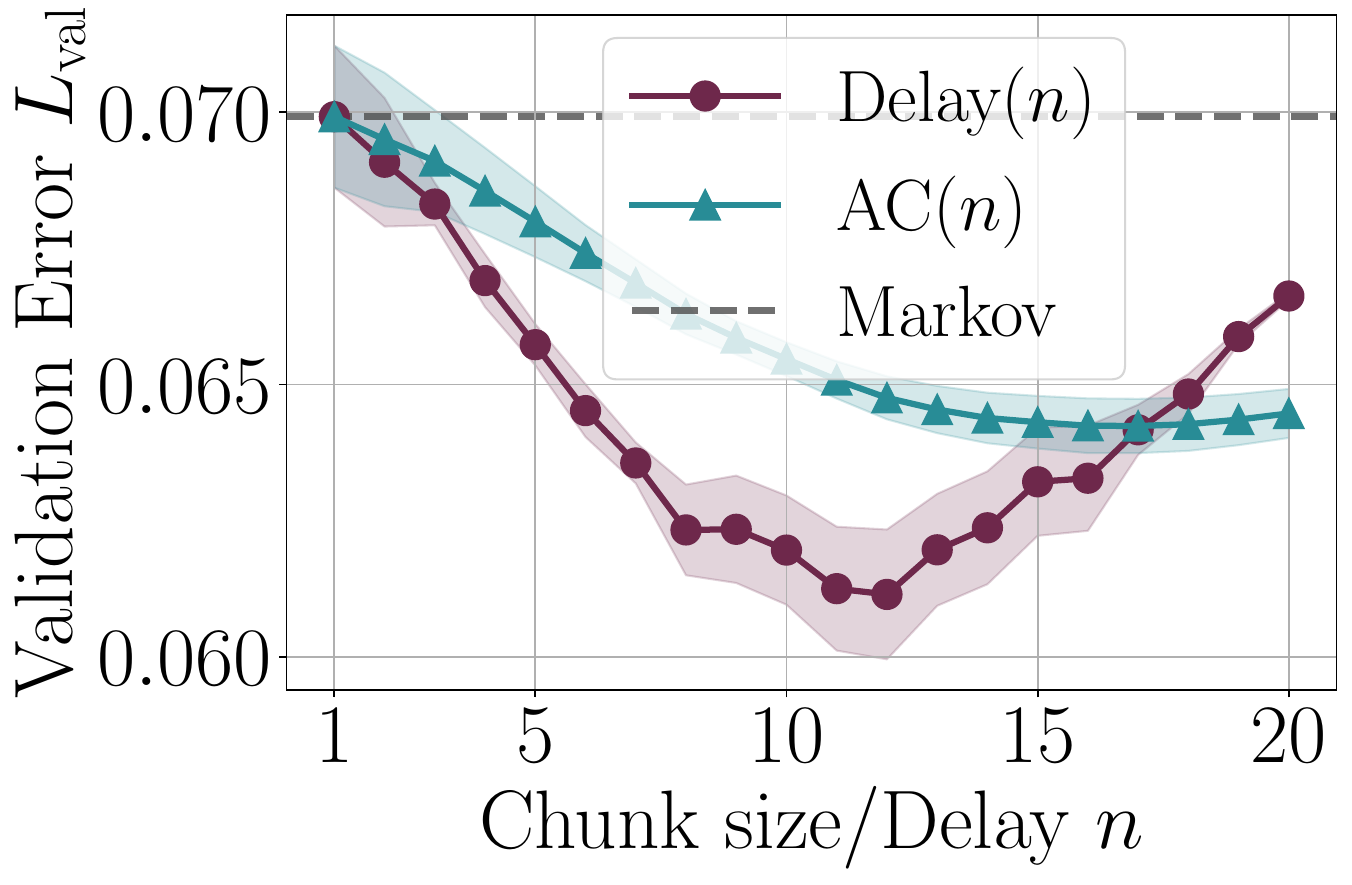}
    \end{minipage}
    \hfill
        \begin{minipage}[t]{0.23\textwidth}
        \centering
        \includegraphics[width=\linewidth]{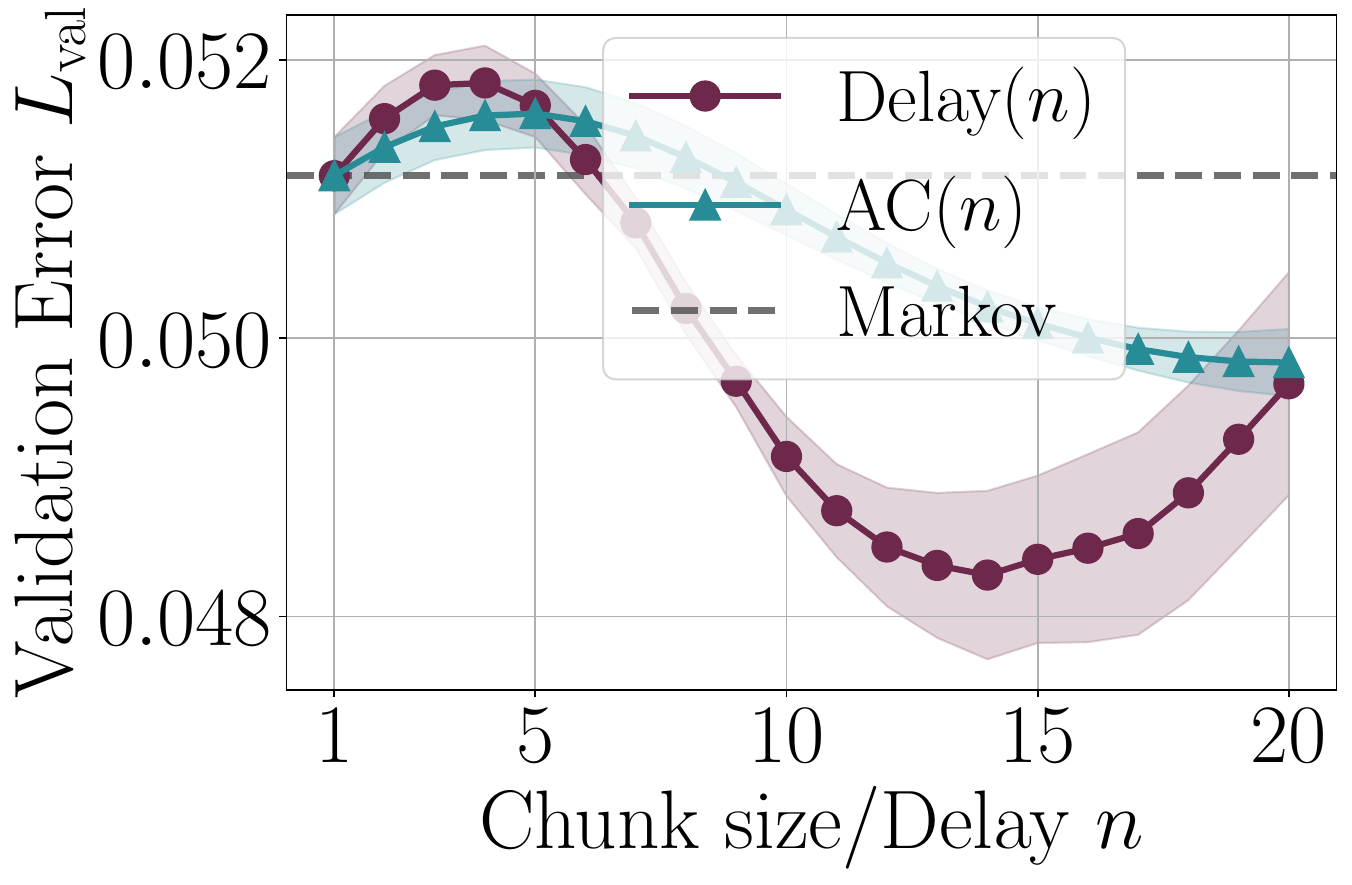}
    \end{minipage}
    \hfill
        \begin{minipage}[t]{0.23\textwidth}
        \centering
        \includegraphics[width=\linewidth]{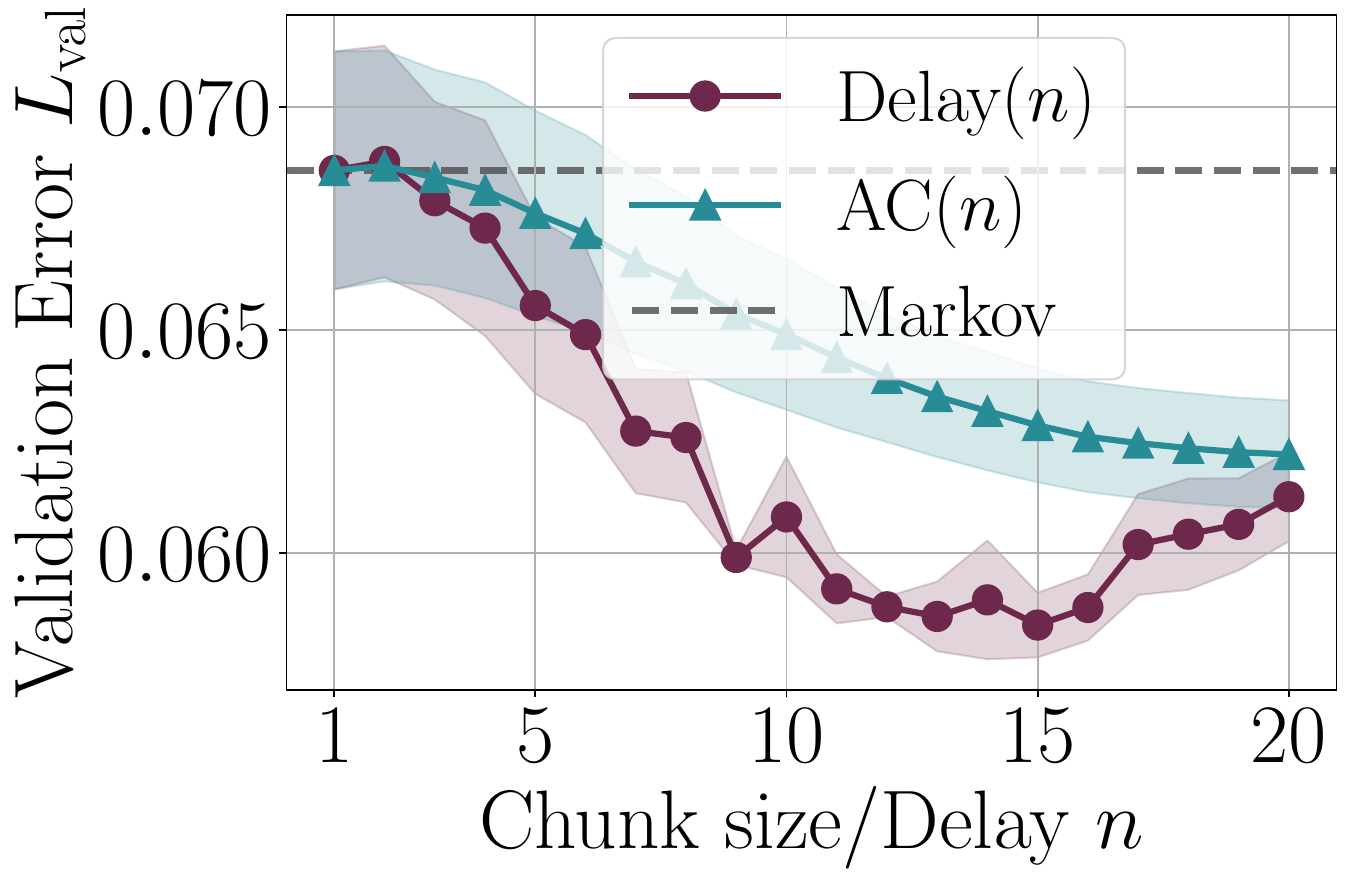}
    \end{minipage}
        \caption{Validation loss for each \texttt{Libero} task from 0 to 35 (corresponding to Fig. \ref{fig:val_loss_libero}), part 1.}
    \label{fig:val loss each libero1}
\end{figure*}

\begin{figure*}

        \begin{minipage}[t]{0.23\textwidth}
            \includegraphics[width=\linewidth]{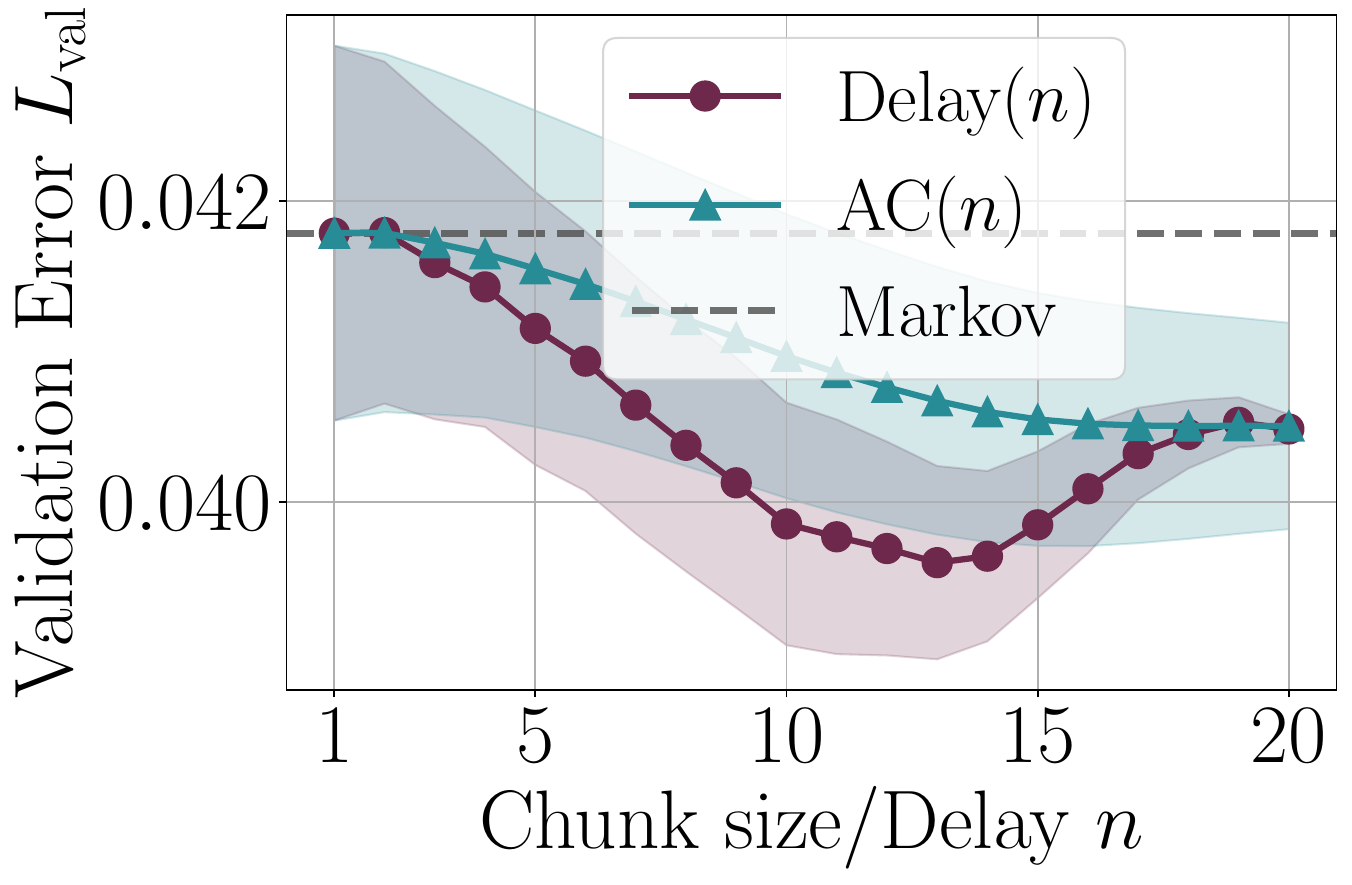}
    \end{minipage}
    \hfill
        \begin{minipage}[t]{0.23\textwidth}
        \centering
        \includegraphics[width=\linewidth]{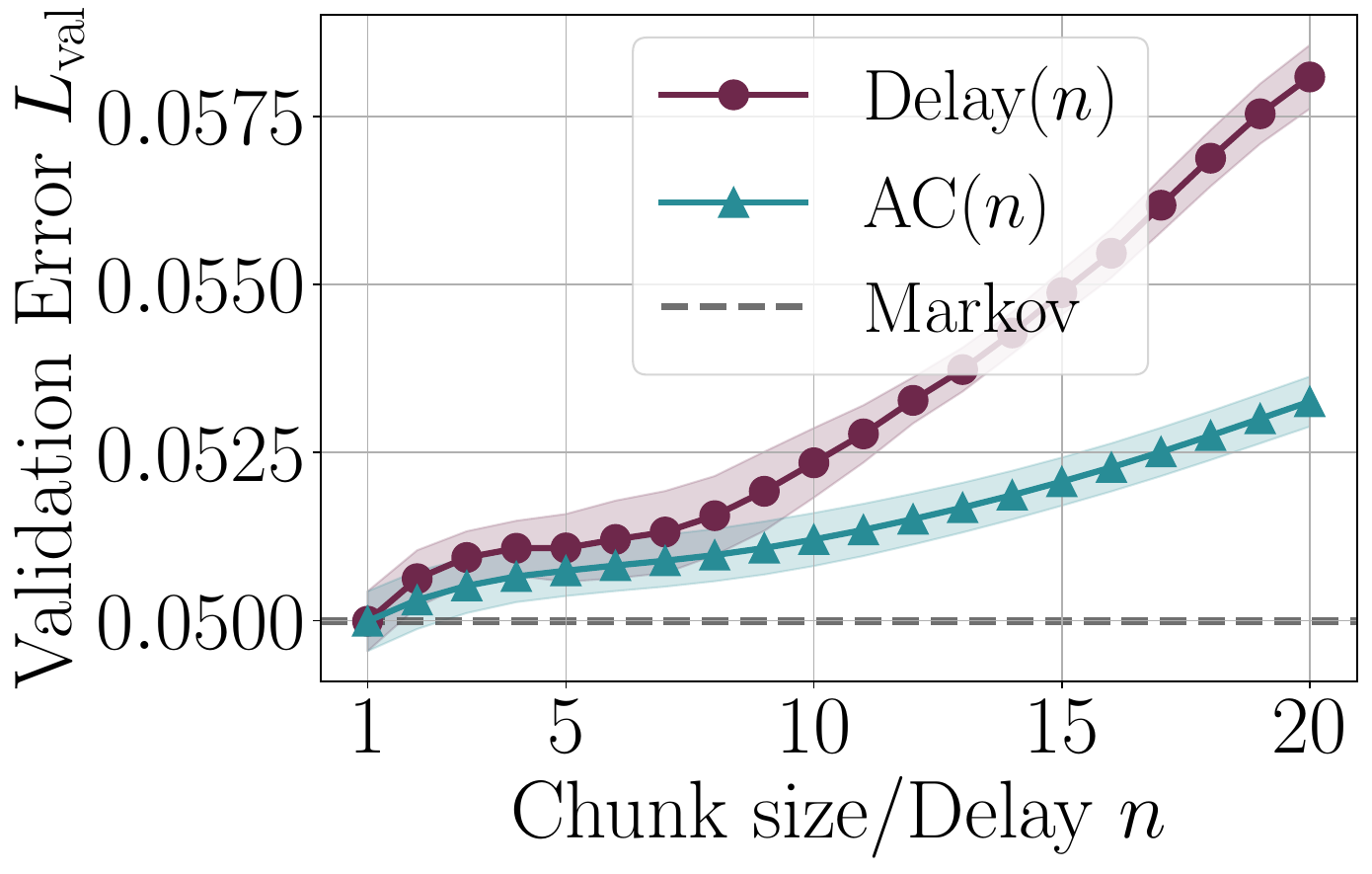}
    \end{minipage}
    \hfill
        \begin{minipage}[t]{0.23\textwidth}
        \centering
        \includegraphics[width=\linewidth]{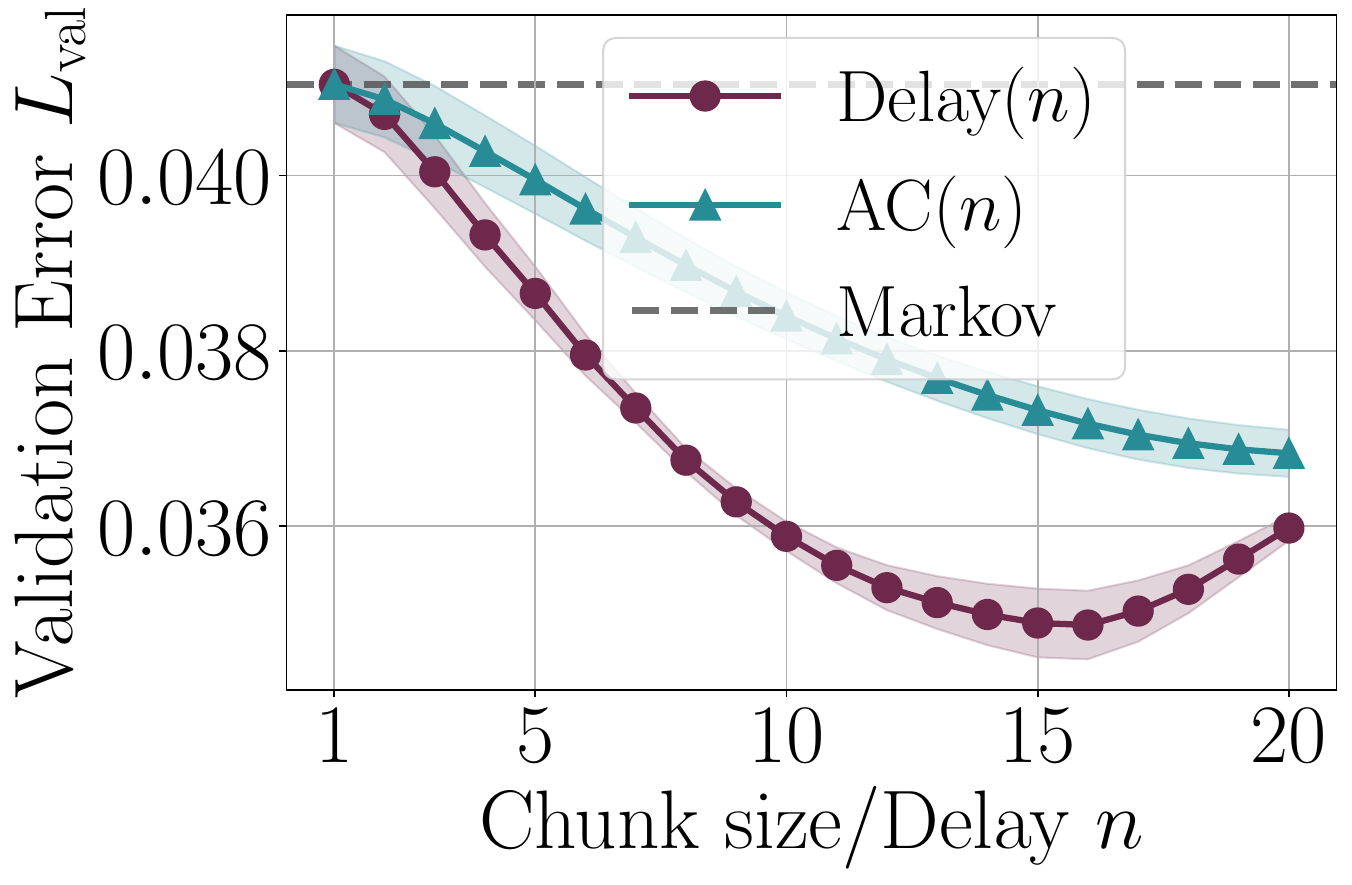}
    \end{minipage}
    \hfill
        \begin{minipage}[t]{0.23\textwidth}
        \centering
        \includegraphics[width=\linewidth]{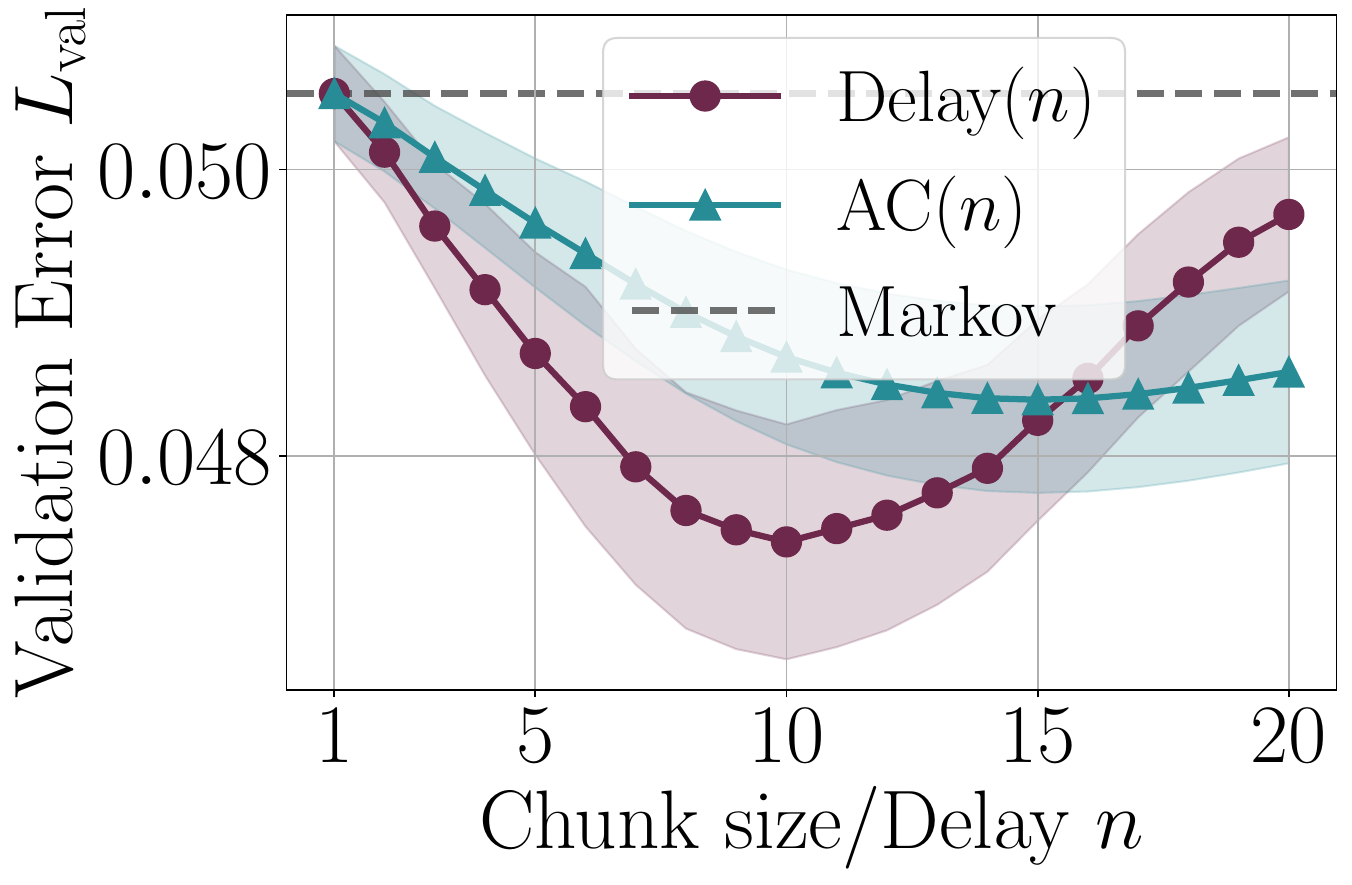}
    \end{minipage}
        \begin{minipage}[t]{0.23\textwidth}
            \includegraphics[width=\linewidth]{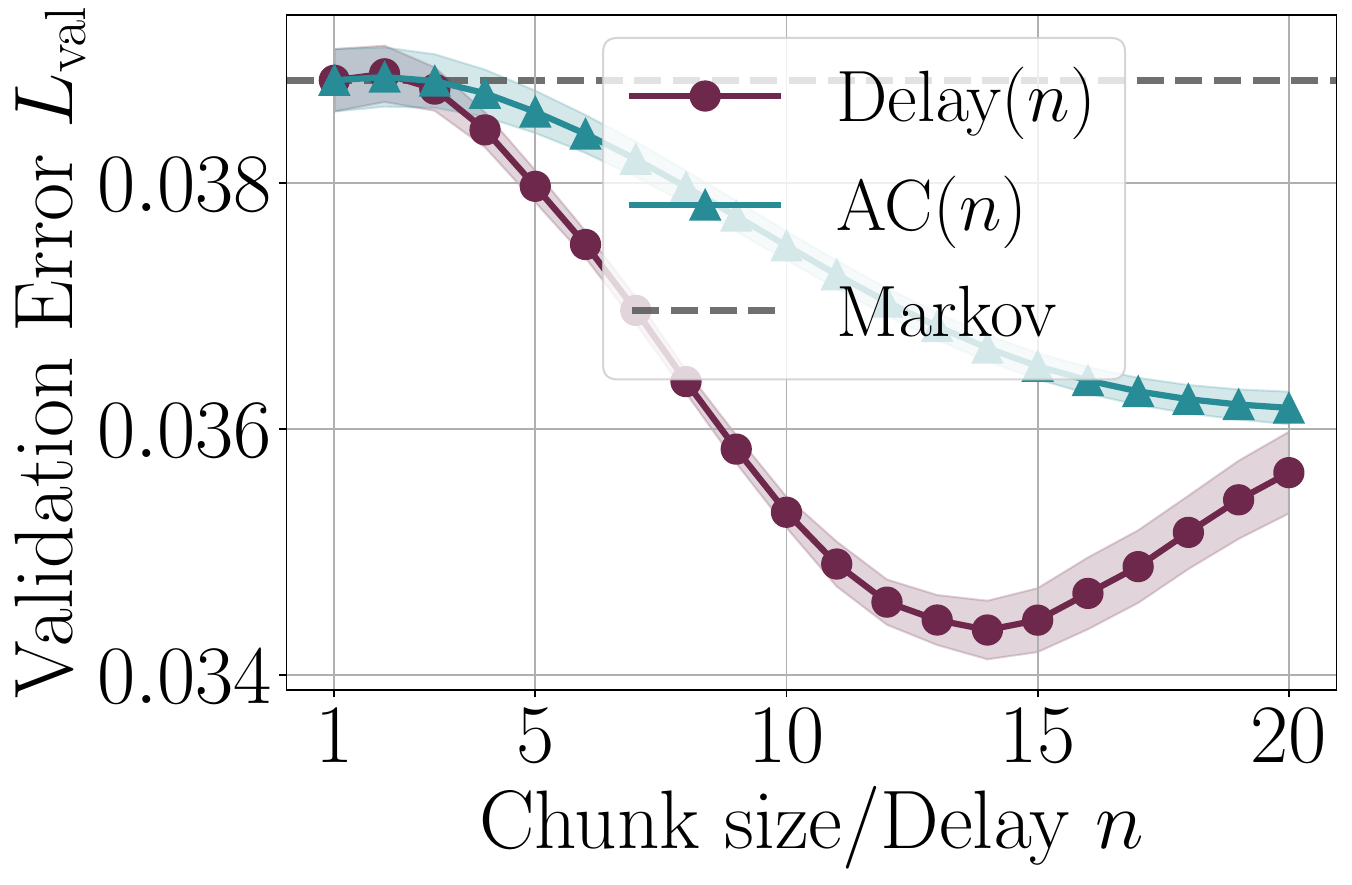}
    \end{minipage}
    \hfill
        \begin{minipage}[t]{0.23\textwidth}
        \centering
        \includegraphics[width=\linewidth]{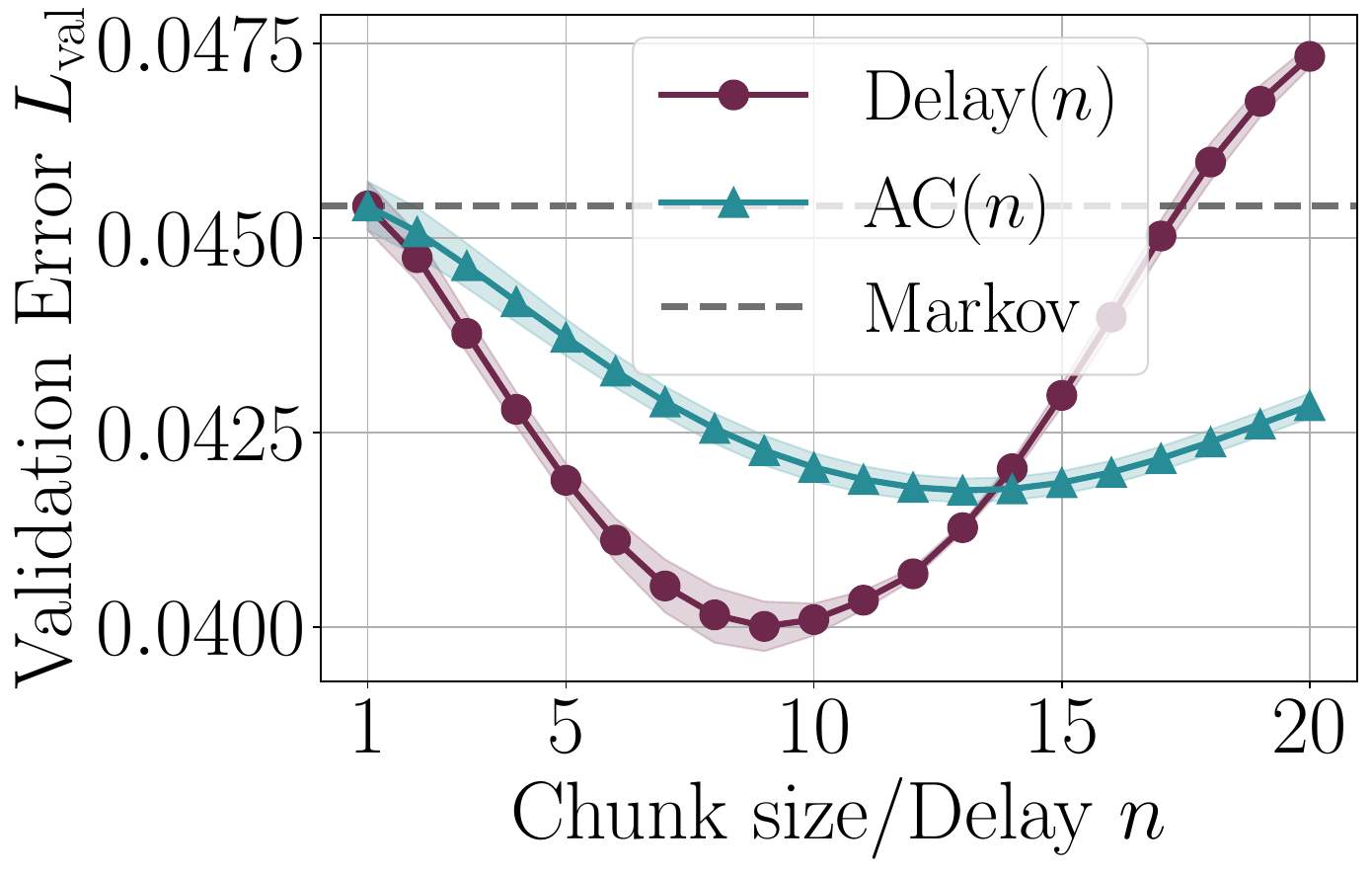}
    \end{minipage}
    \hfill
        \begin{minipage}[t]{0.23\textwidth}
        \centering
        \includegraphics[width=\linewidth]{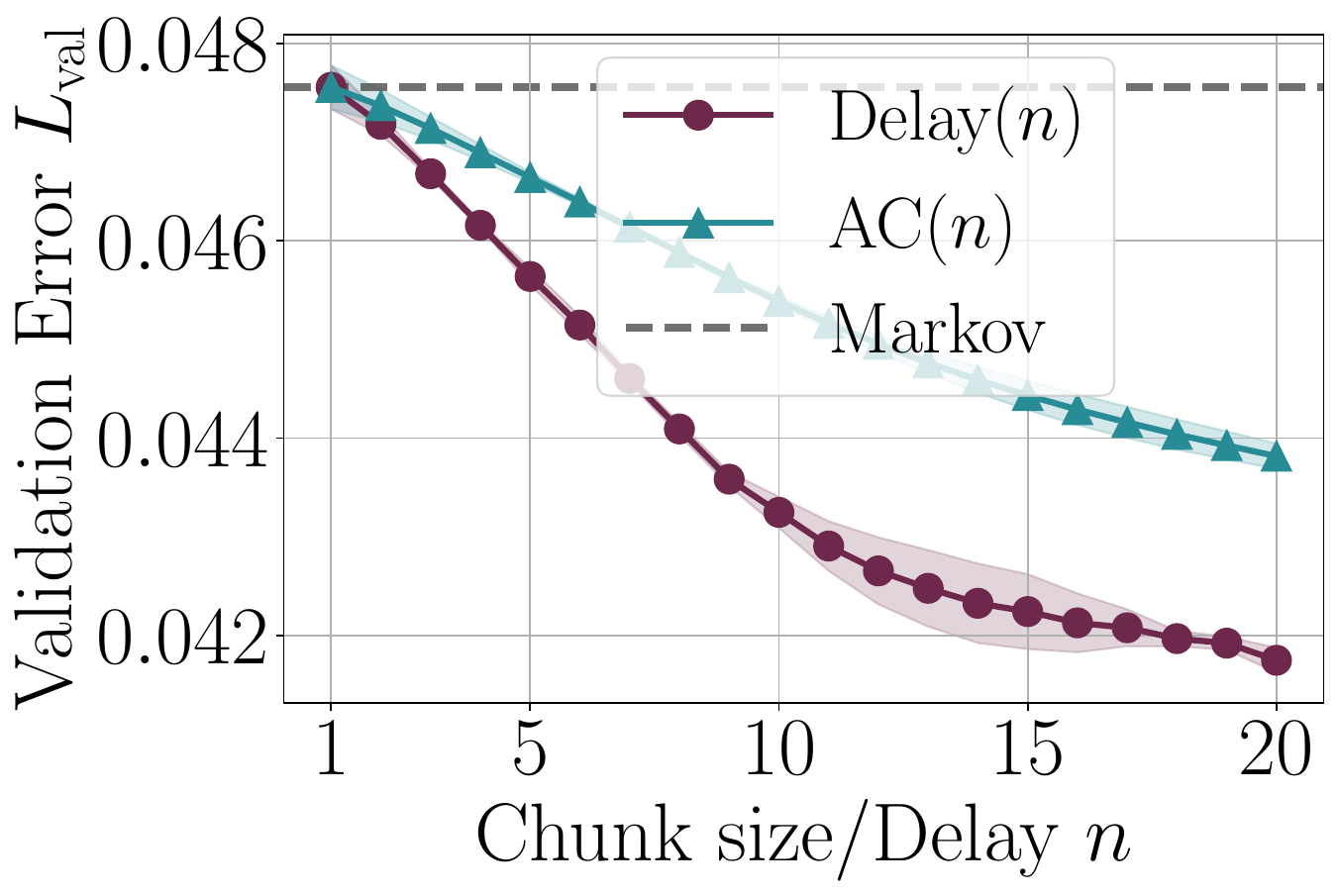}
    \end{minipage}
    \hfill
        \begin{minipage}[t]{0.23\textwidth}
        \centering
        \includegraphics[width=\linewidth]{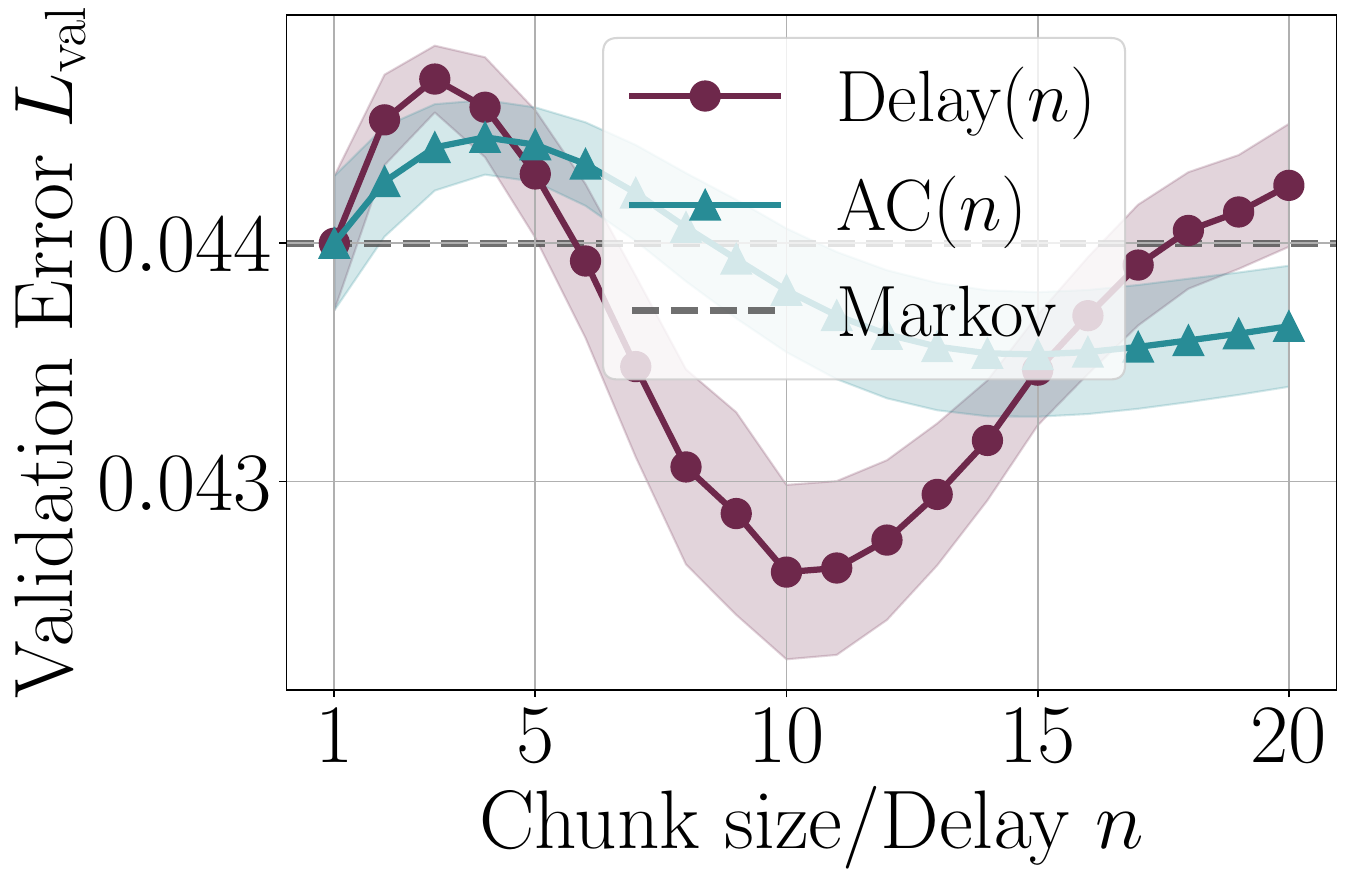}
    \end{minipage}
        \begin{minipage}[t]{0.23\textwidth}
            \includegraphics[width=\linewidth]{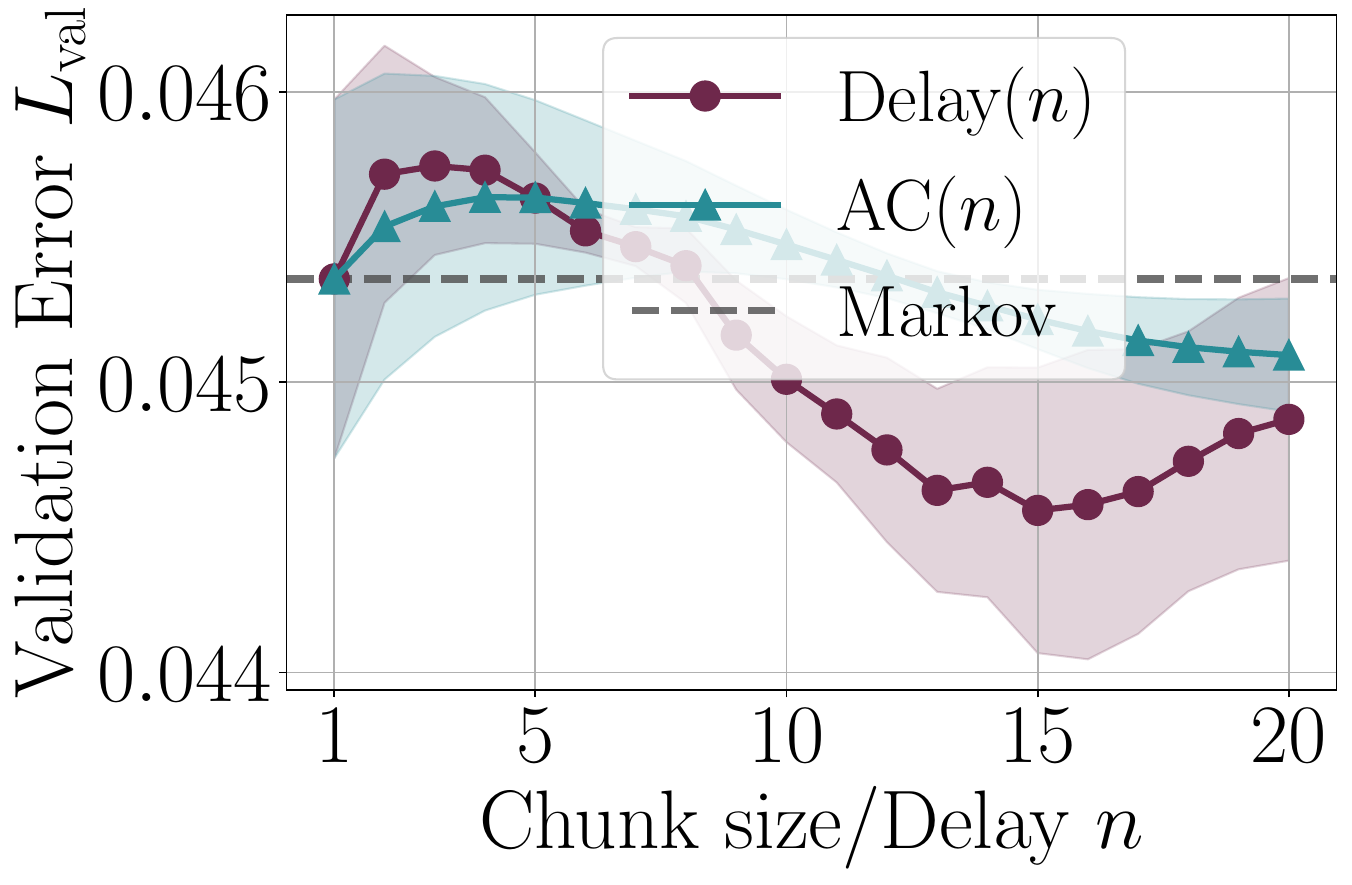}
    \end{minipage}
    \hfill
        \begin{minipage}[t]{0.23\textwidth}
        \centering
        \includegraphics[width=\linewidth]{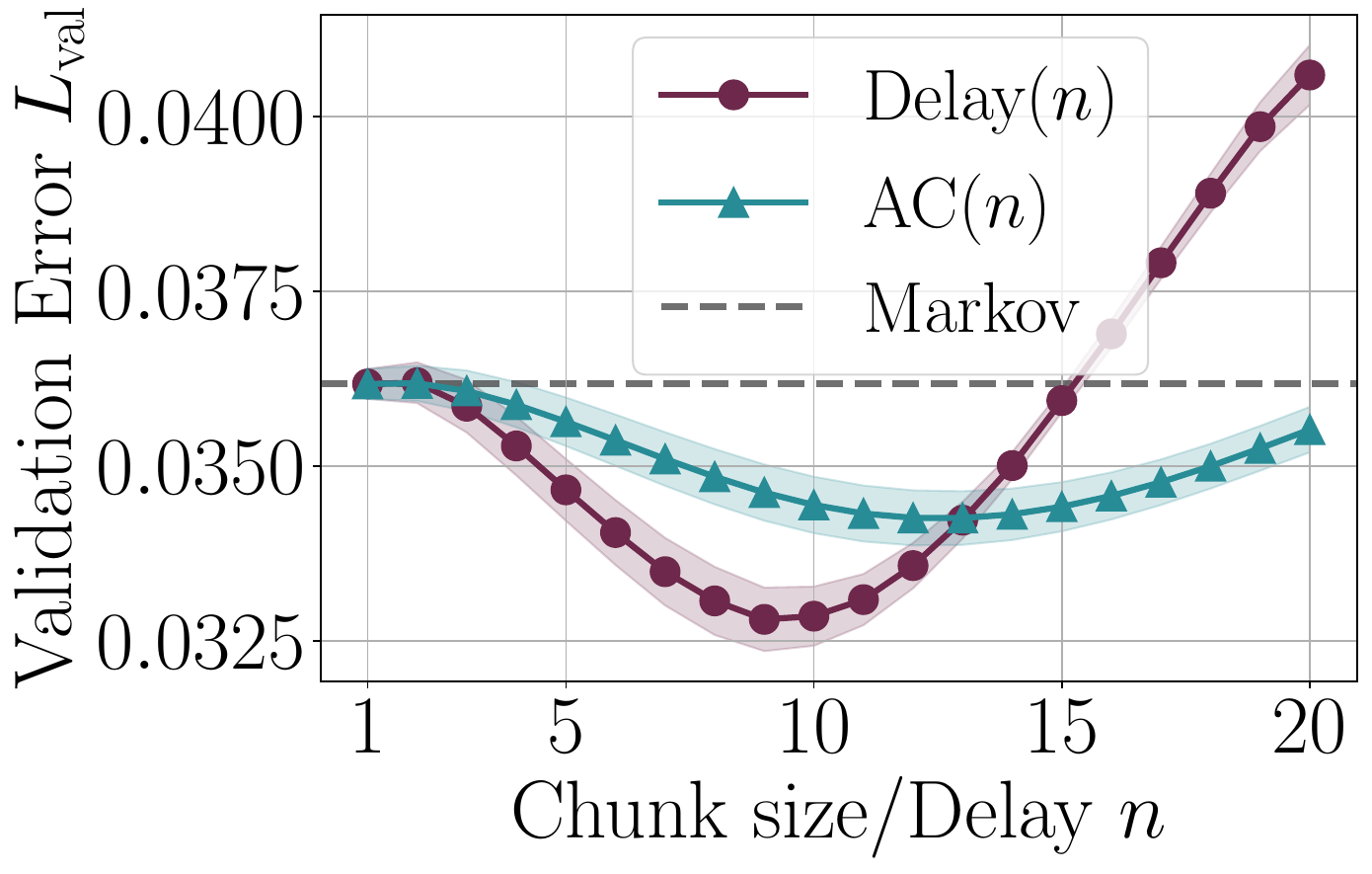}
    \end{minipage}
    \hfill
        \begin{minipage}[t]{0.23\textwidth}
        \centering
        \includegraphics[width=\linewidth]{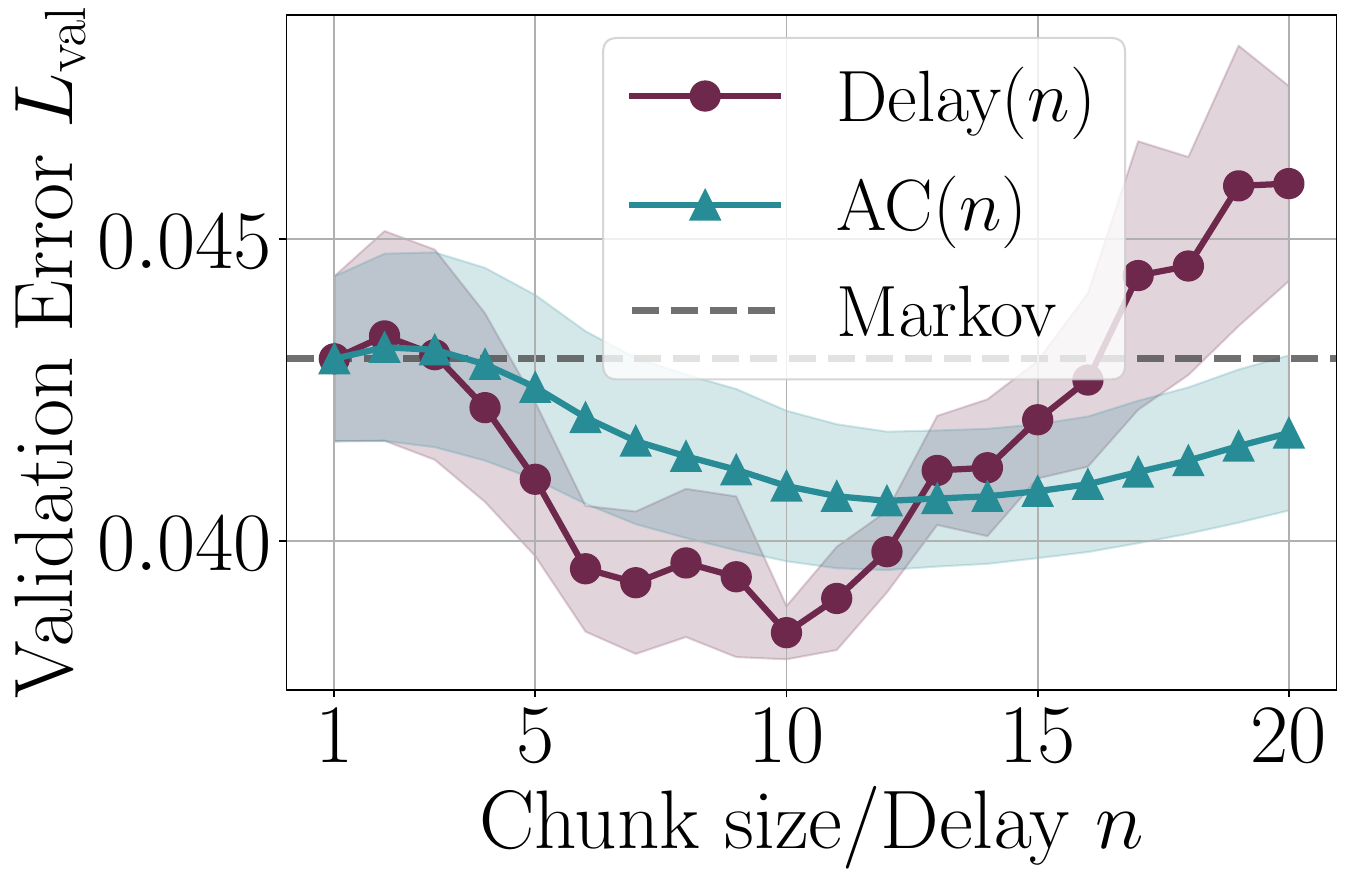}
    \end{minipage}
    \hfill
        \begin{minipage}[t]{0.23\textwidth}
        \centering
        \includegraphics[width=\linewidth]{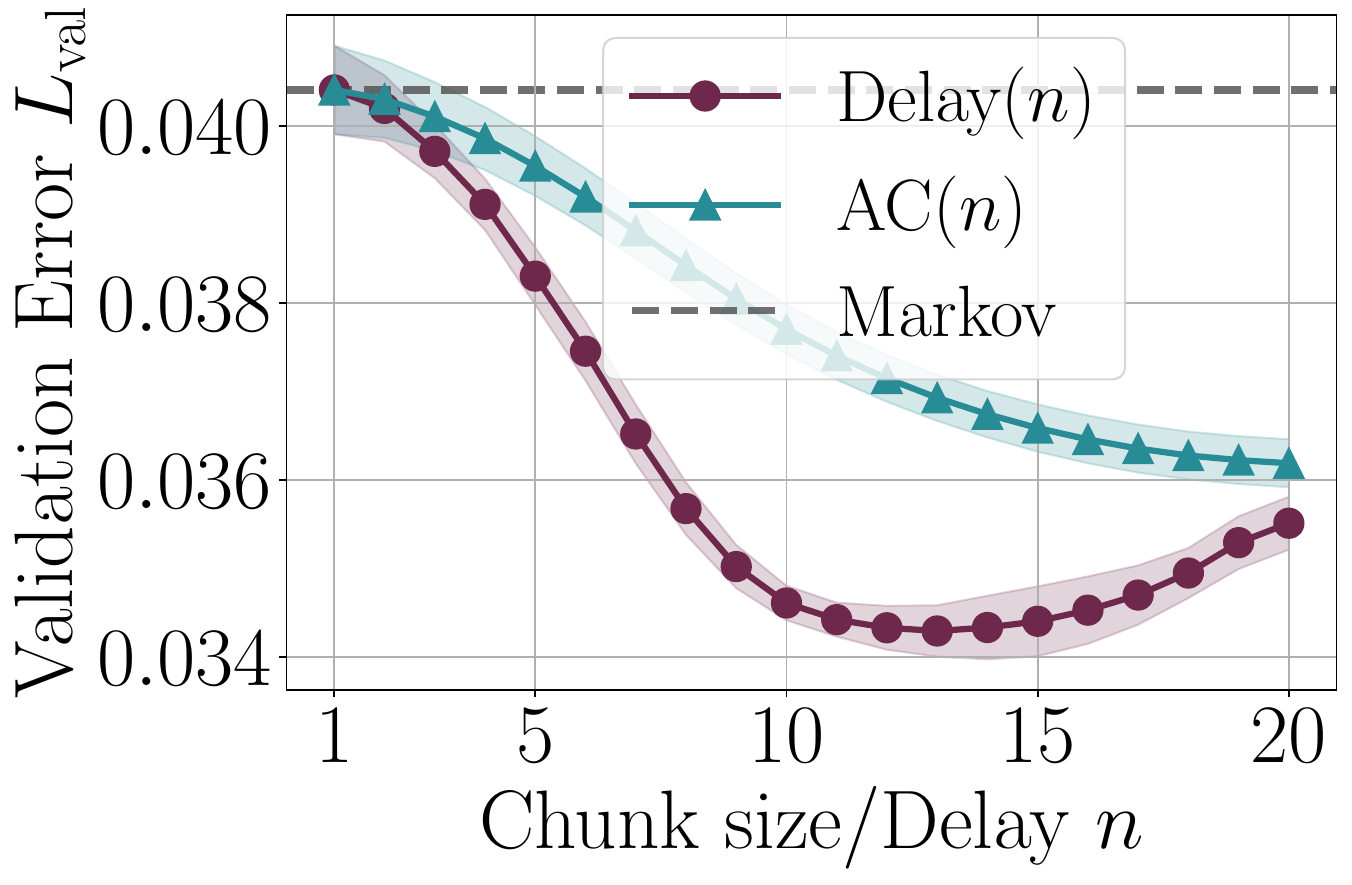}
    \end{minipage}
        \begin{minipage}[t]{0.23\textwidth}
            \includegraphics[width=\linewidth]{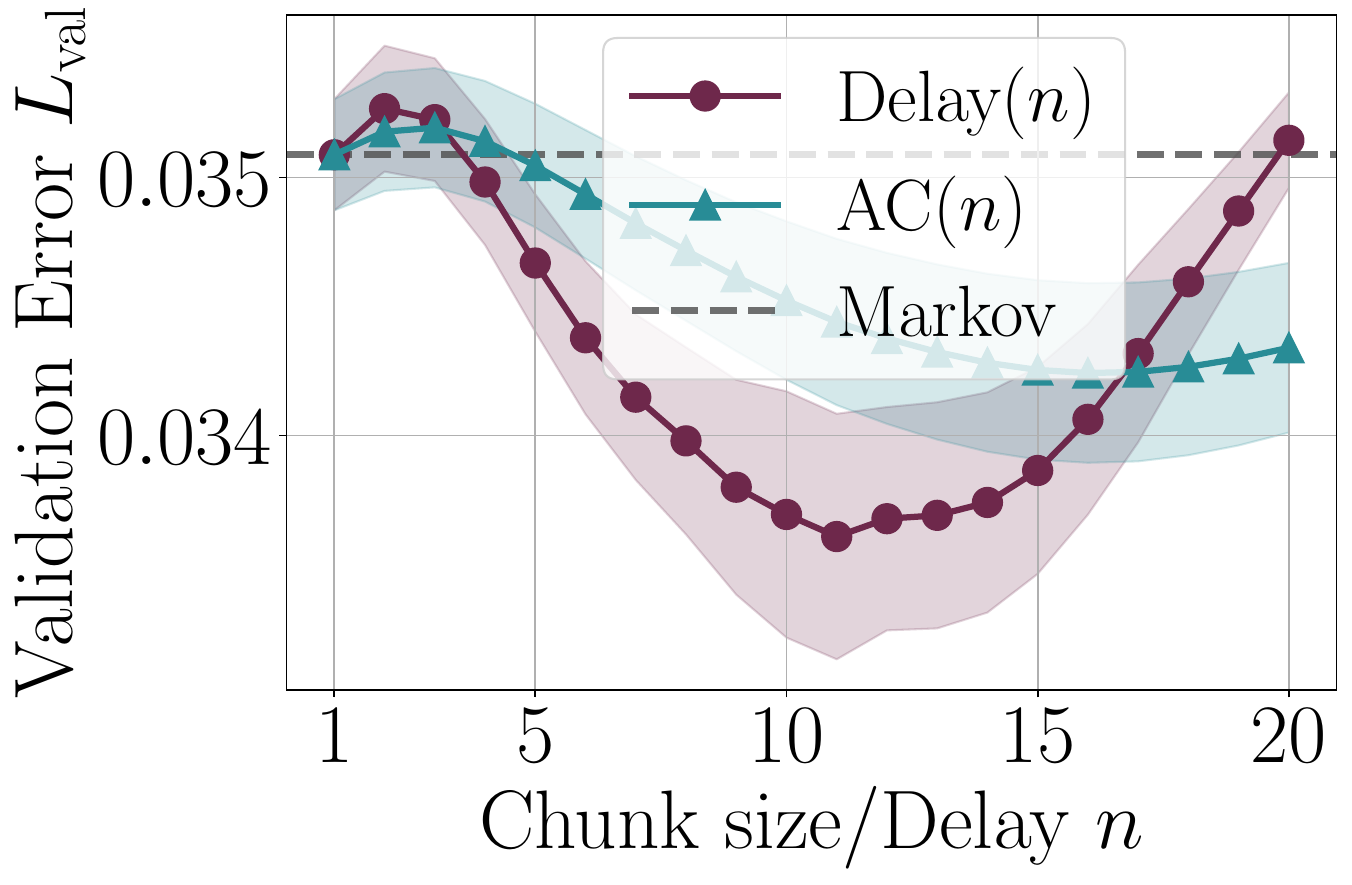}
    \end{minipage}
    \hfill
        \begin{minipage}[t]{0.23\textwidth}
        \centering
        \includegraphics[width=\linewidth]{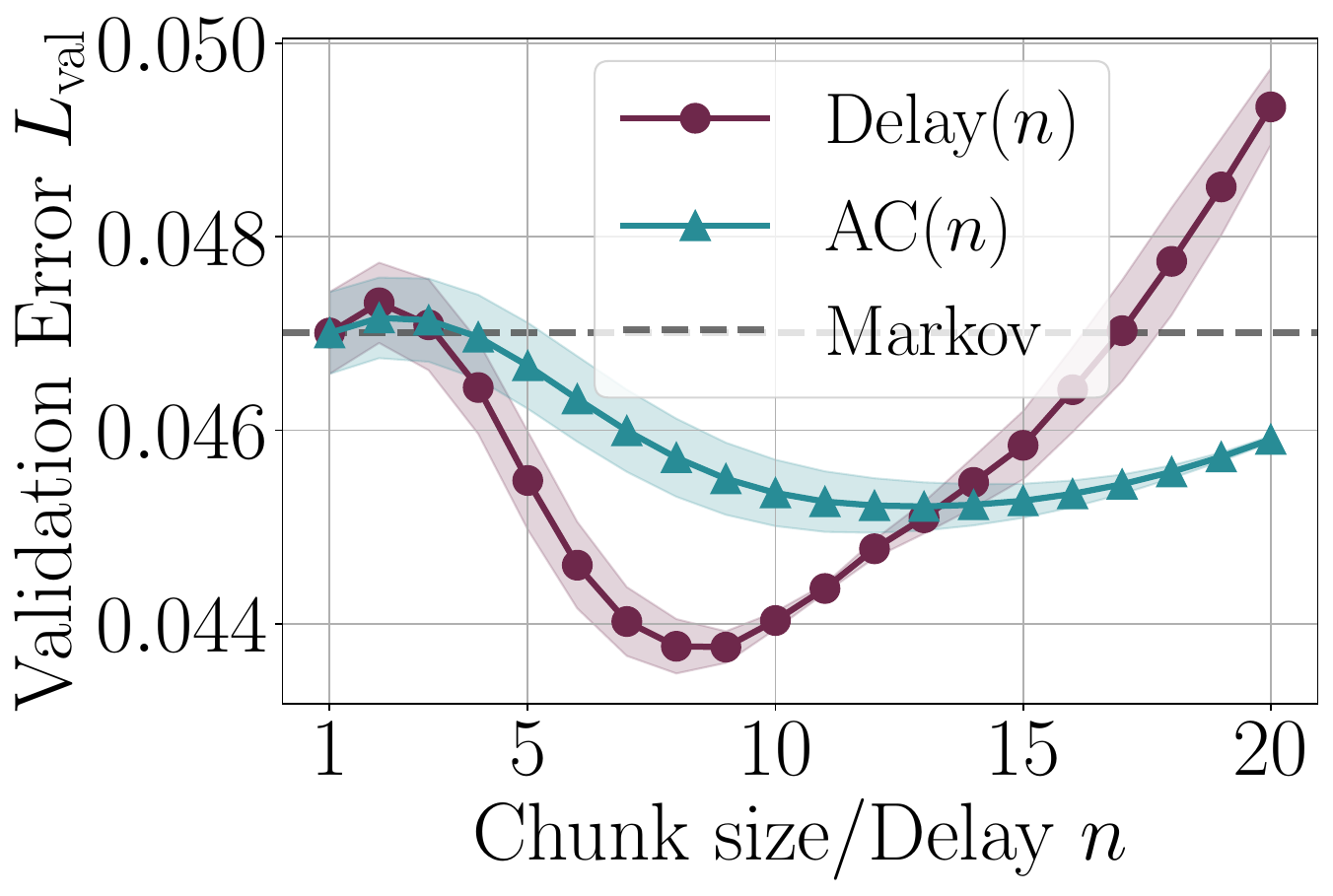}
    \end{minipage}
    \hfill
        \begin{minipage}[t]{0.23\textwidth}
        \centering
        \includegraphics[width=\linewidth]{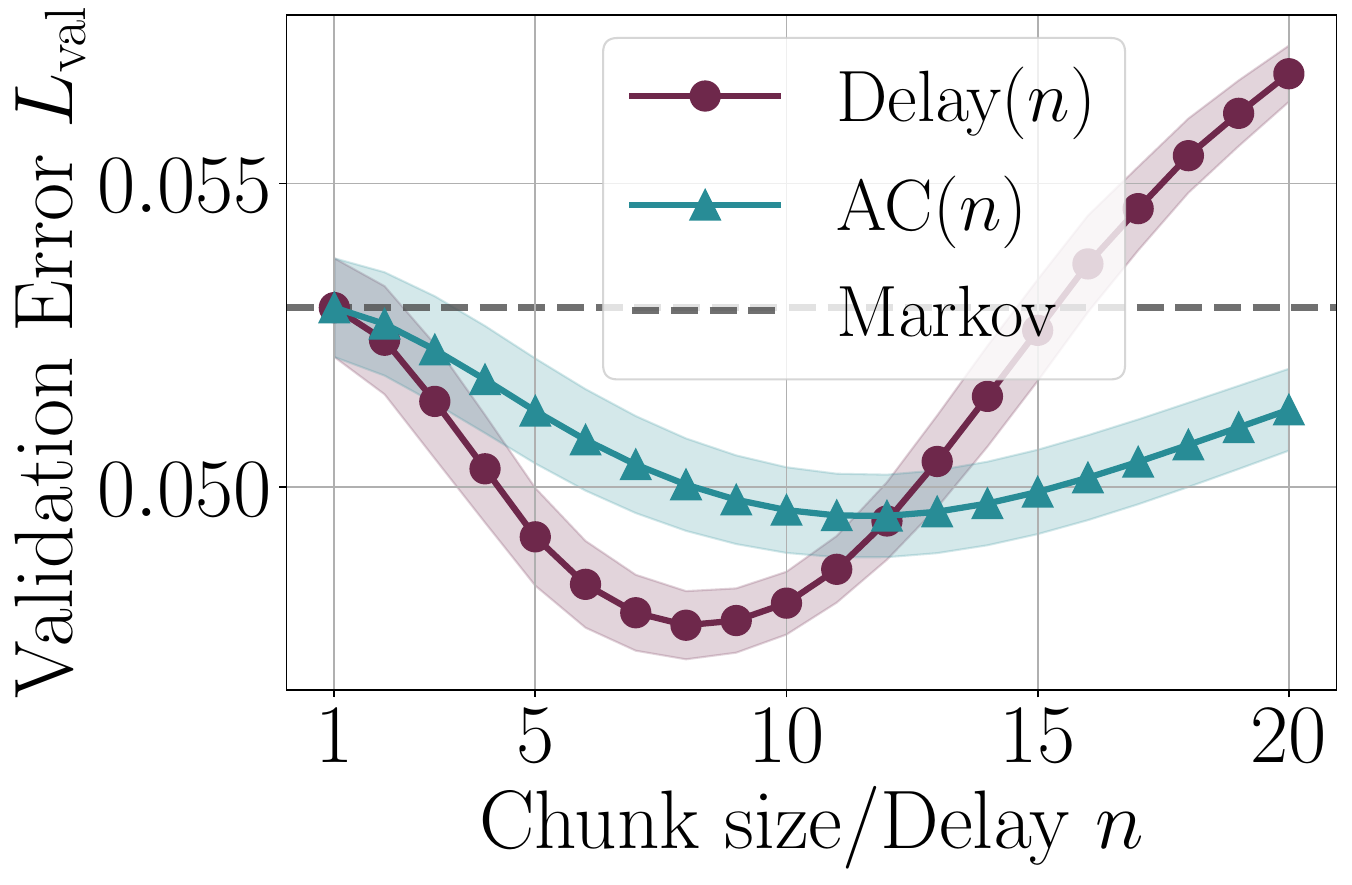}
    \end{minipage}
    \hfill
        \begin{minipage}[t]{0.23\textwidth}
        \centering
        \includegraphics[width=\linewidth]{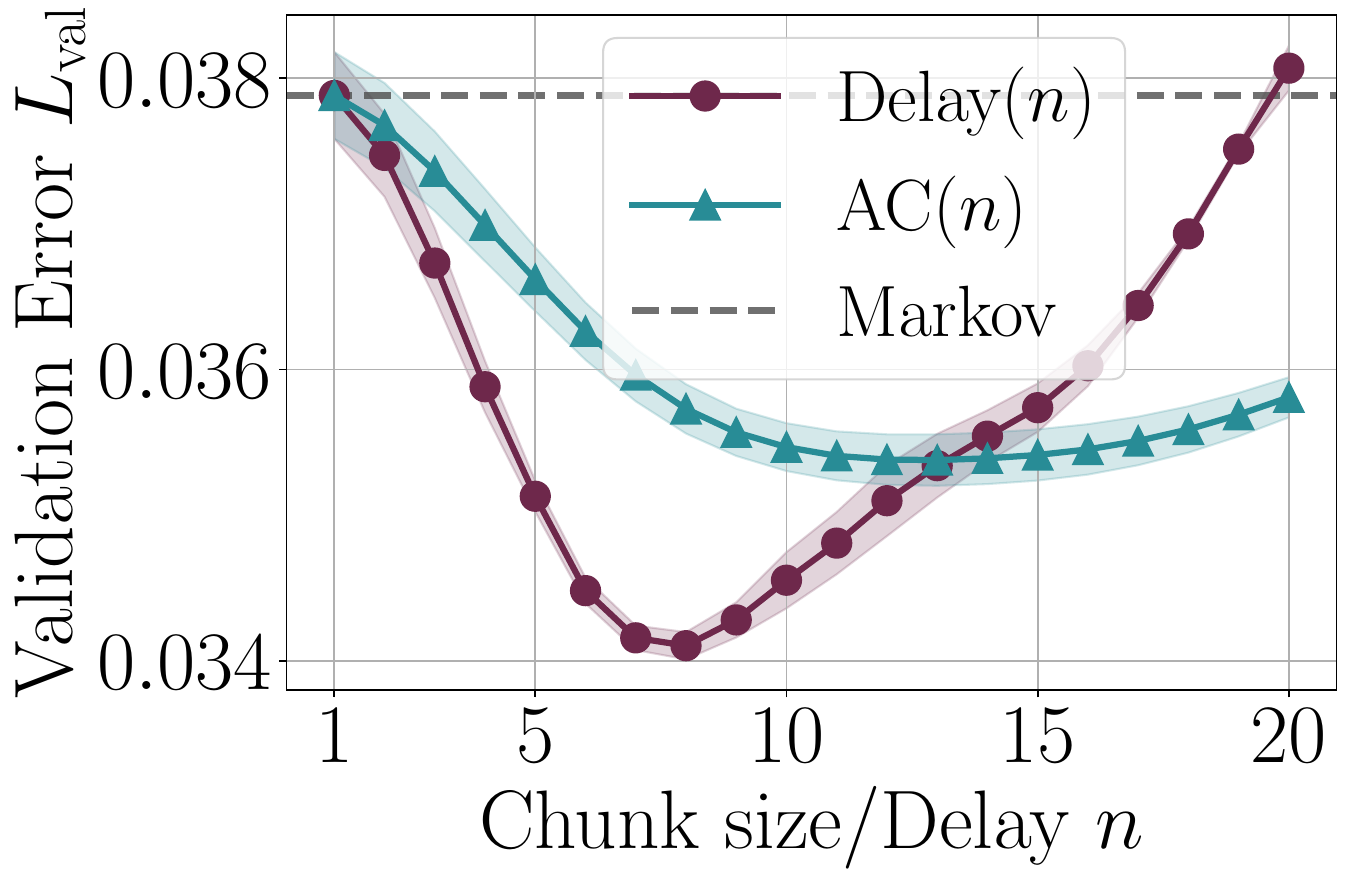}
    \end{minipage}
        \begin{minipage}[t]{0.23\textwidth}
            \includegraphics[width=\linewidth]{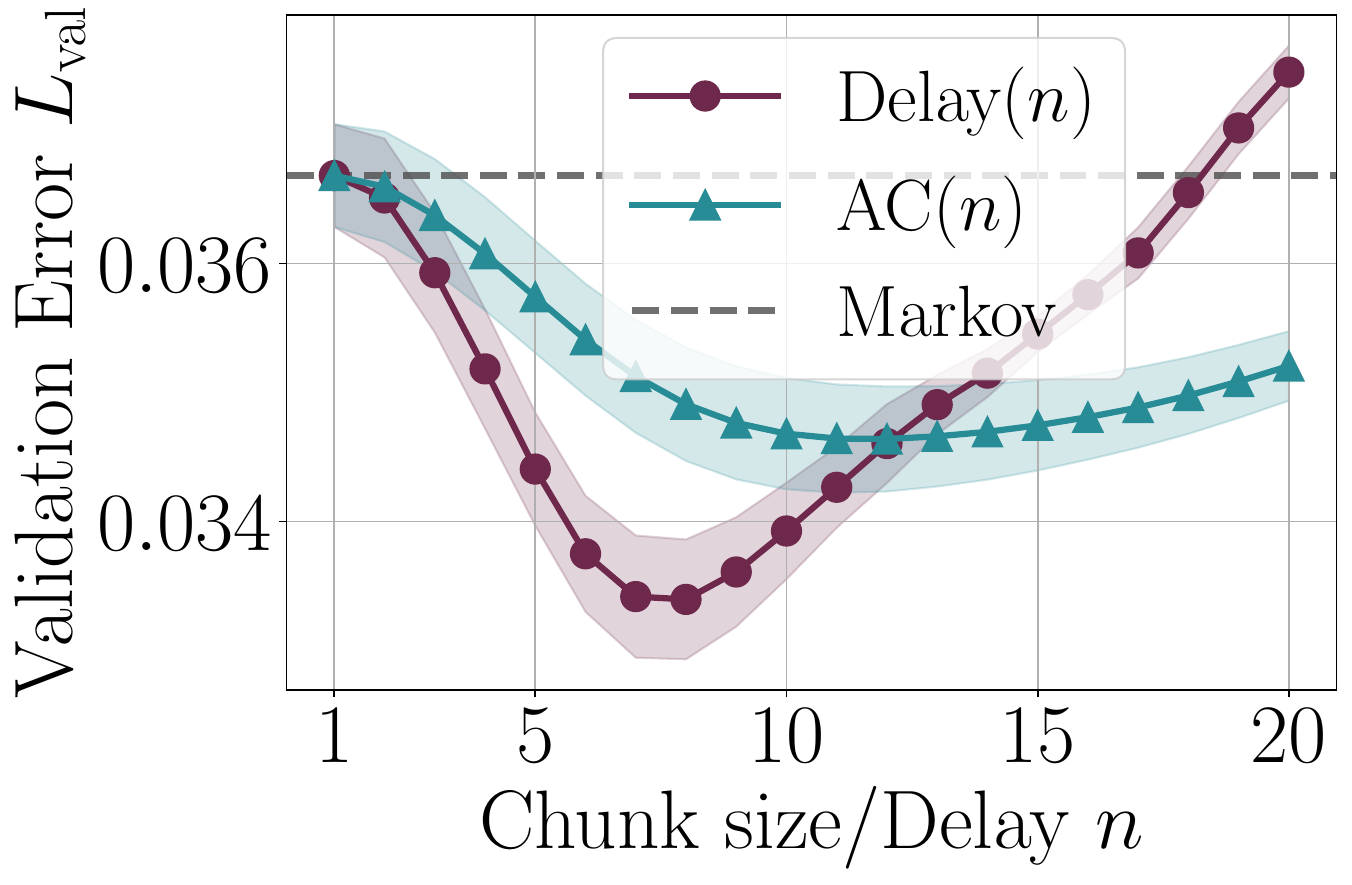}
    \end{minipage}
    \hfill
        \begin{minipage}[t]{0.23\textwidth}
        \centering
        \includegraphics[width=\linewidth]{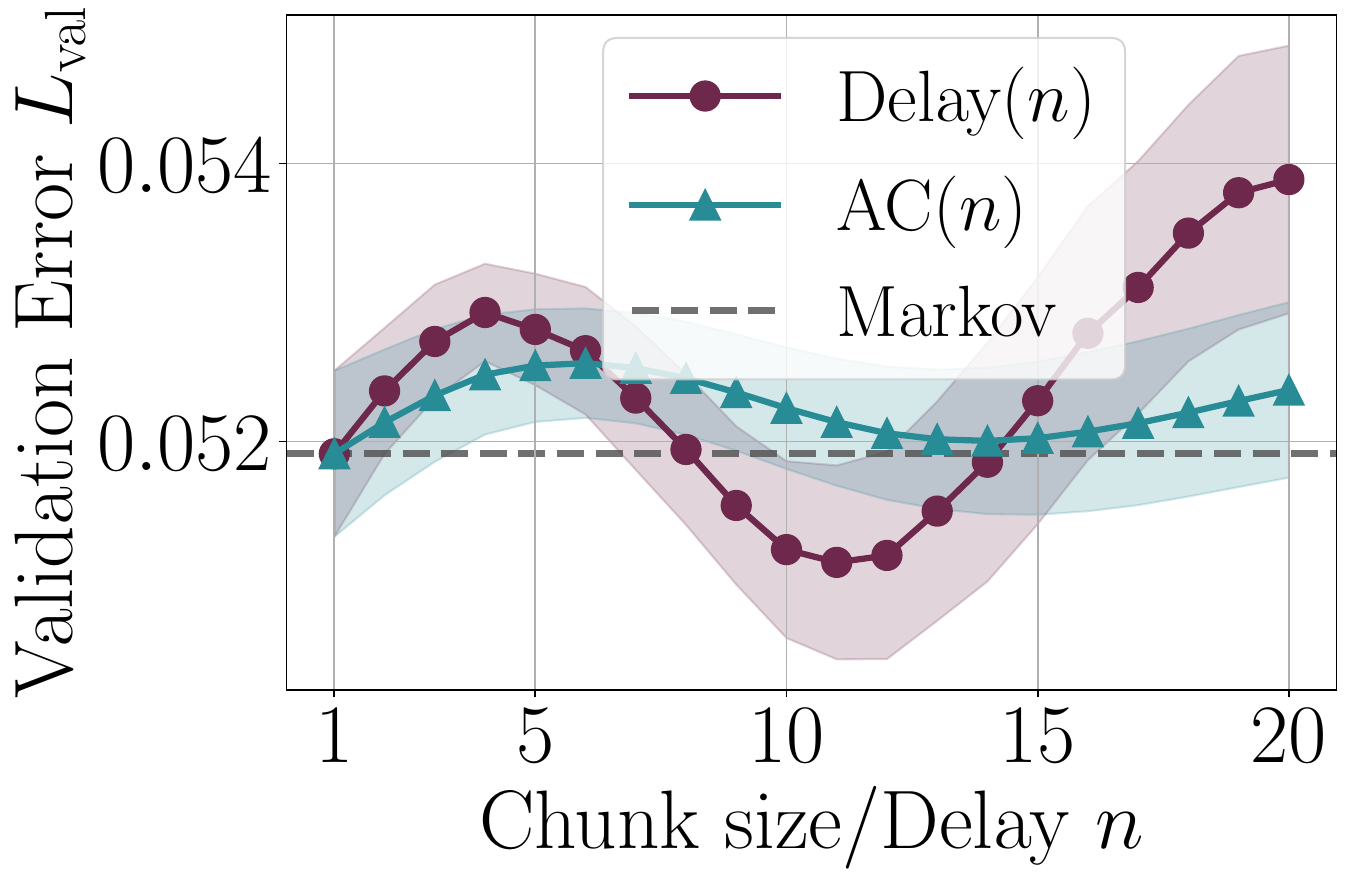}
    \end{minipage}
    \hfill
        \begin{minipage}[t]{0.23\textwidth}
        \centering
        \includegraphics[width=\linewidth]{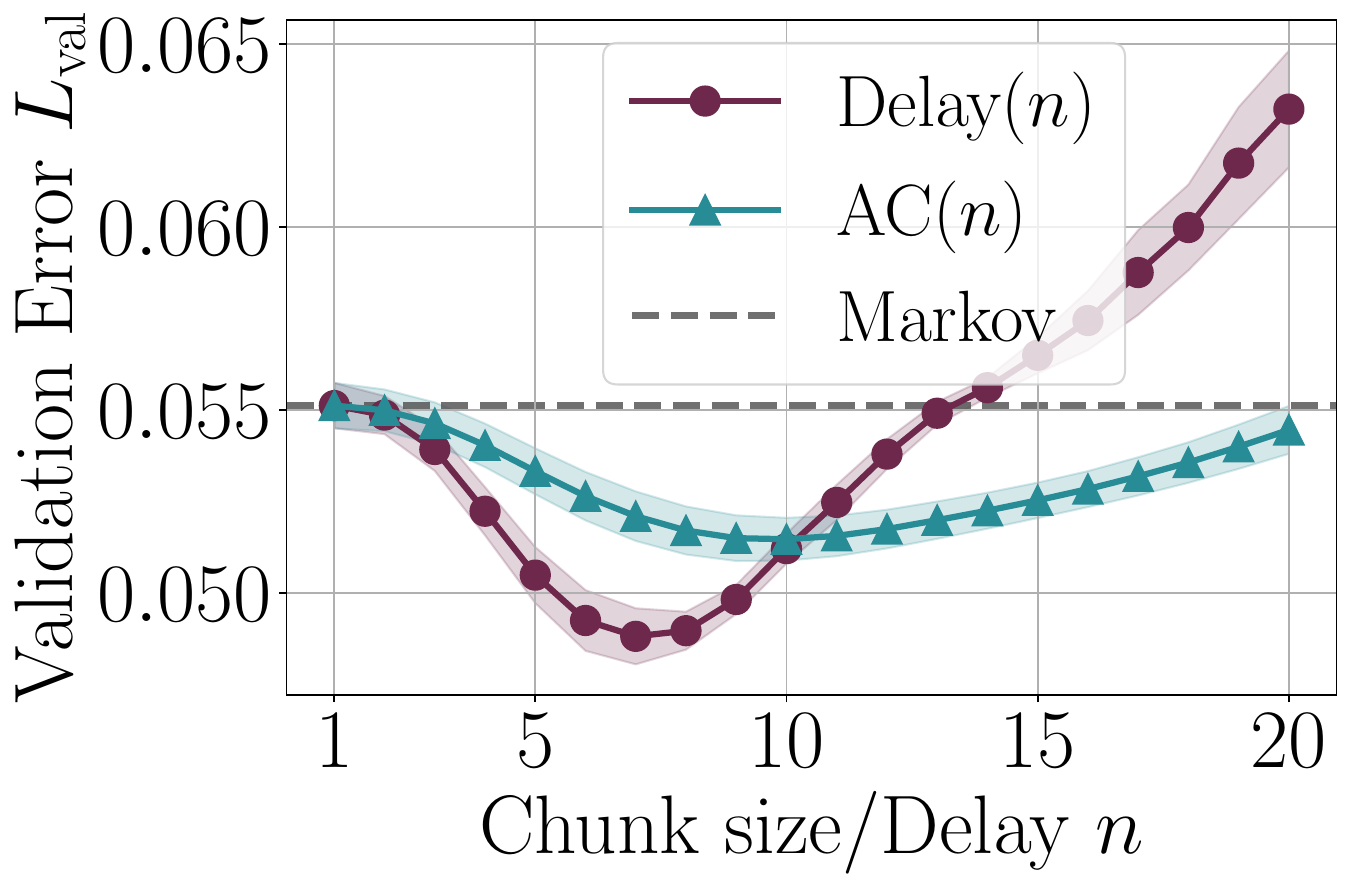}
    \end{minipage}
    \hfill
        \begin{minipage}[t]{0.23\textwidth}
        \centering
        \includegraphics[width=\linewidth]{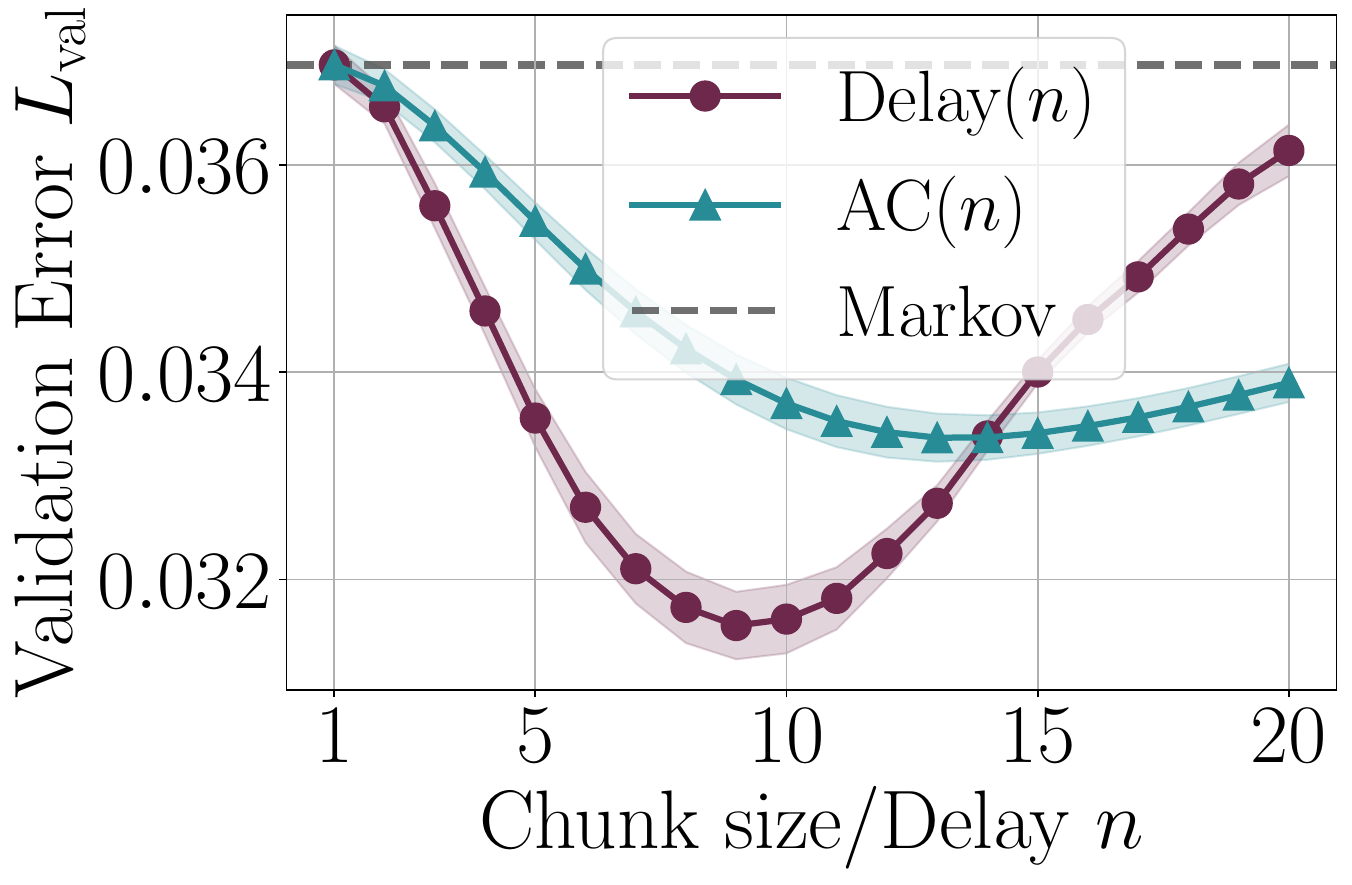}
    \end{minipage}
        \begin{minipage}[t]{0.23\textwidth}
            \includegraphics[width=\linewidth]{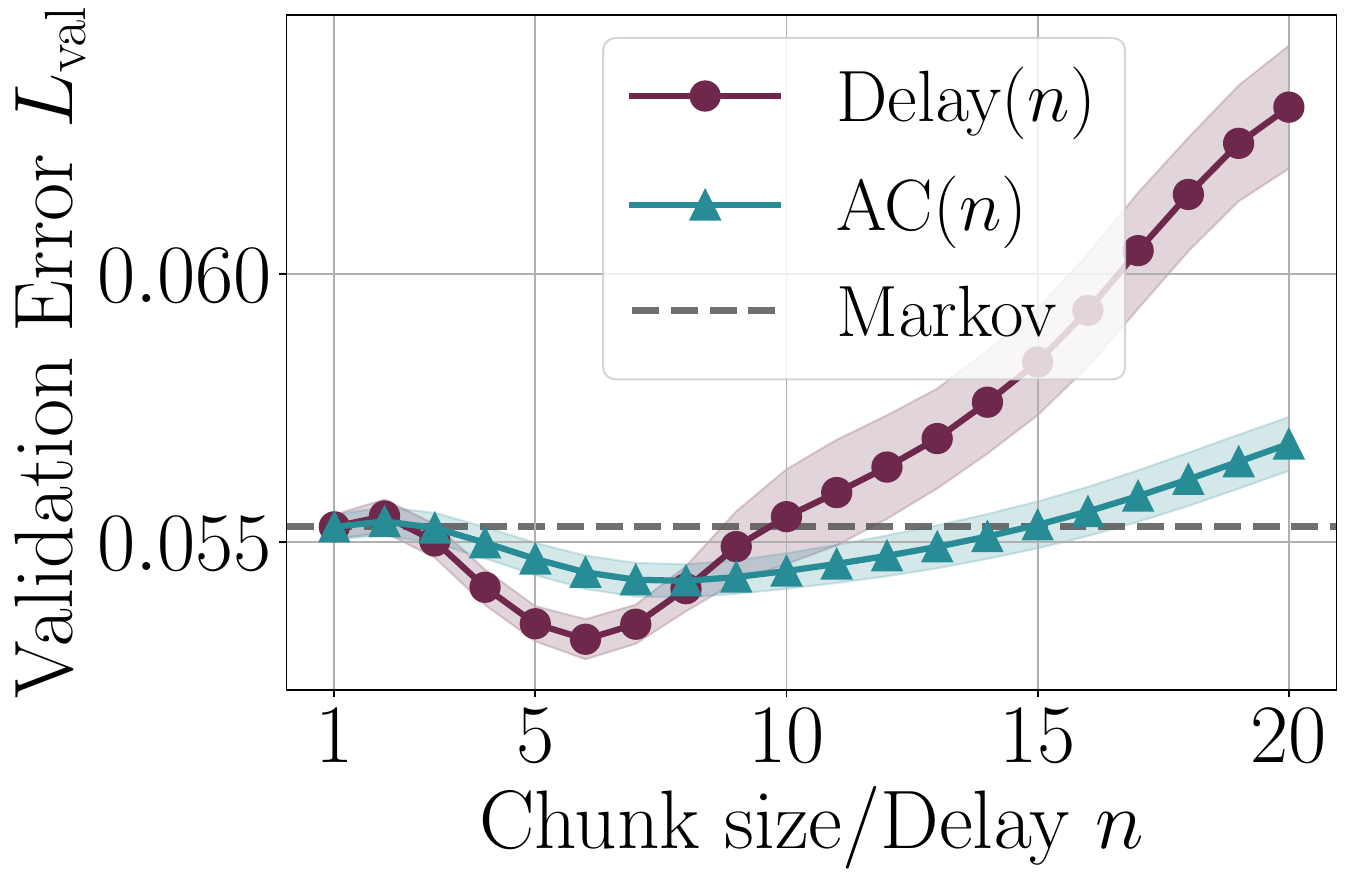}
    \end{minipage}
    \hfill
        \begin{minipage}[t]{0.23\textwidth}
        \centering
        \includegraphics[width=\linewidth]{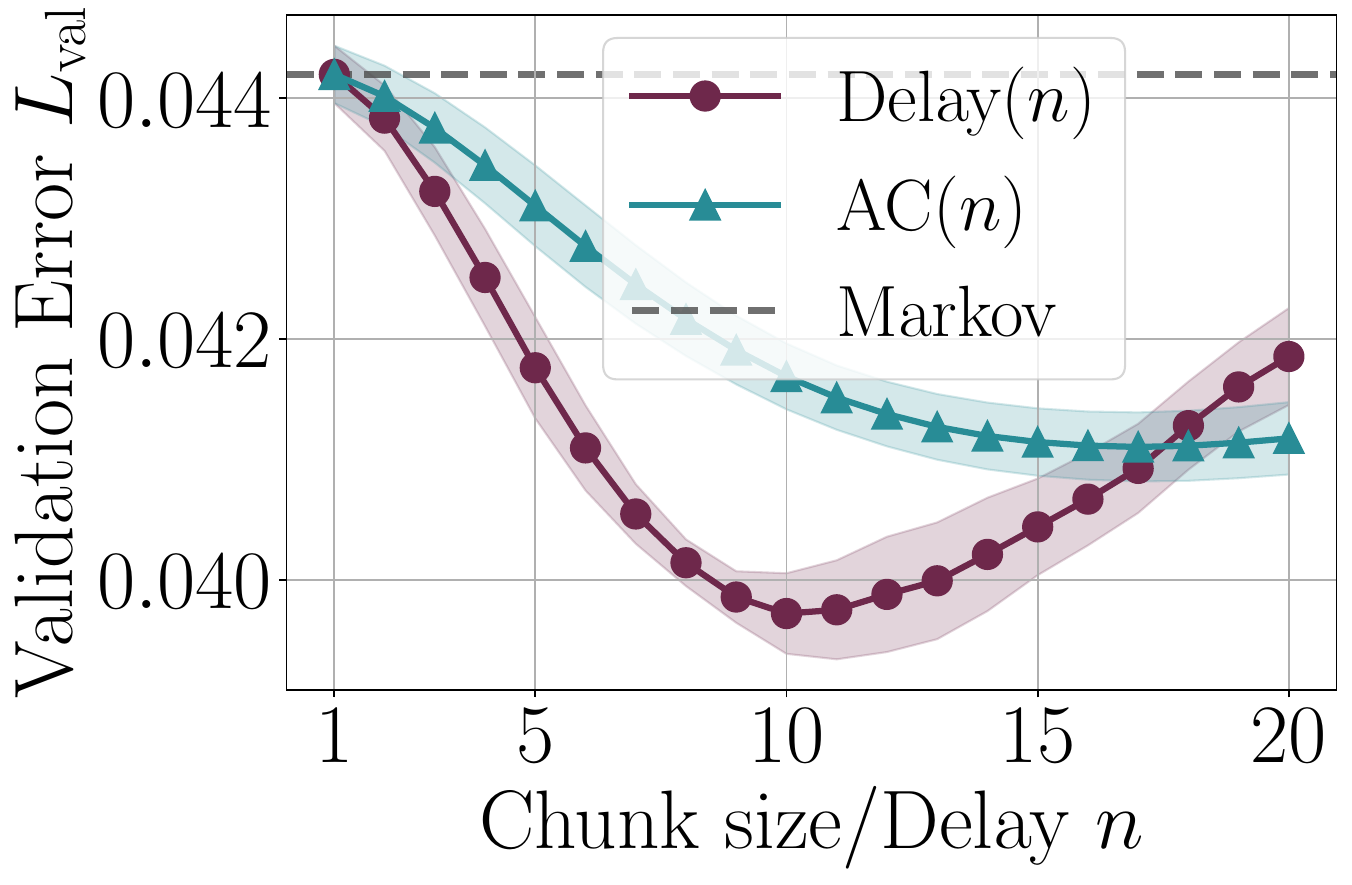}
    \end{minipage}
    \hfill
        \begin{minipage}[t]{0.23\textwidth}
        \centering
        \includegraphics[width=\linewidth]{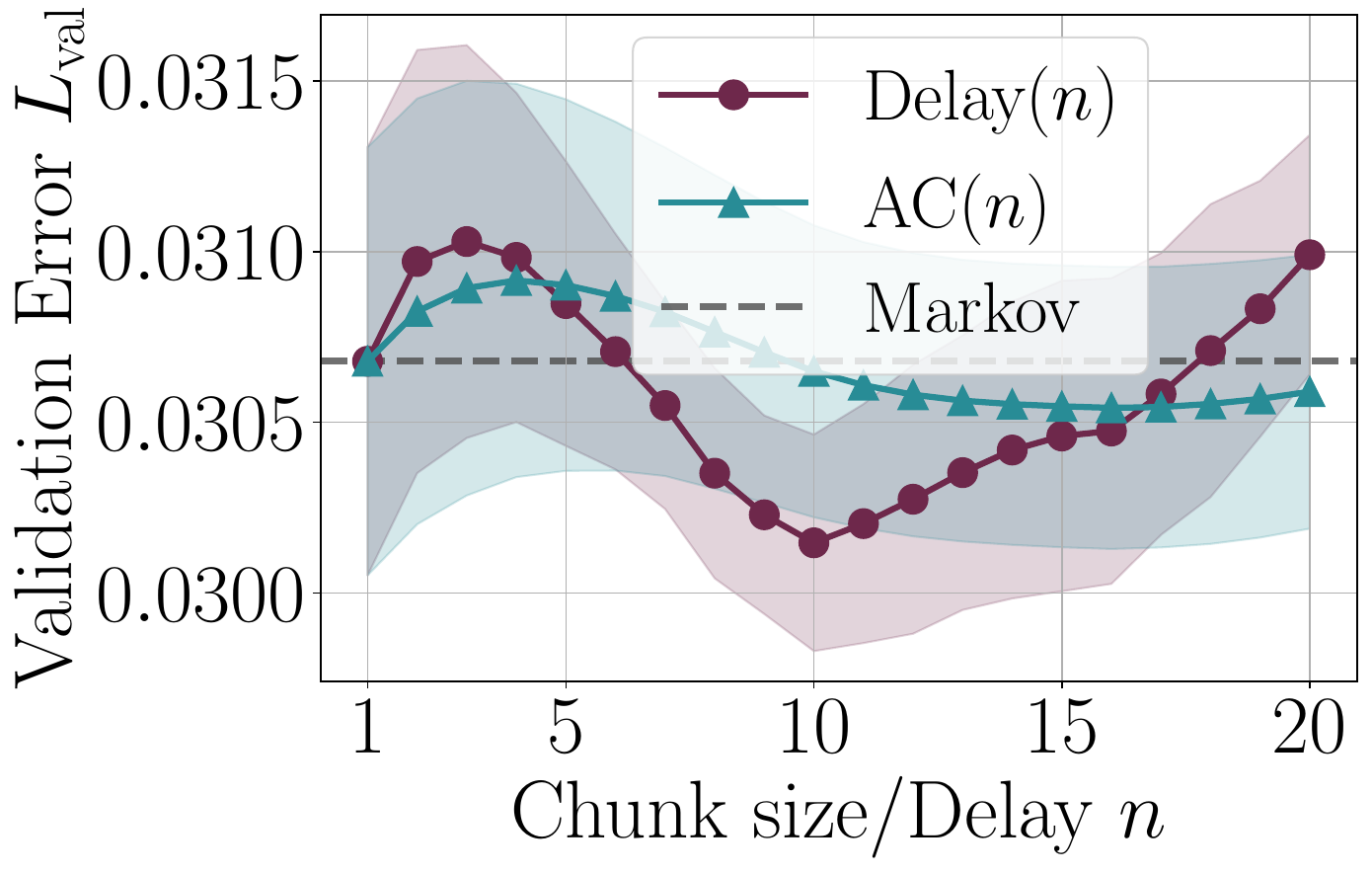}
    \end{minipage}
    \hfill
        \begin{minipage}[t]{0.23\textwidth}
        \centering
        \includegraphics[width=\linewidth]{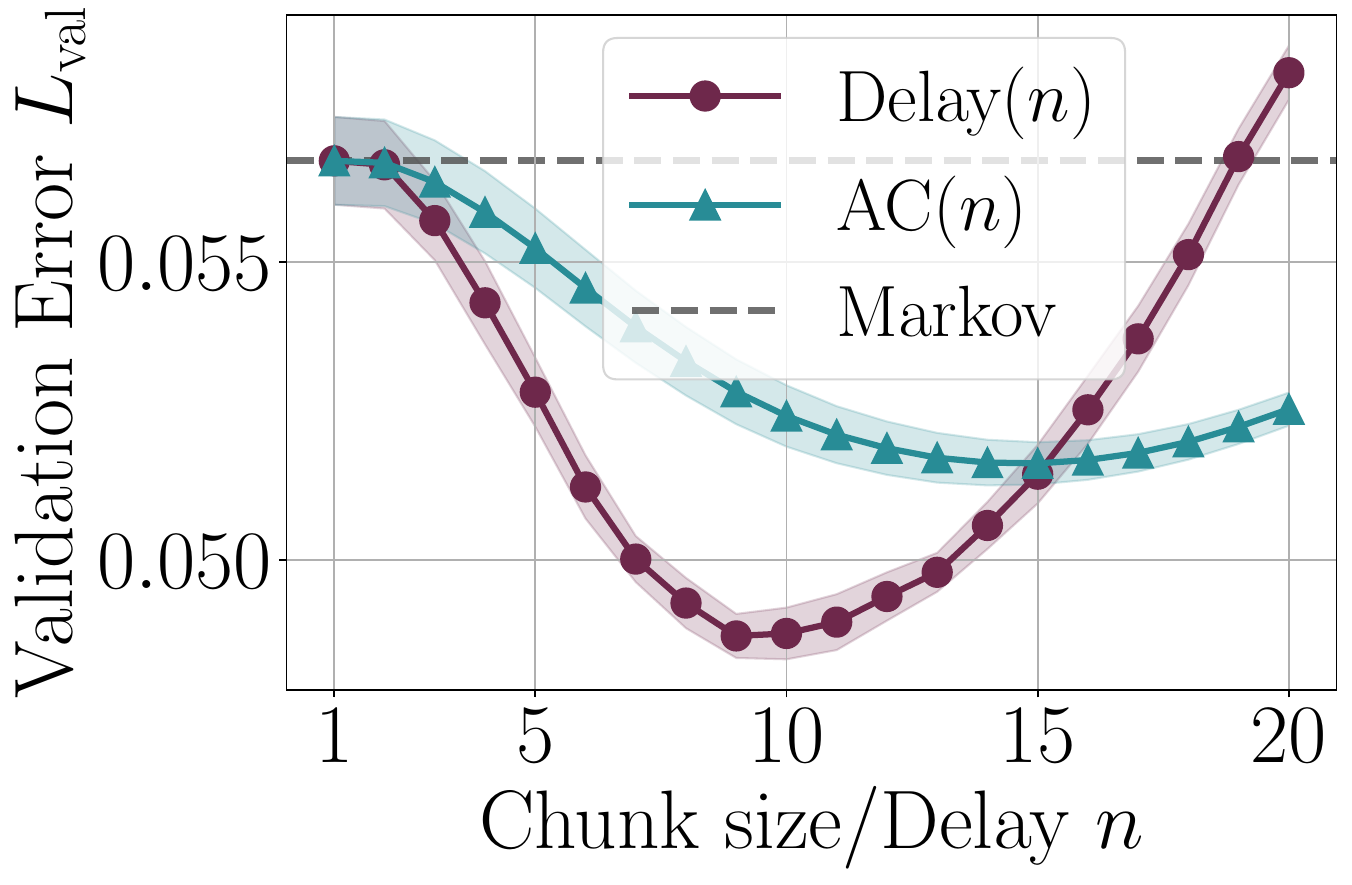}
    \end{minipage}
        \begin{minipage}[t]{0.23\textwidth}
            \includegraphics[width=\linewidth]{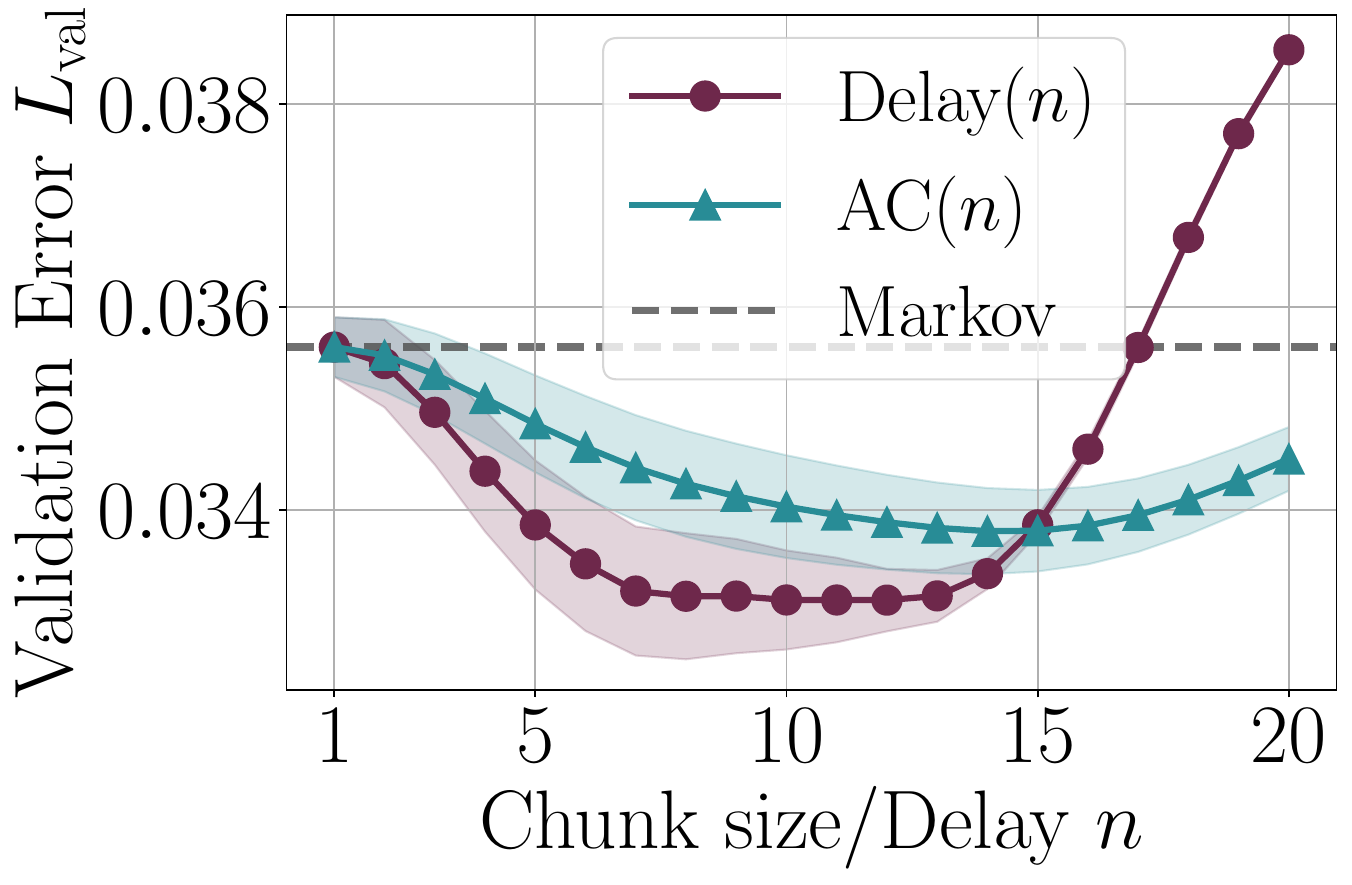}
    \end{minipage}
    \hfill
        \begin{minipage}[t]{0.23\textwidth}
        \centering
        \includegraphics[width=\linewidth]{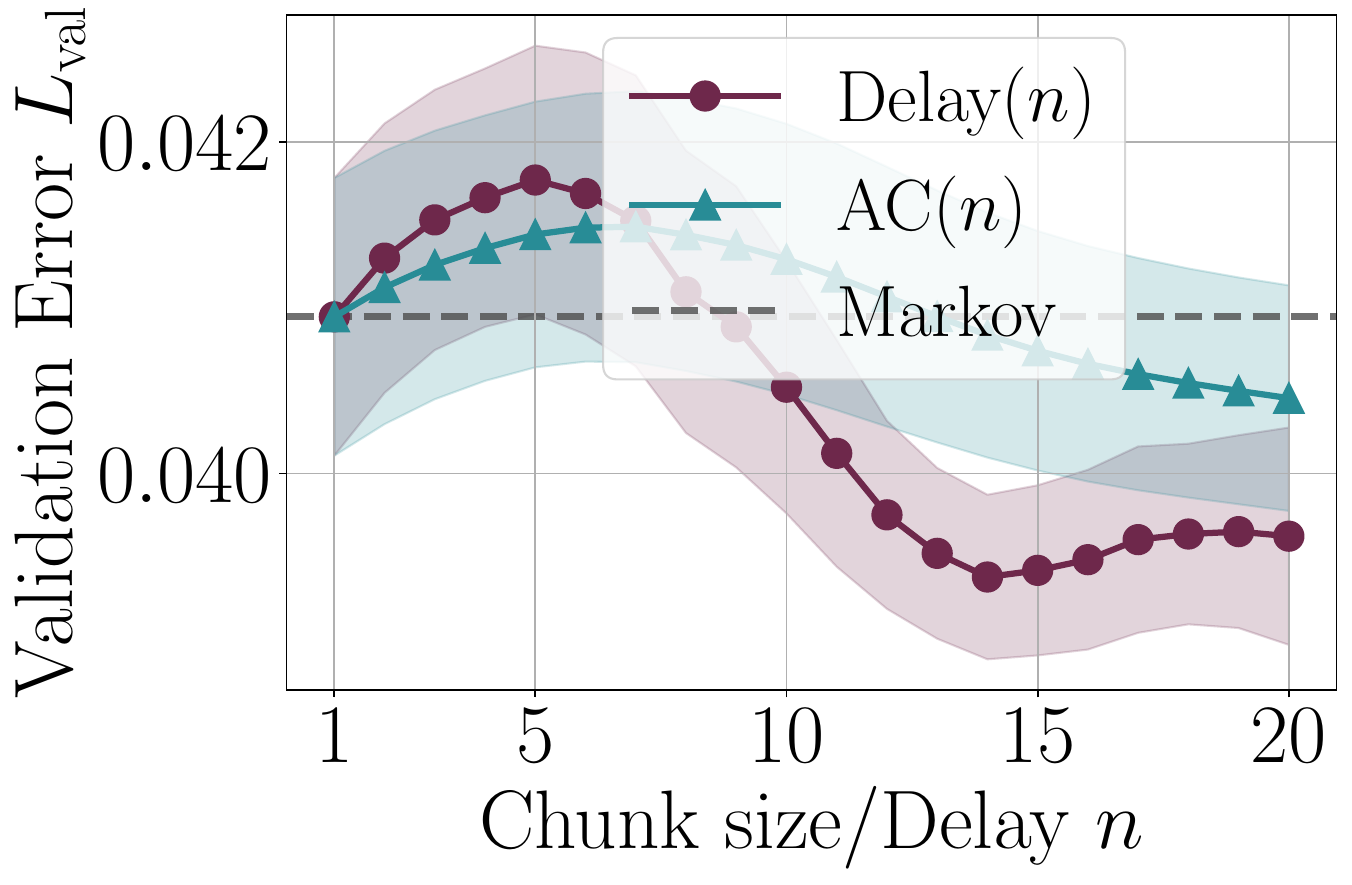}
    \end{minipage}
    \hfill
        \begin{minipage}[t]{0.23\textwidth}
        \centering
        \includegraphics[width=\linewidth]{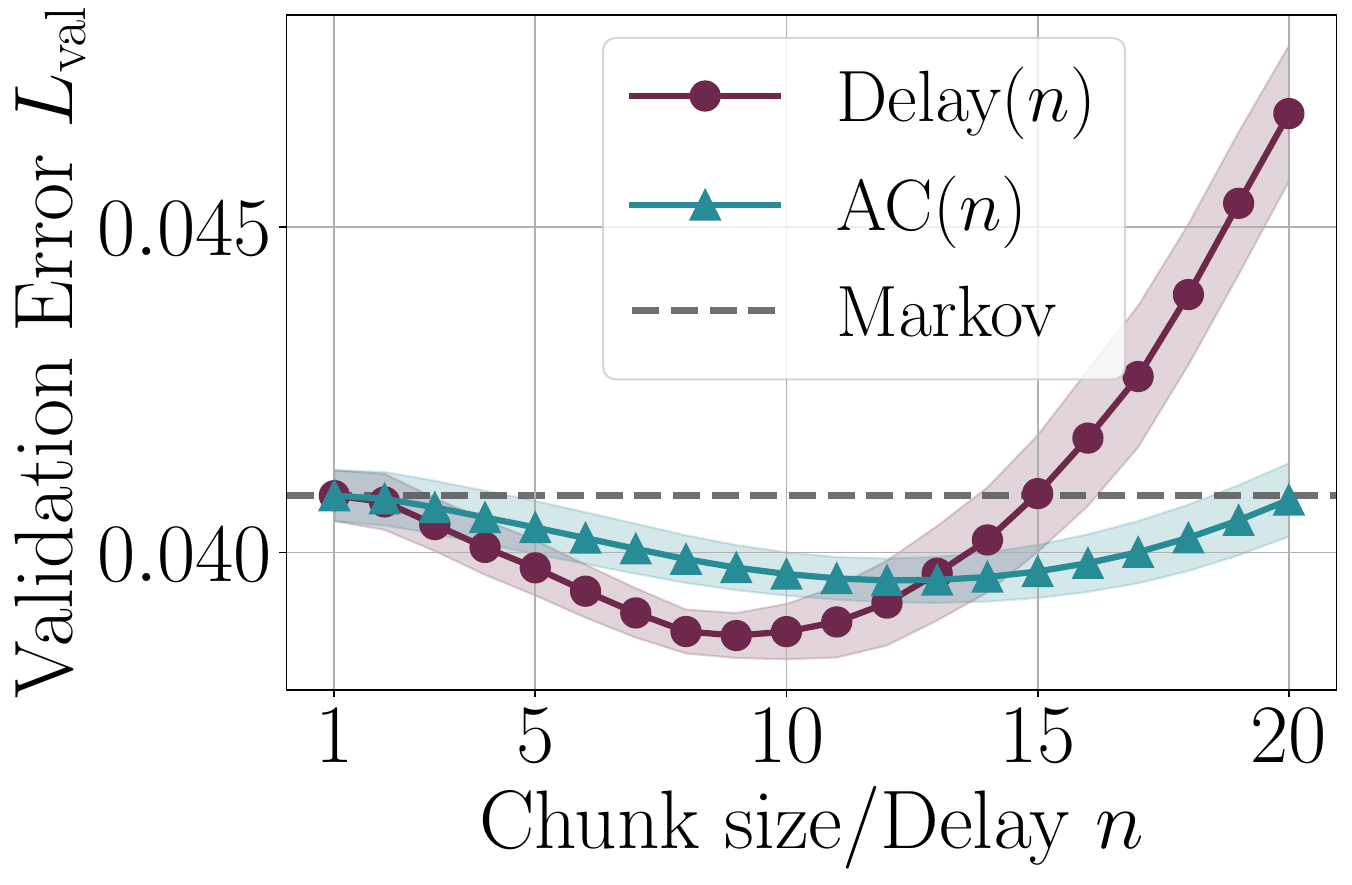}
    \end{minipage}
    \hfill
        \begin{minipage}[t]{0.23\textwidth}
        \centering
        \includegraphics[width=\linewidth]{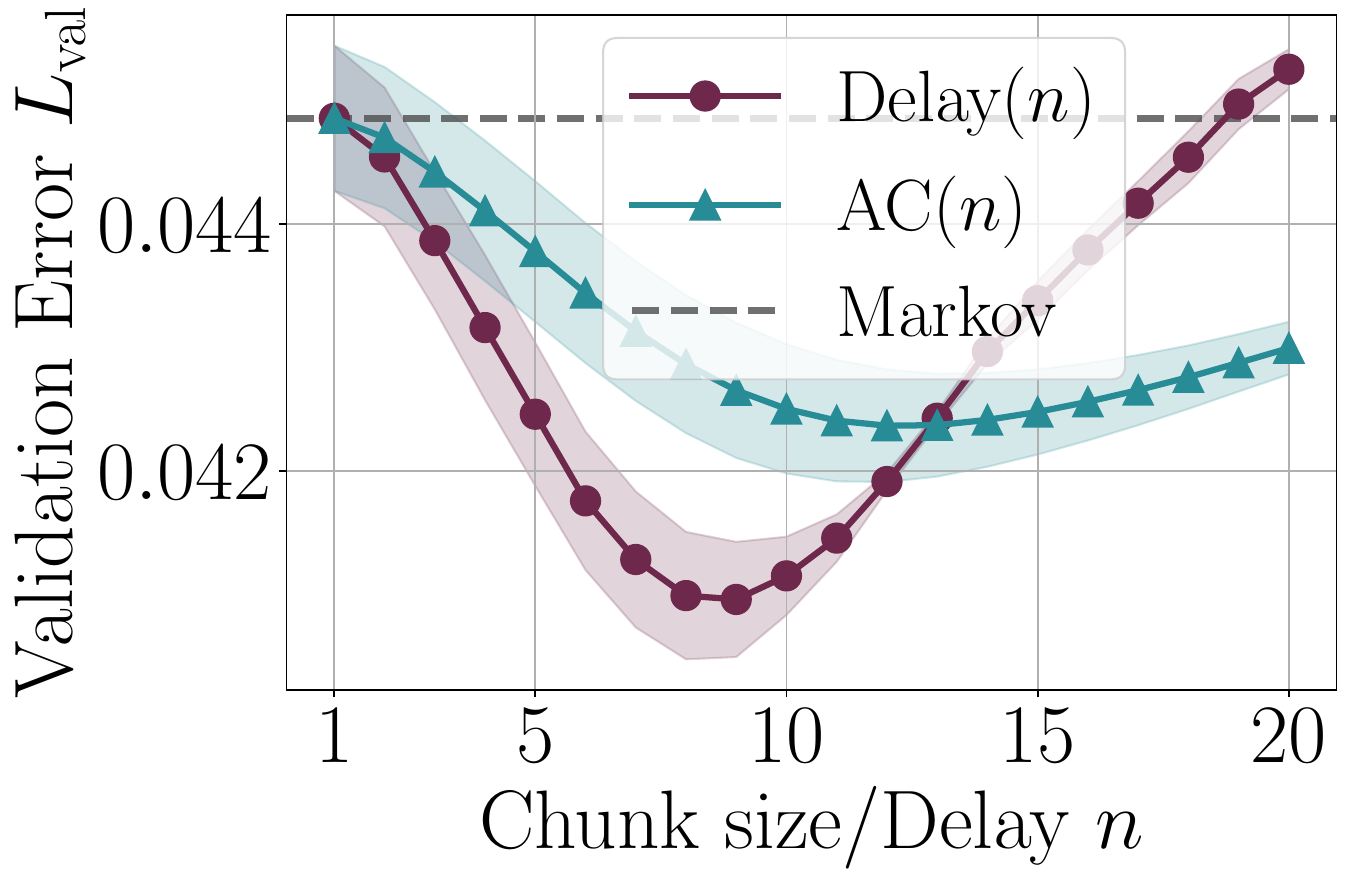}
    \end{minipage}
        \begin{minipage}[t]{0.23\textwidth}
            \includegraphics[width=\linewidth]{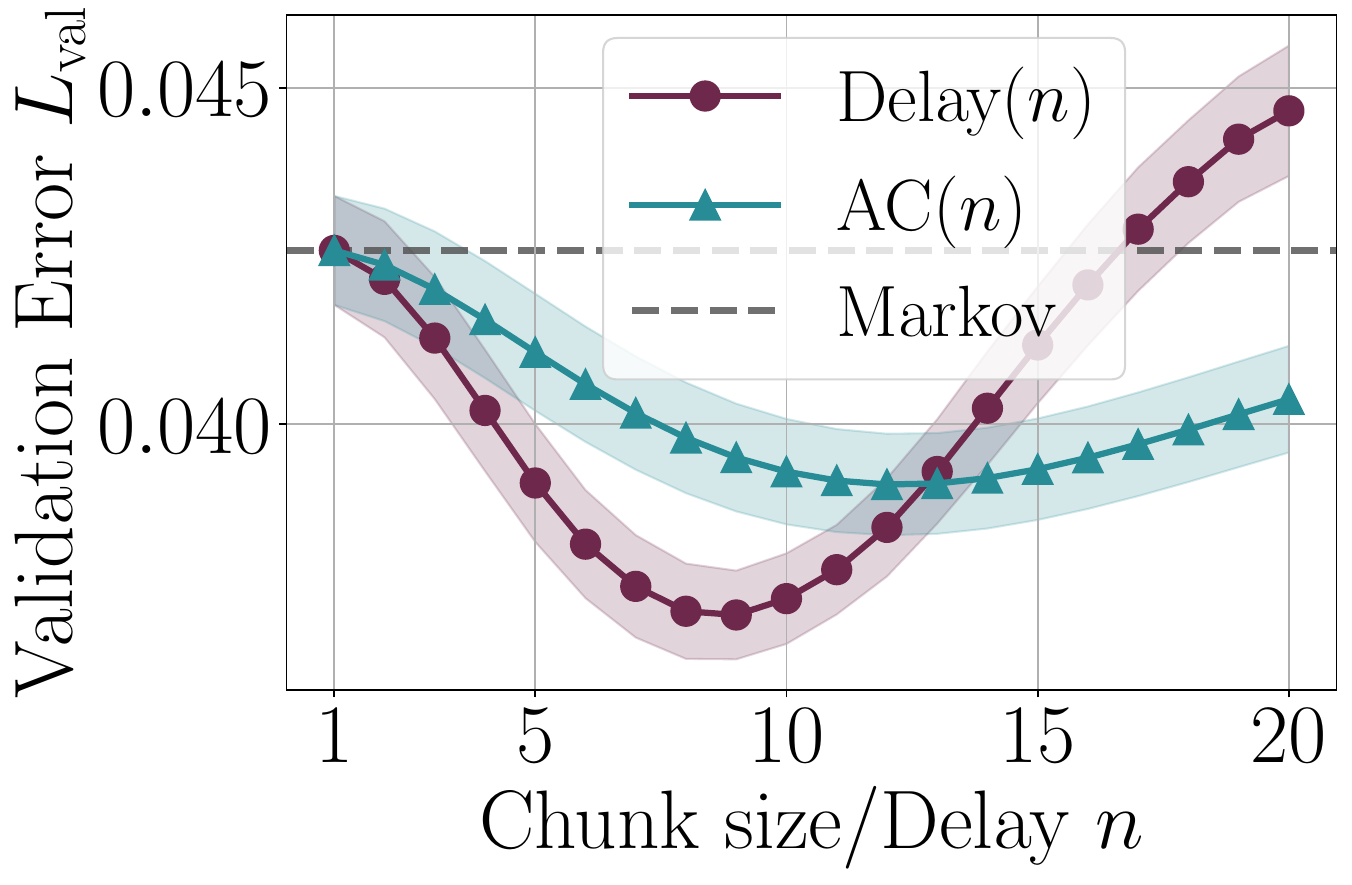}
    \end{minipage}
    \hfill
        \begin{minipage}[t]{0.23\textwidth}
        \centering
        \includegraphics[width=\linewidth]{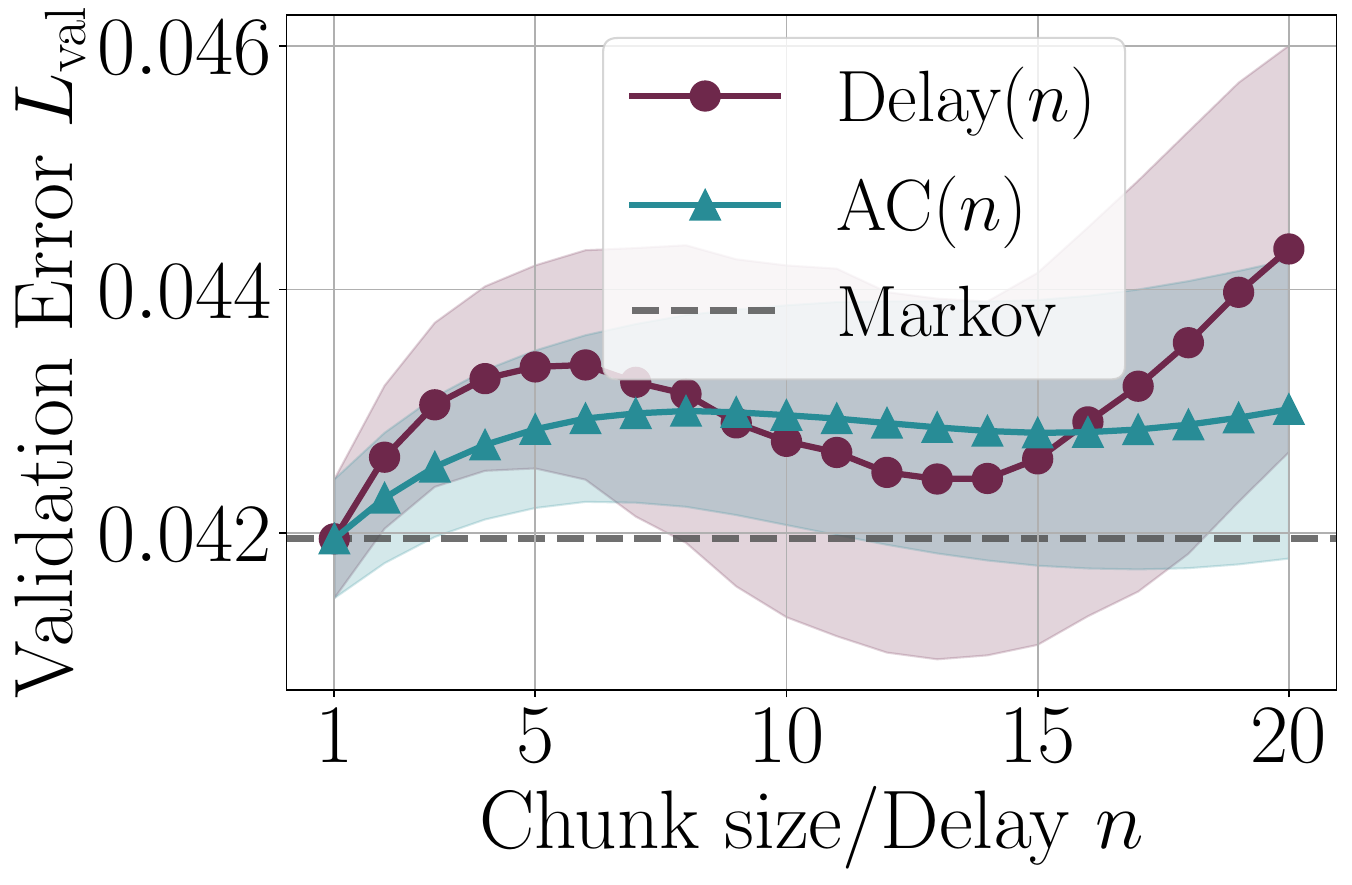}
    \end{minipage}
    \hfill
        \begin{minipage}[t]{0.23\textwidth}
        \centering
        \includegraphics[width=\linewidth]{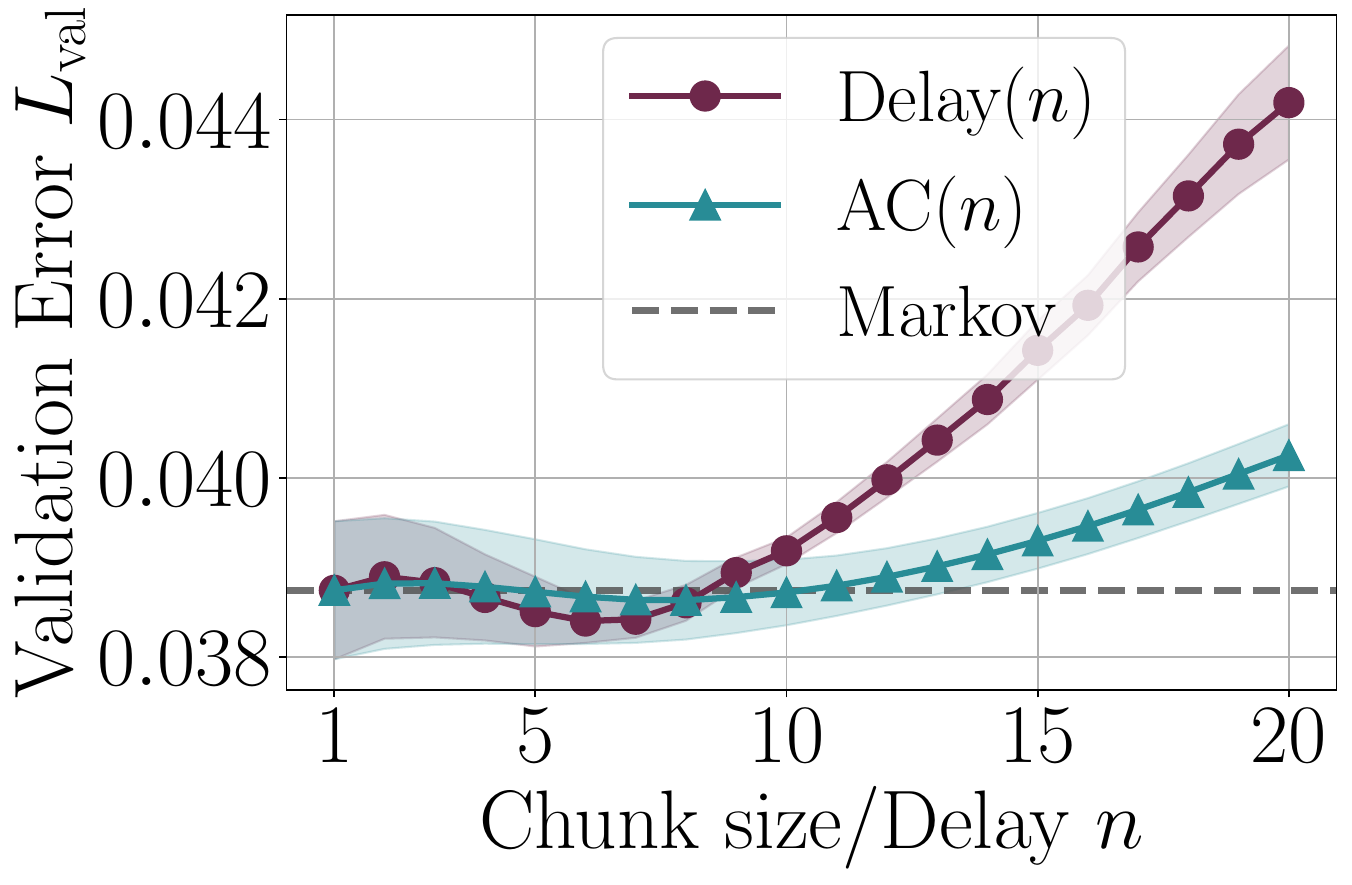}
    \end{minipage}
    \hfill
        \begin{minipage}[t]{0.23\textwidth}
        \centering
        \includegraphics[width=\linewidth]{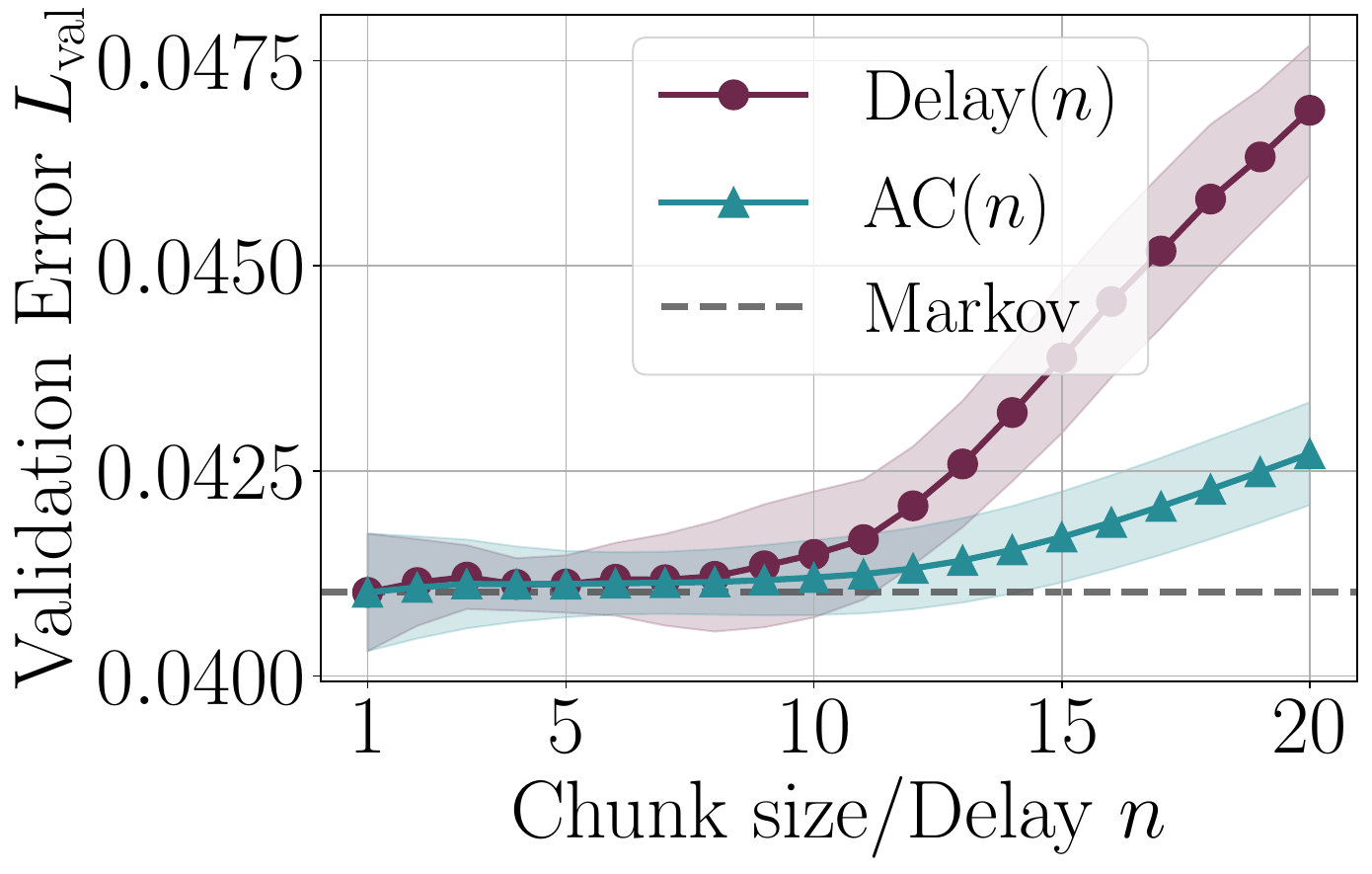}
    \end{minipage}
        \caption{Validation loss for each \texttt{Libero} task from 36 to 67 (corresponding to Fig. \ref{fig:val_loss_libero}), part 2.}
    \label{fig:val loss each libero2}
\end{figure*}

\begin{figure*}
        \begin{minipage}[t]{0.23\textwidth}
            \includegraphics[width=\linewidth]{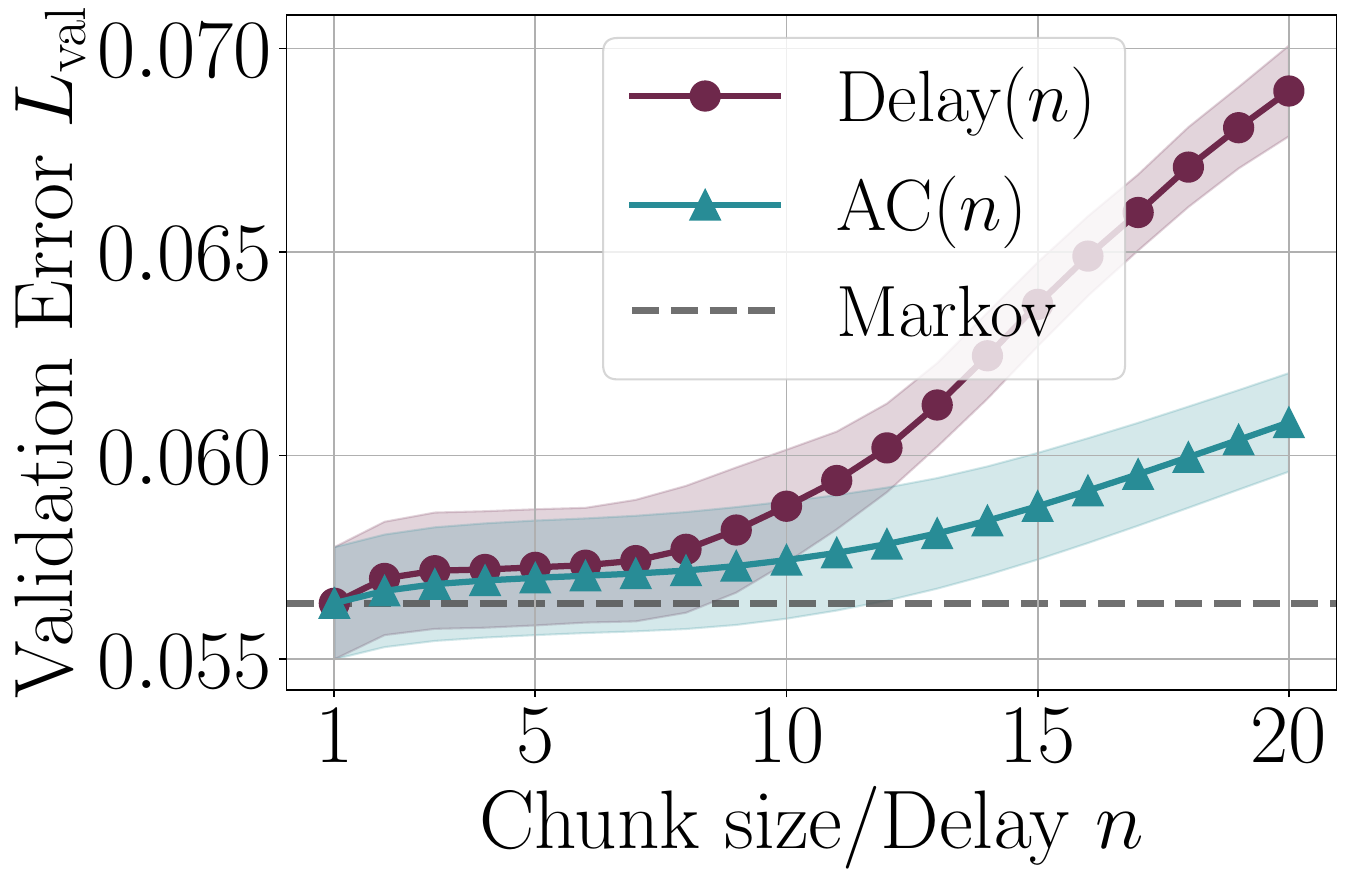}
    \end{minipage}
    \hfill
        \begin{minipage}[t]{0.23\textwidth}
        \centering
        \includegraphics[width=\linewidth]{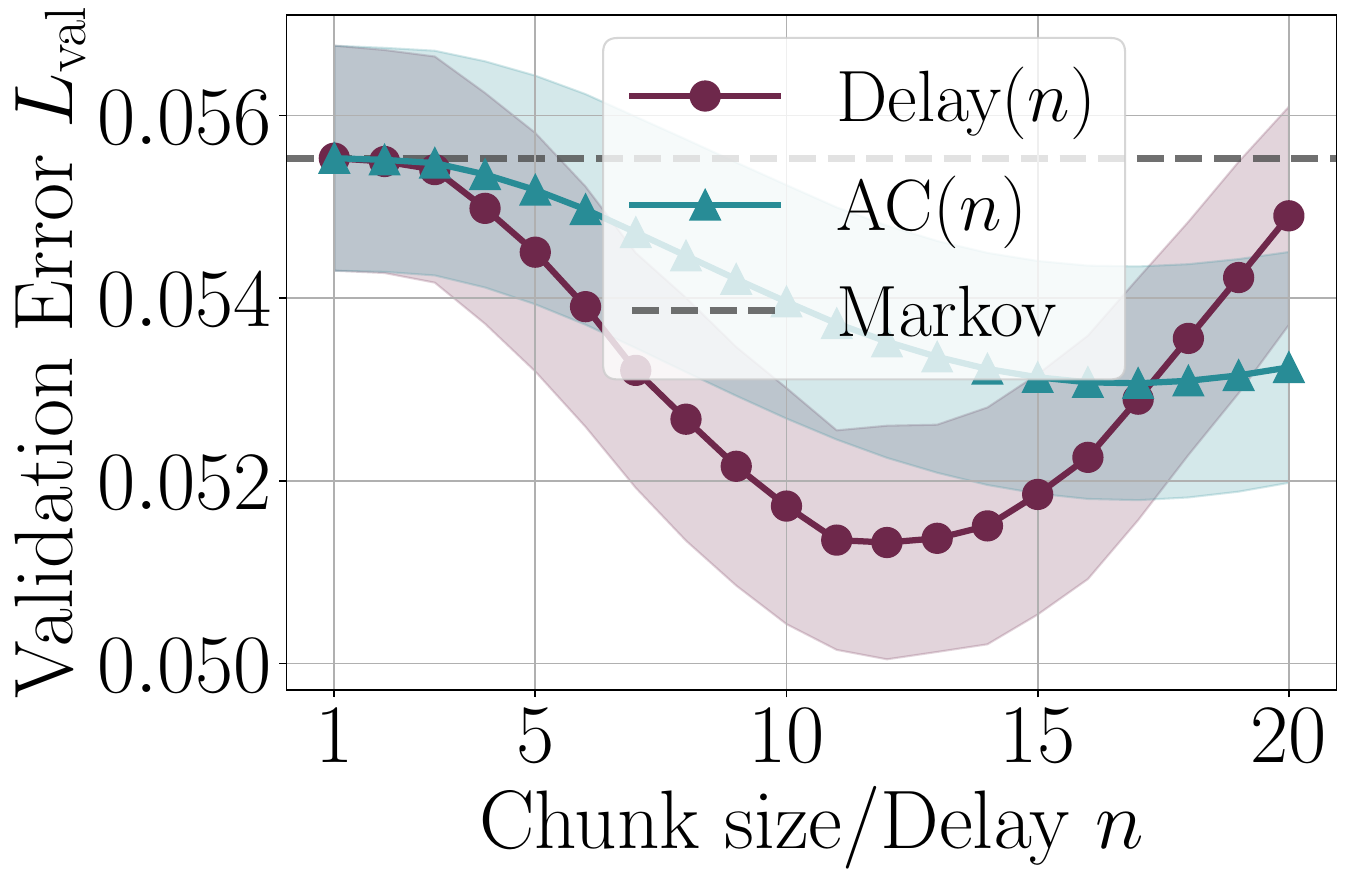}
    \end{minipage}
    \hfill
        \begin{minipage}[t]{0.23\textwidth}
        \centering
        \includegraphics[width=\linewidth]{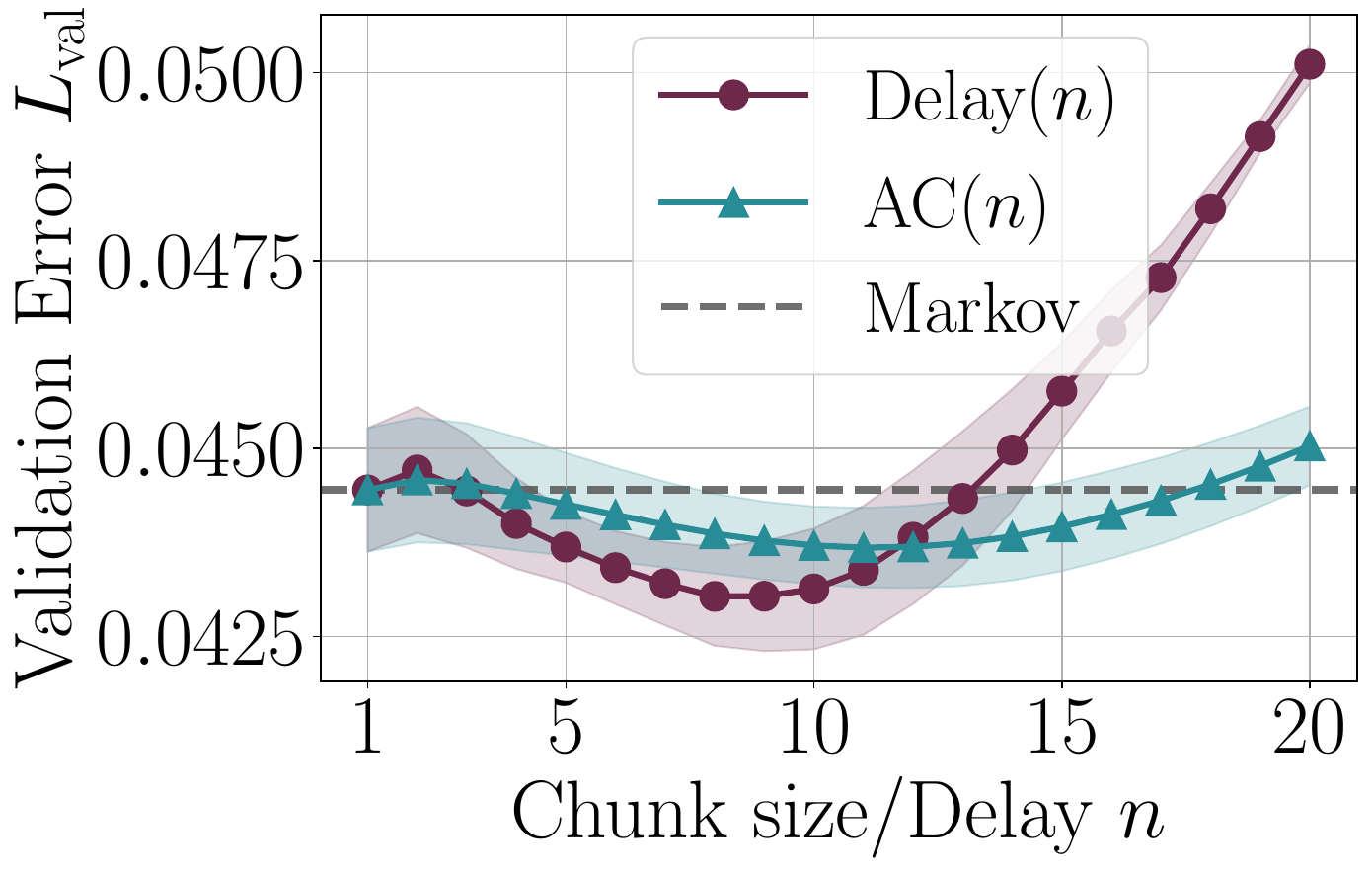}
    \end{minipage}
    \hfill
        \begin{minipage}[t]{0.23\textwidth}
        \centering
        \includegraphics[width=\linewidth]{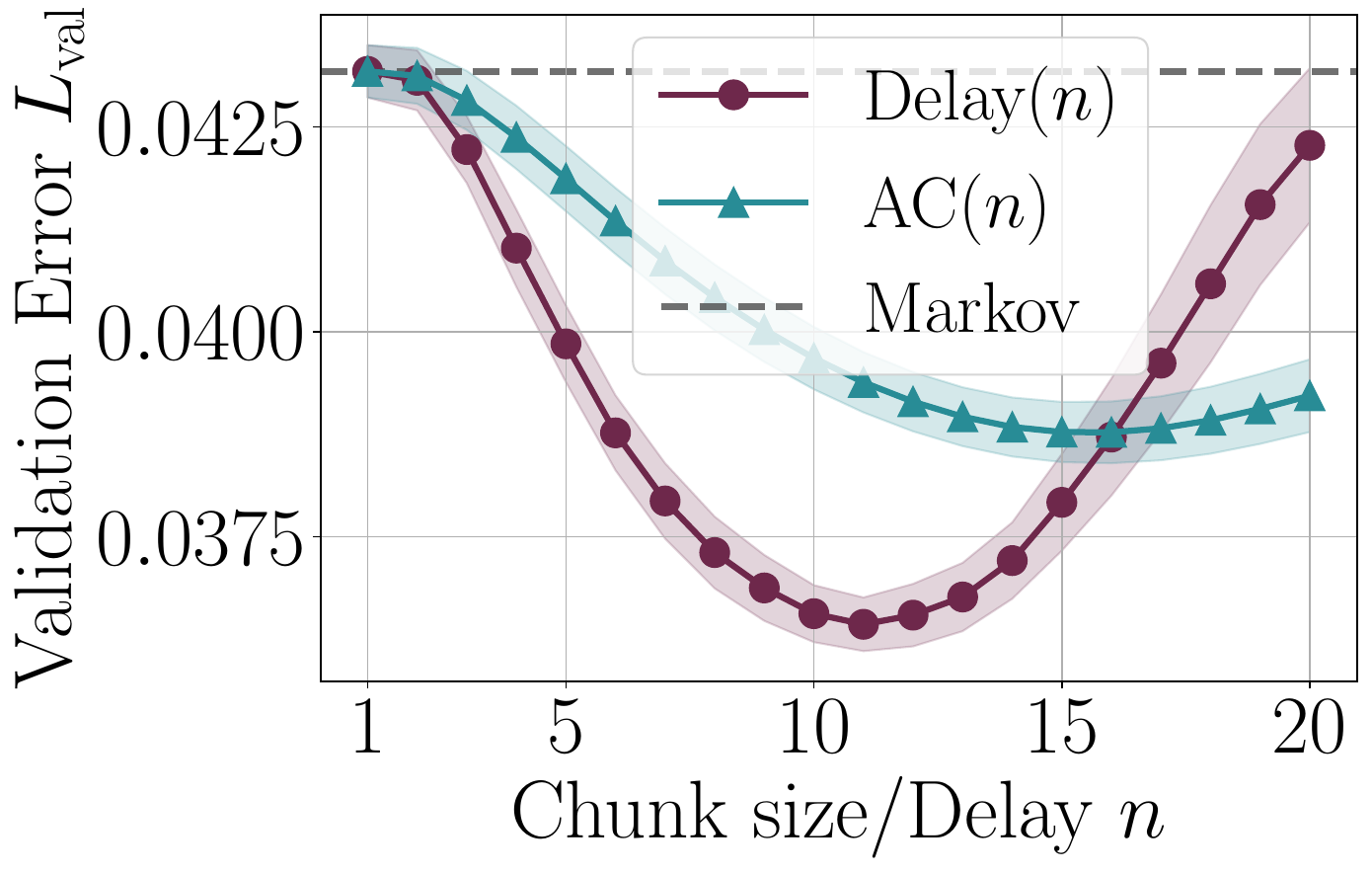}
    \end{minipage}
        \begin{minipage}[t]{0.23\textwidth}
            \includegraphics[width=\linewidth]{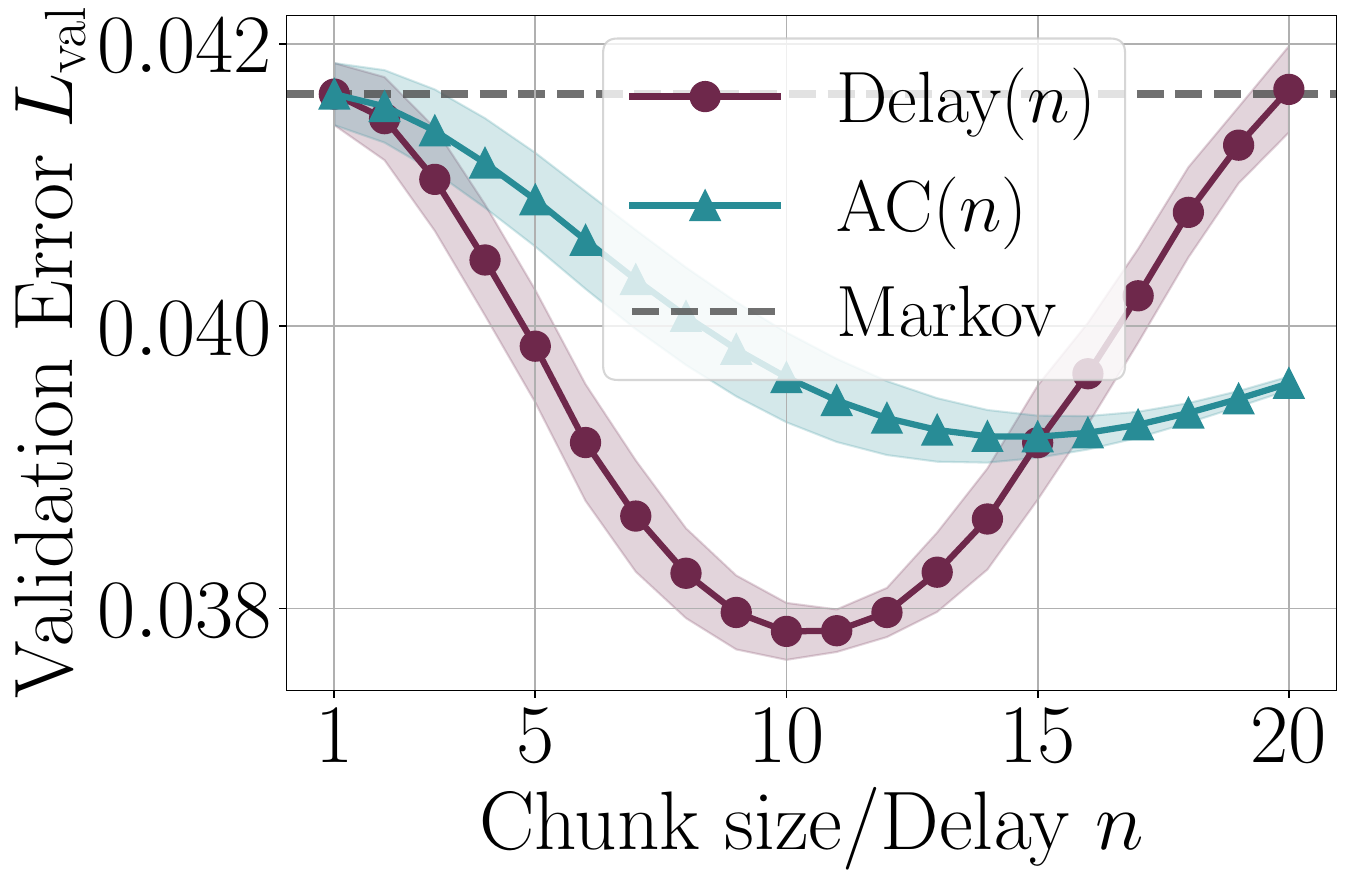}
    \end{minipage}
    \hfill
        \begin{minipage}[t]{0.23\textwidth}
        \centering
        \includegraphics[width=\linewidth]{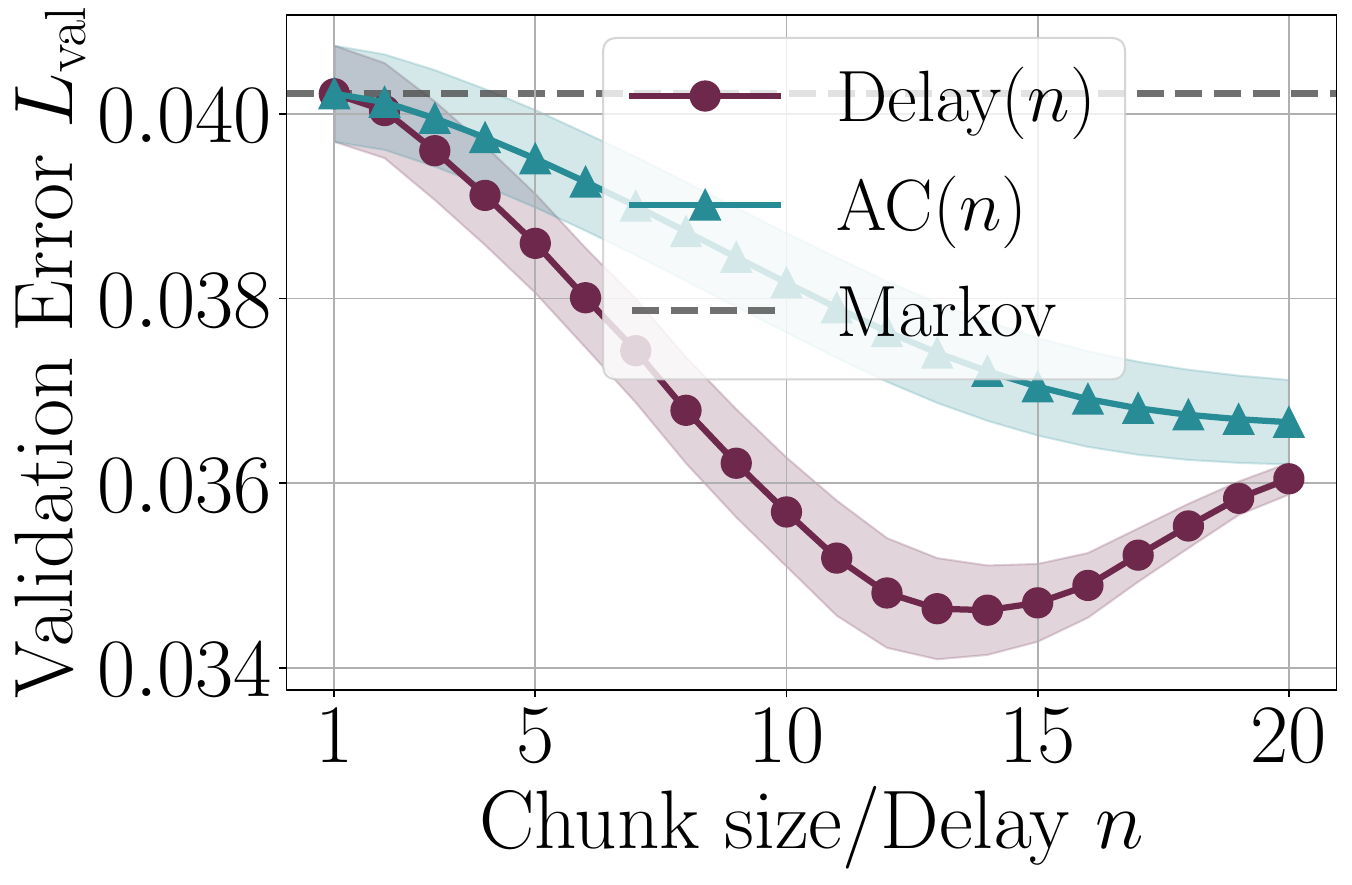}
    \end{minipage}
    \hfill
        \begin{minipage}[t]{0.23\textwidth}
        \centering
        \includegraphics[width=\linewidth]{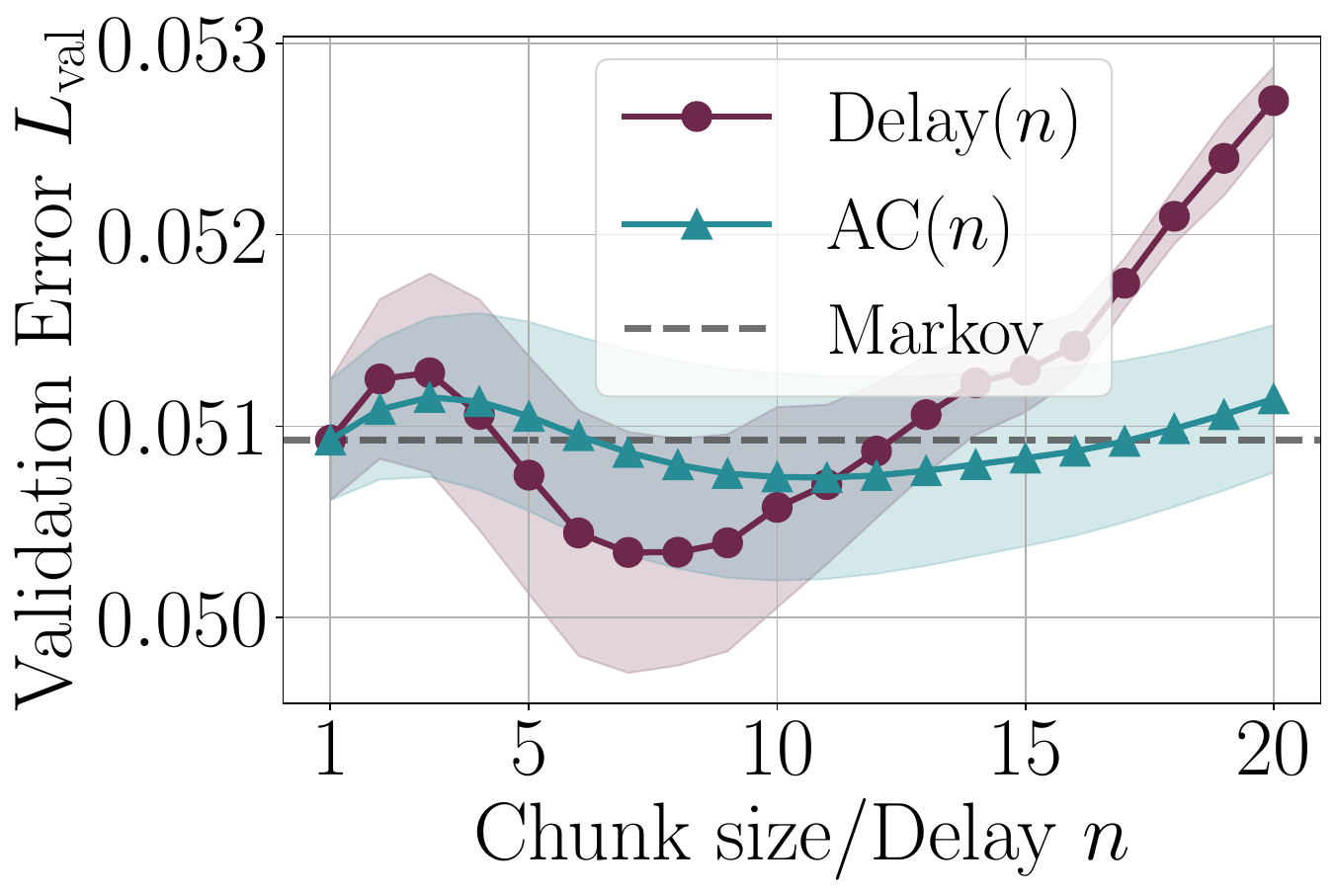}
    \end{minipage}
    \hfill
        \begin{minipage}[t]{0.23\textwidth}
        \centering
        \includegraphics[width=\linewidth]{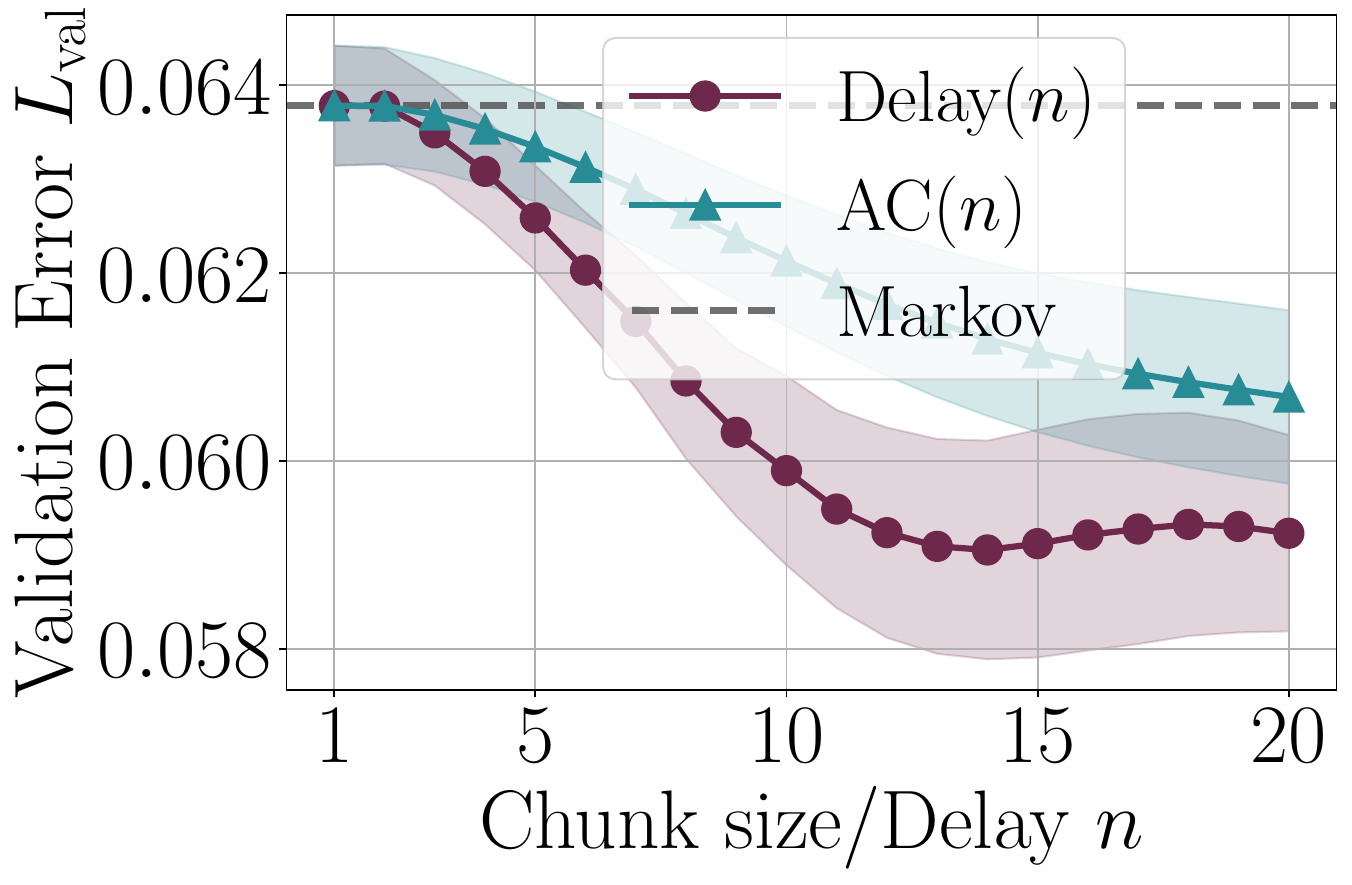}
    \end{minipage}
        \begin{minipage}[t]{0.23\textwidth}
            \includegraphics[width=\linewidth]{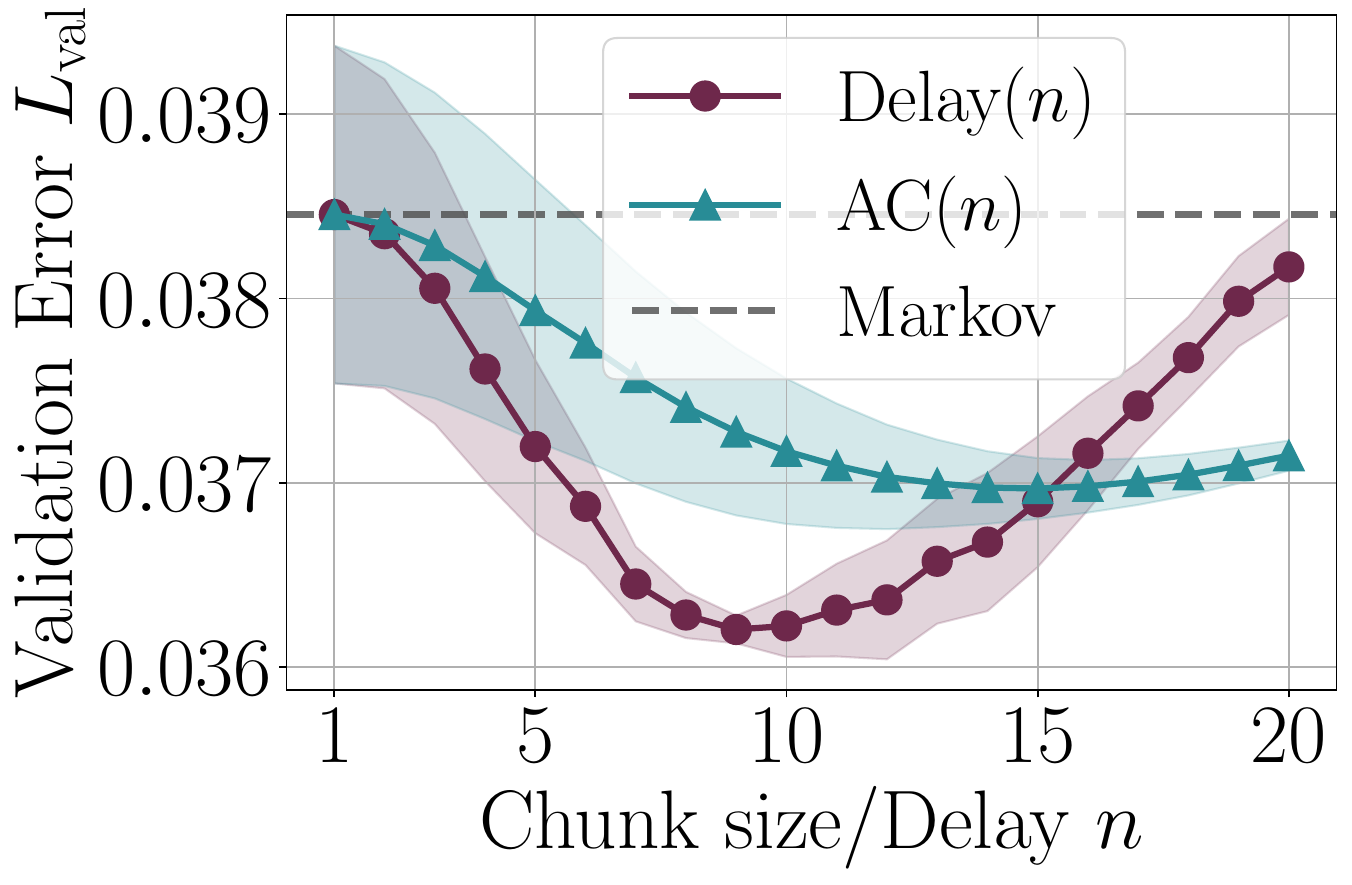}
    \end{minipage}
    \hfill
        \begin{minipage}[t]{0.23\textwidth}
        \centering
        \includegraphics[width=\linewidth]{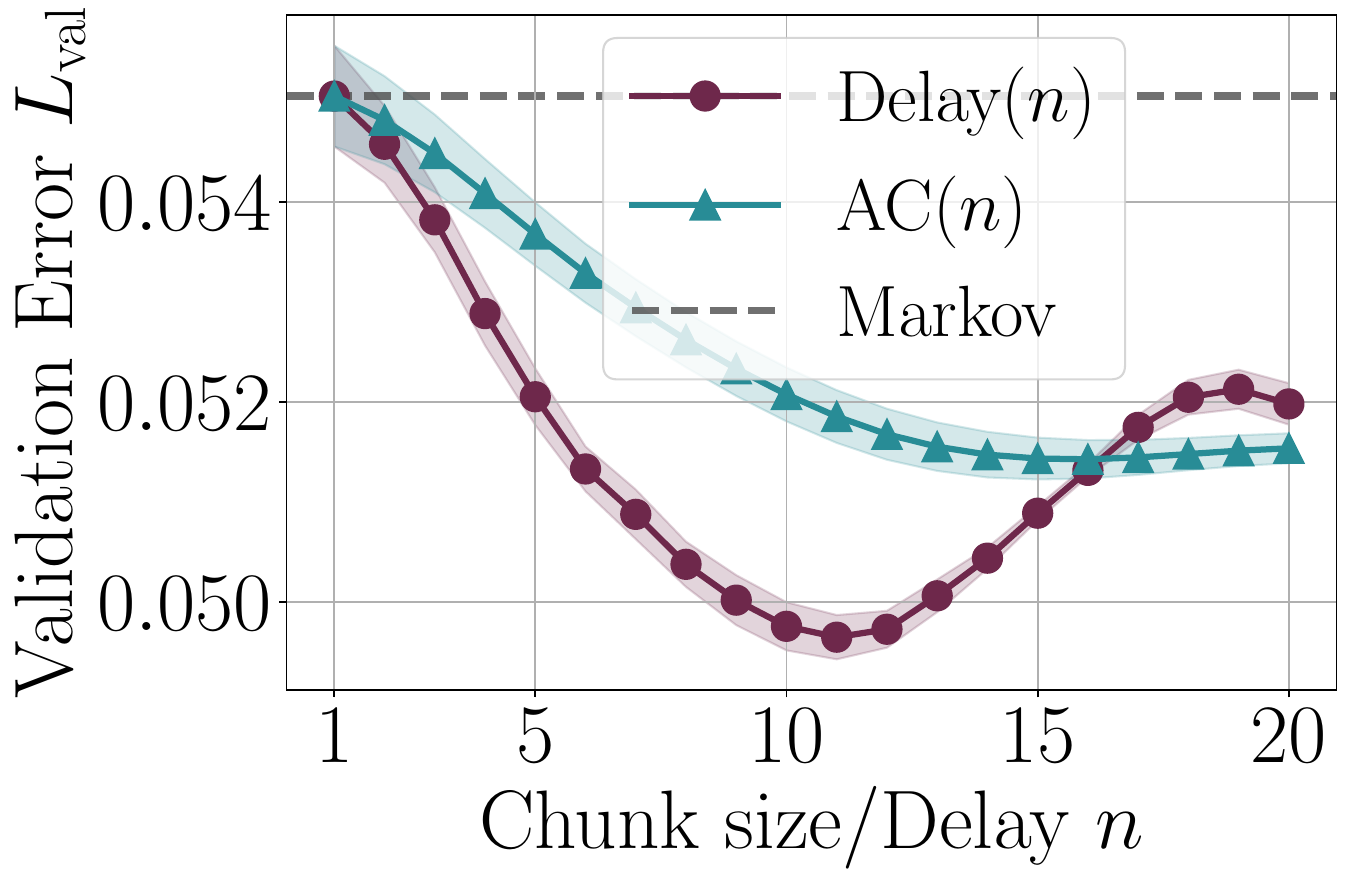}
    \end{minipage}
    \hfill
        \begin{minipage}[t]{0.23\textwidth}
        \centering
        \includegraphics[width=\linewidth]{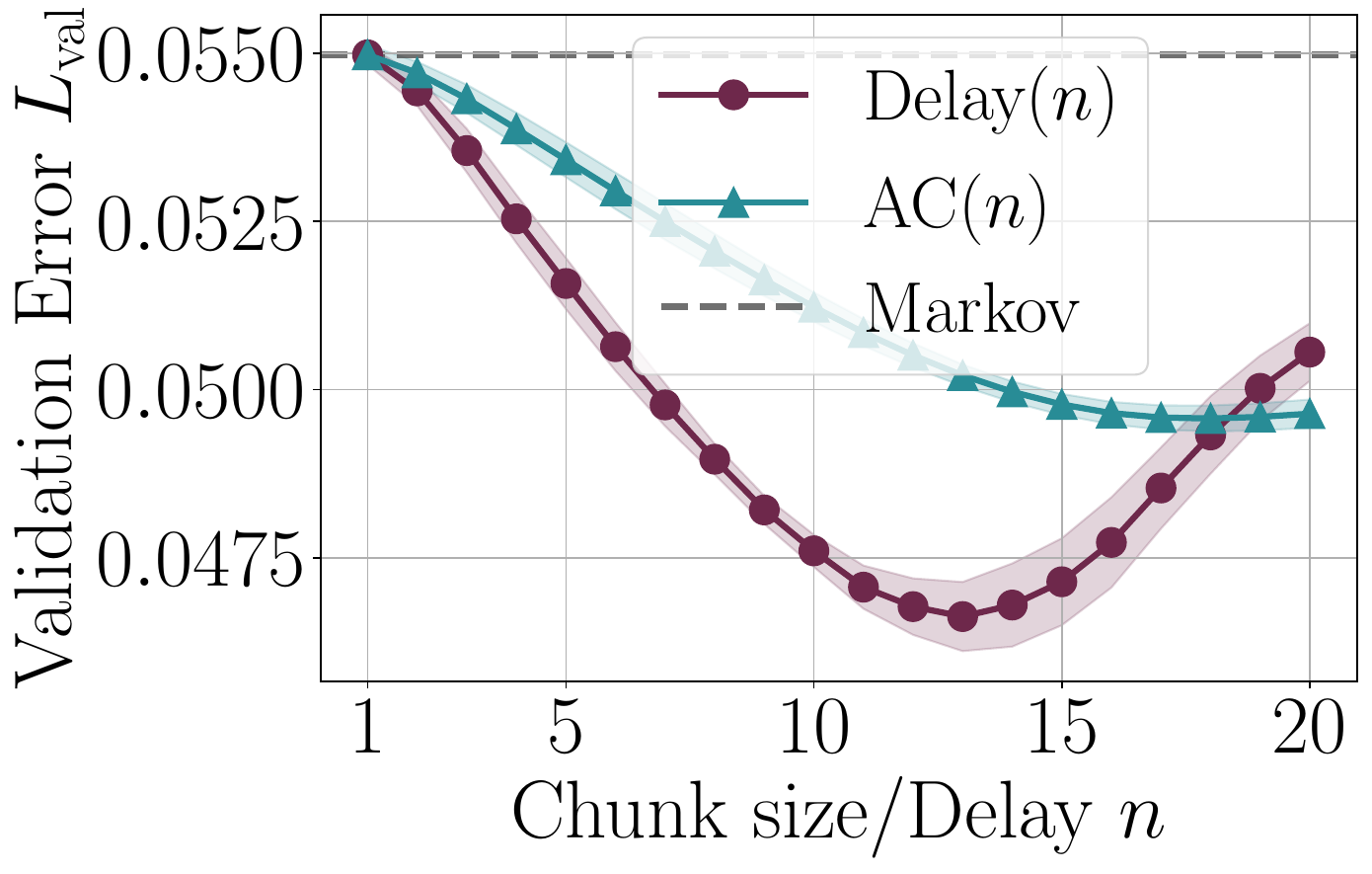}
    \end{minipage}
    \hfill
        \begin{minipage}[t]{0.23\textwidth}
        \centering
        \includegraphics[width=\linewidth]{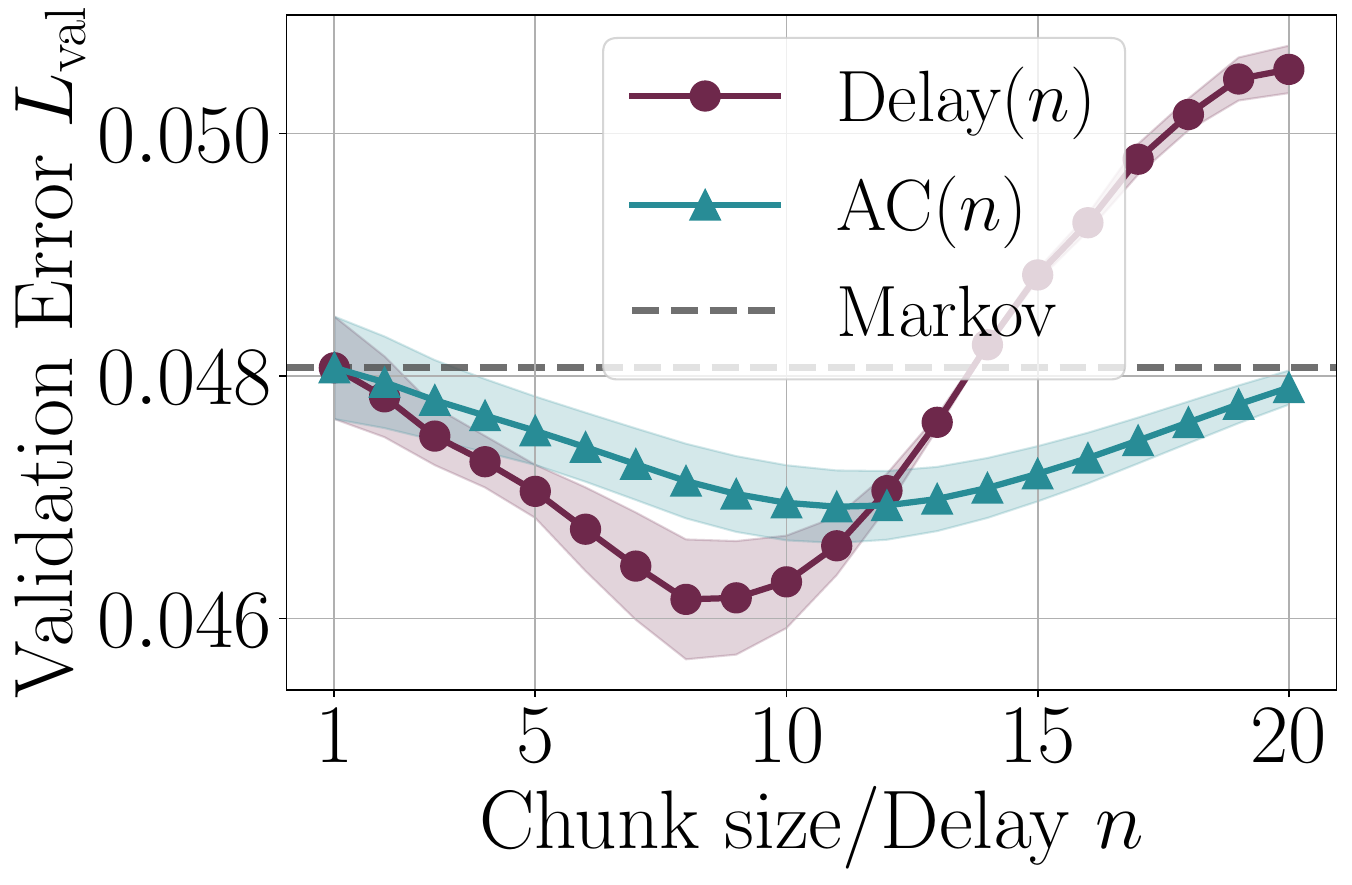}
    \end{minipage}
        \begin{minipage}[t]{0.23\textwidth}
            \includegraphics[width=\linewidth]{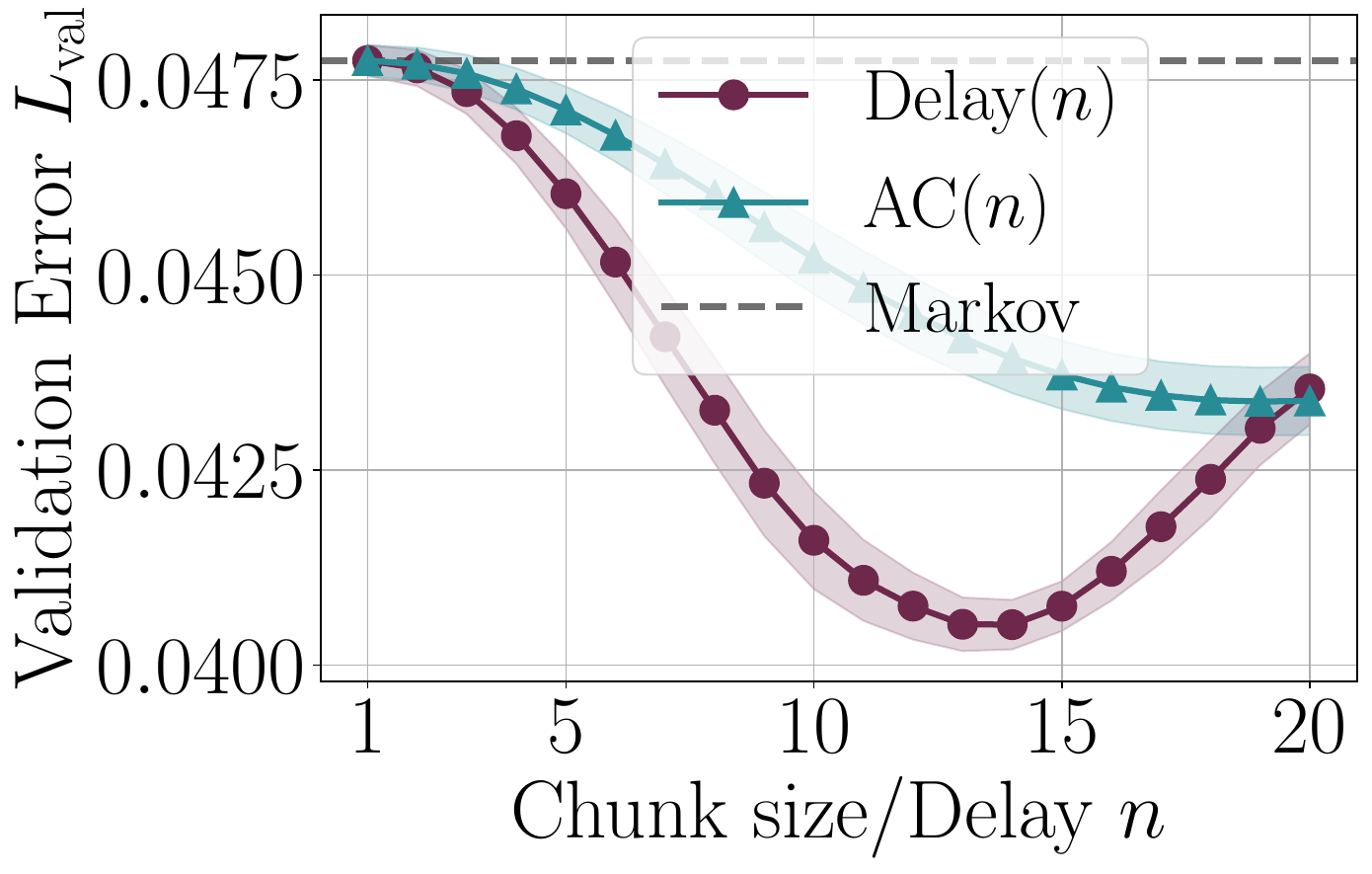}
    \end{minipage}
    \hfill
        \begin{minipage}[t]{0.23\textwidth}
        \centering
        \includegraphics[width=\linewidth]{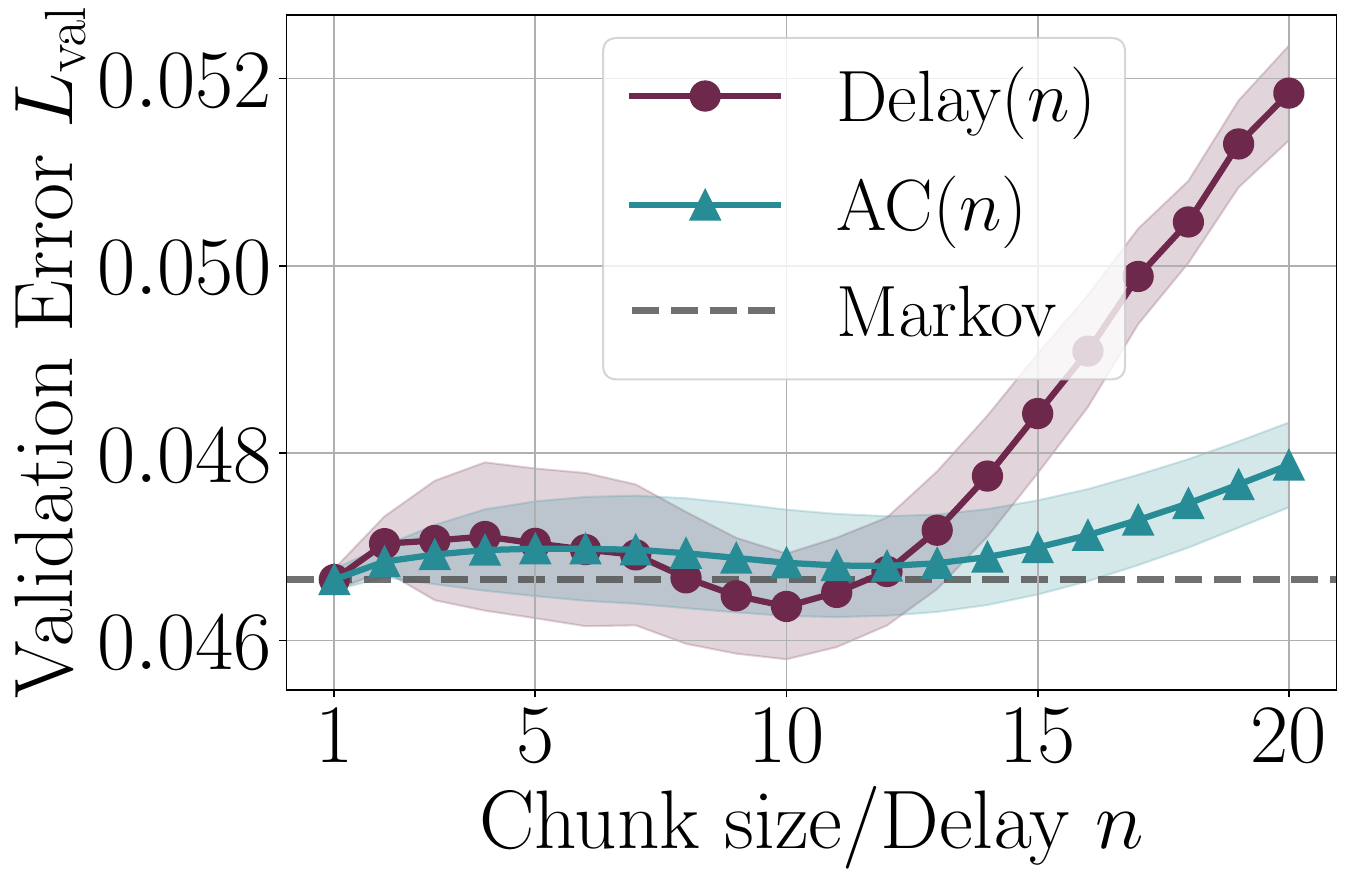}
    \end{minipage}
    \hfill
        \begin{minipage}[t]{0.23\textwidth}
        \centering
        \includegraphics[width=\linewidth]{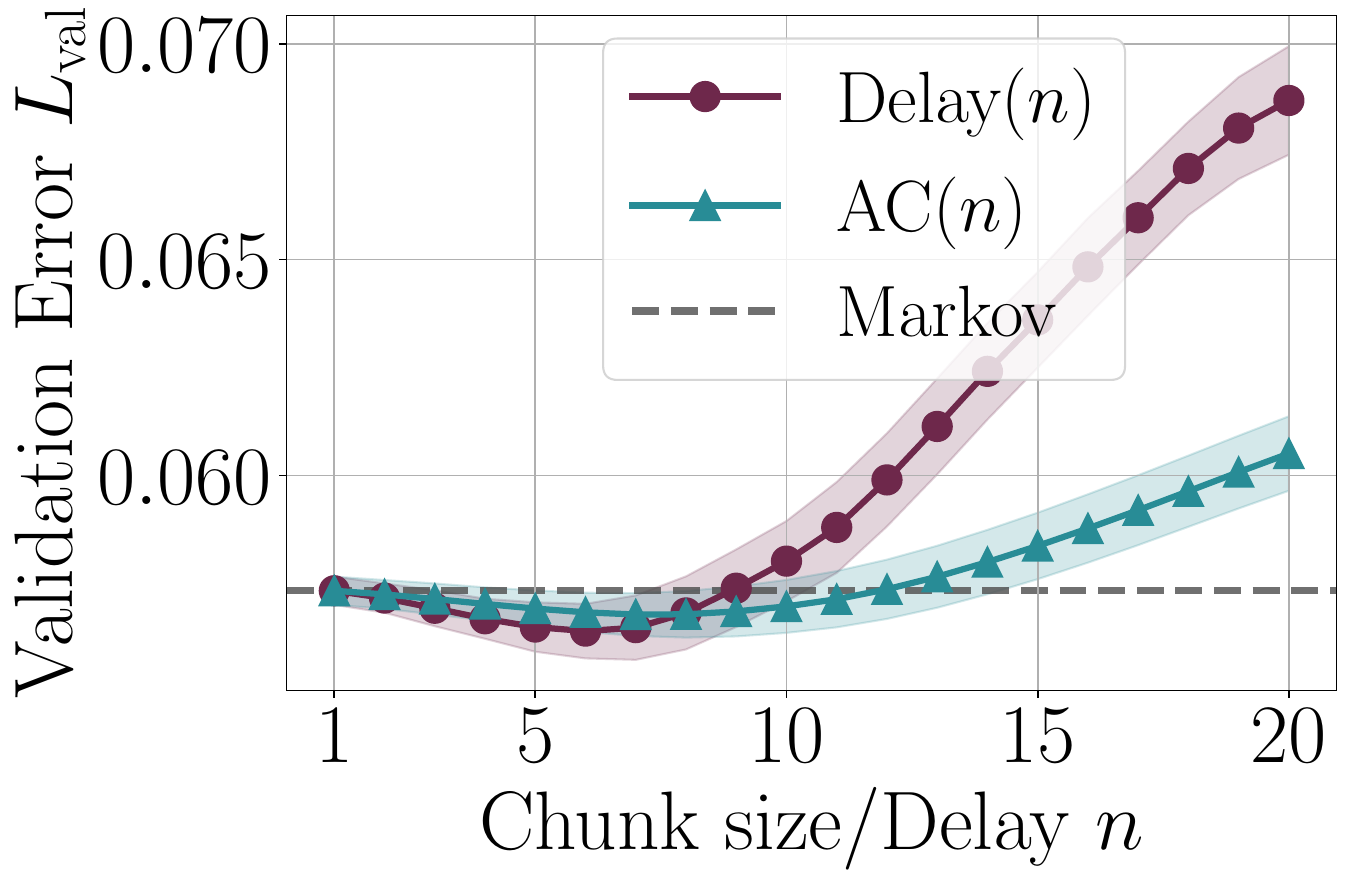}
    \end{minipage}
    \hfill
        \begin{minipage}[t]{0.23\textwidth}
        \centering
        \includegraphics[width=\linewidth]{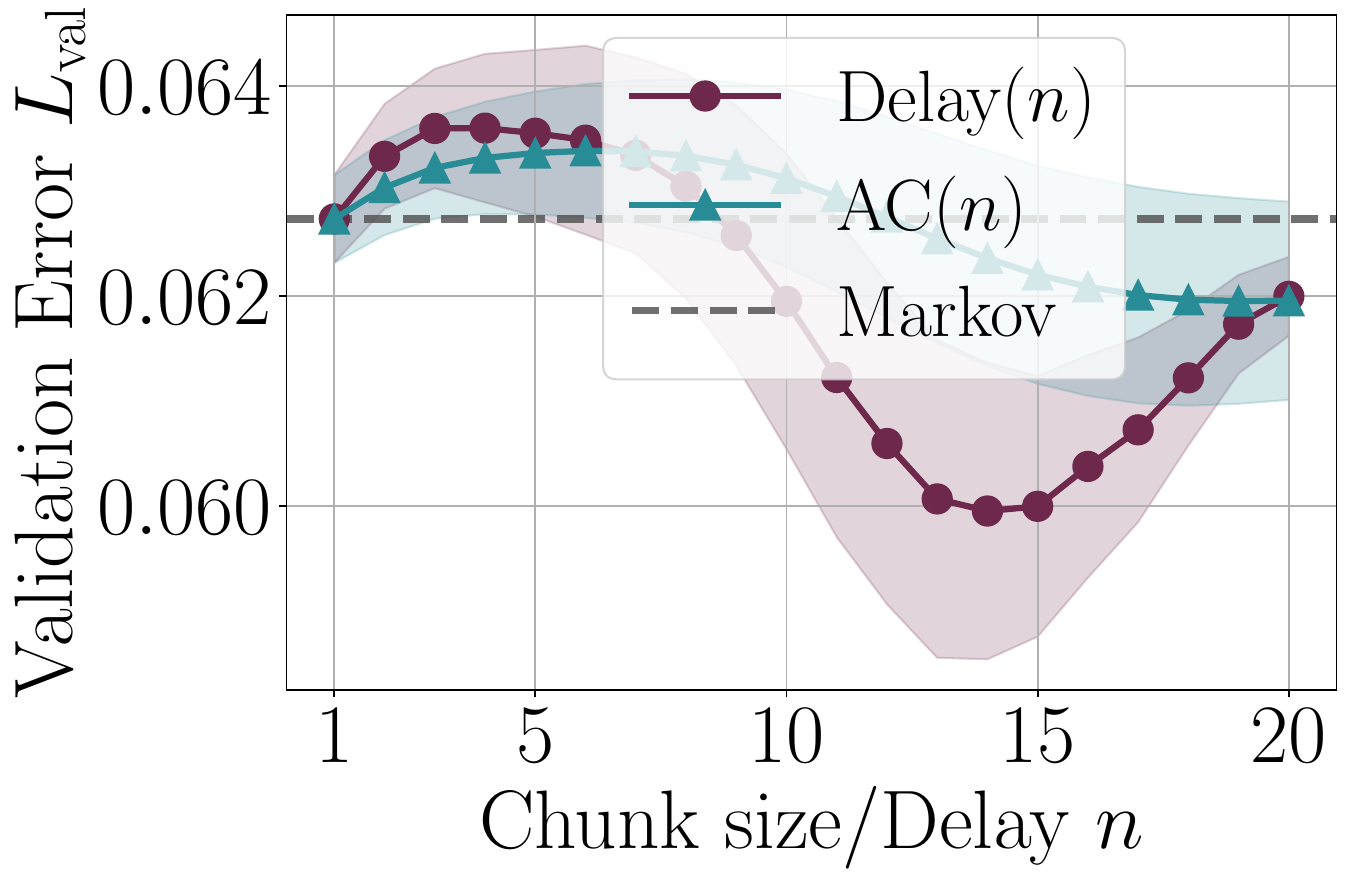}
    \end{minipage}
        \begin{minipage}[t]{0.23\textwidth}
            \includegraphics[width=\linewidth]{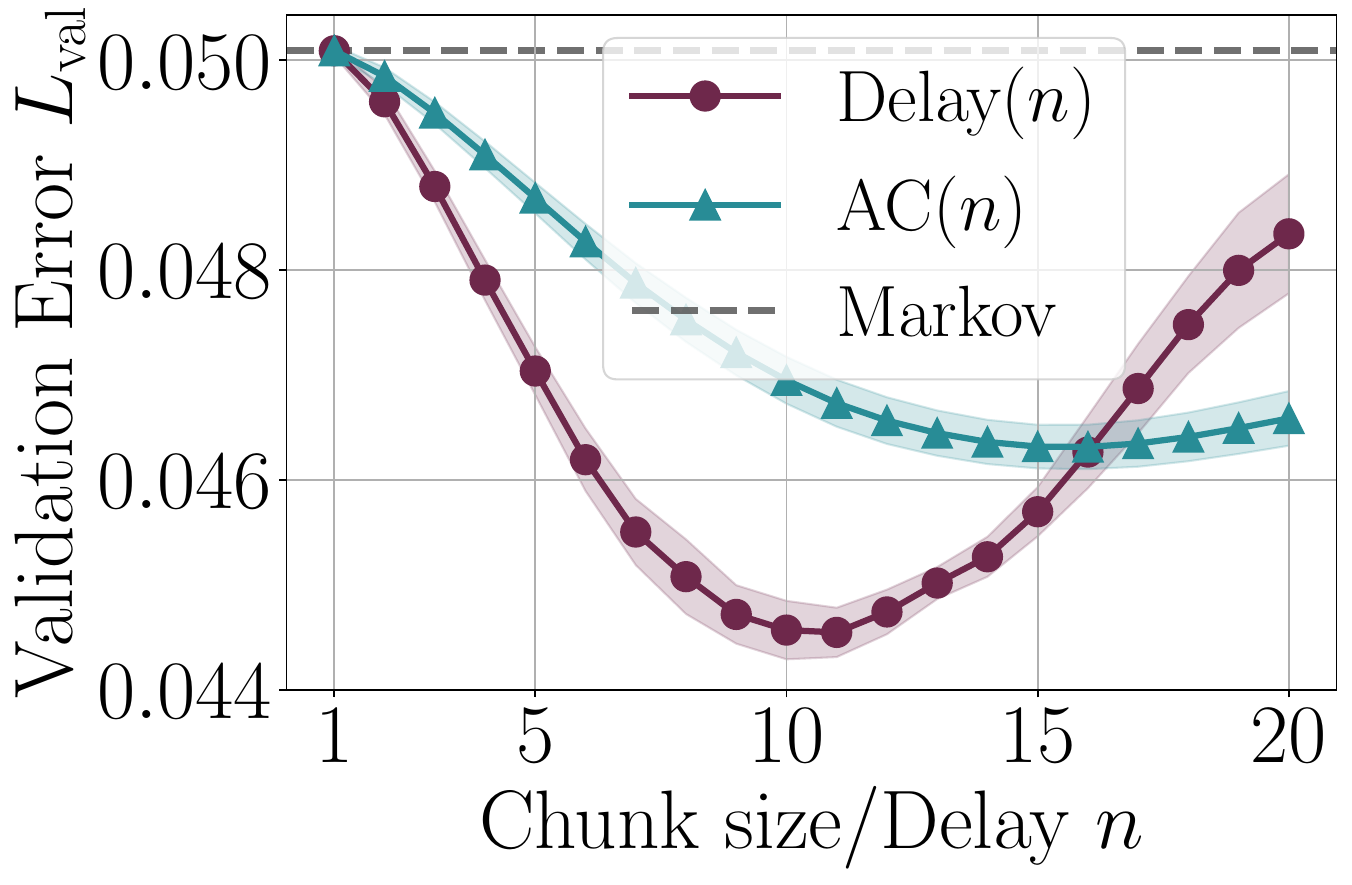}
    \end{minipage}
    \hfill
        \begin{minipage}[t]{0.23\textwidth}
        \centering
        \includegraphics[width=\linewidth]{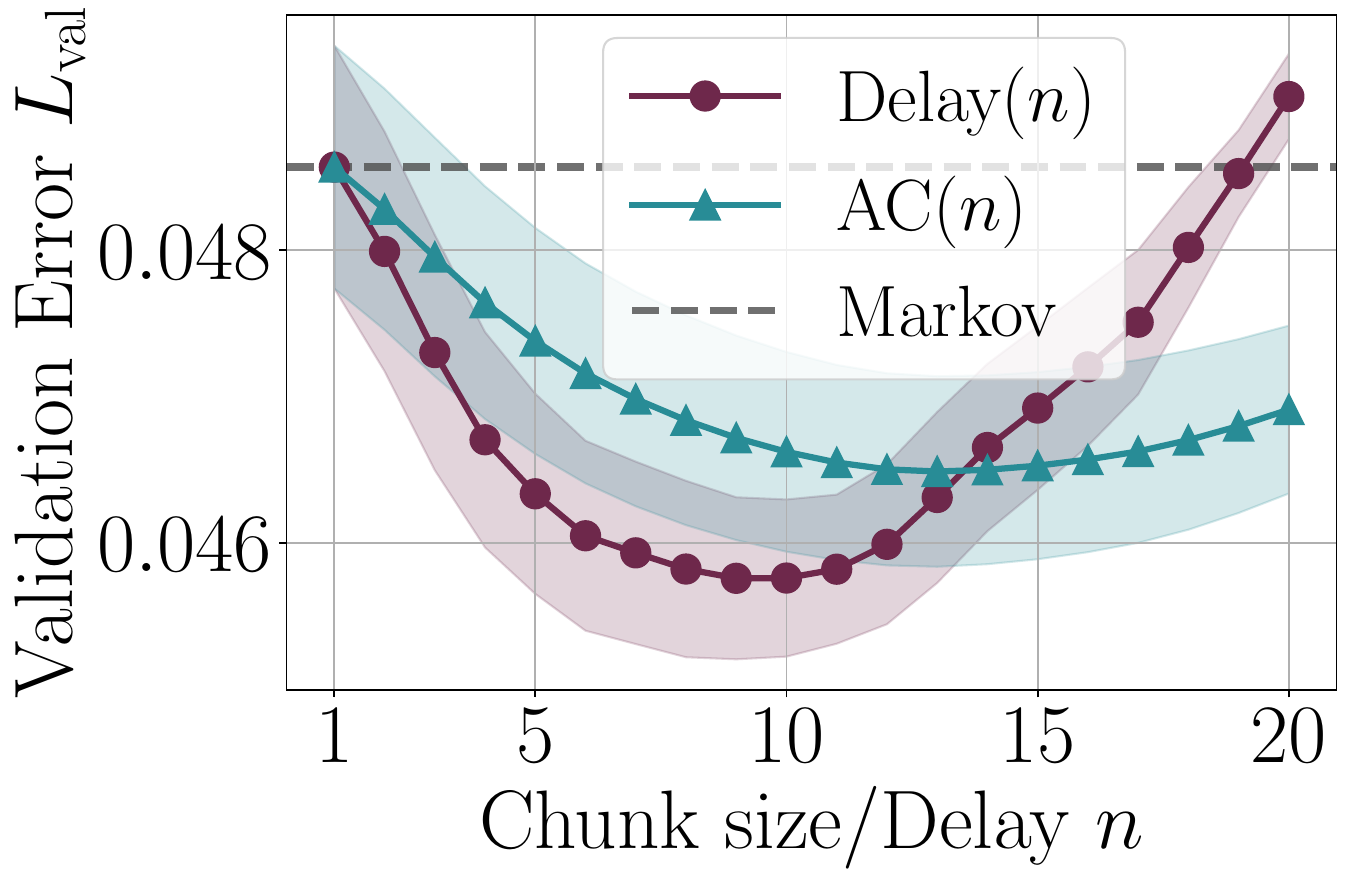}
    \end{minipage}
    \hfill
        \begin{minipage}[t]{0.23\textwidth}
        \centering
        \includegraphics[width=\linewidth]{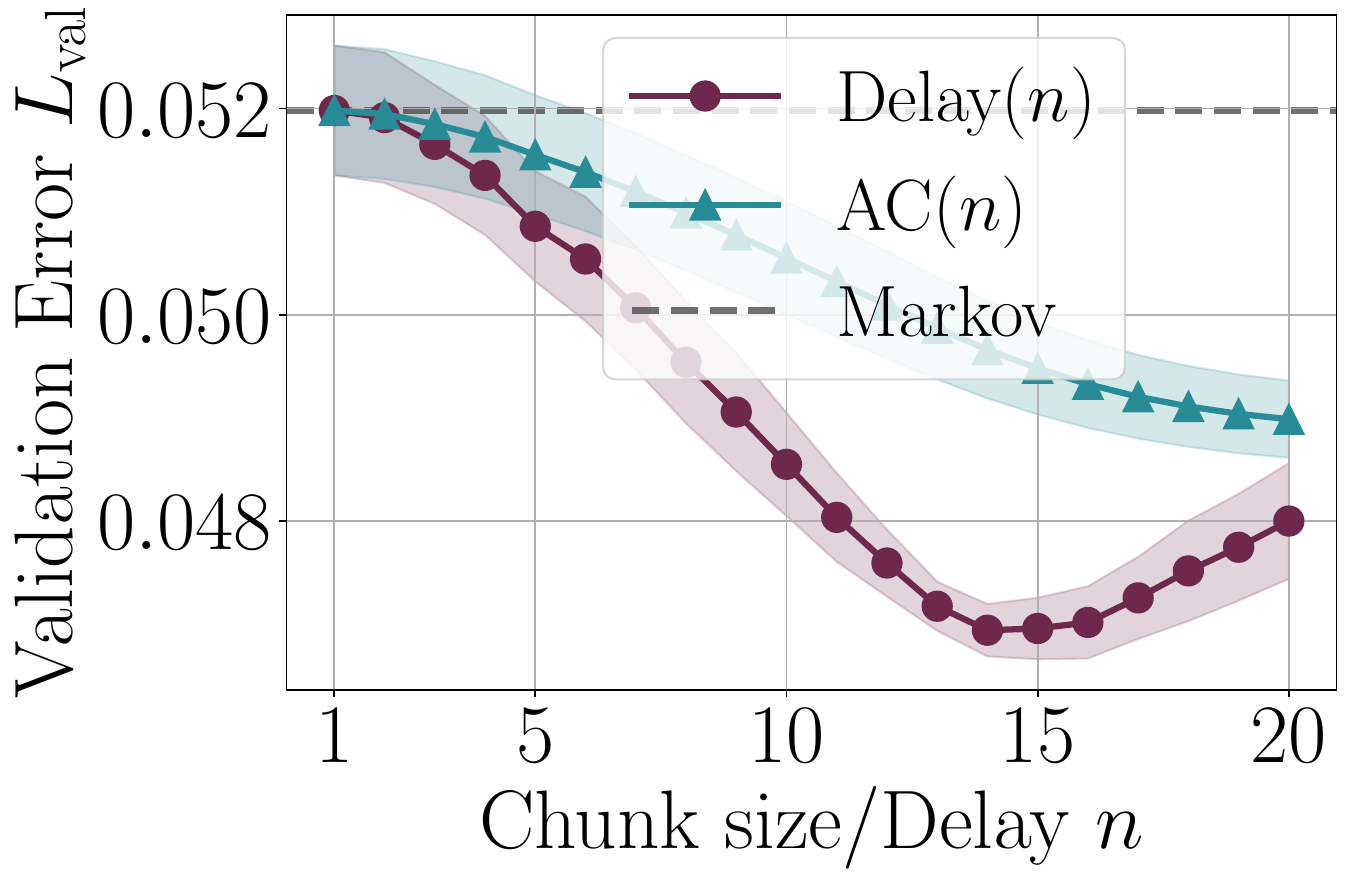}
    \end{minipage}
    \hfill
        \begin{minipage}[t]{0.23\textwidth}
        \centering
        \includegraphics[width=\linewidth]{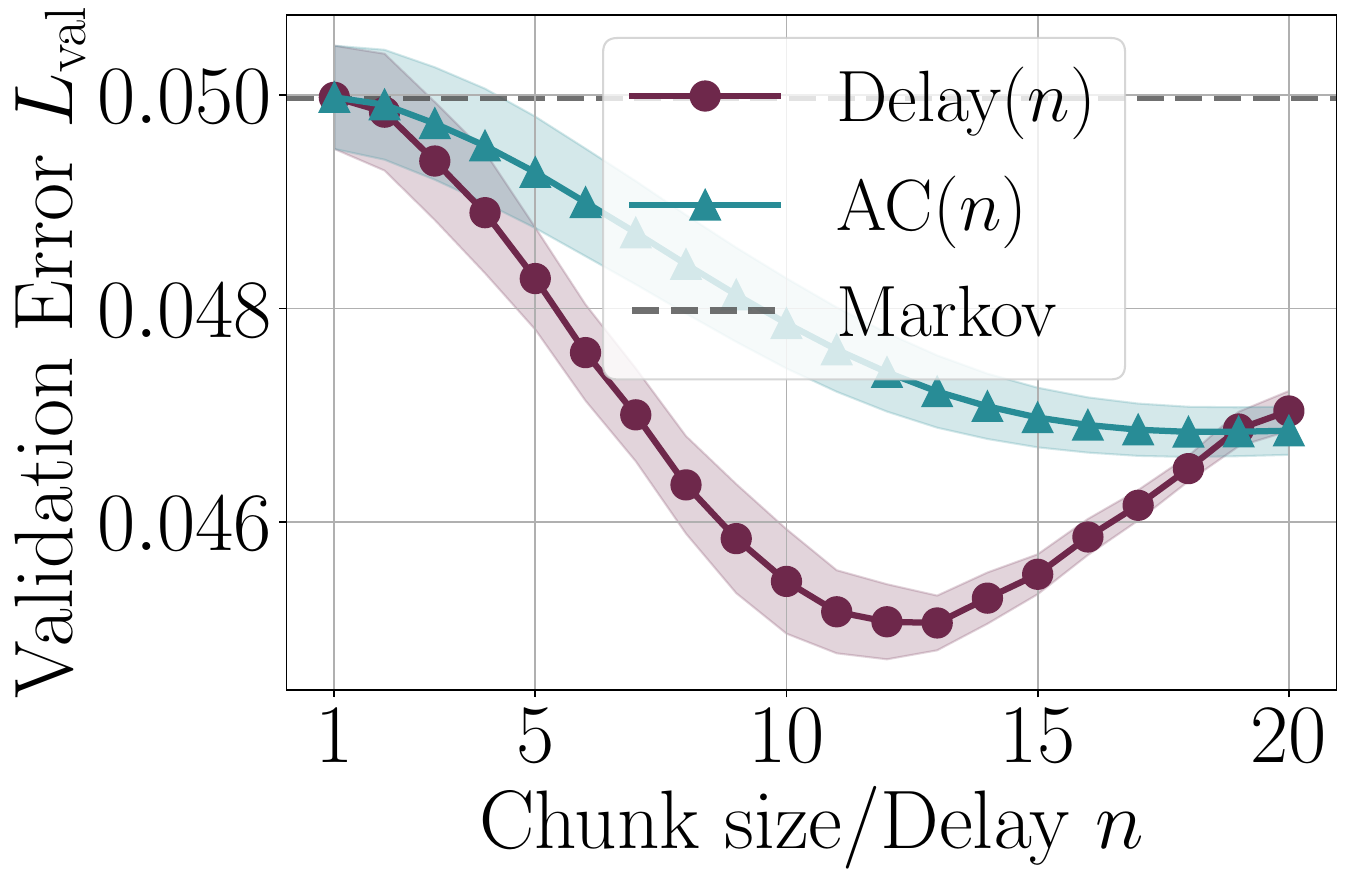}
    \end{minipage}
        \begin{minipage}[t]{0.23\textwidth}
            \includegraphics[width=\linewidth]{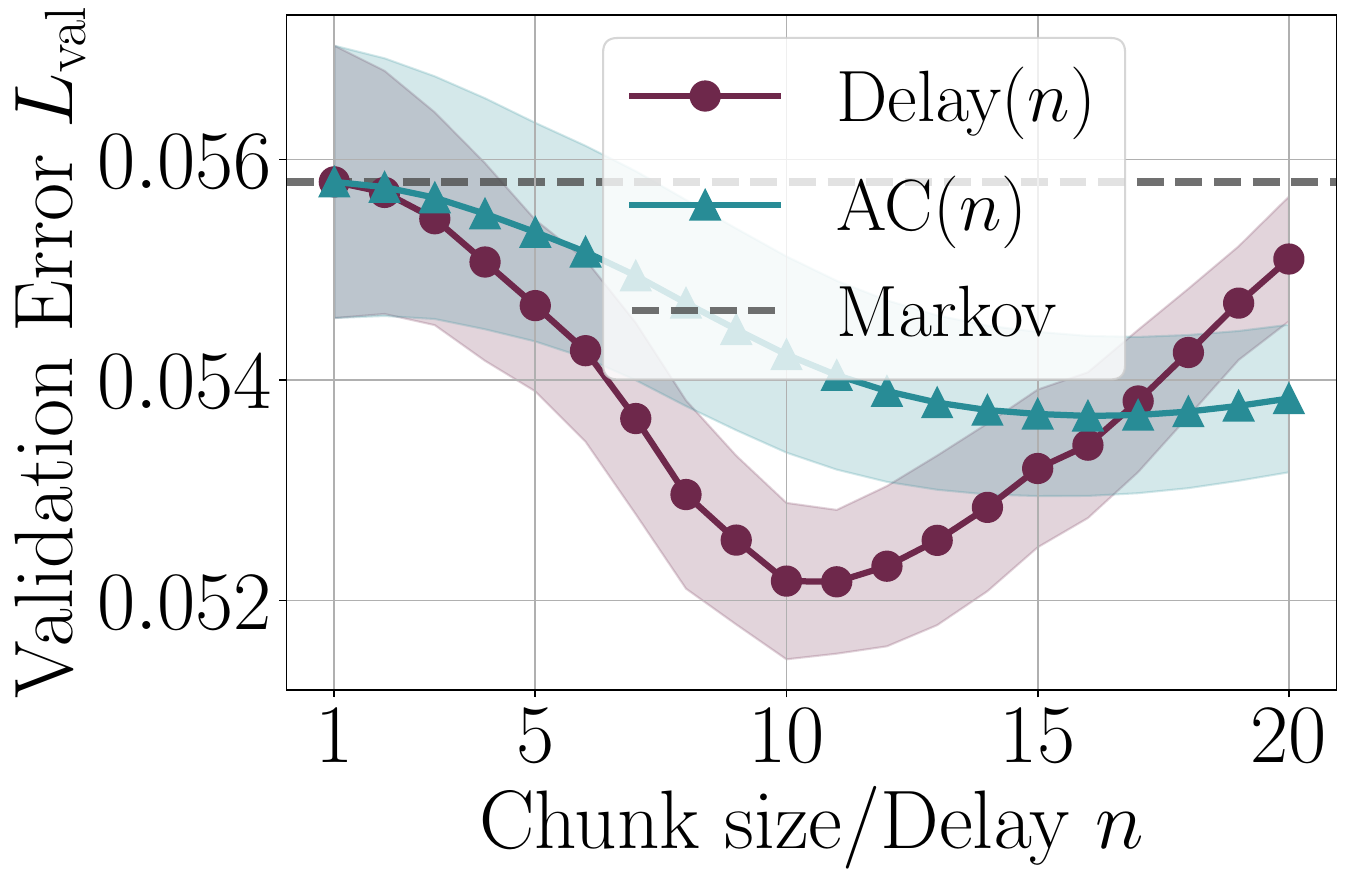}
    \end{minipage}
    \hfill
        \begin{minipage}[t]{0.23\textwidth}
        \centering
        \includegraphics[width=\linewidth]{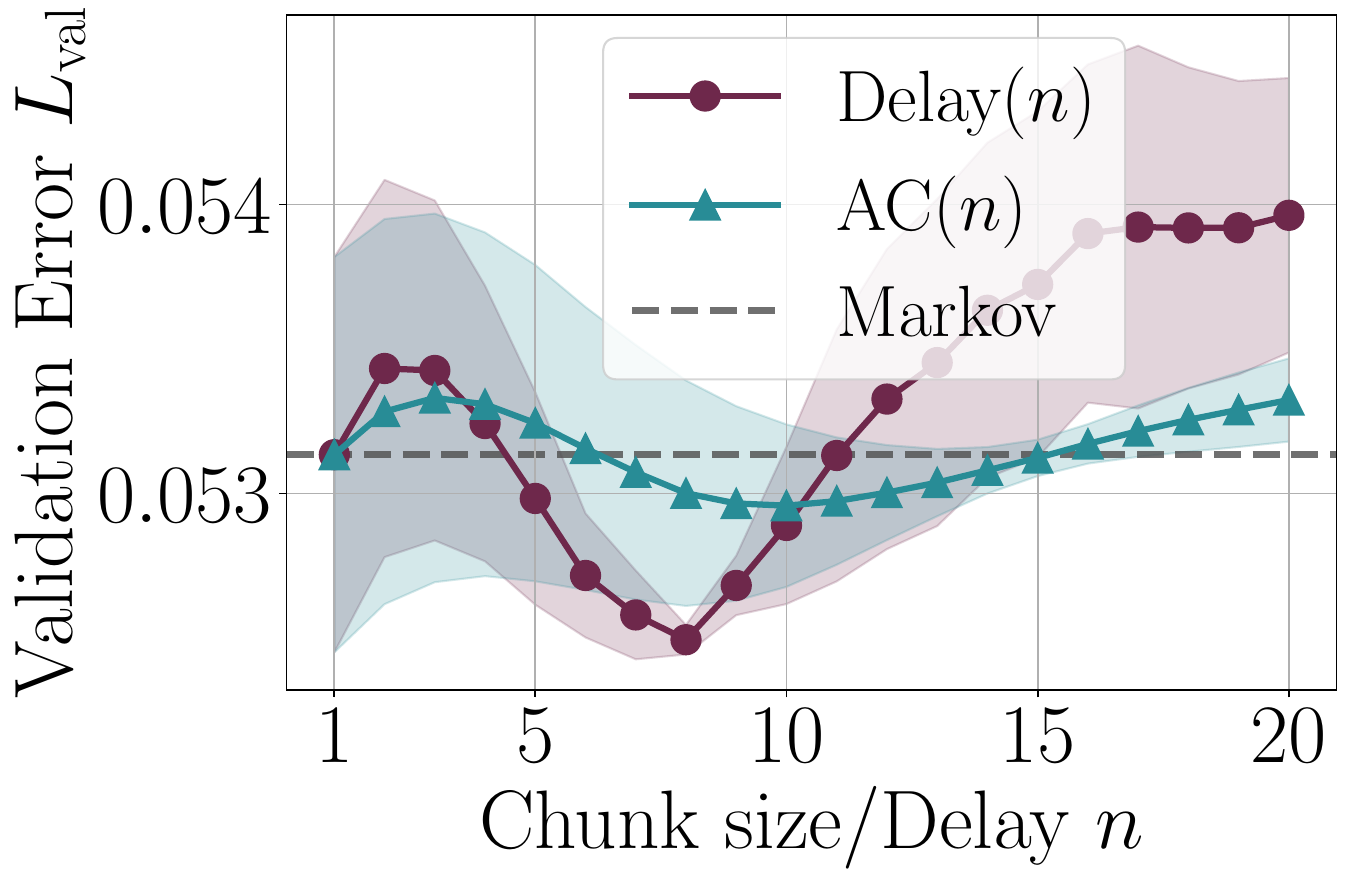}
    \end{minipage}

        \caption{Validation loss for each \texttt{Libero} task from 68 to 89 (corresponding to Fig. \ref{fig:val_loss_libero}), part 3.}
    \label{fig:val loss each libero3}
\end{figure*}

\begin{figure*}[t]
    \centering

    \begin{minipage}[t]{0.23\textwidth}
        \centering
        \includegraphics[width=\linewidth]{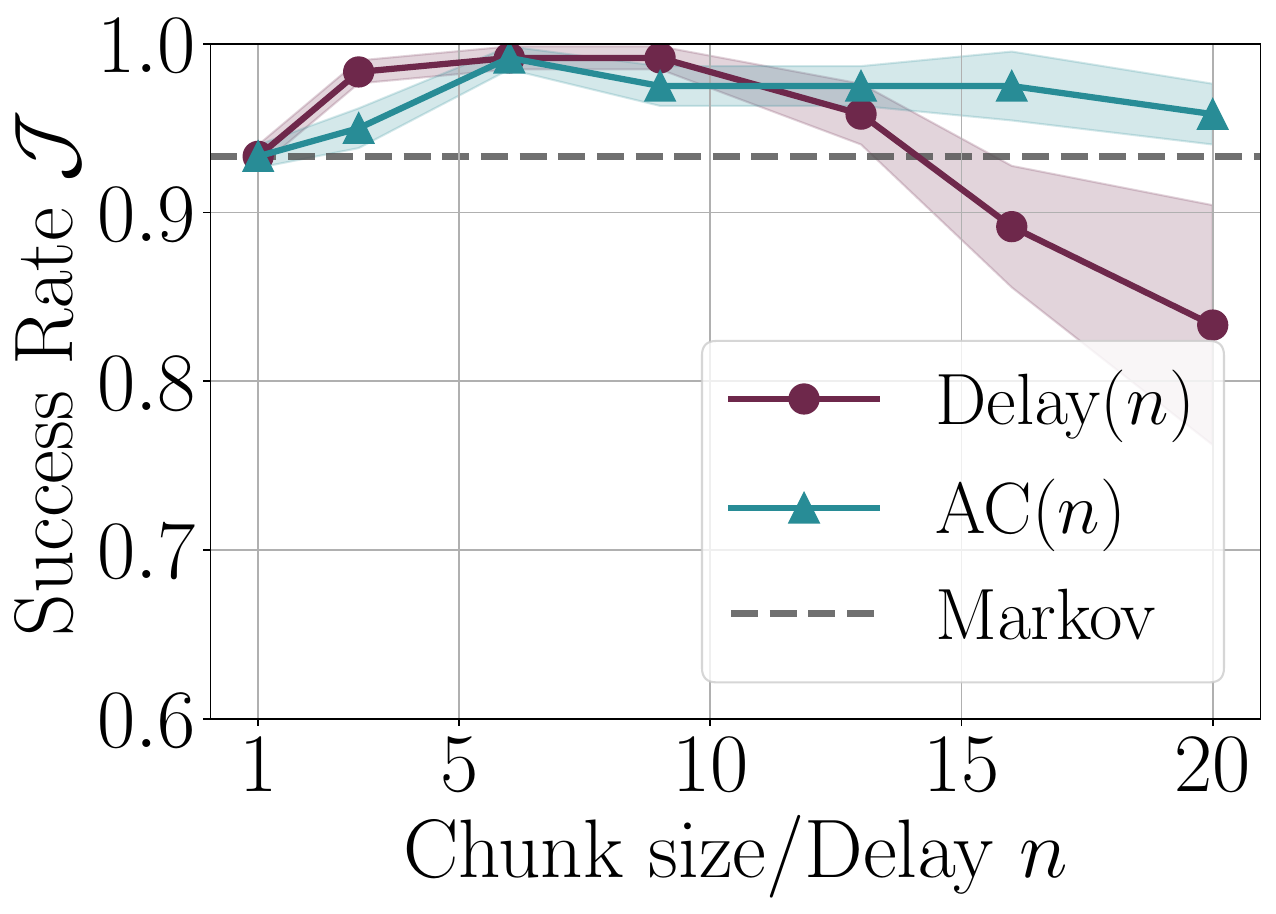}
    \end{minipage}
    \hfill
        \begin{minipage}[t]{0.23\textwidth}
        \centering
        \includegraphics[width=\linewidth]{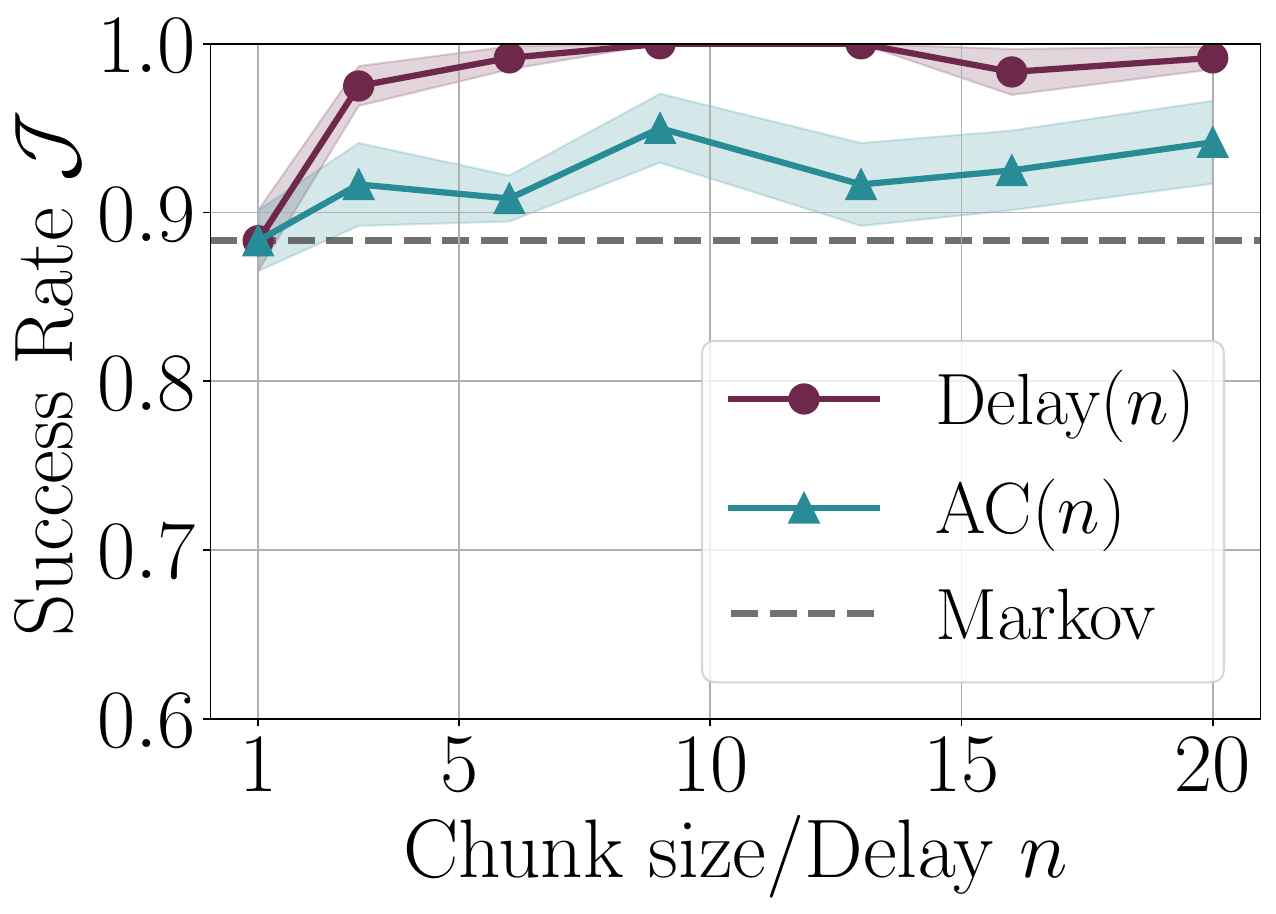}
    \end{minipage}
    \hfill
        \begin{minipage}[t]{0.23\textwidth}
        \centering
        \includegraphics[width=\linewidth]{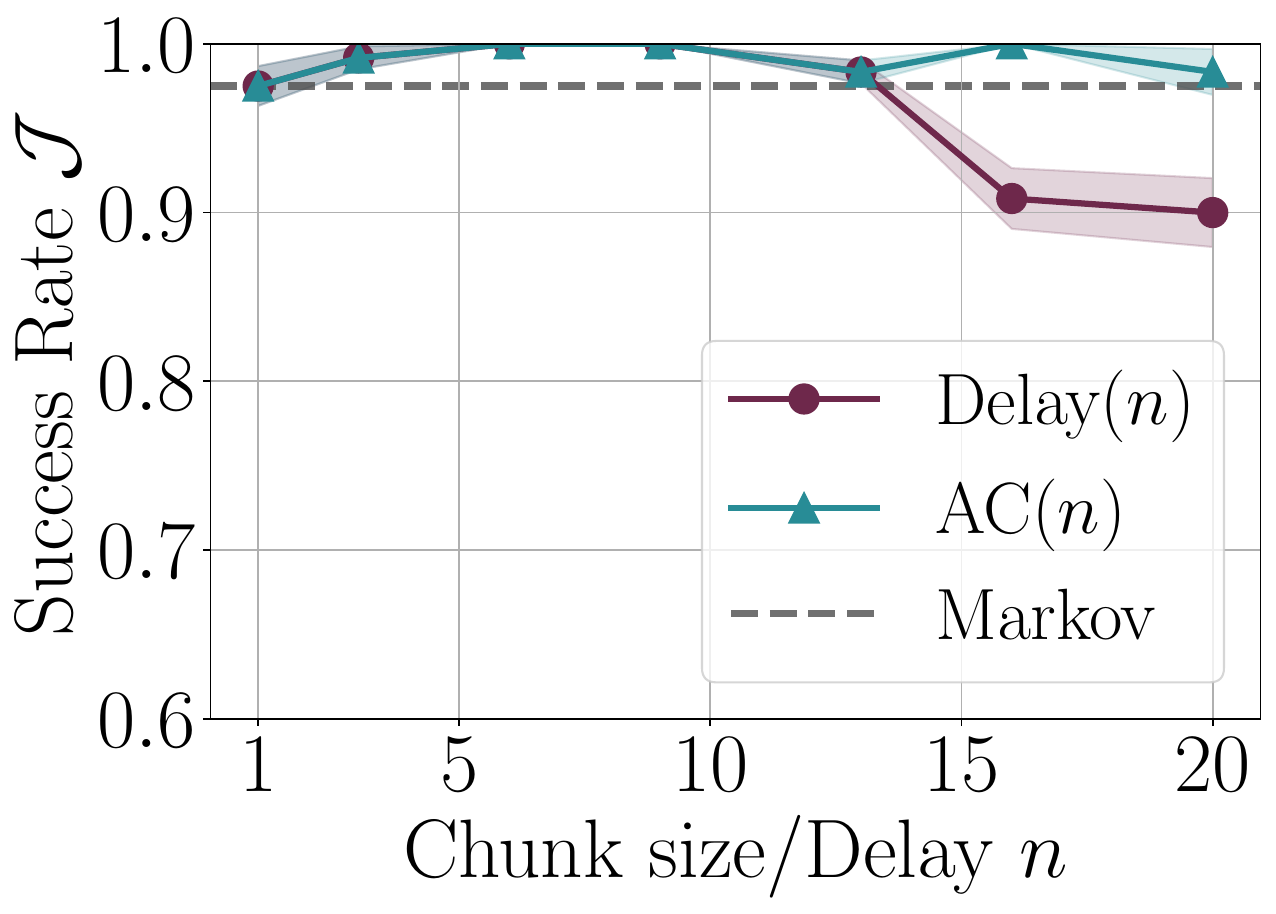}
    \end{minipage}
    \hfill
        \begin{minipage}[t]{0.23\textwidth}
        \centering
        \includegraphics[width=\linewidth]{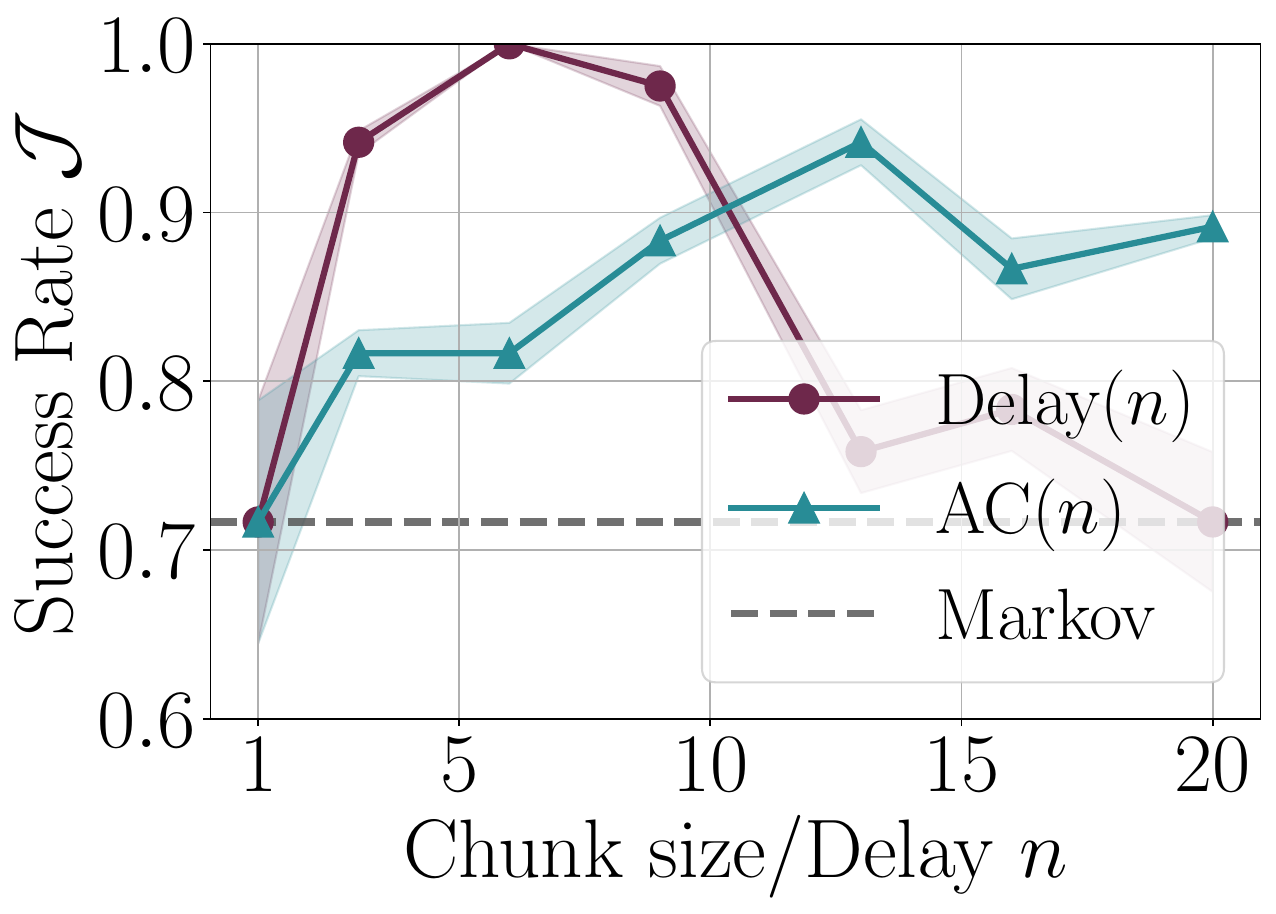}
    \end{minipage}
        \begin{minipage}[t]{0.23\textwidth}
            \includegraphics[width=\linewidth]{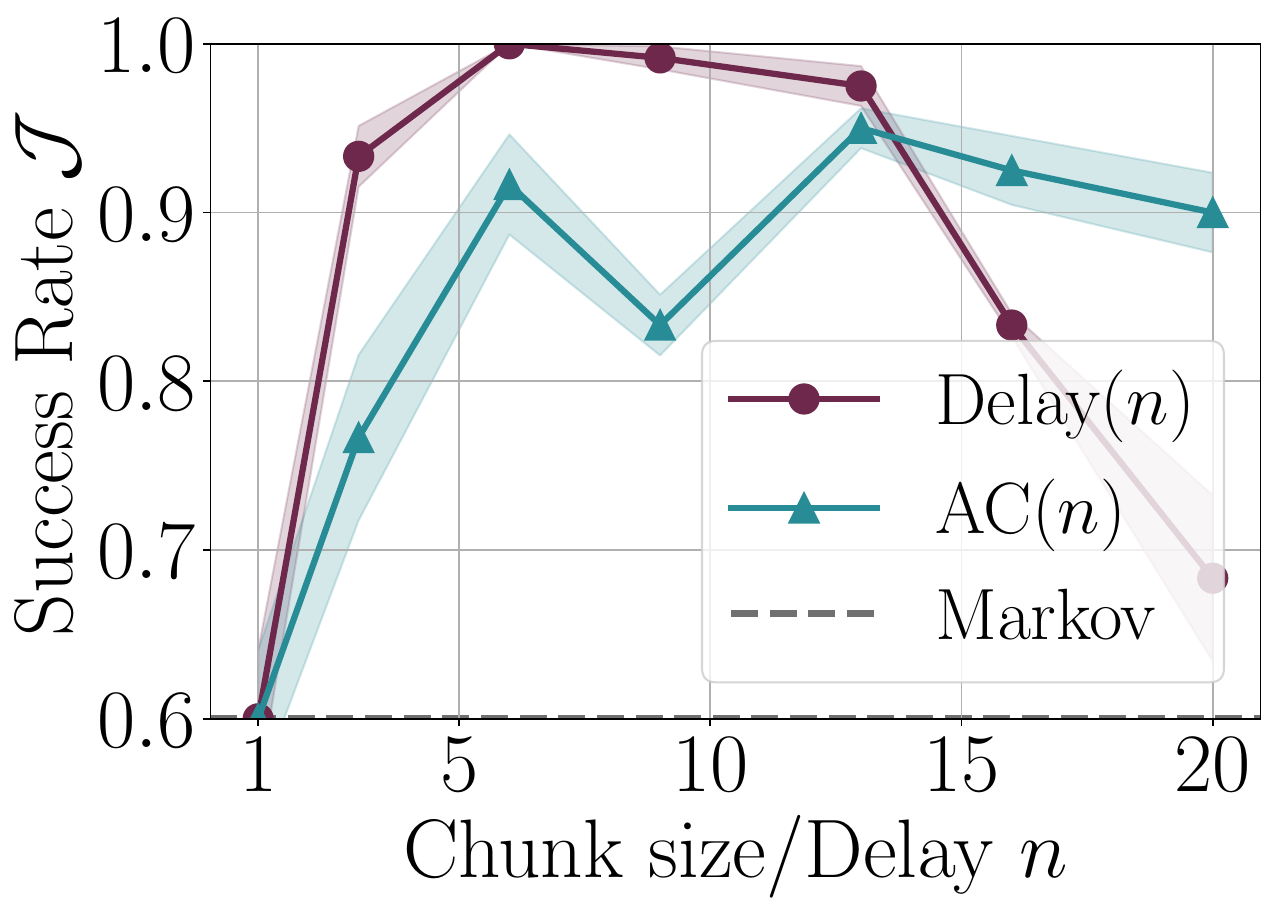}
    \end{minipage}
    \hfill
        \begin{minipage}[t]{0.23\textwidth}
        \centering
        \includegraphics[width=\linewidth]{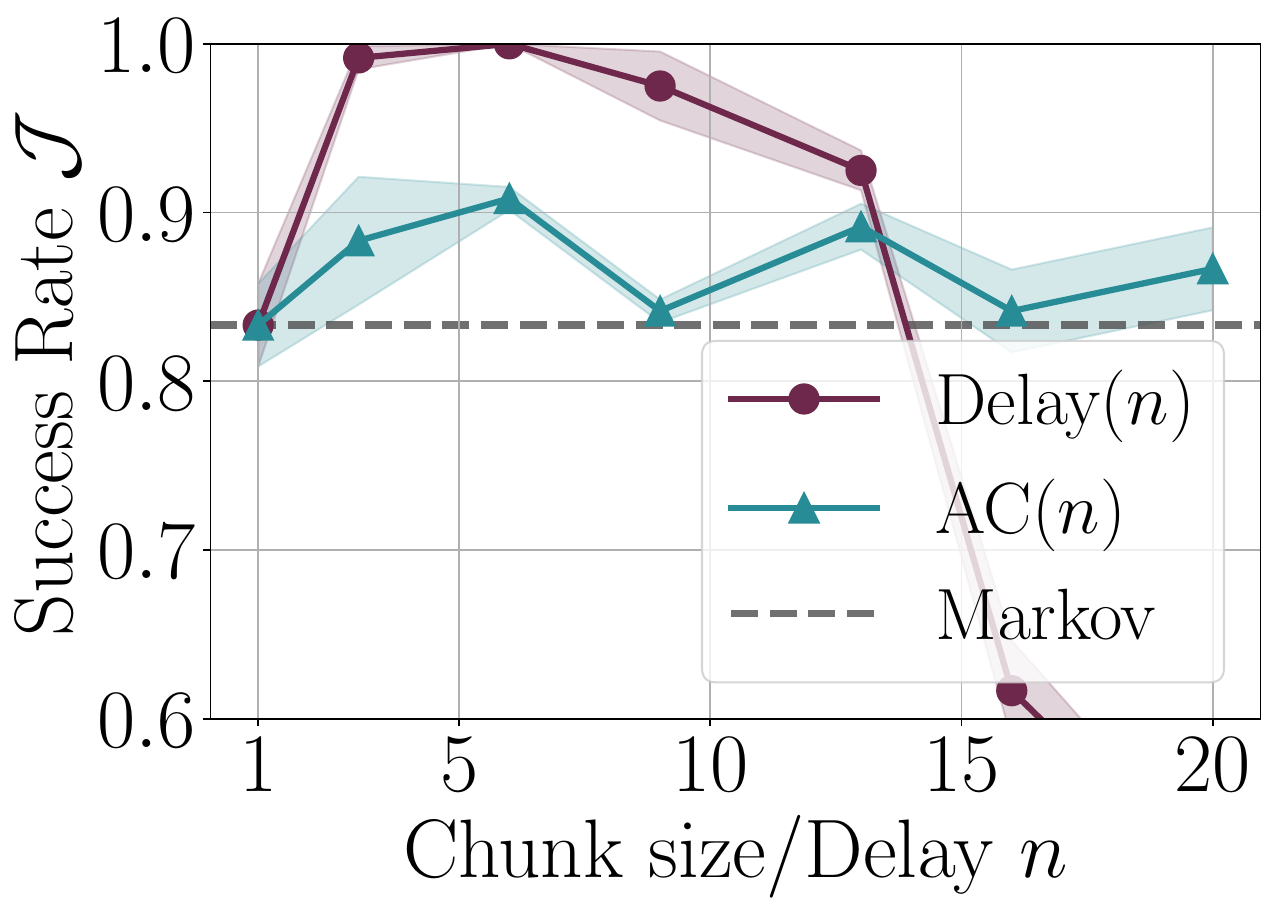}
    \end{minipage}
    \hfill
        \begin{minipage}[t]{0.23\textwidth}
        \centering
        \includegraphics[width=\linewidth]{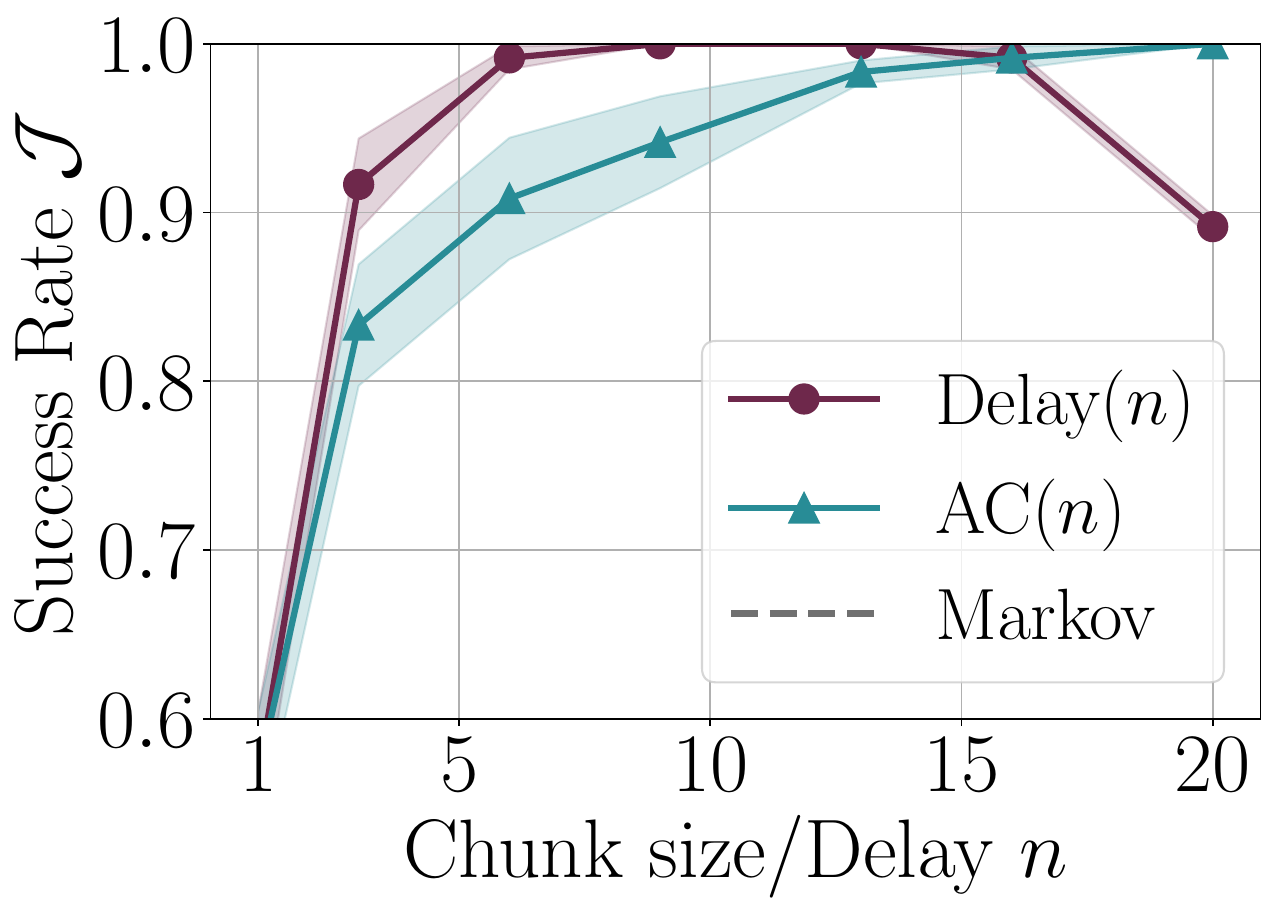}
    \end{minipage}
    \hfill
        \begin{minipage}[t]{0.23\textwidth}
        \centering
        \includegraphics[width=\linewidth]{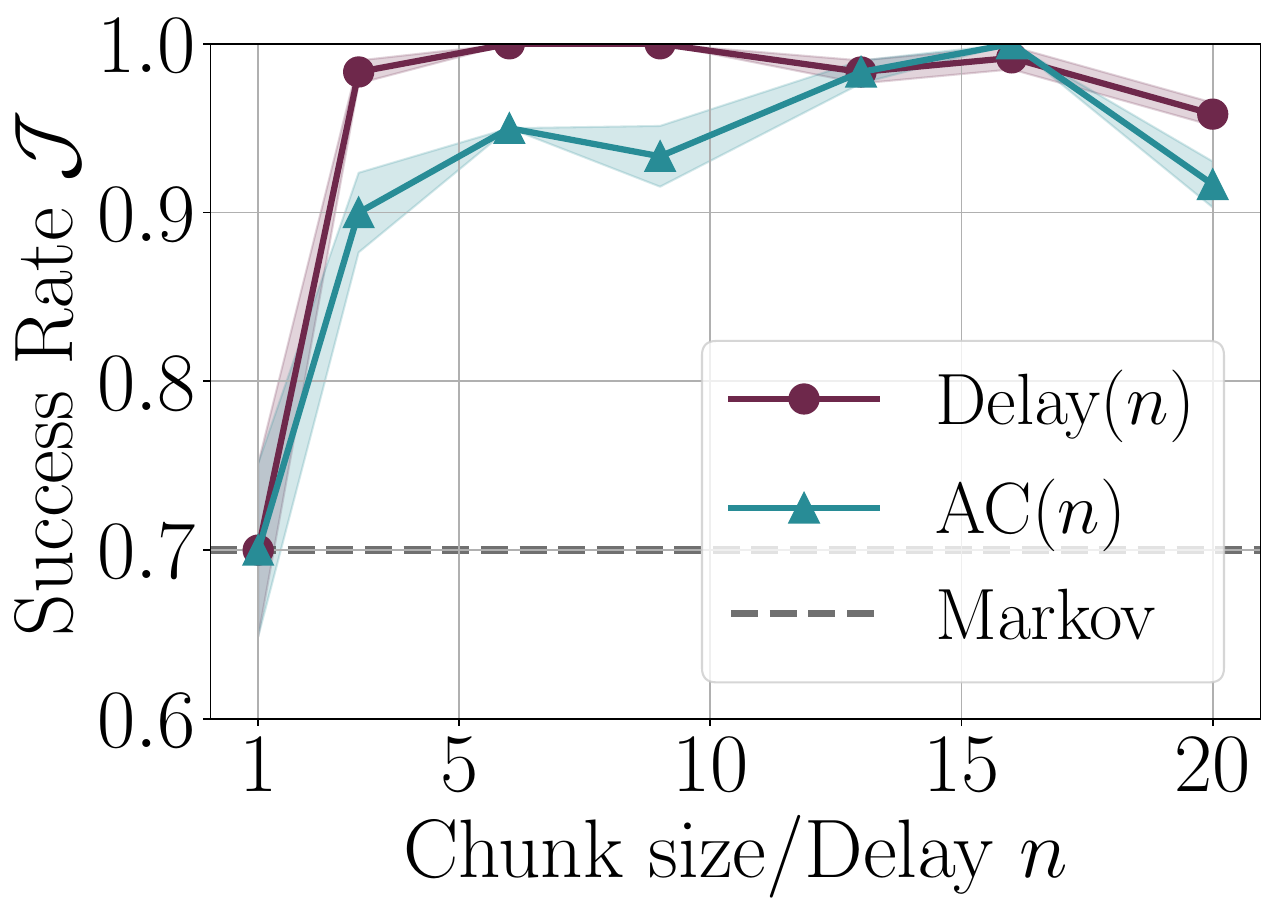}
    \end{minipage}
        \begin{minipage}[t]{0.23\textwidth}
            \includegraphics[width=\linewidth]{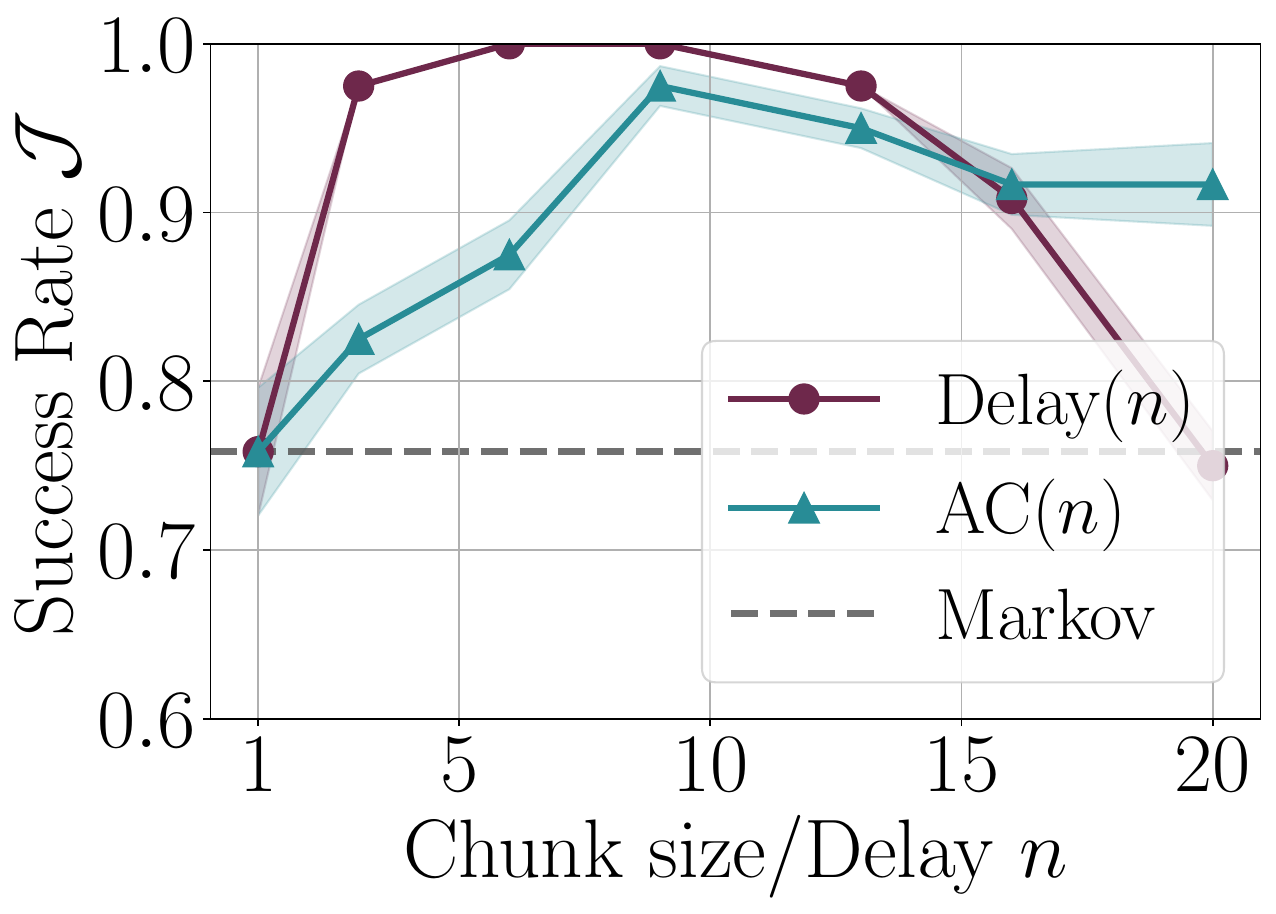}
    \end{minipage}
    \hfill
        \begin{minipage}[t]{0.23\textwidth}
        \centering
        \includegraphics[width=\linewidth]{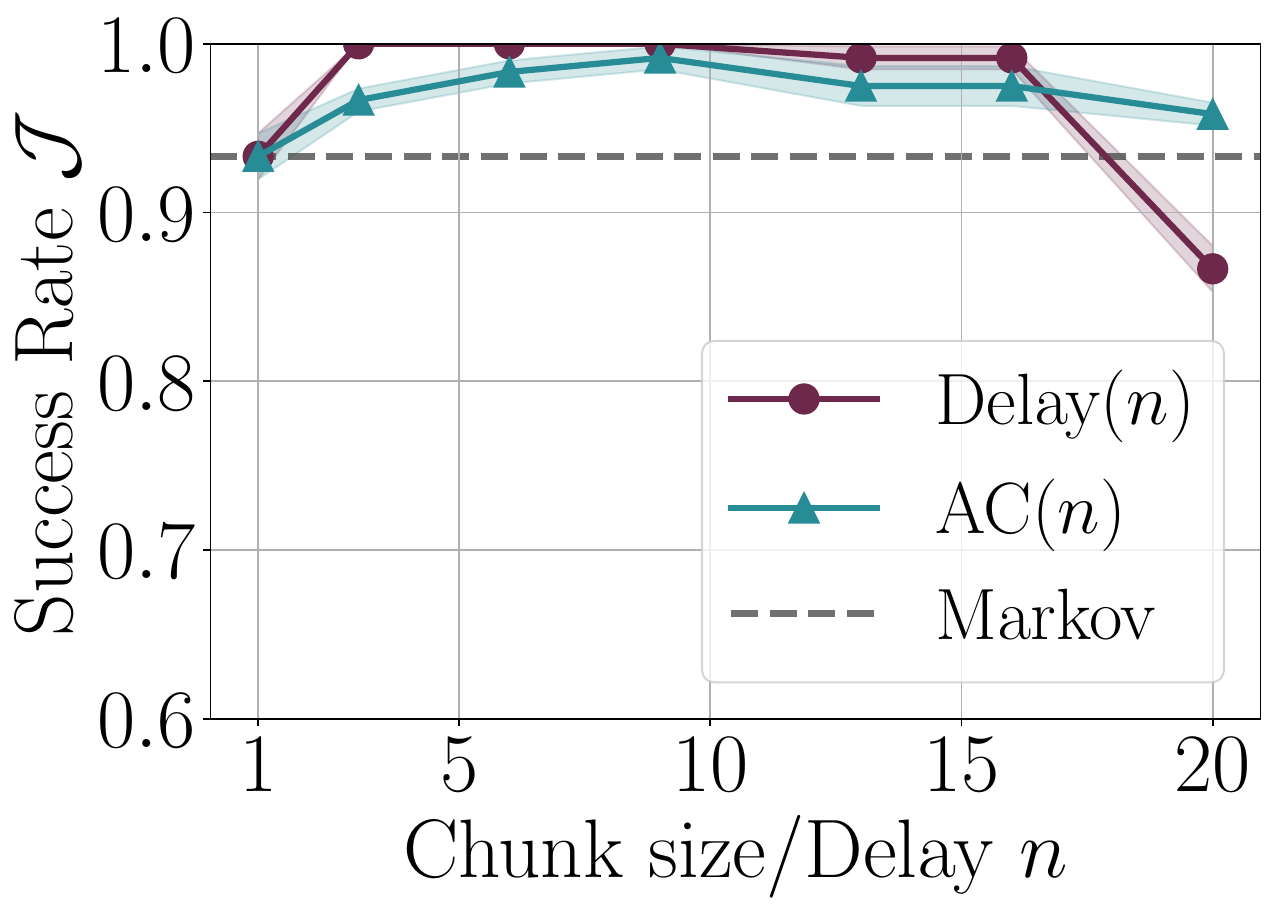}
    \end{minipage}
    \hfill
        \begin{minipage}[t]{0.23\textwidth}
        \centering
        \includegraphics[width=\linewidth]{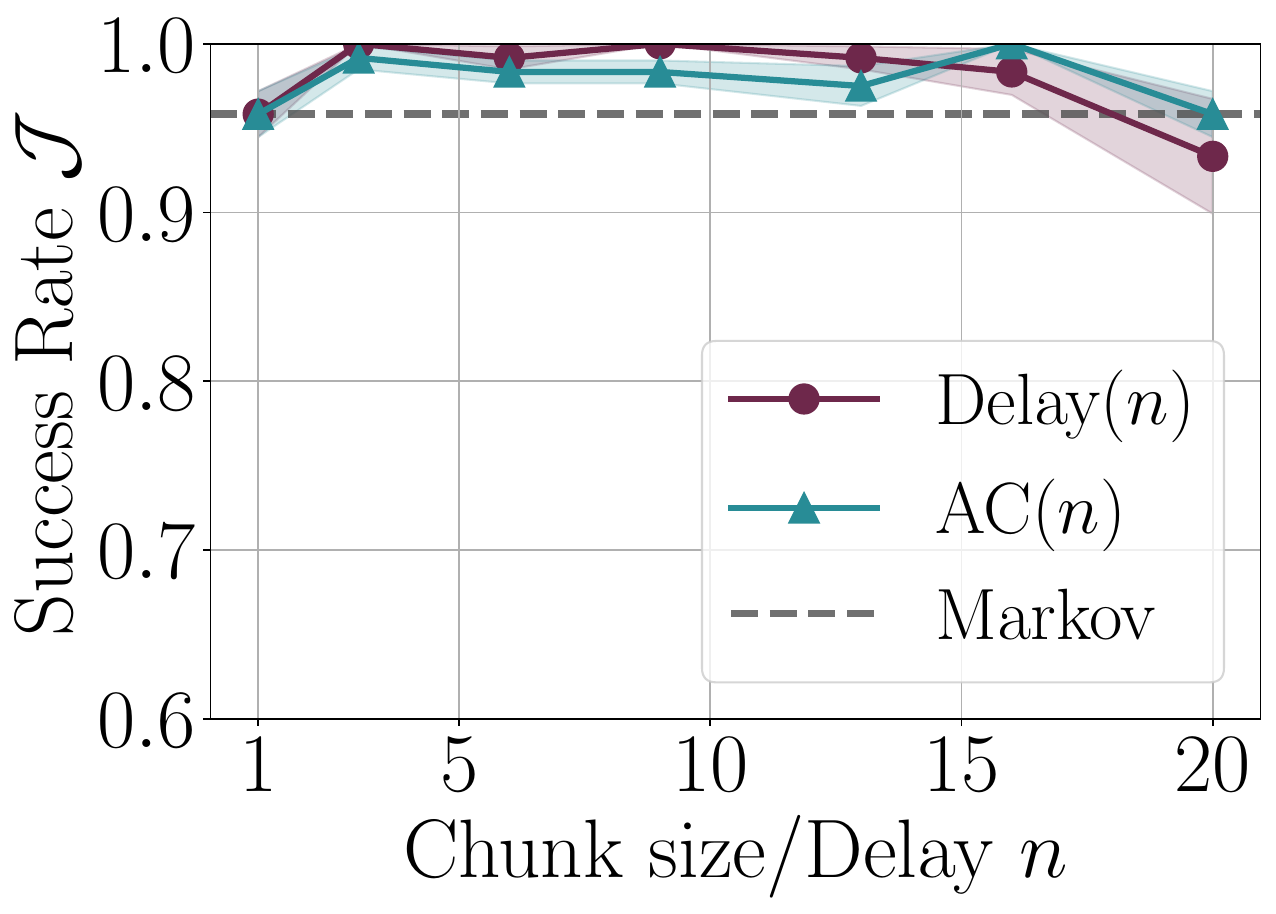}
    \end{minipage}
    \hfill
        \begin{minipage}[t]{0.23\textwidth}
        \centering
        \includegraphics[width=\linewidth]{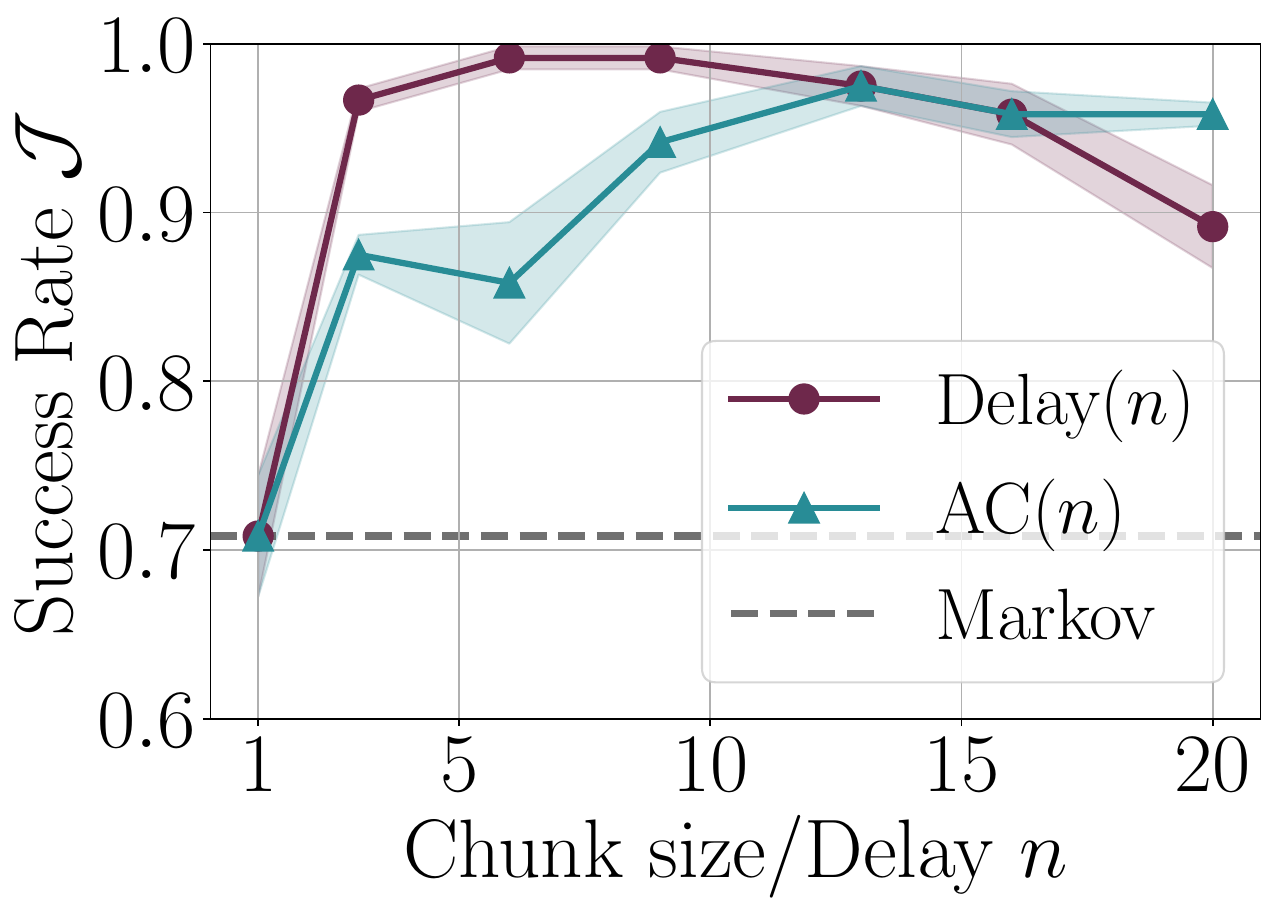}
    \end{minipage}
        \begin{minipage}[t]{0.23\textwidth}
            \includegraphics[width=\linewidth]{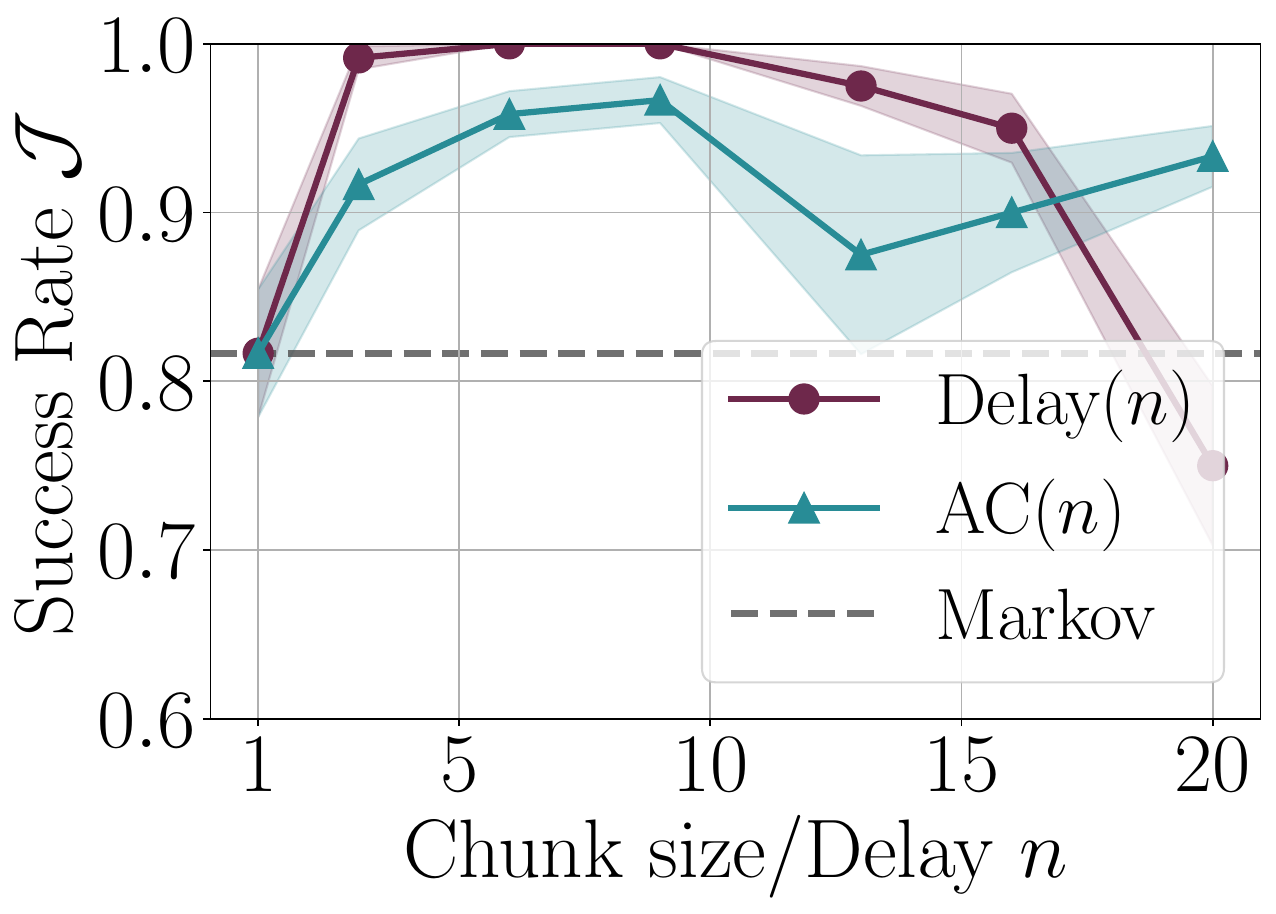}
    \end{minipage}
    \hfill
        \begin{minipage}[t]{0.23\textwidth}
        \centering
        \includegraphics[width=\linewidth]{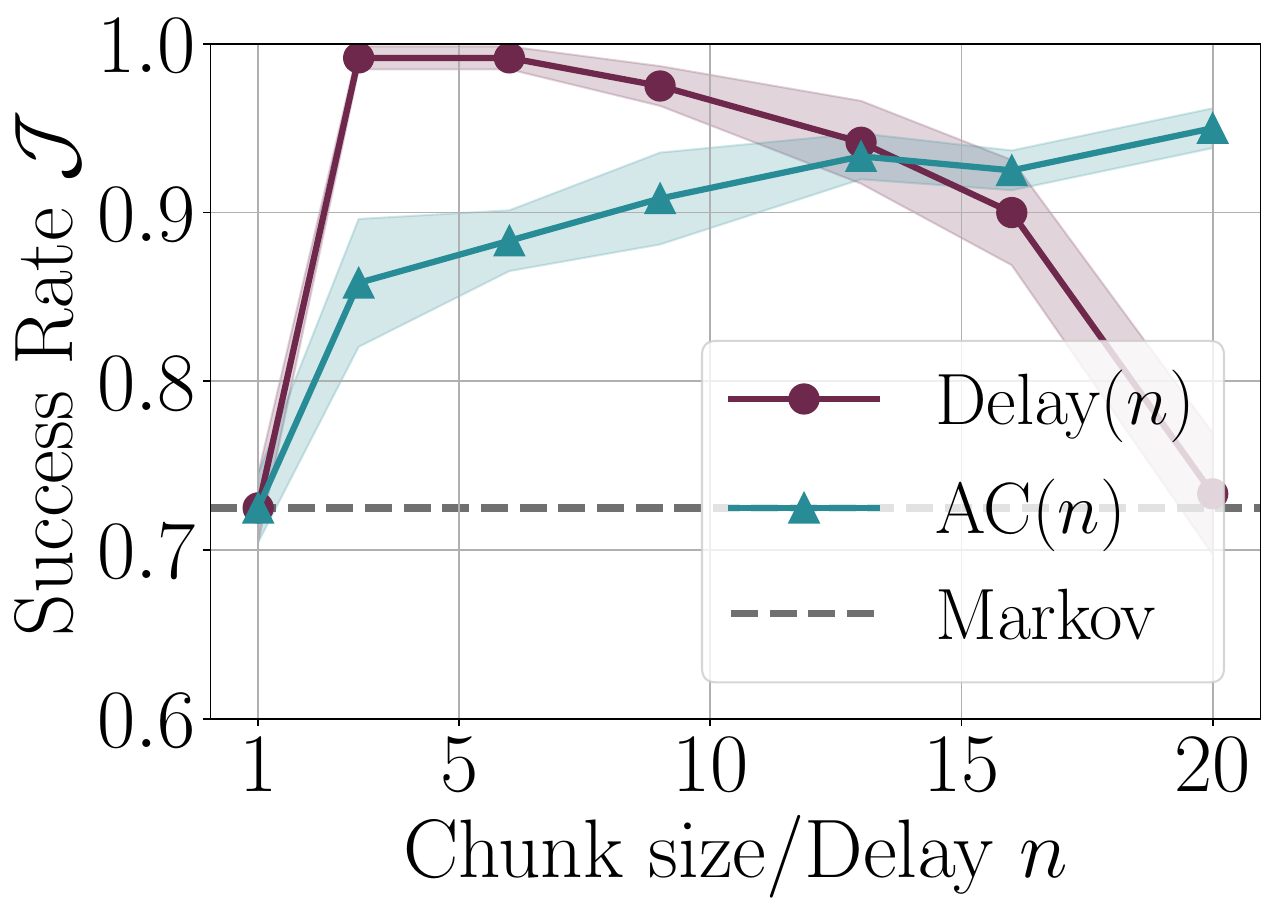}
    \end{minipage}
    \hfill
        \begin{minipage}[t]{0.23\textwidth}
        \centering
        \includegraphics[width=\linewidth]{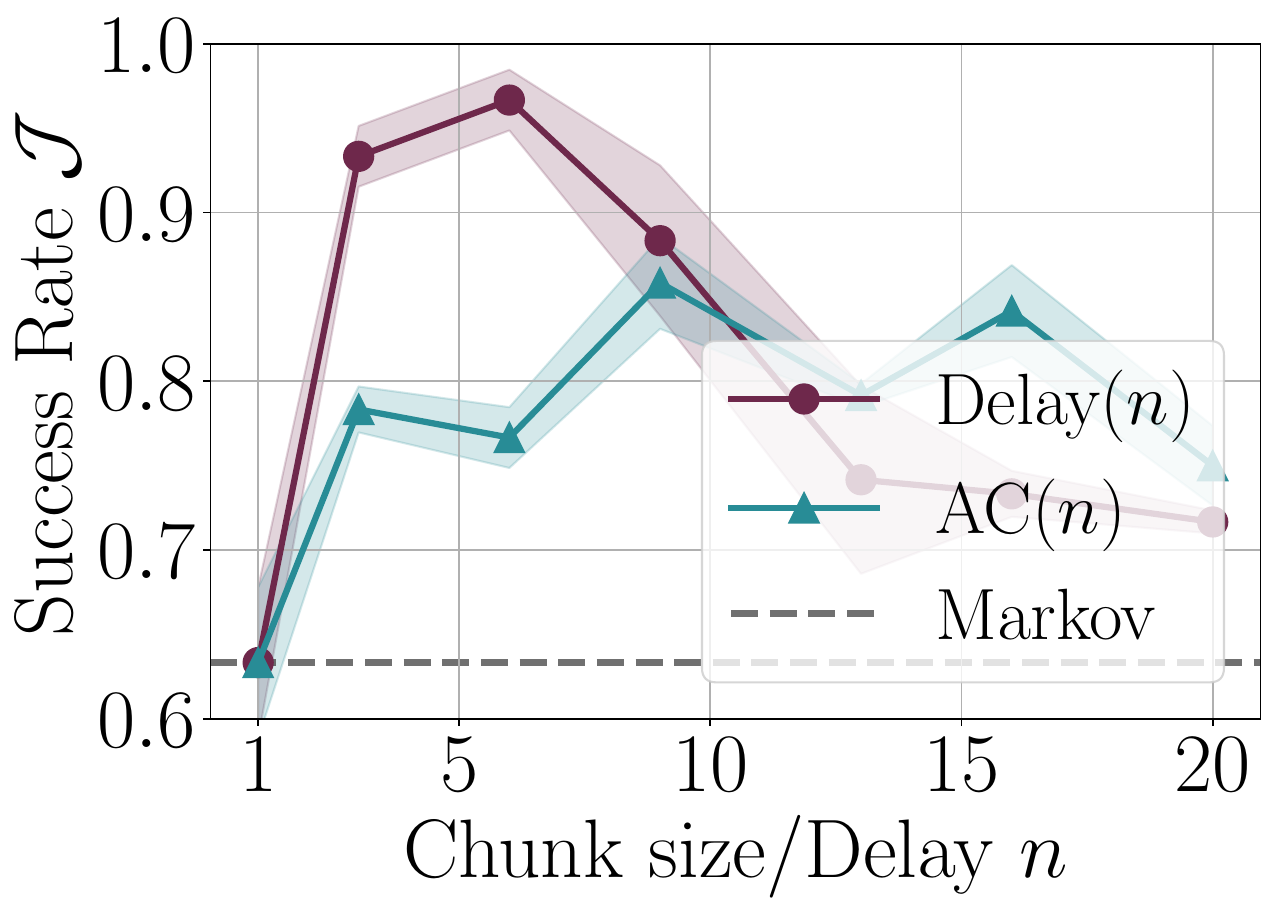}
    \end{minipage}
    \hfill
        \begin{minipage}[t]{0.23\textwidth}
        \centering
        \includegraphics[width=\linewidth]{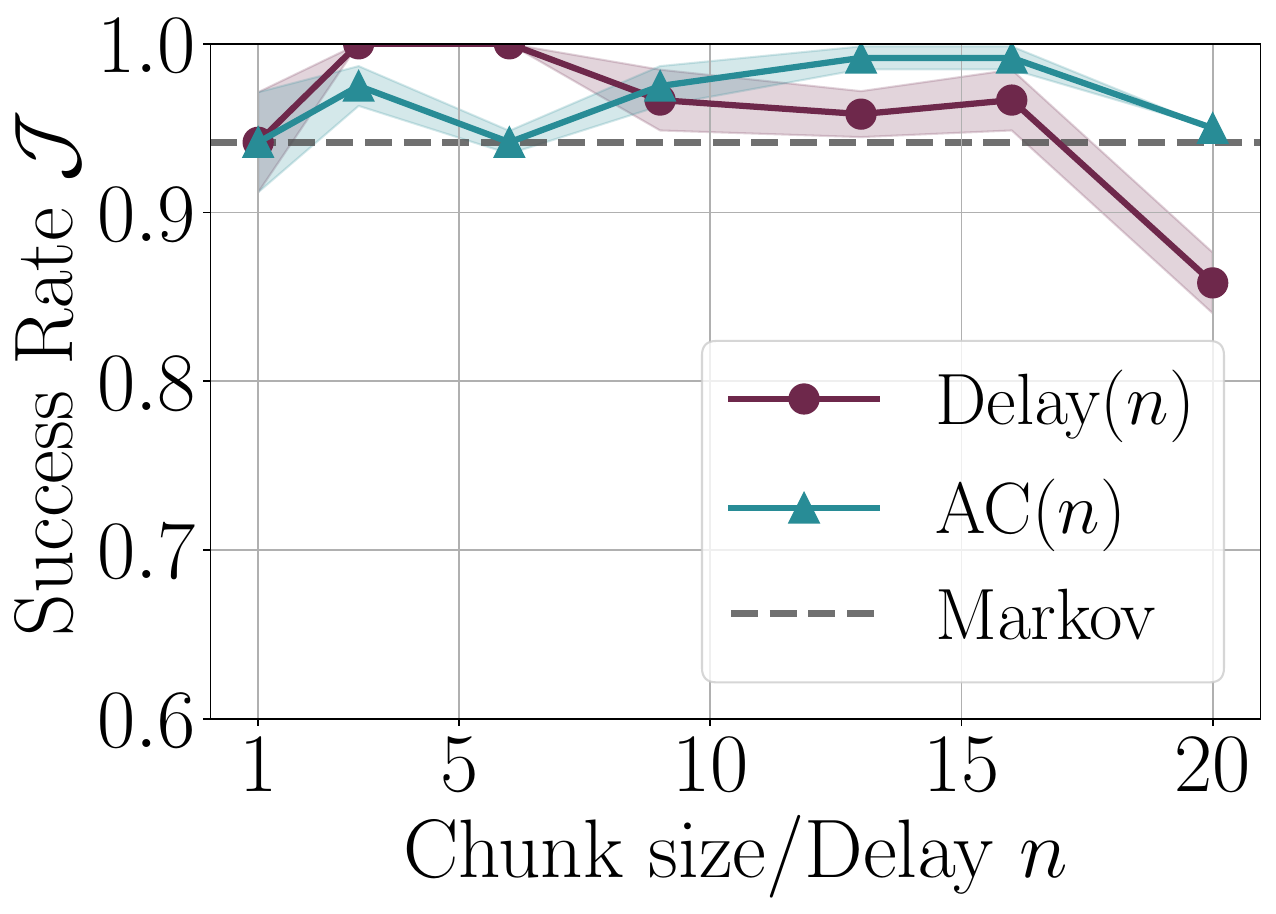}
    \end{minipage}
        \begin{minipage}[t]{0.23\textwidth}
            \includegraphics[width=\linewidth]{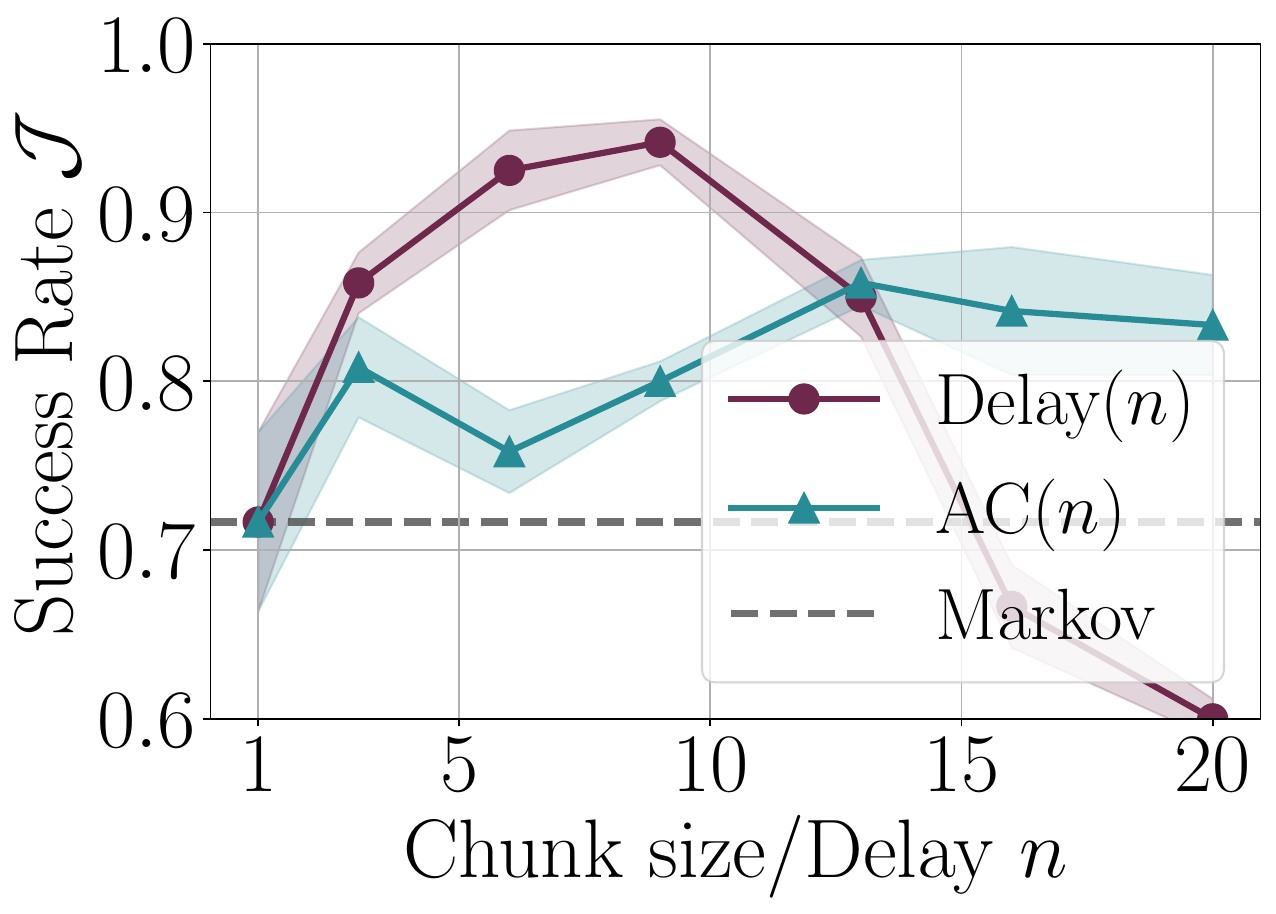}
    \end{minipage}
    \hfill
        \begin{minipage}[t]{0.23\textwidth}
        \centering
        \includegraphics[width=\linewidth]{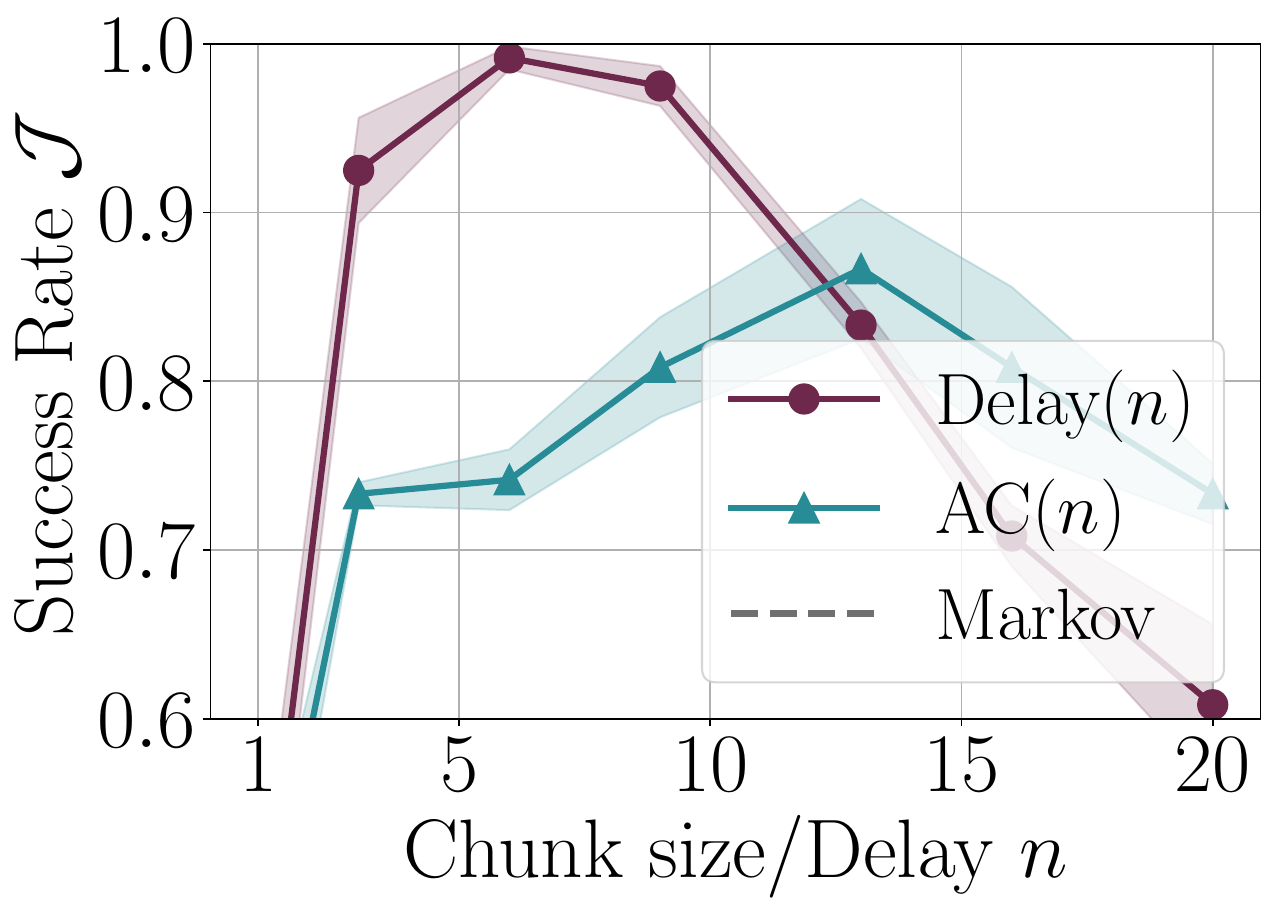}
    \end{minipage}
    \hfill
        \begin{minipage}[t]{0.23\textwidth}
        \centering
        \includegraphics[width=\linewidth]{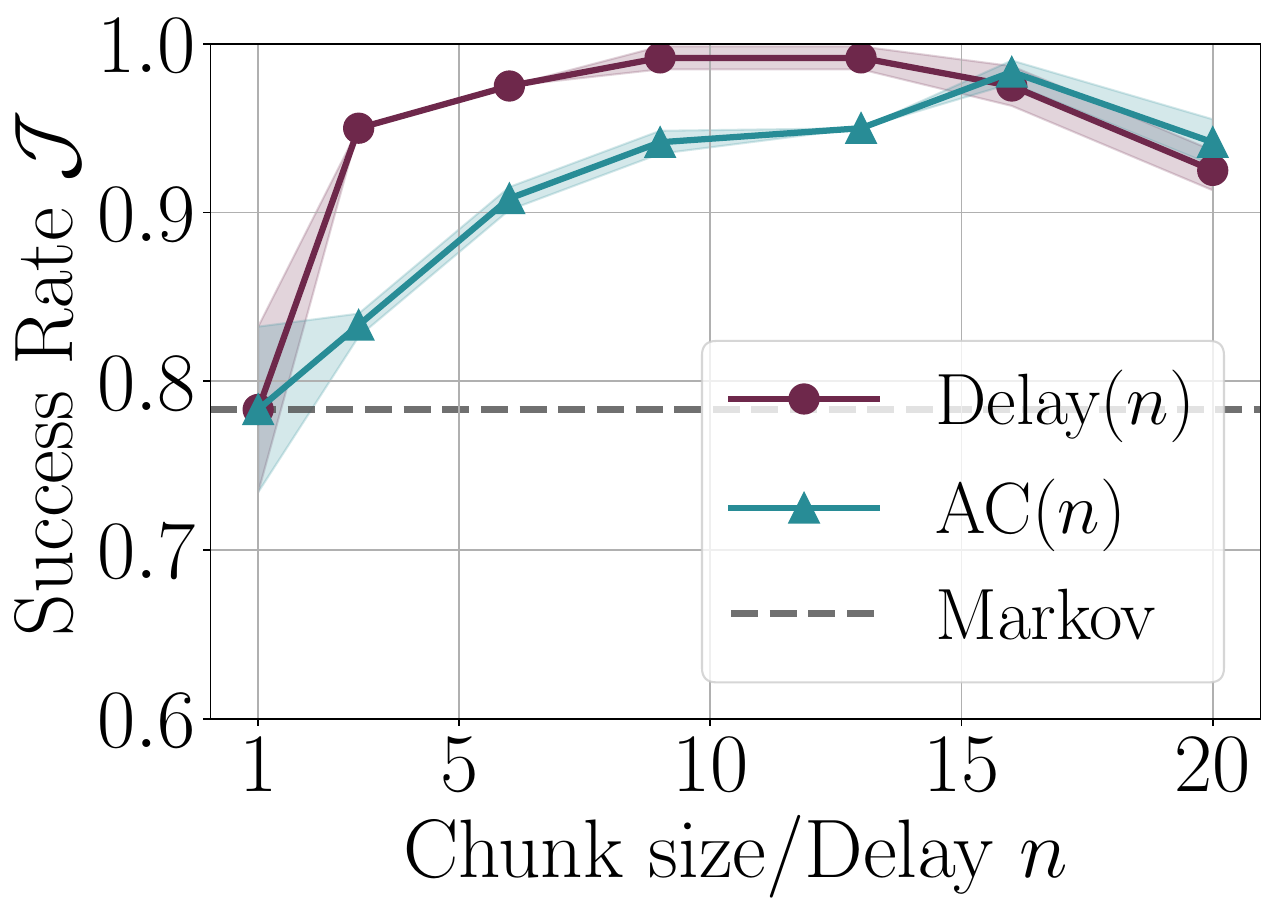}
    \end{minipage}
    \hfill
        \begin{minipage}[t]{0.23\textwidth}
        \centering
        \includegraphics[width=\linewidth]{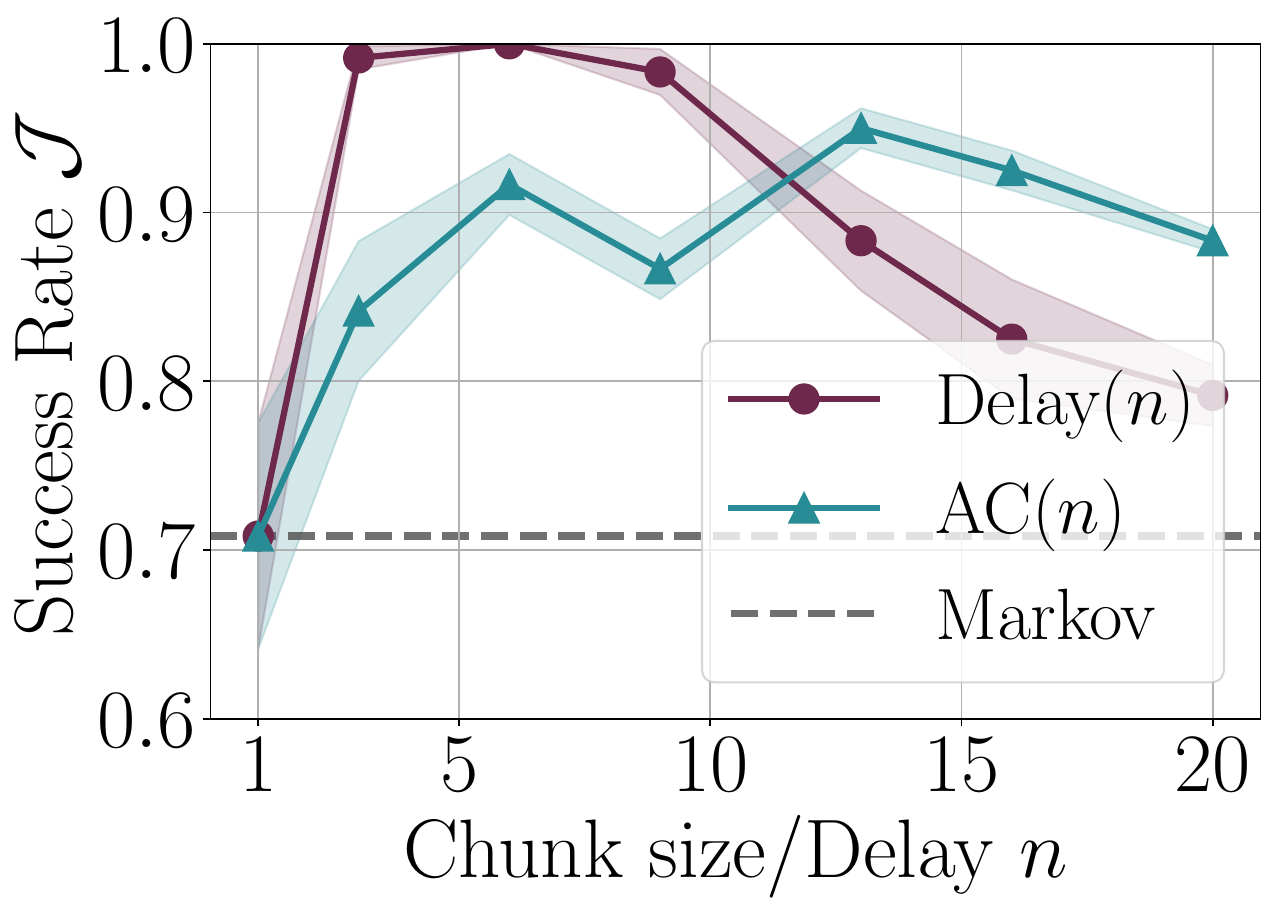}
    \end{minipage}
        \begin{minipage}[t]{0.23\textwidth}
            \includegraphics[width=\linewidth]{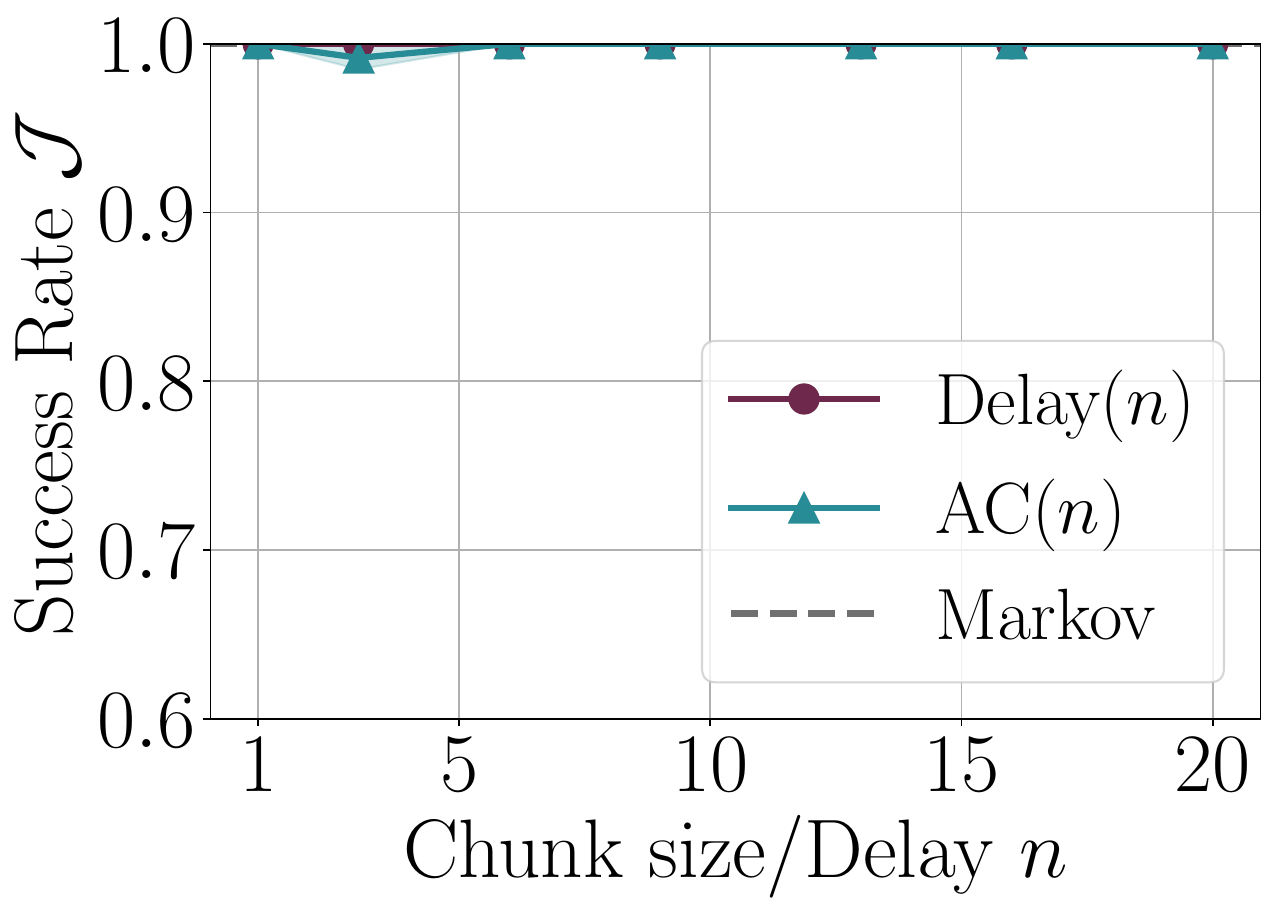}
    \end{minipage}
    \hfill
        \begin{minipage}[t]{0.23\textwidth}
        \centering
        \includegraphics[width=\linewidth]{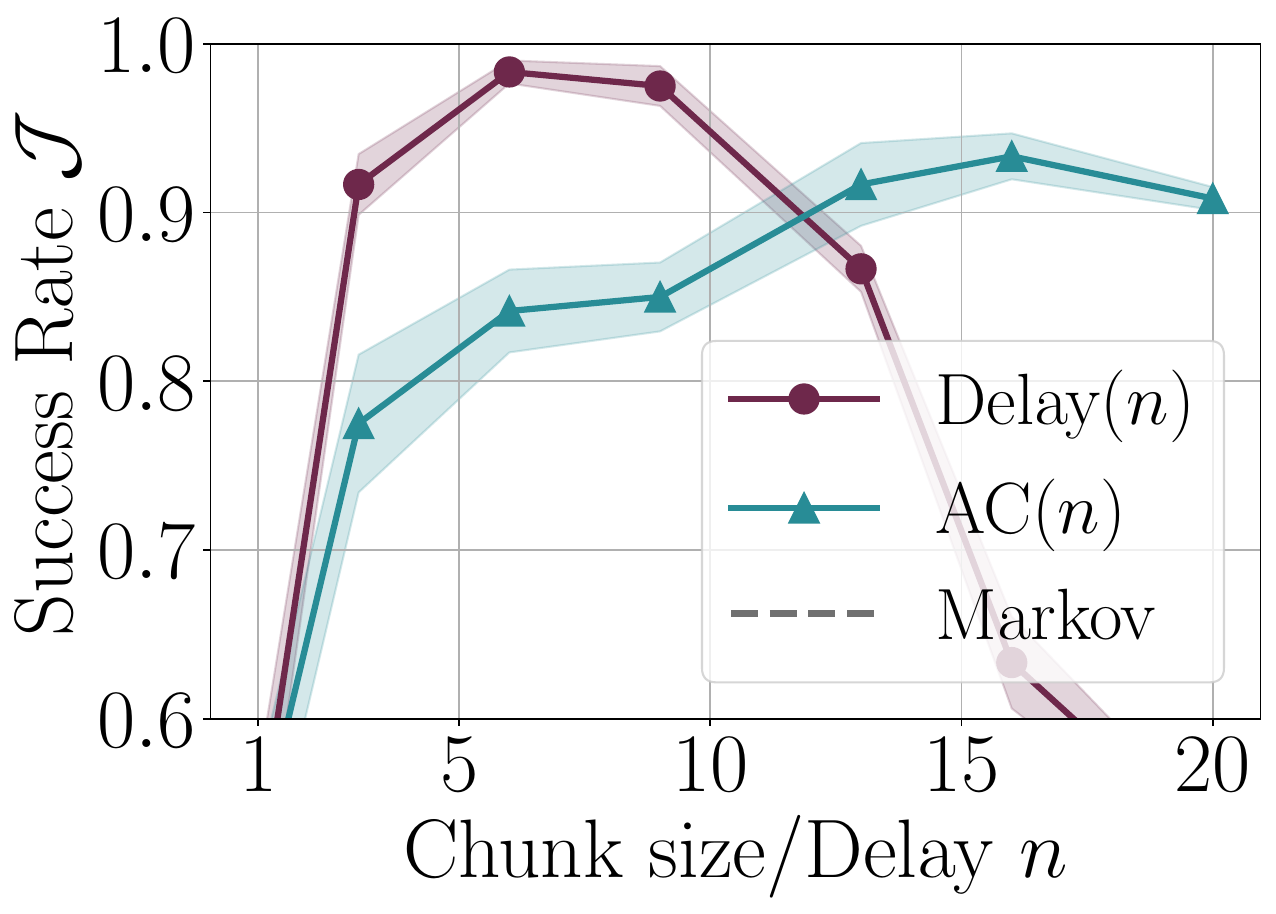}
    \end{minipage}
    \hfill
        \begin{minipage}[t]{0.23\textwidth}
        \centering
        \includegraphics[width=\linewidth]{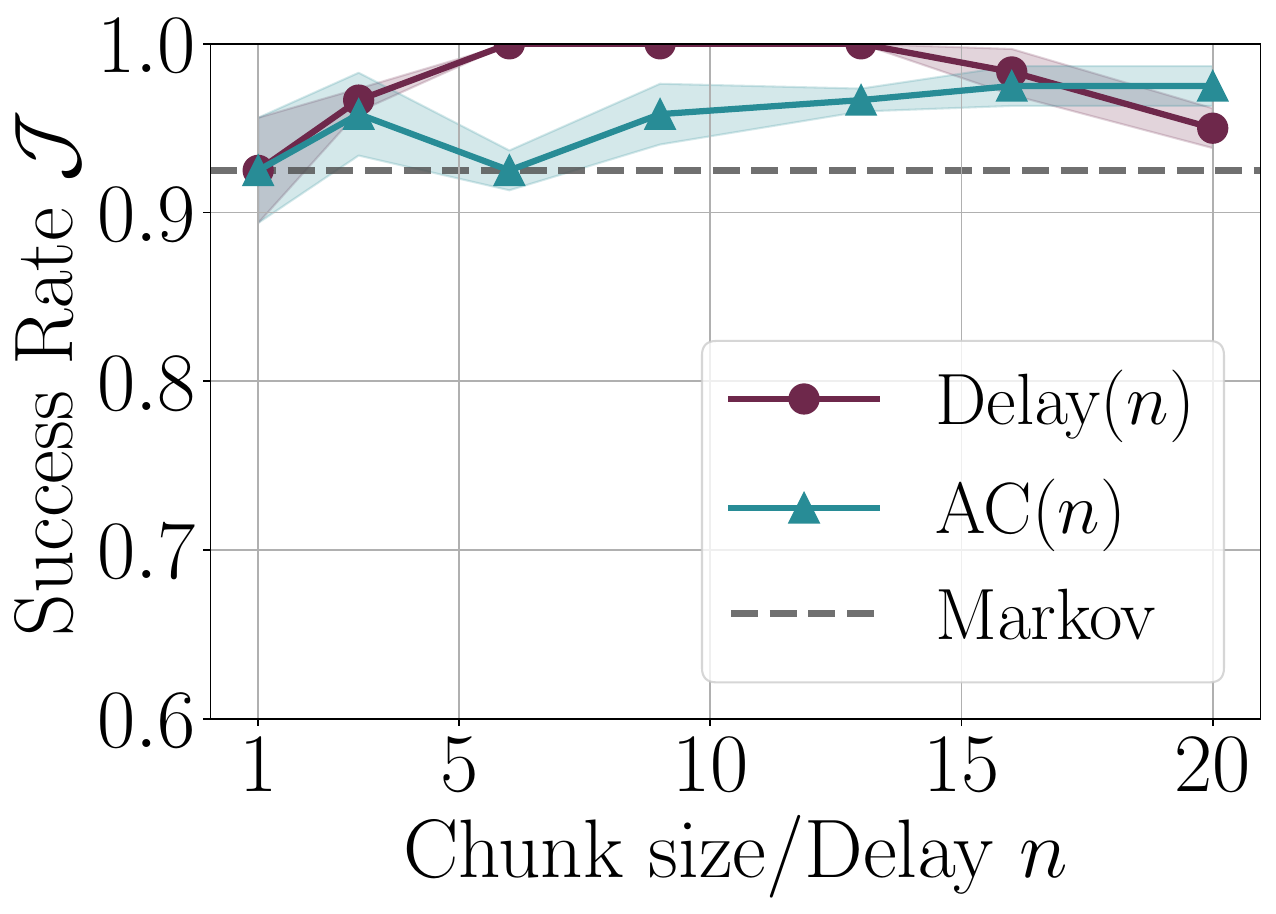}
    \end{minipage}
    \hfill
        \begin{minipage}[t]{0.23\textwidth}
        \centering
        \includegraphics[width=\linewidth]{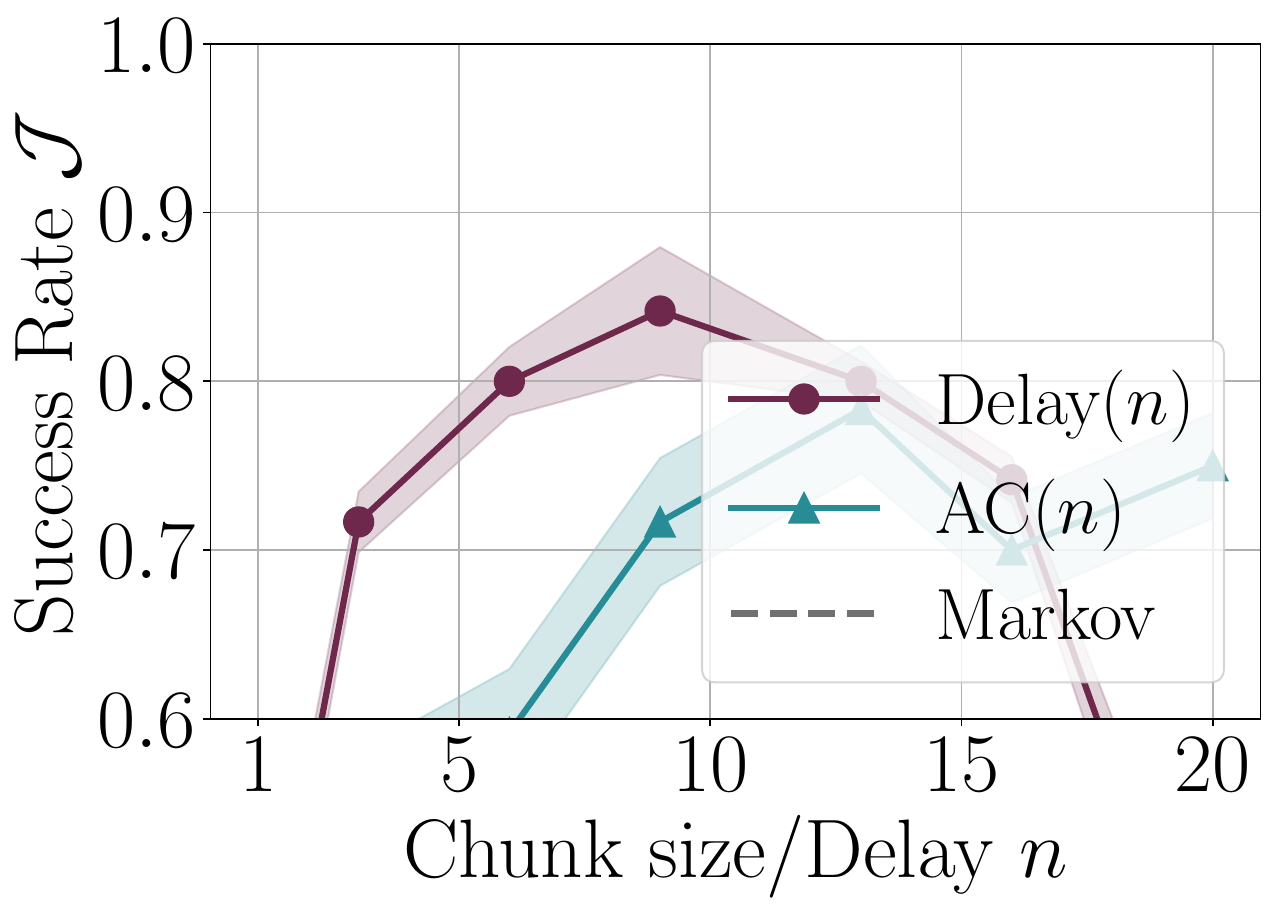}
    \end{minipage}
        \begin{minipage}[t]{0.23\textwidth}
            \includegraphics[width=\linewidth]{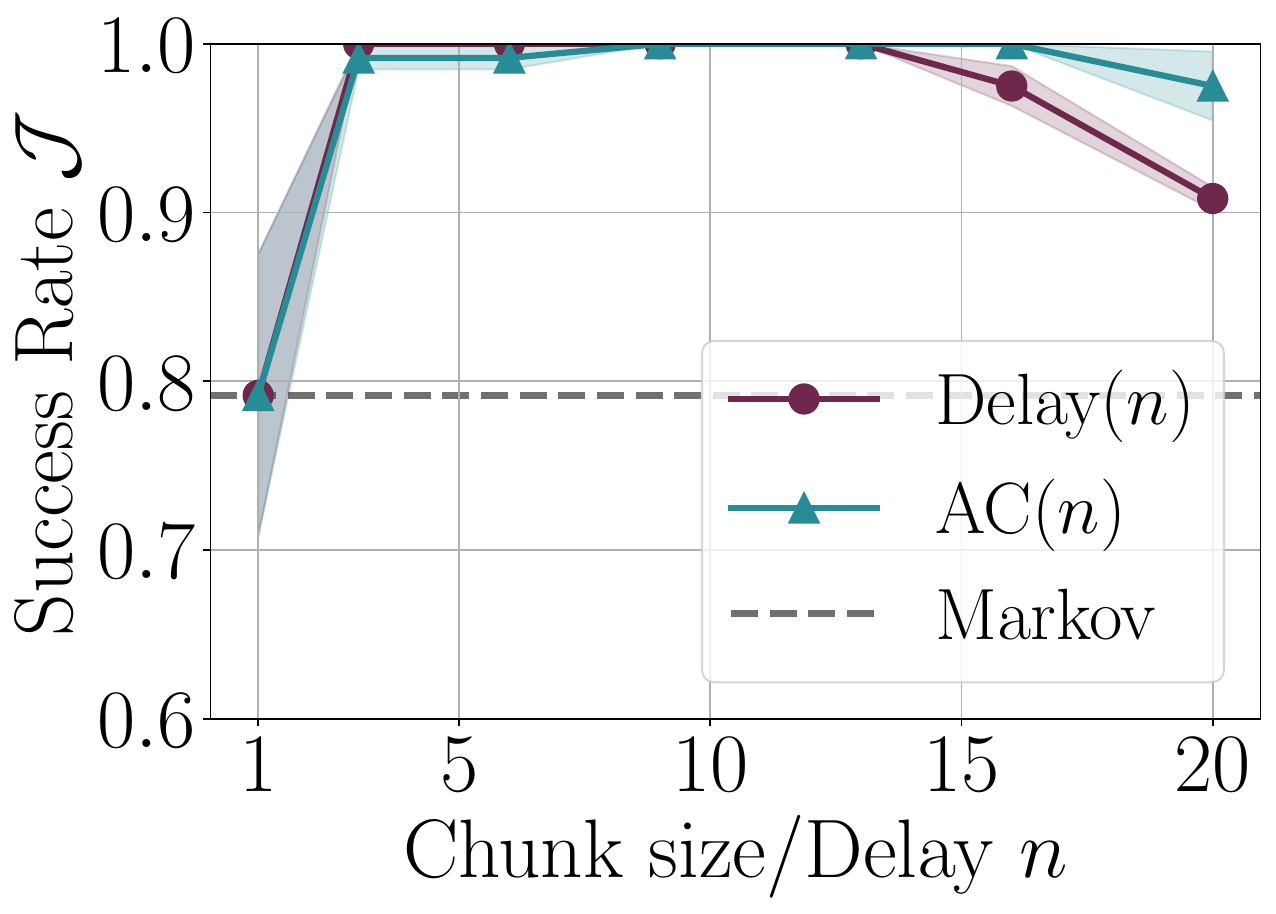}
    \end{minipage}
    \hfill
        \begin{minipage}[t]{0.23\textwidth}
        \centering
        \includegraphics[width=\linewidth]{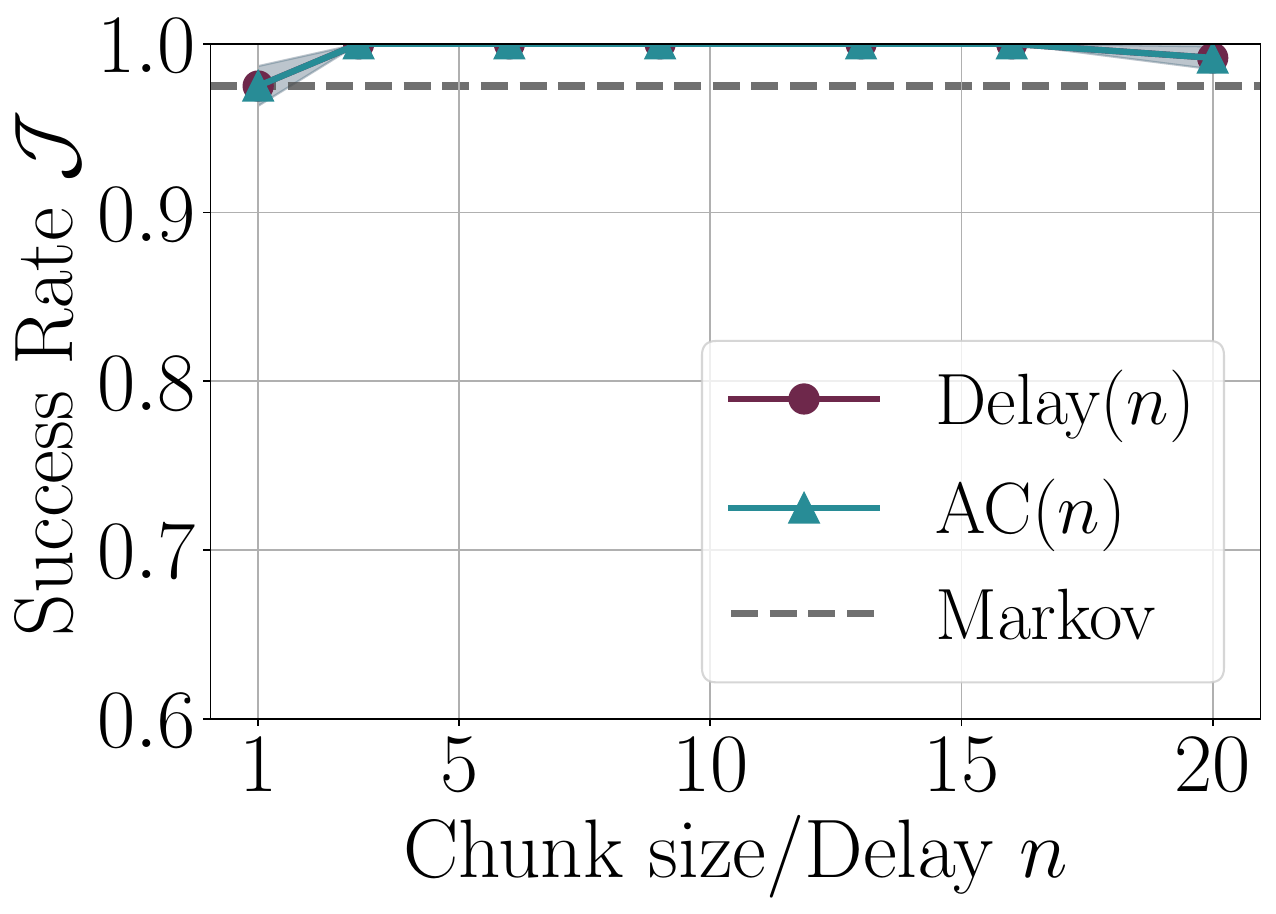}
    \end{minipage}
    \hfill
        \begin{minipage}[t]{0.23\textwidth}
        \centering
        \includegraphics[width=\linewidth]{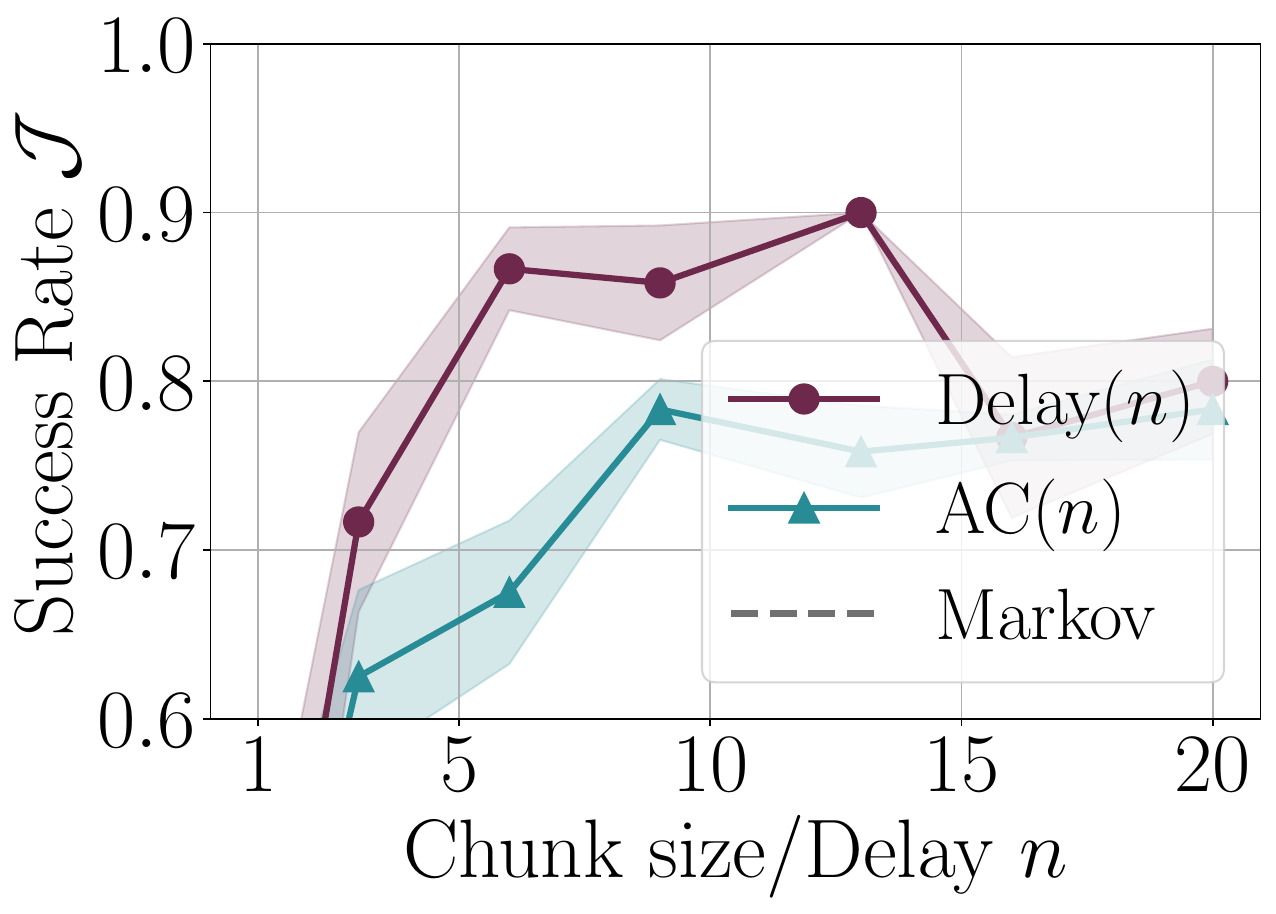}
    \end{minipage}
    \hfill
        \begin{minipage}[t]{0.23\textwidth}
        \centering
        \includegraphics[width=\linewidth]{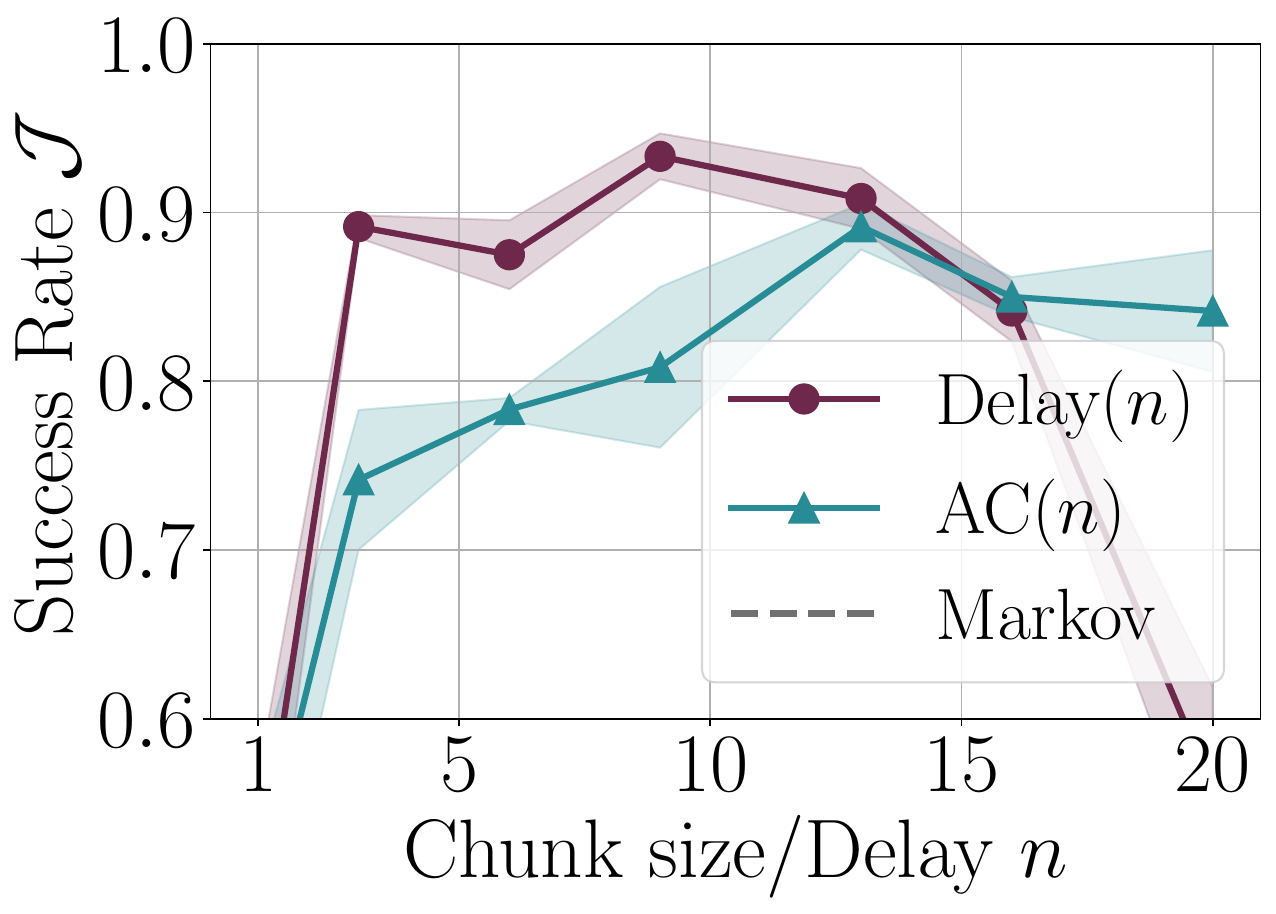}
    \end{minipage}
        \begin{minipage}[t]{0.23\textwidth}
            \includegraphics[width=\linewidth]{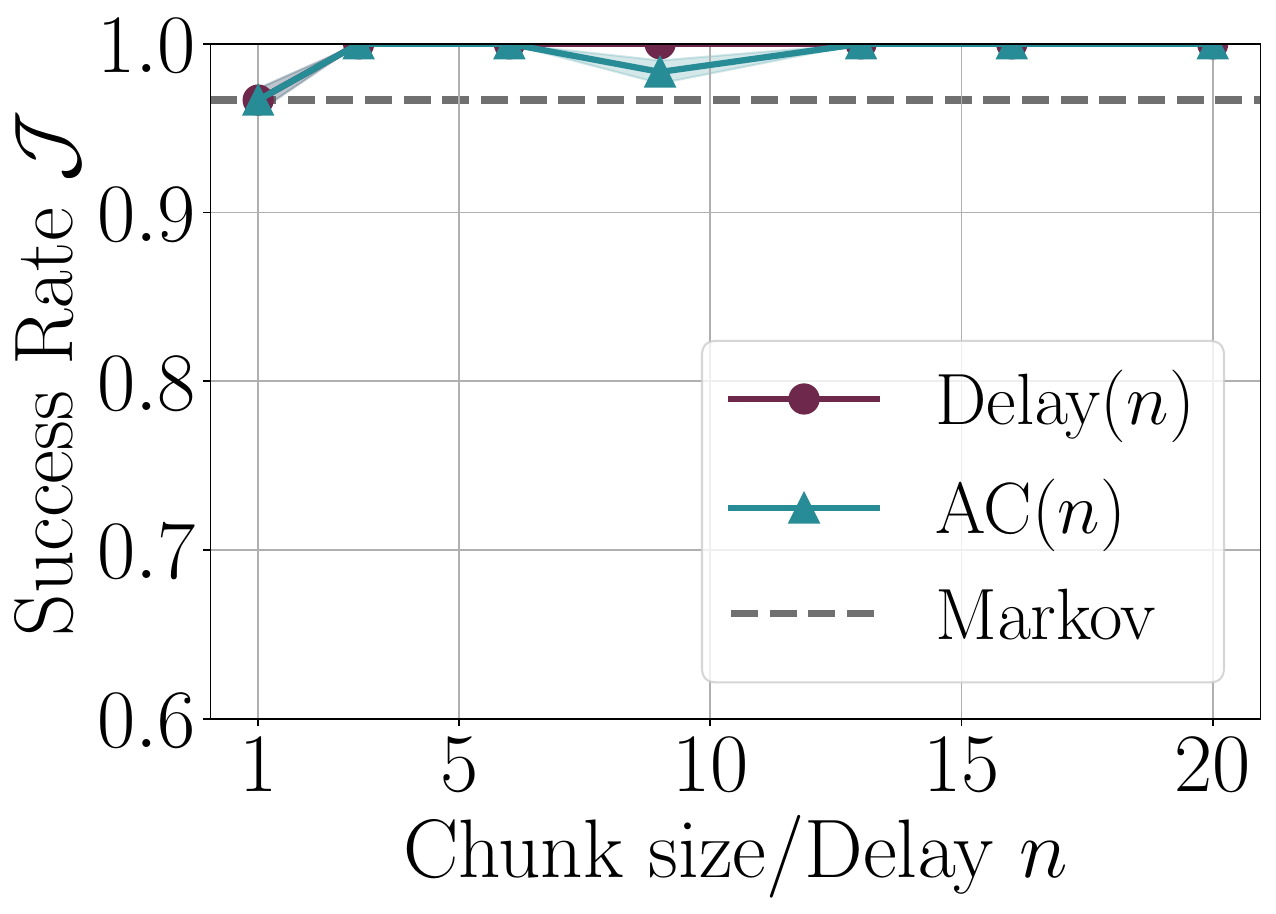}
    \end{minipage}
    \hfill
        \begin{minipage}[t]{0.23\textwidth}
        \centering
        \includegraphics[width=\linewidth]{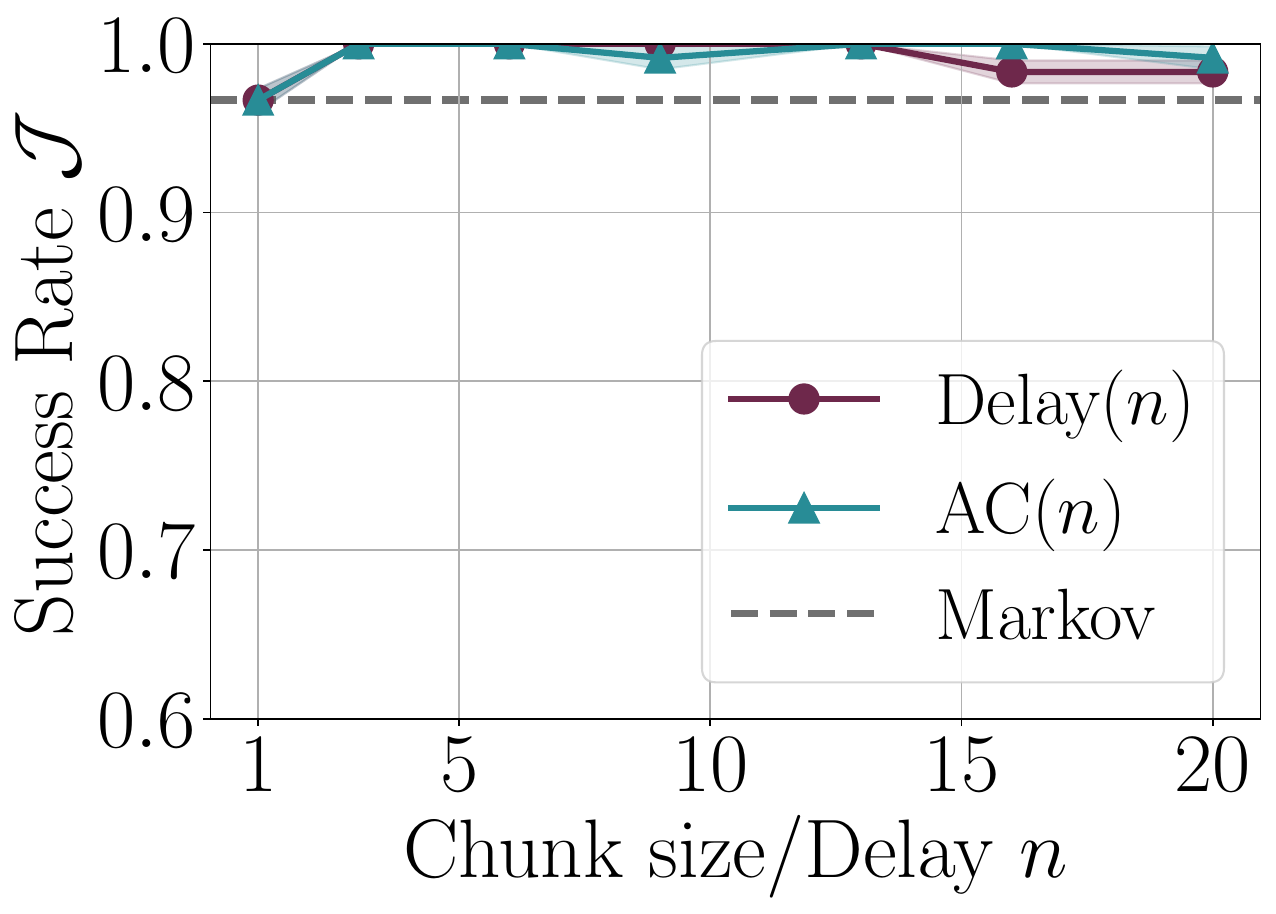}
    \end{minipage}
    \hfill
        \begin{minipage}[t]{0.23\textwidth}
        \centering
        \includegraphics[width=\linewidth]{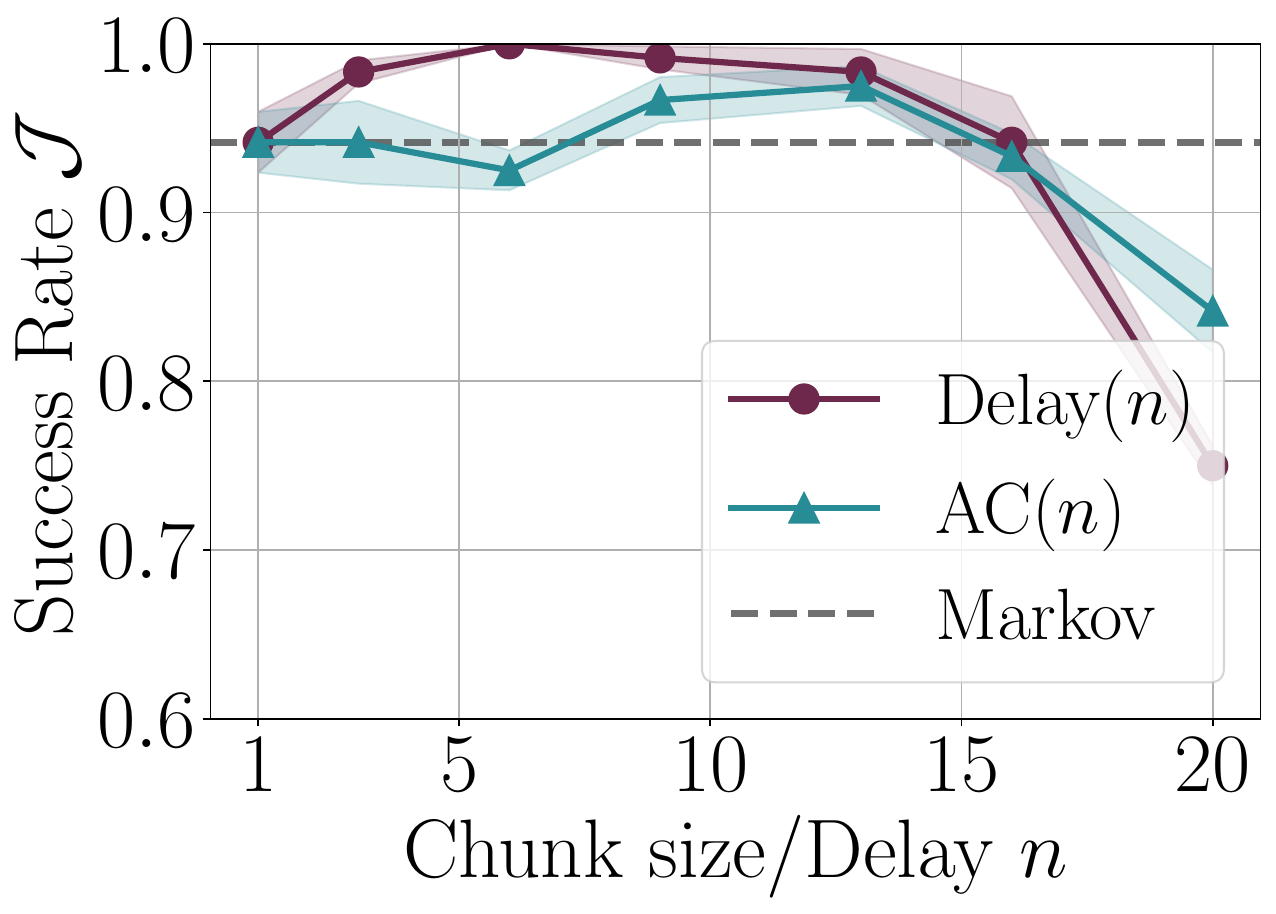}
    \end{minipage}
    \hfill
        \begin{minipage}[t]{0.23\textwidth}
        \centering
        \includegraphics[width=\linewidth]{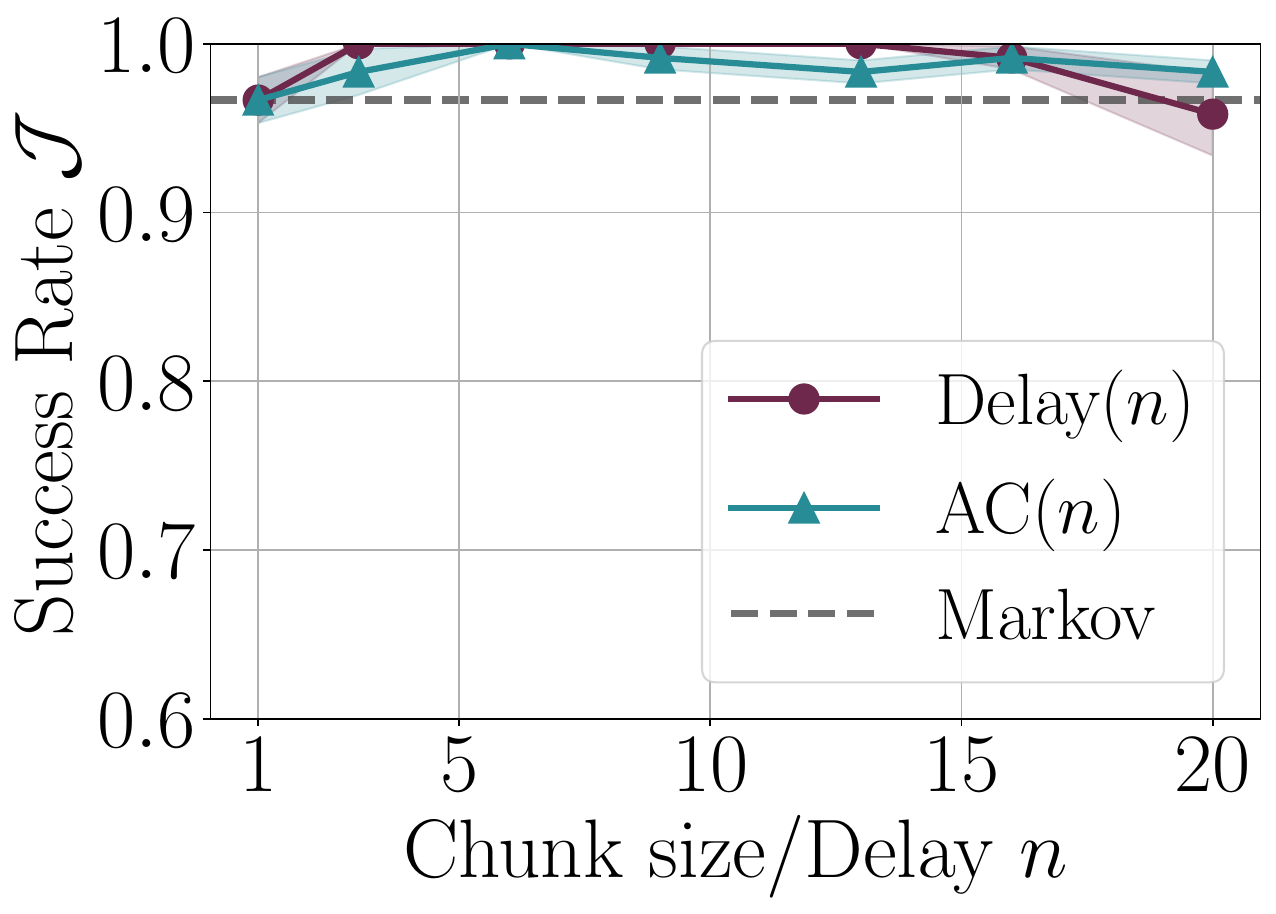}
    \end{minipage}
        \begin{minipage}[t]{0.23\textwidth}
            \includegraphics[width=\linewidth]{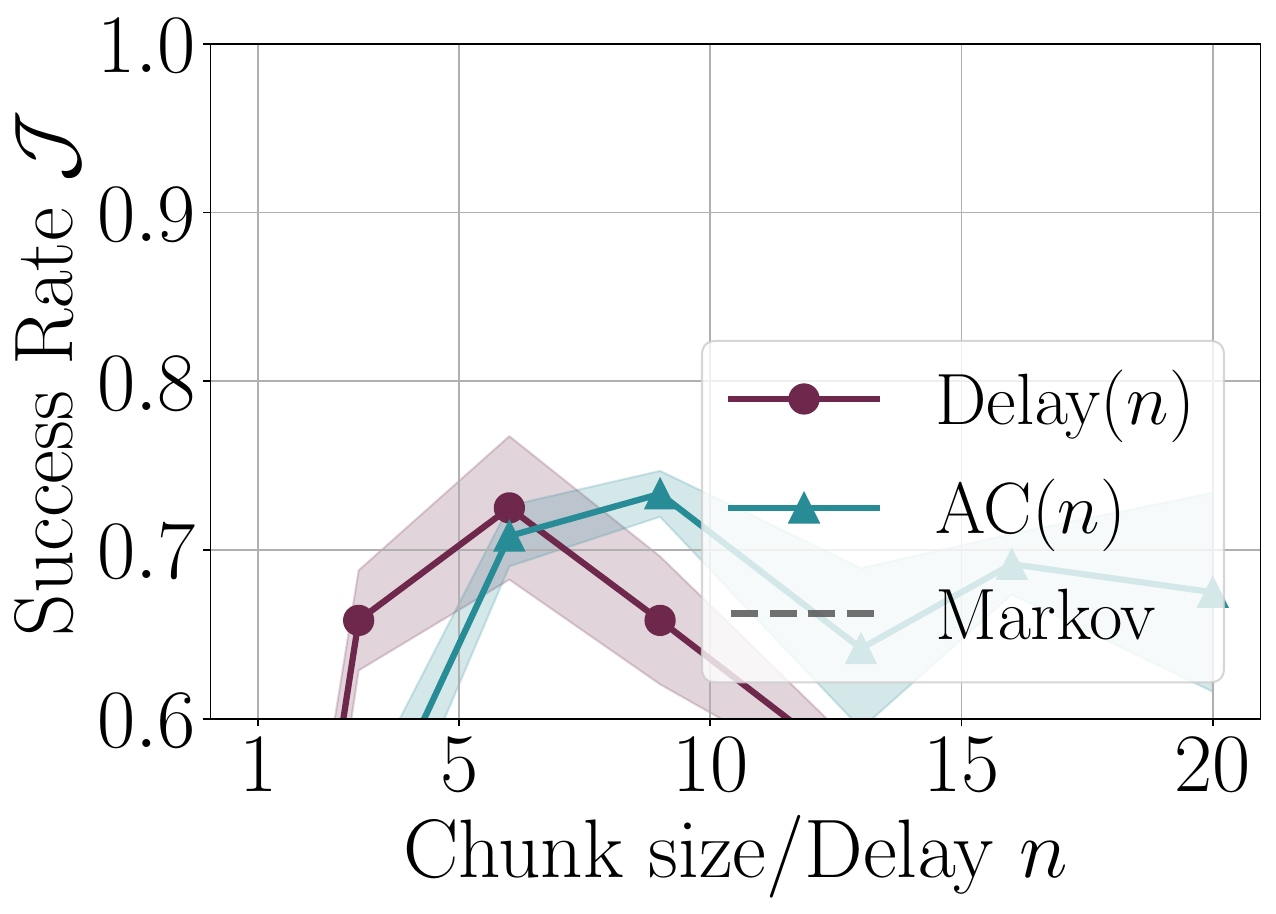}
    \end{minipage}
    \hfill
        \begin{minipage}[t]{0.23\textwidth}
        \centering
        \includegraphics[width=\linewidth]{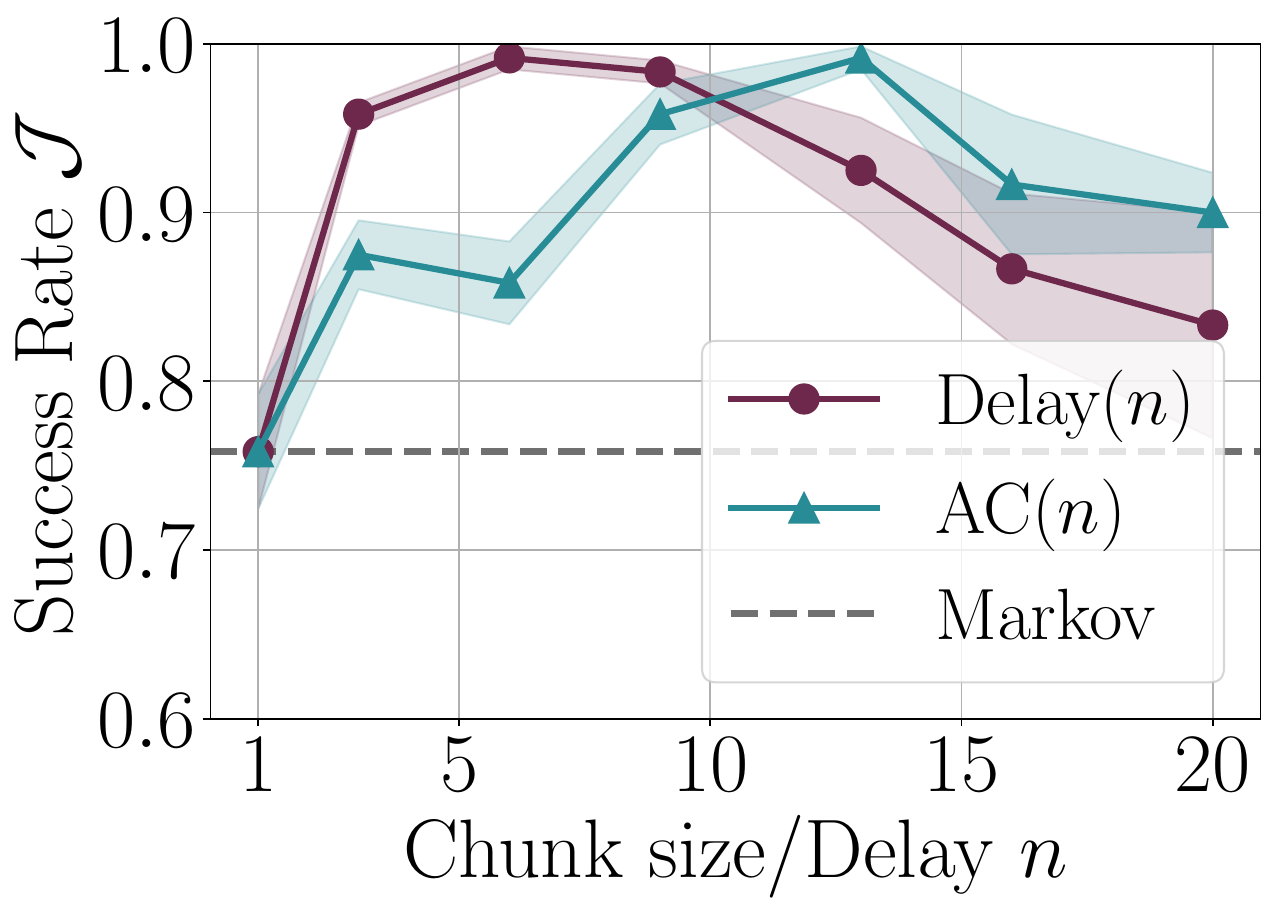}
    \end{minipage}
    \hfill
        \begin{minipage}[t]{0.23\textwidth}
        \centering
        \includegraphics[width=\linewidth]{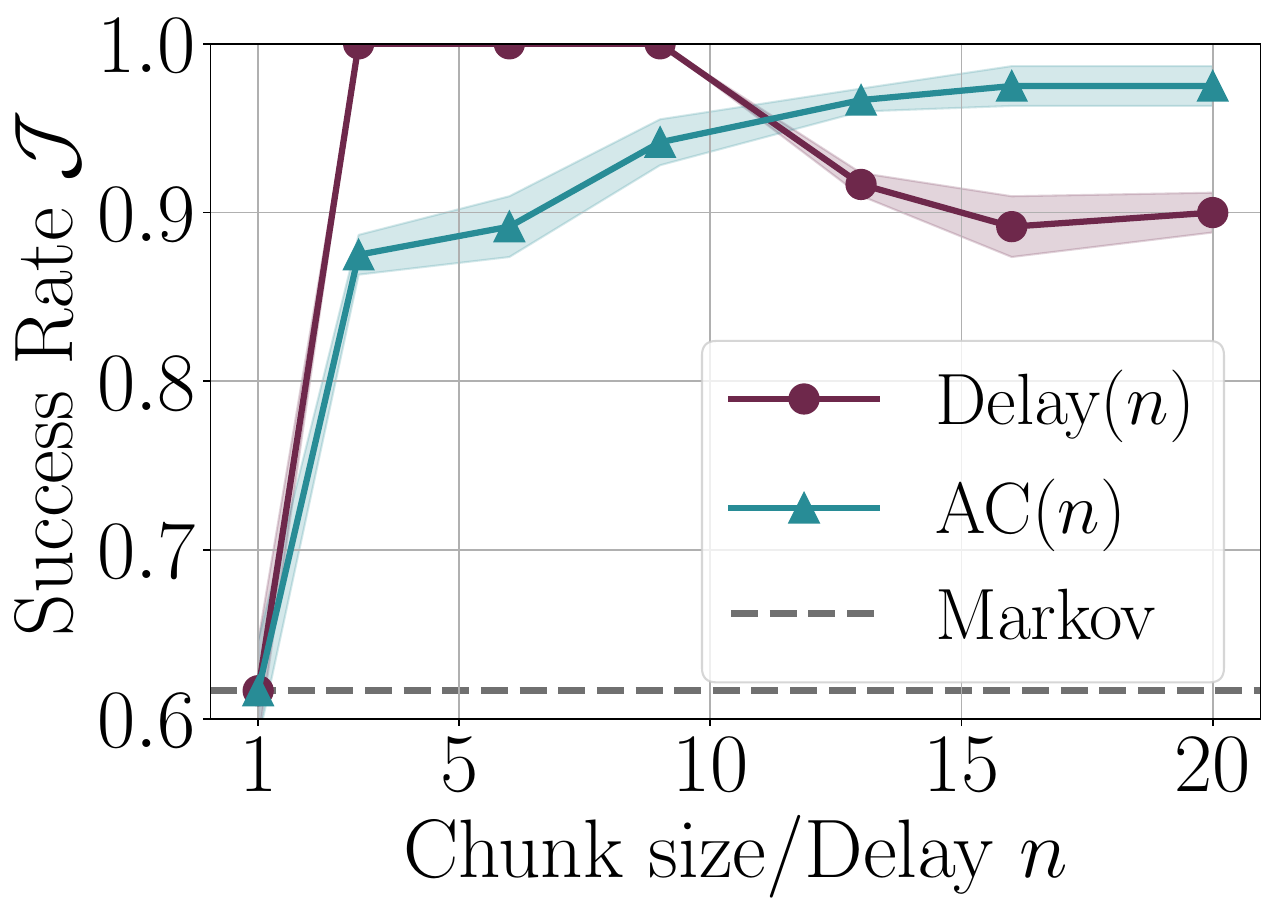}
    \end{minipage}
    \hfill
        \begin{minipage}[t]{0.23\textwidth}
        \centering
        \includegraphics[width=\linewidth]{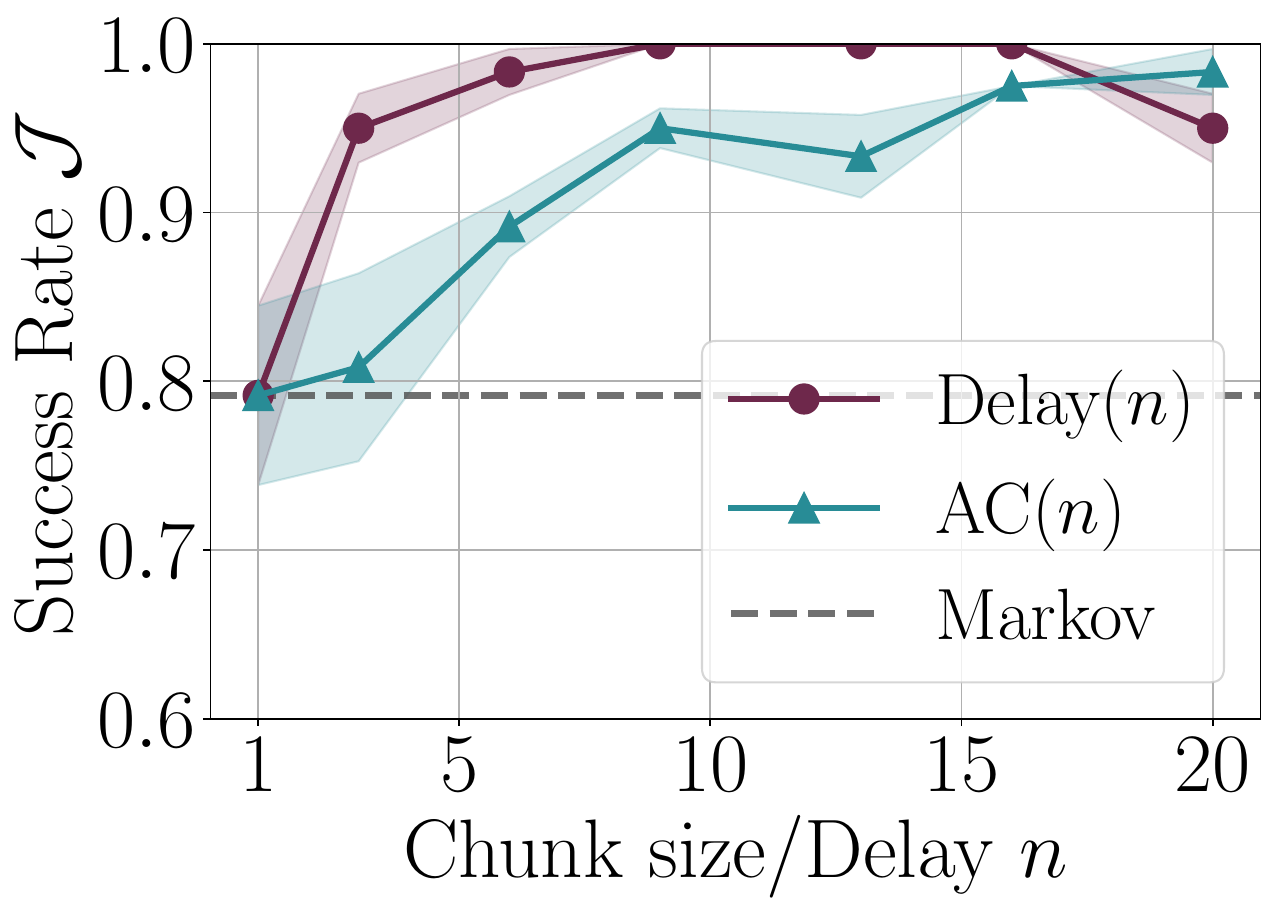}
    \end{minipage}
        \caption{Success rate for each \texttt{Libero} task from 0 to 35 (corresponding to Fig. \ref{fig:success_libero}), part 1.}
    \label{fig:succ each libero1}
\end{figure*}

\begin{figure*}

        \begin{minipage}[t]{0.23\textwidth}
            \includegraphics[width=\linewidth]{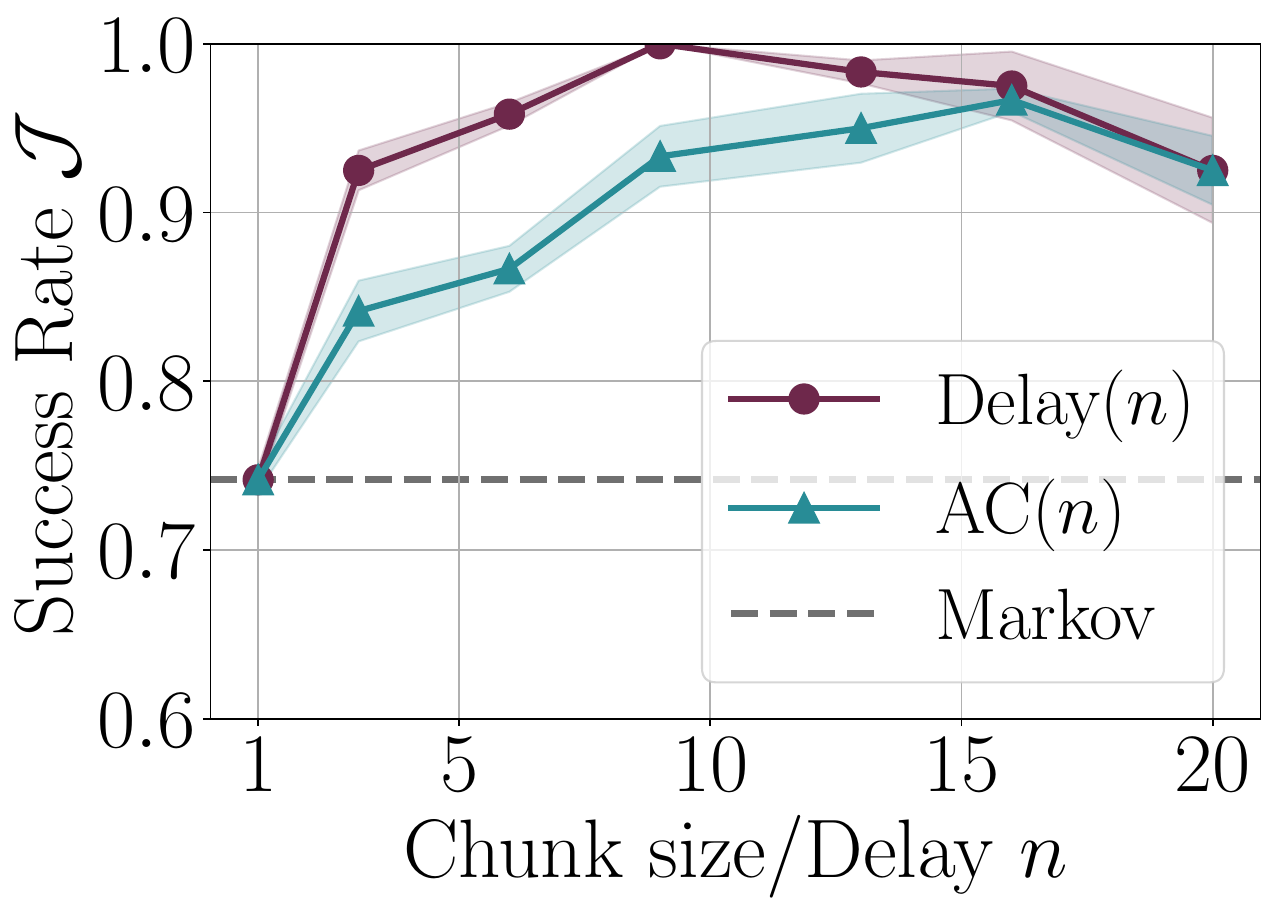}
    \end{minipage}
    \hfill
        \begin{minipage}[t]{0.23\textwidth}
        \centering
        \includegraphics[width=\linewidth]{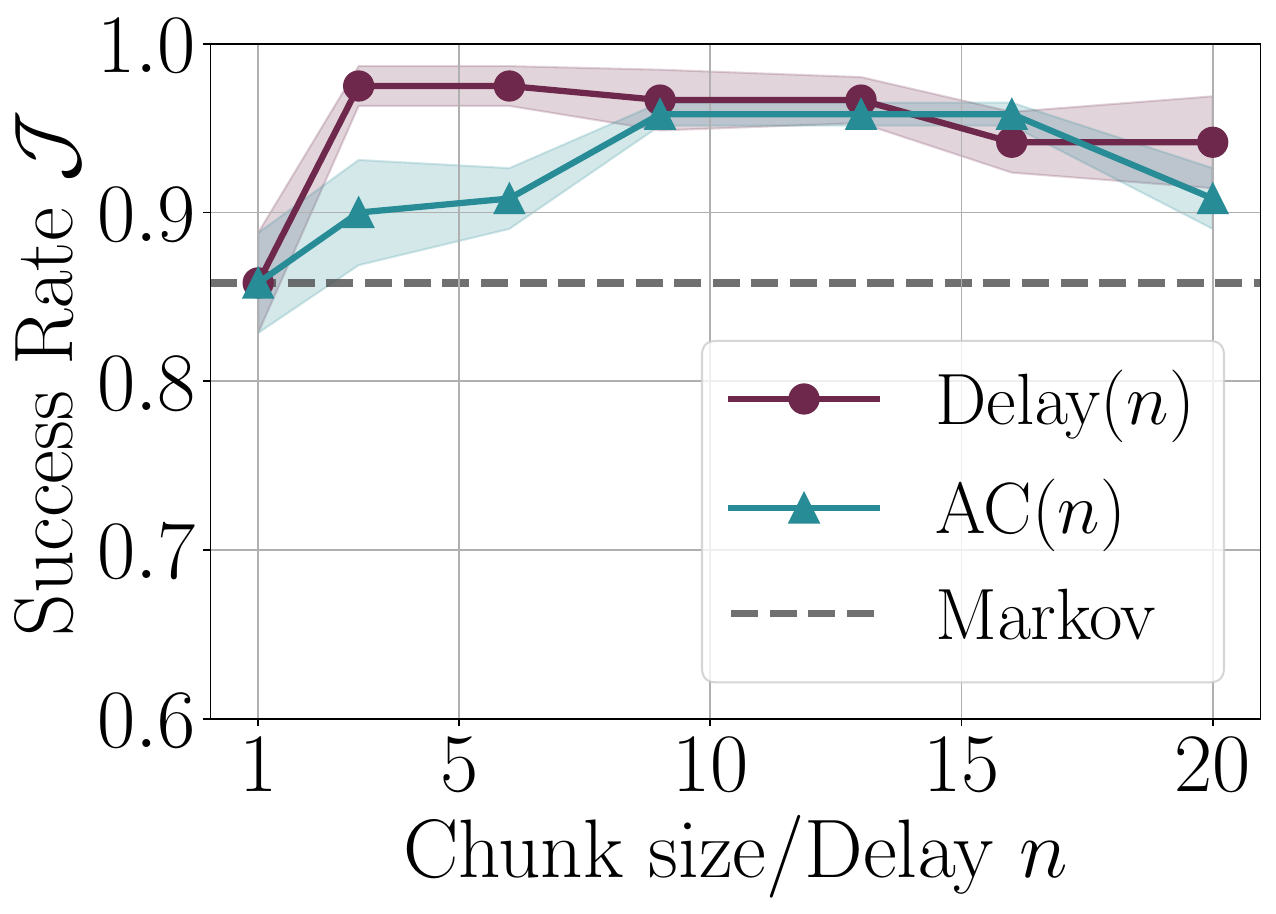}
    \end{minipage}
    \hfill
        \begin{minipage}[t]{0.23\textwidth}
        \centering
        \includegraphics[width=\linewidth]{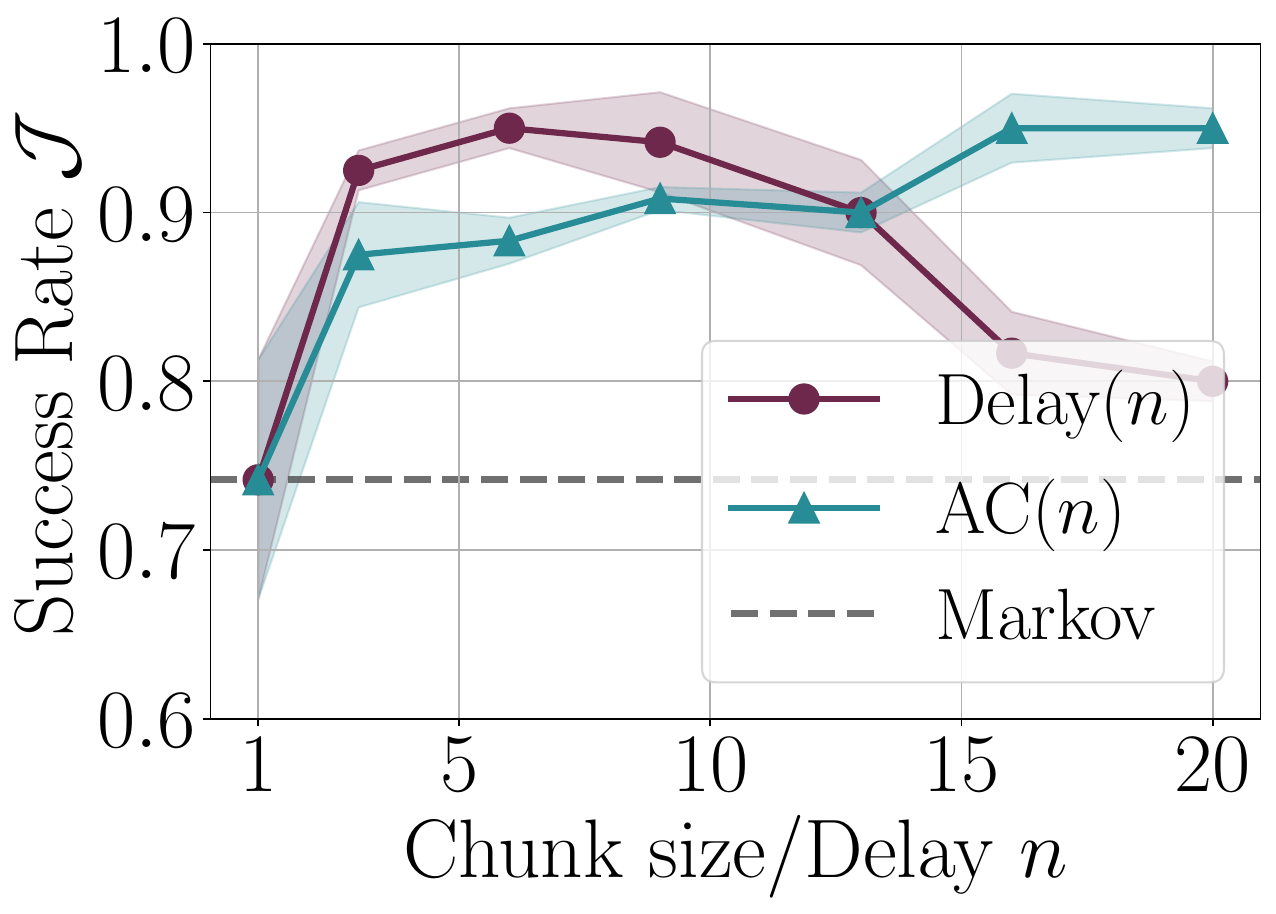}
    \end{minipage}
    \hfill
        \begin{minipage}[t]{0.23\textwidth}
        \centering
        \includegraphics[width=\linewidth]{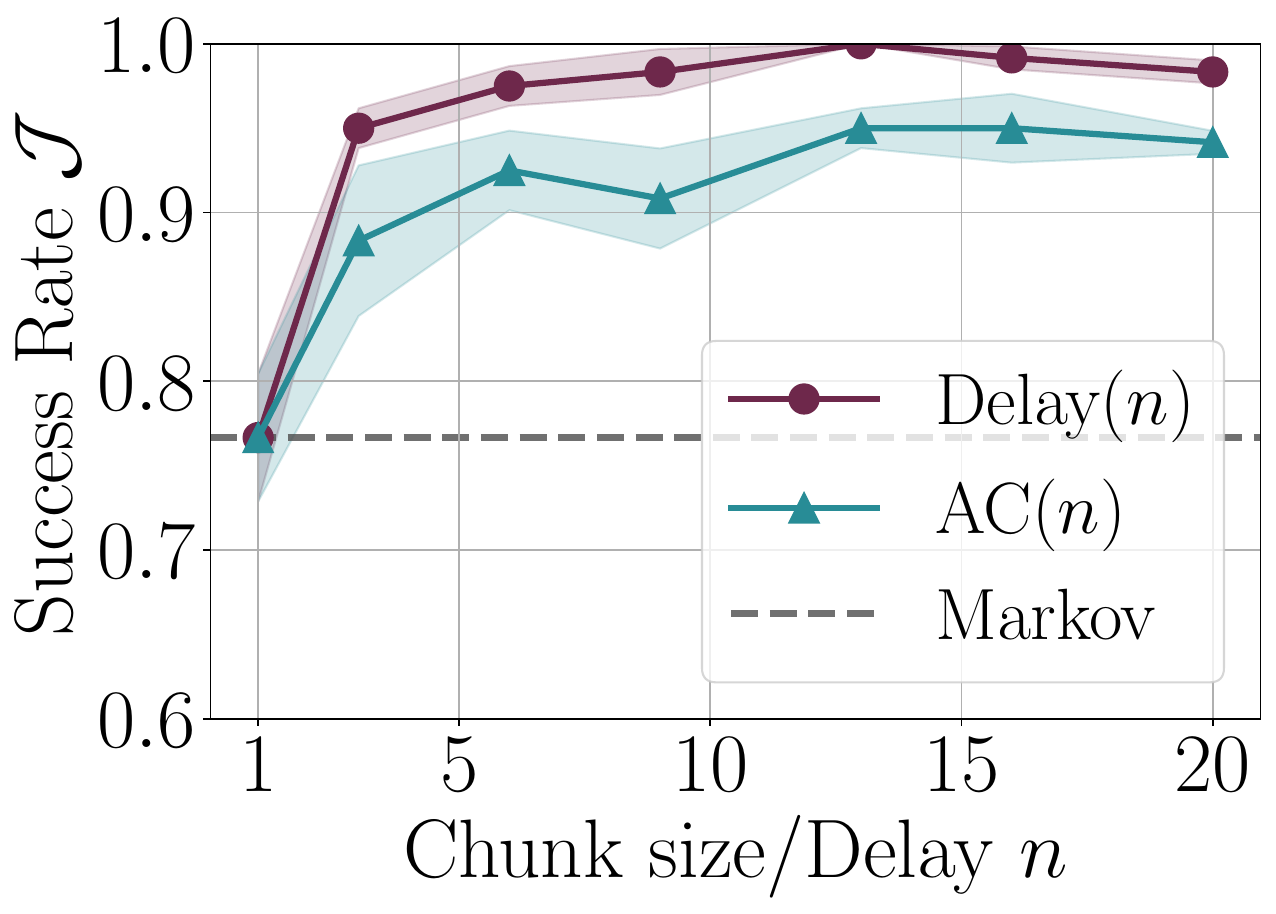}
    \end{minipage}
        \begin{minipage}[t]{0.23\textwidth}
            \includegraphics[width=\linewidth]{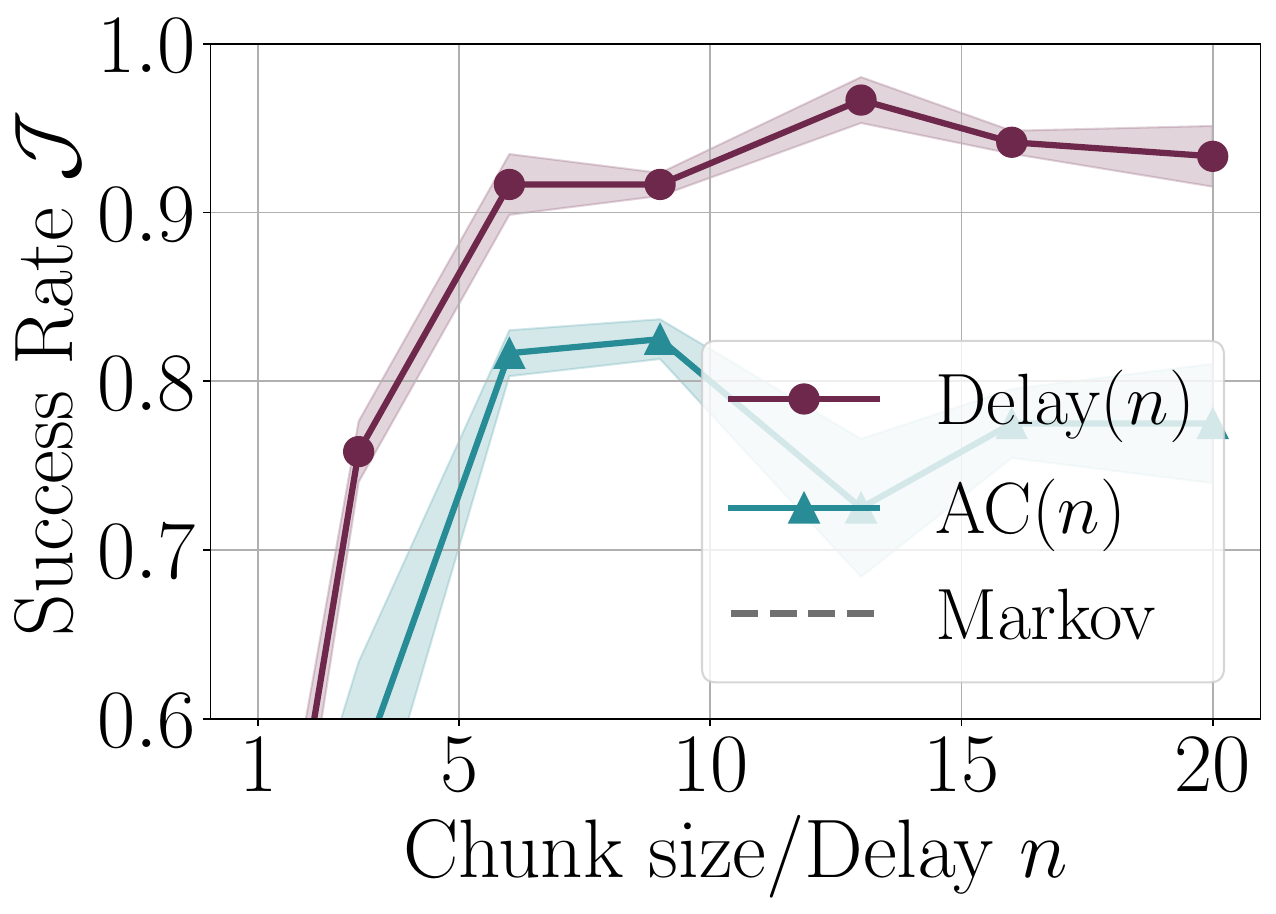}
    \end{minipage}
    \hfill
        \begin{minipage}[t]{0.23\textwidth}
        \centering
        \includegraphics[width=\linewidth]{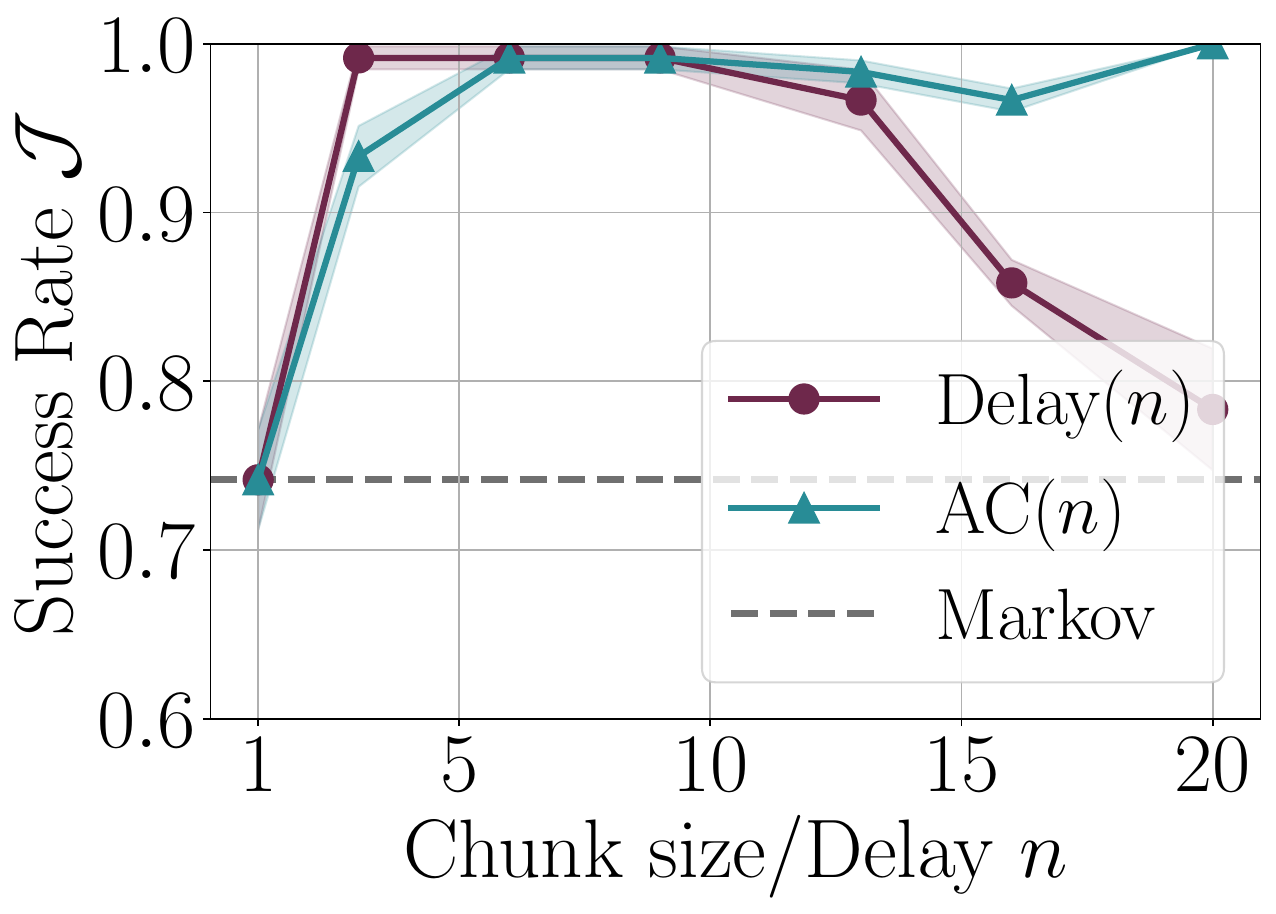}
    \end{minipage}
    \hfill
        \begin{minipage}[t]{0.23\textwidth}
        \centering
        \includegraphics[width=\linewidth]{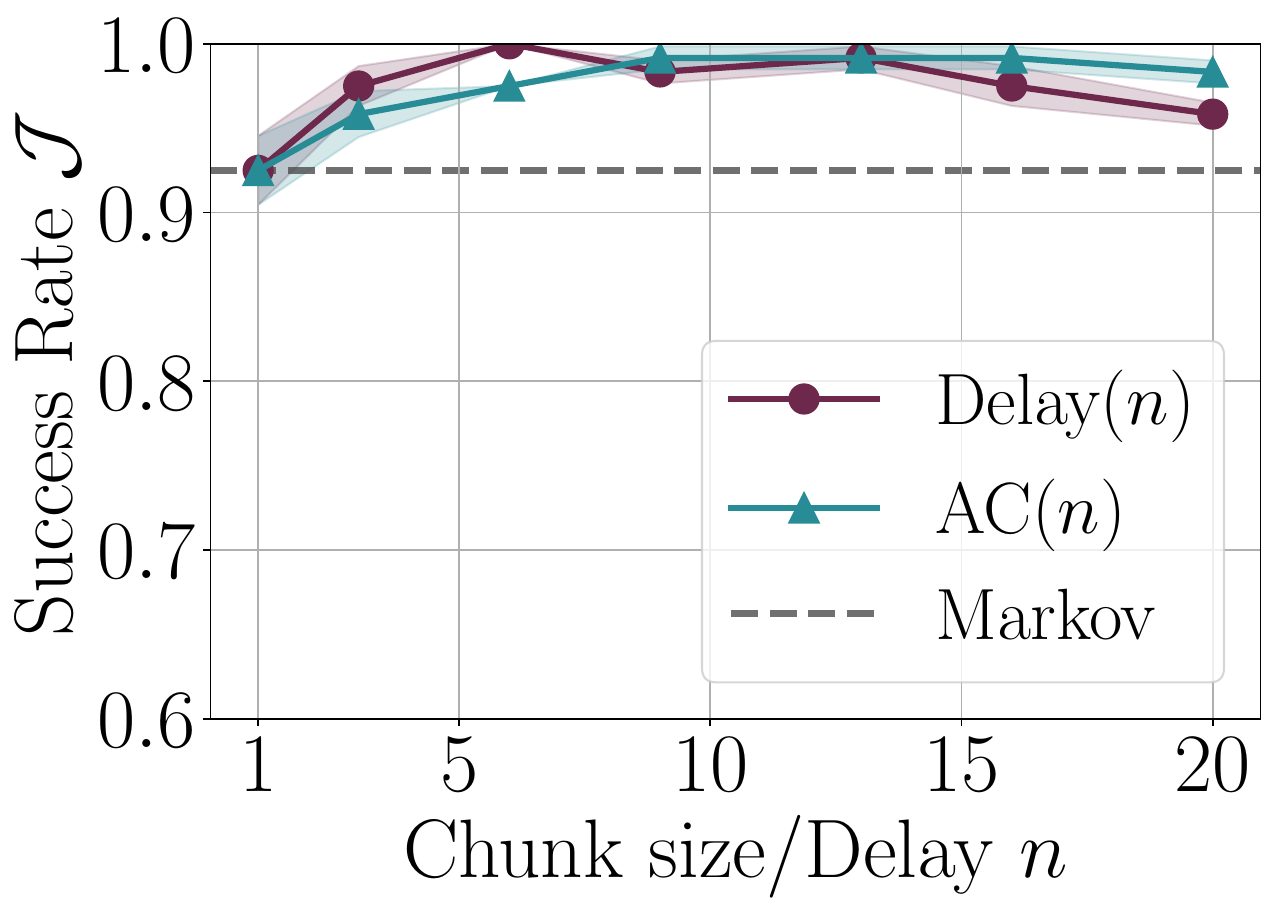}
    \end{minipage}
    \hfill
        \begin{minipage}[t]{0.23\textwidth}
        \centering
        \includegraphics[width=\linewidth]{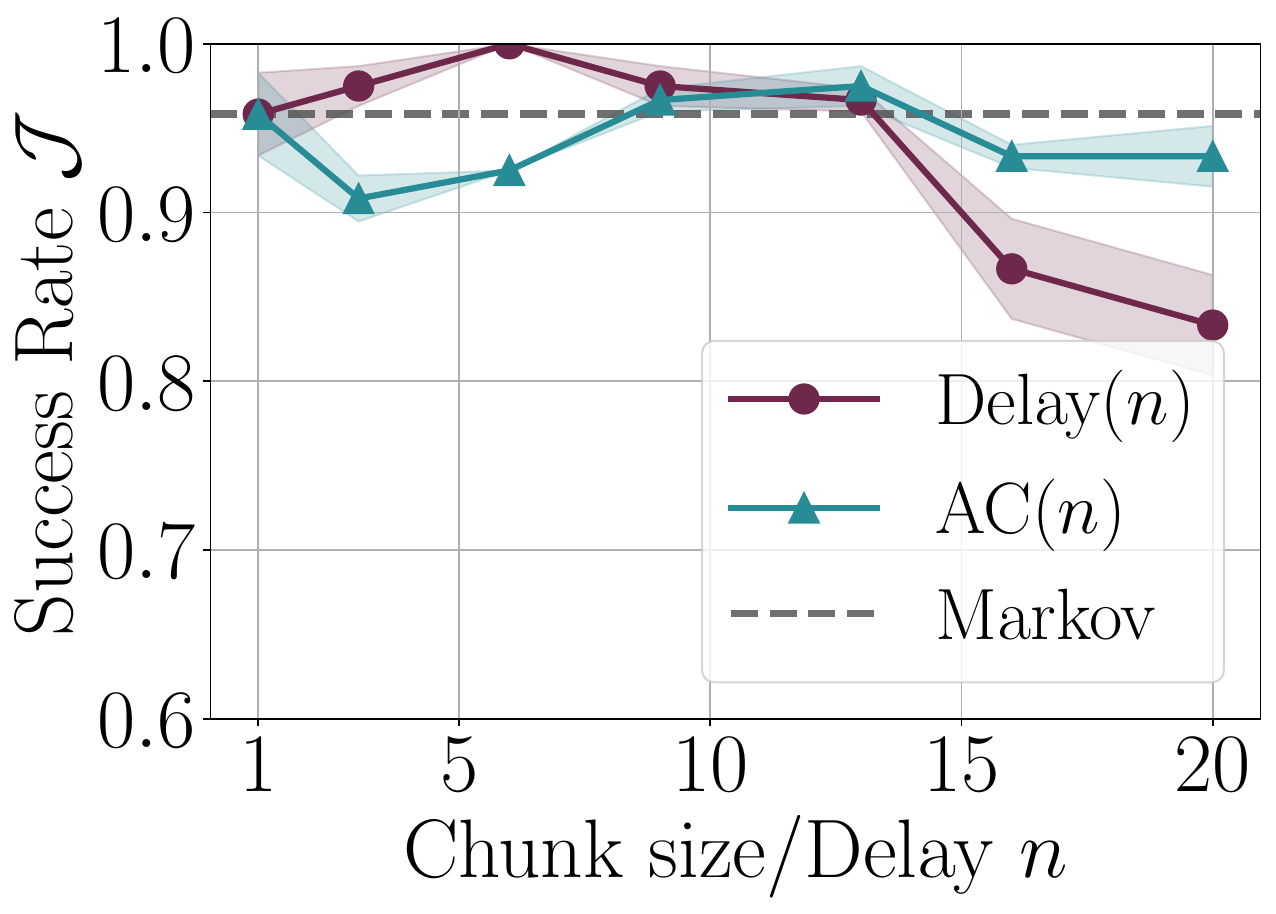}
    \end{minipage}
        \begin{minipage}[t]{0.23\textwidth}
            \includegraphics[width=\linewidth]{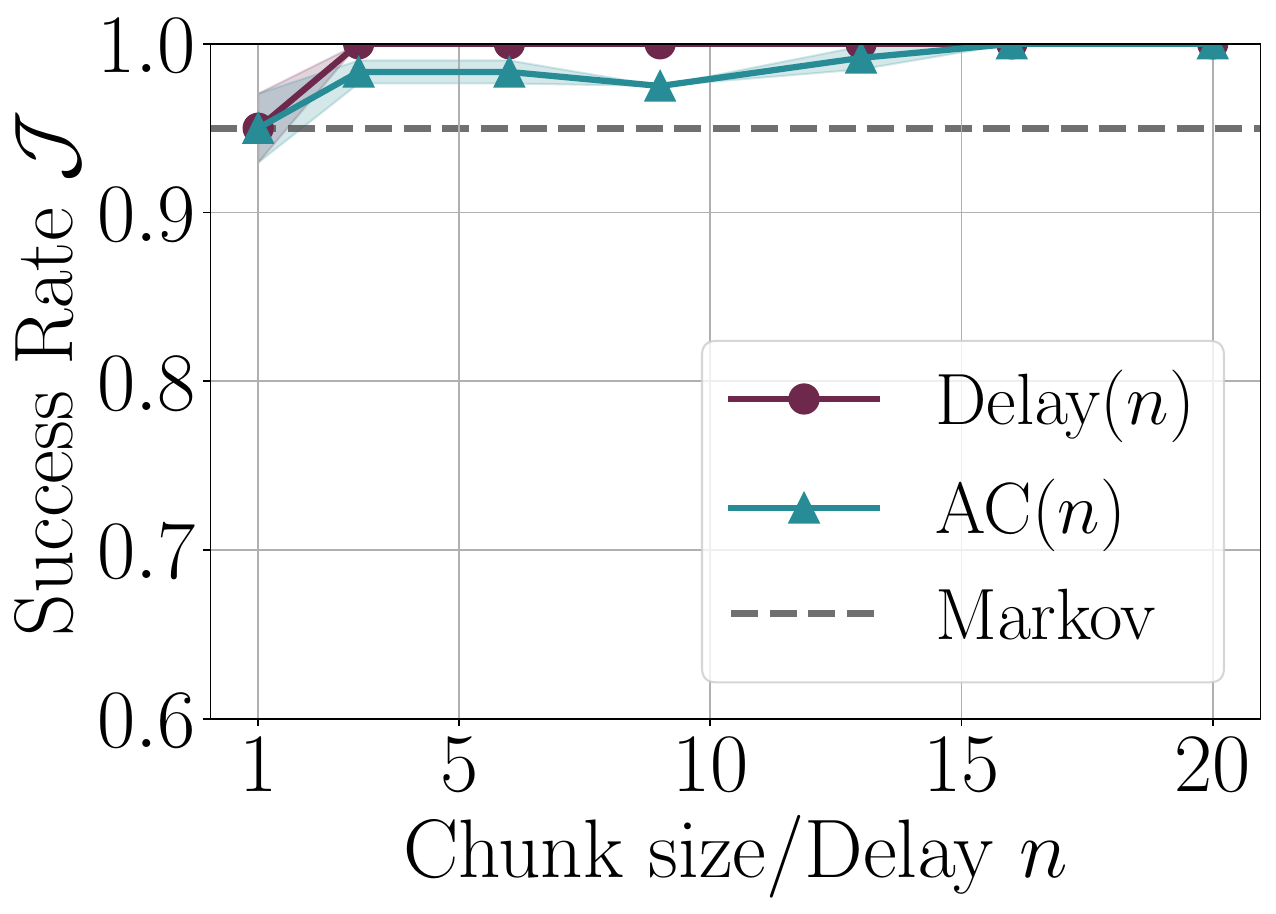}
    \end{minipage}
    \hfill
        \begin{minipage}[t]{0.23\textwidth}
        \centering
        \includegraphics[width=\linewidth]{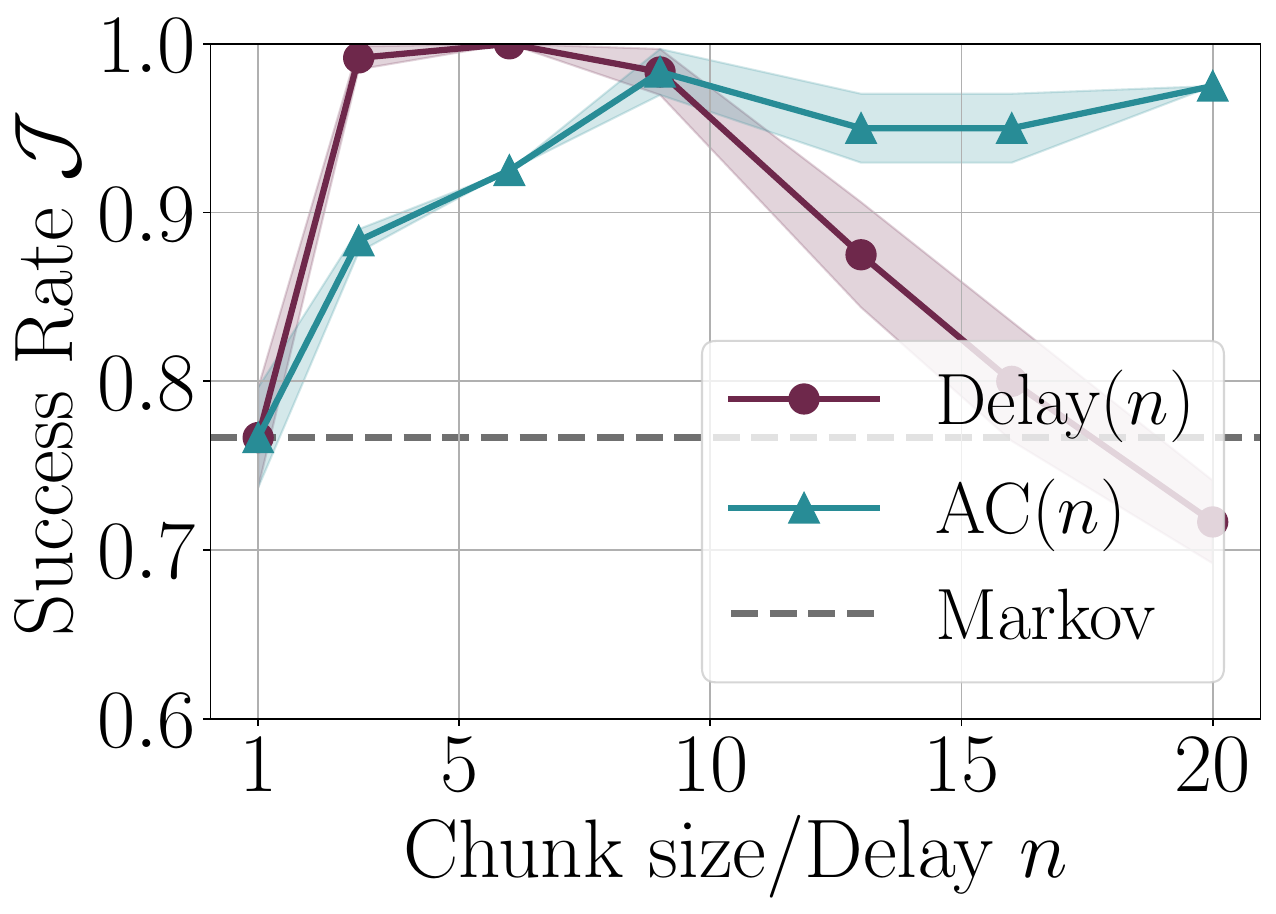}
    \end{minipage}
    \hfill
        \begin{minipage}[t]{0.23\textwidth}
        \centering
        \includegraphics[width=\linewidth]{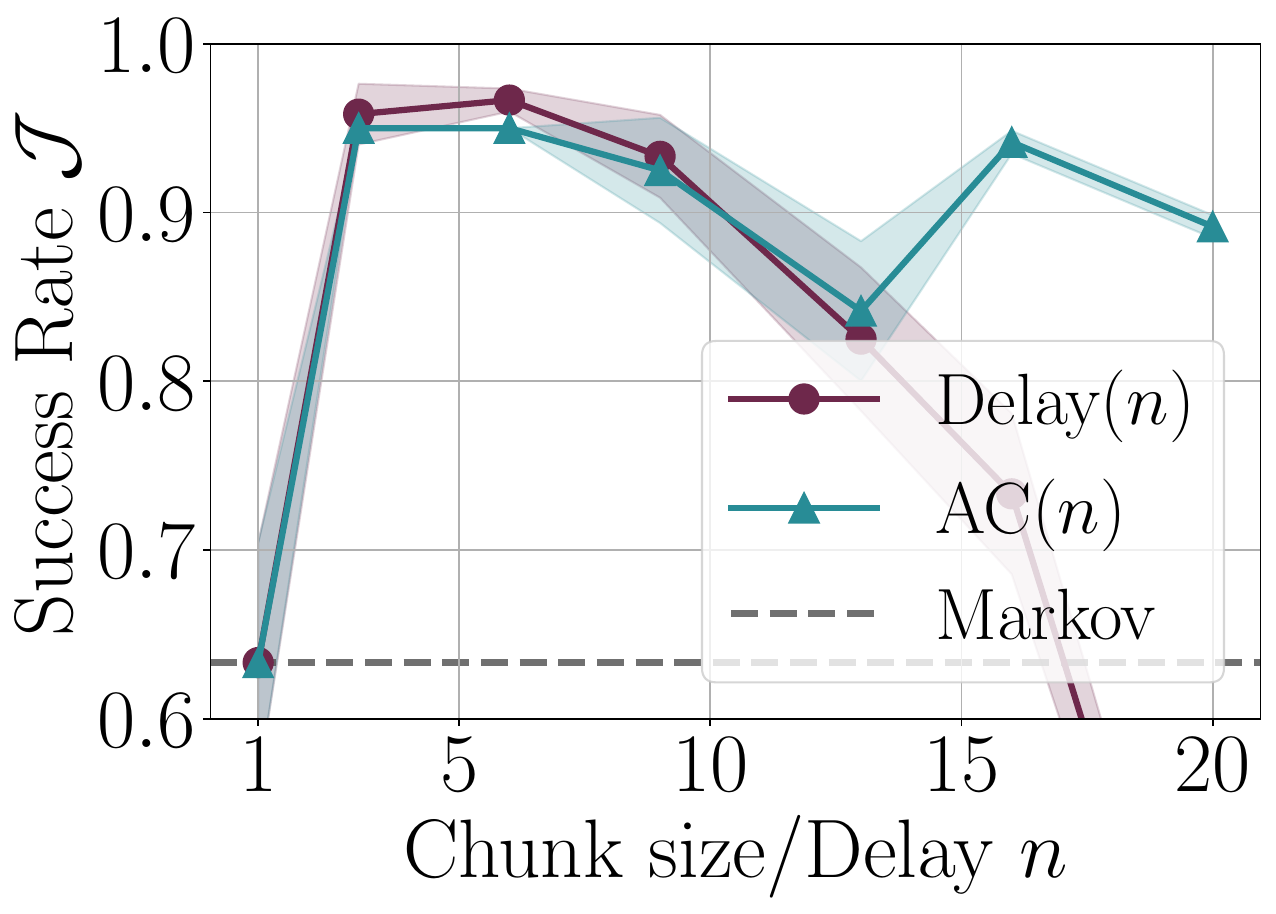}
    \end{minipage}
    \hfill
        \begin{minipage}[t]{0.23\textwidth}
        \centering
        \includegraphics[width=\linewidth]{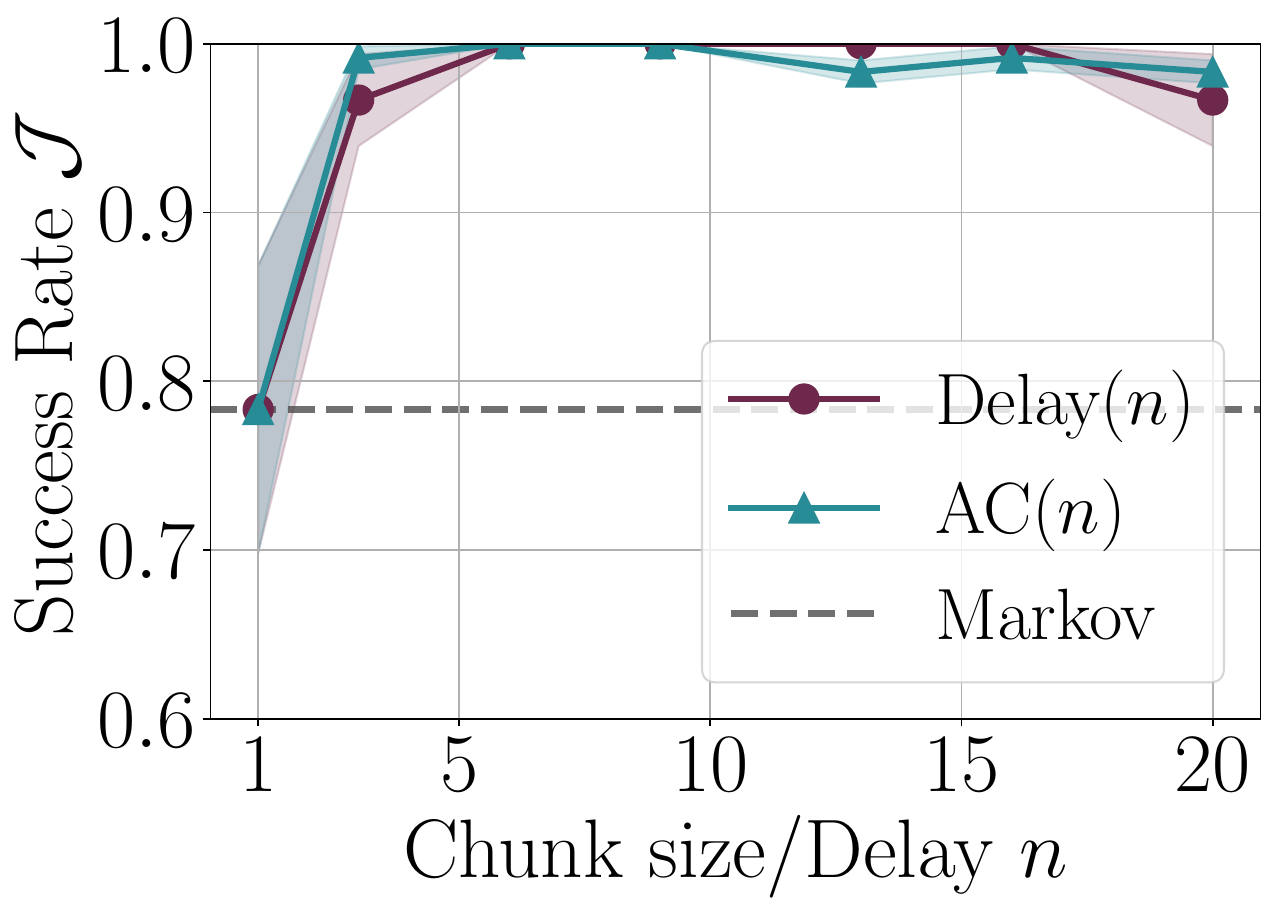}
    \end{minipage}
        \begin{minipage}[t]{0.23\textwidth}
            \includegraphics[width=\linewidth]{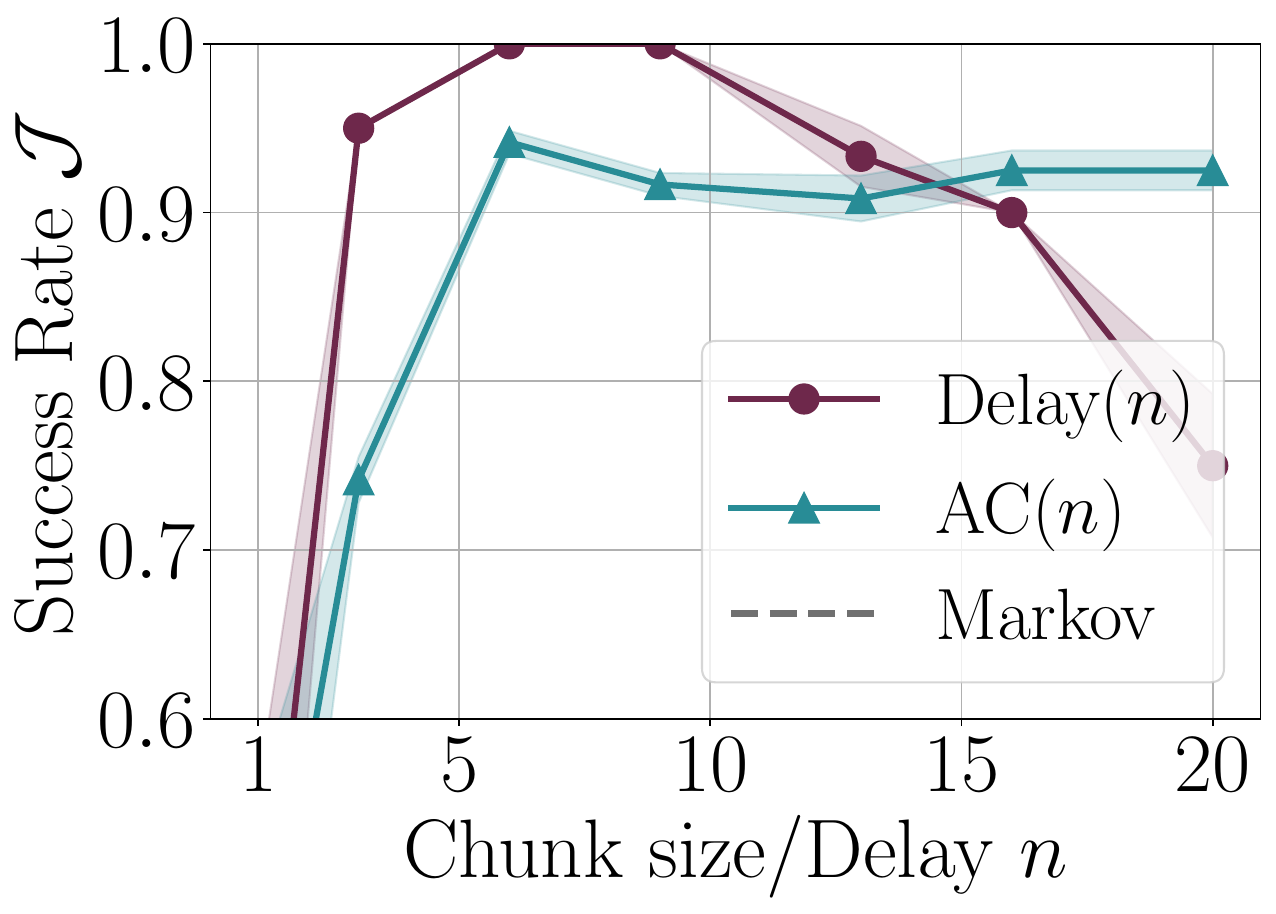}
    \end{minipage}
    \hfill
        \begin{minipage}[t]{0.23\textwidth}
        \centering
        \includegraphics[width=\linewidth]{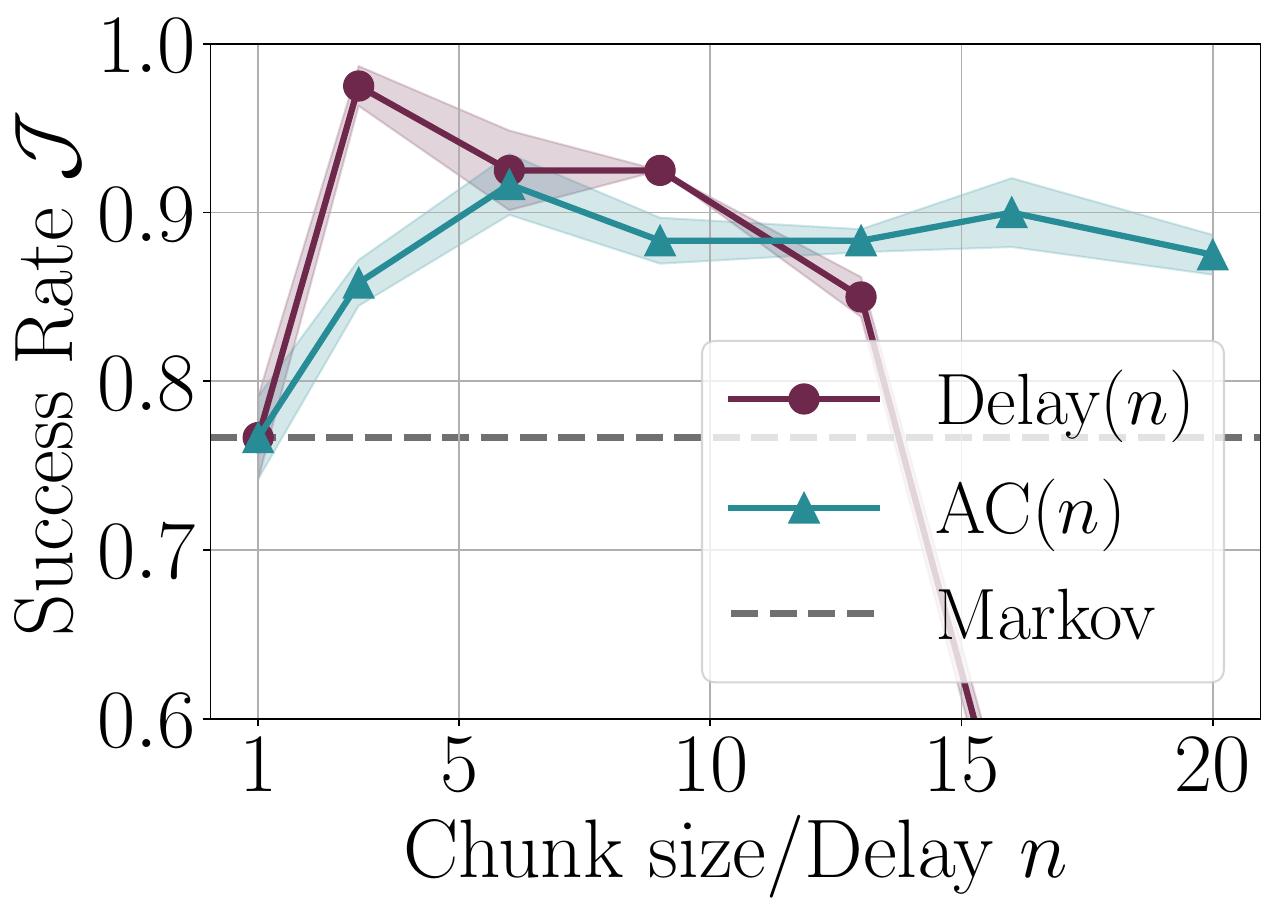}
    \end{minipage}
    \hfill
        \begin{minipage}[t]{0.23\textwidth}
        \centering
        \includegraphics[width=\linewidth]{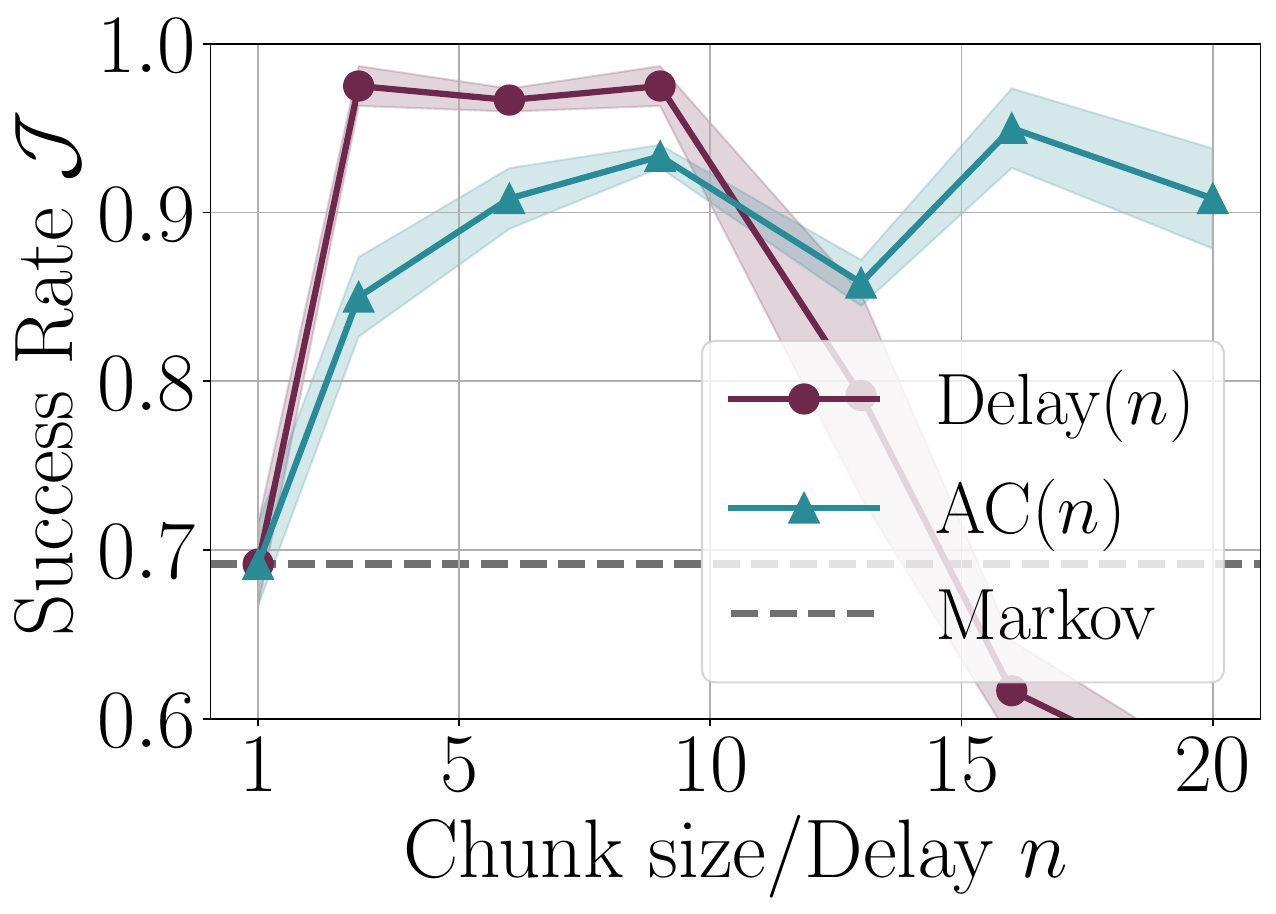}
    \end{minipage}
    \hfill
        \begin{minipage}[t]{0.23\textwidth}
        \centering
        \includegraphics[width=\linewidth]{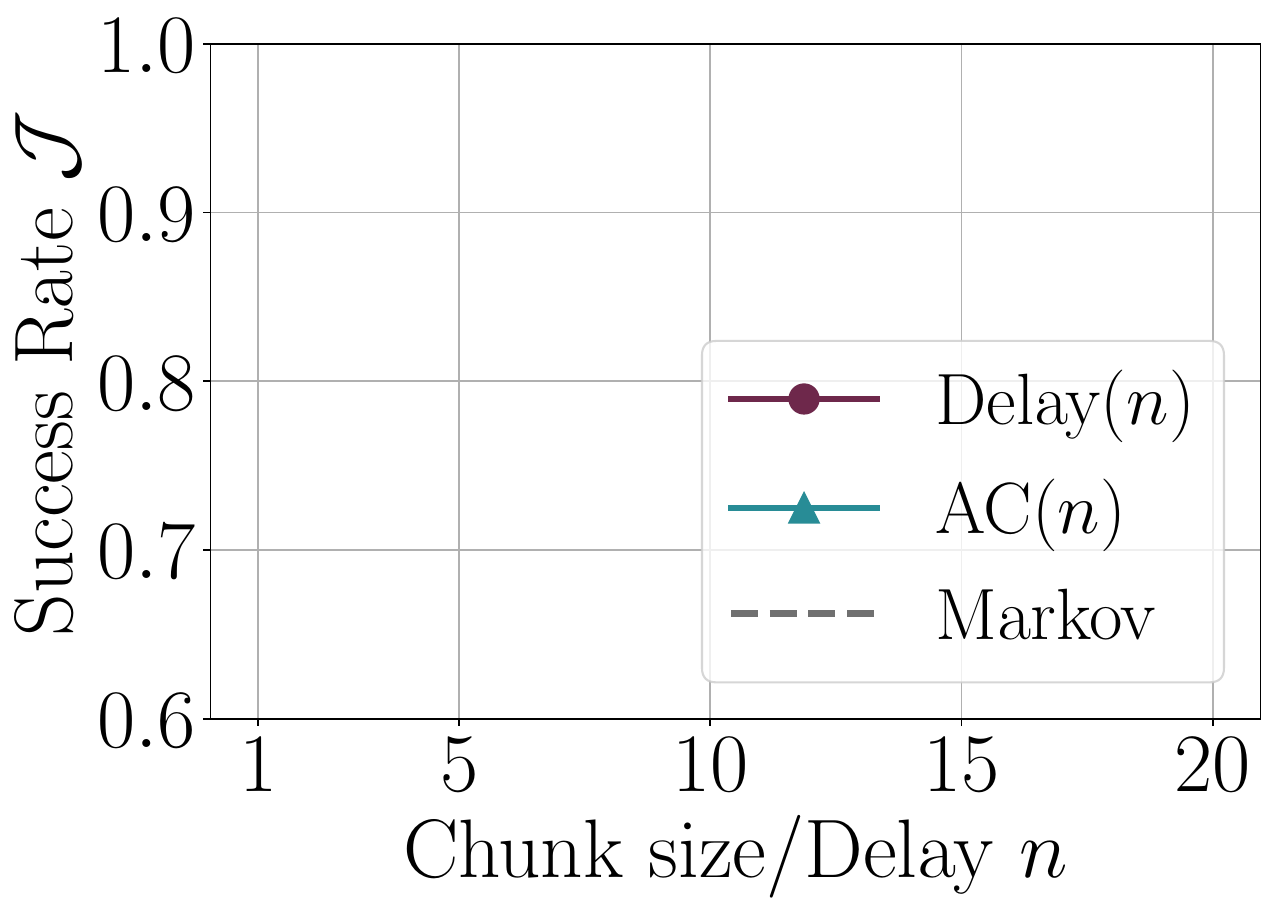}
    \end{minipage}
        \begin{minipage}[t]{0.23\textwidth}
            \includegraphics[width=\linewidth]{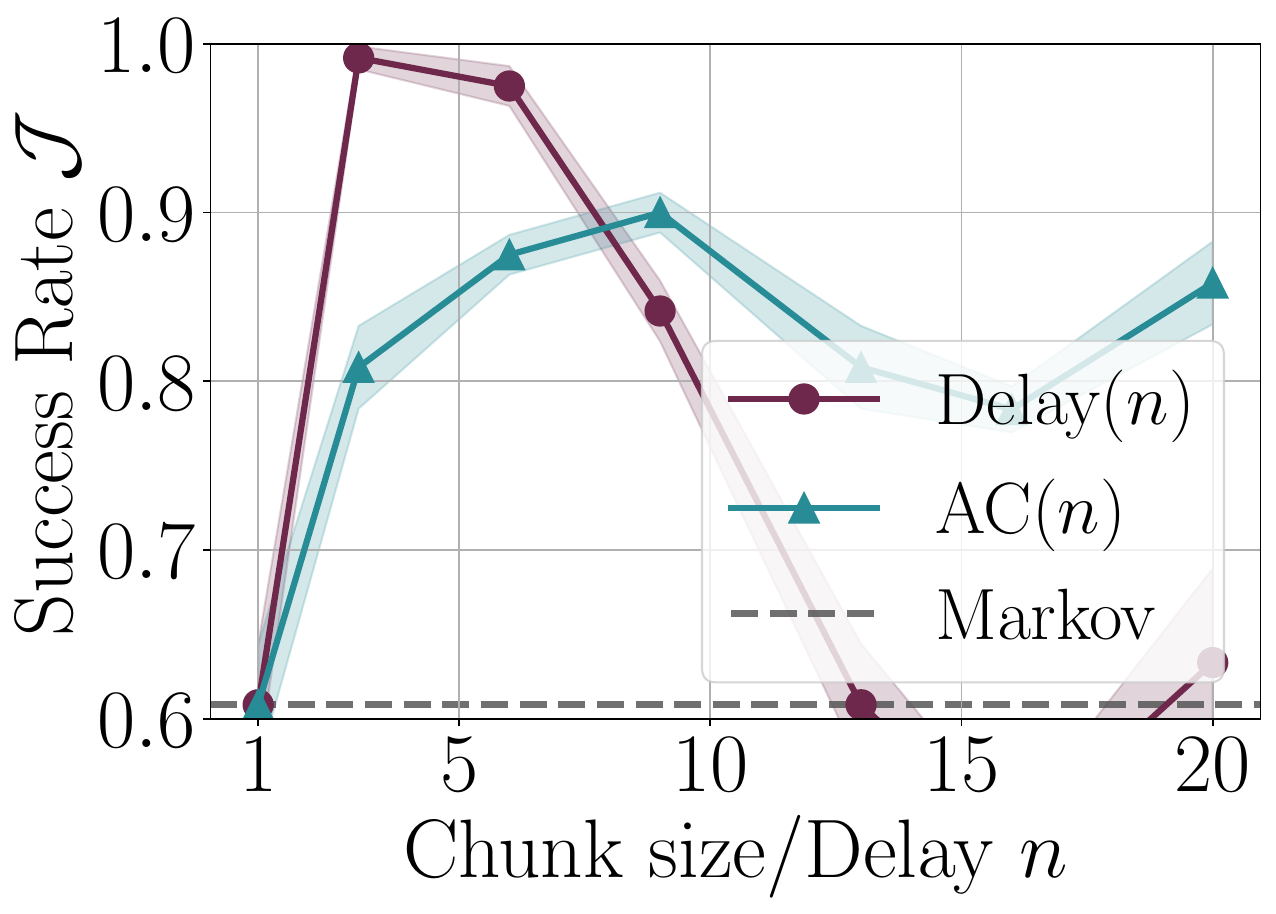}
    \end{minipage}
    \hfill
        \begin{minipage}[t]{0.23\textwidth}
        \centering
        \includegraphics[width=\linewidth]{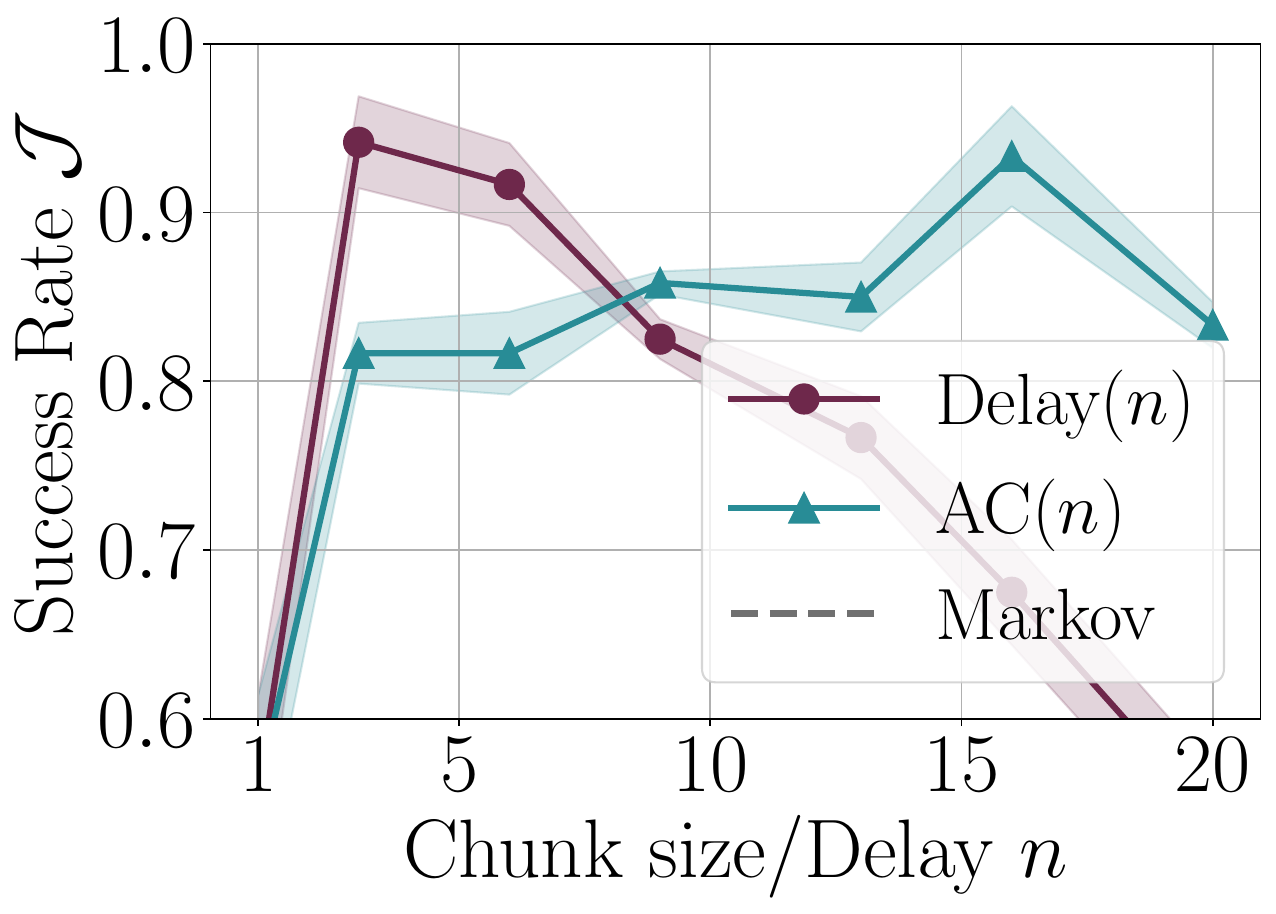}
    \end{minipage}
    \hfill
        \begin{minipage}[t]{0.23\textwidth}
        \centering
        \includegraphics[width=\linewidth]{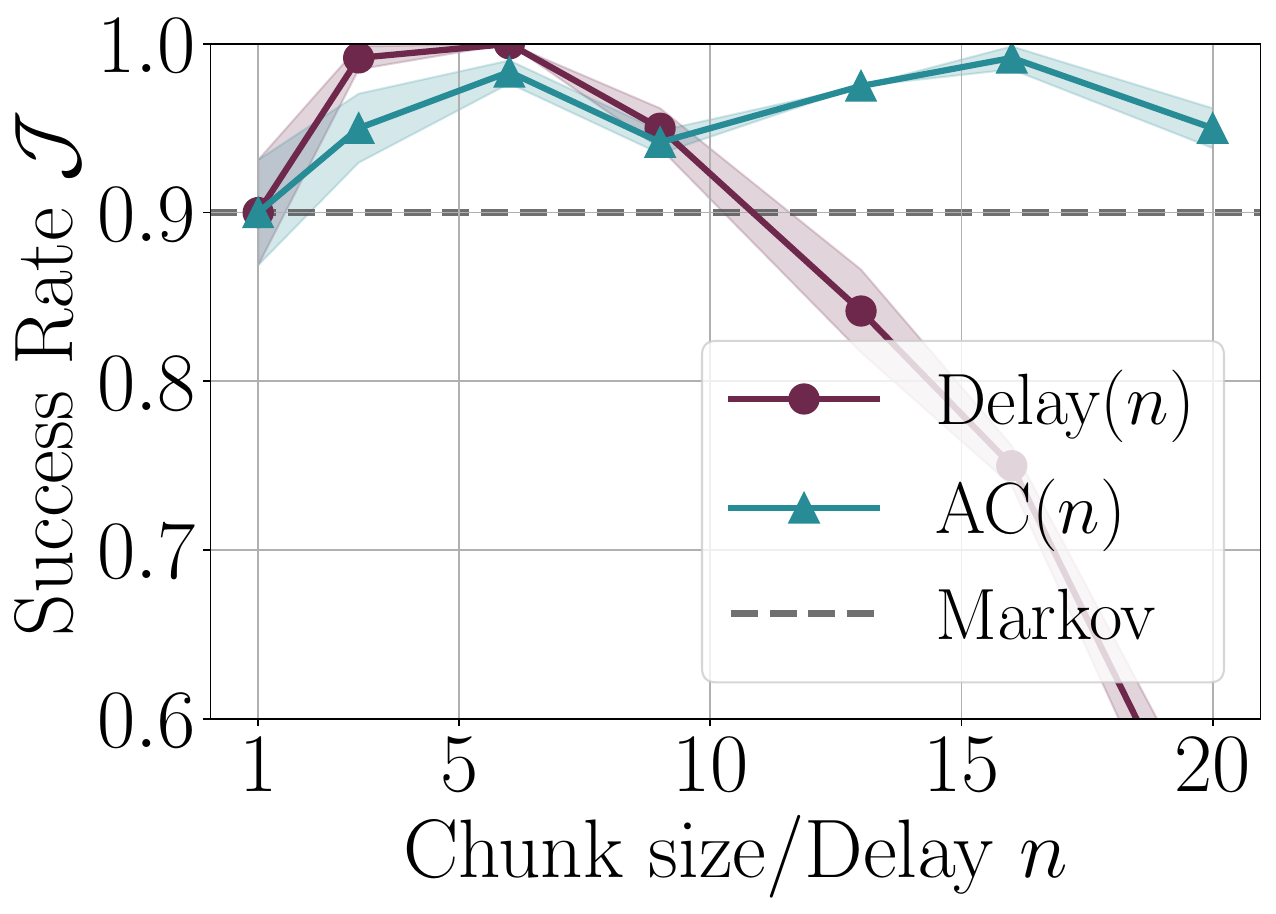}
    \end{minipage}
    \hfill
        \begin{minipage}[t]{0.23\textwidth}
        \centering
        \includegraphics[width=\linewidth]{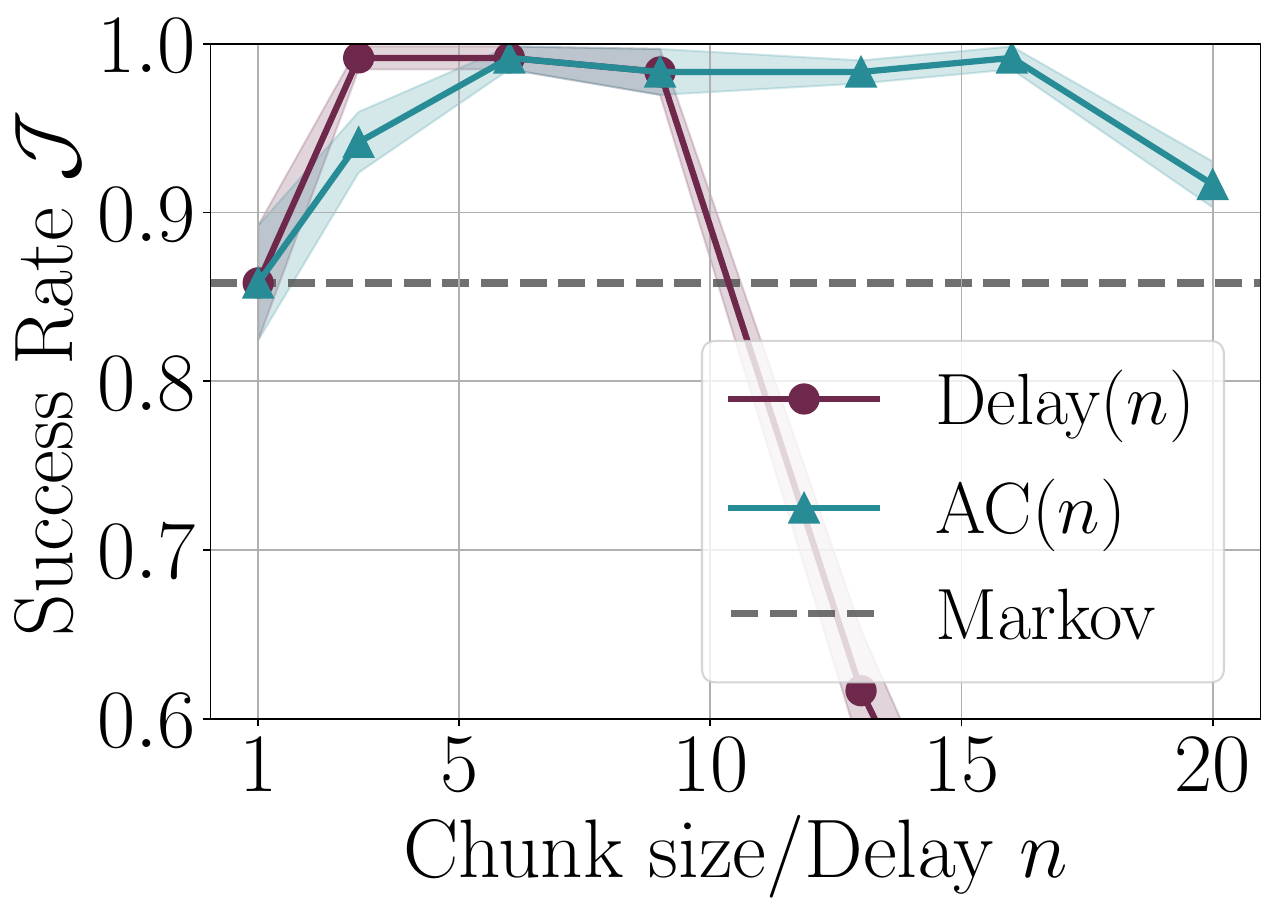}
    \end{minipage}
        \begin{minipage}[t]{0.23\textwidth}
            \includegraphics[width=\linewidth]{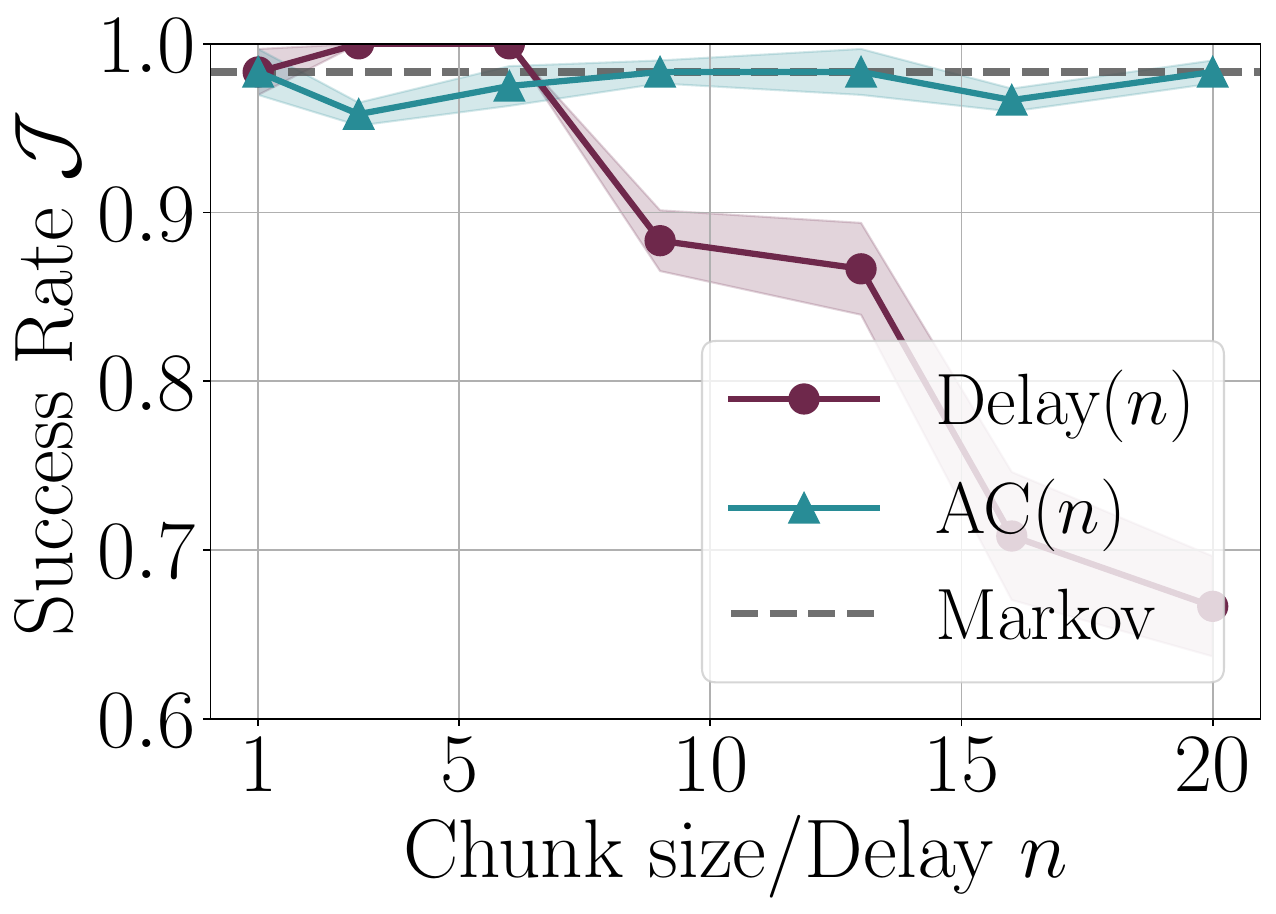}
    \end{minipage}
    \hfill
        \begin{minipage}[t]{0.23\textwidth}
        \centering
        \includegraphics[width=\linewidth]{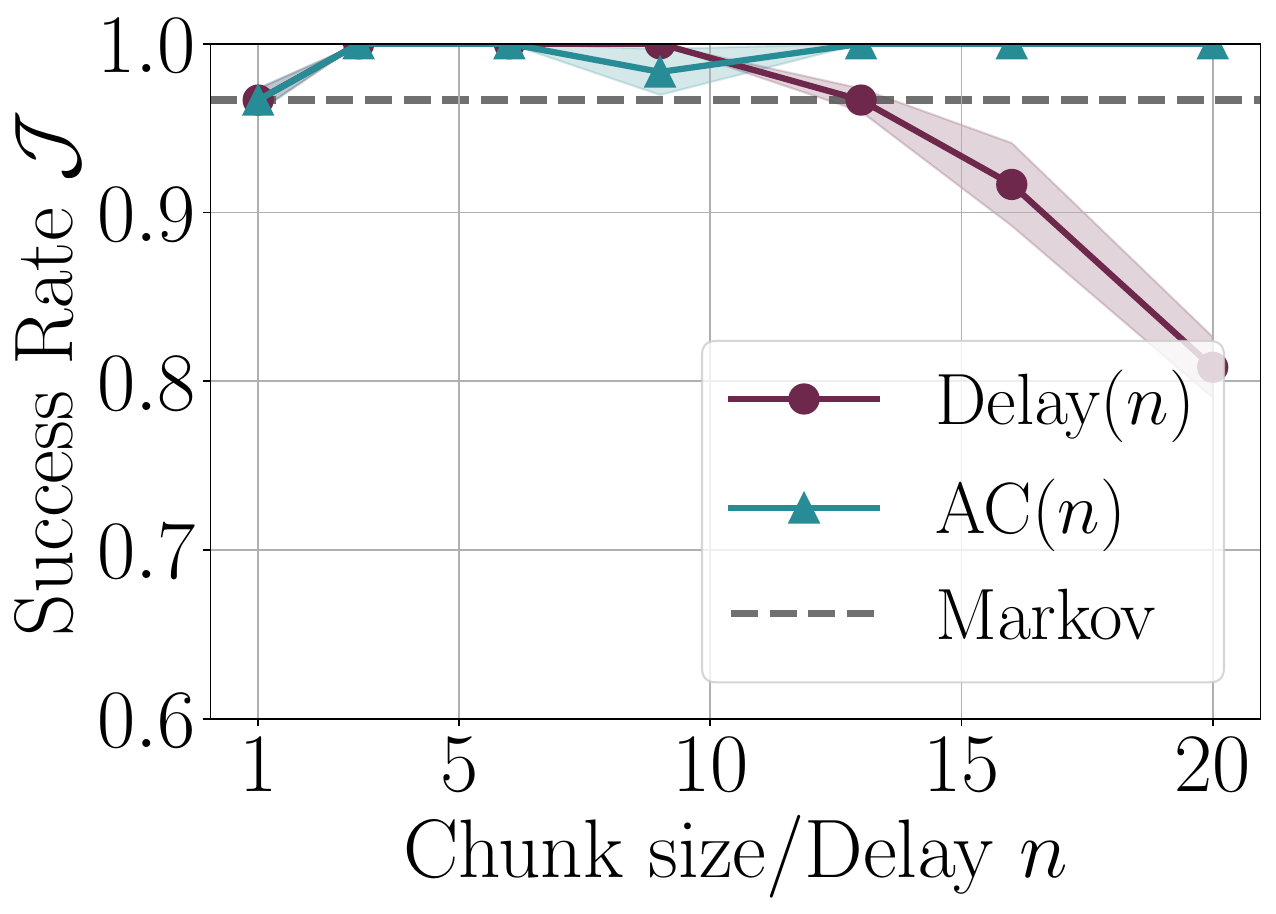}
    \end{minipage}
    \hfill
        \begin{minipage}[t]{0.23\textwidth}
        \centering
        \includegraphics[width=\linewidth]{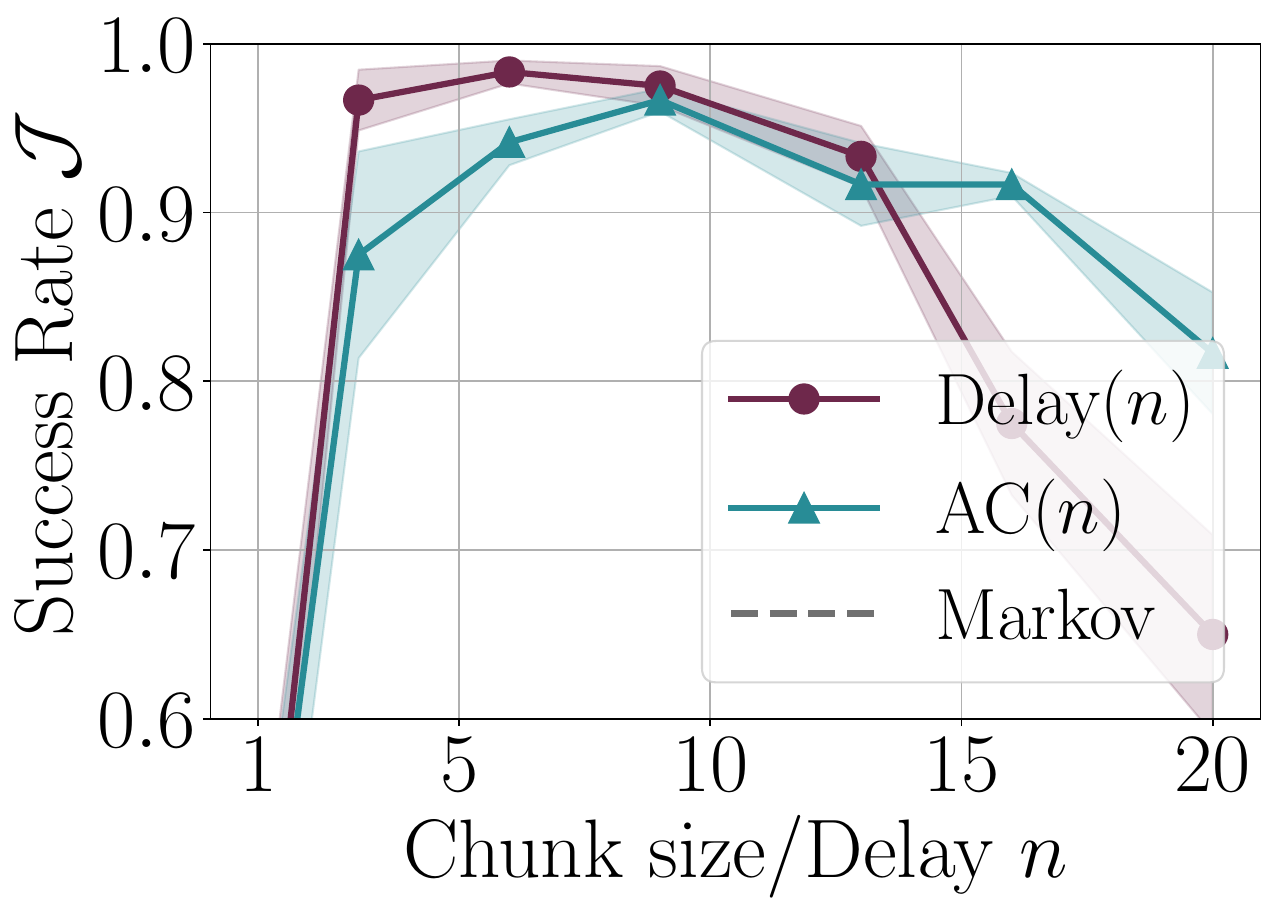}
    \end{minipage}
    \hfill
        \begin{minipage}[t]{0.23\textwidth}
        \centering
        \includegraphics[width=\linewidth]{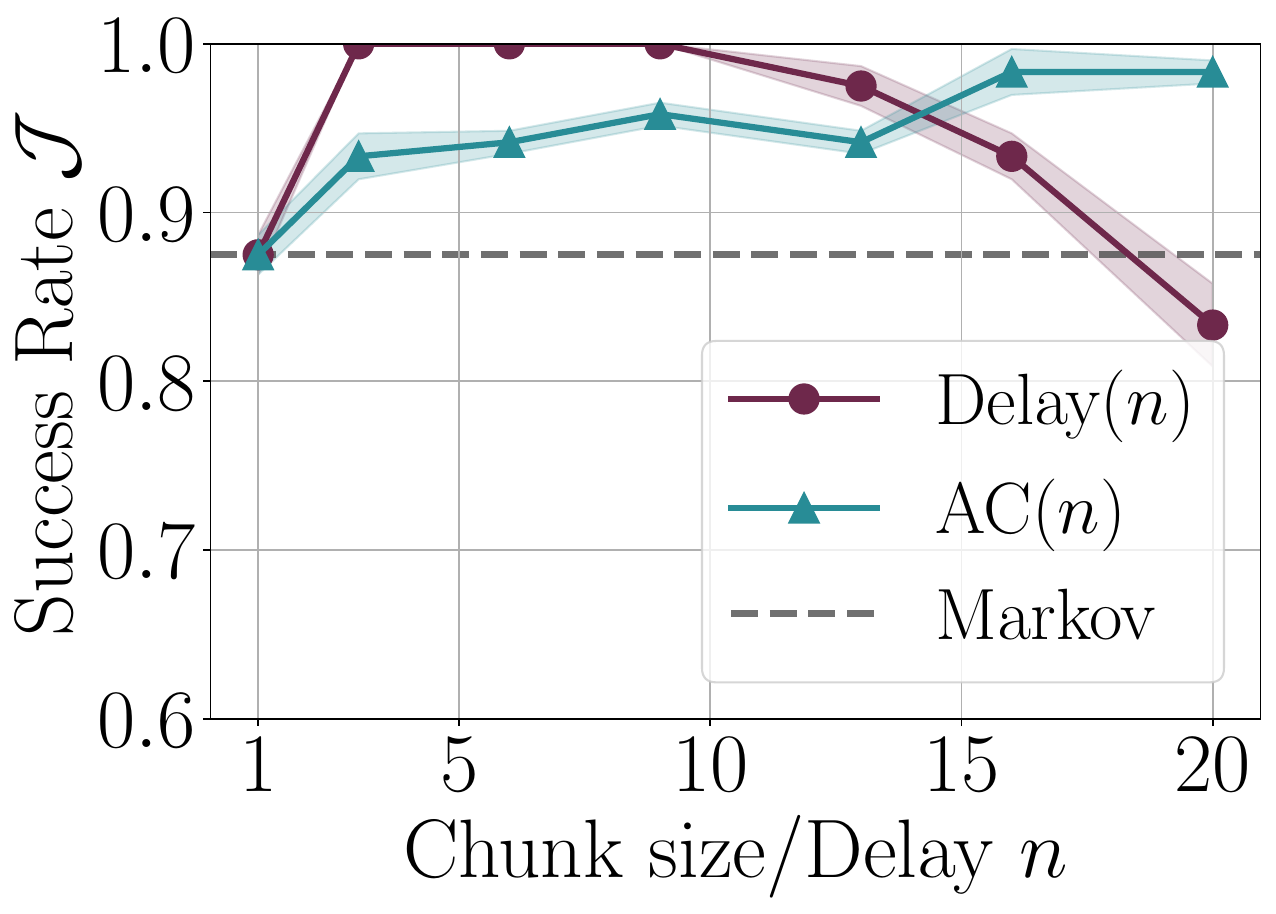}
    \end{minipage}
        \begin{minipage}[t]{0.23\textwidth}
            \includegraphics[width=\linewidth]{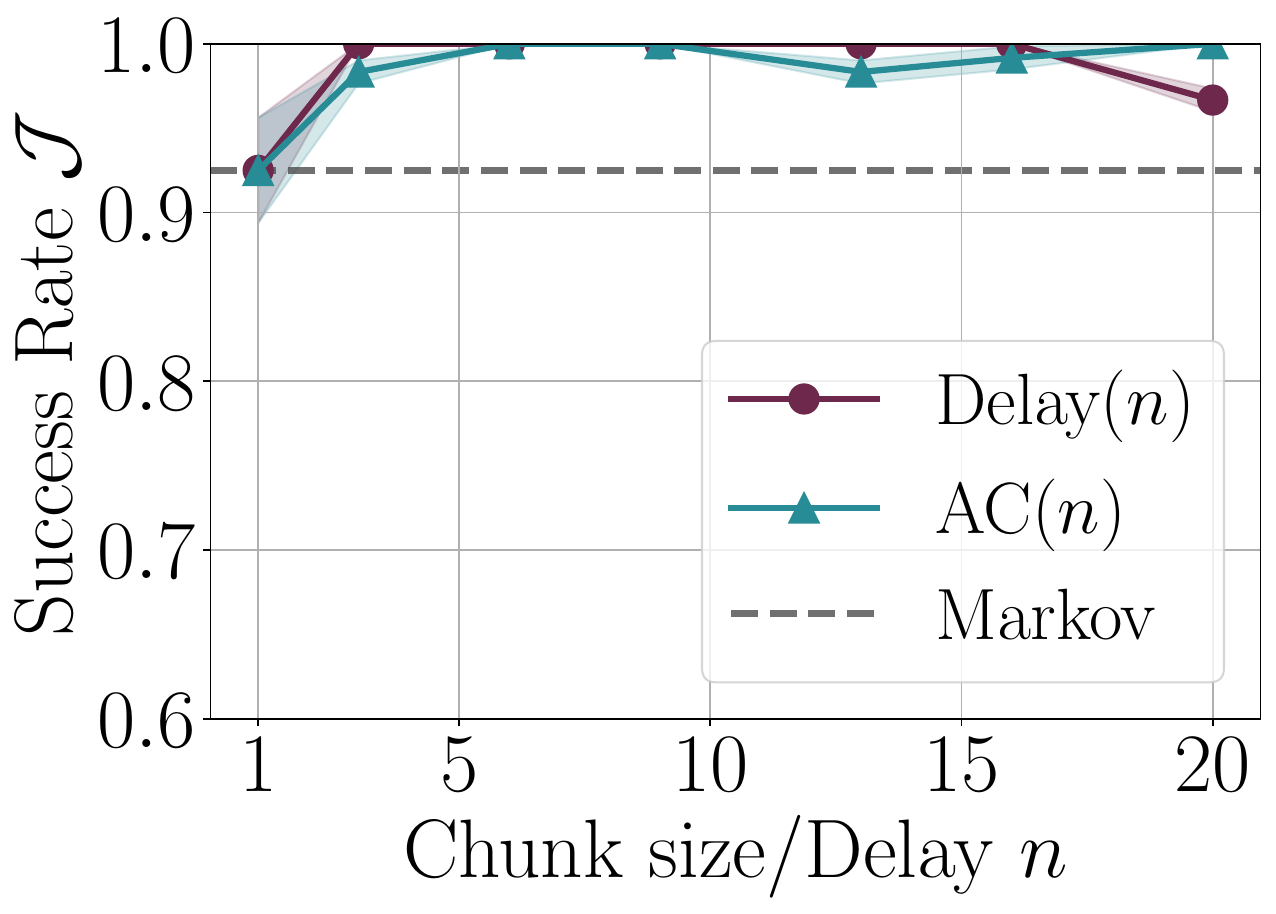}
    \end{minipage}
    \hfill
        \begin{minipage}[t]{0.23\textwidth}
        \centering
        \includegraphics[width=\linewidth]{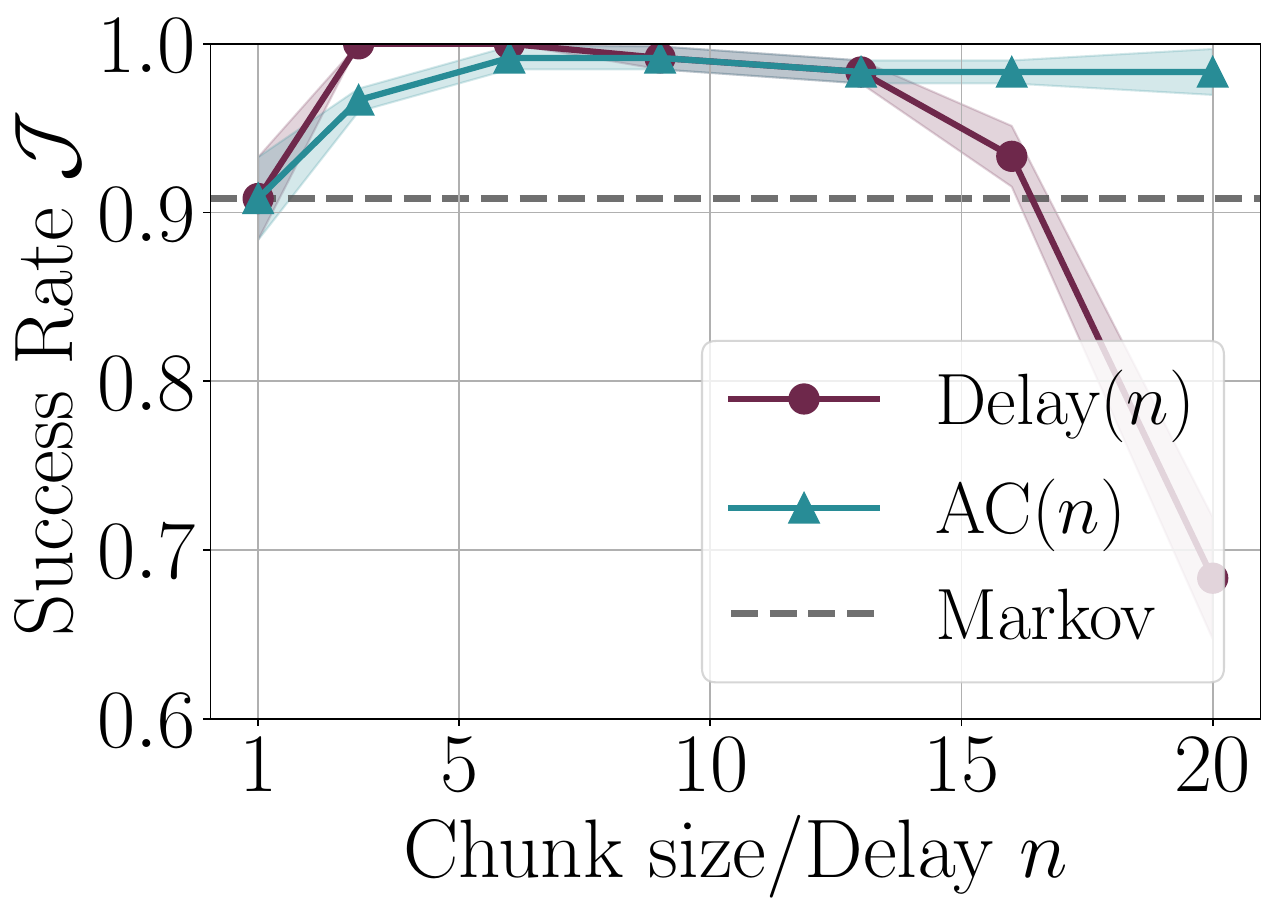}
    \end{minipage}
    \hfill
        \begin{minipage}[t]{0.23\textwidth}
        \centering
        \includegraphics[width=\linewidth]{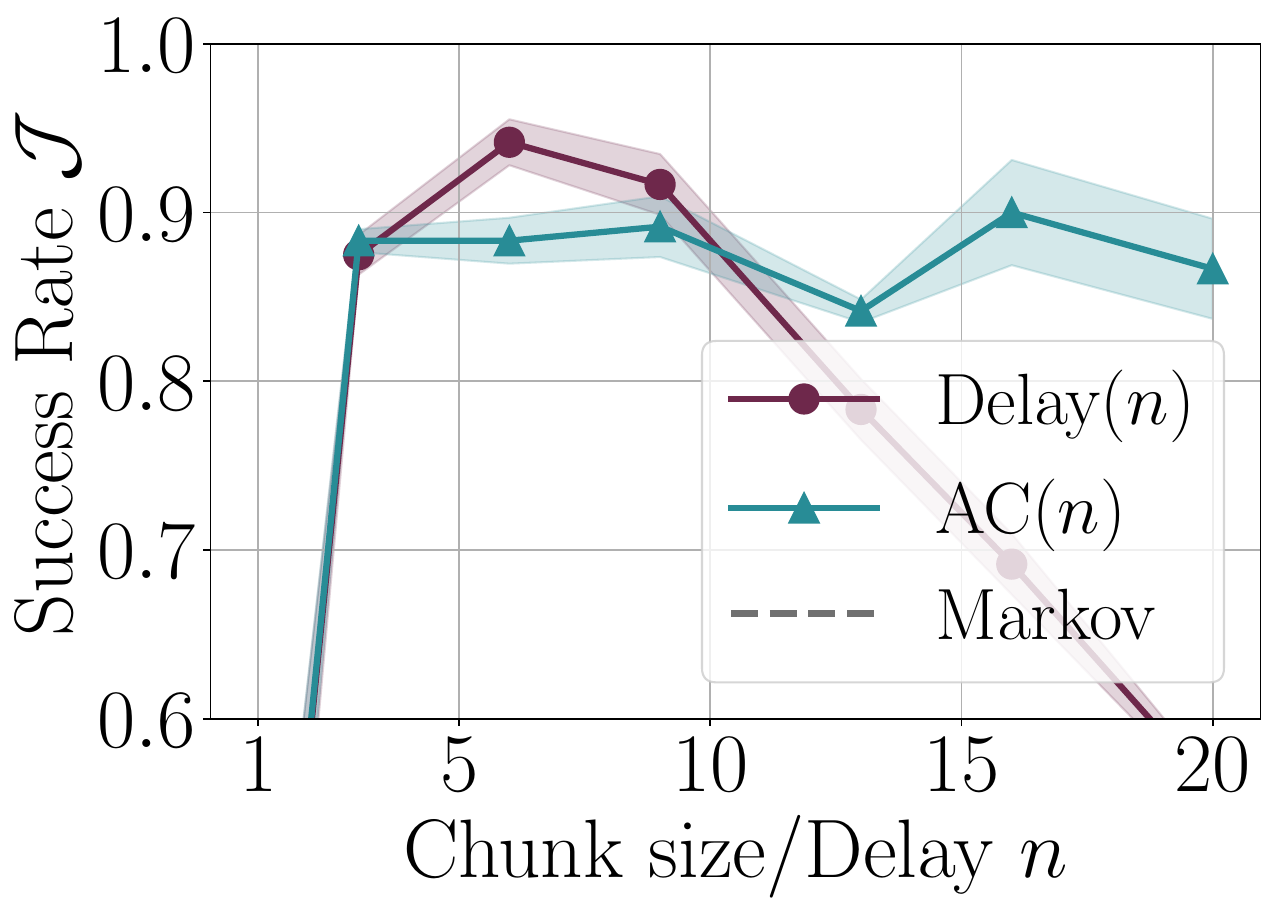}
    \end{minipage}
    \hfill
        \begin{minipage}[t]{0.23\textwidth}
        \centering
        \includegraphics[width=\linewidth]{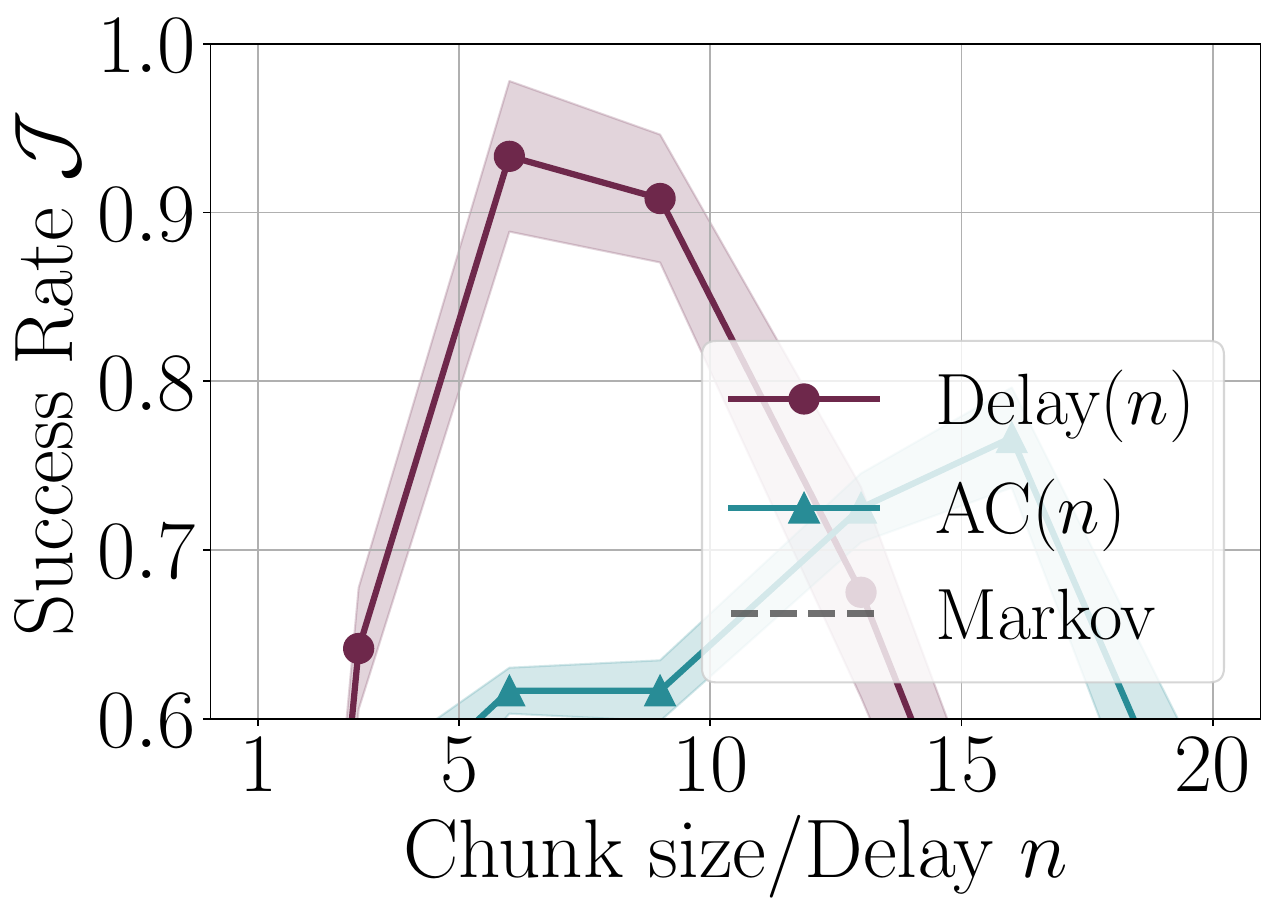}
    \end{minipage}
        \begin{minipage}[t]{0.23\textwidth}
            \includegraphics[width=\linewidth]{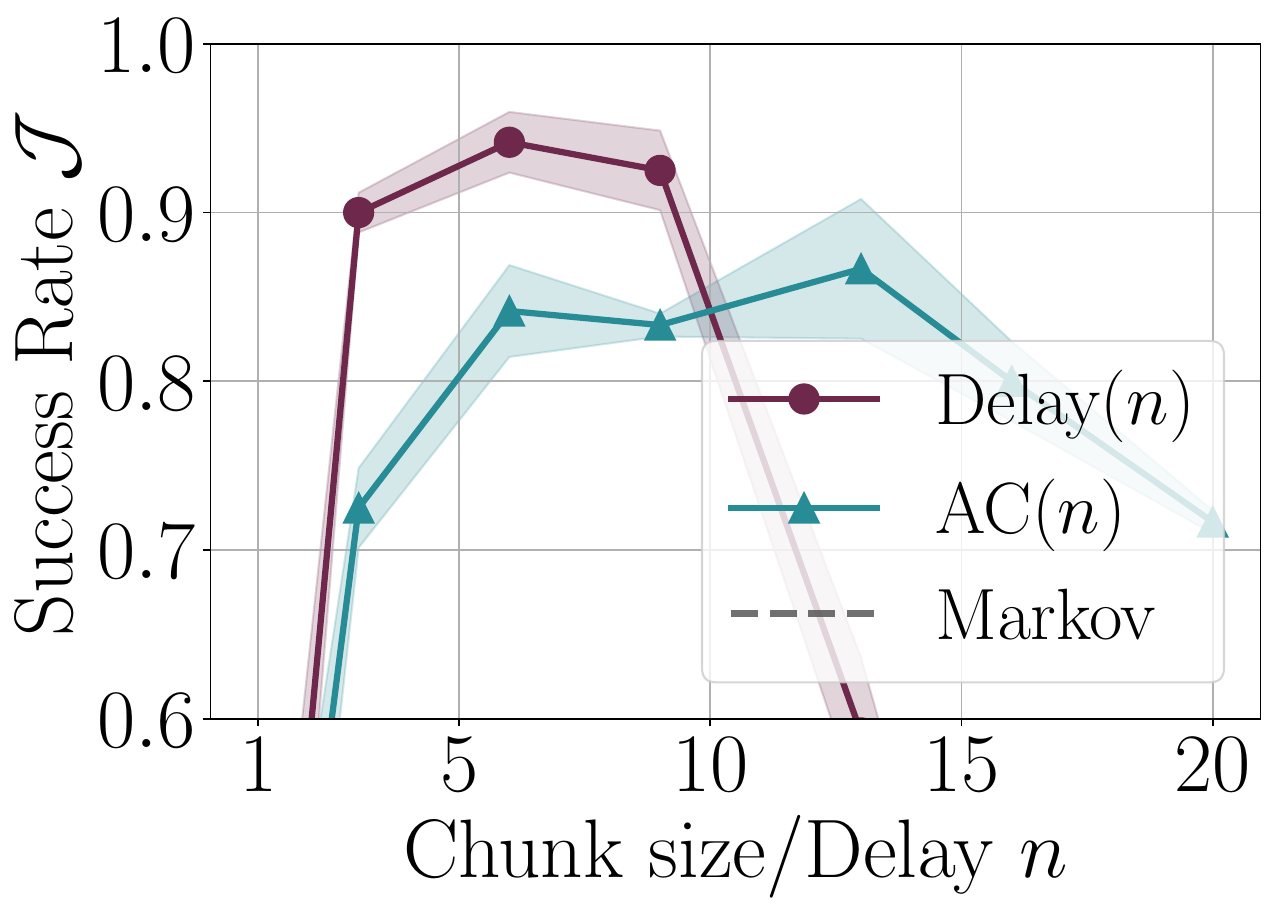}
    \end{minipage}
    \hfill
        \begin{minipage}[t]{0.23\textwidth}
        \centering
        \includegraphics[width=\linewidth]{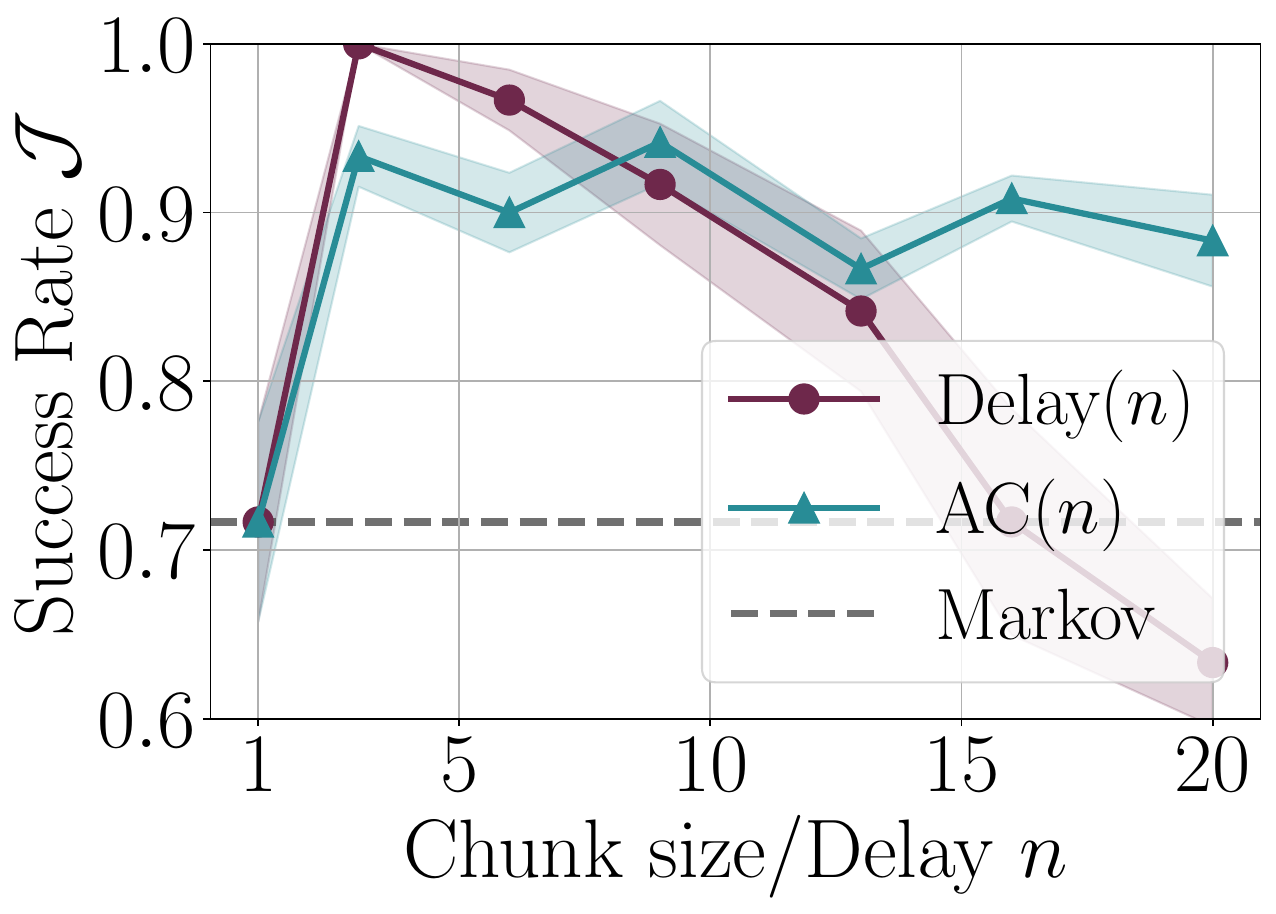}
    \end{minipage}
    \hfill
        \begin{minipage}[t]{0.23\textwidth}
        \centering
        \includegraphics[width=\linewidth]{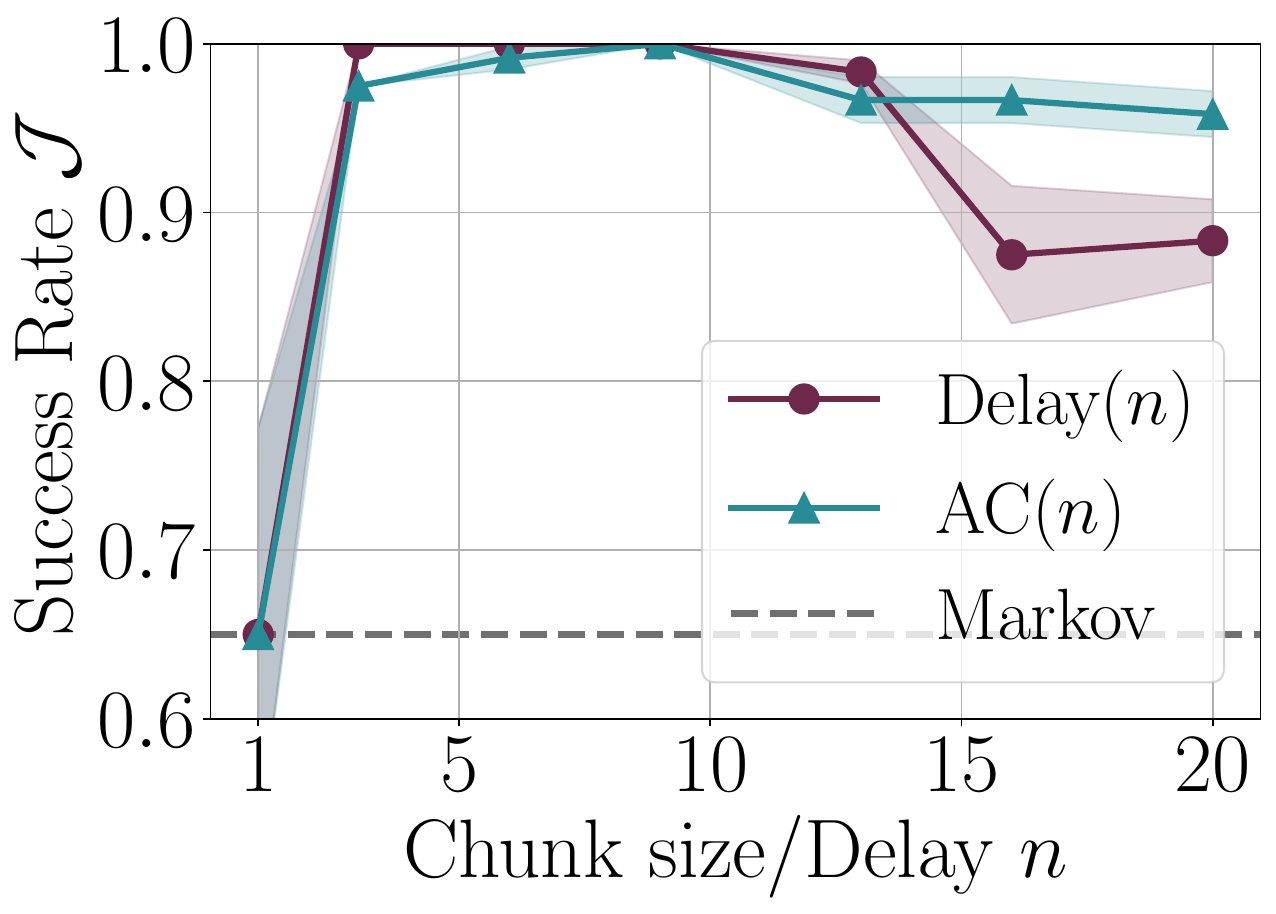}
    \end{minipage}
    \hfill
        \begin{minipage}[t]{0.23\textwidth}
        \centering
        \includegraphics[width=\linewidth]{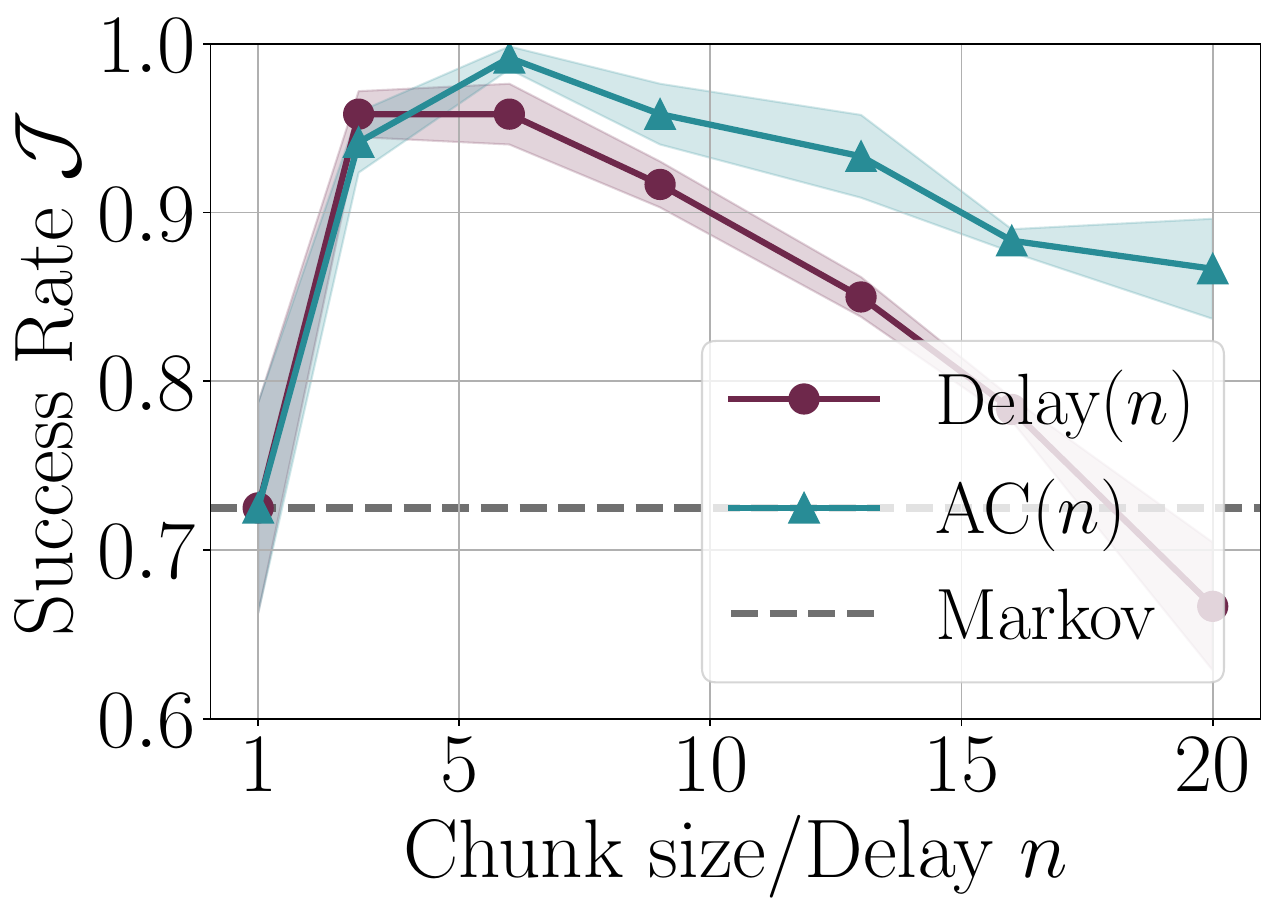}
    \end{minipage}
        \caption{Success rate for each \texttt{Libero} task from 36 to 67 (corresponding to Fig. \ref{fig:success_libero}), part 2.}
    \label{fig:succ each libero2}
\end{figure*}

\begin{figure*}
        \begin{minipage}[t]{0.23\textwidth}
            \includegraphics[width=\linewidth]{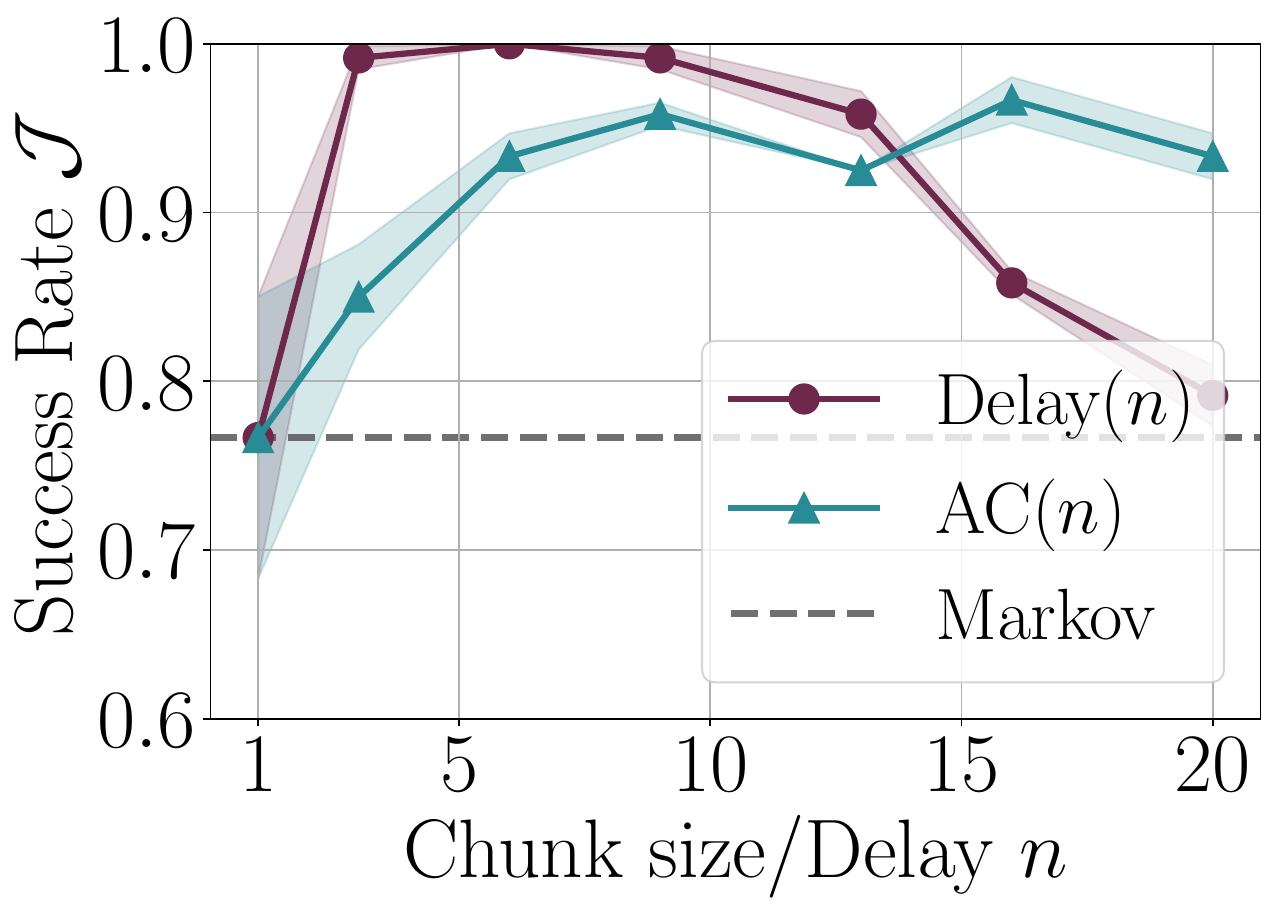}
    \end{minipage}
    \hfill
        \begin{minipage}[t]{0.23\textwidth}
        \centering
        \includegraphics[width=\linewidth]{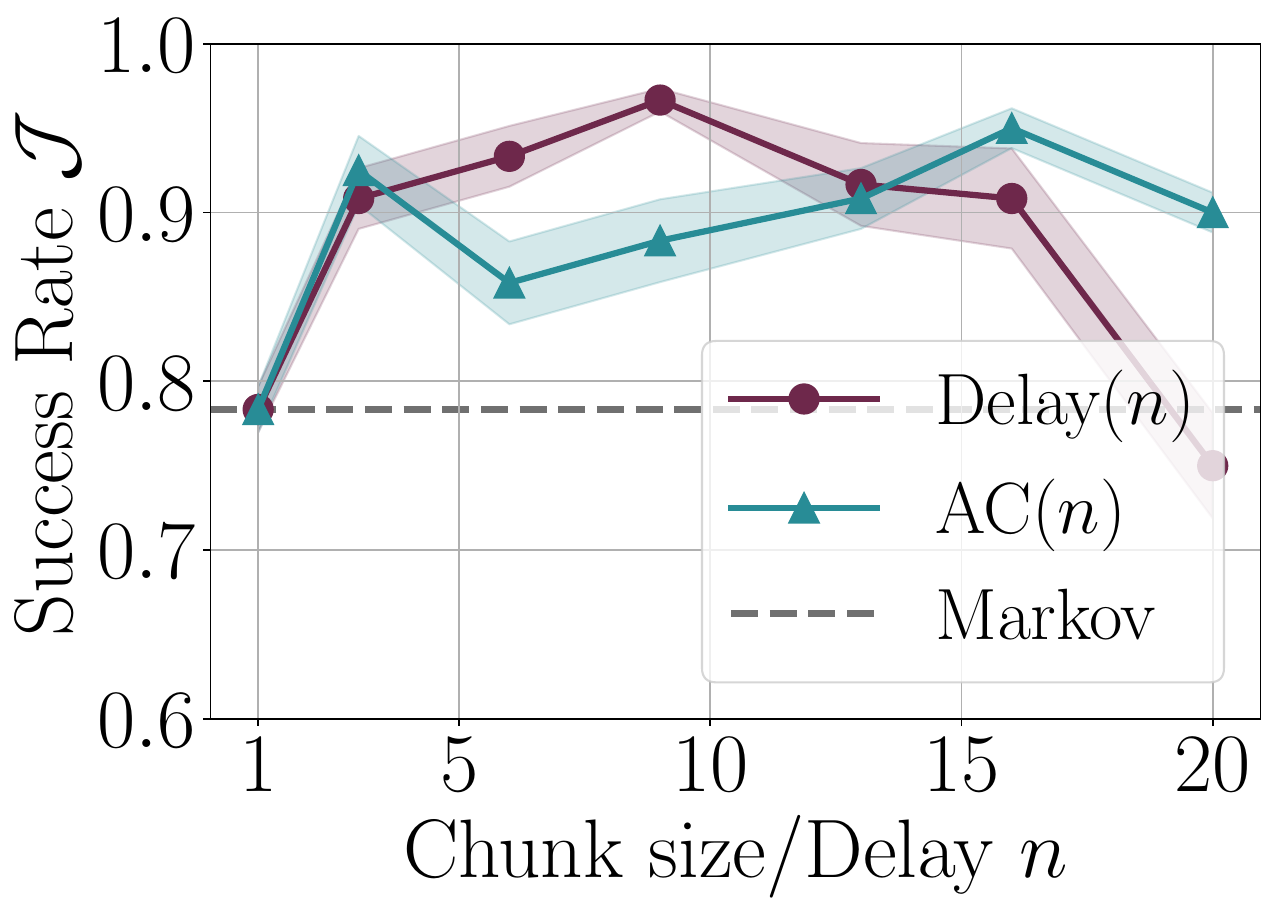}
    \end{minipage}
    \hfill
        \begin{minipage}[t]{0.23\textwidth}
        \centering
        \includegraphics[width=\linewidth]{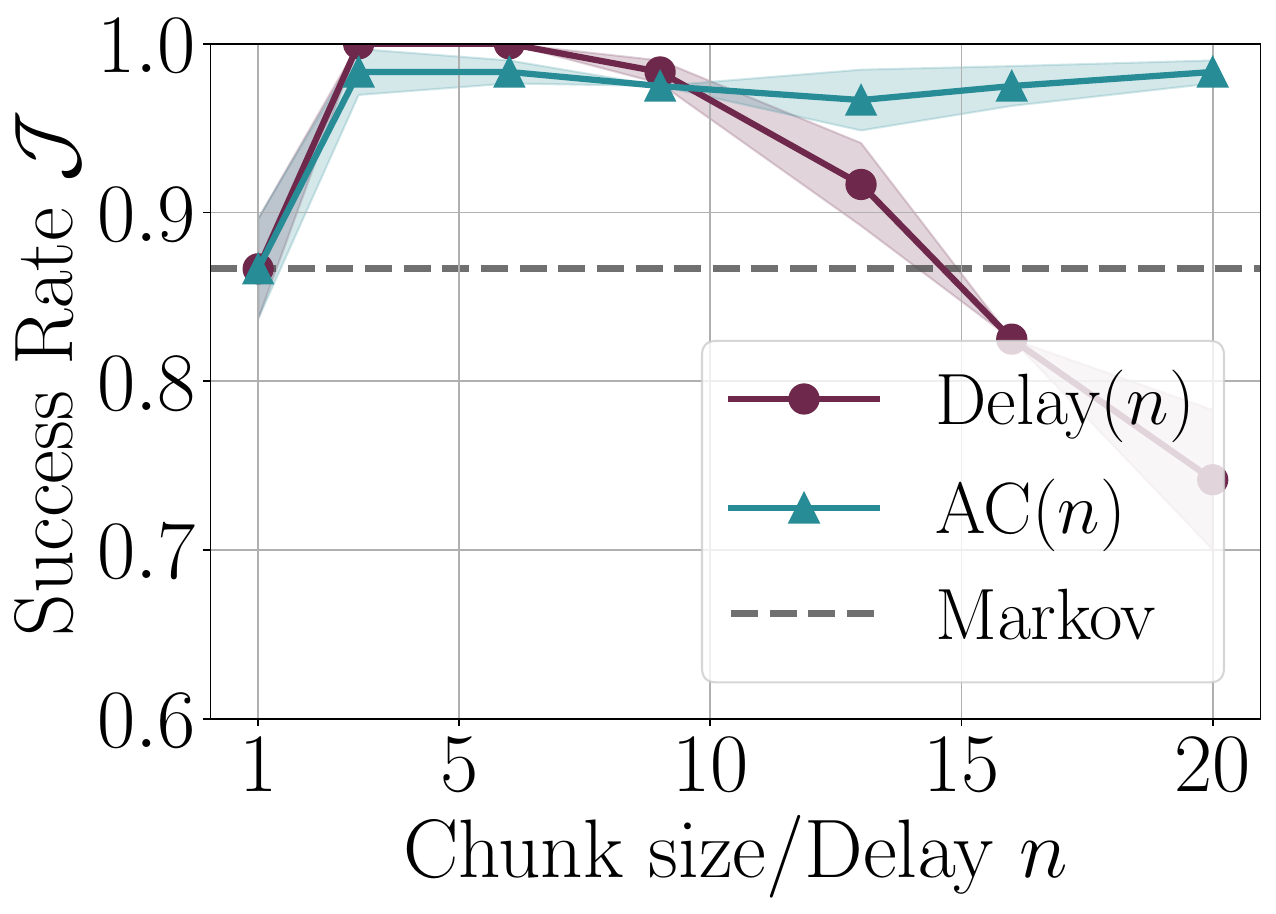}
    \end{minipage}
    \hfill
        \begin{minipage}[t]{0.23\textwidth}
        \centering
        \includegraphics[width=\linewidth]{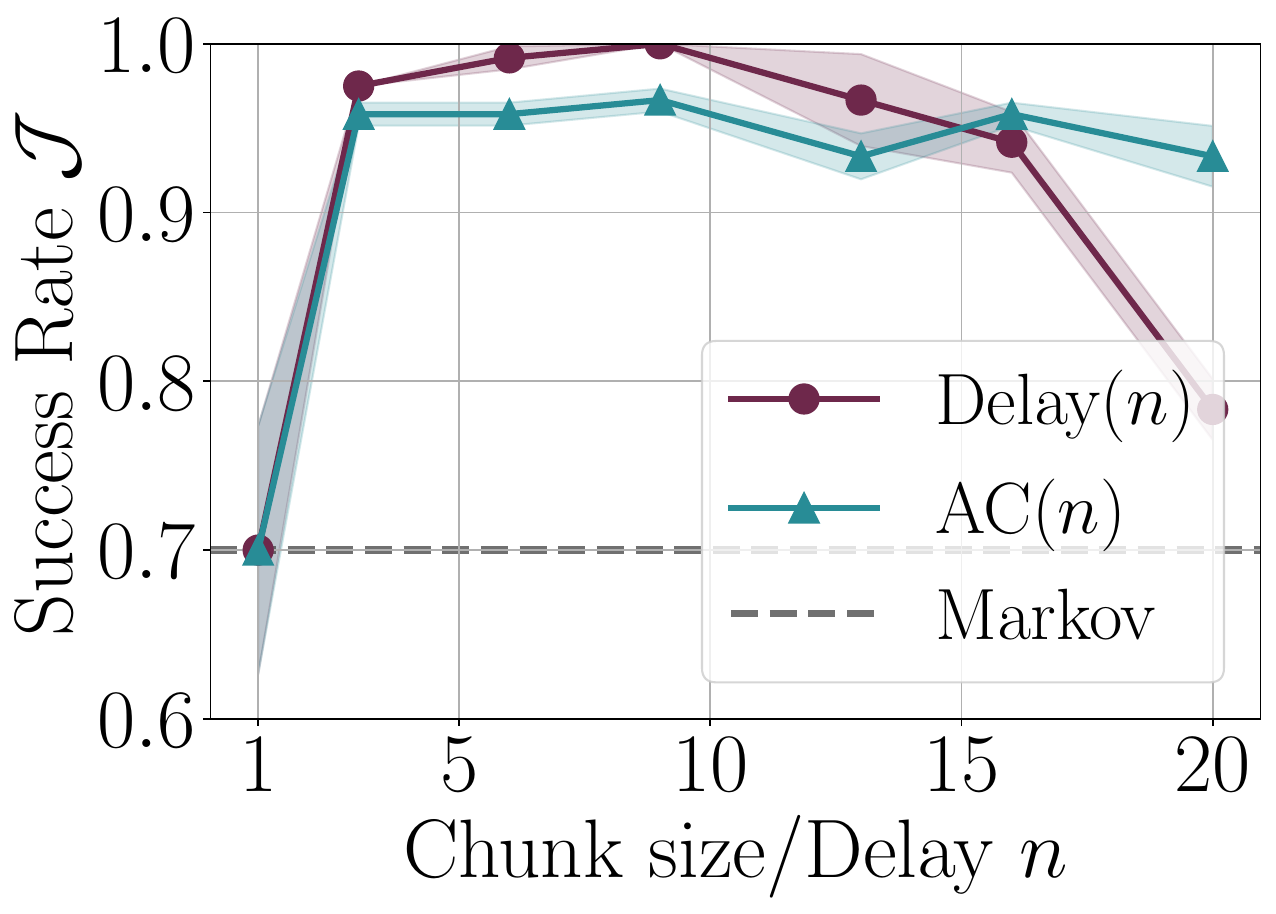}
    \end{minipage}
        \begin{minipage}[t]{0.23\textwidth}
            \includegraphics[width=\linewidth]{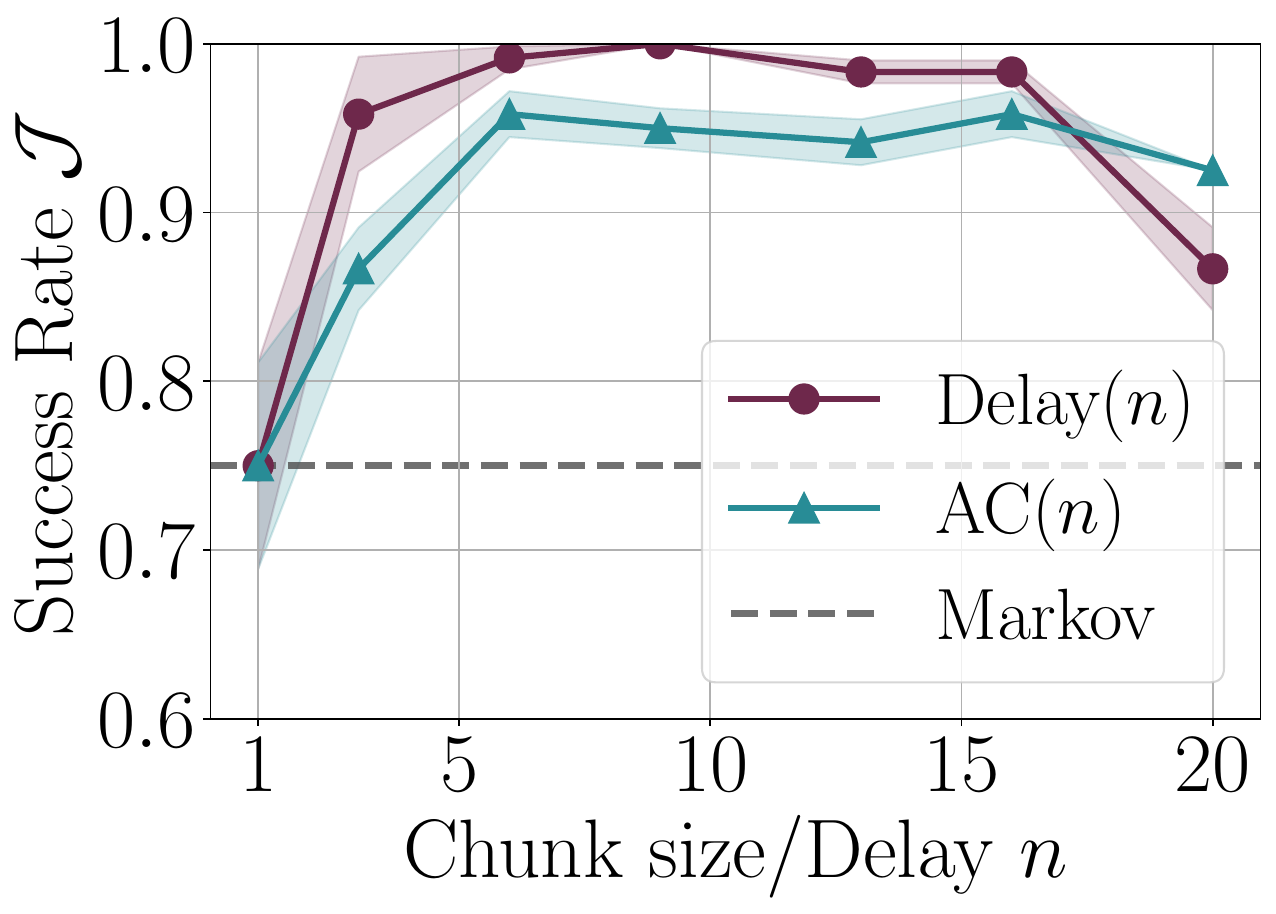}
    \end{minipage}
    \hfill
        \begin{minipage}[t]{0.23\textwidth}
        \centering
        \includegraphics[width=\linewidth]{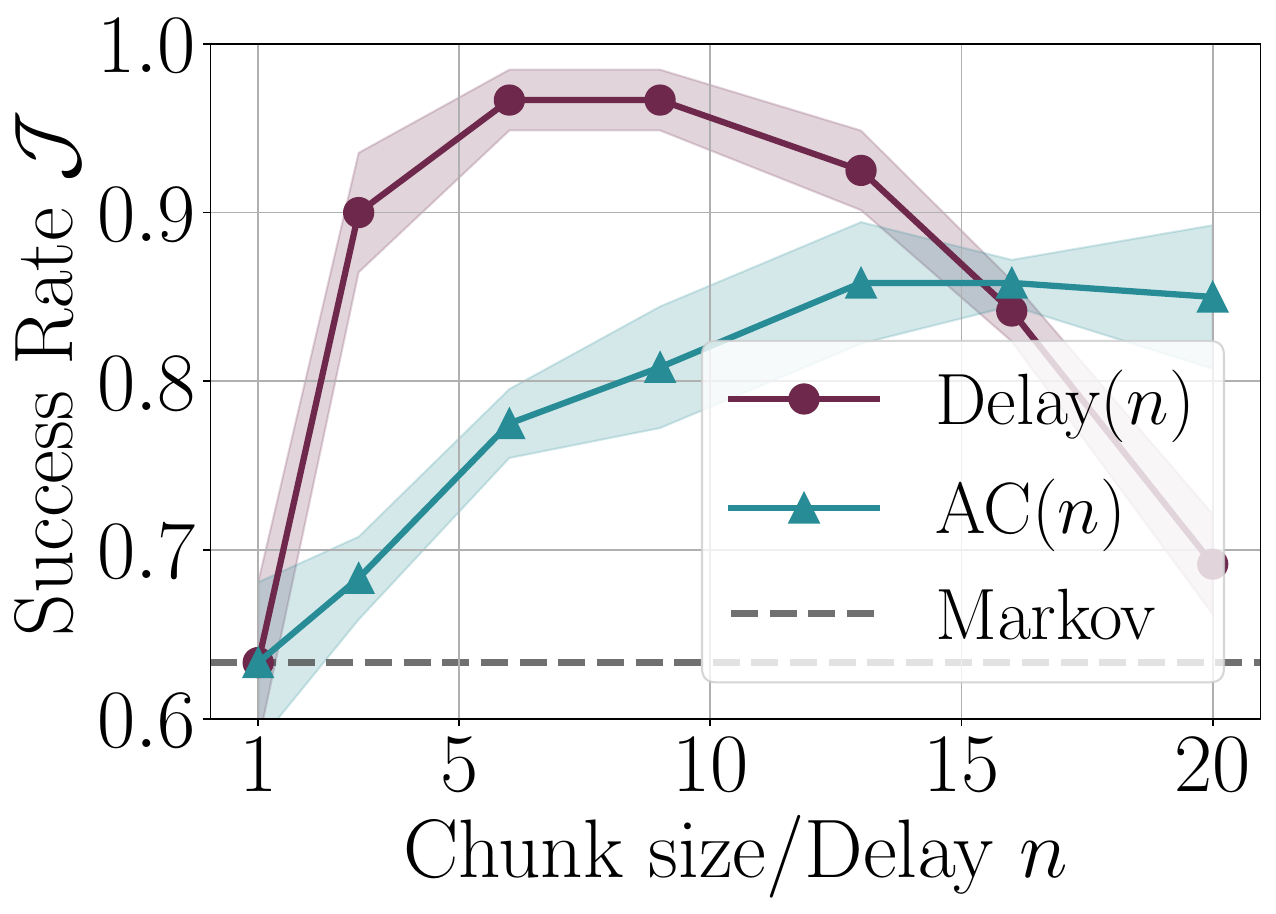}
    \end{minipage}
    \hfill
        \begin{minipage}[t]{0.23\textwidth}
        \centering
        \includegraphics[width=\linewidth]{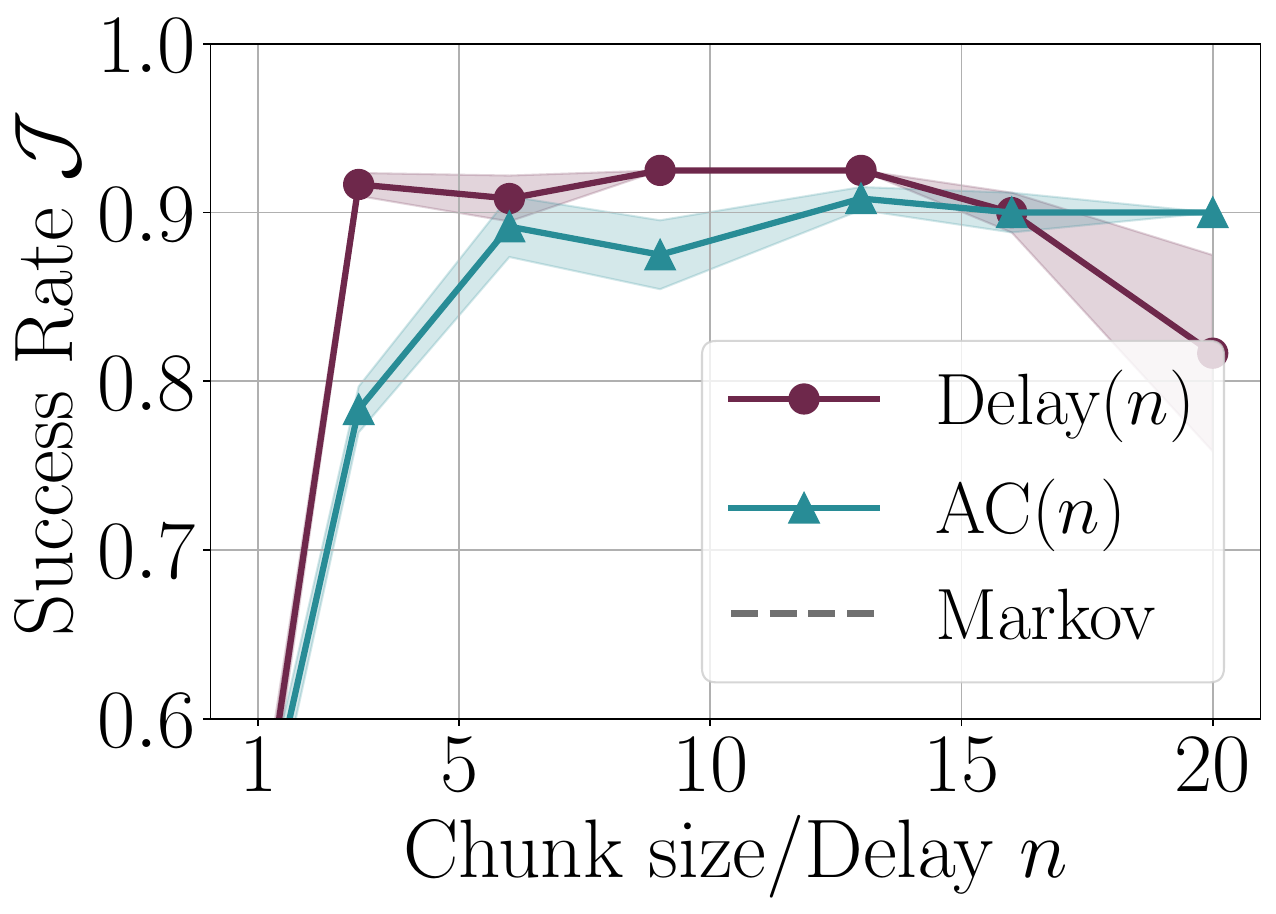}
    \end{minipage}
    \hfill
        \begin{minipage}[t]{0.23\textwidth}
        \centering
        \includegraphics[width=\linewidth]{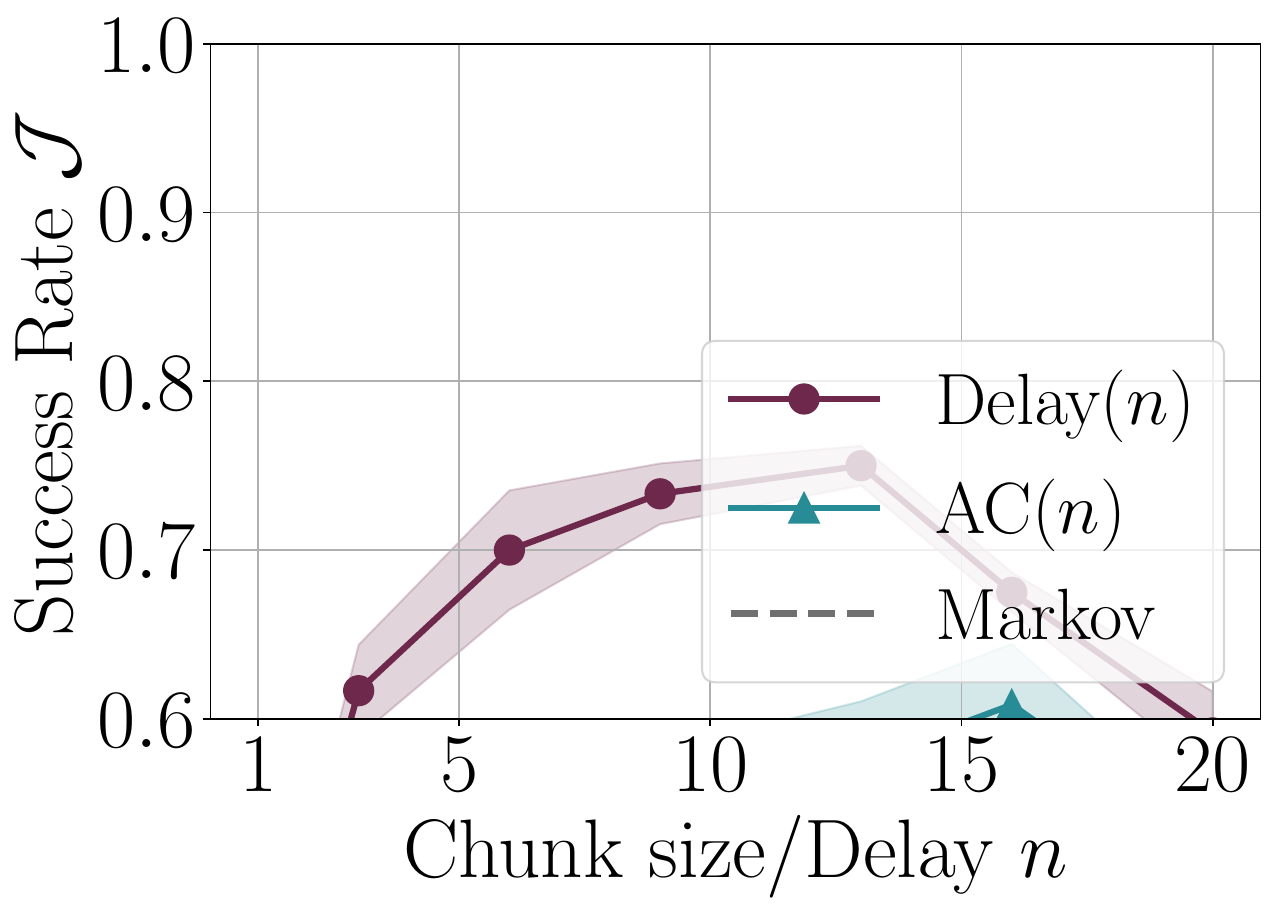}
    \end{minipage}
        \begin{minipage}[t]{0.23\textwidth}
            \includegraphics[width=\linewidth]{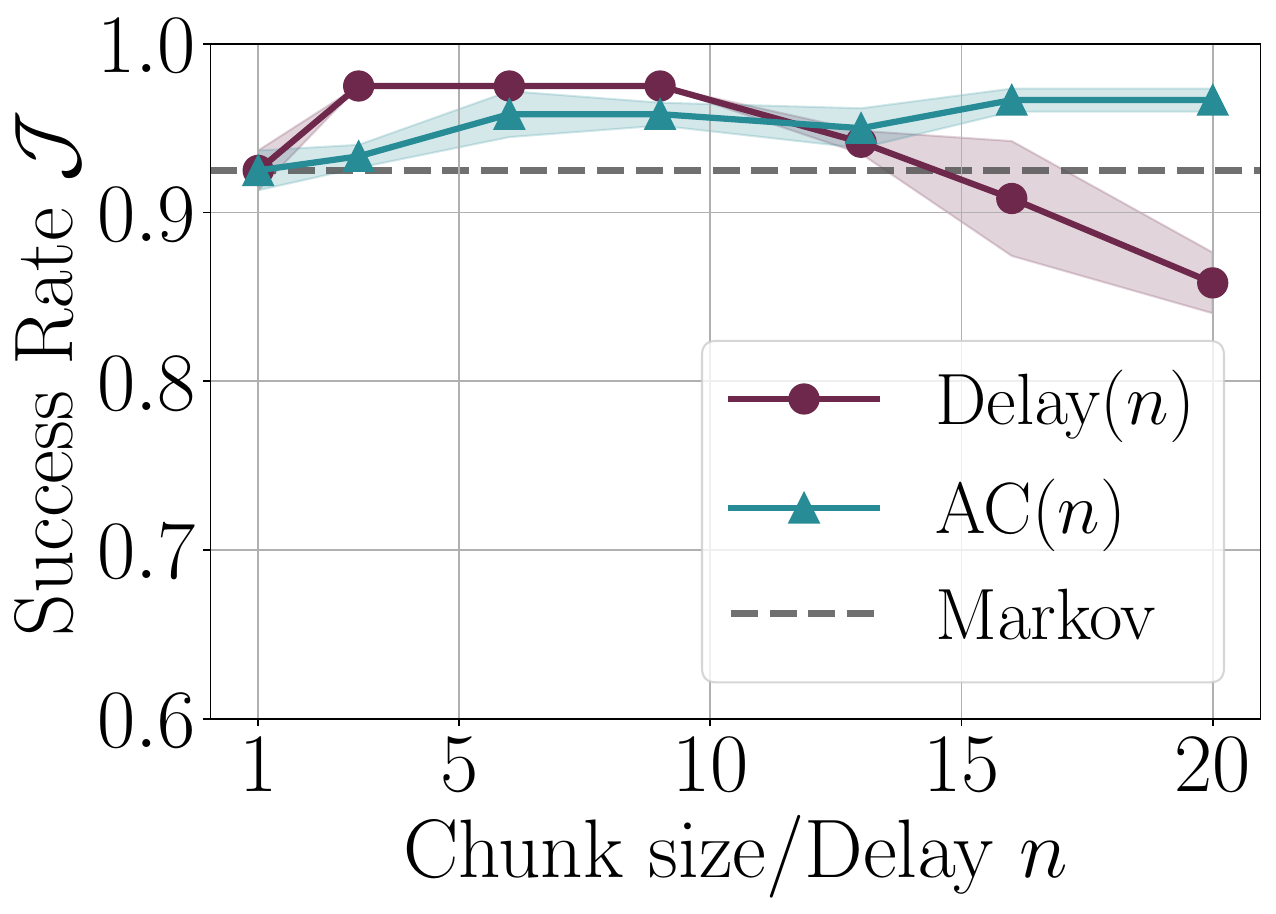}
    \end{minipage}
    \hfill
        \begin{minipage}[t]{0.23\textwidth}
        \centering
        \includegraphics[width=\linewidth]{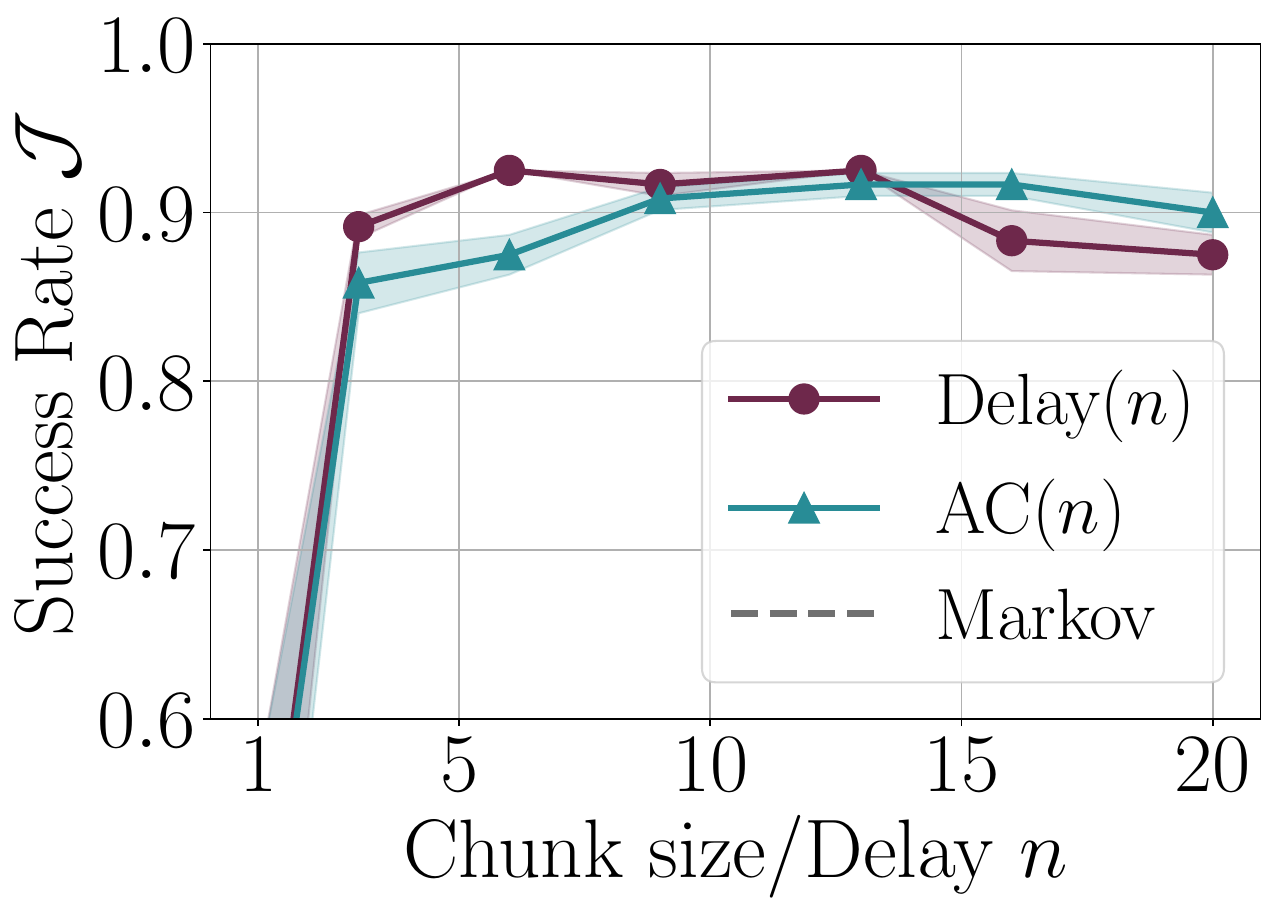}
    \end{minipage}
    \hfill
        \begin{minipage}[t]{0.23\textwidth}
        \centering
        \includegraphics[width=\linewidth]{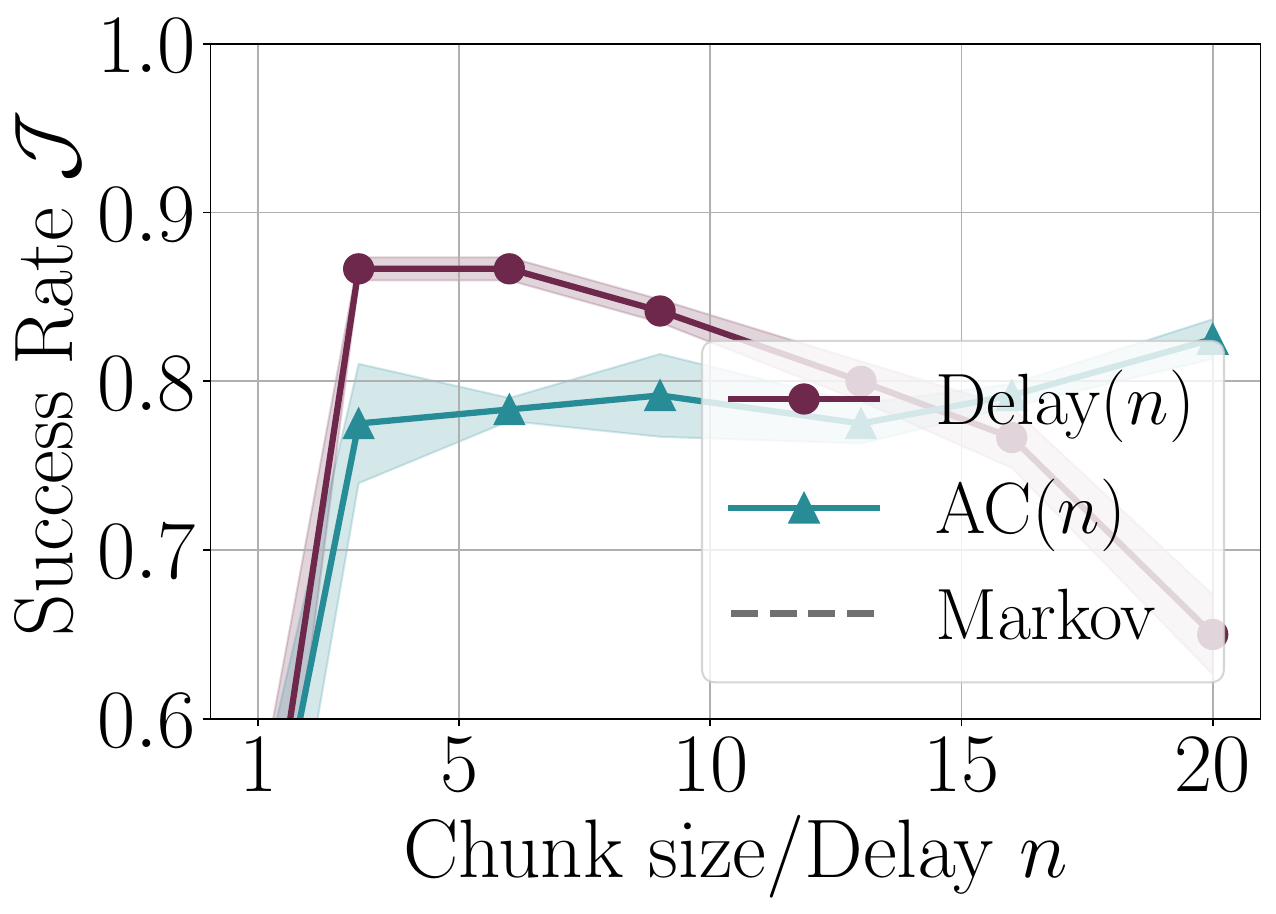}
    \end{minipage}
    \hfill
        \begin{minipage}[t]{0.23\textwidth}
        \centering
        \includegraphics[width=\linewidth]{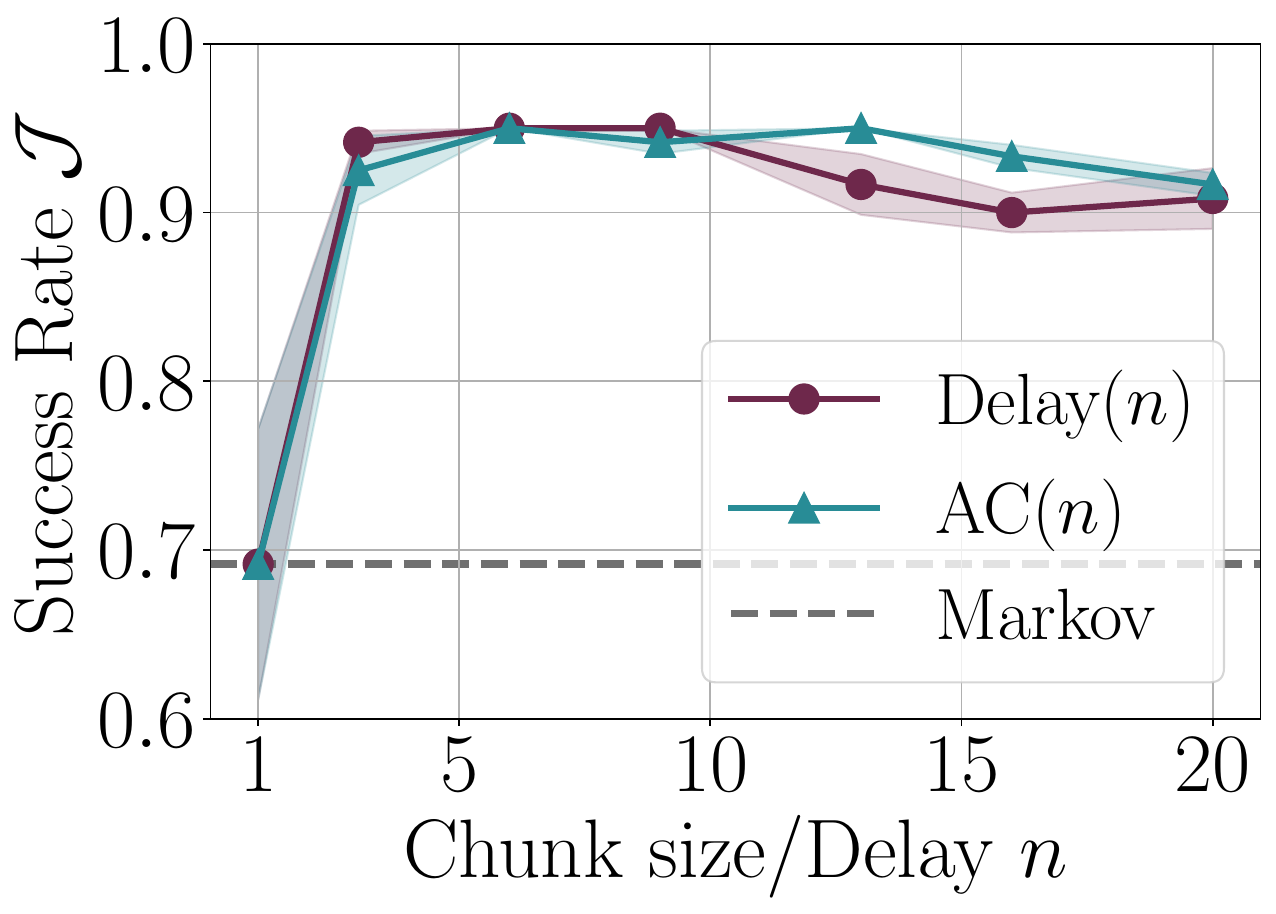}
    \end{minipage}
        \begin{minipage}[t]{0.23\textwidth}
            \includegraphics[width=\linewidth]{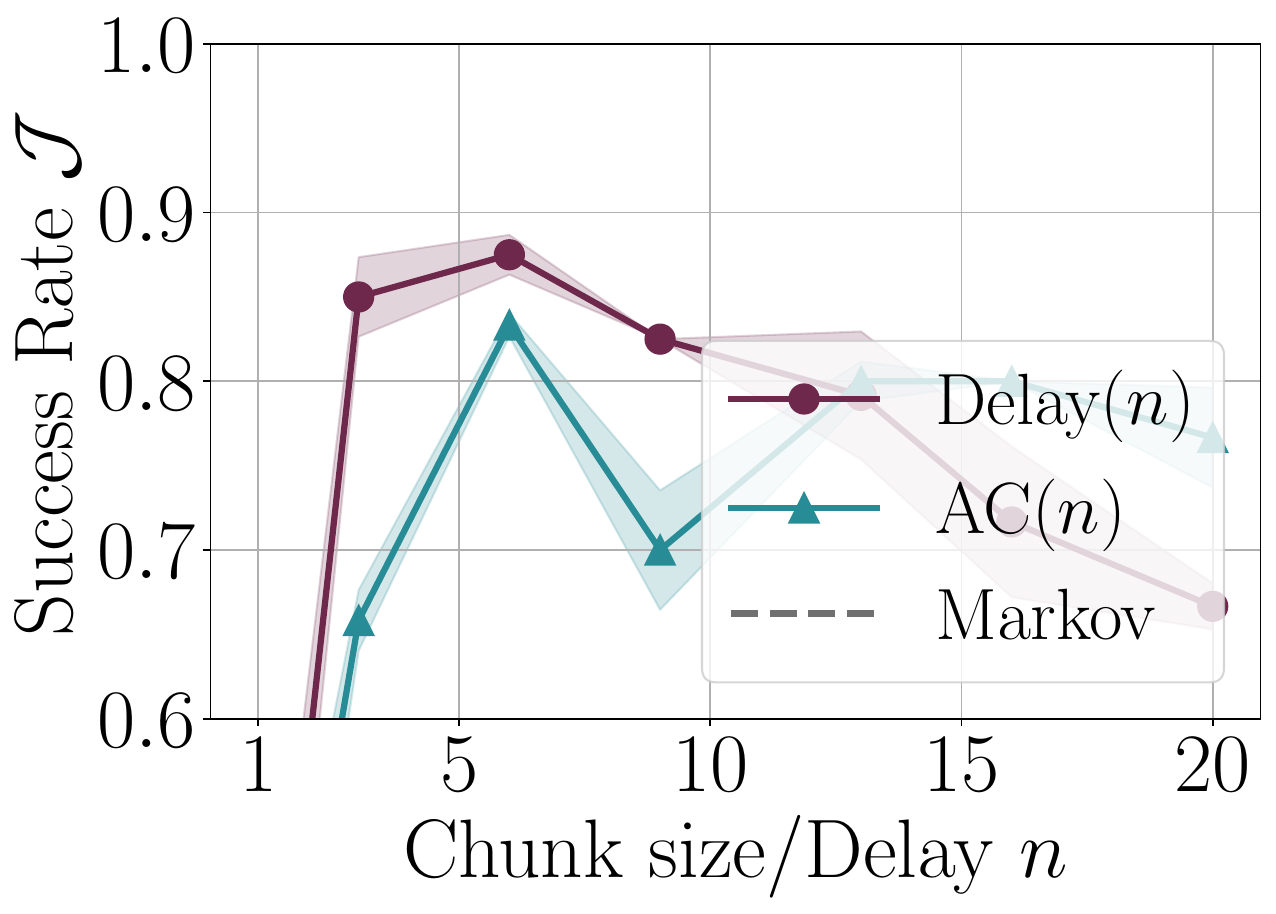}
    \end{minipage}
    \hfill
        \begin{minipage}[t]{0.23\textwidth}
        \centering
        \includegraphics[width=\linewidth]{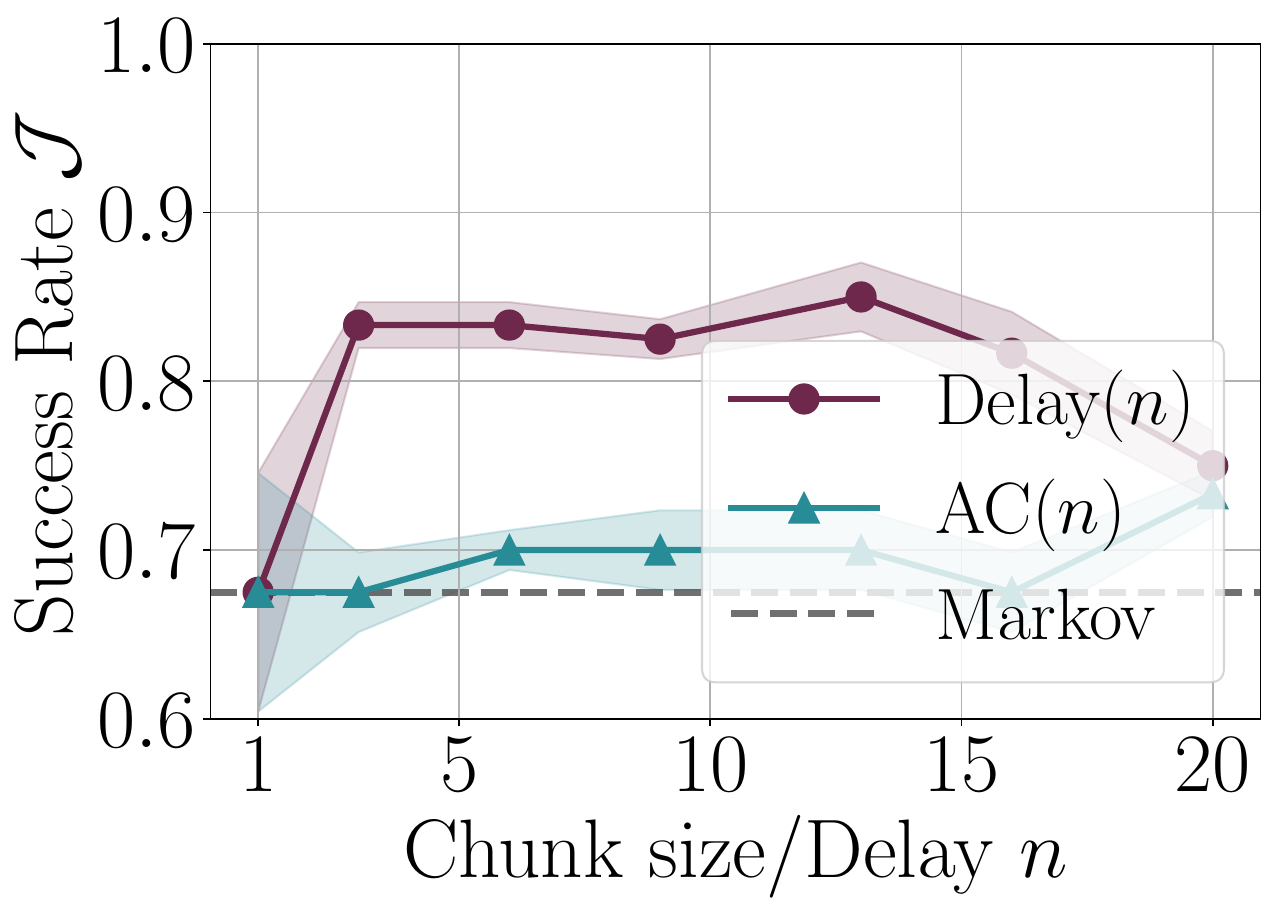}
    \end{minipage}
    \hfill
        \begin{minipage}[t]{0.23\textwidth}
        \centering
        \includegraphics[width=\linewidth]{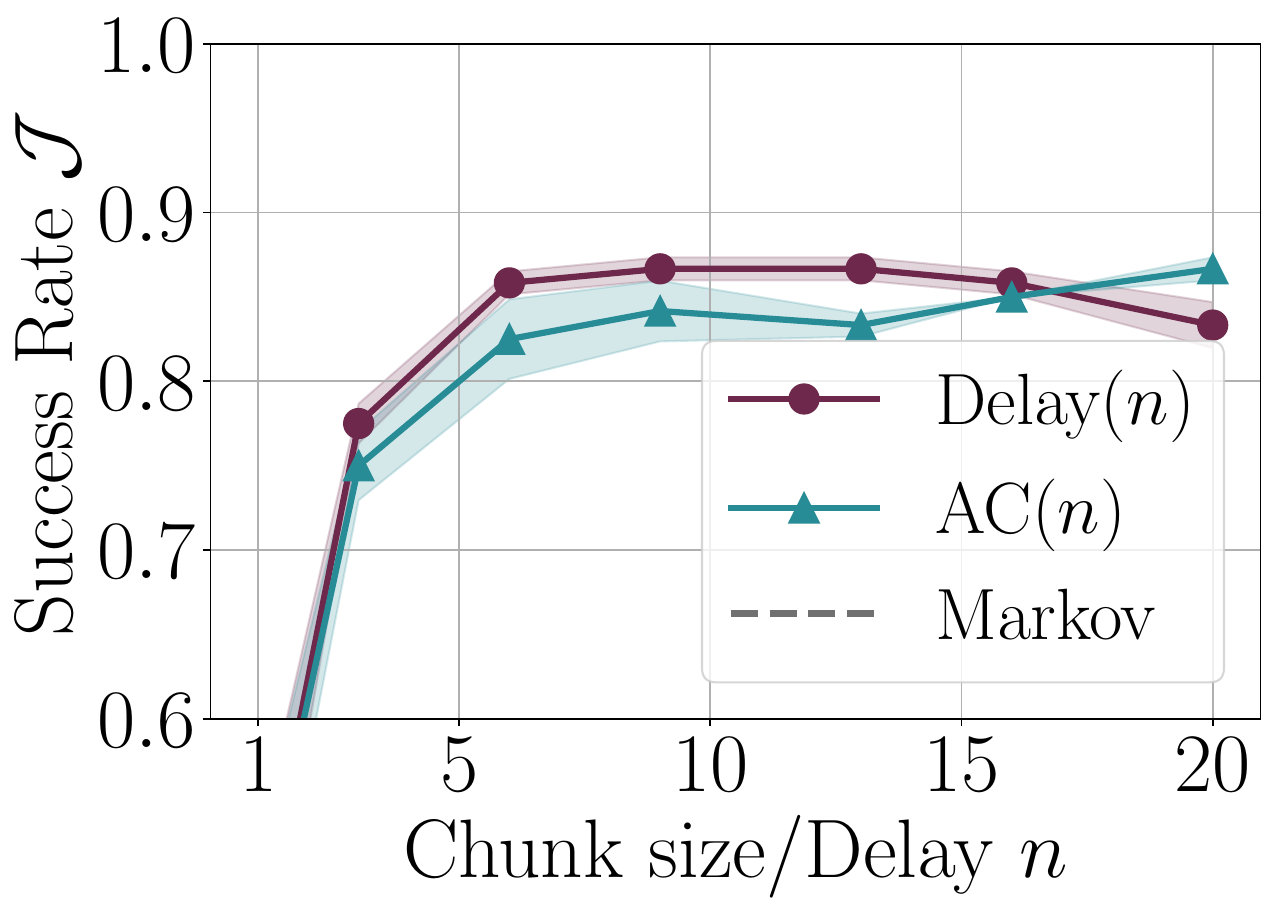}
    \end{minipage}
    \hfill
        \begin{minipage}[t]{0.23\textwidth}
        \centering
        \includegraphics[width=\linewidth]{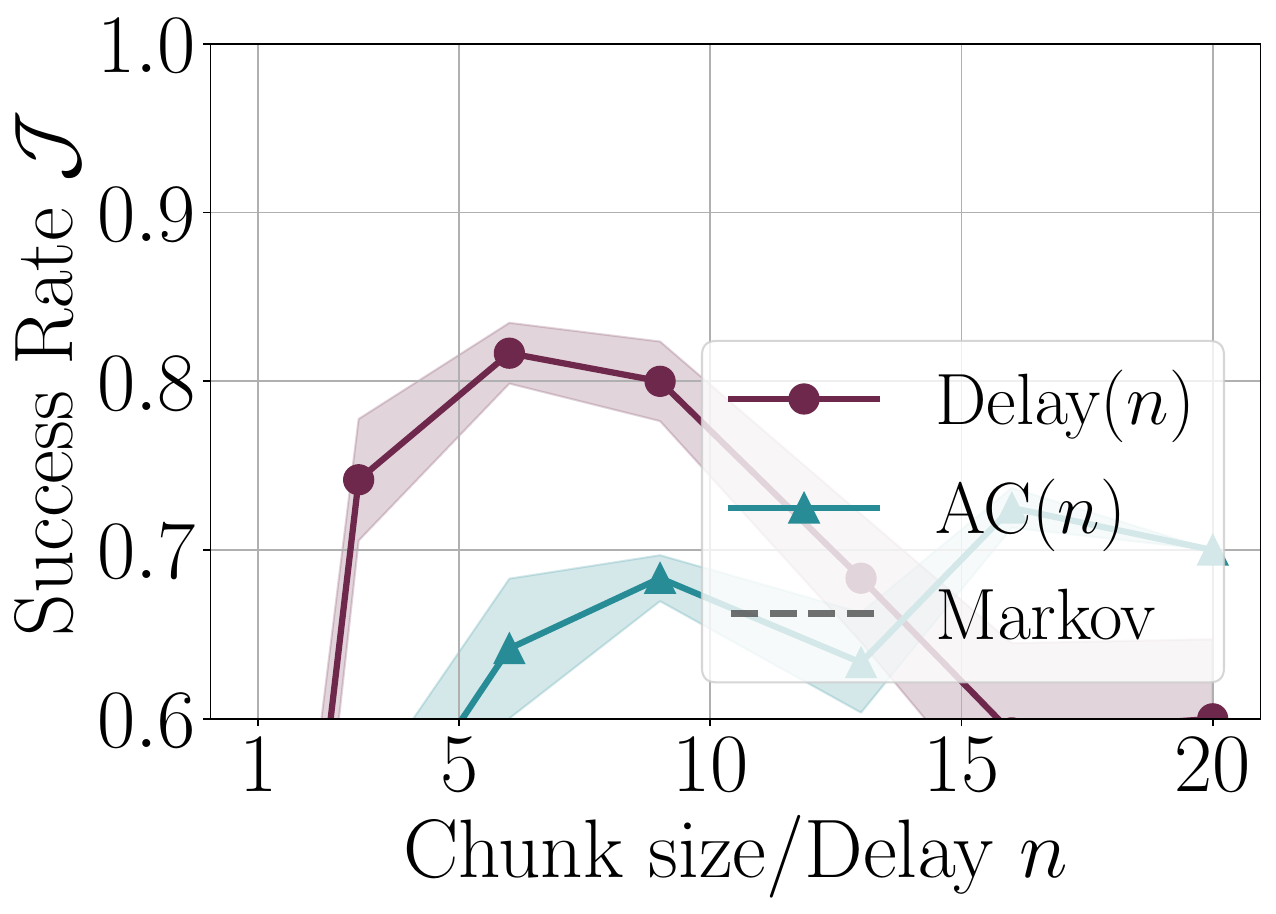}
    \end{minipage}
        \begin{minipage}[t]{0.23\textwidth}
            \includegraphics[width=\linewidth]{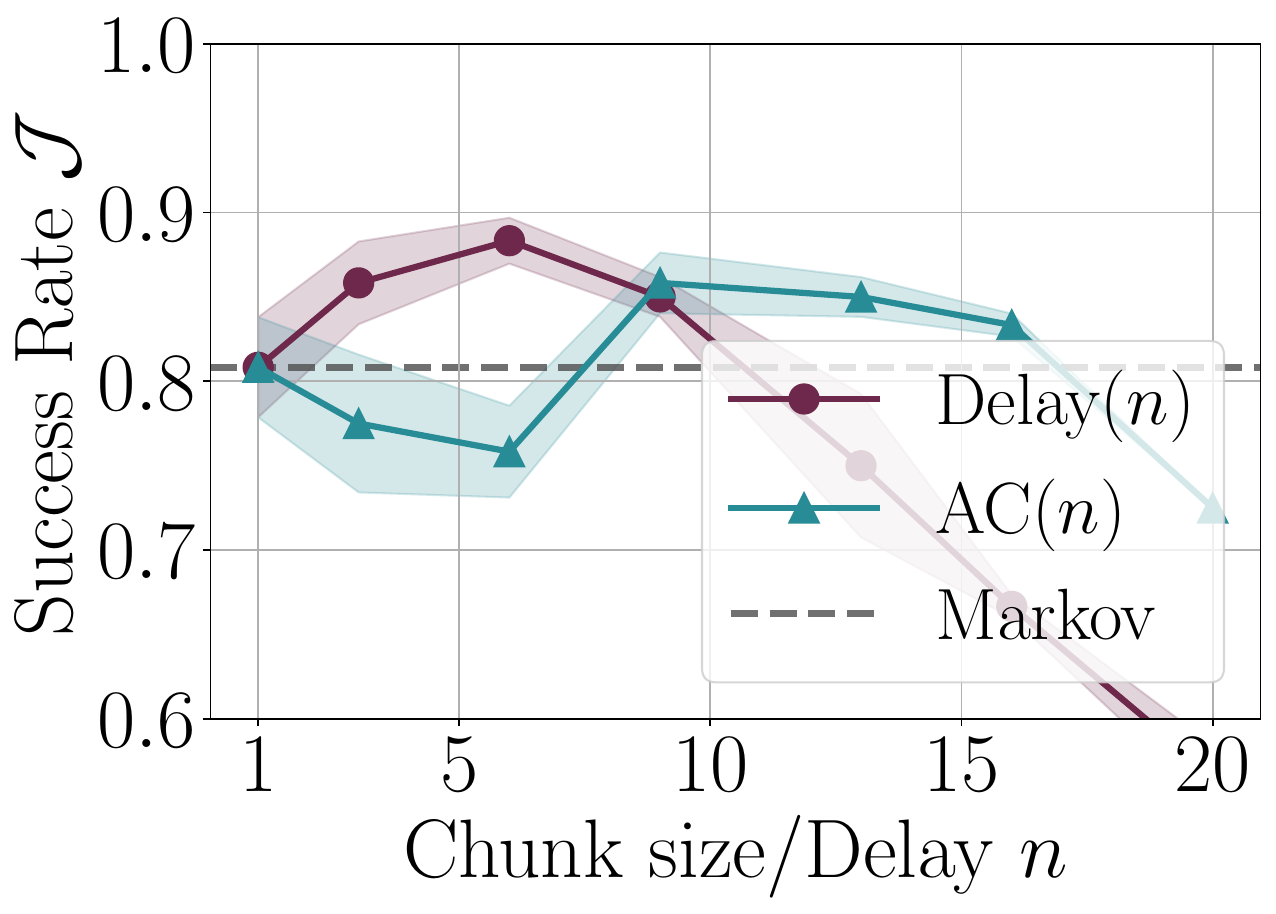}
    \end{minipage}
    \hfill
        \begin{minipage}[t]{0.23\textwidth}
        \centering
        \includegraphics[width=\linewidth]{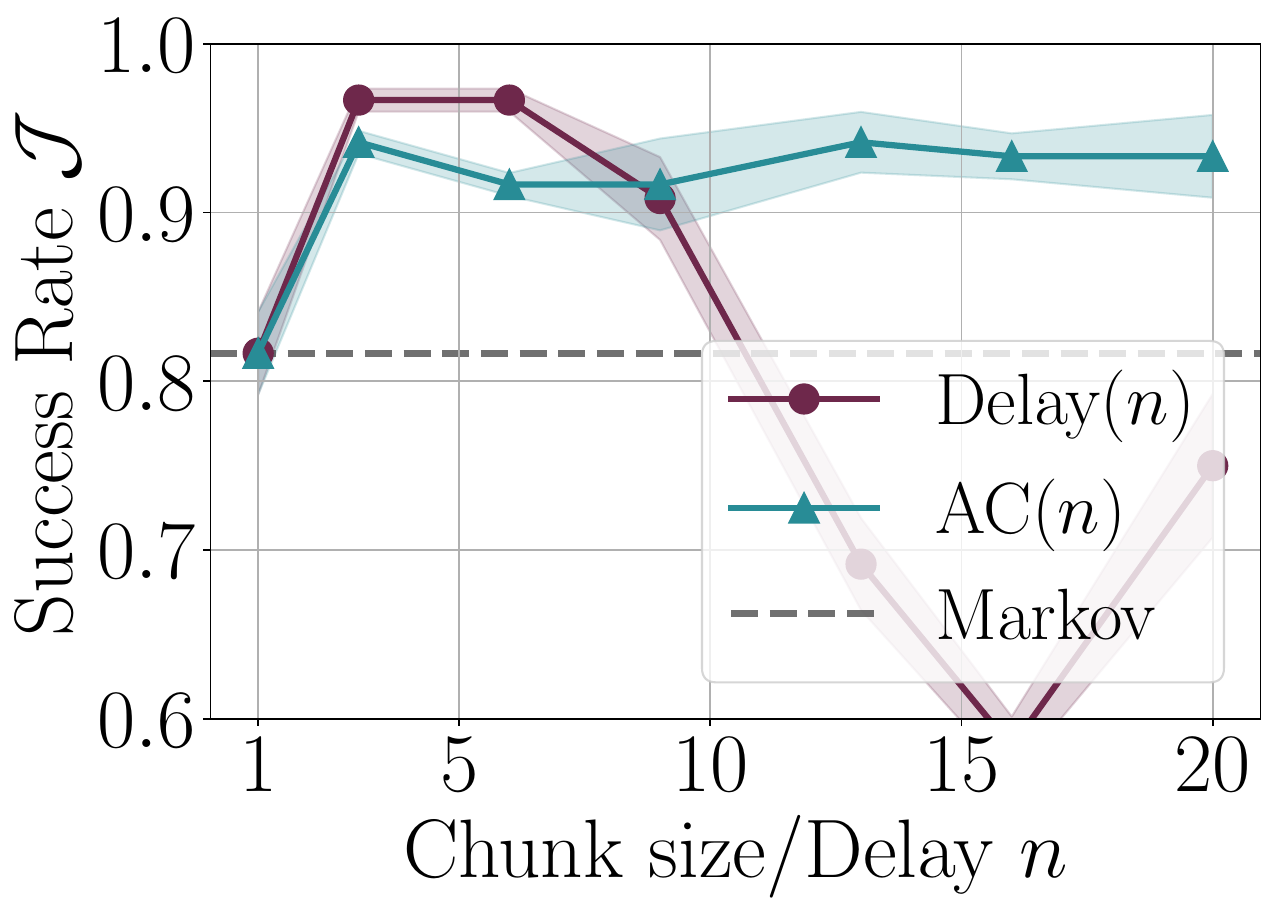}
    \end{minipage}
    \hfill
        \begin{minipage}[t]{0.23\textwidth}
        \centering
        \includegraphics[width=\linewidth]{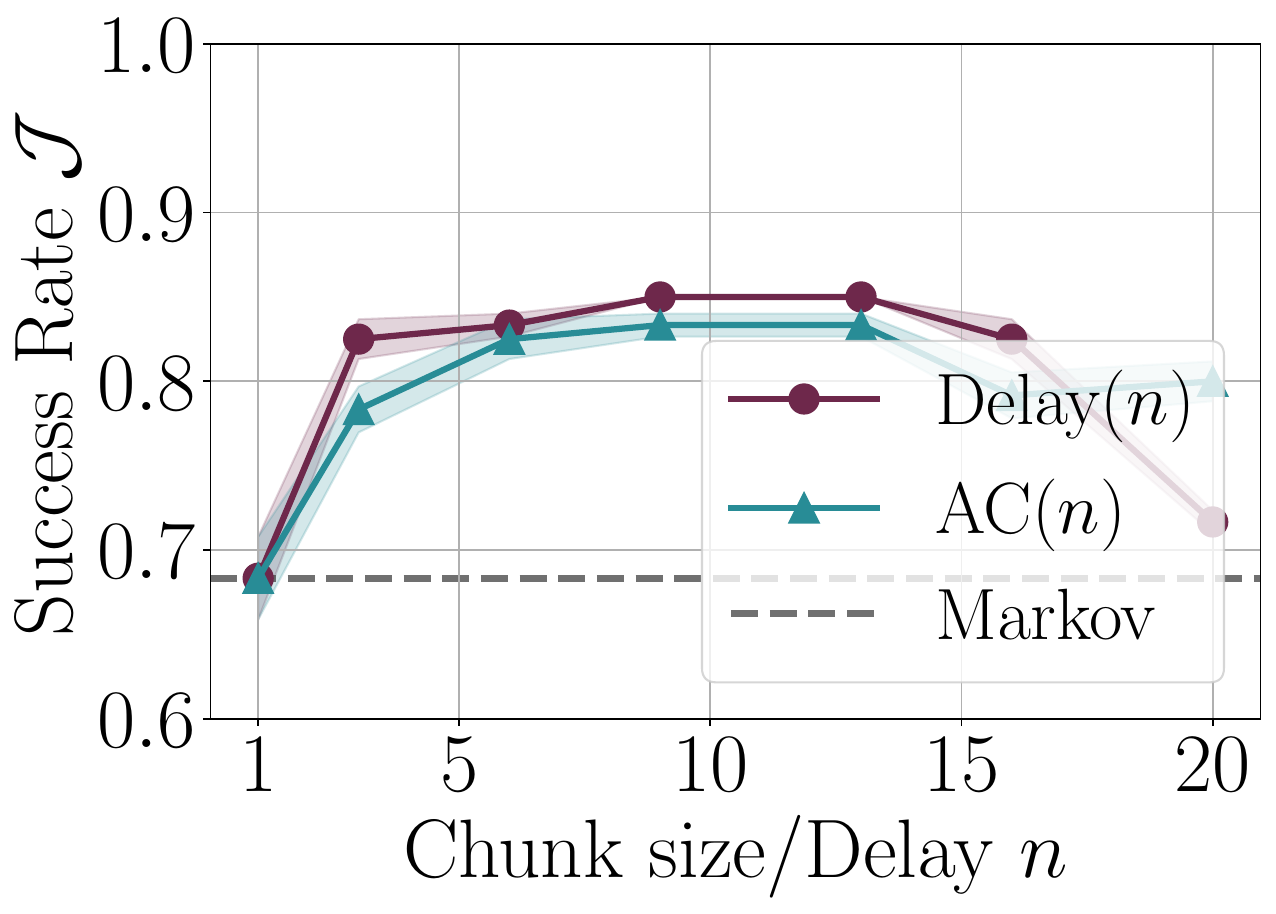}
    \end{minipage}
    \hfill
        \begin{minipage}[t]{0.23\textwidth}
        \centering
        \includegraphics[width=\linewidth]{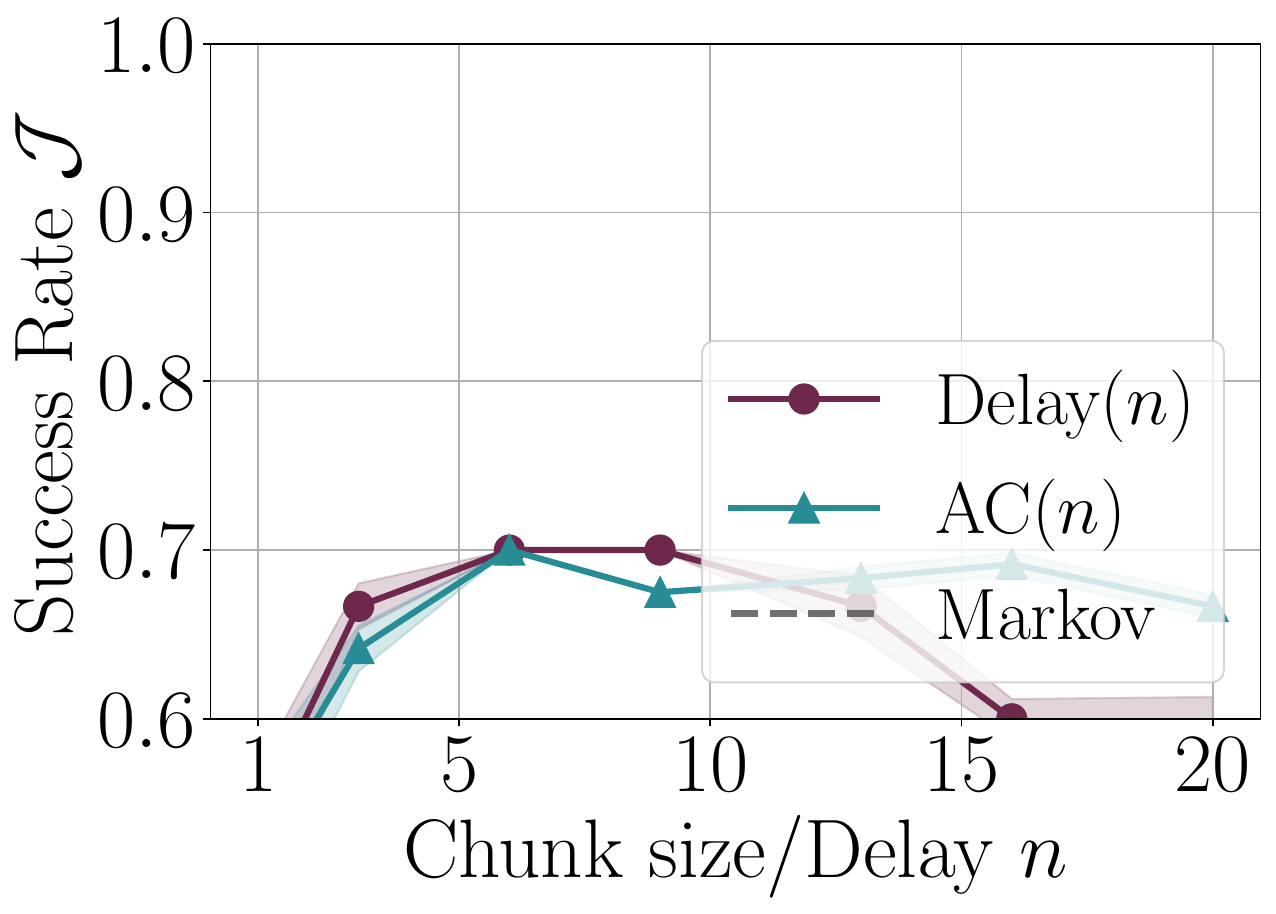}
    \end{minipage}
        \begin{minipage}[t]{0.23\textwidth}
            \includegraphics[width=\linewidth]{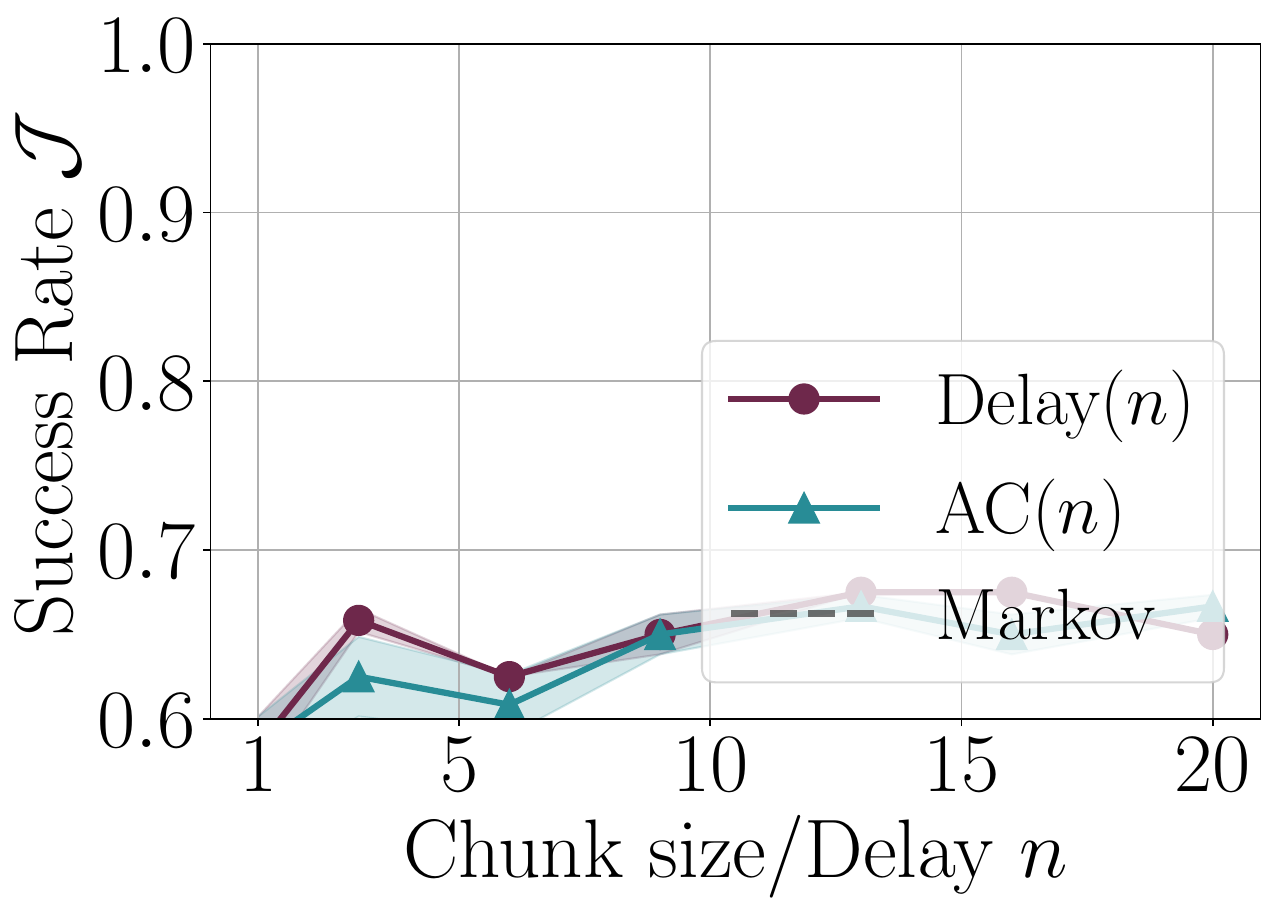}
    \end{minipage}
    \hfill
        \begin{minipage}[t]{0.23\textwidth}
        \centering
        \includegraphics[width=\linewidth]{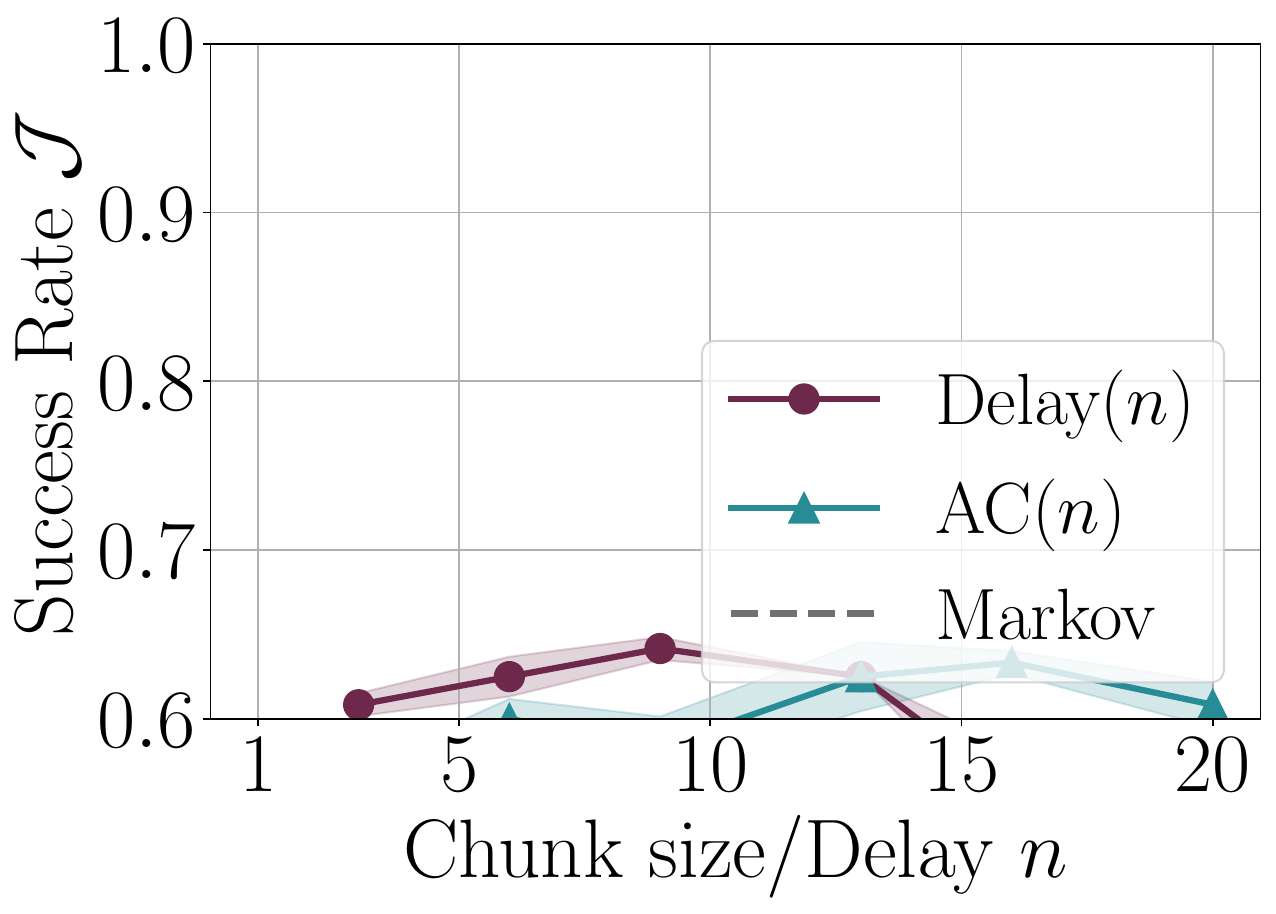}
    \end{minipage}

        \caption{Success rate for each \texttt{Libero} task from 68 to 89 (corresponding to Fig. \ref{fig:success_libero}), part 3.}
    \label{fig:succ each libero3}
\end{figure*}

\begin{figure*}[t]
    \centering
        \begin{minipage}[t]{0.23\textwidth}
            \includegraphics[width=\linewidth]{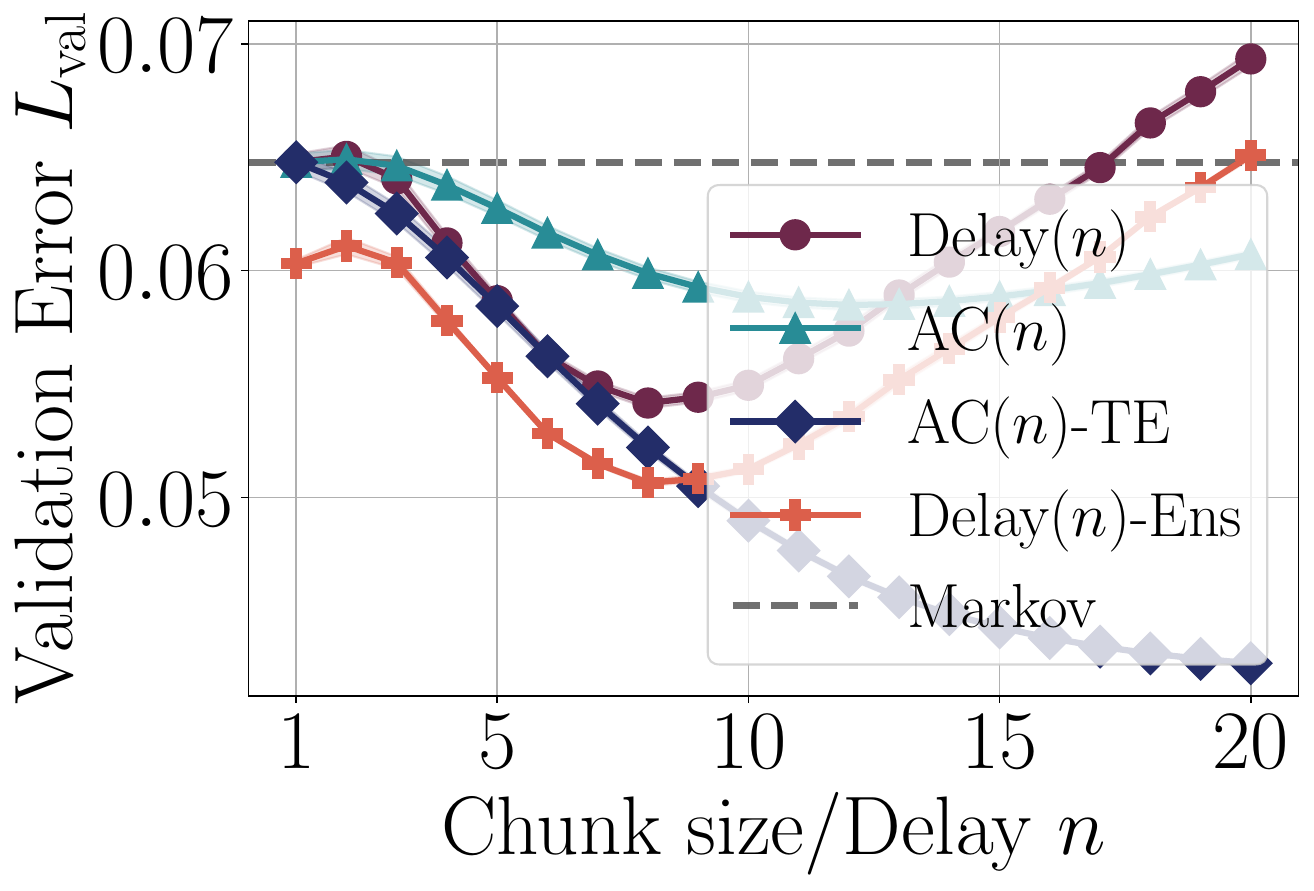}
    \end{minipage}
    \hfill
        \begin{minipage}[t]{0.23\textwidth}
        \centering
        \includegraphics[width=\linewidth]{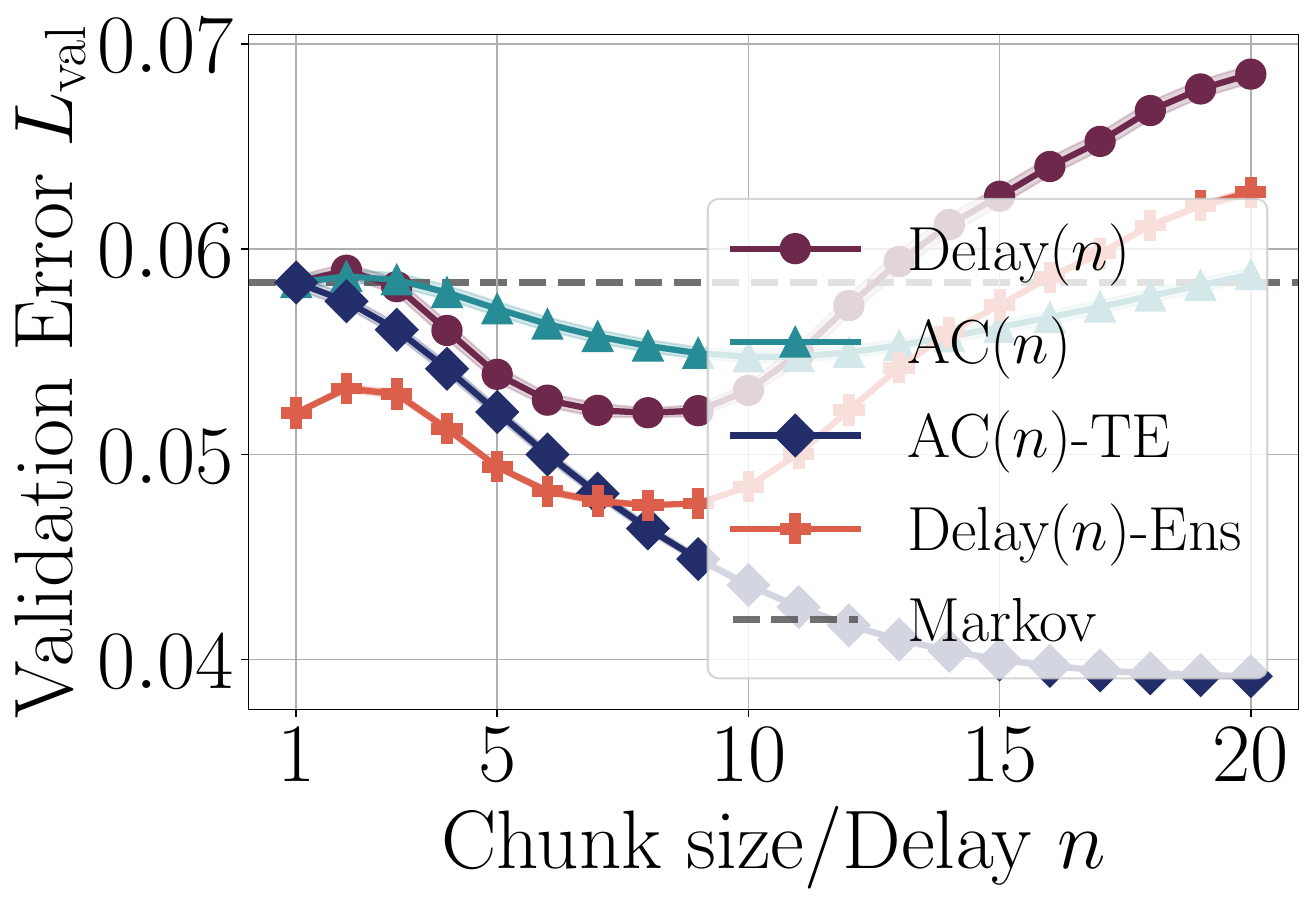}
    \end{minipage}
    \hfill
        \begin{minipage}[t]{0.23\textwidth}
        \centering
        \includegraphics[width=\linewidth]{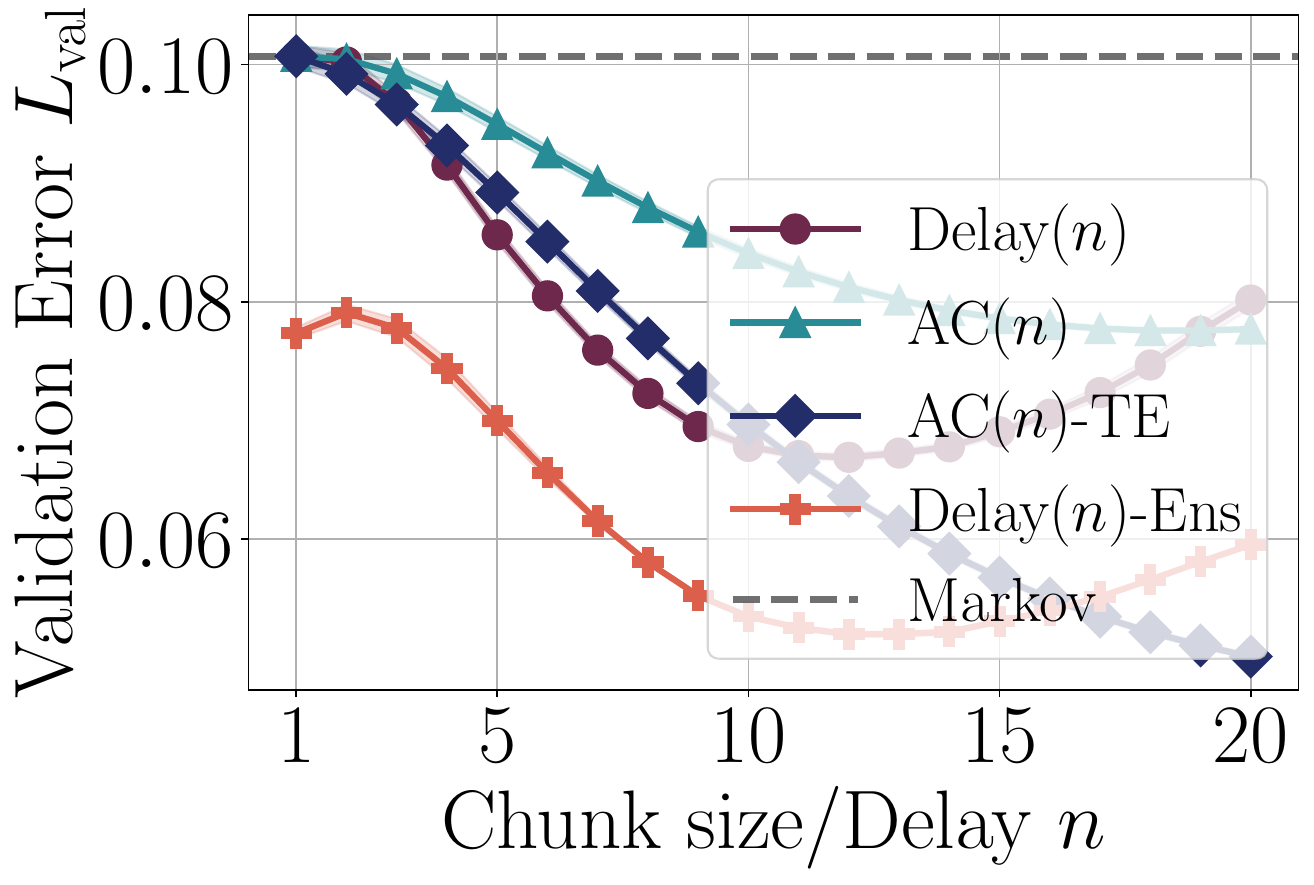}
    \end{minipage}
    \hfill
        \begin{minipage}[t]{0.23\textwidth}
        \centering
        \includegraphics[width=\linewidth]{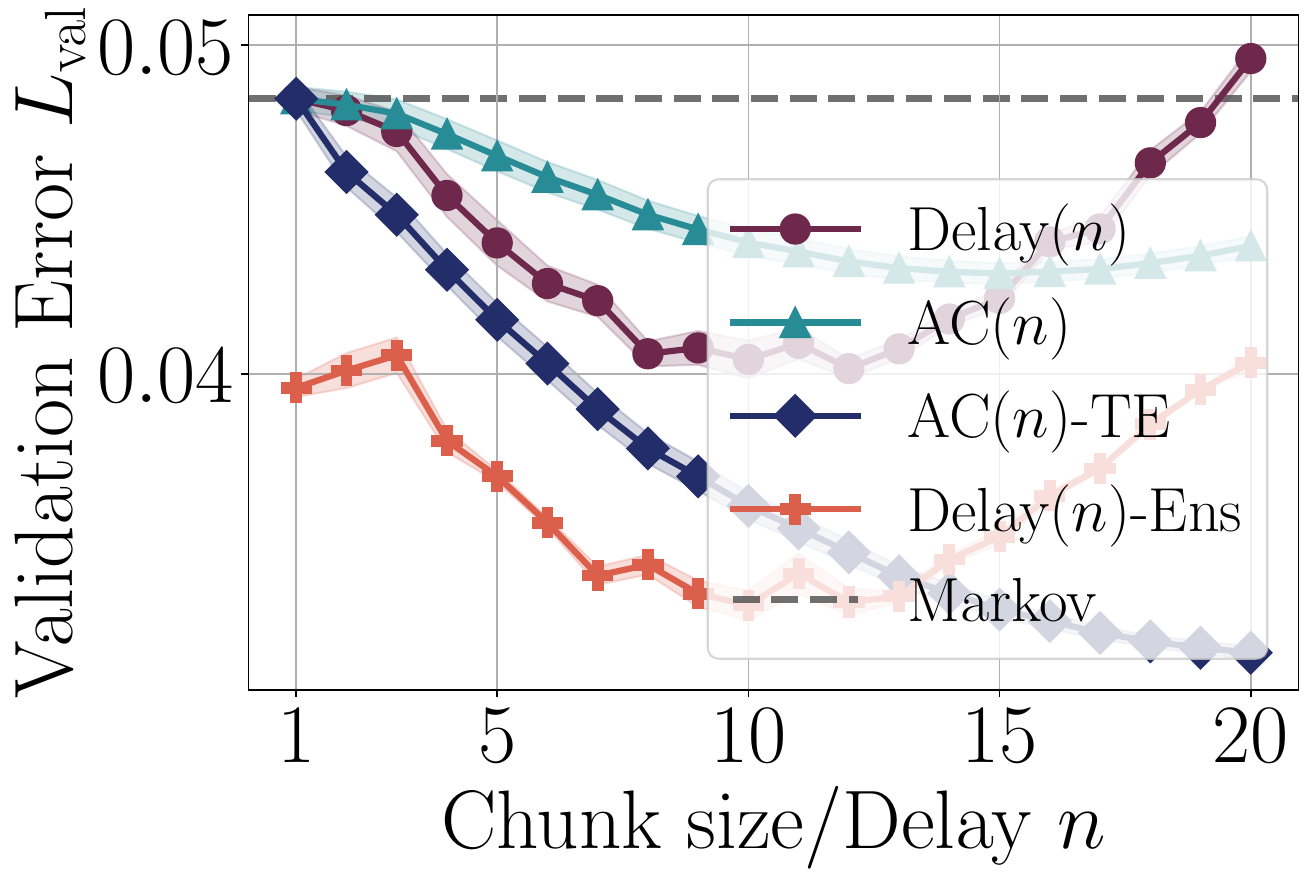}
    \end{minipage}
        \caption{Val loss for each \texttt{Robomimic} task in this order from left to right: can, square, transport, tool hang (corresponding to Fig. \ref{fig:robomimic_val}), part 1.}
    \label{fig:val each robomimic}
\end{figure*}

\begin{figure*}[t]
    \centering
        \begin{minipage}[t]{0.23\textwidth}
            \includegraphics[width=\linewidth]{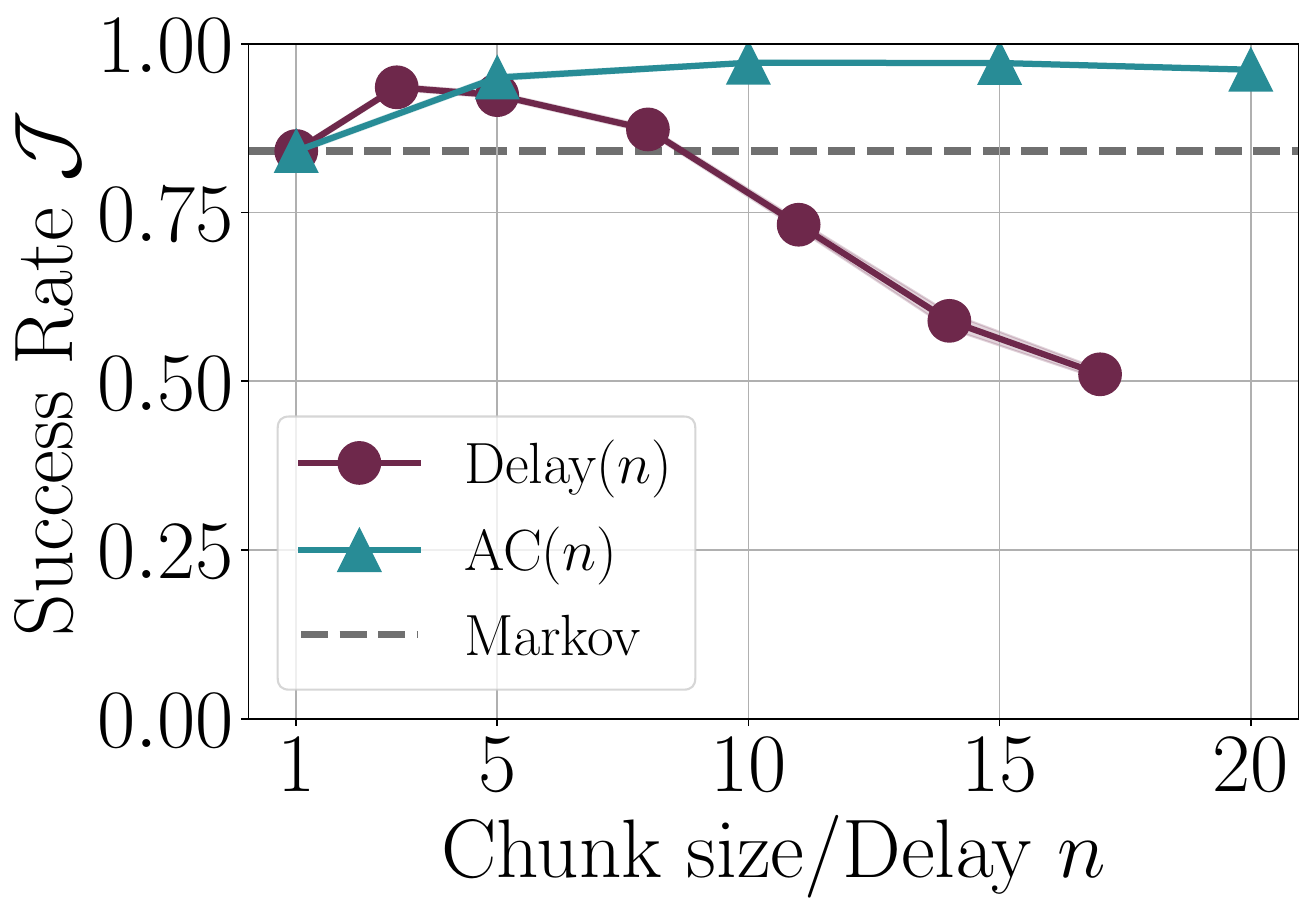}
    \end{minipage}
    \hfill
        \begin{minipage}[t]{0.23\textwidth}
        \centering
        \includegraphics[width=\linewidth]{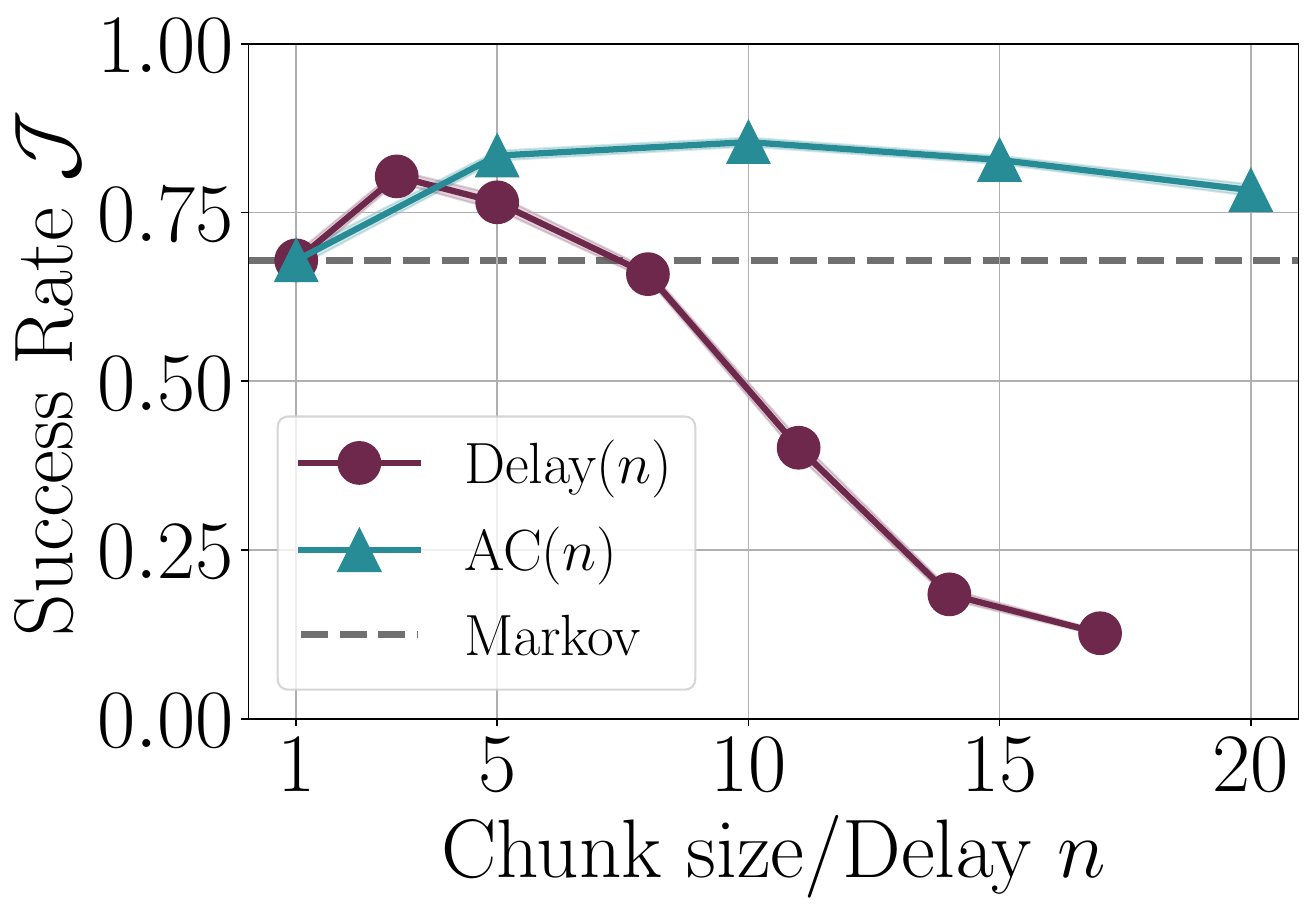}
    \end{minipage}
    \hfill
        \begin{minipage}[t]{0.23\textwidth}
        \centering
        \includegraphics[width=\linewidth]{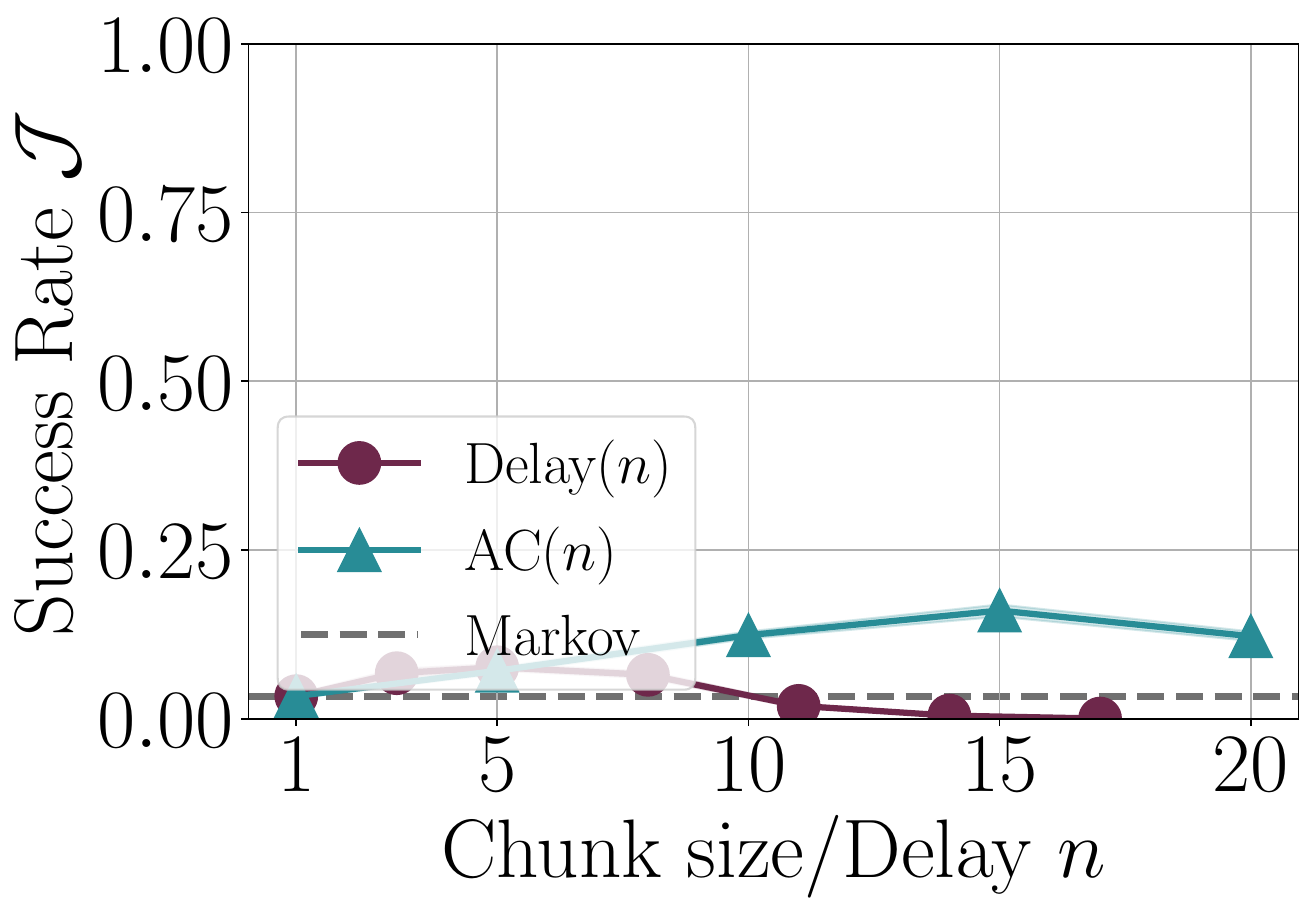}
    \end{minipage}
    \hfill
        \begin{minipage}[t]{0.23\textwidth}
        \centering
        \includegraphics[width=\linewidth]{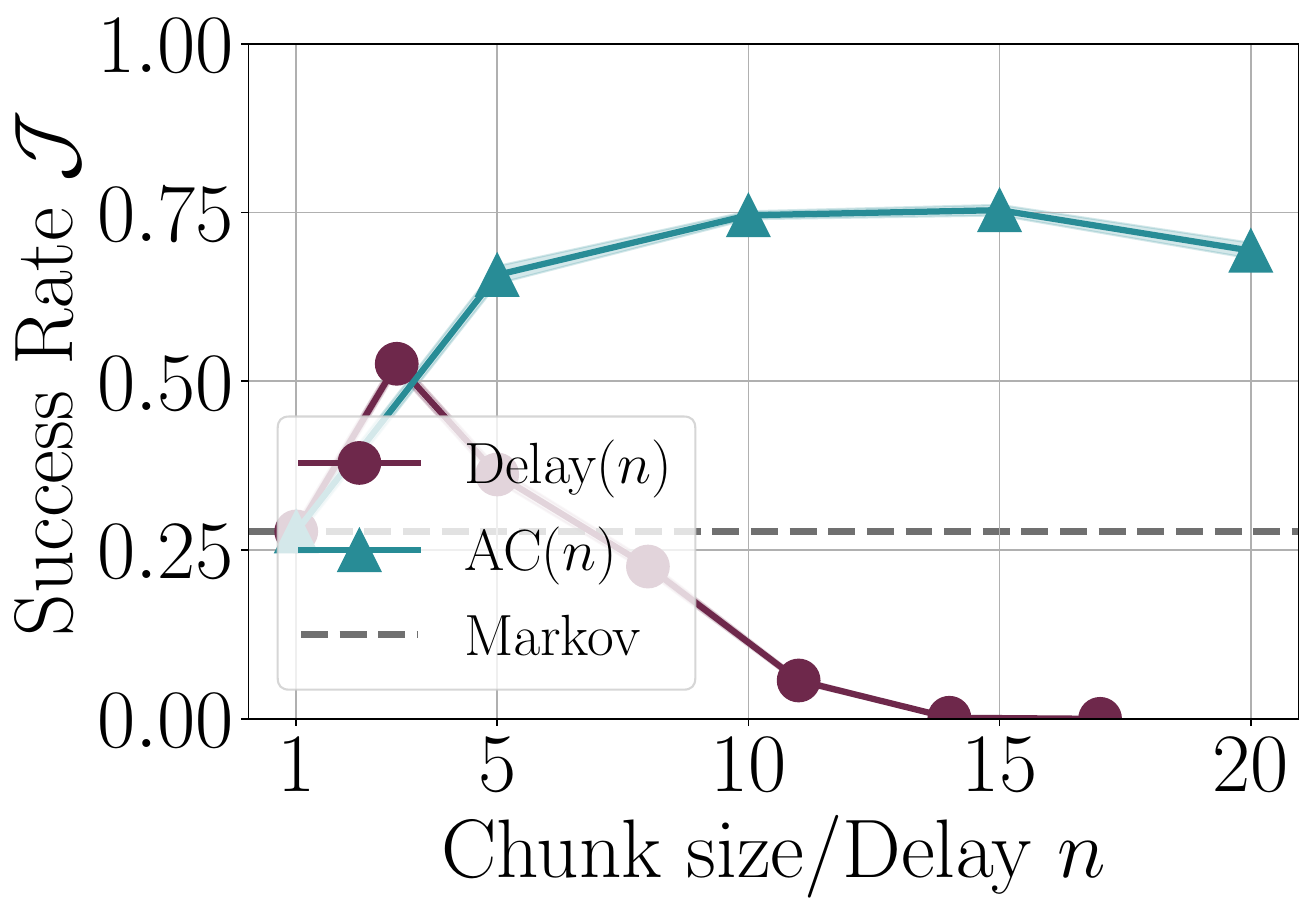}
    \end{minipage}
        \caption{Success rate for each \texttt{Robomimic} task in this order from left to right: can, square, transport, tool hang (corresponding to Fig. \ref{fig:success_robomimic}), part 1.}
    \label{fig:succ each robomimic}
\end{figure*}

\end{document}